  \theoremstyle{plain}
  \newtheorem{theorem}{Theorem}[chapter]
  \newtheorem{lemma}[theorem]{Lemma}
  \theoremstyle{definition}
  \newtheorem{definition}[theorem]{Definition}
  \newtheorem{example}[theorem]{Example}
  \newtheorem*{remark}{Remark}
\newcommand{\Expect}[1]{\mathbb{E} \left[{#1}\right]}
\newcommand{\Expects}[2]{\mathbb{E}_{{#1}} \left[{#2}\right]}
\newcommand{\hExpects}[2]{\hat{\mathbb{E}}_{{#1}} \left[{#2}\right]}
\newcommand{\Var}[1]{\mbox{Var} \left[{#1}\right]}
\newcommand{\Cor}[1]{\mbox{Cor} \left[{#1}\right]}
\newcommand{\Vars}[2]{\mbox{Var}_{{#1}} \left[{#2}\right]}
\newcommand{\Prob}[1]{\mathbb{P} \left({#1}\right)}
\newcommand{\Ker}[1]{\mathbb{Q} \left({#1}\right)}
\newcommand{\hProb}[1]{\hat{\mathbb{P}} \left({#1}\right)}
\newcommand{\Indicator}[1]{\mathbb{I}\left\{{#1}\right\}}
\newcommand{\md}{\mathrm{d}} 
\newcommand{\norm}[1]{\left\lVert #1 \right\rVert}
\newcommand{\ie}{\emph{i.e.}}
\newcommand{\var}{\textrm{var}}
\newcommand{\ba}{\mathbf{a}}
\newcommand{\bb}{\mathbf{b}}
\newcommand{\bmf}{\mathbf{f}}
\newcommand{\bg}{\mathbf{g}}
\newcommand{\bh}{\mathbf{h}}
\newcommand{\bn}{\mathbf{n}}
\newcommand{\bv}{\mathbf{v}}
\newcommand{\bx}{\mathbf{x}}
\newcommand{\bX}{\mathbf X}
\newcommand{\bU}{\mathbf U}
\newcommand{\bA}{\mathbf A}
\newcommand{\bD}{\mathbf D}
\newcommand{\bG}{\mathbf G}
\newcommand{\bH}{\mathbf H}
\newcommand{\bI}{\mathbf I}
\newcommand{\bJ}{\mathbf J}
\newcommand{\bK}{\mathbf K}
\newcommand{\bL}{\mathbf L}
\newcommand{\bM}{\mathbf M}
\newcommand{\bp}{\mathbf p}
\newcommand{\bP}{\mathbf P}
\newcommand{\bQ}{\mathbf Q}
\newcommand{\bZ}{\mathbf Z}
\newcommand{\bY}{\mathbf Y}
\newcommand{\be}{\mathbf e}
\newcommand{\bw}{\mathbf w}
\newcommand{\bW}{\mathbf W}
\newcommand{\bz}{\mathbf z}
\newcommand{\by}{\mathbf y}
\newcommand{\bghat}{\widehat{\mathbf g}}
\newcommand{\bzero}{\mathbf 0}
\newcommand{\btheta}{\boldsymbol\theta}
\newcommand{\bV}{\mathbf V}
\newcommand{\bzeta}{\boldsymbol\zeta}
\newcommand{\bphi}{\boldsymbol\phi}
\newcommand{\bPhi}{\boldsymbol\Phi}
\newcommand{\bpsi}{\boldsymbol\psi}
\newcommand{\bxi}{\boldsymbol\xi}
\newcommand{\bnu}{\boldsymbol\nu}
\newcommand{\bvartheta}{\boldsymbol\vartheta}
\newcommand{\bSigma}{\boldsymbol\Sigma}
\newcommand{\cA}{\mathcal{A}}
\newcommand{\cB}{\mathcal{B}}
\newcommand{\cC}{\mathcal{C}}
\newcommand{\cH}{\mathcal{H}}
\newcommand{\cL}{\mathcal{L}}
\newcommand{\cR}{\mathcal{R}}
\newcommand{\cS}{\mathcal{S}}
\newcommand{\cX}{\mathcal{X}}
\newcommand{\sA}{\mathsf{A}}
\newcommand{\refr}{\lambda_{\mbox{r}}}
\newcommand{\xhat}{\widehat{x}}
\newcommand{\muhat}{\widehat{\mu}}
\newcommand{\bmu}{\boldsymbol{\mu}}
\newcommand{\bthetahat}{\widehat{\boldsymbol\theta}}
\newcommand{\pitilde}{\widetilde{\pi}}
\newcommand{\pitiltil}{\tilde{\tilde{\pi}}}
\newcommand{\data}{\mathcal{D}}
\newcommand{\likelihood}[2]{L\left({#1}; {#2}\right)}
\newcommand{\prior}[1]{\pi_0\left({#1} \right)}
\newcommand{\Lpspace}{\mathcal{L}}
\newcommand{\Normal}{\mathsf{N}}
\newcommand{\Leap}{\mathsf{Leap}}
\newcommand{\qmap}{\mathsf{q}}
\newcommand{\Ihat}{\widehat{I}}
\newcommand{\IACT}{\mathsf{IACT}}
\newcommand{\st}{\btheta} 
\newcommand{\Vsp}{\mathbb{V}}
\newcommand{\kernel}{\mathsf{k}}
\DeclareMathOperator*{\argmax}{arg\,max}
\DeclareMathOperator*{\argmin}{arg\,min}
\begin{document}

  \title{Scalable Monte Carlo for Bayesian Learning}

  \author{Paul Fearnhead, Christopher Nemeth, Chris J. Oates and Chris Sherlock}

  \frontmatter
  \maketitle
  \tableofcontents

  \mainmatter
 


\chapter*{Preface}

At the time of writing, science, industry, and society are being transformed by the emergence of a new generation of powerful machine learning and artificial intelligence (AI) methodologies.
The safe use of such algorithms demands a probabilistic viewpoint, enabling reasoning in settings where data are noisy or limited, and endowing predictions with an appropriate degree of confidence for downstream decision-making and mitigation of risk.
Yet, it remains true that fundamental probabilistic operations, such as conditioning on an observed dataset, are not easily performed at the scale required.

The aim of this book is to provide a graduate-level introduction to advanced topics in Markov chain Monte Carlo (MCMC), as applied broadly in the Bayesian computational context.
Most, if not all of these topics (stochastic gradient MCMC, non-reversible MCMC, continuous time MCMC, and new techniques for convergence assessment) have emerged as recently as the last decade, and have driven substantial recent practical and theoretical advances in the field.
A particular focus is on methods that are \emph{scalable} with respect to either the amount of data, or the data dimension, motivated by the emerging high-priority application areas in machine learning and AI.
Throughout this book, the clear presentation of ideas is prioritised over a rigorous technical treatment of all mathematical details; appropriate references for further reading are provided in the end-notes of each chapter.
In particular, we will limit the use of measure theory; the reader should assume that all sets and functions are measurable with respect to an appropriate sigma-algebra, and all continuous distributions should be assumed to be absolutely continuous with respect to Lebesgue measure and all densities should be assumed to be densities with respect to Lebesgue measure.

This book has been indirectly shaped by the researchers and colleagues -- too numerous to name individually -- who have contributed to recent progress in the field.
Special gratitude must go to Rebekah Fearnhead, Heishiro Kanagawa, Tam\'{a}s Papp and Lorenzo Rimella, for proof-reading the manuscript, to Richard Howey for typesetting the figures, and to Natalie Tomlinson and Anna Scriven for their encouragement and typesetting support.
The authors are grateful for financial support from the Engineering and Physical Sciences Council (through grants EP/W019590/1, EP/R018561/1, EP/R034710/1, EP/V022636/1 and EP/Y028783/1), the Alan Turing Institute, and the Leverhulme Trust.

\vspace{50pt}

\begin{center}
\begin{tabular}{ll}
Paul Fearnhead & Chris J. Oates \\
Christopher Nemeth & \emph{Newcastle University, UK} \\
Chris Sherlock & \\
\emph{Lancaster University, UK} & \\
\end{tabular}
\end{center}

\begin{table}
\renewcommand{\arraystretch}{1.3}
\begin{center}
\textbf{Common Notation}
\begin{tabular}{lp{8cm}}
$n$&total number of iterations of an algorithm or  Monte Carlo sample size. \\
$d$&dimension (of parameter space).\\
$N$&number of elements in the dataset.\\
$\mathcal{D}$ & the dataset $\{\by_1,\ldots,\by_N\}$,.\\
$\boldsymbol{\theta}$&the parameter.\\
$L(\st;\mathcal{D})$ &the likelihood function.\\
$\ell(\st;\mathcal{D})$& the log-likelihood function.\\
$\pi_0(\st)$ & the prior density.\\
$\pi(\st|\mathcal{D})$ & the posterior density, often abbreviated to $\pi(\btheta)$. \\ 
$\mathbb{I}$& the indicator function.\\
$\mathbf{I}_d$& the $d\times d$ identity matrix.\\
$x_i$& the $i$th component of the vector $\bx$.\\
$\bx_k$& the $k$th vector in a sequence $(\bx_k)_{k=1,2,\dots}$.\\
$x_k^{(i)}$ & the $i$th component of the vector $\bx_k$.\\
i.i.d. & independent and identically distributed.\\
$\stackrel{\mathsf{D}}{=}$ & equal in distribution. \\
$\stackrel{\mathsf{D}}{\rightarrow}$ & converges in distribution. \\
$\delta_{\mathbf{x}}$ & the Dirac distribution, which places all mass at $\mathbf{x}$. \\
$\mathsf{N}(\cdot;\bmu,\mathbf{V})$ & the density of a normal random variable with mean $\bmu$ and covariance $\mathbf{V}$. \\
$\mathsf{U}_d(\cdot)$ & the density for a uniform random variable on the $d$-dimensional sphere, 
$\mathbb{S}^{d-1} \subset \mathbb{R}^d$. \\
$\mathcal{L}^p(\pi)$ & the set of measurable functions $f$ with $\int |f(\mathbf{x})|^p \; \mathrm{d}\pi(\mathbf{x}) < \infty$. \\
$C^s(\mathbb{R}^d,\mathbb{R}^p)$ & the set of functions $f : \mathbb{R}^d \rightarrow \mathbb{R}^p$ for which continuous derivatives exist of orders up to $s \in \{0,1,\dots\} \cup \{\infty\}$. \\
$x_n=O(a_n)$& there exist $n_0$ and $M$ such that for all $n\ge n
_0$, $|x_n|\le M a_n$.\\
$X_n=O_p(a_n)$& for any $\epsilon>0$ there exist $n_0$ and $M$ such that  for all $n \ge n_0$, $\Prob{|X_n|> M a_n}< \epsilon$.
\end{tabular}
\end{center}
\end{table}


\chapter{Background}
\label{chap:background}


This book describes some recent developments in \emph{scalable Monte Carlo} algorithms and their applications within Bayesian learning: what exactly does this mean?

Monte Carlo methods are a class of computational methods that involve repeated sampling to numerically approximate quantities of interest. 
We specifically focus on Monte Carlo integration\index{Monte Carlo integration} methods, which are sampling-based methods for evaluating or approximating the value of integrals. 
Such methods are widely used across science and engineering, but our motivation comes particularly from Bayesian statistics.  One of the key quantities in Bayesian statistics is the posterior distribution\index{posterior distribution}, which encapsulates our belief 
regarding unknown parameters of a model given our prior belief and an observed dataset. 
We can then obtain estimates of the parameters, or quantify our uncertainty about the parameters, in terms of expectations with respect to the posterior distribution\index{posterior distribution}. For example, a common estimate of a parameter is the posterior expectation of that parameter; the  predictive probability of future observations is the expectation of the density/mass function of the future observation taken with respect to the posterior distribution. Calculating these expectations involves evaluating an integral, and the idea of Monte Carlo is to use samples from the posterior to estimate such integrals.

The main challenge with using Monte Carlo in Bayesian statistics is often in deriving an efficient algorithm to sample from the posterior distribution\index{posterior distribution}. Markov chain Monte Carlo\index{Markov chain Monte Carlo} is a general and widely-used class of methods for sampling from a distribution, based on simulating a Markov process that has the posterior distribution\index{posterior distribution} as its stationary distribution\index{stationary distribution}. 

In recent years, there has been interest in applying Markov chain Monte Carlo\index{Markov chain Monte Carlo} to ever-increasingly complex and challenging problems. For example, the dimension, $d$ say, of the parameter space of the models we wish to fit to data, or the number of data points, $N$ say, in our data set can be large. As either $d$ or $N$ increases, the efficiency of Markov chain Monte Carlo methods may reduce. For example, as $d$ increases we may need to have more iterations of our Markov chain Monte Carlo algorithm to achieve the required level of accuracy, while as $N$ increases, the computational cost per iteration of a standard algorithm will increase. \emph{Scalable Markov chain Monte Carlo}\index{Markov chain Monte Carlo} methods are specifically those methods which can scale well as either or both of $d$ and $N$ increase.

The remainder of this introductory chapter will cover background relevant to scalable Markov chain Monte Carlo\index{Markov chain Monte Carlo}. The next section will introduce Monte Carlo methods, explain why Monte Carlo integration\index{Monte Carlo integration} is widely-used, and explain how it is relevant to Bayesian statistics. This will be followed by an introduction to some of the statistical models and applications that will be used to demonstrate the methods in this book, as well as an informal and brief introduction to some of the concepts from stochastic processes that will be used in later chapters.  Finally, the chapter ends with a short introduction to kernel methods in preparation for a deeper exposition in Chapter \ref{chap:stein}.

\section{Monte Carlo Methods}
\label{sec.MonteCarloWhole}

\subsection{What is Monte Carlo Integration?}\index{Monte Carlo integration}
\label{subsec: what is MC}

Assume we have a distribution of interest. For simplicity of presentation, here and for the remainder of this chapter, we assume that the distribution is continuous on $\mathbb{R}^d$. Let $\bX$ denote a random variable with this distribution and let $\pi(\bx)$ denote the corresponding probability density function for $\bX$; we will also use $\pi$ to refer to the distribution itself when that is necessary.  Then the expectation of some function $h$ of $\bX$ is an integral
\[
I=\Expect{h(\bX)} = \int h(\bx)\pi(\bx) \; \mbox{d}\bx.
\]
This expectation is \emph{well-defined}, that is, $h$ is integrable with respect to $\pi$, if $\int |h(\bx)|\pi(\bx)~\md \bx<\infty$. We abbreviate this to $h\in \Lpspace^1(\pi)$ and throughout this section we assume that this holds true.
If we can sample from $\pi(\cdot)$ then we can estimate this expectation/integral by (i) drawing $n$ independent realisations, $\bx_1,\ldots,\bx_n$, from $\pi(\cdot)$ and (ii) calculating the sample average of the values $h(\bx_1), \ldots, h(\bx_n)$. This gives an estimate of $I$, namely
\[
\hat{I} = \frac{1}{n} \sum_{k=1}^n h(\bx_k).
\]
This is called a \emph{Monte Carlo} estimate of $I$, as it is obtained from independent, random samples from $\pi(\cdot)$.

The Monte Carlo estimator can be interpreted as being based on $n$ independent random variables $\bX_1,\dots,\bX_n$, of which $\bx_1,\dots,\bx_n$ are realisations. Is it a good estimator? This is impossible to answer in generality, but we can at least describe some good properties that the estimator can admit. 
First, since the $\bX_i$ are i.i.d, $\Expect{\hat{I}}=\Expect{h(\bX_1)}=I$, so the estimator is \emph{unbiased}. 
Secondly, and more importantly, the strong law of large numbers\index{strong law of large numbers} tells us that we can make our estimate arbitrarily accurate, with high probability, if we choose $n$ large enough. 
Formally, provided $I$ is well defined, that is $h\in \Lpspace^1(\pi)$, and our samples from $\pi(\cdot)$ are independent, then as $n\to \infty$,
\begin{equation}
\label{eqn.SLLNiid}
\frac{1}{n} \sum_{k=1}^n h(\bX_k) \to I~~~\mbox{almost surely.}
\end{equation}
\emph{Almost sure} convergence means that the collection of outcomes where the convergence does not occur has a combined probability of $0$.

Thus, with high probability, our Monte Carlo estimator will be accurate if we choose $n$ large enough, but the result does not tell us how large $n$ needs to be, nor how accurate the estimator will be for a given value of $n$. However, provided that $\int h(\bx)^2\pi(\bx) \; \mbox{d}\bx < \infty$, which we abbreviate to $h\in \Lpspace^2(\pi)$, we can use the central limit theorem\index{central limit theorem} to answer these questions. Again assume that our samples from $\pi(\cdot)$ are independent, and define
\[
V=\int \{h(\bx)-I\}^2\pi(\bx) \; \mbox{d}\bx.
\]
Then, the central limit theorem\index{central limit theorem} states that
\[
\sqrt{n} \left( \frac{\frac{1}{n} \sum_{k=1}^n h(\bX_k)-I}{\sqrt{V} } \right) \stackrel{\mathsf{D}}{\rightarrow} \mathsf{N}(0,1),
\]
as $n\rightarrow \infty$. 
Here the convergence is in distribution, and we have convergence to a standard normal distribution in the limit.

One way of interpreting this result is that, for large enough $n$, then approximately
\[
\frac{1}{n} \sum_{k=1}^n h(\bX_k) \sim \mathsf{N}\left(I,\frac{V}{n} \right).
\]
That is our estimator will be approximately normally distributed, with mean equal to the integral, $I$, and a variance that is $V/n$. This shows that the quantity $V$ governs how easy it is to estimate $I$ via Monte Carlo integration\index{Monte Carlo integration}, and the accuracy depends on both $V$ and $n$. The order of the error of a Monte Carlo estimator is $\sqrt{V/n}$ and, thus, the Monte Carlo error decays with sample size at a rate of $n^{-1/2}$.

\subsection{Importance Sampling}\index{importance sampling}

What if we are interested in calculating or approximating a more general integral, $I=\int_{\Omega} g(\bx)\mbox{d}\bx$, of some function $g$ over a region $\Omega$? We can use Monte Carlo sampling to estimate this integral by re-writing the integral as an expectation with respect to some density function $q(\cdot)$ defined on $\Omega$ as follows,
\[
I=\int_{\Omega} \frac{g(\bx)}{q(\bx)} q(\bx) \; \mbox{d}\bx = \Expect {h(\bX)},
\]
where $h(\bx)=g(\bx)/q(\bx)$. 
If $I$ is well-defined, that is $\int |g(\bx)|\mbox{d}\bx<\infty$, then $h \in \Lpspace^1(q)$,
and $I$ can be estimated using Monte Carlo integration\index{Monte Carlo integration} as above, based on independent realised samples $\bx_1,\ldots,\bx_n$ from $q(\cdot)$ by calculating the arithmetic mean of $h(\bx_1),\ldots,h(\bx_n)$. This process is called \emph{Importance Sampling}\index{importance sampling}, and $q(\cdot)$ is known as the proposal distribution\index{proposal distribution}. 

For this Monte Carlo estimator to be feasible, we have two constraints on $q$. 
First, we need $q(\bx)>0$ whenever $g(\bx)>0$, in order for $h(\bx)$ to be well-defined. 
Second, we need to be able to easily sample from $q(\cdot)$. The choice of $q(\cdot)$ will affect the accuracy of the estimator, with the variance of our estimator for a Monte Carlo sample of size $n$ being $V/n$ where
\[
V=\int \left(\frac{g(\bx)}{q(\bx)} - I \right)^2q(\bx) \; \mbox{d}{\bx}.
\]
This variance will be small if $g(\bx)/q(\bx)$ is roughly constant, and one can show that the optimal choice of $q(\cdot)$ in terms of minimising $V$ is $q(\bx) \propto |g(\bx)|$. If $g(\bx)$ is non-negative everywhere (or non-positive everywhere) then such a choice of $q$ will give an estimator that has zero-variance, that is an exact estimator. More generally the variance $V$ will be large if there are values of $\bx$ for which $g(\bx)/q(\bx)$ is large. This leads to a rule-of-thumb that, if $\Omega$ is unbounded, one wants $q(\bx)$ to have heavier tails than $|g(\bx)|$ to avoid this ratio blowing up as $\|\bx\|\rightarrow\infty$.

\subsection{Monte Carlo or Quadrature?}

It is natural to ask why one should use Monte Carlo integration\index{Monte Carlo integration} when there are alternative numerical integration\index{numerical integration} methods, such as quadrature\index{quadrature}. To see the potential benefits of Monte Carlo methods, consider estimating an integral on the unit hyper-cube $[0,1]^d$. We can then compare quadrature with Monte Carlo integration\index{Monte Carlo integration} based on samples from a uniform distribution on $[0,1]^d$.

First, consider $d=1$. In this case, quadrature methods tend to be much more accurate than Monte Carlo methods. We have seen that the Monte Carlo variance, if we have $n$ Monte Carlo samples, is $O(1/n)$, which means that the error of our Monte Carlo estimator will be $O_p(n^{-1/2})$. 

\begin{figure}
    \centering
    \includegraphics[width=0.7\textwidth]{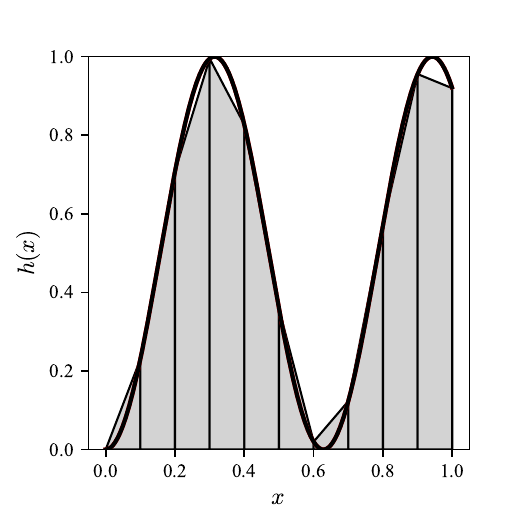}
    \caption{Example of trapezoid rule. We can estimate the integral, by (i) setting $x_1,\dots,x_n$ to be evenly spaced points on $[0,1]$; (ii) creating $n-1$ trapezoids based on joining up the points $(x_k,h(x_k))$ (shaded in regions); and (iii) estimating the integral by the sum of the areas of the trapezoids.
    }
    \label{fig:ch1_trapezoid}
\end{figure}

By comparison, a simple numerical method is the trapezoidal rule. This involves evaluating the integrand, $h(x)$ at a set of equally spaced points, $x_1,\dots,x_n$, on $[0,1]$, 
and approximating the integral using the total area of the trapezoids formed by joining up the points $(x_k,h(x_k))$ for $k=1,\ldots,n$, see Figure \ref{fig:ch1_trapezoid}. Assuming our integrand has a bounded second derivative $|h''(x)|<L$ for some $L$, then we can bound the error in the estimate of the integral as $L\delta^2/12$, where $\delta=1/(n-1)$ is the width of each trapezoid. This gives an error that decays like $O(1/n^2)$, which is much better than the Monte Carlo method. Furthermore, higher-order quadrature\index{quadrature} methods, such as Simpson's rule, can obtain even faster decay of the approximate error with $n$, if the integrand is sufficiently smooth.

So, quadrature\index{quadrature} methods can be more accurate for 1-dimensional integrals, at least for functions whose second derivatives are bounded. However, now consider higher-dimensional integrals involving functions $h(\bx)$, the only information about which we have is that the second-order (partial) derivatives are bounded. Then we can apply a cubature rule based on a grid of $m+1$ equally spaced points in each dimension.  The spacing of these points will be $\delta=1/m$ and there will be $n=(m+1)^d$ points in total.  If we have a cubature whose error decays like $\delta^r$, for some power $r$, for example, $r=2$ for the trapezoidal rule, then the error decays at a rate of $m^{-r}\approx n^{-r/d}$. For large $d$, this convergence will be slower than the $n^{-1/2}$ rate of Monte Carlo integration\index{Monte Carlo integration}, explaining why Monte Carlo is often the default method for numerically approximating high-dimensional integrals. To overcome this \emph{curse of dimension}\index{curse of dimension} in cubature, it is usually necessary to identify a sense in which the integrand $h(\bx)$ is effectively low-dimensional, which can be difficult or impossible depending on the applied context.

\subsection{Control Variates}\index{control variate}
\label{sec.ControlVariatesBasics}

Let us return to the problem of estimating the expectation of some function of a random variable,
\[
I=\Expect{h(\bX)}=\int h(\bx)\pi(\bx) \; \mbox{d}\bx,
\]
where $\pi(\bx)$ is the density of $\bX$. We have seen how we can estimate this using a sample from $\pi(\cdot)$, and that the accuracy of this estimator is proportional to 
\[
V=\int \left\{h(\bx)-I\right\}^2\pi(\bx) \; \mbox{d}\bx = 
\int h(\bx)^2 \pi(\bx) \; \mbox{d}\bx -I^2.
\]
The latter expression is just the standard expression for the variance of $h(\bX)$. This shows that it is easier to estimate expectations of functions that vary less when evaluated at $\bX$.

Assume that we know the expectation of a set of random variables $g_1(\bX),\ldots,g_J(\bX)$, each a transformation of $\bX$. Without loss of generality, we can assume that these random variables have mean zero, \ie,
\[
\Expect{g_j(\bX)}=0, ~~ \mbox{for } j=1,\ldots,J,
\]
as, if this is not the case, we can define new random variables equal to the old random variables minus their expectations. Then, for any constants $\gamma_1,\ldots,\gamma_n$,
\begin{equation} \label{eq:ch1_cv}
I=\Expect{h(\bX)} - \sum_{j=1}^J \gamma_j \Expect{g_j(\bX)} =
\Expect{h(\bX)- \sum_{j=1}^J \gamma_j g_J(\bX)}.
\end{equation}
By suitable choice of the constants $\gamma_1,\ldots,\gamma_J$, the variability of the random variable $h(\bX)-\sum_{j=1}^J \gamma_j g_j(\bX)$ can be made smaller than that of $h(\bX)$, and thus a Monte Carlo estimate of $I$ based on (\ref{eq:ch1_cv}) will have smaller Monte Carlo variance than the basic Monte Carlo estimator. We call $\sum_{j=1}^J \gamma_j g_j(\bX)$ a \emph{control variate}\index{control variate} for $h(\bX)$. Heuristically, we want to choose $\gamma_1,\ldots,\gamma_J$ so that $h(\bX) \approx \gamma_0+\sum_{j=1}^J \gamma_j g_j(\bX)$, which means that $h(\bX)-\sum_{j=1}^J \gamma_j g_j(\bX)$ is approximately constant.

As a simple example, consider estimating the expectation of $\sin(X)$ where $X$ has a standard normal distribution $\mathsf{N}(0,1)$. 
We know that this expectation is 0 as the distribution of $X$ is symmetric about $0$ and $\sin(-x)=-\sin(x)$. We will compare the simple Monte Carlo estimator of the expectation with estimates using control variates\index{control variate} with the functions $g_1(x)=x$ and $g_2(x)=x^2-1$. By using a Taylor expansion of $\sin(x)$ at $x=0$ we have $\sin(x)\approx x$ for small $x$, and thus a simple choice of control variate is $g_1(x)$. 

We show pictorially the benefit of using this control variate\index{control variate} in Figure \ref{fig:ch1_cv}, where we see that $\sin(x)-x \approx 0$ for most $x$
values sampled from the standard normal distribution. This reduces the Monte Carlo variance of the estimate of $\Expect{h(X)}$ by close to a factor of 2.
\begin{figure}
    \centering
    \includegraphics[width=\textwidth]{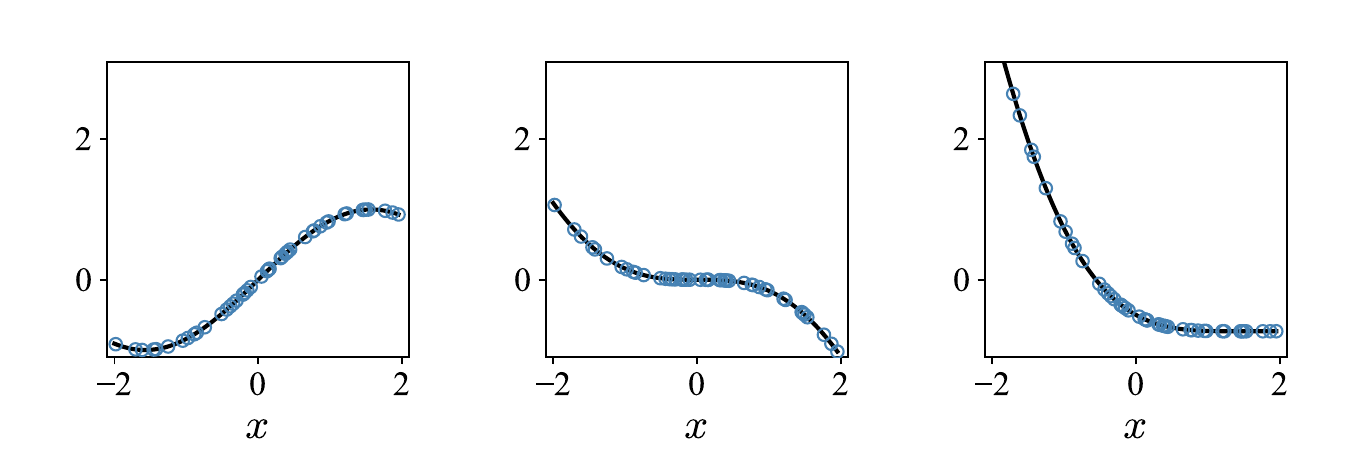}
    \caption{Example of control variates for estimating $\Expect{\sin(X)}$, where $X$ has a standard normal distribution $\mathsf{N}(0,1)$. Each plot shows the function whose expectation is being estimated and 50 values used in the Monte Carlo estimate (dots). From left to right the functions are respectively: $h(x)=\sin(x)$, $h(x)=\sin(x)-x$, and $h(x)=\sin(x) - \pi x/2 + (x^2-1)/2$. 
    The expectation of each function is constructed to be the same. The effect of introducing control variates\index{control variate} in the middle and right-hand plot is to flatten out the function we are integrating -- in the middle plot, this happens for $x\approx0$ and for the right-hand plot for $x\approx \pi/2$. The variability of the function values, i.e. the dots, is smallest for the middle plot and largest for the right-hand plot.
    }
    \label{fig:ch1_cv}
\end{figure}

Care must be taken with control variates\index{control variate}, however. For example, if we perform a Taylor expansion of $\sin(x)$ at $x=\pi/2$ we get $\sin(x) \approx 1-(x-\pi/2)^2/2$, which suggests using $-g_2(x)/2+\pi g_1(x)/2$ as a control variate. However, this choice leads to an increase in the Monte Carlo variance by over a factor of 3. Figure \ref{fig:ch1_cv} shows that the function $\sin(x)-\pi x/2 + (x^2-1)/2$ is roughly constant for $x\approx \pi/2$, but overall it is more variable across the range $x\in[-2,2]$, where most of the probability mass of $\mathsf{N}(0,1)$ lies.

This example shows that the choice of $\gamma_1,\ldots,\gamma_J$ is important when using control variates\index{control variate}. In some situations, there may be a natural way of choosing these -- for example, based on a Taylor expansion of the function of interest around the mode of the distribution of $\bX$. However, it is also possible to choose these values based on simulation. Ideally, we would choose $\gamma_1,\ldots,\gamma_J$ to minimise the Monte Carlo variance
\[
\int \left\{h(\bx)-\sum_{j=1}^J \gamma_j g_j(\bx)\right\}^2\pi(\bx) \; \mbox{d}\bx - I^2,
\]
and we can obtain a Monte Carlo estimate of this. 
If $\bx_1,\ldots,\bx_n$ are realised samples from $\pi(\cdot)$, then we can choose $\gamma_1,\ldots,\gamma_J$ to minimise
\[
\sum_{k=1}^n \left(h(\bx_k)-\sum_{j=1}^J \gamma_j g_j(\bx_k) \right)^2,
\]
which just involves minimising a sum of squares criterion. If we let $\mathbf{h}$ be the $n\times 1$ vector whose $i$th entry is $h(\bx_i)$, $\boldsymbol{\gamma}$ be the $J \times 1$ vector whose $i$th entry is $\gamma_i$, and $\bZ$ be the $n \times J$ matrix whose $(i,j)$th entry is $g_j(\bx_i)$, then, assuming $\bZ$ is of full rank,
the least-squares estimate of $\boldsymbol{\gamma}$ is
\[
\hat{\boldsymbol{\gamma}} = (\bZ^{\top} \bZ)^{-1}\bZ^{\top} \mathbf{h}.
\]
These estimates $\hat{\boldsymbol{\gamma}}$ depend on the Monte Carlo samples, and thus for the Monte Carlo estimate of $I$ to be unbiased we need to use a new set of Monte Carlo samples from $\bX$ for estimating $I$ using the $\hat{\boldsymbol{\gamma}}$.

While we have presented the idea of control variates\index{control variate} for estimating expectations of functions, similar ideas can be used with importance sampling\index{importance sampling} for estimating general integrals.

\subsection{Monte Carlo Integration and Bayesian Statistics}\index{Monte Carlo integration}
\label{sec.MCBayes}

One of the most important applications of Monte Carlo methods occurs within Bayesian statistics. To explain why, consider the problem of making inferences, from data, about the parameter of a statistical model. We will use the notation $\data$ to denote data in general. In some situations, we will need to distinguish individual data points, and in those settings, we will assume $\data=\{\by_1,\ldots,\by_N\}$, with $\by_i$ being the $i$th data point and $N$ being the size of our dataset. 

We further assume that we have a model for the data. Let the model depend on a parameter $\btheta$, and denote the likelihood of the data under our model by $\likelihood{\btheta}{\data}$. The likelihood is the probability, or probability density, of observing data $\data$ under our model if the parameter is $\btheta$. In Bayesian statistics, we represent beliefs, or uncertainty, about the parameter, $\btheta$, through probability distributions. Our beliefs about $\btheta$ before seeing the data are given by a prior, $\prior{\btheta}$, and, once we observe data, Bayes' Theorem provides the update to the posterior distribution\index{posterior distribution}:
\begin{equation}
\label{eqn.postispriortimeslike}
\pi(\btheta\mid \data) \propto \prior{\btheta} \likelihood{\btheta}{\data}.
\end{equation}
Where it will not cause confusion, we may drop the explicit conditioning on the data in the posterior, and write $\pi(\btheta)$ rather than $\pi(\btheta\mid \data)$.

Assuming the correctness of our model,
the posterior distribution\index{posterior distribution} contains all information about the parameter, $\btheta$, that can be logically deduced from our prior belief and the dataset. 
From it, we can then obtain a point estimate for $\btheta$, such as its posterior mean, and quantify uncertainty in terms of the posterior probability of $\btheta$ lying in a given set of values. However, in most applications, the posterior distribution\index{posterior distribution} is intractable, meaning that it cannot be explicitly calculated. 
The central challenge is that the posterior density $\pi(\btheta\mid\data)$ is known, via Bayes' Theorem, only up to a normalising constant.

The intractability of the posterior distribution\index{posterior distribution} is a key motivator for Monte Carlo methods. If we can draw samples from $\pi(\btheta\mid\data)$,  
then we can obtain simple, and often accurate, Monte Carlo estimates of posterior quantities of interest.
Given realisations $\btheta_1,\ldots,\btheta_n$ sampled from $\pi(\btheta\mid\data)$, and a function $h(\btheta)$ whose expectation
\begin{equation*}
    I := \Expects{\pi}{h(\btheta)} = \int h(\btheta) \pi(\btheta\mid\data) \; \mathrm{d}\btheta
\end{equation*}
is of interest, define

\begin{equation}
  \label{eqn.gen.MCaverage.pi}
\muhat^{(n)}_h := \frac{1}{n}\sum_{k=1}^n h(\btheta_k).
\end{equation}
As mentioned earlier, for any function $h \in \Lpspace^1(\pi)$, the strong law of large numbers\index{strong law of large numbers} \eqref{eqn.SLLNiid} tells us that we can estimate $\Expects{\pi}{h(\btheta)}$ as accurately as we desire using Monte Carlo integration\index{Monte Carlo integration}, and provided enough samples are taken:  $\muhat_h^{(n)}\to \Expects{\pi}{h(\btheta)}$ almost surely as $n\to \infty$. 
Moreover, if $h \in \Lpspace^2(\pi)$ then the central limit theorem\index{central limit theorem} states that the Monte Carlo error, $\muhat_h^{(n)}-\Expects{\pi}{h(\btheta)}$ is $O_p(n^{-1/2})$. 

For example, the vector of posterior means can be estimated by
\[
\hat{\btheta}=\frac{1}{n} \sum_{k=1}^n \btheta_k,
\]
and the posterior probability of $\btheta\in\mathcal{B}$ for some set $\mathcal{B}$ can be estimated by the proportion of Monte Carlo samples in $\mathcal{B}$
\[
{\hProb{\btheta\in\mathcal{B}}} = \frac{1}{n} \sum_{k=1}^n \Indicator{\btheta_k\in\mathcal{B}},
\]
where $\Indicator{\btheta_k\in\mathcal{B}}$ is the indicator function of the event $\btheta_k\in\mathcal{B}$.

The challenge with this Monte Carlo approach to Bayesian statistics is the difficulty in sampling from $\pi(\btheta)$, particularly if $\btheta$ is high-dimensional. Of the Monte Carlo integration\index{Monte Carlo integration} methods we have mentioned so far, importance sampling\index{importance sampling} offers an alternative when we are unable to sample from $\pi$ directly. Consider estimating the posterior expectation for some function $h(\btheta)$, so $h(\btheta)=\btheta$ would give us the posterior mean of $\btheta$ and $h(\btheta)=\Indicator{\btheta\in\mathcal{B}}$ would give us the posterior probability of $\btheta\in\mathcal{B}$. Let $q(\btheta)$ be a proposal distribution\index{proposal distribution} with the same support as the posterior. 
Then we have
\[
\Expect{h(\btheta) \mid \data} = \int h(\btheta)\pi(\btheta) \; \mbox{d}\btheta = \int \frac{h(\btheta)\pi(\btheta)}{q(\btheta)} q(\btheta) \; \mbox{d}\btheta.
\]
It is common to define \textit{weights} $w(\btheta):=\pi(\btheta)/q(\btheta)$. Then given an independent sample $\btheta_1,\dots,\btheta_n$ from $q(\btheta)$, we can estimate the posterior expectation by the importance sampling\index{importance sampling} estimator
\[
\frac{1}{n} \sum_{k=1}^n w(\btheta_k)h(\btheta_k).
\]

There are two issues with this estimator. The first is that as we only know the posterior up to a constant of proportionality, we only know the weights up to a constant of proportionality. However, the constant of proportionality can be estimated by setting $h(\btheta)=1$, whence $\Expect{h(\btheta)}=1$ 
as the expectation of a constant is the constant. Thus we can use the unnormalised posterior density in the definition of the weights, and estimate the normalising constant as $(1/n)\sum_{k=1}^n w(\btheta_k)$. The posterior expectation of $h(\btheta)$ is then estimated as
\[
\sum_{k=1}^n \frac{w(\btheta_k)}{\sum_{j=1}^n w(\btheta_j)}h(\btheta_k),
\]
which requires knowing the posterior density only up to a constant of proportionality. Often we define normalised weights $w^*(\btheta_k)=w(\btheta_k)/\sum_{j=1}^n w(\btheta_j)$, and we can then view the weighted samples $(\btheta_k,w^*(\btheta_k))$, for $k=1,\ldots,n$, as a discrete approximation to the posterior. 

The second issue is that the Monte Carlo variances of our estimators of posterior expectations depend on the variability of the weights. Often this will be large if $\btheta$ is high-dimensional. To see this, consider a toy example where the posterior has independent components. Assume each component is normal with mean 0 and variance $\sigma^2$, and the importance-sampling\index{importance sampling} proposal distribution\index{proposal distribution} is also independent over components, but with a standard normal distribution, \ie, with mean zero and unit variance, for each component.
The importance sampling\index{importance sampling} weight for a realisation $\btheta=(\theta_{1},\ldots,\theta_{d})$ is
\[
w(\btheta)= \sigma^{-d} \exp\left\{ \frac{\sigma^2-1}{2\sigma^2} \sum_{i=1}^d \theta_{i}^2 \right\}.
\]
Now $\sum_{i=1}^d \theta_{i}^2 $ has a $\chi^2_d$ distribution under the proposal, and using the moment generating function of a $\chi^2_d$ distribution, we obtain the Monte Carlo variance of the weight:
\[
\var\{ w(\btheta) \} = \sigma^{-d} \left(2-\sigma^2 \right)^{-d/2} -1.
\]
Writing $\sigma^2=1+\epsilon$, for some $\epsilon>0$, this variance is $(1/\sqrt{1-\epsilon^2})^{d}-1$, which increases exponentially with $d$. 
The focus of Markov chain Monte Carlo\index{Markov chain Monte Carlo} (MCMC) methods that we introduce in the next chapter is to produce sampling algorithms that avoid this exponential curse of dimensionality\index{curse of dimension}.

\section{Example Applications}

In later chapters, we will demonstrate the Monte Carlo methods on some example models which we now introduce. Whilst these models are somewhat simple to describe, their posteriors exhibit many of the features of more challenging posterior distributions\index{posterior distribution}, in particular, with respect to scalable sampling.

\subsection{Logistic Regression} \label{sec:ch1-logistic}

Logistic regression\index{logistic regression} models the relationship between a binary response and a set of covariates. Denote the responses by $y_1,\ldots,y_N$ and the covariates by $d$-dimensional vectors $\bx_1,\ldots,\bx_N$. Then, logistic regression\index{logistic regression} models the data (the responses) as conditionally independent, given a $d$-dimensional parameter $\btheta$ and the covariates, and that
\[
\Prob{Y=y_j|\bx_j,\btheta} = \frac{\exp\{\bx_j^{\top}\btheta\}}{1+\exp\{\bx_j^{\top}\btheta\}}.
\]
An intercept term can be included in the model by setting the first coordinate of each of $\bx_1,\ldots,\bx_N$ to be 1.

Our interest will be in sampling from the posterior distribution\index{posterior distribution} of $\btheta$. To define the posterior, we need to specify a prior $\pi_0(\btheta)$, and we will assume that our prior is Gaussian with mean $\bzero$ and variance $\bSigma_{\btheta}$. This leads to a posterior distribution\index{posterior distribution}, $\pi(\btheta|\mathcal{D})$, which can be written succinctly up to a  multiplicative constant as
\[
\pi(\btheta|\mathcal{D}) \propto \exp\left\{-\frac{1}{2}\btheta^{\top}\bSigma_{\btheta}^{-1}\btheta \right\} \prod_{j=1}^N \frac{\exp\{y_j\bx_j^{\top}\btheta\}}{1+\exp\{\bx_j^{\top}\btheta\}}.
\]

This is a canonical, albeit relatively simple, test problem for sampling methodologies. 
When we consider sampling methods for this model, we will drop the explicit conditioning on data $\mathcal{D}$ and use $\pi(\btheta)$ to denote the target distribution of the sampler. 
The samplers we consider will often use gradient\index{gradient} information about their target distribution, and we have
\begin{equation} \label{eq:ch1-logisticgradient}
\frac{\partial \log \pi(\btheta)}{\partial \theta_{i}} = -\left[\btheta^{\top}\bSigma_{\btheta}^{-1}\right]_{i} + \sum_{j=1}^N x_j^{(i)}\left\{y_j - \frac{\exp\{\bx_j^{\top}\btheta\}}{1+\exp\{\bx_j^{\top}\btheta\}}\right\},
\end{equation}
where $x_j^{(i)}$ indicates the $i$th component of $\bx_{j}$.

\subsection{Bayesian Matrix Factorisation} \label{sec:ch1-BMF}

Bayesian matrix factorisation\index{Bayesian matrix factorisation} attempts to find a representation of a high-dimensional matrix as the product of two lower-dimensional matrices. 
Consider an $n\times m$ matrix $\bY$, and let $\btheta=\{\bU,\bV\}$ where $\bU$ and $\bV$ are  $n\times d$ and $d\times m$ matrices respectively. Then the approximation is $\bY\approx\bU\bV$. If $d \ll \min\{m,n\}$ then this can lead to a substantial reduction in dimension, and the model can be viewed as attempting to find low-dimensional structure in $\bY$.

The interpretation of this model is that each row of $\bV$ is a \emph{factor}, and we aim to approximate each row of $\bY$ as a linear combination of these factors. The entries in $\bU$ are called \emph{factor loadings}, and give the relative weight of each factor for each row of $\bY$. 

One common approach to fitting these models is to use a Gaussian working model, thus up to additive constants, the log-likelihood is 
\[
L(\btheta; \data) = -{nm}\log \sigma-\frac{1}{2\sigma^2} \left\{ \sum_{i=1}^n\sum_{j=1}^m \left(Y_{i,j}- \sum_{k=1}^d U_{i,k}V_{k,j}\right)^2
\right\},
\]
where $\sigma^2$ is the variance of the difference between entries of $\bY$ and $\bU\bV$.
In Bayesian matrix factorisation\index{Bayesian matrix factorisation}, we then introduce a prior on the parameters $\bU$ and $\bV$. Often, the prior for each entry is Gaussian, or is a mixture of a Gaussian and a point-mass at zero, as this encourages sparsity in the factors which potentially aids the interpretation of $\bU$ and $\bV$. It is also possible to introduce a prior over the number of factors, $d$, with the priors for the entries of $\bU$ and $\bV$ potentially depending on $d$.

\subsection{Bayesian Neural Networks for Classification} 
\label{sec:ch1-BNN}

Artificial neural networks\index{Bayesian neural network} are a flexible and popular class of models used in machine learning for solving supervised learning problems, such as regression and classification tasks. In the case of classification, assume that $y_1, y_2, \ldots,y_N$ are observed data, where each $y_j$ represents one of $G$ classes, i.e. $y_j \in \{1,2,\ldots,G\}.$ Assuming $d-$dimensional vectors $\bx_1,\bx_2,\ldots,\bx_N$ for the covariates, then under a simple two-layer neural network model, the probability of a particular class $y_j$ is
\begin{equation}
    \label{eq:ch1-nn-model}
    \mathbb{P}(Y=y_j | \bx_j, \btheta) \propto \exp(\mathbf{A}^\top_{y_j} \sigma(\mathbf{B}^\top \bx_{j} + \mathbf{b}) + a_{y_j}), 
\end{equation}
where $\mathbf{b}$ is a $d_h$-dimensional vector, $\mathbf{B}$ is a $d\times d_h$ matrix, with $d_h$ the dimension of the variables in the hidden layer. The function $\sigma:\mathbb{R}^{d_h} \rightarrow (0,1)^{d_h}$ is a vector softmax function with $\sigma(\bz)_i=\exp(z_i)/\{\sum_{j=1}^{d_h} \exp(z_j)\}$ for $i=1,\ldots,d_h$. The notation $\mathbf{A}_i$ refers to the $i$-th column of the $d_h\times G$ matrix $\mathbf{A}$. The parameters of the model $\btheta=\mbox{vec}(\mathbf{a},\mathbf{A},\mathbf{b},\mathbf{B})$ are represented by vectors $\mathbf{a},\mathbf{b}$, commonly referred to as \textit{biases}, and matrices $\mathbf{A},\mathbf{B}$, which are commonly referred to as \textit{weights}. 

Taking a Bayesian approach to parameter estimation, we can place independent Gaussian priors on each of the elements of the biases and weights in $\btheta.$ Monte Carlo algorithms can be used to sample from the posterior, 

\begin{equation}
    \label{eq:ch1-bnn-post}
    \pi(\btheta|\mathcal{D}) \propto \pi_0(\btheta) \prod_{j=1}^N \mathbb{P}(y_j|\bx_j,\btheta).
\end{equation}

For Bayesian neural network\index{Bayesian neural network} models, the dataset sizes tend to be very large and approximating the posterior distribution\index{posterior distribution} requires Monte Carlo methods which are scalable to large datasets. In Chapter \ref{chap:sgld}, we will use stochastic gradient Markov chain Monte Carlo algorithms to approximate the Bayesian neural network\index{Bayesian neural network} posterior distribution\index{posterior distribution}. 

\section{Markov Chains} \label{sec:ch1-MC}
This section describes discrete-time Markov chains\index{Markov chain!discrete-time}, focusing on the concepts that will be required to understand the Markov chain Monte Carlo\index{Markov chain Monte Carlo} method and its efficiency: the stationary distribution\index{stationary distribution}, reversibility\index{reversibility}, convergence to the stationary distribution, ergodic averages, integrated auto-correlation time\index{integrated auto-correlation time} and effective sample size. 

\begin{definition}
A \emph{discrete-time Markov chain}\index{Markov chain!discrete-time} on a state space $\cX$ is a collection of random variables $\{X_k\}_{k=0}^\infty$ with each $X_k\in \cX$, such that for any 
$\cA \subseteq \cX$, 
\begin{equation}
\Prob{X_{k+1}\in \cA\mid X_{k}=x_{k},\dots,X_0=x_0}=\Prob{X_{k+1}\in \cA\mid X_{k}=x_{k}};
\end{equation}
conditional on the current state, the distribution of the next state is independent of all previous states.
\end{definition}

In this chapter, we will only consider \emph{homogeneous} Markov chains, where the distribution of the next state given the current state does not depend on the value of $k$.
Such a chain has a stationary distribution\index{stationary distribution}, $\nu$, if $X_k\sim \nu\implies X_{k+1}\sim \nu$. If the chain also has a unique limiting distribution, then this must be $\nu$ since, by repeated induction, if $X_j\sim \nu$ then $X_k\sim \nu$ for all $k>j$, including as $k\to \infty$. 

The following two examples of Markov chains on the vertices of an $m$-sided polygon illustrate different ways that a chain can be stationary. We label the vertices of the polygon from $0$ to $m-1$, increasing in a clockwise direction; thus, $\cX=\{0,1,\dots,m-1\}$.

\begin{example}
\label{example.ngon.nrev}
 (See Figure \ref{example.ngon.rev.np}, left.) Let $\{X_k\}_{k=0}^\infty$ be a Markov chain on the vertices of an $m$-sided polygon where the state at time $k+1$ is obtained from the state at time $k$ by moving to the next vertex in a clockwise direction. If at time $k$ the chain is equally likely to be at each of the vertices, then this is still the case at time $k+1$. The stationary distribution\index{stationary distribution} has $\Prob{X_k=x}=1/m$ for $x\in\cX$.  
\end{example}

\begin{example}
\label{example.ngon.rev.np}
(See Figure \ref{example.ngon.rev.np}, right.) 
Let $\{X_k\}_{k=0}^\infty$ be a Markov chain on the vertices of an $n$-sided polygon where the state at time $k+1$ is obtained from the state at tim.e $k$ by performing one of the following moves, each of which has a probability of $1/3$: move to the next vertex in an anti-clockwise direction; do not move; move to the next vertex in a clockwise direction. As with Example \ref{example.ngon.nrev} the stationary distribution\index{stationary distribution} has $\Prob{X_k=x}=1/m$ for $x\in\cX$.
\end{example}

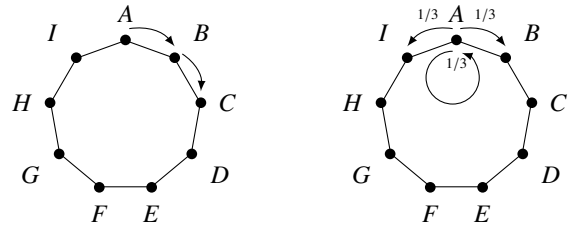
\begin{figure}
  \begin{subfigure}
    \centering
\begin{tikzpicture}
\def\numSides{9}

\node[draw, circle, minimum size=2cm, regular polygon, regular polygon sides=\numSides] (a) {};

\foreach \x in {1,2,...,\numSides}
  \fill (a.corner \x) circle[radius=2pt];

\node[label=$A$] at (a.corner 1) {};
\node[label=40:$B$] at (a.corner 9) {};
\node[label=0:$C$] at (a.corner 8) {};
\node[label=340:$D$] at (a.corner 7) {};
\node[label=270:$E$] at (a.corner 6) {};
\node[label=270:$F$] at (a.corner 5) {};
\node[label=200:$G$] at (a.corner 4) {};
\node[label=180:$H$] at (a.corner 3) {};
\node[label=140:$I$] at (a.corner 2) {};

\draw[-latex,black] ($(a.corner 1)+(0.05,0.15)$) to[bend left]  ($(a.corner 9)+(0.0,0.15)$);

\draw[-latex,black] ($(a.corner 9)+(0.1,0.05)$) to[bend left]  ($(a.corner 8)+(0.0,0.15)$);

\end{tikzpicture}
  \end{subfigure}
  \quad\quad\quad 
  \begin{subfigure}
    \centering
\begin{tikzpicture}
\def\numSides{9}

\node[draw, circle, minimum size=2cm, regular polygon, regular polygon sides=\numSides] (a) {};

\foreach \x in {1,2,...,\numSides}
  \fill (a.corner \x) circle[radius=2pt];

\node[label=$A$] at (a.corner 1) {};
\node[label=40:$B$] at (a.corner 9) {};
\node[label=0:$C$] at (a.corner 8) {};
\node[label=340:$D$] at (a.corner 7) {};
\node[label=270:$E$] at (a.corner 6) {};
\node[label=270:$F$] at (a.corner 5) {};
\node[label=200:$G$] at (a.corner 4) {};
\node[label=180:$H$] at (a.corner 3) {};
\node[label=140:$I$] at (a.corner 2) {};

\node[label=45:\tiny$1/3$] at (a.corner 1) {};
\node[label=135:\tiny$1/3$] at (a.corner 1) {};
\node[label=270:\tiny$1/3$] at (a.corner 1) {};

\draw[-latex,black] ($(a.corner 1)+(0.05,0.15)$) to[bend left]  ($(a.corner 9)+(0.0,0.15)$);
 
\draw[-latex,black] ($(a.corner 1)+(-0.05,0.15)$) to[bend right]  ($(a.corner 2)+(0.0,0.15)$);

\draw[-latex,black] ($(a.corner 1)+(-0.05,-0.15)$) arc
    [
        start angle=90,
        end angle=430,
        x radius=0.35cm,
        y radius =0.35cm
    ] ;

\end{tikzpicture}
  \end{subfigure}
  \caption{9-sided polygon where the Markov chain only moves clockwise (left figure), as in Example \ref{example.ngon.nrev} or moves either a clockwise or anti-clockwise direction with probability $1/3$ (right figure), as in Example 
\ref{example.ngon.rev.np}.}
\end{figure}

\subsection{Reversible Markov chains}

Example \ref{example.ngon.nrev} has a clear flow in a clockwise direction and, because of this, is an example of a non-reversible Markov chain; these will be discussed in detail in Chapter \ref{chap:non-reversible}.  By contrast, in Example \ref{example.ngon.rev.np}, consider any two adjacent vertices: at stationarity, the probability of being at the first and moving to the second is the same as the probability of being at the second and moving to the first. Indeed, this is true of any pair of vertices, with the probability being $0$ if they are not adjacent. This is an example of a reversible Markov chain.

\begin{definition}
\label{defn.reversible}
A Markov chain $\{X_k\}_{k=1}^\infty$ with a state space of $\cX$ is \emph{reversible} with respect to a distribution $\nu$ when, for any two sets $\cB,\cC\subseteq \cX$, if $X_k\sim \nu$ then $\Prob{X_k \in \cB, X_{k+1}\in \cC}
=
\Prob{X_{k}\in \cC,X_{k+1}\in \cB }$.
\end{definition}

Consider the decomposition
$$
\Prob{X_k \in \cB, X_{k+1}\in \cC}=\Prob{X_k \in \cB}\Prob{ X_{k+1}\in \cC|X_k\in \cB}.
$$
The first term on the right-hand side is the amount of probability mass in $\cB$ at time $k$ and the second term is the fraction of that mass which moves to $\cC$ at time $k+1$, so the product is the amount of probability mass moving from $\cB$ to $\cC$. If the chain is reversible with respect to $\nu$ and $X_k\sim \nu$, then this is also the amount of mass moving from $\cC$ to $\cB$. Given this balance, referred to as \emph{detailed balance}\index{detailed balance},  we would expect the total amount of probability mass in any set to remain constant. Indeed, 
setting $\cC=\cX$ in Definition \ref{defn.reversible}, we see that reversibility\index{reversibility} implies that if $X_k\sim \nu$,  $\Prob{X_{k}\in \cB}=\Prob{\cX_{k+1} \in \cB}$. Since this is also true for all $\cB$, $X_{k+1}\sim \nu$.

\subsection{Convergence, Averages, and Variances}
\label{sec.MCcvgESS}

In Example \ref{example.ngon.rev.np}, whatever the value or distribution of $X_0$, as $k\to \infty$ the distribution of $X_k$ converges to the stationary distribution\index{stationary distribution}. For simplicity of presentation, we show this when $m=2m'+1$ is odd. For all $x_0,x\in \cX$, $\Prob{X_{m'}=x|X_0=x_0}\ge 1/3^{m'}$ since it takes at most $m'$ moves in a single direction to reach $x$, and if the chain arrives earlier, we include the probability of it staying at $x$ until time $m'$. Thus, the transition probability after $m'$ steps can be written as a mixture: 
\begin{equation}
\label{eqn.TPmixture}
\Prob{X_{m'}=x|X_0=x_0}= \delta \nu(x) + (1-\delta) q(x|x_0),
\end{equation} for some conditional probability mass function $q$ and with $\delta = m/3^{m'}$. 
The distribution at the start of a given iteration can always be thought of as a mixture of $\nu$ and some other distribution, where the mixture probability for $\nu$ could be $0$. We can imagine that there is a hidden coin, and if it is showing "heads" then the distribution of the chain is $\nu$. Since $\nu$ is the stationary distribution\index{stationary distribution}, if the coin is currently showing "heads" it will still be showing heads after a further $m'$ moves. If the coin is showing "tails"  then  \eqref{eqn.TPmixture} tells us that there is a probability of at least $\delta$ that it will be showing heads after the next $m'$ moves.
Equivalently, the mixture probability of the component that is not $\nu$ has been multiplied by $1-\delta$ or less. After $km'$ iterations, it is, therefore at most $(1-\delta)^k\to 0$ as $k\to \infty$.

However, convergence to a stationary distribution\index{stationary distribution} does not occur for all Markov chains. The chain in Example \ref{example.ngon.nrev} is deterministic: if $X_0=0$, then $X_{km}=0$ for all integers, $k$. The following examples illustrate two further cases.

\begin{example}
\label{ex.ngon.periodic} (See Figure \ref{fig:enter-label}.) 
    Alter Example \ref{example.ngon.rev.np} so that the chain cannot remain at its current vertex but must move either clockwise or anticlockwise by a single vertex, each with a probability of $1/2$. As with the Examples \ref{example.ngon.nrev} and \ref{example.ngon.rev.np}, the stationary distribution\index{stationary distribution} has $\Prob{X_k=x}=1/m$ for $x\in\cX$.
\end{example}

 If $n$ is an even number, and $X_0$ is even then the chain in Example \ref{ex.ngon.periodic} only visits even-numbered states at even-numbered times, and odd-numbered states at odd-numbered times. Such chains are termed \emph{periodic}\index{period} and clearly do not converge to their stationary distribution\index{stationary distribution}.

 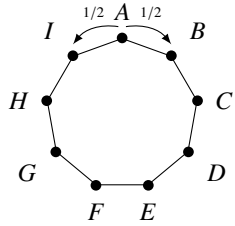
\begin{figure}
     \centering
     \begin{tikzpicture}
\def\numSides{9}

\node[draw, circle, minimum size=2cm, regular polygon, regular polygon sides=\numSides] (a) {};

\node[label=$A$] at (a.corner 1) {};
\node[label=40:$B$] at (a.corner 9) {};
\node[label=0:$C$] at (a.corner 8) {};
\node[label=340:$D$] at (a.corner 7) {};
\node[label=270:$E$] at (a.corner 6) {};
\node[label=270:$F$] at (a.corner 5) {};
\node[label=200:$G$] at (a.corner 4) {};
\node[label=180:$H$] at (a.corner 3) {};
\node[label=140:$I$] at (a.corner 2) {};

\node[label=45:\tiny$1/2$] at (a.corner 1) {};
\node[label=135:\tiny$1/2$] at (a.corner 1) {};

\foreach \x in {1,2,...,\numSides}
  \fill (a.corner \x) circle[radius=2pt];

\draw[-latex,black] ($(a.corner 1)+(0.05,0.15)$) to[bend left]  ($(a.corner 9)+(0.0,0.15)$);
 
\draw[-latex,black] ($(a.corner 1)+(-0.05,0.15)$) to[bend right]  ($(a.corner 2)+(0.0,0.15)$);

\end{tikzpicture}

     \caption{9-sided polygon with Markov transitions described in Example \ref{ex.ngon.periodic}.}
     \label{fig:enter-label}
 \end{figure}

\begin{example}
\label{ex.reducible}
    Consider a Markov chain of the form in Example \ref{example.ngon.rev.np}, but on two separate $m$-sided polygons with no movement between the two. A chain with separate regions between which there can be no movement is termed \emph{reducible}\index{reducible} because it can be reduced to simpler component parts.
\end{example}

A reducible\index{reducible} chain does not even have a single stationary distribution\index{stationary distribution}. In Example \ref{ex.reducible} for any $\beta\in[0,1]$ the distribution with probabilities $\beta/m$ for each vertex on the first polygon and $(1-\beta)/m$ for each vertex on the second polygon is stationary. 

A chain which is not reducible is termed \emph{irreducible}\index{irreducibility}, and a chain which is not periodic is termed \emph{aperiodic}\index{aperiodicity}.

\subsubsection{Ergodic Averages}

The \emph{ergodic theorem}\index{ergodic theorem} for a Markov chain on a general state-space, $\cX$,  states that provided the chain satisfies natural generalisations of irreducibility\index{irreducibility} and aperiodicity\index{aperiodicity}, and has a proper stationary distribution\index{stationary distribution}, $\nu$, then as $k\to \infty$, the distribution of $X_k$ converges to that stationary distribution\index{stationary distribution}. 
Furthermore, subject to the same conditions, for any $h \in \Lpspace^1(\nu)$, samples from the Markov chain satisfy a strong law of large numbers\index{strong law of large numbers}:
\begin{equation}
\label{eqn.SLLNMC}
\Ihat_n(h):=\frac{1}{n}\sum_{k=1}^n h(X_k)
\to
\Expects{\nu}{h(X)}
\end{equation}
almost surely as $n\to \infty$.

\subsubsection{Integrated Auto-Correlation Time and Effective Sample Size} \label{sec:ch1-IACFESS}

Let us assume that $X_0$ is, in fact, drawn from the stationary distribution\index{stationary distribution}. Define $\sigma^2_h:=\Vars{\nu}{h(X)}$ and assume $\sigma^2_h<\infty$. For $k\in\{0,1,2,\dots\}$, the lag-$k$ auto-correlation\index{auto-correlation} is $\rho_k:=\Cor{h(X_0),h(X_k)}=\Cor{h(X_j),h(X_{j+k})}$ since the Markov chain is time-homogeneous. If the $X_j$ were independent samples from $\nu$ then $n \Var{\Ihat_n(h)}=\sigma^2_h$. 
For a stationary Markov chain with
\begin{equation}
\label{eqn.rhocond}
    \sum_{k=1}^\infty |\rho_k|<\infty,
\end{equation}
it holds that
\begin{equation}
\label{eqn.asymp.var}
    \lim_{n\to \infty}n\Var{\Ihat_n(h)}= \sigma^2_h ~\IACT_h,
\end{equation}
where 
\begin{equation}
    \IACT_h:=1+2\sum_{k=1}^\infty \rho_k,
\end{equation}
is the \emph{integrated auto-correlation time}\index{integrated auto-correlation time}.
To see why this is the case, firstly, without loss of generality, take $h$ to have $\Expects{\nu}{h(X)}=0$; if this is not true initially, we subtract off the expectation: the variance properties are unchanged. Then
\begin{align}
n\Var{\Ihat_n(h)}
&=
\frac{1}{n}\Expect{\sum_{k=1}^n \sum_{j=1}^n h(X_j)h(X_k)}
=\frac{\sigma^2_h}{n}\sum_{k=1}^n \sum_{j=1}^n\rho_{|k-j|}.
\end{align}
But $\sum_{k=1}^n \sum_{j=1}^n\rho_{|k-j|}=n\rho_0+2\sum_{k=1}^n(n-k)\rho_k$, so
\[
\frac{n}{\sigma^2_h}\Var{\Ihat_n(h)}=1 + 2\sum_{k=1}^n \left( 1-\frac{k}{n}\right) \rho_k
=
1+2\sum_{k=0}^\infty \max\left(0,1-\frac{k}{n} \right) \rho_k.
\]
Given \eqref{eqn.rhocond}, the dominated converge theorem\index{dominated convergence theorem} permits us to exchange the ordering of the limit as $n\to \infty$ and the sum over $k$, which provides the limit \eqref{eqn.asymp.var}.

The practical consequence of \eqref{eqn.asymp.var} is that, for finite $n$,
\begin{equation}
\label{eqn.VarIhat.IACT}
\Var{\Ihat_h} \approx \frac{\sigma^2_h}{n/\IACT_h},
\end{equation}
the same as the variance if $n/\IACT_h$ i.i.d. samples from $\nu$ had been used. The quantity $n/\IACT_h$ is, therefore, known as the \emph{effective sample size}\index{effective sample size}. Since they relate directly to the inverse variance of $\Ihat_n(h)$, effective sample size and the inverse of the integrated auto-correlation time\index{integrated auto-correlation time} are useful measures of the efficiency of a Markov chain for estimating $\Expects{\nu}{h(X)}$. 

\section{Stochastic Differential Equations}
\label{sec.SDEs}

The Langevin stochastic differential equation\index{stochastic differential equation} is the basis for the Metropolis Adjusted Langevin Algorithm (Section \ref{sec:ch2-MALA}) and for stochastic gradient Langevin methods (Chapter \ref{chap:sgld}). It is also key to understanding the efficiency of various Metropolis--Hastings algorithms when the dimension, $d$, is high (see Chapter \ref{chap:intro-mcmc}). We start with a heuristic introduction to stochastic differential equations\index{stochastic differential equation} before considering a special case of the Langevin diffusion\index{Langevin diffusion} known as the Ornstein--Uhlenbeck process\index{Ornstein--Uhlenbeck process} and then moving onto the general Langevin diffusion.

Consider a differential equation of the form
\[
\frac{\md x_t}{\md t}=a(x_t,t),
\]
with a known initial value for $x_0$.
Discretising time leads to the simple Euler\index{Euler-Maruyama}  approximation
\[
\delta x_t \approx a(x_t,t) \delta t,
\]
where $\delta x_t=x_{t+\delta t}-x_t$. 
Setting $\delta t=T/m$, starting with $x_0$, and recursively applying the Euler update $m$ times leads to an approximation $\xhat_T$, which approaches the true value $x_T$ as $m\to \infty$.

Instead of deterministic updates, we might wish to allow for the addition of random noise with scale proportional to $b(x_t,t)$. The initial value, $X_0$, may now be random and setting $\delta X_t := X_{t+\delta t}-X_t$ leads to one possible update 
\[
\delta X_t \approx a(X_t,t) \delta t + b(X_t,t) \epsilon_t, \qquad \epsilon_t \sim \mathsf{N}(0,\delta t),
\]
where the Gaussian noise terms $\epsilon_t$ are independent of all previous randomness, and $X_t$ has become a random variable. 
A noise distribution of the form $\mathsf{N}(0,\delta t)$ is chosen because it is self-consistent. For example, with $a(X_t,t)=a$ and $b(x_t,t)=b$, after two time steps initialised at $X_0=x_0$, we have
\[
X_{2\delta t} \approx x_0+a\delta t+b \epsilon_{\delta t} +a\delta t+b \epsilon_{2 \delta t} = x_0+2a\delta t+b \tilde{\epsilon}_{2\delta t}
\]
where $\tilde{\epsilon}_{2\delta t} \sim \mathsf{N}(0,2\delta t)$, since the two noise terms $\epsilon_{\delta t}$, $\epsilon_{2 \delta t}$ are independent. 
However, the right-hand side of this expression is exactly of the same form we would get from a single time step of size $2\delta t$ to obtain $X_{2\delta t}$ from $X_0$.

The process with $a=0$, $b=1$ and $X_0=0$ consists of a sequence of mean-zero Gaussian increments, each with a variance of $\delta t$. This is a discretisation of a process known as \emph{Brownian motion}\index{Brownian motion}, which is often denoted by $W_t$. In particular, we have that
\[
\delta W_t = W_{t+\delta t}-W_t\sim \mathsf{N}(0,\delta t),
\]
and $W_t\sim \mathsf{N}(0,t)$.
From the definition of $W_t$, we may rewrite the noisy update as
\begin{equation}
\label{eqn.EulerMar}
\delta X_t \approx a(X_t,t) \delta t + b(X_t,t) \delta W_t.
\end{equation}
Consider this process on some interval $[0,T]$, with $\delta t=T/m$ and $X_0=x_0$, for some initial value $x_0$. Subject to some regularity conditions, the limit as $m\to \infty$ exists  and is written:
\[
\md X_t = a(X_t,t) \md t + b(X_t,t) \md W_t.
\]
This is known as a \emph{stochastic differential equation}\index{stochastic differential equation} (SDE), and \eqref{eqn.EulerMar} is the Euler--Maruyama\index{Euler-Maruyama} approximation to it. Subject to the initial condition, the \emph{solution} to this SDE is the stochastic process $\{X_t\}_{t\in [0,T]}$ obtained from the limit $\delta t\to 0$ of the discrete-time process defined through  \eqref{eqn.EulerMar}.

The above heuristic describes a one-dimensional SDE and its Euler--Maruyama\index{Euler-Maruyama} discretisation; however, it is straightforward to extend these to higher dimensions with $\bX_t\in \mathbb{R}^d$, $\ba:\mathbb{R}^d\times [0,\infty)\to \mathbb{R}^d$, $\bW_t\in \mathbb{R}^k$ and the $d\times k$ matrix $\bb:\mathbb{R}^d\times[0,\infty)\to \mathbb{R}^{dk}$.

A stochastic process that satisfies an SDE is called a \emph{diffusion}. 
For the most part, we will deal with time-homogeneous diffusions, where $a$ and $b$ have no explicit time dependence; however, time-inhomogeneous diffusions will be used in Chapter \ref{chap:sgld}.

\subsection{The Ornstein--Uhlenbeck Process}\index{Ornstein--Uhlenbeck process}
\label{sec.DefineOU}
Consider the SDE
\[
\md X_t = -\frac{1}{2\sigma^2}b^2 X_t \md t + b \md W_t.
\]
The Euler--Maruyama\index{Euler-Maruyama} discretisation gives
\[
X_{t+\delta t} \approx X_t+\delta X_t=
\left(1-\frac{b^2}{2\sigma^2} \delta t\right) X_t +b \delta W_t.
\]
Since $\delta W_t$ is Gaussian distributed and independent of $X_t$, if $X_t$ is Gaussian so is $X_{t+\delta t}$. Moreover, if $\Expect{X_t}=0$, then $\Expect{X_{t+\delta t}}=0$. Finally, if $\Var{X_t}=\sigma^2$ then \[
\Var{X_{t+\delta t}}= \left(1-\frac{b^2}{2\sigma^2}\delta t\right)^2 \sigma^2 +b^2 \delta t
=
\sigma^2+\frac{1}{4\sigma^4}b^4 \delta t^2.
\]
In the limit $m\to \infty$, as the number of increments is increased, with $\delta t =T/m\downarrow 0$, the term in $\delta t^2$ becomes irrelevant: the variance does not change. Thus, if $X_0\sim \mathsf{N}(0,\sigma^2)$ then $X_t\sim \mathsf{N}(0,\sigma^2)$ for all $t>0$; the SDE is stationary. Shifting the coordinate system by $m$ we see that the slightly more general SDE
\begin{equation}
\label{eqn.OUprocess}
\md X_t = -\frac{1}{2\sigma^2}b^2 (X_t-m) \md t + b \md W_t
\end{equation}
has a stationary distribution\index{stationary distribution} of $\mathsf{N}(m,\sigma^2)$. The process arising from the SDE \eqref{eqn.OUprocess} is known as the \emph{Ornstein--Uhlenbeck}\index{Ornstein--Uhlenbeck process} (OU) process. Substituting $s=b^2 t$, the SDE becomes $\md X_s= -(X_s-m)/(2\sigma^2)\md s + \md W_s$, which explains why $b^2$ is termed the \emph{speed} of the diffusion. Figure \ref{fig:ch1_OU} presents three realisations of OU processes with stationary distribution\index{stationary distribution} $\mathsf{N}(m,1)$ and started from the corresponding $m/2$. Each diffusion has a different speed, and the effect of this on the convergence to, and mixing within, the stationary distribution\index{stationary distribution} is clearly visible. 

\begin{figure}
    \centering
     \includegraphics[width=0.7\textwidth]{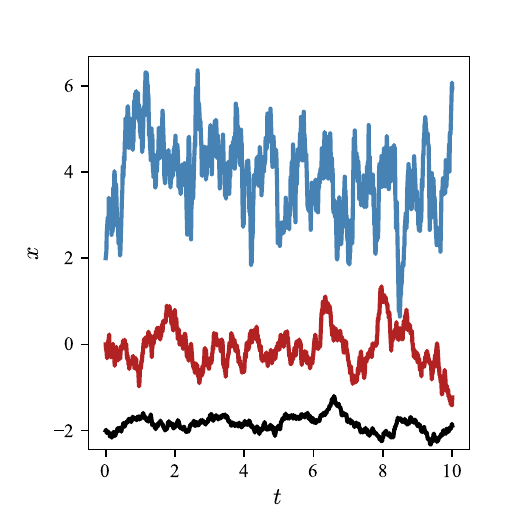}
    \caption{Three realisations of the Ornstein--Uhlenbeck processes\index{Ornstein--Uhlenbeck process}, all with $\sigma=1$, and on the time interval $[0,10]$. Other parameter settings are $x_0=2$, $m=4$ and $b=3$; $x_0=m=0$ and $b=1$; $x_0=-2$, $m=-4$ and $b=1/3$.
    }
    \label{fig:ch1_OU}
\end{figure}

\subsection{The Infinitesimal Generator} \label{sec:generator}

The \emph{infinitesimal generator}\index{infinitesimal generator|see{generator}} (or, simply, \emph{generator})\index{generator} of a continuous-time stochastic process acts on a function $h$ of the process:
\begin{equation}
    \label{eqn.InfGenerator}
    (\cL h)(\bx):=\left.\frac{\partial}{\partial t}\Expect{h(\bX_t)|\bX_0=\bx}\right|_{t=0}
    =
    \lim_{\delta t\downarrow 0}
    \frac{\Expect{h(\bX_{\delta t})}-h(\bx)}{\delta t}.
\end{equation}
The set of functions for which the limit exists for all $\bx$ is called the \emph{domain} of the infinitesimal generator\index{generator}. Subject to regularity conditions, this includes the set of compactly supported functions with a second derivative that is continuous, denoted $C_0^2$. 

For processes defined by an SDE, we can gain some insight into their generator\index{generator} by considering a Taylor expansion. For simplicity of presentation, we consider $x\in\mathbb{R}$:
\[
\frac{1}{\delta t}\Expect{h(X_{\delta t})-h(x)}
=
\frac{1}{\delta t}
\Expect{(X_{\delta t}-x)h'(x)+\frac{1}{2}(X_{\delta t}-x)^2h''(x) 
+\dots} .
\]
The Euler--Maruyama\index{Euler-Maruyama} approximation of the SDE is $X_{\delta t}-x\approx a(x) \delta t+b(x)\delta W_t$. Thus 
$\Expect{X_{\delta t}-x}\approx a(x)\delta t$ and $\Expect{(X_{\delta t}-x)^2}\approx b(x)^2\delta t+a(x)^2[\delta t]^2$, with all higher order expectations at most $o(\delta t)$. Thus, we might expect that
\[
 (\cL h)(x)=a(x)\frac{\md h}{\md x}+\frac{1}{2}b(x)^2 \frac{\md^2 h}{\md x^2},
\]
and this is indeed the case. 
For a multivariate diffusion, the generator\index{generator} is
\begin{equation}
(\cL h)(\bx)=\sum_{i=1}^d \left.a_i\frac{\partial h}{\partial x_i}\right|_{\bx} +\frac{1}{2}\sum_{i=1}^d\sum_{j=1}^d
\left.(bb^\top)_{i,j}\frac{\partial^2 h}{\partial x_i\partial x_j}\right|_{\bx}.
\end{equation}

Generators\index{generator} of diffusion processes are used in the next subsection to derive the stationary density of two classes of diffusion that appear repeatedly in Chapters \ref{chap:intro-mcmc} and \ref{chap:sgld}. Generators of diffusions are also used in Chapter  \ref{chap:stein} for the assessment and improvement of algorithms. Finally, Chapter \ref{chap:continuous-time} employs the generators of another class of continuous-time stochastic processes to determine the processes' stationary distributions\index{stationary distribution}.

\subsection{Langevin Diffusions}\index{Langevin diffusion}
\label{sec.IntroLangDiffusions}
We now describe two classes of diffusion, the overdamped and underdamped Langevin diffusions\index{Langevin diffusion!underdamped}, where the stationary density forms an explicit part of the SDE formulation.

\subsubsection{The Overdamped Langevin Diffusion}\index{Langevin diffusion!overdamped}
Consider a positive, differentiable density function $f(\bx)$ for $\bx \in \mathbb{R}^d$, and the following SDE:
\begin{equation}
    \label{eqn.Langevin}
    \md \bX_t = \frac{1}{2}\nabla \log f(\bX_t) b^2 \md t+b \md \bW_t.
\end{equation}
A solution to this SDE is known as an  \emph{overdamped Langevin diffusion}\index{Langevin diffusion!overdamped}. 
The OU process \eqref{eqn.OUprocess} with $f(x)=\mathsf{N}(x;m,1/a)$ is a special case of this class of diffusions, and in this case, as seen in Section \ref{sec.DefineOU}, $f$ is the density of the stationary process. In fact, this is true in general: the stationary density of the overdamped Langevin diffusion\index{Langevin diffusion!overdamped} \eqref{eqn.Langevin} is $f$. To see this in one dimension, consider the infinitesimal generator\index{generator} of the diffusion:
\[
(\cL h)(x)
=
\frac{1}{2}b^2\frac{f'(x)}{f(x)}h'(x)+\frac{1}{2}b^2h''(x).
\]
This is the rate of change of the expectation of $h(X_t)$ at $t=0$, when started from $X_0=x$. Suppose instead that $X_0$ has a density of $f$. Then, the rate of change of the expectation of $X_t$ at $t=0$ can be calculated by taking expectations with respect to $X_0$. This is
\[
\frac{1}{2}b^2\int \left\{\frac{f'(x)}{f(x)}h'(x)+h''(x)\right\} f(x)~\md x
=
\frac{1}{2}b^2\int \left\{f(x)h'(x)\right\}' ~\md x= 0
\]
for all sufficiently smooth $h$ with compact support. 
Thus, if $X_0\sim f$, $\left.\frac{\md}{\md t} \Expect{h(X_t)}\right|_{t=0} =0$. This is true for \emph{all} $h\in C_0^2$, and so the distribution of $X_t$ does not change as $t$ increases from $0$. The distribution at time $0$ must, therefore, be the stationary distribution\index{stationary distribution} of the Langevin diffusion \eqref{eqn.Langevin}, and $f$ is the corresponding stationary density.

When Langevin diffusions\index{Langevin diffusion} are employed in a Bayesian setting, $f(\bx)$ is often a posterior density whose normalising constant is, typically, intractable. The fact that the calculation of $\nabla \log f(\bX_t)$ does not require this normalising constant is crucial to the practical use of these diffusions.

\subsubsection{The Underdamped Langevin Diffusion} \label{ch1:sec:underdamped}
The \emph{underdamped Langevin diffusion}\index{Langevin diffusion!underdamped} extends the state space to include a velocity component, $\bP_t$:
\begin{align}
\md \bX_t &= \bP_t \md t,\\
\md \bP_t & = - \gamma \bP_t \md t + c \nabla \log f(\bX_t) \md t +\sqrt{2\gamma c} ~\md \bW_t
\label{eqn.underdampedB}
\end{align}
Intuitively, dividing \eqref{eqn.underdampedB} through by $\gamma$ and taking the limit as $\gamma \to \infty$ and $c\to \infty$ with $c/\gamma=b^2/2$ fixed, we obtain the overdamped Langevin diffusion\index{Langevin diffusion!overdamped}, so the latter is a limiting case of the underdamped diffusion\index{Langevin diffusion!underdamped}.

The underdamped Langevin diffusion\index{Langevin diffusion!underdamped} targets $f(\bx)g(\bp)$, where 
\[
g(\bp)=\frac{1}{\sqrt{2\pi c}}\exp\left(-\frac{1}{2 c} \|\bp\|^2\right).
\]
To see this we, again, restrict ourselves to the one-dimensional case to simplify the presentation and, again, we start from the generator\index{generator}:
\[
(\cL h)(x,p)=p h_x(x,p)-\gamma p h_p(x,p) +  c\frac{f'(x)}{f(x)}h_p(x,p) + \gamma c h_{p,p}(x,p),
\]
where we have used subscripts to denote differentiation of $h$ with respect to $x$ or $p$.  The quantity $(\cL h)(x,p)$ is the rate of change of the expectation of $h(X_t,P_t)$ at $t=0$, when started at $X_0=x$ and $P_0=p$. Thus if $X_0$ and $P_0$ have respective densities of $f(x)$ and $g(p)$, then the rate of change of $\Expect{h(X_t,P_t)}$ at $t=0$ is
\[
\iint \left\{ ph_x-\gamma p h_p + c\frac{f'(x)}{f(x)}h_p + \gamma c h_{p,p}\right\} f(x) g(p) \md p \md x.
\]
In the manipulations that follow, we will twice use the fact that $g'(p)=-p g(p)/c$. Firstly, integration by parts gives
\[
\int \gamma c h_{p,p} g(p) \; \md p=\int p\gamma h_p g(p) \; \md p,
\]
so the second and fourth terms cancel. Secondly, two integrations by parts, first with respect to $p$ and then with respect to $x$, give
\begin{align*}
\iint cf'(x) h_p g(p) \; \md p \md x
&=
\iint f'(x) h pg(p) \; \md p \md x\\
&=
-\iint f(x) h_x pg(p) \; \md p \md x,
\end{align*}
so the first and third terms cancel. The argument is completed analogously to that for the overdamped Langevin diffusion\index{Langevin diffusion!overdamped}.

In Chapter \ref{chap:sgld}, we explore further the overdamped and underdamped Langevin diffusions as practical algorithms for scalable Monte Carlo inference in the large-data setting and show that the discretisation of these diffusion processes leads to important special cases of the general framework for stochastic gradient MCMC algorithms. 

\section{The Kernel Trick}\index{kernel trick}
\label{sec.RKHS}

Chapter \ref{chap:stein} introduces the \emph{kernel Stein discrepancy} and uses it 
 to measure the discrepancy between a sample of points and a distribution of interest. Practical use of the methodology is made feasible by the ability to reduce what appears to be an infinite amount of computation -- maximising a quantity over an uncountably infinite set of possible functions -- to only a finite number of arithmetic operations. The key mechanism for this simplification is often called \emph{the kernel trick}, and the setting for its use is a  \emph{reproducing kernel Hilbert space}\index{reproducing kernel Hilbert space}. 

This section first explains the kernel trick\index{kernel trick} in the more familiar setting of a finite-dimensional inner-product space, before extending to the more general setting required for Chapter \ref{chap:stein}.
Whilst many of the concepts introduced are much more general, our presentation focuses on the specific setting of relevance: the vectors of our inner-product space\index{inner product space} are functions, the associated field is $\mathbb{R}$ and the inner product\index{inner product} is an integral with respect to a probability distribution. 

Throughout, $f(\cdot)$, $g(\cdot)$ \emph{etc}. are functions from $\cX\to \mathbb{R}$, where $\cX$ is $\mathbb{R}^d$ or some closed or open subset of $\mathbb{R}^d$; $f(\bx)$, $g(\bx)$ \emph{etc} denote the function evaluated at $\bx\in \cX$. The probability distribution $\nu$ is assumed to have a density $\nu(\bx)$ on $\cX$.

\subsection{Finite-Dimensional Inner Product Spaces}

Let $0(\cdot)$ be the function such that $0(\bx)=0$ for all $\bx\in \cX$.
A set, $\Vsp$, of functions from $\cX \to \mathbb{R}$ is a \emph{vector space}\index{vector space} over $\mathbb{R}$ if the following axioms are satisfied:

\begin{enumerate}
    \item $0(\cdot)\in \Vsp$.
    \item $f(\cdot)\in \Vsp\implies -f(\cdot)\in \Vsp$.
    \item $f(\cdot), g(\cdot)\in \Vsp\implies f(\cdot)+g(\cdot)\in \Vsp$.
    \item $f(\cdot)\in \Vsp$ and $a\in \mathbb{R}\implies a f(\cdot)\in \Vsp$.
\end{enumerate}

\emph{Aside}: 
 The associativity, commutativity and distributativity axioms of a general vector space\index{vector space} are satisfied automatically when the elements are functions and from $\cX$ to $\mathbb{R}$ and the field is $\mathbb{R}$. 

Every finite-dimensional vector space\index{vector space} has a dimension, $n$, such that there is a set of $n$ vectors $\{b_1(\cdot),\dots, b_n(\cdot)\}$ which satisfy two properties:
\begin{enumerate}
    \item \textbf{Linear independence}: If there are $a_1,\dots,a_n \in \mathbb{R}$ such that $\sum_{i=1}^n a_i b_i(\cdot)=0$ then $a_i=0$ for all $i\in\{1,\dots,n\}$.
    \item \textbf{Spanning} $\Vsp$: for each $f(\cdot)\in \Vsp$ there are $a_1,\dots,a_n\in \mathbb{R}$ such that $f(\cdot)=\sum_{i=1}^n a_i b_i(\cdot)$. 
\end{enumerate}
The set $\{b_1(\cdot),\dots,b_n(\cdot)\}$ is called a \emph{basis}.

\begin{example}
\label{example.Ltwo.running}
 It is straightforward to check that the set \begin{align*}
   \Vsp
   &=\{f(\cdot): f(x)=c\sin(x+\theta):c\in \mathbb{R},\theta\in [0,2\pi)\}\\
   &=
  \{f(\cdot): f(x)=a \sin x + b \cos x; a,b\in \mathbb{R}\}
  \end{align*} satisfies Axioms 1--4, whatever the domain, $\cX\subseteq \mathbb{R}$.
   We may take $b_1(\cdot)=\sin(\cdot)$ and $b_2(\cdot)=\cos(\cdot)$. However, we may also take $b_1(\cdot)=\sin(\cdot)+3\cos(\cdot)$ and $b_2(\cdot)=\cos(\cdot)$, for example.
\end{example}

For any vector space\index{vector space} $\Vsp$ of functions from $\cX\to \mathbb{R}$ and any distribution $\nu$ with a probability density function on $\cX$ of $\nu(x)$, we define the \emph{inner product}\index{inner product} 
\begin{equation}
\label{eqn.def.inner.prod}
\braket{f(\cdot),g(\cdot)}_{\nu}
=\int f(\bx)g(\bx)\nu(\bx) \; \md \bx,
\end{equation}
where here and throughout this section, if the integral range is not specified then it is $\cX$. We refer to this inner product as $\braket{\cdot,\cdot}_{\nu}$.

The inner product\index{inner product} defined by \eqref{eqn.def.inner.prod} clearly satisfies two of the three defining properties of an inner product: $\braket{f(\cdot),g(\cdot)}=\braket{g(\cdot),f(\cdot)}$ and
$
\braket{f(\cdot)+g(\cdot),h(\cdot)}=\braket{f(\cdot),h(\cdot)}+\braket{g(\cdot),h(\cdot)}
$.
However, we have only that $\braket{f(\cdot),f(\cdot)}=0 \Leftrightarrow$ $f(\bx)= 0(\bx)$ $\nu$-almost everywhere, rather than $\braket{f(\cdot),f(\cdot)}=0\Leftrightarrow f(\cdot)= 0(\cdot)$. Each $f$ belongs to an equivalence class of functions that are equal $\nu$-almost everywhere. This set of equivalence classes forms a vector space\index{vector space} and \eqref{eqn.def.inner.prod} defines an inner product on this space, not on the space of functions, $\Vsp$. To keep the presentation in this section as straightforward as possible our wording ignores this distinction, but the more rigorous reader may wish to replace any vector space\index{vector space} of functions and inner product\index{inner product} between these functions with the corresponding vector space\index{vector space} of equivalence classes of functions and inner products between these equivalence classes.  

The inner product\index{inner product} provides a norm, called the \emph{induced norm}, the square of which is 
\[
\|f(\cdot)\|^2_\nu
=
\braket{f(\cdot),f(\cdot)}_\nu
=\int f(\bx)^2 \nu(\bx) \; \md \bx.
\]

\begin{example}
[Example \ref{example.Ltwo.running} continued] Let $\cX = [0,2\pi]$ and 
let $\nu$ be the uniform distribution on $[0,2\pi]$. For any $f(\cdot),g(\cdot)\in \Vsp$,
\[
\braket{f(\cdot),g(\cdot)}_\nu
=
\frac{1}{2\pi}\int_0^{2\pi} f(x) g(x) \; \md x
~~~\mbox{and}~~~
\|f(\cdot)\|^2_\nu=\frac{1}{2\pi}\int_0^{2\pi} f(x)^2 \; \md x.
\]
\end{example}

\begin{example}
\label{example.fd.squareInt}
For a general vector space\index{vector space} $\Vsp$ of functions of the form $\cX\to \mathbb{R}$, let $\Vsp_\nu$ be the elements of $\Vsp$ which have a finite norm induced by $\nu$:
\[
\Vsp_\nu=\left\{f(\cdot)\in \Vsp: \int f(\bx)^2 \nu(\bx) \; \md \bx<\infty\right\}.
\]
Then $\Vsp_\nu$ is also a vector space\index{vector space}, since Axioms 1, 2 and 4 are satisfied trivially, and Axiom 3 is satisfied since for any   $f(\cdot),g(\cdot)\in \Vsp_\nu$,
\begin{align*}
\|f(\cdot)+g(\cdot)\|_\nu^2
    &=
    \braket{f(\cdot)+g(\cdot),f(\cdot)+g(\cdot)}_\nu\\
    &=
    \|f(\cdot)\|_\nu^2+2\braket{f(\cdot),g(\cdot)}_\nu+\|g(\cdot)\|^2_{\nu}\\
&\le
\|f(\cdot)\|_\nu^2+2 \|f(\cdot)\|_\nu \|g(\cdot)\|_\nu +  \|g(\cdot)\|_{\nu}^2
<\infty,
\end{align*}
where the third line uses the Cauchy--Schwarz\index{Cauchy--Schwarz} inequality, which, in this case, is the familiar inequality $\Expect{f(\bX)g(\bX)}^2\le \mathbb{E}[f(\bX)^2]\Expect{g(\bX)^2}$, where $\bX$ has a density $\nu$ on $\cX$. 
\end{example}

Henceforth, for narrative simplicity, we will assume that $\Vsp$ is a finite-dimensional vector space\index{vector space} with dimension $n$. Section \ref{sec.general.kernels} extends the narrative to potentially infinite-dimensional spaces.

When considering the inner product\index{inner product} $\braket{\cdot,\cdot}_\nu$, two vectors $f(\cdot),g(\cdot)\in \Vsp$ are said to be \emph{orthogonal}\index{orthogonal} if 
$\braket{f(\cdot),g(\cdot)}_\nu=0$ and the basis vectors, $e_1(\cdot),\dots,e_n(\cdot)$ are said to be \emph{orthonormal}\index{orthonormal} if they are orthogonal and each has a norm of $1$: for each $j,k\in\{1,\dots,n\}$,
\[
\|e_j(\cdot)\|_\nu=1
~~~\mbox{and}~~~
\braket{e_j(\cdot),e_k(\cdot)}_\nu=0
\]
whenever $j\ne k$. 
We will reserve the symbols $\{e_k(\cdot)\}_{k=1}^n$ for any set of $n$ orthonormal\index{orthonormal} basis functions.  

The representation of  $f(\cdot)$ in terms of an orthonormal\index{orthonormal} basis
\[
f(\cdot)=\sum_{j=1}^n f_j e_j(\cdot)
\]
is termed an \emph{orthonormal decomposition} of $f(\cdot)$. Since the $e_i(\cdot)$ are orthonormal\index{orthonormal}, the projection of $f(\cdot)$ onto $e_k(\cdot)$ is $f_k$:
\begin{align}
\nonumber
\braket{f(\cdot),e_k(\cdot)}_\nu&=
\left\langle
\sum_{j=1}^n f_j e_j(\cdot),e_k(\cdot)
\right\rangle_\nu
=\sum_{j=1}^n f_j\braket{e_j(\cdot),e_k(\cdot)}_\nu\\
&= f_k.
\label{eqn.project}
\end{align}

Furthermore, the squared norm of $f(\cdot)$ is the sum of the squares of the orthonormal\index{orthonormal} projections:
\begin{align}
\nonumber
\|f(\cdot)\|^2
&=
\left\langle\sum_{j=1}^n f_j e_j(\cdot),\sum_{k=1}^n f_k e_k(\cdot)
    \right\rangle
    =
    \sum_{j=1}^n\sum_{k=1}^n f_j f_k\braket{e_j(\cdot),e_k(\cdot)}\\
    &=\sum_{j=1}^n f_j^2.
\label{eqn.genPythag}
\end{align}

\begin{example}
\label{example.orthonormal}
In Example \ref{example.Ltwo.running}, since $\int_0^{2\pi} \sin^2 x \; \md x = \int_0^{2\pi} \cos^2 x \; \md x=\pi$ and $\int_0^{2\pi} \sin x \cos x \; \md x = 0$,
\[
e_1(\cdot)=\sqrt{2}\sin(\cdot)
~~~\mbox{and}~~~
e_2(\cdot)=\sqrt{2}\cos(\cdot)
\]
form an orthonormal\index{orthonormal} basis for $\Vsp$ when $\nu$ is the uniform distribution on $[0,2\pi]$. Any function $f(\cdot)\in  \Vsp$ can be written as $f(\cdot)=f_1 e_1(\cdot)+ f_2 e_2(\cdot)$. For example set
\begin{equation}
\label{eqn.fd.f.example}
f(x)=\sin(x+\pi/6)=\frac{\sqrt{3}}{2}\sin x + \frac{1}{2} \cos x.
\end{equation}
So $f_1=\sqrt{3}/(2\sqrt{2})$ and $f_2=1/(2\sqrt{2})$. Also
\[
\|f(\cdot)\|^2_\nu=f_1^2+f_2^2=\frac{3}{8}+\frac{1}{8}=\frac{1}{2}
=
\frac{1}{2\pi}\int_0^{2\pi}
\sin^2(x+\pi/6) \; \md x.
\]

\end{example}

\subsection{Kernels in a Finite-Dimensional Inner Product Space}
\label{sec.kernels.from.fd}

As in the previous subsection, let $\Vsp$ be an $n$-dimensional vector space\index{vector space} of functions from $\cX$ to $\mathbb{R}$ and let $\nu$ be a probability distribution on $\cX$ with a probability density of 
$\nu(\bx)$, $\bx\in \cX$. Finally, let $\{e_k(\cdot)\}_{k=1}^n$ be a set of basis functions which is orthonormal\index{orthonormal} with respect to the inner product\index{inner product} \eqref{eqn.def.inner.prod}. 

Let $\lambda_1,\dots,\lambda_n$ be a set of non-negative scalars and consider the following real-valued function on $\cX\times \cX$:
\begin{equation}
\label{eqn.define.fd.kernel}
\kernel(\bx,\by)=\sum_{j=1}^n \lambda_j e_j(\bx)e_j(\by).
\end{equation}

Clearly, $\kernel(\cdot,\cdot)$ is \emph{symmetric}: $\kernel(\by,\bx)=\kernel(\bx,\by)$. Moreover, $\kernel(\cdot,\cdot)$ is \emph{positive semidefinite}: for any finite $J<\infty$,  $c_1,\dots,c_J\in \mathbb{R}$ and $\bx_1,\dots\bx_J\in \mathbb{R}^d$,
\begin{align*}
\sum_{j=1}^J\sum_{k=1}^J c_j c_k \kernel(\bx_j,\bx_k)
&=
\sum_{j=1}^J\sum_{k=1}^J c_j c_k \sum_{l=1}^n \lambda_l e_l(\bx_j)e_l(\bx_k)\\
&=
\sum_{l=1}^n \lambda_l 
\sum_{j=1}^J\sum_{k=1}^J c_j c_k
e_l(\bx_j)e_l(\bx_k)\\
&=
\sum_{l=1}^n \lambda_l
\left\{\sum_{j=1}^J c_j e_l(\bx_j)\right\}^2
\ge 0.
\end{align*}

Any function $\kernel(\cdot,\cdot): \cX\times \cX\to \mathbb{R}$ which is both symmetric and positive semidefinite is called a \emph{kernel}. 

\begin{example}
\label{example.choose.kernel}
Continuing Example \ref{example.Ltwo.running}, let $\kernel:[0,2\pi]\times [0,2\pi]\to\mathbb{R}$ be
\begin{align*}
\kernel(x,y)&=
\frac{1}{2} e_1(x)e_1(y) + \frac{3}{2} e_2(x) e_2(y)
=\sin x \sin y + 3\cos x \cos y\\&= 2\cos(y-x)+\cos(y+x).
\end{align*}
This is symmetric and positive definite by construction. 
\end{example}

Given the definition of $\kernel(\cdot,\cdot)$ in \eqref{eqn.define.fd.kernel}, define
\begin{equation}
\label{eqn.def.kx.fn}
\kernel(\bx,\cdot)=
\sum_{j=1}^n \lambda_j e_j(\bx)e_j(\cdot),
\end{equation}
and $\kernel(\cdot,\bx)=\kernel(\bx,\cdot)$.
Since $e_j(\bx)\in \mathbb{R}$, $\kernel(\bx,\cdot)\in \Vsp$. Furthermore, for $f(\cdot) \in \Vsp$ define the operator $T_\kernel$ via
\begin{equation}
\label{eqn.defineTkernel}
T_\kernel f(\cdot)
=
\int \kernel(\cdot,\by) f(\by) \nu(\by) \; \md \by.
\end{equation}
Then $T_\kernel$ is a \emph{linear operator}, since for any $a,b\in \mathbb{R}$ and $f(\cdot),g(\cdot) \in \Vsp$, 
\begin{align*}
T_\kernel\left\{af(\cdot)+bg(\cdot)\right\}
=aT_\kernel f(\cdot) + bT_\kernel g(\cdot).
\end{align*}
Now, writing $f(\cdot)=\sum_{k=1}^n f_k e_k(\cdot)$,
\begin{align*}
(T_\kernel f(\cdot))(\bx)
&=
\int \kernel(\bx,\by) f(\by) \nu(\by) \md \by
\\
&=
\braket{\kernel(\bx,\cdot),f(\cdot)}_\nu
=
\left\langle
\sum_{j=1}^n \lambda_j e_j(\bx)e_j(\cdot),\sum_{k=1}^n f_k e_k(\cdot)
\right\rangle_\nu\\
&=
\sum_{j=1}^n\sum_{k=1}^n\lambda_j e_j(\bx) f_k\braket{e_j(\cdot),e_k(\cdot)}_\nu
=\sum_{k=1}^n\lambda_k f_k e_k(\bx).
\end{align*}
So $T_\kernel f(\cdot)=\sum_{k=1}^n\lambda_k f_k e_k(\cdot)$ and, hence, $T_\kernel f(\cdot) \in \Vsp$, too. Moreover, considering $f(\cdot)=e_j(\cdot)$, we see that $T_\kernel e_j(\cdot)=\lambda_je_j(\cdot)$; each $e_j(\cdot)$ is an eigenfunction\index{eigenfunction} of $T_\kernel$ with a corresponding eigenvalue\index{eigenvalue} of $\lambda_j$.

\begin{example}
Continuing Example \ref{example.Ltwo.running}, with the kernel from Example \ref{example.choose.kernel},
\[
\kernel(x,\cdot)=\sin x \sin(\cdot) + 3\cos x \cos(\cdot) = 2\cos(\cdot - x)+\cos(\cdot+x).
\]

Let $f(\cdot)$ be as defined in \eqref{eqn.fd.f.example}.
Then, using the definite integrals at the start of Example \ref{example.orthonormal},
\begin{align*}
T_\kernel f(\cdot)
&=
\frac{1}{2\pi}
\int_0^{2\pi}\left\{\sin(\cdot)\sin y+3\cos(\cdot)\cos y\right\}\left\{\frac{\sqrt{3}}{2}\sin y + \frac{1}{2} \cos y\right\} \; \md y\\
&=
\frac{1}{4\pi}\int_0^{2\pi} \sqrt{3}\sin(\cdot)\sin^2 y + 3\cos(\cdot) \cos^2 y  \; \md y\\
&=
\frac{1}{4}\left\{\sqrt{3} \sin(\cdot)+3\cos(\cdot)\right\}.
\end{align*}
Since 
\[e_1(\cdot)=\sqrt{2}\sin(\cdot),~  e_2(\cdot)=\sqrt{2}\cos(\cdot),~ f_1=\frac{\sqrt{3}}{2\sqrt{2}},~  f_2=\frac{1}
{2\sqrt{2}},
\]
$\lambda_1=1/2$ and $\lambda_2=3/2$, $T_\kernel f(\cdot)$ is, therefore,
\[
\lambda_1 f_1 e_1(\cdot) + \lambda_2 f_2 e_2(\cdot),
\]
as we would hope.
\end{example}

\subsection{A New Inner Product and the Kernel Trick in Finite Dimensions}
\label{sec.fdkerneltrick}

Let $\{e_j(\cdot)\}_{j=1}^n$ be an orthonormal\index{orthonormal} basis for $\Vsp$ and let $\kernel$ be defined through \eqref{eqn.define.fd.kernel} with respect to this basis.
For $f(\cdot),g(\cdot)\in \Vsp$ with
\begin{equation}
\label{eqn.decompose.fg}
f(\cdot)=\sum_{j=1}^n f_j e_j(\cdot)
~~~\mbox{and}~~~
g(\cdot)=\sum_{j=1}^n g_j e_j(\cdot),
\end{equation}
the inner product\index{inner product} with respect to $\nu$ is the sum of the products of the orthogonal\index{orthogonal} projections:
\begin{align*}
\braket{f(\cdot),g(\cdot)}_\nu
=&
\left\langle
\sum_{j=1}^n f_j e_j(\cdot),\sum_{k=1}^n g_k e_k(\cdot)\right\rangle_\nu
=
\sum_{j=1}^n
\sum_{k=1}^n
f_j g_k \braket{e_j(\cdot),e_k(\cdot)}_\nu\\
&=
\sum_{j=1}^n f_j g_j.
\end{align*}

We now define a new inner product\index{inner product}
\begin{equation}
\label{eqn.inner.prod.Kfd}
\braket{f(\cdot),g(\cdot)}_\kernel
=
\sum_{j=1}^n \frac{f_j g_j}{\lambda_j},
\end{equation}
where the $\{\lambda_j\}_{j=1}^n$, are exactly those from the definition of $\kernel$ and are the eigenvalues\index{eigenvalue} of the operator $T_\kernel$.

This inner product\index{inner product} may be rephrased in terms of a set of eigenfunctions\index{eigenfunction} which are orthonormal\index{orthonormal} with respect to $\braket{\cdot,\cdot}_\kernel$: $\{e'_j(\cdot)\}_{j=1}^n$ with $e'_j(\cdot)=\sqrt{\lambda_j}e_j(\cdot)$. With respect to this basis, the vector  
\[f(\cdot)=\sum_{j=1}^n f'_j e'_j(\cdot)\]
with $f'_j=f_j/\sqrt{\lambda_j}$. Using an analogous decomposition for $g(\cdot)$, 
\[
\braket{f(\cdot),g(\cdot)}_\kernel = \sum_{j=1}^n f'_j g'_j,
\]
as expected. Finally,
\[
\kernel(\bx,\by)=\sum_{j=1}^n e'_j(\bx) e'_j(\by).
\]
\begin{example}
    In Example \ref{example.choose.kernel}, $e_1'(\cdot)=\sin(\cdot)$ and $e_2'(\cdot)=\sqrt{3}\cos(\cdot)$. Clearly,
    \[
 \kernel(x,y)=\sin x \sin y + 3\cos x  \cos y = e_1'(x)e_1'(y)+e_2'(x)e_2'(y).
    \]
    For $f(\cdot)$ as in Example \ref{eqn.fd.f.example},
\[
f(\cdot)=
\frac{\sqrt{3}}{2}\sin(\cdot)+\frac{1}{2}\cos(\cdot)
=
\frac{\sqrt{3}}{2}\sin(\cdot)+\frac{\sqrt{3}}{6}\sqrt{3}\cos(\cdot),
\]
so $f'_1=\sqrt{3}/2$ and $f'_2=\sqrt{3}/{6}$. Thus
    \[ \|f(\cdot)\|^2_{\kernel}
    =
    \frac{3}{4}+\frac{3}{36}
    =
    \frac{5}{6}.
    \]  
\end{example}

\subsubsection{The Kernel Trick}\index{kernel trick}

From the definition \eqref{eqn.def.kx.fn} and with $f(\cdot)$ decomposed as in \eqref{eqn.decompose.fg},
\begin{align}
\nonumber
\braket{\kernel(\bx,\cdot),f(\cdot)}_\kernel
&=
\left\langle
\sum_{j=1}^n\lambda_je_j(\bx)e_j(\cdot),
\sum_{k=1}^n f_k e_k(\cdot)
\right\rangle_\kernel
=\sum_{j=1}^n
\frac{\lambda_j e_j(\bx) f_j}{\lambda_j}\\
&=f(\bx).
\label{eqn.kernelOne}
\end{align}
Moreover, choosing $f(\cdot)$ to be $\kernel(\by,\cdot)$,   $f_j=\lambda_j e_j(\by)$ from \eqref{eqn.def.kx.fn}, and, hence,
\begin{equation}
\label{eqn.kernelTwo}
\braket{\kernel(\bx,\cdot),\kernel(\by,\cdot)}_{\kernel}
=
\sum_{j=1}^n \lambda_j e_j(\bx) e_j(\by)
=
\kernel(\bx,\by).
\end{equation}
Together, \eqref{eqn.kernelOne} and \eqref{eqn.kernelTwo} enable the evaluation of inner products in $\braket{\cdot,\cdot }_\kernel$ \emph{without needing to know the original basis functions $e_1(\cdot),\dots,e_n(\cdot)$ nor the associated values $\lambda_1,\dots,\lambda_n$}. Indeed, we do not even need to know $\nu$. This is known as \emph{the kernel trick}\index{kernel trick}, and we will exemplify its use in Section \ref{sec.using.kernel.trick}. First, we generalise to a much broader class of kernels.

\subsection{General Kernels}
\label{sec.general.kernels}
In Section \ref{sec.kernels.from.fd} we created a kernel via \eqref{eqn.define.fd.kernel} using a known orthonormal\index{orthonormal} basis for the inner-product  space\index{inner product space} $\Vsp$, with the inner product\index{inner product} specifed by \eqref{eqn.def.inner.prod} according to the density $\nu$. However, a kernel is   \emph{any} positive-definite symmetric function and we are interested in kernels $\kernel: \cX\times \cX\to \mathbb{R}$.

\begin{example}
\label{example.GaussianKernel}
The Gaussian kernel\index{kernel (Gaussian)} is
\[
\kernel(\bx,\by)=\exp\left(-\|\by-\bx\|^2\right),
\]
where $\|\cdot\|$ represents the standard Euclidean norm. 
This is clearly symmetric. To see that $\kernel$ is also positive semidefinite on $\cX=\mathbb{R}^d$, note that
\[
\bZ\sim \mathsf{N}_d\left(\bx,\frac{1}{4}\bI_d\right)
~~~\mbox{and}~~~
\bY|\bZ\sim \mathsf{N}_d\left(\bZ,\frac{1}{4}\bI_d\right)
\implies 
\bY\sim \mathsf{N}_d\left(\bx,\frac{1}{2}\bI_d\right),
\]
from which
\[
\exp\left(-\|\by-\bx\|^2\right)
=
\gamma\int \exp\left(-2\|\by-\bz\|^2\right)\exp\left(-2\|\bx-\bz\|^2\right) \md \bz,
\]
where $\gamma=2^d/\pi^{d/2}$.
Hence $\sum_{j=1}^J\sum_{k=1}^J
c_j c_k \kernel(\bx_j,\bx_k)$ is 
\begin{align*}
\gamma\sum_{j=1}^J\sum_{k=1}^J&
c_j c_k 
\int \exp\left(-2\|\bx_j-\bz\|^2\right)\exp\left(-2\|\bx_k-\bz\|^2\right)\md \bz\\
&=
\gamma
\int
\sum_{j=1}^J\sum_{k=1}^J
c_j c_k 
\exp\left(-2\|\bx_j-\bz\|^2\right)\exp\left(-2\|\bx_k-\bz\|^2\right)\md \bz\\
&=
\gamma \int
\left\{
\sum_{j=1}^J
c_j\exp\left(-2\|\bx_j-\bz\|^2\right)
\right\}^2\md \bz\\
&\ge 0.
\end{align*}
\end{example}

When specifying $\kernel$ in Example \ref{example.GaussianKernel}, we have not specified a vector space\index{vector space}, nor a density $\nu$, nor an associated inner product\index{inner product}. However, since $\kernel$ is a kernel, we might hope that if we do specify $\nu$ and the inner product\index{inner product} $\braket{\cdot,\cdot}_\nu$ in \eqref{eqn.def.inner.prod}, then there might be a vector space\index{vector space} with a basis that is orthonormal\index{orthonormal} with respect to $\braket{\cdot,\cdot}_\nu$ such that $\kernel$ has the decomposition
\eqref{eqn.define.fd.kernel}. If this were the case then we would know that there was a new inner product\index{inner product} $\braket{\cdot,\cdot}_\kernel$ such that \eqref{eqn.kernelOne} and \eqref{eqn.kernelTwo}  held. Hence we could evaluate inner products with respect to $\kernel$ without knowing the basis itself nor the eigenvalues\index{eigenvalue} of $T_\kernel$, nor, even, the details about $\nu$. 

The decomposition in \eqref{eqn.def.inner.prod} does not hold in general, but Mercer's Theorem\index{Mercer's theorem} and generalisations of it tell us that an analogous decomposition but with $n$ potentially infinite holds widely.

Specifically, let $\cX$ be $\mathbb{R}^d$ or a closed or open subset of $\mathbb{R}^d$, $\kernel(\cdot,\cdot):\cX\times \cX\to \mathbb{R}$ be a kernel and $\nu(\bx), ~\bx \in \cX$, be a probability density on $\cX$. Then, provided $\kernel(\bx,\by)$ is a continuous function of $\bx$ and $\by$, and
\begin{equation}
\label{eqn.Sun.condition}
\int \kernel(\bx,\by)^2 \nu(\by) \; \md \by<\infty~~~\mbox{for all}~\bx \in \cX,
    \end{equation}
the linear operator $T_\kernel$ defined in \eqref{eqn.defineTkernel} has at most countably many positive (and no negative) eigenvalues\index{eigenvalue} $\lambda_1,\lambda_2,\dots$ with corresponding eigenfunctions\index{eigenfunction} $e_1(\cdot),e_2(\cdot),\dots$ which are orthonormal\index{orthonormal} with respect to the inner product\index{inner product} $\braket{\cdot,\cdot}_\nu$ defined in \eqref{eqn.def.inner.prod}. Furthermore, 
 $\kernel$ can be decomposed as
\[
\kernel(\bx,\by)=\sum_{j=1}^\infty \lambda_k e_j(\bx)e_j(\by)
\]
and the set $\{\sqrt{\lambda_j}e_j(\cdot)\}_{j=1}^\infty$ forms an orthonormal\index{orthonormal} basis with respect to the inner product\index{inner product}
\[
\braket{f(\cdot),g(\cdot)}_\kernel=\sum_{j=1}^\infty \frac{f_j g_j}{\lambda_j},
\]
where \begin{equation}  f(\cdot)=\sum_{j=1}^\infty f_j e_j(\cdot)~~~\mbox{and} ~~~g(\cdot)=\sum_{j=1}^\infty g_j e_j(\cdot).
\label{eqn.fg.full.decomp}
\end{equation}

The space in which $e_1(\cdot),e_2(\cdot),\dots$ lie is a generalisation of the vector space\index{vector space} of Example \ref{example.fd.squareInt} to the \emph{Hilbert space}\index{Hilbert space}, $\cH_\nu$, of functions $f(\cdot):\cX\to \mathbb{R}$ with the inner product\index{inner product} $\braket{\cdot,\cdot}_\nu$ and such that
$\|f(\cdot)\|_\nu^2=\int f(\bx)^2 \nu(\bx) \; \md \bx<\infty$. Likewise the orthonormal\index{orthonormal} basis $\{\sqrt{\lambda_j}e_j(\cdot)\}_{j=1}^\infty$ lies in the \emph{reproducing kernel Hilbert space}\index{reproducing kernel Hilbert space}, $\cH_{\kernel}$, of functions with the inner product\index{inner product} $\braket{\cdot,\cdot}_\kernel$ and such that $\|f(\cdot)\|_{\kernel}<\infty$. A \emph{Hilbert space}\index{Hilbert space} $\cH$ is an inner product space\index{inner product space} with a potentially infinite set of basis vectors that is \emph{complete}; informally, it contains no "holes", so that for any sequence $f_1,f_2,\dots$ with $\sum_{j=1}^\infty f_j^2<\infty$ then (considering $\cH_\kernel$, for example) $f(\cdot)=\lim_{n\to \infty} \sum_{j=1}^n f_j e'_j(\cdot)$ exists,  with distance measured through the norm induced by the inner product\index{inner product}, and is in $\cH_\kernel$.

Thus, the simplifications of the inner products\index{inner product} in  \eqref{eqn.kernelOne} and \eqref{eqn.kernelTwo} continue to hold; in general, the intermediate steps must replace $n$ with $\infty$.

\begin{example} 
For the Gaussian kernel\index{kernel (Gaussian)} of Example \ref{example.GaussianKernel}, $\kernel(\bx,\cdot)=\exp(-\|\bx-\cdot\|^2)$ and
\[
\left\langle
\exp(-\|\bx-\cdot\|^2)
,
\exp(-\|\by-\cdot\|^2)
\right\rangle_\kernel
=\exp(-\|\by-\bx\|^2).
\]
Also, for any $f(\cdot) \in \cH_\kernel$,
\[
\left\langle
\exp(-\|\bx-\cdot\|^2)
,
f(\cdot)
\right\rangle_\kernel
=f(\bx).
\]
\end{example}

\subsubsection{Trace-Class Kernels}\index{kernel (trace-class)}

A kernel where $\int \mathsf{k}(\bx,\bx)\nu(\bx) \; \md \bx=c < \infty$ is referred to as \emph{trace class}. This property has important consequences for the set of eigenvalues\index{eigenvalue}, $\lambda_1,\lambda_2,\dots$, of $T_\kernel$, since
 \begin{align*}
  \int \mathsf{k}(\bx,\bx)\nu(\bx) \;\md \bx
  &=
  \int \sum_{k=1}^\infty \lambda_k e_k(\bx)e_k(\bx)\nu(\bx) \; \md \bx\\
  &=
  \sum_{k=1}^\infty \lambda_k \int e_k(\bx)^2\nu(\bx) \; \md \bx
  =\sum_{k=1}^\infty \lambda_k.
\end{align*}
Thus $\sum_{k=1}^\infty \lambda_k=c$. Since each $\lambda_k\ge 0$, we have $\lim_{k\to \infty}\lambda_k=0$.

The Gaussian kernel\index{kernel (Gaussian)} of Example \ref{example.GaussianKernel} is trace class\index{kernel (trace-class)} with $c=1$ since $\kernel(\bx,\bx)=1$ for all $\bx \in \mathbb{R}^d$. The kernel we will meet in in Chapter \ref{chap:stein} is also of trace class, following a similar reasoning. 

Without loss of generality, we label the eigenvalues\index{eigenvalue} $\lambda_1,\lambda_2\dots$ in order of decreasing size (choosing any one of the possibilities if some of the $\lambda_j$ are not unique). 
With the decomposition of $f(\cdot)$ in \eqref{eqn.fg.full.decomp},
\[
\|f(\cdot)\|_{\kernel}^2
=
\sum_{j=1}^\infty \frac{f_j^2}{\lambda_j}
\ge
\frac{1}{\lambda_1}
\sum_{j=1}^\infty f_j^2
=\frac{1}{\lambda_1}\|f(\cdot)\|_{\nu}^2.
\]
Thus $\|f(\cdot)\|_{\kernel}<\infty\implies \|f(\cdot)\|_{\nu}<\infty$ and hence $
\cH_\kernel \subseteq \cH_\nu$. 
In general, $\cH_\kernel$ is strictly smaller than $\cH_\nu$ and the more quickly the eigenvalues\index{eigenvalue} of $T_\kernel$ decay the smaller the space $\cH_\kernel$.

\subsection{The Power of the Kernel Trick}
\label{sec.using.kernel.trick}

Suppose we have values $\bx_1,\dots,\bx_m\in \cX$ and we are interested in 
\[
\Vsp^*
=
\left\{g(\cdot):g(\cdot)=\sum_{j=1}^m g_j \kernel(\bx_j,\cdot), \; g_1,\dots,g_m\in \mathbb{R}\right\}.
\]
Firstly, for any $g(\cdot)=\sum_{j=1}^m g_j \kernel(\bx_j,\cdot)$,
\begin{align}
\nonumber
\|g(\cdot)\|^2_\kernel
&=
\left\langle
\sum_{j=1}^m g_j \kernel(\bx_j,\cdot),
\sum_{k=1}^m g_k \kernel(\bx_k,\cdot)
\right\rangle_\kernel
=
\sum_{j=1}^m\sum_{k=1}^m
g_j \braket{\kernel(\bx_j,\cdot),\kernel(\bx_k,\cdot}g_k\\
&=
\sum_{j=1}^m\sum_{k=1}^m
g_j\kernel(\bx_j,\bx_k)g_k<\infty.
\label{eqn.normInKernelRep}
\end{align}
So, $\Vsp^*\subseteq \cH_\kernel$. Secondly, for any $f(\cdot)\in \cH_\kernel$,
\begin{equation}
\label{eqn.dotInKernelRep}
\braket{f(\cdot),g(\cdot)}_\kernel
=\sum_{j=1}^m g_j\braket{f(\cdot),\kernel(\bx_j,\cdot)}_\kernel=\sum_{j=1}^m g_j f(\bx_j).
\end{equation}
 
Suppose there is a particular function of interest, $f(\cdot)\in \cH_\kernel$, and we would like to construct the function $g(\cdot)\in \Vsp^*$ that most closely resembles $f(\cdot)$ in shape. We could find the unit vector in $\Vsp^*$ which has the largest component in the $f(\cdot)$ direction:
\[
\widehat{g}(\cdot)=\argmax_{g(\cdot)\in \Vsp^*: \|g(\cdot)\|=1} \braket{f(\cdot),g(\cdot)}_{\kernel}.
\]
The size of the inner product\index{inner product}, $\braket{f(\cdot),\widehat{g}(\cdot)}_{\kernel}$, is a measure of the ability of $\Vsp^*$ to represent $f(\cdot)$.

Define $\bmf=[f(\bx_1),\dots,f(\bx_m)]^\top$ and $\bg=[g_1,\dots g_m]^\top$ and let $\bK$ be the matrix with elements $K_{i,j}=\kernel(\bx_i,\bx_j)$. Then \eqref{eqn.normInKernelRep} and \eqref{eqn.dotInKernelRep} become
\[
\braket{f(\cdot),g(\cdot)}_\kernel= \bg^\top \bmf
~~~\mbox{and}~~~
\|g(\cdot)\|_\kernel^2
=
\bg^\top \bK \bg.
\]
To find $\widehat{g}(\cdot)$ we must find the vector $\widehat{\bg}$ that maximises $\bg^\top \bmf$ subject to $\bg^\top \bK \bg=1$.

Let $\bA$ be a square matrix such that $\bA \bA^\top=\bK$ and set $\bh=\bA^\top \bg$. Then, equivalently, we wish to maximise $ \bh^{\top}\bA^{-1} \bmf$ such that $\|\bh\|=1$. We must find the unit $m$-vector with the largest component in the $\bA^{-1} \bmf$ direction, which is
\[
\widehat{\bh}=\frac{\bA^{-1}\bmf}
{\sqrt{\left(\bA^{-1}\bmf\right)^\top\bA^{-1} \bmf}}
\implies
\widehat{\bg}
=
\frac{\bA^{-\top}\bA^{-1}\bmf}
{\sqrt{\bmf^\top \bA^{-\top}\bA^{-1}\bmf}}
=
\frac{\bK^{-1}\bmf}
{\sqrt{\bmf^\top \bK^{-1} \bmf}},
\]
since $\bghat=\bA^{-\top} \widehat{\bh}$.
The inner product\index{inner product} $\widehat{\bg}^\top\bmf$ is
\[
\frac{\bmf^\top \bK^{-1}\bmf}
{\sqrt{\bmf^\top \bK^{-1} \bmf}}
=
\sqrt{\bmf^\top \bK^{-1} \bmf}
.
\]
This calculation \emph{only} requires us to be able to evaluate $f(\bx_j)$ and $\kernel(\bx_j,\bx_k)$ for $j,k=1,\dots,m$. We do not need to know the eigenfunctions\index{eigenfunction} $e_1(\cdot),\dots$ nor eigenvalues\index{eigenvalue} $\lambda_1,\dots$ of $T_\kernel$. Indeed, we do not even need to know $\nu$; only that \eqref{eqn.Sun.condition} is satisfied.

\begin{example}
\label{example.kernelAppr}
Let $\cX=\mathbb{R}$ and let $\kernel$ be the one-dimensional case of the Gaussian kernel\index{kernel (Gaussian)} in Example \ref{example.GaussianKernel}. We find the approximations to the function
\[
f(x)=\frac{1}{1+x^2},
\]
using gradually more and more kernel functions $\kernel(x_j,x)$. For  points, $x_1,\dots,x_J$, $\bK$ is the matrix with elements $K_{i,j}=\exp[-(x_i-x_j)^2]$, and $\bmf$ is the vector with $f_j=f(x_j)$. We set $(x_1,x_2,x_3,x_4,x_5,x_6,x_7)=(-3,\dots,3)$ and approximate $f(x)$ using just $x_1$ then $x_1,\dots,x_3$, then $x_1,\dots,x_5$ and finally $x_1,\dots,x_7$. Figure \ref{fig:ch1_kernelapprox} compares the four approximations with the truth. Each time new points are added to the set, the approximation improves, but it matters where the points are added; some basis vectors are more helpful than others. 
\end{example}

\begin{figure}
    \centering
    \includegraphics[width=0.7\textwidth]{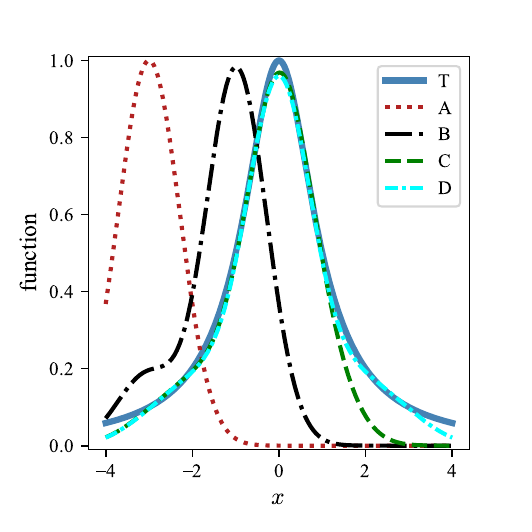}
    \caption{The function $f(x)=1/(1+x^2)$ (T) and kernel-based approximations to $f(x)$ from Example \ref{example.kernelAppr}. Curves use A: $x=-3$, B: $x=-3,-2,-1$, C: $x=-3,\dots,1$ and D: $x=-3,\dots,3$. }
\label{fig:ch1_kernelapprox}
\end{figure}

\section{Chapter Notes}

There are many texts which cover the introductory material from this chapter in more depth and rigour than we have allowed; we suggest a few on each topic. 

Basic Monte Carlo and importance sampling\index{importance sampling} is covered in \cite{ripley2009stochastic} and \cite{rubinstein2008}. For an introduction to Bayesian statistics and the use of Monte Carlo methods for Bayesian analysis, see \cite{bernardo2009bayesian}, \cite{robert2007bayesian} and \cite{robert1999monte}. 

\cite{norris1998markov} provides a gentle introduction to Markov chains on discrete state spaces, while \cite{meyn2012markov} gives a thorough treatment on general state spaces; a less thorough but more readily accessible treatment for general state spaces is given in  \cite{roberts2004general}. \cite{Geyer1992} describes methods for estimating the integrated auto-correlation time\index{integrated auto-correlation time} from a sample of the chain when the Markov chain is reversible; for the non-reversible chains of Chapter \ref{chap:non-reversible} the integrated auto-correlation can be estimated by fitting an auto-regressive process to the time series $\{h(X_k)\}_{k=1}^n$ or by estimating the spectral density of the series at a frequency of $0$ \cite[e.g.][]{HeidWelch1981}.  

Stochastic differential equations and diffusions are the subject of \cite{oksendal2013stochastic}, \cite{rogers2000diffusions} and \cite{rogers2000diffusions2}.  An alternative to simple Monte Carlo, which attempts to obtain better convergence rates with the Monte Carlo sample size $n$, is quasi-Monte Carlo. See, for example, \cite{caflisch1998monte} for an introduction and \cite{l2002recent} for work on randomised quasi-Monte Carlo.

Chapter 1 of \cite{Conway2010} introduces Hilbert spaces in general, and kernels and reproducing kernel Hilbert spaces are covered in Chapter 6 of \cite{RasmussenWilliams2005}. Mercer's Theorem is usually stated for a compact $\cX$; we have used the generalisation to non-compact spaces in \cite{Sun2005}.


\chapter{Reversible MCMC and its Scaling}\index{reversible}
\label{chap:intro-mcmc}

Building on the introductions to Bayesian statistics, Monte Carlo methods and Markov chains in Chapter \ref{chap:background}, this section introduces Markov chain Monte Carlo\index{Markov chain Monte Carlo} algorithms as a generic computational solution to the challenge of using Monte Carlo methods to sample from the posterior distribution\index{posterior distribution} and, hence, estimate posterior expectations of quantities of interest. 

As described in Chapter \ref{chap:background}, if it is possible to sample directly from the posterior, $\pi(\btheta):=\pi(\btheta|\data)$ (see equation \eqref{eqn.postispriortimeslike}) then for any function $h$ with $\Expects{\pi}{h^2(\btheta)}<\infty$, it is possible to estimate $\Expects{\pi}{h(\btheta)}$ via the Monte Carlo average \eqref{eqn.gen.MCaverage.pi}, the typical error of which is of size $n^{-1/2}$, where $n$ is the number of samples. Unfortunately, it is usually not possible, or is computationally infeasible, to generate independent and identically distributed samples from $\pi$. Importance sampling\index{importance sampling} provides, perhaps, the most natural alternative to direct sampling; however, as exemplified in Section \ref{sec.MCBayes} the variance of importance sampling\index{importance sampling} estimators typically degrades exponentially quickly with dimension.

\emph{Markov chain Monte Carlo}\index{Markov chain Monte Carlo} (MCMC) is a generalisation of the Monte Carlo method that, as we will see, has several favourable properties when it comes to facilitating computation in Bayesian statistics problems. In this context, 
the aim of MCMC is to construct a Markov chain, $\{\btheta_k\}_{k=1}^\infty$ whose limiting distribution is the posterior distribution\index{posterior distribution} of interest, so that samples from a sufficiently long chain, except, perhaps, those near the beginning, arise approximately from the posterior and can be used to create Monte Carlo approximations to expectations as in \eqref{eqn.gen.MCaverage.pi}, via the ergodic average defined in \eqref{eqn.SLLNMC}. 

The workhorse of MCMC is the Metropolis--Hastings\index{Metropolis--Hastings} algorithm.  
We describe the general Metropolis--Hastings\index{Metropolis--Hastings} algorithm and show that the resulting chain satisfies detailed balance\index{detailed balance} with respect to $\pi$. We then investigate particular special cases: the independence sampler\index{independence sampler}, the random walk Metropolis\index{random walk Metropolis} algorithm, the Metropolis-adjusted Langevin algorithm, and Hamiltonian Monte Carlo. For each of these cases, we overview the behaviour as the dimension $d\to \infty$, motivating the need for further scalable methods.

Throughout this chapter, we denote the support of $\pi$ by $\Theta$; for example $\Theta$ might be $\mathbb{R}^d$ for some $d\in \mathbb{N}$.


\section{The Metropolis--Hastings Algorithm}
\label{sec.MHgen}

The idea of the \emph{Metropolis--Hastings}\index{Metropolis--Hastings} algorithm is to define the dynamics of a Markov chain\index{Markov chain!discrete-time} by specifying an arbitrary proposal distribution for the next state of the Markov chain, and then having a mechanism where this proposal is either accepted or rejected. If it is rejected, the state of the Markov chain is unchanged. As we will see, it is generally possible to choose the acceptance probability\index{acceptance probability} to depend on the target distribution, so that the resulting Markov chain will have the target distribution as its stationary distribution\index{stationary distribution}.

The Metropolis--Hastings\index{Metropolis--Hastings} algorithm is given in Algorithm \ref{alg:MH}. The posterior density, $\pi$, appears in both the numerator and denominator of the acceptance probability\index{acceptance probability}, $\alpha(\btheta_k,\btheta')$, so terms involving the typically intractable density $\pi(\btheta)$ can be replaced with the tractable product of the prior and the likelihood, $\prior{\btheta}\likelihood{\btheta}{\data}$; we do not need to know the normalising constant, $p(\data)=\int_\Theta \prior{\btheta}\likelihood{\btheta}{\data}\mbox{d}\btheta$, as it will cancel in the ratio.

As well as $\pi(\btheta)$, and an initial value for the parameter vector, the Metropolis--Hastings\index{Metropolis--Hastings} algorithm requires a proposal density, $q(\btheta'|\btheta)$.
Common choices of the density $q$ include the following:

\begin{description}
\item[Metropolis--Hastings independence sampler (MHIS)] $q(\btheta'\mid \btheta):=q'(\btheta')$ for some density $q'$. The proposal does not depend on the current state; for example, $q'$ could be the same as a sensible importance sampling\index{importance sampling} proposal\index{proposal} distribution (see Section \ref{sec.MCBayes}).
\item[Random walk Metropolis (RWM)]\index{random walk Metropolis} $q(\btheta'\mid \btheta)=q'(\btheta'-\btheta)$, where $q'$ is a density such that for any vector $\bv\in \Theta$, $q'(\bv)=q'(-\bv)$. For example $\btheta'| \btheta\sim \mathsf{N}(\btheta,\lambda^2 \bI_d)$, where $\bI_d$ is the $d\times d$ identity matrix and $\lambda>0$.
\item[Metropolis-adjusted Langevin algorithm (MALA)]\index{Metropolis--adjusted Langevin algorithm} adds a specific form of offset to a Gaussian RWM proposal\index{proposal}. For example, 
  $\btheta'|\btheta\sim \mathsf{N}(\btheta+\frac{1}{2}\lambda^2\nabla \log \pi(\btheta),\lambda^2 \bI_d)$.
  \item[Hamiltonian Monte Carlo (HMC)] starting from $\btheta$ and with a random momentum, such as $\bp\sim \mathsf{N}(\bzero,M\bI_d)$, Hamiltonian dynamics are approximately integrated forwards on a potential surface of $-\log \pi(\btheta)$. The proposal, $\btheta'$, is the position after some fixed time $T$. 
\end{description}

Later in this chapter we describe and investigate these classes of proposals in more detail and examine their relative efficiencies. 

\begin{algorithm}
    \caption{Metropolis--Hastings algorithm}
    \KwIn{Density $\pi(\btheta)$, initial value $\btheta_0$ and proposal density $q(\btheta'|\btheta)$.}
    \For{$k \in 0, \dots, n-1$} {
        Propose $\btheta'$ from $q(\btheta'|\btheta_k)$ \\
        Calculate the acceptance probability\index{acceptance probability}:
        \begin{equation}\label{eqn.MHaccProb}\alpha(\btheta_k,\btheta'):=\min\left(1,\frac{\pi(\btheta')q(\btheta_k|\btheta')}{\pi(\btheta_k)q(\btheta'|\btheta_k)}\right).\end{equation}\\
        With a probability of $\alpha(\btheta_k,\btheta')$ accept the proposal, $\btheta_{k+1}\gets\btheta'$; otherwise reject it, $\btheta_{k+1}\gets \btheta_k$.    }
    \label{alg:MH}
\end{algorithm}

That the Metropolis--Hastings\index{Metropolis--Hastings} algorithm has a stationary density of $\pi$, follows directly from the fact that it is reversible\index{reversible} with respect to $\pi$ (see Section \ref{sec:ch1-MC}). We now show that this is the case. First, notice that
\[
\pi(\btheta)q(\btheta'| \btheta)\alpha(\btheta,\btheta')
=
\pi(\btheta')q(\btheta|\btheta')\alpha(\btheta',\btheta),
\]
since both are $\min\left(\pi(\btheta)q(\btheta'| \btheta),\pi(\btheta')q(\btheta| \btheta')\right)$. Now suppose that $\btheta_k\sim \pi$, let $\cB,\cC\subseteq \Theta$ and let $\sA$ be the event that the proposal is accepted. Then
\begin{align*}
  \Prob{\sA,\btheta_k\in \cB, \btheta_{k+1}\in \cC}
  &=
  \int_{\btheta_k\in \cB} \int_{\btheta'\in \cC} \pi(\btheta_k)q(\btheta'\mid \btheta_k)\alpha(\btheta_k,\btheta')~\md \btheta' \md \btheta_k\\
  &=\int_{\btheta_k\in \cB} \int_{\btheta'\in \cC}\pi(\btheta')q(\btheta_k\mid \btheta')\alpha(\btheta',\btheta_k)~\md \btheta_k \md \btheta'\\
  &=\int_{\btheta'\in \cB}\int_{\btheta_k\in\cC} \pi(\btheta_k)q(\btheta'\mid \btheta_k)\alpha(\btheta_k,\btheta')~\md \btheta_k \md \btheta'\\
  &=
  \Prob{\sA,\btheta_k\in \cC, \btheta_{k+1}\in \cB}
\end{align*}
where, on the penultimate line we have switched the labels. Since $\btheta_k=\btheta_{k+1}$ on a rejection, we also have that
\[\Prob{\sA^\complement,\btheta_k\in \cB,\btheta_{k+1}\in \cC}
  =
  \Prob{\sA^\complement, \btheta_k\in \cC, \btheta_{k+1}\in \cB}.\]
Summing the two equalities above for $\sA$ and $\sA^\complement$ gives
\[
\Prob{\btheta_k\in\cB,\btheta_{k+1}\in\cC}=\Prob{\btheta_k\in\cC,\btheta_{k+1}\in\cB},\]
as required.

\subsubsection{Burn-In, Mixing, Estimators and their Variance}

Typically, the initial value for the Markov chain is not sampled from $\pi$, since if it were possible to do this then there would be no need for MCMC. Hence, the marginal distributions of early points in the Markov chain might not be sufficiently close to $\pi$. In practice, we discard such early points from the sample; the terms \emph{warm-up}\index{warm-up} or \emph{burn-in}\index{burn-in} are applied to both this initial period and the samples that arise from it. Here, we imagine that there are $b$ burn-in samples, $\btheta_{-b+1},\dots\btheta_0$ and that the remaining samples, which are deemed to be from a chain that has approximately converged, are $\btheta_1,\dots,\btheta_n$. The expectation of any function $h(\btheta)$ with respect to the posterior is then estimated via:
\begin{equation}
\label{eqn.ergodic.av.posterior}
\widehat{I}_n(h):=\frac{1}{n}\sum_{k=1}^nh(\btheta_k).
\end{equation}
Following the exposition in Section \ref{sec:ch1-IACFESS}, subject to conditions, including that $\Expects{\pi}{h(\btheta)^2}<\infty$, the variance of this estimator may be approximated as in  \eqref{eqn.VarIhat.IACT}; this is an approximation both because the Markov chain has not fully converged after the $b$ burn-in\index{burn-in} iterations, and because $n$ is finite. As with standard Monte Carlo estimates, the standard error decreases as $n^{-1/2}$; however, the constant of proportionality is (typically) higher, reflecting the fact that the samples are (typically) positively correlated.  

In most MCMC algorithms, the positive correlation arises from two separate sources: firstly, a proposal may be rejected, in which case the new position of the chain is the same as the old position; secondly most types of Metropolis-Hastings\index{Metropolis--Hastings} algorithm are \emph{local}: the proposal is, in some sense, close to the current value when compared to the size of the main posterior mass. Consequently, the chain can take many iterations to move from one part of the posterior to another. The act of moving around the posterior is termed \emph{mixing}\index{mixing} and in this book we informally refer to the number of iterations taken to move substantially within the context of the posterior distribution\index{posterior distribution} as the \emph{mixing time}. 

\subsubsection{Running Example}
The following running example of an isotropic Gaussian target distribution serves to demonstrate some properties of the Metropolis--Hastings\index{Metropolis--Hastings} algorithm in practice. In the next section, it will be used to illustrate the different measures of efficiency of a Metropolis--Hastings\index{Metropolis--Hastings} Markov chain algorithm.

\begin{example}
  \label{example.RWMa}
Given $d\in \mathbb{N}$, and $\btheta=(\theta_1,\dots,\theta_d)^\top$, let 
\[
\pi_d(\btheta)=
\mathsf{N}(\btheta;0,\bI_d)
\equiv
\frac{1}{(2\pi)^{d/2}}\exp\left(-\frac{1}{2}\sum_{i=1}^d\theta_i^2\right)\equiv
\frac{1}{(2\pi)^{d/2}}\exp\left(-\frac{1}{2}\|\btheta\|^2\right),
\]
where $\|\cdot\|$ denotes the Euclidean norm, and $\bI_d$ the $d\times d$ identity matrix.
\end{example}

For now, we explore the above target using the RWM algorithm described above: 
\[
q(\btheta'|\btheta)=\mathsf{N}\left(\btheta';\btheta,\lambda^2 \bI_d\right)
\equiv
\frac{1}{(2\pi)^{d/2}\lambda^d}\exp\left(-\frac{1}{2\lambda^2}\|\btheta'-\btheta\|^2\right).
\]  
 Figure \ref{fig.RWMa} shows plots from $n=1000$ iterations of the algorithm in Example \ref{example.RWMa} with $d=1$. The top plot starts from $\btheta_0=20$ while the other two start from $\btheta_0=1$. More than $99.7\%$ of the posterior mass lies between $\btheta=-3$ and $\btheta=3$, and so the chain that was started outside of this region first heads towards the main mass. Once it has arrived, it then explores the region for the remainder of the time, $n$. 
 The exploration is slow because the scale of the proposed jumps, $\lambda=0.2$, is small compared with the size of the region. With larger proposed jumps, $\lambda=2$, the exploration is much more rapid. However, with jumps of size $\lambda=20$, most of the proposals are outside of the high-density region, so the acceptance ratio\index{acceptance ratio} is small and the proposals are rejected. Thus, even though the proposed jumps are large, the algorithm does not explore the range of posterior values quickly. 

\begin{figure}[h]
\centering
\includegraphics[width=\textwidth]{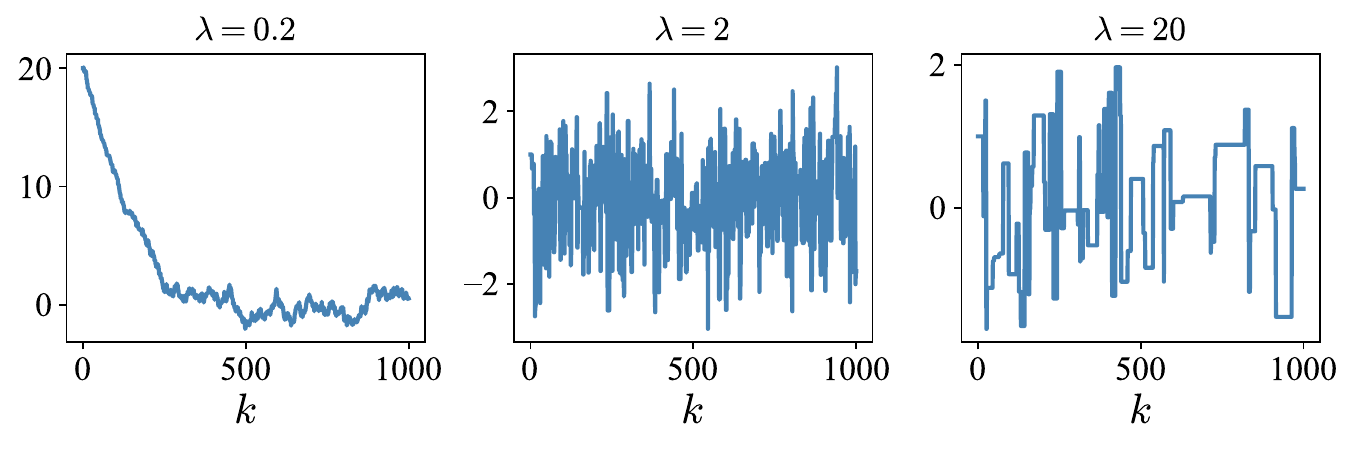}
 \caption{Trace plots from three RWM runs on $\pi_d$ from Example \ref{example.RWMa} with $d=1$, using initial values of $20$, $1$ and $1$, and scalings\index{scaling} of $0.2$, $2$ and $20$ respectively. 
 \label{fig.RWMa}}
\end{figure}

Theory tells us that the distribution of $\btheta_k$ converges to $\pi$ as $k\to \infty$. For a finite $k$, $\btheta_k$ will not be an exact draw from $\pi$ but it might be close. In Figure \ref{fig.RWMa} (left) we might deem the distribution sufficiently close after approximately $300$ iterations and so we might discard $\btheta_0,\dots,\btheta_{299}$ as \emph{burn-in}\index{burn-in} and take $\{\btheta_{300}, \dots,\btheta_{1000}\}$ to be an approximate, correlated, sample from $\pi$ for use in a Monte Carlo average, $\muhat_h$, of the form \eqref{eqn.gen.MCaverage.pi}. The runs illustrated in Figure \ref{fig.RWMa} (middle and right) started from a sensible value in the posterior and so $\{\btheta_1,\dots,\btheta_{1000}\}$ might reasonably be used. 

Let us now ignore the need for burn-in\index{burn-in} when $\btheta_0=20$ and $\lambda=0.2$, or consider a thought experiment where this algorithm was also started from $\btheta_0=1$. From Figure \ref{fig.RWMaHist}, the sample obtained when $\lambda=2$ appears to represent $\pi$, which is symmetric about a single mode at $0$ and has support beyond $\pm 2$, much better than either of the other two samples. Thus, we might expect estimates, $\muhat_h$, obtained from the algorithm when $\lambda=2$ to be, in some sense, more accurate.

\begin{figure}[h]
\centering
\includegraphics[width=\textwidth]{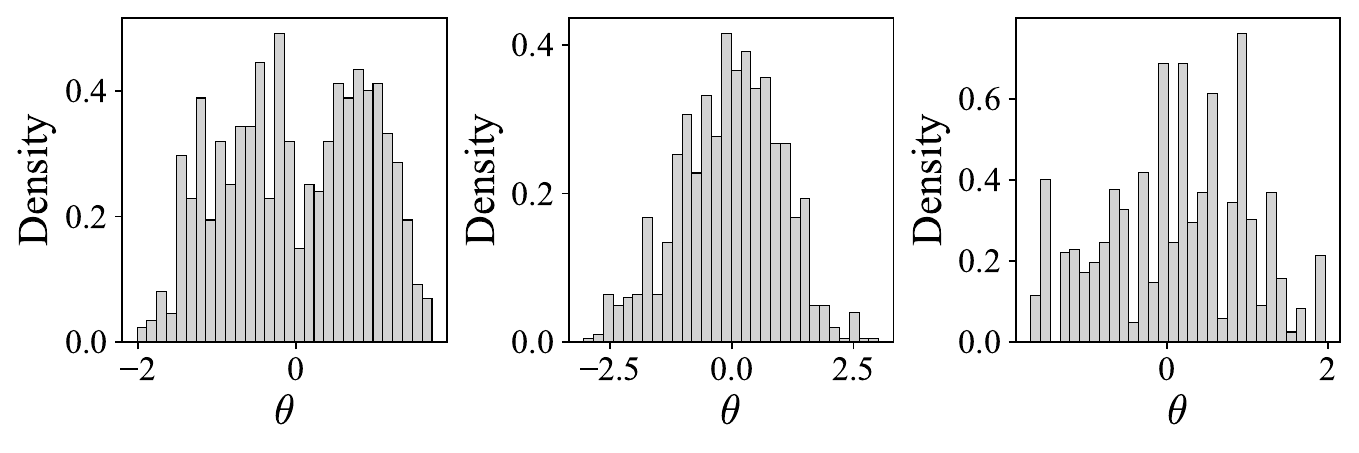}
 \caption{Histograms of the samples obtained from the algorithms in Figure \ref{fig.RWMa}. The left plot (corresponding to $\lambda=0.2$, $\btheta_0=20$) was created after discarding $\{\btheta_{0},\dots,\btheta_{299}\}$ as burn-in\index{burn-in}; for the other two runs (centre: $\lambda=2$; right: $\lambda=20$) only $\btheta_0$ was discarded. 
  \label{fig.RWMaHist}}
\end{figure}

The \emph{empirical acceptance rate} for a Metropolis--Hastings\index{Metropolis--Hastings} algorithm is the fraction of the $n$ proposals that were accepted. 
For the three RWM algorithms, these were respectively $0.881$, $0.485$ and $0.059$; the smaller the proposed jumps, the closer $\pi(\btheta')$ typically is to $\pi(\btheta_k)$ and so the higher the acceptance rate\index{acceptance rate}. For a stationary Metropolis--Hastings\index{Metropolis--Hastings} Markov chain, the empirical acceptance rate approximates the true acceptance rate at stationarity:
\[
\alpha=\Expects{\btheta\sim \pi,\btheta'\sim q(\cdot|\btheta)}{\alpha(\btheta,\btheta')}
\]
Aspects of the proposal\index{proposal} for the RWM, MALA and HMC are often tuned by targeting an empirical acceptance rate\index{acceptance rate} that is neither too high nor too low. In later sections, the acceptance rate will provide us with an intuitive entrance into the behaviour of the canonical Metropolis--Hastings\index{Metropolis--Hastings} algorithms as the dimension of the parameter vector increases. Here it will be helpful to define the \emph{acceptance ratio}\index{acceptance ratio}: $\rho(\btheta,\btheta'):=\pi(\btheta')q(\btheta|\btheta')/\{\pi(\btheta)q(\btheta'|\btheta)\}$, so that $\alpha(\btheta,\btheta')=\min[1,\rho(\btheta,\btheta')]$.

  \subsection{Component-wise updates and Gibbs moves} \label{ch2:sec:Gibbs}
Algorithm \ref{alg:MH}, the Metropolis--Hastings\index{Metropolis--Hastings} algorithm, and all of the special cases that we will examine in this chapter, are written so that a single iteration consists of a proposal\index{proposal} to change the entire $\btheta$ vector and a decision on whether or not to accept this proposal\index{proposal}. However, it is also possible, and sometimes helpful, to sequentially update subsets of the components of $\btheta$. Indeed, each iteration of the very first Metropolis--Hastings\index{Metropolis--Hastings} algorithm \cite[]{metropolis1953equation} cycled through pairs of components ($x$ and $y$ coordinates of each particle in a lattice of a large number of particles), applying a random walk Metropolis\index{random walk Metropolis} update one pair at a time.

Denote the set of components to be updated by $\btheta^{(i)}$ and the remaining components by $\btheta^{(-i)}$. By writing the target as $\pi(\btheta)=\pi(\btheta^{(-i)})\pi(\btheta^{(i)}|\btheta^{(-i)})$, and the proposal\index{proposal} as $q_i(\btheta^{'(i)}|\btheta^{(i)},\btheta^{(-i)})$,  essentially the same argument as for a proposal\index{proposal} that changes all components shows that the component-wise\index{component-wise} propose/accept-reject step with an acceptance probability\index{acceptance probability} of
\[
\min\left(1,\frac
{\pi(\btheta^{'(i)}|\btheta^{(-i)})q_i(\btheta^{(i)}|\btheta^{'(i)},\btheta^{(-i)})}
{\pi(\btheta^{(i)}|\btheta^{(-i)})q_i(\btheta^{'(i)}|\btheta^{(i)},\btheta^{(-i)})}\right)
\]
satisfies detailed balance\index{detailed balance} with respect to the full posterior, $\pi$. Of course, if only that move were used, some components would never be updated, the algorithm would be reducible\index{reducible}, and ergodic averages would, therefore, not converge to the corresponding true expectations. 
The composition of many such moves over different components, typically, does not satisfy this detailed balance\index{detailed balance} condition, but, since each move preserves $\pi$, the composition does, too. 

In the special case where the proposal\index{proposal} for the $i$th block of components is 
\[
q_i(\btheta^{'(i)}|\btheta_k)
:=  
\pi(\btheta^{'(i)}|\btheta_k^{(-i)}),
\]
the acceptance probability\index{acceptance probability} is $1$ and the move is called a \emph{Gibbs} move\index{Gibbs move}.
 Such a move is only feasible when it is possible to sample from $\pi(\btheta^{(i)}|\btheta_k^{(-i)})$ which most usually occurs when, conditional on $\btheta_k^{(-i)}$, the prior for $\btheta^{(i)}$ is conjugate with its likelihood. For example, when $y_1,\dots,y_N$ are independent realisations from a $\mathsf{N}(\mu,1/\tau)$ distribution and $\mu$ and $\tau$ have independent priors with $\mu$ following a Student-t distribution and $\tau\sim \mathsf{Gam}(a,b)$, then \emph{a posteriori} $\tau|\mu\sim \mathsf{Gam}(a+n/2,b+\frac{1}{2}\sum_{j=1}^n(y_i-\mu)^2)$; this property is sometimes called \emph{conditional conjugacy}. Gibbs moves\index{Gibbs move} offer a further advantage when compared with many other Metropolis-Hastings\index{Metropolis--Hastings} moves, over and above the fact that the acceptance probability\index{acceptance probability} is $1$: the updated parameter component is sampled from the full range of its conditional posterior. When components are close to independent, this contrasts with algorithms such as the random walk Metropolis\index{random walk Metropolis} and MALA, where, in moderate to high dimensions, the moves are local -- the proposed value is close to the current value.  However, when, as is typically the case, components are correlated, the conditional posterior for a component can have a much smaller range than its marginal posterior, and so the Gibbs moves\index{Gibbs move}, too, are, in effect, local. Whilst they are a useful tool in the MCMC armoury, Gibbs moves\index{Gibbs move} are not the focus of this book and for further information, we refer the interested reader to the general texts cited in Section \ref{sec.Ch2.ChapNotes}.

\subsection{The Metropolis--Hastings Independence Sampler}

Consider the \emph{Metropolis--Hastings independence sampler}\index{independence sampler} (MHIS) in the case where $q(\btheta)=\pi(\btheta)$. In this case $\alpha(\btheta_k,\btheta')=1$ and every proposal\index{proposal} is accepted. Since proposals are from $\pi$, the MHIS provides us with an i.i.d. sample from $\pi$. Of course, in practice, we are typically not able to sample from $\pi$, but this suggests a heuristic for the MHIS: choose a proposal so that the acceptance rate\index{acceptance rate} is as close as possible to $1$.

For unimodal posteriors in low dimensions a reasonable approximation can often be obtained by first using a numerical method to find the posterior mode and then choosing a proposal\index{proposal} that matches the mode and the curvature of the log-posterior at this point. 
This strategy does not scale favourably to high dimensions, however, as the following simple example shows. 

Consider the isotropic unit Gaussian posterior from Example \ref{example.RWMa} and  use the following MHIS proposal\index{proposal}:
\[
q(\btheta'|\btheta)=\mathsf{N}(\bzero,\sigma^2 \bI_d)
\equiv
\frac{1}{(2\pi)^{d/2}\sigma^d}\exp\left(-\frac{1}{2\sigma^2}\|\btheta'\|^2\right).
\]
The acceptance ratio\index{acceptance ratio}, $\rho(\btheta,\btheta')$, is
\[
\frac{\exp(-\frac{1}{2}\|\btheta'\|^2)\exp(-\frac{1}{2\sigma^2}\|\btheta\|^2)}
{\exp(-\frac{1}{2}\|\btheta\|^2)\exp(-\frac{1}{2\sigma^2}\|\btheta'\|^2)}
=
\exp\left\{\frac{1}{2}\left(1-\frac{1}{\sigma^2}\right)\left(\|\btheta\|^2-\|\btheta'\|^2\right)\right\},
\]
so
\[
\frac{1}{d}\log \rho(\btheta,\btheta')
=
\frac{1}{2}\left(1-\frac{1}{\sigma^2}\right)\left(\frac{1}{d}\|\btheta\|^2-\frac{1}{d}\|\btheta'\|^2\right).
\]
If the chain is stationary, then $\|\btheta\|^2=\sum_{i=1}^d\theta_i^2\sim \chi^2_d$ since $\theta_i\stackrel{iid}{\sim}\mathsf{N}(0,1)$. Thus $\Expect{\|\btheta\|^2/d}=1$ and $\Var{\|\btheta\|^2/d}=2/d$, and the same properties hold for $\|\btheta'/\sigma\|^2/d$. Thus, in high dimensions, to a first-order approximation, 
$\|\btheta\|^2/d\approx 1$ and $\|\btheta'\|^2/d\approx \sigma^2$ and
\[
\frac{1}{d}\log \rho(\btheta,\btheta')\approx
\frac{1}{2}\left(1-\frac{1}{\sigma^2}\right)\left(1-\sigma^2\right).
\]
This gives a first-order approximation to the acceptance rate\index{acceptance rate} of
\[
\min\left(1,\exp\left\{-\frac{d}{2\sigma^2}\left(\sigma^2-1\right)^2\right\}\right),
  \]
  which grows exponentially small with dimension unless $\sigma=1$. Alternatively, stabilising the acceptance rate above zero requires  $\sigma^2= 1 + O(1/\sqrt{d})$; the approximation must become more and more accurate as $d\to \infty$. 

  The exponential decrease in acceptance rate\index{acceptance rate} with dimension is closely linked with the exponential increase in the variance of the weights with dimension in the importance sampling\index{importance sampling} example at the end of Section \ref{sec.MCBayes}. 
  In high dimensions, a sufficiently accurate and tractable approximation, $q$, is rarely available; consequently importance sampling\index{importance sampling} and MHIS are rarely used, except in relatively simple, low-dimensional scenarios.

  \subsection{The Random Walk Metropolis Algorithm}
  \label{sec.rwm}

The \emph{random walk Metropolis}\index{random walk Metropolis} (RWM) algorithm was the first MCMC algorithm to ever be used. Unlike the independence sampler, it does not require an accurate global approximation to the posterior and can be tuned so that it works even in very high dimensions. Furthermore, unlike the algorithms that we shall explore subsequently, it does not require the gradient of the log posterior. 

The most frequently used RWM proposal\index{proposal}, the so-called \emph{preconditioned}\index{preconditioned} RWM, has the form $\btheta'|\btheta=\btheta+\Normal(\bzero,\lambda^2 \bV)$, where $\bV$ is an estimate of the posterior variance matrix and $\lambda$ is a tunable scaling\index{scaling} parameter. This enhancement can increase the efficiency of the algorithm by many orders of magnitude when the components of $\btheta$ are highly correlated and/or vary on very different length scales. However, whatever the proposal\index{proposal}, the RWM constraint, that $q(\btheta'|\btheta)=q(\btheta|\btheta')$, means that the acceptance probability\index{acceptance probability} simplifies to $\min\{1,\pi(\btheta')/\pi(\btheta)\}$.

\subsubsection{Scaling of RWM with Dimension}\index{scaling}

We again consider the isotropic Gaussian posterior of Example \ref{example.RWMa} and show how the RWM algorithm using a $\mathsf{N}(\btheta,\lambda^2 \bI_d)$ proposal\index{proposal} can be made to work no matter what the dimension.

Write the proposal\index{proposal} as $\btheta'=\btheta+\lambda \bZ$, where $\bZ\sim \mathsf{N}(\bzero,\bI_d)$. The log acceptance ratio\index{acceptance ratio} is then
  \begin{align}
  \nonumber
\log \rho(\btheta,\btheta')&=  -\frac{1}{2}\|\btheta+\lambda \bZ\|^2+\frac{1}{2}\|\btheta\|^2
  =
-\lambda \|\btheta\|~\bthetahat\cdot \bZ-\frac{1}{2}\lambda^2\|\bZ\|^2\\
\label{eqn.RWMlognormal}
  &\stackrel{\mathsf{D}}{=}
  -\lambda \|\btheta\|~Z'-\frac{1}{2}\lambda^2\|\bZ\|^2,  
  \end{align}
  where $Z'\sim \mathsf{N}(0,1)$ and $\bthetahat=\btheta/\|\btheta\|$.
  Now $\|\bZ\|^2\sim \chi^2_d$ and $\|\btheta\|^2\sim \chi^2_d$, and by the same argument as used for the MHIS, we might make a first approximation of $\|\bZ\|^2\approx d$ and $\|\btheta\|\approx \sqrt{d}$, from which it appears that the acceptance ratio\index{acceptance ratio} must decay exponentially quickly with dimension. However, this need not be the case, since we can control the scaling\index{scaling}, $\lambda$. The fact that $\|Z\|^2/d\approx 1$ and $\|\btheta\|/\sqrt{d}\approx 1$ suggests setting
  \begin{equation}
    \label{eq.RWMscaling}
\lambda=\frac{\ell}{\sqrt{d}}
\end{equation}
for some fixed $\ell>0$. In this case 
\begin{align*}
\alpha(\btheta,\btheta')
&\stackrel{\mathsf{D}}{=}
\min\left[1,\exp\left\{-\ell Z'~\frac{\|\btheta\|}{\sqrt{d}}-\frac{1}{2}\ell^2\frac{1}{d}\|\bZ\|^2\right\}\right]\\
&\approx
\min\left[1,\exp\left\{-\ell Z'-\frac{1}{2}\ell^2\right\}\right].
\end{align*}
This quantity is stable away from zero and does not depend on dimension. Taking expectations, elementary calculus gives
\[
\Expects{\btheta\sim \pi,\btheta'\sim q(\cdot|\btheta)}{\alpha(\btheta,\btheta')}\approx
2\Phi\left(-\frac{1}{2}\ell\right),
\]
where $\Phi$ is the cumulative distribution function of a standard normal random variable. 
This equation describes how, for a high-dimensional Gaussian target,  the acceptance rate\index{acceptance rate} decreases as the (dimensionally-adjusted) scaling\index{scaling}, $\ell$, increases. 

Indeed, much more is true. Figure \ref{fig.RWMb} shows trace plots for $\theta_1$, the first component of $\btheta$, when $d=50$ and when $d=500$ and using a scaling\index{scaling} of $\lambda=\ell/\sqrt{d}$ with $\ell=2$. The behaviours of the trace plots appear similar, except that when $d=500$ the time scale over which the process explores the posterior is ten times that when $d=50$.

\begin{figure}[ht]
\centering
 \includegraphics[width=\textwidth]{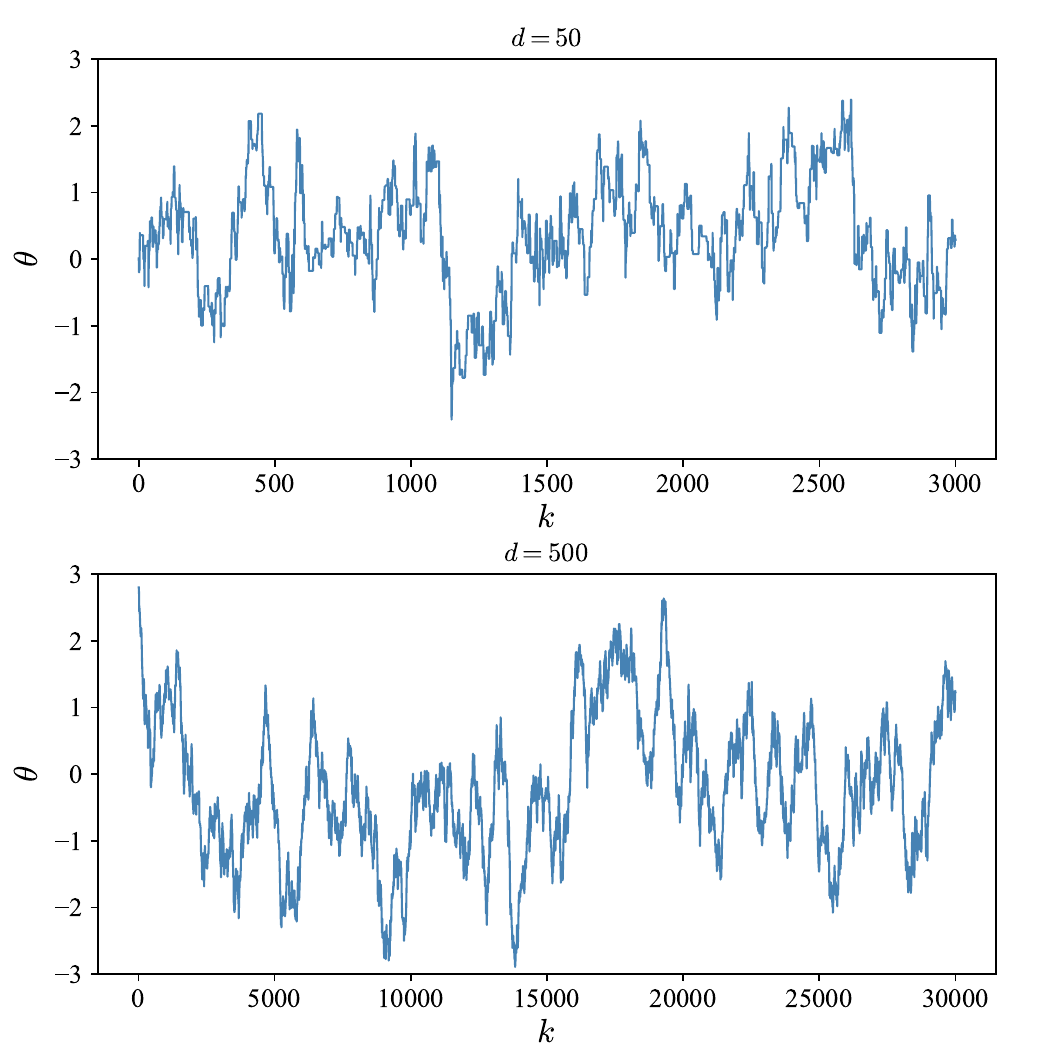}
 \caption{Trace plots for the first component, $\theta_1$, of $\btheta$ for a RWMH on $\pi(\btheta)\propto \exp(-\frac{1}{2}\|\btheta\|^2)$ in $d=50$ (top) and $d=500$ (bottom). Both algorithms were initialised using a sample from $\pi$ and each used a scaling\index{scaling} of $\lambda=\ell/\sqrt{d}$ with $\ell=2$. 
 \label{fig.RWMb}}
\end{figure}

As dimension goes to infinity, with a scaling\index{scaling} of $\ell/\sqrt{d}$ and with time sped up by a factor of $d$ (essentially running for $nd$ iterations rather than $n$), the path of the first component approaches (in distribution) the path of the stochastic differential equation \eqref{eqn.scaling.limit},  below. For simplicity of notation, we denote the first component by $\theta$ and denote its marginal distribution by $f(\theta)\propto\exp(-\theta^2/2)$.
\begin{equation}
\label{eqn.scaling.limit}
\md \theta_t = \frac{1}{2}\left[\log f(\theta_t)\right]'~h(\ell) \md t + \sqrt{h(\ell)}\md W_t,
\end{equation}
where $h(\ell)=\ell^2\times 2\Phi(-\ell/2)$.
This is the OU process defined in \eqref{eqn.OUprocess}, with $b=\sqrt{h(\ell)}$; it  has a $\mathsf{N}(0,1)$ stationary distribution\index{stationary distribution}.  
Here, $h(\ell)$ can be thought of as the \emph{speed} of the diffusion, with a larger value corresponding to a diffusion that will converge to stationarity more quickly, and can be maximised with respect to $\ell$, giving $\ell_{\mathrm{opt}}\approx 2.38$. This corresponds to an acceptance rate\index{acceptance rate} of $2\Phi(-\ell_{\mathrm{opt}}/2)\approx 0.234$, and leads to the well-known advice to choose the RWM scaling\index{scaling} so that the acceptance rate\index{acceptance rate} is approximately $1/4$.

Of more importance for us is that the limiting process is approached by letting $\lambda=\ell/\sqrt{d}$ and speeding up time by a factor of $d$. Reversing this logic, in dimension $d$, the first component moves $d$ times more slowly than the diffusion. In other words \emph{the time or, equivalently, the number of iterations taken by the RWM to explore the posterior in dimension $d$ is proportional to $d$}.

Figure \ref{fig.RWMACF} emphasises this linear dependence on dimension by continuing the example in Figure \ref{fig.RWMb}. In dimension $d=50$, the algorithm is run for  $n=10000$ iterations, and for each component, the auto-correlations\index{auto-correlation} are calculated up to a lag of $300$. The dotted blue line shows the component-wise\index{component-wise} average of each auto-correlation. For $d=500$, $n=100000$ iterations were used and auto-correlations\index{auto-correlation} up to a lag of $3000$ were calculated. The dashed red line shows the component-wise\index{component-wise} averages plotted against lag$/10$. The curves are almost indistinguishable and the resulting estimated integrated auto-correlation times\index{integrated auto-correlation time} are, respectively, $70$ and $718$. The corresponding effective sample sizes\index{effective sample size} are, therefore, almost identical, even though the experiment with $d=500$ used ten times the number of iterations.

\begin{figure}[ht]
\centering
  \includegraphics[width=0.7\textwidth]{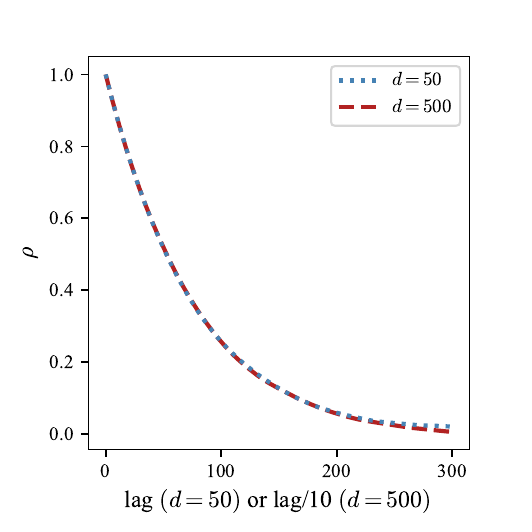}
 \caption{Component-wise\index{component-wise} average auto-correlation plots for a RWM on $\pi(\btheta)\propto \exp(-\frac{1}{2}\|\btheta\|^2)$ in $d=50$ and $d=500$. Both algorithms were initialised using a sample from $\pi$ and each used a scaling\index{scaling} of $\lambda=\ell/\sqrt{d}$ with $\ell=2$. \label{fig.RWMACF}}
\end{figure}

The above arguments have been made rigorous and applied to more complex targets such as $\pi(\btheta)=\prod_{i=1}^d C_if(C_i\theta_i)$, for a large class of density functions $f$ \cite[see][for example]{Roberts:2001}. The limiting process for the first coordinate becomes a Langevin diffusion\index{Langevin diffusion} \eqref{eqn.Langevin} with a stationary density of $C_1 f(C_1 \theta_1)$, and in all cases the time taken by the RWM to explore the target is proportional to $d$.

\subsection{The Metropolis-Adjusted Langevin Algorithm}
\label{sec:ch2-MALA}

The \emph{Metropolis-adjusted Langevin algorithm}\index{Metropolis--adjusted Langevin algorithm} (MALA) differs from the RWM proposal\index{proposal} of $\btheta'|\btheta \sim \mathsf{N}(\btheta,\lambda^2 \bI_d)$ through an additional deterministic offset of $\frac{1}{2}\lambda^2 \nabla \log \pi(\btheta)$. We motivate this proposal and then generalise it to allow preconditioning\index{preconditioned} via a positive definite variance matrix, $\bV$; as with the RWM, this can bring dramatic efficiency improvements in practice.

The deterministic offset can be seen as an additional movement in the ``uphill'' direction, that is biasing the proposal\index{proposal} to move to areas of higher posterior density; however, there is a deeper motivation. The proposal\index{proposal} can be written as
\begin{equation}
\label{eqn.MALA.disc.simple}
\btheta'|\btheta =\btheta+\frac{1}{2}\lambda^2 \nabla \log \pi(\btheta) + \boldsymbol{\epsilon}
\end{equation}
where $\boldsymbol{\epsilon}\sim \Normal(0,\lambda^2\bI_d)$.

Substituting $\lambda^2=\delta t$, we see that the proposal\index{proposal} is exactly the Euler--Maruyama\index{Euler--Maruyama} discretisation of the Langevin diffusion\index{Langevin diffusion} that has a stationary distribution\index{stationary distribution} of $\pi$, \eqref{eqn.Langevin} (with $b=1$). 
In particular, in the hypothetical limit as $\lambda\downarrow 0$ the algorithm should require no accept-reject step to target $\pi$. In this sense, it is a natural form for the proposal.

We now derive the preconditioned\index{preconditioned} MALA proposal\index{proposal}. 
For a general positive-definite $\bV$, let $\bA$ be a square matrix such that $\bA\bA^\top=\bV$, and consider $\bpsi=\bA\btheta$. Multiplying \eqref{eqn.MALA.disc.simple} by $\bA$ gives
\[
\bpsi'|\bpsi =\bpsi+\frac{1}{2}\lambda^2 \bA \nabla_{\btheta} \log \pi(\btheta) + \bA \boldsymbol{\epsilon},
\]
where we have made explicit that the gradient\index{gradient} is with respect to $\btheta$. The density for $\bpsi$ is $\widetilde{\pi}(\bpsi)\propto \pi(\bA^{-1}\bpsi)=\pi(\btheta)$. Further, $\nabla_\theta =\bA^\top \nabla_\psi$, so
\[
\bpsi'|\bpsi =\bpsi+\frac{1}{2}\lambda^2 \bV \nabla_{\bpsi} \log \widetilde{\pi}(\bpsi) + \bA \boldsymbol{\epsilon}.
\]
Since $\bA \boldsymbol{\epsilon}\sim \Normal(\bzero,\lambda^2\bV)$, this
 corresponds to the \emph{preconditioned}\index{preconditioned} MALA proposal\index{proposal}:
\[
\btheta'|\btheta \sim
\Normal\left(\btheta+\frac{1}{2}\lambda^2 \bV \nabla \log \pi(\btheta),\lambda^2 \bV\right).
\]
When the posterior is unimodal, preconditioned\index{preconditioned} MALA is often most efficient when $\bV$ is an approximation to the posterior variance.

We now explore the scaling\index{scaling} of MALA with dimension and the sensitivity to large gradients\index{gradient}. In both of these analyses the following simplification of part of the log acceptance ratio\index{acceptance ratio} will be helpful. For the MALA proposal\index{proposal} in \eqref{eqn.MALA.disc.simple}, and writing $\bg(\btheta)$ for the gradient at $\btheta$,

\begin{align}
\nonumber
\log \frac{q(\btheta|\btheta')}{q(\btheta'|\btheta)}
&=
\frac{1}{2\lambda^2}
\|\btheta'-\btheta-\frac{\lambda^2}{2}\bg(\btheta)\|^2
-
\frac{1}{2\lambda^2}
\|\btheta-\btheta'-\frac{\lambda^2}{2}\bg(\btheta')\|^2\\
&=
\nonumber
\frac{1}{8}\left[
\bg(\btheta)+\bg(\btheta')
\right]^\top
\left[
\lambda^2 \bg(\btheta)-\lambda^2 \bg(\btheta')
+4(\btheta-\btheta')
\right]
\\
&=
\label{eqn.simply.MALA.ratio}
-\frac{1}{8}\left[
\bg(\btheta)+\bg(\btheta')
\right]^\top
\left[
\lambda^2 \bg(\btheta)+\lambda^2 \bg(\btheta')
+4\boldsymbol{\epsilon}
\right].
\end{align}

\subsubsection{Scaling of MALA with Dimension}\index{scaling}
In the isotropic Gaussian running example, Example \ref{example.RWMa}, the log acceptance ratio\index{acceptance ratio} for MALA is, using \eqref{eqn.simply.MALA.ratio},
\[
\log \rho(\btheta,\btheta')
=
\frac{1}{2}\|\btheta\|^2-\frac{1}{2}\|\btheta'\|^2
-\frac{\lambda^2}{8}\|\btheta+\btheta'\|^2
+\frac{1}{2}(\btheta+\btheta')^\top \bZ.
\]
Substituting $\btheta'=(1-\frac{1}{2}\lambda^2) \btheta+\lambda \bZ$ from \eqref{eqn.MALA.disc.simple} and collecting terms, we obtain
\begin{equation}
\label{eqn.GaussMALA}
\log \rho(\btheta,\btheta')
=-\frac{\lambda^3}{8}\left[
\lambda\left\{\|\bZ\|^2-\|\btheta\|^2\right\}+\frac{1}{4}\lambda^3\|\btheta\|^2+(2-\lambda^2)\btheta^\top \bZ
\right],
\end{equation}
where $\bZ\sim \Normal(\bzero,\bI_d)$.
Since $\|\bZ\|^2$ and $\|\btheta\|^2$ are both $\chi^2_d$, each has an expectation of $d$ and their difference is $O_p(d^{1/2})$; further, $\btheta^\top \bZ\sim\Normal(0,\|\btheta\|^2)$. 

To understand the relative sizes of the terms we informally write:
\[
\log \rho(\btheta,\btheta')
=
\lambda^3 \left[
\lambda O_p\left(d^{1/2}\right)
+\lambda^3 O_p(d)
+(2-\lambda^2) O_p\left(d^{1/2}\right)
\right].
\]
Thus, with $\lambda=\ell/d^{1/6}$, the first term vanishes and the second and third are $O_p(1)$, leading to an acceptance ratio\index{acceptance ratio} that is $O_p(1)$.

As for the RWM, it is possible to obtain a limiting diffusion of the form \eqref{eqn.scaling.limit} for the first component of $\btheta$ as the dimension goes to infinity. For MALA, however,  the required scaling\index{scaling} is $\lambda=\ell/d^{1/6}$, time is sped up by a factor of $d^{1/3}$ (essentially running for $nd^{1/3}$ iterations rather than $n$) and, for the Gaussian target, the speed of the diffusion is $h(\ell)=2\ell^2 \Phi(-\ell^3/4)$. 
Optimising the speed with respect to the scaling\index{scaling} leads to a recommended acceptance rate\index{acceptance rate} of approximately, $57.4\%$. As with the RWM, the more important point for us is that the limiting OU process mixes in a time of $O(1)$, so the original process, before it has been sped up, mixes in a time of $O(d^{1/3})$. 
This is considerably faster than the $O(d)$ mixing\index{mixing} of the RWM.

As for the RWM, the above result, which is specific to a $\Normal(\bzero, \bI_d)$ target, has been generalised to targets of the form $\pi(\btheta)=\prod_{i=1}^d C_i f(C_i \theta_i)$, again leading to a Langevin diffusion\index{Langevin diffusion} for the first component in the limit as $d\to \infty$, an optimal acceptance rate\index{acceptance rate} of $57.4\%$, and requiring time to be sped up by a factor of $d^{1/3}$; see \cite{Roberts:2001}. 

The above product results for MALA rely on the existence and good behaviour of all derivatives of $f$ up to the $7$th order, and that the process was started from stationarity. \cite{ChrRobRos2005} investigates the behaviour of MALA on a high-dimensional isotropic Gaussian target when the algorithm is started close to the mode. When a scaling\index{scaling} of $\lambda=\ell/d^{1/6}$ is used, in the limit as $d\to \infty$ the process, sped up by a factor of $d^{1/3}$, does not move. Substituting $\btheta=0$, for example, into \eqref{eqn.GaussMALA}, we see that $\log \rho=-\lambda^4\|\bZ\|^2/8$. Since $\|\bZ\|^2=O_p(d)$, the acceptance probability\index{acceptance probability} is approximately $\exp[-\ell^4 d^{1/3}]$. If, instead, a scaling\index{scaling} of $\lambda=\ell/d^{1/4}$ is used then then acceptance rate\index{acceptance rate} remains $O_p(1)$ as $d\to \infty$. \cite{ChrRobRos2005} shows that this new process, sped up by a factor of $d^{1/2}$, moves deterministically towards the region of the main posterior mass. Reductions in efficiency can also occur if only lower derivatives of the target are well-behaved.

\subsubsection{Sensitivity to gradients}

Whilst the scaling properties of MALA are favourable compared with those of the RWM and MHIS, the performance of MALA is notoriously sensitive to large gradients\index{gradient}. We 
 illustrate this with a simple example in one dimension.

\begin{example}
\label{example.BadMALA}
Let $\theta \in \mathbb{R}$ and for some  
    $a>0$ let 
$\pi(\theta)\propto \exp\left(-\frac{1}{a}|\theta|^a\right)$,
so $\nabla \log \pi(\theta)=  -\theta|\theta|^{a-2}$ and $|\nabla \log \pi(\theta)|=|\theta|^{a-1}$.

    When $a>2$, whatever the (fixed) value of $\lambda$, for large enough $\theta$,  $\Expect{\theta'|\theta}$ is dominated by the term $\frac{1}{2}\lambda^2 \nabla \log \pi(\theta)$, so that (with a very high probability) the proposal has the opposite sign to the current value and a much larger magnitude,  $\lambda^2|\theta|^{a-1}/2$.

    Writing $\epsilon=\lambda Z$, where $Z\sim \mathsf{N}(0,1)$, the log acceptance ratio\index{acceptance ratio} for MALA is $\rho(\theta,\theta')=\log \pi(\theta')-\log \pi(\theta)+B(\theta,\theta')$, where, from \eqref{eqn.simply.MALA.ratio},
    \begin{align}
     B(\theta,\theta')
    &=
    \frac{1}{8}
    \{\theta|\theta|^{a-2}+\theta'|\theta'|^{a-2}\}\{4\lambda Z - \lambda^2 \theta |\theta|^{a-2}-\lambda^2 \theta'|\theta'|^{a-2}\}.
 \label{eqn.MALAoneLogAcc}
    \end{align}  
    The highest order term in \eqref{eqn.MALAoneLogAcc} arises from the product of $\theta'$ terms, so it is negative and of order $|\theta|^{2(a-1)^2}$. The difference in log posteriors is dominated by $\log \pi(\theta')\sim -|\theta|^{a(a-1)}$, which is, again, large and negative. Hence, the acceptance probability\index{acceptance probability} is almost certainly very close to $0$. 
     Unsurprisingly, since the proposal is even further from the main mass than the current value is, the proposal is almost certainly rejected. As $\theta$ moves further and further into the tail of the target, the average (over the proposal distribution) acceptance probability for MALA becomes arbitrarily small and the algorithm converges increasingly slowly.  
\end{example}

In Example \ref{example.BadMALA}, similar poor behaviour occurs even with $a=2$ (a Gaussian posterior), provided $\lambda^2>2$. More generally, MALA can become "stuck" anywhere that $\|\nabla \log \pi\|$ is large. In particular, the basic MALA algorithm should be used with caution if the user suspects that the posterior has tails which are lighter than Gaussian. Mitigations against this behaviour are  briefly discussed in the Chapter Notes.

\section{Hamiltonian Monte Carlo} \label{sec:ch2-HMC}
We have seen that when the dimension $d$ is high, MALA can maintain a high acceptance rate\index{acceptance rate} with a scaling\index{scaling} of $\ell/d^{1/6}$, whereas the RWM requires a scaling\index{scaling} of $\ell/d^{1/2}$. In other words, MALA can propose much larger sensible jumps than the RWM. As we shall see, Hamiltonian Monte Carlo allows even larger jumps than MALA, whilst maintaining a high acceptance rate\index{acceptance rate}.
\emph{Hamiltonian Monte Carlo}\index{Hamiltonian Monte Carlo} (HMC) can be viewed as using a Metropolis--Hastings algorithm\index{Metropolis--Hastings}, but with a more intricate proposal\index{proposal} mechanism than those seen so far. 

One may consider $-\log \pi(\btheta)$ as a \emph{potential energy} surface. 
Intuitively one may think of this as a physical surface on which a ``particle'' with mass $M$ currently sits at a ``height'' (strictly, potential energy) of $-\log \pi(\btheta)$ above a current parameter value,  $\btheta\in \mathbb{R}^d$. To obtain the proposal, the particle is given a random momentum, $\bp\in \mathbb{R}^d$ drawn from a symmetric distribution. The true frictionless motion that the particle would undergo along the potential surface according to Hamiltonian dynamics is approximated numerically. The proposal\index{proposal} is the position $\btheta'\in \mathbb{R}^d$ after a time $T$, a tuning parameter. 

As we shall see, the log-acceptance ratio\index{acceptance ratio} for the algorithm can be written as 
\[
\log \rho(\btheta,\btheta')=
-\log \pi(\btheta)+\frac{1}{2M}\bp^\top\bp-\left\{-\log \pi(\btheta')+\frac{1}{2 M}{\bp'}^\top \bp'\right\} ,
\]
where $\bp'$ is the momentum at time $T$. The term, $\bp^\top \bp/(2M)$ is the \emph{kinetic energy} of the particle, and $-\log \pi(\btheta)$ is the potential energy, so the acceptance rate\index{acceptance rate} is $\min[1,\exp(-\delta E)]$, where $\delta E$ is the change in total energy over time $T$.
Frictionless motion conserves the total energy so that under the exact dynamics the acceptance probability\index{acceptance probability} is $1$. Numerical integration approximates the dynamics, using an integration step size\index{step size}, $\epsilon$. A smaller $\epsilon$ gives a more accurate numerical scheme and a higher average acceptance rate\index{acceptance rate}, but for a given $T$ it also requires more numerical steps and, hence, a larger computational cost.

We now provide a more rigorous description of a standard version of the algorithm, including an explanation of the acceptance probability\index{acceptance probability} that leads to a stationary distribution\index{stationary distribution} of $\pi$. 

The first component of the algorithm is a positive-definite mass matrix\index{mass matrix}, $\bM$, the inverse of which plays a similar role to the preconditioning\index{preconditioned} matrix $\bV$ used in the RWM and MALA. The \emph{mass} of an object, as used in the intuitive explanation above, is the ratio between the magnitude of a force that is applied and the magnitude of the acceleration that results and is a scalar. For additional generality, in HMC, we imagine that this ratio can be different along each of a set of $d$  orthogonal principal axes leading to a mass matrix\index{mass matrix} rather than a scalar.

The core component of the HMC algorithm is the numerical integration scheme, which repeatedly uses the leapfrog\index{leapfrog dynamics} step to deterministically evolve the position and momentum from a time $t$ to a time $t+\epsilon$: $(\btheta_{t+\epsilon},\bp_{t+\epsilon})=\Leap(\btheta_t,\bp_t;\epsilon,\bM)$, where
\[
\bp_*=\bp_t+\frac{1}{2}\nabla \log \pi(\btheta_t),
~~~
\btheta_{t+\epsilon}=\btheta_t+\epsilon \bM^{-1}\bp_*,
~~~
\bp_{t+\epsilon}=\bp_*+\frac{1}{2}\nabla \log \pi(\btheta_{t+\epsilon}).
\]
HMC uses the leapfrog\index{leapfrog dynamics} scheme rather than, for example, the Euler or Runge--Kutta schemes because the leapfrog scheme possesses two key properties that will be discussed shortly.

HMC repeats the leapfrog\index{leapfrog dynamics} step $L$ times, where $L\epsilon=T$, to obtain the proposal, $\btheta'=\btheta_T$ as depicted in Figure \ref{fig.HMCleaps}. The proposed momentum is, in fact, $\bp'=-\bp_T$ and we denote the transformation: $(\btheta,\bp)\to(\btheta',\bp')$ by $\Leap^L_-$.

\begin{figure}[ht]
\centering
  \includegraphics[width=0.7\textwidth]{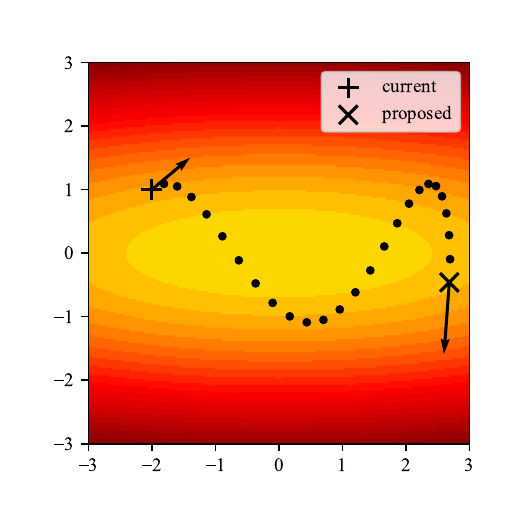}
 \caption{Initial point $\btheta=\btheta_0=+$ and final point $\btheta'=\btheta_{2.5}=\times$ after $L=25$ leapfrog\index{leapfrog dynamics} steps using a time interval of $\epsilon=0.1$ and a mass matrix\index{mass matrix} of $\bM=\bI_2$. The momentum at the current and proposed point (before the moment flip) is proportional to the size of the corresponding arrow and intermediate points appear as small solid circles. \label{fig.HMCleaps}}
\end{figure}

Standard HMC proposes  $\bp$ from a $\Normal(\bzero,\bM)$ distribution, and can be viewed as targeting a joint density of $(\btheta,\bp)$ that is the product of the density for $\bp$ and the posterior:
\[
\pitilde(\btheta,\bp)=
\pi(\btheta)(2\pi)^{-d/2}\mathsf{det}(\bM)^{-1/2}\exp\big[-\frac{1}{2}\bp^\top \bM^{-1} \bp\big]
\]
At the end of each iteration, we discard $\bp$, and the marginal for $\btheta$ is $\pi$, as required. 
Algorithm \ref{alg:HMC} details the standard version of the Hamiltonian Monte Carlo\index{Hamiltonian Monte Carlo} algorithm.

\begin{algorithm}
    \caption{Hamiltonian Monte Carlo}
    \KwIn{Density $\pi(\btheta)$, initial value $\btheta_0$, mass matrix\index{mass matrix} $\bM$, time interval $T$, number of leapfrog steps $L$.}
    $\epsilon\gets T/L$.\\
    \For {$k\in 0,\dots,n-1$} {
    Sample $\bp\sim \Normal(\bzero,\bM)$.\\
    $(\btheta',\bp')\gets\Leap^L_-(\btheta_k,\bp)$.\\
    Calculate the acceptance probability\index{acceptance probability}:
    \[\alpha(\btheta_k,\bp;\btheta',\bp')
    :=\min\left(1,\frac{\pitilde(\btheta',\bp')}{\pitilde(\btheta_k,\bp)}\right).\]\\
    With a probability of $\alpha(\btheta_k,\bp;\btheta',\bp')$ accept the proposal, $\btheta_{k+1}\gets\btheta'$; otherwise reject it, $\btheta_{k+1}\gets \btheta_k$.    }
    \label{alg:HMC}
\end{algorithm}

HMC combines a momentum refresh with a Metropolis--Hastings\index{Metropolis--Hastings} step which uses a deterministic proposal\index{proposal} $(\btheta,\bp)\gets (\btheta',\bp')\equiv \Leap^L_-(\btheta,\bp;\epsilon;\bM)$; finally the new momentum is discarded. The momentum refreshment preserves $\pitilde$ as it samples directly from the correct conditional. We now explain why the accept-reject step with a deterministic proposal\index{proposal} also preserves $\pitilde$. 

The leapfrog\index{leapfrog dynamics} step possesses two key properties:
\begin{description}
    \item[Property 1] $\Leap$ has a Jacobian\index{Jacobian} of $1$.
    \item[Property 2] If $\Leap(\btheta,\bp)=(\btheta',\bp')$ then $\Leap(\btheta',-\bp')=(\btheta,-\bp)$.
\end{description}
Property 1 arises because the Leapfrog is a composition of three transformations each of which has a Jacobian\index{Jacobian} of $1$. Property 2 is straightforward to verify, and emulates frictionless dynamics in that if after moving for some time the momentum of an object is suddenly reversed, after the same amount of time again the object will end up back where it started, moving with the same speed as when it started but in the opposite direction. The composition of $L$ leapfrog steps\index{leapfrog dynamics} possesses the same property: in Figure \ref{fig.HMCleaps}, starting at the $\times$ but with a momentum given by the reverse of the corresponding arrow, and proceeding for $25$ leapfrog\index{leapfrog dynamics} steps leads to the $+$ position, but with a momentum of exactly the reverse of the true initial momentum that the corresponding arrow represents.

 Naturally, $\Leap^L_-$, the composition of $L$ leapfrog\index{leapfrog dynamics} steps, combined with a momentum flip, also has a Jacobian\index{Jacobian} of $1$. Moreover, $\Leap^L_-$ is self-inverse: $\Leap^L_-(\Leap^L_-(\btheta,\bp))=(\btheta,\bp)$; equivalently, from $(\btheta',\bp')$ we would propose $(\btheta,\bp)$.

As in Section \ref{sec.MHgen}, the detailed balance\index{detailed balance} condition is trivial under rejection so we focus on acceptances.  Let $\mathsf{A}$ be the event of an acceptance and write $(\btheta_k,\bp_k)$ and $(\btheta_{k+1},\bp_{k+1})$ for the position and momentum before and after the acceptance step (and before the momentum is discarded). Then, for $\cB \in \mathbb{R}^{2d}$ and $\cC\in \mathbb{R}^{2d}$, and writing  $\Leap^L_-(\cA)$ for the image of a set $\cA$ under $\Leap^L_-$,
\begin{eqnarray*}
& & \hspace{-50pt} \Prob{\mathsf{A}, (\btheta_k,\bp_k)\in \cB, (\btheta_{k+1},\bp_{k+1})\in \cC} \\
&=&
\iint_{\cB\cap \Leap^L_-(\cC)}
\pitilde(\btheta,\bp)\alpha(\btheta,\bp;\btheta',\bp') ~\md(\btheta,\bp)\\
&=&
\iint_{\Leap^L_-(\cB)\cap\cC}
\pitilde(\btheta',\bp')\alpha(\btheta',\bp';\btheta,\bp)) ~\md(\btheta',\bp')\\
&=&
\Prob{\mathsf{A}, (\btheta_k,\bp_k)\in \cC, (\btheta_{k+1},\bp_{k+1})\in \cB},
\end{eqnarray*}
where on both intermediate lines we have used that $\Leap^L_-$ is self inverse and the penultimate line uses that the Jacobian\index{Jacobian} of $(\btheta,\bp)\to (\btheta',\bp')$ is $1$.

\subsubsection{Scaling of HMC with Dimension}\index{scaling}

Given a particular integration time, $T$, the smaller the step size\index{step size}, $\epsilon$, the more accurate the leapfrog\index{leapfrog dynamics} scheme, and the closer the acceptance rate\index{acceptance rate} is to $1$. At the same time, the computational cost is proportional to the number of leapfrog\index{leapfrog dynamics} steps, $L=\lceil T/\epsilon\rceil$. A large $\epsilon$ leads to many rejections and a small $\epsilon$ leads to a high computational cost,  suggesting that there is an optimal choice of $\epsilon$ between these two extremes.

In this analysis, we consider a general product target, $\pi(\btheta)=\prod_{i=1}^d f(\theta_i)$, and assume an identity mass matrix\index{mass matrix}, $\bM=\bI_d$.
In this case, the evolution of each $(\theta_i,p_i)$ by $\Leap^L_-$ does not depend on any of the other components.
The acceptance ratio\index{acceptance ratio} for HMC is
\[
\rho(\btheta,\bp;\btheta',\bp')
=
\frac{\pitilde(\btheta',\bp')}{\pitilde(\btheta,\bp)}
=
\prod_{i=1}^d \rho_1^{(i)}
\]
where $(\btheta',\bp')=\Leap^L_-(\btheta,\bp)$, a deterministic function, and \[
\rho_1^{(i)}=\frac{f(\theta_i') \Normal(p_i';0,1)}{f(\theta_i)\Normal(p_i;0,1)}.
\] 
At stationarity, after cancellations, and using the unit Jacobian\index{Jacobian} of $\Leap^L_-$, we have
\begin{align}
\nonumber
\Expects{\theta_i \sim f,p_i\sim \Normal(0,1)}{\rho_1^{(i)}}
&=
\int f(\theta_i')\Normal(p_i';0,1)~\md \theta_i\md p_i\\
\label{eqn.EoneHMC}
&=
\int f(\theta_i')\Normal(p_i';0,1)~\md \theta_i' \md p_i'
=
1.
\end{align}
Moreover, the $\rho_1^{(i)}$ are i.i.d., so, by the central limit theorem\index{central limit theorem}, approximately, 
\[
\log \rho = \sum_{i=1}^d \log \rho_i\sim \Normal(d \Expect{\log \rho_1},d\Var{\log \rho_1}),
\]
from which we see that $\rho$ has approximately a lognormal distribution. From \eqref{eqn.EoneHMC}, and the component-wise\index{component-wise} independence, $\Expect{\rho}=1$, so $\Expect{\log \rho}=-\frac{1}{2}\Var{\log \rho}$ and, hence,  $\Expect{\log \rho_1}\approx -\frac{1}{2}\Var{\log \rho_1}$. This gives the same distribution for the log-acceptance ratio\index{acceptance ratio} as we found for the RWM in \eqref{eqn.RWMlognormal} with the same scaling of the expectation and variance with dimension if $\lambda$ (for the RWM) or $\epsilon$ (for HMC) is kept fixed. For the RWM, this necessitated taking $\lambda^2\propto 1/d$; however, for Hamiltonian dynamics approximated by the leapfrog\index{leapfrog dynamics} integrator with step size\index{step size} $\epsilon$ over a time period $T$, the error in the total energy is $O(\epsilon^2)$; \emph{i.e.}, $\Expect{|\log \rho_1|}=O(\epsilon^2)$.  Thus 
\begin{eqnarray*}
& & \hspace{-40pt} \Var{\log \rho_1}+\frac{1}{4}\Var{\log \rho_1}^2 \\
&=&
\Var{\log \rho_1}+\Expect{\log \rho_1}^2
=
\Expect{(\log \rho_1)^2}=O(\epsilon^4).
\end{eqnarray*}
Setting $\epsilon=O(d^{-1/4})$ gives $\Var{\log \rho_1}=O(1/d)$, so both $\Expect{\log \rho}$ and $\Var{\log \rho}$ are $O(1)$, as required for the acceptance ratio\index{acceptance ratio} to be well-behaved.
Taking $\epsilon\propto d^{-1/4}$, and $T$ fixed as dimension increases, implies that for a given amount of movement in each component, the number of leapfrog\index{leapfrog dynamics} steps, and hence the computational cost, increases in proportion to $d^{1/4}$. Contrasting this with a cost of $d^{1/3}$ for MALA and $d$ for the RWM shows why HMC is often the algorithm of choice for high-dimensional targets.

\subsubsection{Tuning HMC}
After a more rigorous scaling\index{scaling} analysis than our heuristic explanation, \cite{Beskos:2013} concludes that given $T$, in the high-dimensional limit, $\epsilon$ should be chosen so that the acceptance rate\index{acceptance rate} is around $65\%$; this limit is approached slowly, however, and in practice, it is often found that a higher acceptance rate is optimal. 

The main difficulty with tuning HMC is in choosing the integration time, $T$. For example, for a $\Normal(0,\sigma^2)$ target, $\pi$, using a momentum of $p\sim \Normal(0,1)$, it is straightforward to show that if $\theta_0\sim \pi$ then under the true Hamiltonian dynamics, $\mathsf{Cor}[\theta_0,\theta_T]=\cos(T/\sigma)$. If the target is a product of Gaussians, each with a different length scale, then the auto-correlations\index{auto-correlation} between the current values and the proposals for each coordinate have different periods. The periodicity means that increasing $T$ does not monotonically decrease the auto-correlation\index{auto-correlation} and the different periods\index{period} mean that the minimum correlation over all components, which upper bounds the minimum of the lag-1 auto-correlations\index{auto-correlation}, is an erratic function of $T$. Hence, the overall efficiency can, and often does, behave erratically as $T$ is varied. This is illustrated in Figure \ref{fig.HMCcorr} where the optimal choice of $T$ is around $8$--$9$, but slight deviations from this range lead to substantial decreases in efficiency.

\begin{figure}[ht]
\centering
  \includegraphics[width=\textwidth]{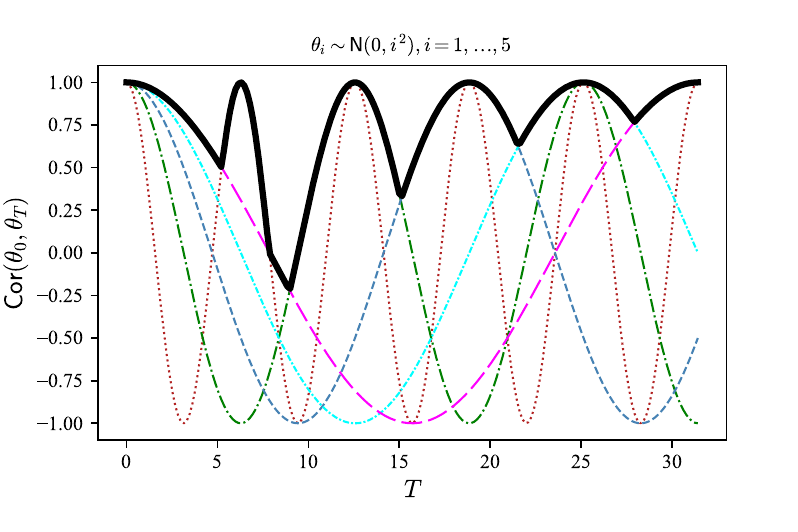}
 \caption{$\mathsf{Cor}(\theta_0,\theta_T)$ against $T$ for all $5$ components of $\btheta$ when $\pi(\btheta)\propto \exp[-\frac{1}{2}\sum_{i=1}^5\theta_i^2/i^2]$, $\btheta_0\sim \pi$ and $\bp\sim \Normal(\bzero,\bI_5)$ (non-solid lines). The thick solid line is the pointwise maximum over components. \label{fig.HMCcorr}
 }
\end{figure}


\section{Chapter Notes}
\label{sec.Ch2.ChapNotes}

This chapter has only touched the surface on many established aspects of MCMC and variations on the Metropolis--Hastings\index{Metropolis--Hastings} algorithm of \cite{Hastings:1970}. The first MCMC algorithm was the random-walk Metropolis-within-Gibbs algorithm of \cite{metropolis1953equation}, whilst the MALA was suggested and studied in \cite{besag1994} and \cite{Roberts:1996} and HMC was proposed in \cite{DuaneHMC1987}.  There are many texts and review articles devoted to Markov chain Monte Carlo\index{Markov chain Monte Carlo},  
 including \cite{Geyer1992}, \cite{robert1999monte}, \cite{gamerman2006markov} and \cite{brooks2011handbook}.

The first high-dimensional RWM scaling\index{scaling} result showing a limiting diffusion appears in \cite{RobertsGelmanGilks1997} and applies to a product target, $\pi(\btheta)\propto \prod_{i=1}^d f(\theta_i)$; this is extended to MALA in \cite{Roberts:1998} and, for both the RWM and MALA, to targets of the form $\prod_{i=1}^d C_i f(C_i\theta_i)$ in \cite{Roberts:2001}. \cite{Sherlock/Roberts:2009} tackles the RWM on spherical and elliptical targets, showing that in some situations the optimal acceptance rate\index{acceptance rate} can be less than $0.234$, and \cite{Sherlock:2015} extends the analysis for product targets to the pseudo-marginal RWM. Of the many other scaling\index{scaling} results for these two algorithms, we highlight the following: \cite{ChrRobRos2005} examines the transient phases of the algorithms, \cite{BeskosRobertsStuart:2009} considers a change of measure from a product law and, finally,  \cite{Kamatani2020} considers the RWM on spherically symmetric scale-mixtures of Gaussians and shows that when the tails are heavier than exponential, although the optimal scaling\index{scaling} is still $\ell/d$, the norm of the process, $\|\btheta\|$, mixes in a time of $O(d^2)$ rather than $O(d)$, suggesting that the RWM may be too costly on some, more realistic heavy-tailed targets.

Example  \ref{example.BadMALA} illustrated the poor behaviour of MALA in the tails of targets with tails that are lighter than Gaussian. 
A simple solution is to truncate the gradient\index{gradient} term \cite[]{Roberts:1996}. \cite{LivZan2022} provides an alternative use of gradients within the proposal that leads to the same limiting behaviour as MALA, but is automatically robust to issues with light-tails.

A single iteration of HMC approximates  Hamiltonian dynamics over a finite time $T$, whatever the dimension. Thus, if, as in the earlier scaling\index{scaling} analysis, $T$ is kept fixed, there can be no limiting diffusion for a product target. A similar scaling\index{scaling} analysis to ours appears in \cite{Neal:2011}, which itself is based on \cite{Creutz1988}; a more rigorous analysis is given in \cite{Beskos:2013}. 

Recent works have mitigated against the erratic dependence of the HMC efficiency on the integration time, $T$. Techniques include introducing randomness into the length of the path \cite[]{Neal:2011,BouSan2017,HofRadSou2021}, randomising the choice of proposal point from those along the path \cite[][the former also automatically choosing $T$ at each iteration and the latter adjusting $T$ according to a natural length scale of the target]{hoffman2014no,SheUrbLud2023} or by jittering the momentum after each leapfrog\index{leapfrog dynamics} step \cite[]{DurVog2023}.

Further variations on the HMC algorithm include the truly non-reversible \cite{horowitz1991generalized}, which is discussed in more detail in Section \ref{sec:ch5_lifting}, and  \cite{SohMud2014}, which tries to mitigate one of the key issues with \cite{horowitz1991generalized}.

A separate strand of methodological developments for reversible MCMC starts with position-dependent preconditioning\index{preconditioned} of MALA and extends to Riemann manifold Hamiltonian Monte Carlo\index{Hamiltonian Monte Carlo} \cite[]{girolami2011riemann} and Riemann manifold MALA \cite[]{Xifaraetal2014} which itself feeds into the Stochastic Gradient Riemannian Langevin Dynamics described in Section \ref{ch3:sgmcmc-general}.


\chapter{Stochastic Gradient MCMC\index{Markov chain Monte Carlo} Algorithms}
\label{chap:sgld}

Chapter \ref{chap:intro-mcmc} introduced Markov chain Monte Carlo algorithms\index{Markov chain Monte Carlo} as a simulation-based approach to approximate distributions of interest. A drawback of the algorithms introduced in Chapter \ref{chap:intro-mcmc} is that their computational time scales poorly with large datasets. In this chapter, we will explore a class of algorithms that can be viewed as approximations of the algorithms introduced in Chapter \ref{chap:intro-mcmc}. We introduce the stochastic gradient Langevin algorithm, and extensions of this algorithm, which are popular Bayesian inference methods in the field of machine learning.  
Compared to traditional MCMC\index{MCMC|see{Markov chain Monte Carlo}} algorithms, we will now replace the gradient\index{gradient} of the log density of the target distribution with a stochastic approximation. This stochastic approximation is generated using a subsample of the full dataset to produce an approximate MCMC\index{Markov chain Monte Carlo} algorithm. This class of stochastic gradient MCMC\index{stochastic gradient Markov chain Monte Carlo} algorithms is computationally faster than standard MCMC\index{Markov chain Monte Carlo} algorithms but at the expense of introducing a small asymptotic bias that can be corrected post-hoc. 
Through this chapter, we will discuss the motivation behind these algorithms, and their extensions, and provide empirical comparisons to traditional  MCMC\index{Markov chain Monte Carlo} algorithms.

\section{The Unadjusted Langevin Algorithm}\index{unadjusted Langevin algorithm}
\label{sec:ch3-langevin-diffusion}

Recall that we aim to sample from a posterior distribution\index{posterior distribution} with density $\pi(\st)$, where for this chapter, $\st$ is a $d-$dimensional vector in $\mathbb{R}^d$. It is assumed for the methods we discuss in this chapter that $\log\pi(\st)$ is continuous and differentiable almost everywhere. Simulating a stochastic process that has $\pi$ as its stationary distribution\index{stationary distribution} is a well-established method for generating samples approximately from $\pi(\st)$. By sampling from such a process for an extended period, and discarding the initial burn-in\index{burn-in} samples, we obtain a set of samples that approximate $\pi(\st)$. The accuracy of the approximation depends on how quickly the stochastic process converges to its stationary distribution\index{stationary distribution} from the initial point, relative to the length of the burn-in\index{burn-in} period, as well as on the time for the chain to mix within the stationary distribution. The Markov Chain Monte Carlo (MCMC; see Chapter \ref{chap:intro-mcmc}) method is the most widely used technique for sampling in this manner 

With $b=1$, the \textit{overdamped Langevin diffusion}\index{Langevin diffusion!overdamped} first introduced in \eqref{eqn.Langevin} of Section \ref{sec.IntroLangDiffusions} is
\begin{equation} \label{eq:LangevinSDE}
\mbox{d}\st_t=\frac{1}{2} \nabla \log\pi(\st_t) \mbox{d}t + \mbox{d}W_t,
\end{equation}
where $\frac{1}{2}\nabla \log\pi(\st_t)$ is the drift term and $W_t$ denotes $d$-dimensional Brownian motion. In this chapter we sometimes refer to the solution to this stochastic differential equation\index{stochastic differential equation} (SDE)\index{SDE|see{stochastic differential equation}} simply as \emph{the Langevin diffusion}.
Under mild regularity conditions, the Langevin diffusion has $\pi$ as its stationary distribution\index{stationary distribution}. As detailed in Section \ref{sec.SDEs} and, in particular \eqref{eqn.EulerMar}, this equation can be interpreted as defining the dynamics of a Markov process over infinitesimally small time intervals. 
That is, for a small time-interval $\delta>0$, the Langevin diffusion has a discrete-time analogue given by the Euler--Maruyama\index{Euler-Maruyama} approximation,
\begin{equation} \label{eq:Euler}
    \st_{t+\delta} = \st_t + \frac{\delta}{2} \nabla \log\pi(\st_t) + \sqrt{\delta} \bZ, \quad t \geq 0
\end{equation}
where $\bZ$ is a vector of $d$ independent standard Gaussian random variables. 
This discrete-time update equation is commonly known as the \textit{unadjusted Langevin algorithm}\index{unadjusted Langevin algorithm} (ULA) or the \textit{Langevin Monte Carlo} algorithm. The discrete-time sequence $\{\st_t\}_{t \geq 0}$ generated by \eqref{eq:Euler} differs from the sequence produced by the process in \eqref{eq:LangevinSDE}. 
The update equation given in  (\ref{eq:Euler}) provides a straightforward and practically implementable method for generating approximate samples from the overdamped Langevin diffusion\index{Langevin diffusion!overdamped}. 
To generate samples over a duration $T=n\delta$, where $n$ is an integer, we begin by setting the initial state of the process to $\st_0$, and then repeatedly simulate the process using  (\ref{eq:Euler}) to obtain values at times $\delta,2\delta,\ldots,n\delta$. We will use the notation $\st_k$ to refer to the state of the process at time $k\delta$.
 As with the MCMC algorithms discussed in Chapter \ref{chap:intro-mcmc}, an estimate of any expectation is obtained via a Monte Carlo average: $\Expects{\pi}{h(\btheta)}\approx \frac{1}{n}\sum_{k=1}^n h(\btheta_k)$.

To improve the accuracy of the Euler--Maruyama discretisation \eqref{eq:Euler} when sampling from the Langevin diffusion at a fixed time $T$, we can decrease $\delta$. As $\delta$ becomes smaller, the discretisation error decreases and the approximation becomes more accurate. In theory, we can achieve any desired degree of accuracy in approximating the SDE\index{stochastic differential equation}  \eqref{eq:LangevinSDE} by selecting $\delta$ small enough. 
However, for a fixed $T$, the computational cost increases in proportion to $1/\delta$. Alternatively, given a fixed computational budget, $T$ decreases in proportion to $\delta$. The longer $T$ is, the more information about the diffusion's stationary distribution\index{stationary distribution} we collect, and hence, for a fixed computation budget, the variance of any estimate from the samples increases as the bias decreases. In practice, therefore, the choice of $\delta$ requires a compromise between the bias and the variance of our estimators.
 
\section{Approximate vs. Exact MCMC}
\label{sec:ch3-appr-mcmc-using}

The overdamped Langevin diffusion\index{Langevin diffusion!overdamped} has $\pi$ as its stationary distribution\index{stationary distribution} and therefore it is natural to consider this stochastic process as the basis for an MCMC\index{Markov chain Monte Carlo} algorithm. In fact, if it were possible to simulate exactly the dynamics of the Langevin diffusion, then we could use the resulting realisations at a set of discrete time points as our MCMC\index{Markov chain Monte Carlo} output. However, for general $\pi(\st)$, the Langevin dynamics\index{Langevin dynamics} are intractable, and therefore it is necessary to resort to using samples generated by the Euler--Maruyama\index{Euler-Maruyama} approximation (\ref{eq:Euler}).

This is most commonly seen with the Metropolis-adjusted Langevin Algorithm\index{Metropolis--adjusted Langevin algorithm} (MALA)\index{MALA|see{Metropolis--adjusted Langevin algorithm}} (see Section \ref{sec:ch2-MALA}). This algorithm uses the Euler--Maruyama\index{Euler-Maruyama} approximation (\ref{eq:Euler}) over an appropriately chosen time-interval, $\delta$, to define the proposal distribution of a standard Metropolis--Hastings\index{Metropolis--Hastings} sampler (see Algorithm \ref{alg:MH}). 
Simulated values are then either accepted or rejected based on the Metropolis--Hastings\index{Metropolis--Hastings} acceptance probability \eqref{eqn.MHaccProb}. 
Such an algorithm has good theoretical properties, and in particular, can scale better to high-dimensional problems than the simpler random walk MCMC\index{Markov chain Monte Carlo} algorithm. See Section \ref{sec:ch2-MALA} for a more detailed description of the MALA\index{Metropolis--adjusted Langevin algorithm} algorithm and its dimensional scaling.

A simpler algorithm is the just described \emph{unadjusted Langevin algorithm}\index{unadjusted Langevin algorithm} \eqref{eq:Euler}, which simulates from the Euler--Maruyama\index{Euler-Maruyama} approximation of the Langevin diffusion but does not use a Metropolis--Hastings\index{Metropolis--Hastings} accept-reject step, and so the stationary distribution\index{stationary distribution} of the resulting Markov chain is not $\pi$. Hence, even once the Markov chain has essentially converged, the Monte Carlo samples are from an approximation to $\pi$ rather than from $\pi$ itself. Because of this, estimators of expectations are typically biased, even as the number of samples, $n$, grows to infinity. Computationally, such an algorithm is quicker per iteration, but often this saving is small as the cost of calculating $\nabla \log\pi(\st)$, which is required for one step of the ULA\index{ULA|see{unadjusted Langevin algorithm}} algorithm, typically scales at least linearly with the dataset size.
If the MALA\index{Metropolis--adjusted Langevin algorithm} algorithm is optimally tuned, then approximately $40\%$ of the samples will be rejected, which leads to wasted computation compared to ULA\index{unadjusted Langevin algorithm} where all samples, albeit biased, are accepted. However, this is counteracted by the larger step sizes that are possible with MALA.

\subsubsection{Example: Sampling from a Gaussian Posterior Distribution}

To illustrate the computational and statistical accuracy trade-offs between the ULA\index{unadjusted Langevin algorithm} and MALA\index{Metropolis--adjusted Langevin algorithm} schemes, we consider a simple bivariate Gaussian posterior distribution\index{posterior distribution}, which we shall use as a running example throughout this chapter. We assume that data arise as realisations from a Gaussian location model with mean parameter $\st$ assumed to be unknown and the variance $\bV$ is known. 
We select a conjugate Gaussian prior for the unknown $\st$ which leads to the generative model \begin{equation}
\label{eq:running-gaussian}
    \by_j|\st \sim \mathsf{N}\left(\st,\bV
\right), \quad\quad \st \sim \mathsf{N}(\mathbf{0},\bI_2), \quad \mbox{for} \ j=1,\ldots,N,
\end{equation} 
where  we set $\bV = \begin{pmatrix}
1 & 0 \\
0 & 10
\end{pmatrix}$ and $\bI_2$ is a 2-dimensional identity matrix. For this simple model, it is possible to derive a tractable posterior distribution\index{posterior distribution}  $\st | \by \sim \mathsf{N}(\bmu_N,\bSigma_N),$ where $\bSigma_N = (N\bV^{-1} + \bI_2)^{-1}$ and $\bmu_N = \bSigma_N(\bV^{-1}\sum_{j=1}^N \by_j)$.
We can use both ULA\index{unadjusted Langevin algorithm} and MALA\index{Metropolis--adjusted Langevin algorithm} schemes to sample from the posterior distribution\index{posterior distribution} and compare the Monte Carlo accuracy of both algorithms against the known ground-truth posterior distribution\index{posterior distribution}. We measure the distributional accuracy between the true posterior $\pi$ and a Monte Carlo approximation $\tilde{\pi}$ using the Wasserstein-2 distance\index{Wasserstein metric}, 
\begin{equation}
    \label{eq:w2-distance}
    \mathsf{d}_{W_2}^2(\pi,\tilde{\pi}) = \inf_{\zeta \in \Gamma(\pi,\tilde{\pi})} \int_{\mathbb{R}^d \times \mathbb{R}^d}   \| \st-\st'\|^2 \mathrm{d}\zeta(\st,\st'),
\end{equation}
where the $\inf$ is taken with respect to all joint distributions $\zeta$ which have $\pi$ and $\hat{\pi}$ as their marginal distributions. In the setting where both $\pi$ and $\hat{\pi}$ are Gaussian, there is a tractable closed-form expression for the Wasserstein-2 distance\index{Wasserstein metric},
\begin{equation*}
    \mathsf{d}_{W_2}^2(\mathsf{N}(\bmu_a,\bSigma_a),\mathsf{N}(\bmu_b,\bSigma_b)) = \|\bmu_a-\bmu_b\|_2^2 + \mbox{trace}(\bSigma_a+\bSigma_b-2(\bSigma_a^{1/2}\bSigma_b\bSigma_a^{1/2})^{1/2}). 
\end{equation*}

In Figure \ref{fig:ula_vs_mala}, we calculated the approximate Wasserstein-2 distance\index{Wasserstein metric} between the true posterior and a moment-matched Gaussian approximation to the Monte Carlo samples generated by the ULA\index{unadjusted Langevin algorithm}/MALA\index{Metropolis--adjusted Langevin algorithm} algorithms. We simulated $N=1000$ (left panel) and $N=10000$ (right panel) data points from the model \eqref{eq:running-gaussian} and ran ULA\index{unadjusted Langevin algorithm}/MALA\index{Metropolis--adjusted Langevin algorithm} for $n=1000$ iterations. For each $N$, MALA and ULA used the same step size, $\delta=1/N$. Since the true posterior is Gaussian, we expect the moment-matched Gaussian approximation for the MALA\index{Metropolis--adjusted Langevin algorithm} sampler to get more and more accurate as the number of iterations increases. Since the ULA\index{unadjusted Langevin algorithm} algorithm update is a conditional Gaussian, the stationary distribution\index{stationary distribution} for ULA\index{unadjusted Langevin algorithm} is also  Gaussian, so the moment-matched Gaussian approximation to this will also get more and more accurate as the number of iterations increases.

Comparing the computational time for ULA\index{unadjusted Langevin algorithm} and MALA,  the per iteration cost of ULA\index{unadjusted Langevin algorithm} is comparable to MALA\index{Metropolis--adjusted Langevin algorithm} initially, if not slightly better. However, with a larger computational budget, i.e. more Monte Carlo iterations, ULA\index{unadjusted Langevin algorithm} is less accurate due to the asymptotic bias from discretising the Langevin diffusion. When taking into account the reduced computational cost of ULA\index{unadjusted Langevin algorithm}, this means that ULA\index{unadjusted Langevin algorithm} is better for small computational budgets, whereas for moderate to large computational budgets, MALA\index{Metropolis--adjusted Langevin algorithm} is better. Note that the computational budget required for MALA\index{Metropolis--adjusted Langevin algorithm} to display improved statistical efficiency over ULA\index{unadjusted Langevin algorithm} is dependent on the dataset size. This is highlighted in Figure \ref{fig:ula_vs_mala}, where $N=1000$ in the left panel and $N=10000$ in the right panel.

The reason that ULA\index{unadjusted Langevin algorithm} is competitive with MALA\index{Metropolis--adjusted Langevin algorithm} only for very small computational budgets is that the computational gain per iteration is only roughly two-fold (i.e. not calculating the accept-reject ratio roughly halves the cost as the gradient still needs to be calculated), and this is only a small gain relative to the bias that is introduced. If, on the other hand, there was a way of implementing ULA, or something like ULA, which was $O(N)$ faster, then the computational benefit compared to MALA\index{Metropolis--adjusted Langevin algorithm} would be more significant. To achieve such a speed-up, this would require an algorithm where the cost of calculating or approximating the gradient\index{gradient} is only $O(1)$ -- this is the key idea behind the \textit{stochastic gradient Langevin dynamics}\index{stochastic gradient Langevin dynamics} algorithm which will be explored in detail in the remainder of this chapter.

\begin{figure}
    \centering
    \includegraphics[scale=0.60]{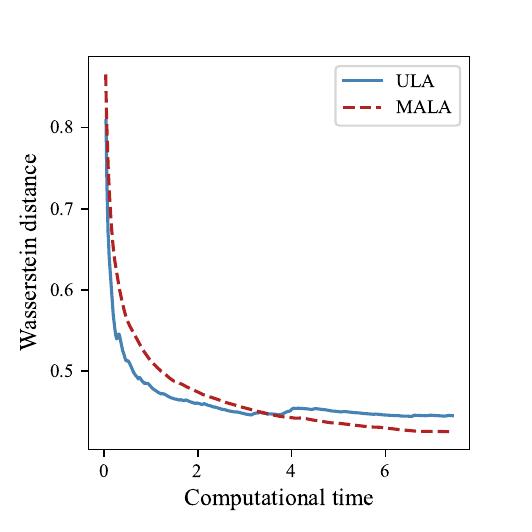}
    \includegraphics[scale=0.60]{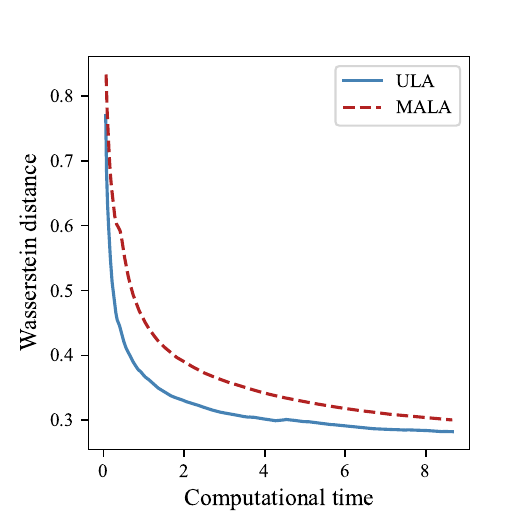}
    \caption{Wasserstein distance between the true $\pi$ and approximate posterior distribution against computational time (in seconds), where the approximate posterior $\tilde{\pi}$ is generated using the ULA and MALA\index{Metropolis--adjusted Langevin algorithm} schemes. Left panel $N=1000$ and right panel $N=10000$.}
    \label{fig:ula_vs_mala}
\end{figure}

\section{Stochastic Gradient Langevin Dynamics}
\label{sec:ch3-sgld-alg-sec}

We have seen how the ULA\index{unadjusted Langevin algorithm} algorithm is computationally faster than MALA, at the expense of introducing a bias which produces samples not from the desired invariant distribution $\pi$, but a distribution close to $\pi$.  Even without the Metropolis-Hastings acceptance probability, the ULA\index{unadjusted Langevin algorithm} algorithm still incurs a cost in calculating the gradient\index{gradient} of the log-posterior density and under common assumptions, this computational cost  scales linearly with the data set size. 

Recent interest in Bayesian analysis has considered the challenge of scalable inference in the presence of large datasets, where the log-posterior density is defined as a sum over data points. For instance, if we consider data $\by_1,\ldots,\by_N$ that are conditionally independent given $\st$, then $\pi(\st)\propto \pi_0(\st)\prod_{j=1}^N f(\by_j|\st)$. 
Here, $\pi_0(\st)$ is the prior density, and $f(\by_j|\st)$ is the likelihood for the $j$th observation. In this context, we define the log-posterior density as

\begin{equation}
    \label{eq:log-post-dens}
\log\pi(\st)=\sum_{j=1}^N \log\pi_j(\st), \ \ \mbox{where} \ \ \log\pi_j(\st)=  \log f(\by_j|\st)+ \frac{1}{N} \log \pi_0(\st). 
\end{equation}
The computational bottleneck for ULA\index{unadjusted Langevin algorithm} is in calculating $\nabla \log\pi(\st)$, which can be expensive if we have a large dataset. For a dataset with $N$ independent observations, where the log-posterior density is a sum over $N$ independent terms \eqref{eq:log-post-dens}, the computational cost of evaluating the log-posterior density, or its gradient\index{gradient}, is $O(N)$ for each iteration of ULA. 

A solution to this problem is to use \emph{stochastic gradient Langevin dynamics}\index{stochastic gradient Langevin dynamics} \cite[SGLD,][]{Welling:2011}, which avoids calculating $\nabla \log\pi(\st)$, and instead uses an unbiased estimator of the gradient\index{gradient} at each iteration. It is trivial to obtain an unbiased estimate using a random subsample of the terms in the sum. The simplest implementation is to choose $m \ll N$ and estimate $\nabla \log\pi(\st)$ with
\begin{equation} \label{eq:U-hat-simple}
\nabla^{(m)} \log\pi(\st) = \frac{N}{m} \sum_{j \in \mathcal{S}_m} \nabla \log\pi_j(\st),
\end{equation}
where $\mathcal{S}_m$ is a random sample of size $m$ taken without replacement from $\{1,\ldots,N\}$. 
We call this the \emph{simple} estimator of the gradient\index{gradient} and use the superscript $(m)$ to denote the subsample size used in constructing our estimator. The resulting SGLD\index{SGLD|see{stochastic gradient Langevin dynamics}} algorithm is given in Algorithm \ref{alg:SGLD}. Essentially, the SGLD\index{stochastic gradient Langevin dynamics} algorithm is the same as ULA\index{unadjusted Langevin algorithm} \eqref{eq:Euler} and is simulating an Euler--Maruyama discretisation of the Langevin diffusion. The only difference is that the true gradient\index{gradient} is replaced with the estimated gradient\index{gradient} \eqref{eq:U-hat-simple}. 

Using an estimator for the gradient\index{gradient} adds additional noise, with a variance of $O(\delta^2),$ and therefore the stochastic dynamics no longer follow the ULA\index{unadjusted Langevin algorithm} update equation \eqref{eq:Euler}; instead 
 the SGLD\index{stochastic gradient Langevin dynamics} algorithm targets a  distribution\index{posterior distribution} that is close to a tempered version of $\pi$. However, for sufficiently small $\delta$, this additional variance becomes negligible compared with the injected Gaussian noise of \eqref{eq:Euler}, which has a variance of $\delta$. 
It is possible to generalise Algorithm \ref{alg:SGLD} to the setting of adaptive step sizes $\delta_k$ which are dependent on the iteration number $k$. This is not commonly used in practice and for simplicity, we work with the constant step size\index{step size} version given in Algorithm \ref{alg:SGLD}.
A justification for using the SGLD\index{stochastic gradient Langevin dynamics} algorithm with a decaying step size\index{step size} can be given by an informal argument along the lines that if the step size\index{step size} is $\delta_k \downarrow 0$ for $k\rightarrow\infty$, then the process will converge to the true overdamped Langevin diffusion\index{Langevin diffusion!overdamped}, and hence the Monte Carlo samples will be exact in the limit (see Section \ref{sec:ch3-sgld-theory}).

\begin{algorithm}
    \caption{Stochastic Gradient Langevin Dynamics (SGLD)}
    \KwIn{$\st_0$, $\delta$.}
    \For{$k \in 1, \dots, n$} {
        Draw a subset $\mathcal{S}_m \subset \{1,\ldots,N\}$  \\
        Estimate $\nabla^{(m)} \log\pi(\st)$ using \eqref{eq:U-hat-simple} \\
        Draw $\bZ_k \sim \mathsf{N}( 0, \delta \mathbf{I} )$ \\
        Update $\st_{k+1} \leftarrow \st_k + \frac{\delta}{2} \nabla^{(m)} \log\pi(\st)  + \bZ_k$
    }
    \label{alg:SGLD}
\end{algorithm}

The advantage of SGLD\index{stochastic gradient Langevin dynamics} is that, if the subsample size $m$ is much smaller than the full dataset size $N$, the per-iteration cost of the algorithm can be much smaller than that of either MALA\index{Metropolis--adjusted Langevin algorithm} or ULA. For large data applications, SGLD\index{stochastic gradient Langevin dynamics} has been empirically shown to perform better than standard MCMC\index{Markov chain Monte Carlo} when there is a fixed computational budget \citep{ahn2015large,li2016scalable}.  In challenging examples, performance has been based on measures of predictive accuracy on a held-out test dataset, rather than based on how accurately the samples approximate the true posterior distribution\index{posterior distribution}. Furthermore, the conclusions from such studies will clearly depend on the computational budget, with larger budgets favouring exact methods such as MALA\index{Metropolis--adjusted Langevin algorithm};  see the theoretical results in Section \ref{sec:ch3-sgld-theory} and empirical results in Section \ref{sec:ch3-appr-mcmc-using}.

The SGLD\index{stochastic gradient Langevin dynamics} algorithm is closely related to \emph{stochastic gradient descent} (SGD) \citep{robbins1951stochastic}, an efficient algorithm for finding the local maxima of a function. The only difference is the inclusion of the additive Gaussian noise at each iteration of SGLD\index{stochastic gradient Langevin dynamics}. Without the noise, but with a suitably decreasing step size\index{step size}, SGD would converge to a local maximum of the density $\pi(\st)$. Again, SGLD\index{stochastic gradient Langevin dynamics} has been shown empirically to out-perform stochastic gradient descent  \citep{chen2014stochastic}, at least in terms of predictive accuracy -- intuitively this is because SGLD\index{stochastic gradient Langevin dynamics} will produce samples from the region around the estimate obtained by SGD, and thus can average over the uncertainty in the parameters. This strong link between SGLD\index{stochastic gradient Langevin dynamics} and SGD may also explain why the former performs well when compared to exact MCMC\index{Markov chain Monte Carlo} methods, at least in terms of predictive accuracy.

\subsection{Controlling Stochasticity in the Gradient Estimator}
\label{sec:ch3-sgld-grad-var}

The key ingredient of SGLD\index{stochastic gradient Langevin dynamics} is found in replacing the true gradient\index{gradient} with an unbiased estimator. 
The more accurate this estimator is, the lower the bias will be for the same computational cost, and thus it is natural to consider alternatives to the simple estimator (\ref{eq:U-hat-simple}).
One way of reducing the variance of a Monte Carlo estimator is to use control variates\index{control variate} (see Section \ref{sec.ControlVariatesBasics} for a detailed explanation), which in our setting involves choosing a set of simple functions $g_j$, $j=1,\ldots,N$, which we refer to as control variates\index{control variate}, and whose sum $\sum_{j=1}^N g_j(\st)$ can be evaluated for any $\st$. 

We can rewrite the full-data gradient\index{gradient} of the log-posterior density as
\[
 \sum_{j=1}^N \nabla \log\pi_j(\st) = \sum_{j=1}^N g_j(\st) + \sum_{j=1}^N \left(\nabla \log\pi_j(\st) - g_j(\st)  \right), 
\]
and from this, we can obtain an unbiased estimator 
\begin{equation}
\label{eq:cv-grad-est}
\sum_{j=1}^N g_j(\st)+\frac{N}{m} \sum_{j \in \mathcal{S}_m} (\nabla \log\pi_j(\st) - g_j(\st) ),    
\end{equation}
where again $\mathcal{S}_m$ is a random sample of size $m$  drawn from $\{1,\ldots,N\}$. The intuition behind this idea is that if each $g_j(\st)\approx \nabla \log\pi_j(\st)$, then this estimator can have a much smaller variance than the simple subsampled gradient estimator \eqref{eq:U-hat-simple}.

One approach to choosing the control variate\index{control variate} function $g_j(\st)$ that is often used in practice, is to (i) use SGD to find an approximation, $\widehat{\st}$, to the mode of the distribution $\pi$; and (ii) set $g_j(\st)=\nabla \log\pi_j(\widehat{\st})$. This leads to the following control variate\index{control variate} estimator,
\begin{equation}
\label{eq:cv-estimator}    
 \nabla^{(m)} \log\pi_{\mathrm{cv}}(\st)= \sum_{j=1}^N \nabla \log\pi_j(\widehat{\st}) + \frac{N}{m} \sum_{j \in \mathcal{S}_m} \left( \nabla \log\pi_j(\st)-\nabla \log\pi_j(\widehat{\st}) \right).
\end{equation}
Implementing such an estimator involves an initial up-front cost for finding a suitable $\widehat{\st}$ and then calculating, storing, and summing $\nabla \log\pi_j(\widehat{\st})$ for $j=1,\ldots,N$. For these types of control variate\index{control variate} approaches, the main cost is from finding a suitable $\widehat{\st}$. 
Although, once found, we can then use $\widehat{\st}$ as a starting value for the SGLD\index{stochastic gradient Langevin dynamics} algorithm, replacing $\st_0$ with $\widehat{\st}$ in Algorithm \ref{alg:SGLD}, which can significantly reduce the burn-in\index{burn-in} phase.

The advantage of using the control variate-based\index{control variate} estimator can be seen if we compare the variance bounds of this estimator against the simple estimator. If we assume that each $\log \pi_j(\st)$ is twice continuously differentiable on $\mathbb{R}^d$ and has Lipschitz-continuous gradients, then there exist positive constants $L_j>0$ for all  $j=1,\ldots,N$, such that
\begin{equation}
\label{eq:ass-lipschitz-j}
 \|\nabla \log\pi_j(\st)-\nabla \log\pi_j(\st')\| \leq L_j \|\st-\st'\|.
\end{equation}

Lemma \ref{cv-lemma} and several subsequent results provide bounds on the trace of the variance matrix of each estimator of $\nabla \log \pi(\st)$. Since variances are non-negative, this bounds the variances of each individual component. Also, since all eigenvalues of a variance matrix are non-negative, it bounds the largest of these; \emph{i.e.} the variance of the worst-behaved linear combination of components. Finally, for any $d$-vector random variable $\bxi$ with $\Expect{\bxi}=0$,
\[
\mathsf{tr}(\Var{\bxi})
=
\Expect{\sum_{i=1}^d\xi_i^2}
=
\Expect{\|\bxi\|^2}
\geq
\Var{\|\bxi\|},
\]
so the bounds also apply to the variance of the Euclidean norm of the gradient. For any random vector $\bxi$ with $\Expect{\bxi}=0$, we refer to the important quantity of $\Expect{\|\bxi\|^2}=\mathsf{tr}(\Var{\bxi})$ as its \emph{pseudo-variance}.

\begin{lemma}
\label{cv-lemma}
Assume condition \eqref{eq:ass-lipschitz-j}, then there are constants $C_1,C_2>0$ where the pseudo variances of the simple gradient estimator \eqref{eq:U-hat-simple} and control variate-based\index{control variate} gradient estimator \eqref{eq:cv-estimator} have the following bounds:
\begin{align}
\label{eq:var-grad-bounds}
     \mathsf{tr}\left(\mathrm{Var} \left[ \nabla^{(m)} \log\pi(\st) \right]\right) &\leq C_1 \frac{N^2}{m},\\ \mathsf{tr}\left(\mathrm{Var} \left[ \nabla^{(m)} \log\pi_{\mathrm{cv}}(\st) \right]\right) 
     &\leq C_2 \|\st-\widehat{\st}\|^2 \frac{N^2}{m}, \label{eq:var-grad-bounds2}
 \end{align}

\end{lemma}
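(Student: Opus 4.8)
The plan is to compute the pseudo-variance of each estimator directly, exploiting the fact that both estimators are of the form ``deterministic offset plus a scaled subsample average of mean-zero increments,'' and then bound the resulting sum-of-squares by the Lipschitz hypothesis \eqref{eq:ass-lipschitz-j}. I would treat sampling without replacement; the key probabilistic fact is that for a simple random sample $\mathcal{S}_m$ of size $m$ from $\{1,\dots,N\}$ and any fixed vectors $\bv_1,\dots,\bv_N$, the estimator $\frac{N}{m}\sum_{j\in\mathcal{S}_m}\bv_j$ is unbiased for $\sum_{j=1}^N\bv_j$, and its pseudo-variance (trace of its covariance) can be written in terms of the $\bv_j$ and $\bar{\bv}=\frac1N\sum_j\bv_j$. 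Concretely, one has $\mathsf{tr}(\Var{\frac{N}{m}\sum_{j\in\mathcal{S}_m}\bv_j}) = \frac{N^2}{m}\,\frac{N-m}{N-1}\,\frac1N\sum_{j=1}^N\|\bv_j-\bar{\bv}\|^2 \le \frac{N^2}{m}\cdot\frac1N\sum_{j=1}^N\|\bv_j\|^2 = \frac{N}{m}\sum_{j=1}^N\|\bv_j\|^2$, where the inequality drops the factor $\tfrac{N-m}{N-1}\le 1$ and uses $\sum_j\|\bv_j-\bar\bv\|^2\le\sum_j\|\bv_j\|^2$. (If the book intends sampling with replacement, the identity is even simpler and the same bound holds.)

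For the simple estimator \eqref{eq:U-hat-simple}, apply this with $\bv_j=\nabla\log\pi_j(\st)$. This gives $\mathsf{tr}(\Var{\nabla^{(m)}\log\pi(\st)})\le\frac{N}{m}\sum_{j=1}^N\|\nabla\log\pi_j(\st)\|^2$. To get the clean $C_1 N^2/m$ bound I would note that, under the standing assumptions of the chapter (each $\log\pi_j$ has Lipschitz gradient, and the posterior is well-behaved enough that $\sup_j\|\nabla\log\pi_j(\st)\|$ is finite — or, more carefully, that $\frac1N\sum_j\|\nabla\log\pi_j(\st)\|^2$ is bounded on the region of interest), there is a constant $C_1$ with $\sum_{j=1}^N\|\nabla\log\pi_j(\st)\|^2\le C_1 N$, whence the bound $C_1 N^2/m$ follows. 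I would flag that this step is where an implicit boundedness assumption on the per-datum gradients is being used; the Lipschitz condition alone controls differences, not magnitudes, so strictly one needs $\st$ to range over a fixed compact set or needs $\sup_\st\frac1N\sum_j\|\nabla\log\pi_j(\st)\|^2<\infty$. This is the main obstacle — not a hard calculation, but a modelling assumption that must be made explicit.

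For the control-variate estimator \eqref{eq:cv-estimator}, write it as $\sum_{j=1}^N\nabla\log\pi_j(\widehat\st) + \frac{N}{m}\sum_{j\in\mathcal{S}_m}\bw_j(\st)$ with $\bw_j(\st):=\nabla\log\pi_j(\st)-\nabla\log\pi_j(\widehat\st)$. The first term is deterministic, so it does not contribute to the variance, and applying the sampling identity with $\bv_j=\bw_j(\st)$ gives $\mathsf{tr}(\Var{\nabla^{(m)}\log\pi_{\mathrm{cv}}(\st)})\le\frac{N}{m}\sum_{j=1}^N\|\bw_j(\st)\|^2$. Now invoke \eqref{eq:ass-lipschitz-j}: $\|\bw_j(\st)\|\le L_j\|\st-\widehat\st\|$, so $\sum_{j=1}^N\|\bw_j(\st)\|^2\le\big(\sum_{j=1}^N L_j^2\big)\|\st-\widehat\st\|^2$. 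Setting $C_2:=\frac1N\sum_{j=1}^N L_j^2$ (a finite constant by hypothesis, and typically $O(1)$ if the $L_j$ are uniformly bounded) yields $\mathsf{tr}(\Var{\nabla^{(m)}\log\pi_{\mathrm{cv}}(\st)})\le C_2\|\st-\widehat\st\|^2\frac{N^2}{m}$, which is exactly \eqref{eq:var-grad-bounds2}. The contrast between the two bounds — the extra $\|\st-\widehat\st\|^2$ factor — is precisely the payoff of the control variate: near $\widehat\st$ the variance is small, whereas the simple estimator's bound has no such decay. I would close by remarking that both bounds are uniform in the draw of $\mathcal{S}_m$ in the sense that they bound the conditional variance given $\st$, and that the with-replacement case is identical with $C_1,C_2$ unchanged.
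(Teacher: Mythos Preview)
Your approach is correct and essentially the same as the paper's: both compute the pseudo-variance of the subsample average directly, drop the centring to get $\frac{N}{m}\sum_j\|\bv_j\|^2$, and then invoke \eqref{eq:ass-lipschitz-j} on the differences $\bw_j=\nabla\log\pi_j(\st)-\nabla\log\pi_j(\widehat\st)$ to obtain \eqref{eq:var-grad-bounds2} with the identical constant $C_2=\frac1N\sum_{j=1}^N L_j^2$. The paper proves only the control-variate case and asserts the simple-estimator bound ``follows analogously''; your observation that this analogous argument yields a $\st$-dependent constant $\frac1N\sum_j\|\nabla\log\pi_j(\st)\|^2$, and that a genuinely $\st$-independent $C_1$ requires an extra boundedness assumption beyond \eqref{eq:ass-lipschitz-j}, is well taken and in fact more careful than the paper itself.
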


\begin{proof}
   We prove this result for the control variate-based\index{control variate} gradient estimators \eqref{eq:cv-estimator}; the result for the simple SGLD\index{stochastic gradient Langevin dynamics} estimator \eqref{eq:U-hat-simple} follows analogously.
  
   We first define $\bxi := \nabla^{(m)} \log\pi_{\mathrm{cv}}(\st) - \nabla \log\pi(\st)$, so that $\bxi$ measures the noise in the gradient estimate and has mean zero. The trace of the variance in the noise is then given by 
\begin{align*}
    &\Expect{\norm{\bxi}^2} 
    = \Expect{ \norm{ \nabla^{(m)} \log\pi_{\mathrm{cv}}(\st) - \nabla \log\pi(\st) }^2} \\
    &= \Expect{\norm{  \frac{N}{m} \sum_{j \in \mathcal{S}_m} \left( \nabla \log\pi_j(\st)-\nabla \log\pi_j(\widehat{\st}) \right) - \left( \nabla \log\pi(\st) - \nabla \log\pi(\widehat{\st}) \right) }^2} \\
    &=
 \Expect{\norm{  \frac{N}{m} \sum_{j \in \mathcal{S}_m} \left\{\left( \nabla \log\pi_j(\st)-\nabla \log\pi_j(\widehat{\st}) \right) - \frac{1}{N}\left( \nabla \log\pi(\st) - \nabla \log\pi(\widehat{\st}) \right)\right\} }^2} \\   
    &\leq \frac{N^2}{m^2} \Expect {\sum_{j \in \mathcal{S}_m} \norm{ \left( \nabla \log\pi_j(\st)-\nabla \log\pi_j(\widehat{\st}) \right) - \frac{1}{N}\left( \nabla \log\pi(\st) - \nabla \log\pi(\widehat{\st}) \right) }^2}.
\end{align*}
where the final line follows from the triangle inequality and due to independence between the $\nabla \log\pi_j(\st)$ terms in the setting of subsampling with replacement. For subsampling without replacement, the sampled indices will be negatively correlated and thus will lead to lower variance. For any random variable $\bX$, we have  $\Expect{\norm{\bX - \Expect{\bX}}^2} \leq \Expect{\norm{\bX}^2}$. Using this result, and the Lipschitz assumption \eqref{eq:ass-lipschitz-j}, leads to  
\begin{align*}
    \Expect{\norm{\bxi}^2}
    &\leq \frac{N^2}{m^2} \sum_{j \in \mathcal{S}_m} \Expect{ \norm{\nabla \log\pi_j(\st)-\nabla \log\pi_j(\widehat{\st})}^2} \\
    &\leq \frac{N^2}{m} \Expect{\norm{\nabla \log\pi_j(\st)-\nabla \log\pi_j(\widehat{\st})}^2} \\
    &\leq \frac{N^2}{m} \frac{1}{N}\sum_{j=1}^N  \left( L_j\norm{\st-\widehat{\st}}\right)^2 = \frac{N^2}{m} \norm{\st-\widehat{\st}}^2 \frac{1}{N}\left(\sum_{j=1}^N L_j^2 \right), 
\end{align*}
where the second line follows from the exchangeability of the indices $j$ and on that line, $j$, is a single index sampled uniformly at random from $1,\dots,N$. This proves the stated result and gives  $C_2=\frac{1}{N}\sum_{j=1}^N L_j^2$.
\end{proof}

If we make the further assumption that for all $j=1,\ldots,N$ there exists a positive constant $L>0$ such that
 \begin{equation}
\label{eq:ass-lipschitz-all}
 \|\nabla \log\pi_j(\st)-\nabla \log\pi_j(\st')\| \leq L \|\st-\st'\|
\end{equation}
holds. Then, it is straightforward to show that we have the following Lipschitz bound on the gradient of the log-posterior,
 \begin{equation}
\label{eq:ass-lipschitz-post}
 \|\nabla \log\pi(\st)-\nabla \log\pi(\st')\| \leq LN \|\st-\st'\|,
\end{equation}
which now leads to an updated constant $C_2=L^2$ in \eqref{eq:var-grad-bounds2} of Lemma \ref{cv-lemma}.

 By comparing the upper bounds on the variance of the gradients in \eqref{eq:var-grad-bounds} and \eqref{eq:var-grad-bounds2}, we can see that when $\st$ is close to $\widehat{\st}$ we would expect the variance in the control variate\index{control variate} estimator to be smaller than for the simple estimator. Furthermore, in many big data settings where $N$ is large, we would expect by the Bernstein--von Mises theorem \citep[e.g.][]{lecam1986} that a value of $\st$ drawn from the posterior distribution\index{posterior distribution} to be of distance $O(N^{-1/2})$ from the mode of the distribution $\widehat{\st}$, i.e. we expect $\|\st-\widehat{\st}\|^2$ to be $O(N^{-1})$. Therefore, compared to the $O(N^2/m)$ variance from the simple estimator \eqref{eq:var-grad-bounds}, we would expect to see a reduced $O(N/m)$ variance \eqref{eq:var-grad-bounds2} for the
 control variate\index{control variate} gradient estimator. This simple argument suggests that, for the same level of accuracy, we can reduce the computational cost of SGLD\index{stochastic gradient Langevin dynamics} by $O(N)$ if we use control variate-based\index{control variate} gradient estimators. This is supported by a number of theoretical results which show that if we ignore the pre-processing cost of finding $\widehat{\st}$, the computational cost per effective sample of SGLD\index{stochastic gradient Langevin dynamics} with control variates\index{control variate} is $O(1)$, rather than the $O(N)$ cost for SGLD\index{stochastic gradient Langevin dynamics} with the simple gradient estimator \eqref{eq:U-hat-simple}.

A further consequence of these bounds on the variance is that they suggest that if $\st$ is far from $\widehat{\st}$, then the variance when using control variates\index{control variate} can be larger, potentially substantially larger than that of the simple estimator. This point is illustrated in the top-left panel of Figure \ref{fig:stoc_grads}, where the variance in the simple gradient estimator \eqref{eq:U-hat-simple} is approximately constant for all $\st$. However, the variance in the control variate\index{control variate} gradient estimator increases as $\st$ moves away from $\widehat{\st}$. Two natural ways of addressing this issue have been proposed in the literature. One option is to only use the control variate\index{control variate} estimator when $\st$ is close enough to $\widehat{\st}$ \citep{Fearnhead:2018}, though it is up to the user to define what is ``close enough'' in practice. The second approach, a Langevin interpretation of the optimisation algorithm SAGA, which was proposed in \cite{Dubey:2016}, is to update $\widehat{\st}$ whilst running the SGLD\index{stochastic gradient Langevin dynamics} algorithm. This can be done efficiently by using $g_j(\st)=\nabla \log\pi_j(\st_{k_j})$, where $\st_{k_j}$ is the value of $\st$ at the most recent iteration of the SGLD\index{stochastic gradient Langevin dynamics} algorithm where $\nabla \log\pi_j(\st)$ was evaluated. This involves updating the storage of $g_j(\st)$ and its sum at each iteration; importantly the latter can be done with an $O(m)$ cost. 

An alternative approach to reducing the variance, for both the simple and control variate-based\index{control variate} gradient estimators, is to use non-uniformly generated subsamples within the gradient\index{gradient} estimator. This approach, introduced in \cite{putcha2023preferential} and known as \textit{preferential subsampling}, generates subsamples $\mathcal{S}_m \subset \{1,\ldots,N\}$ of size $m$ with replacement, where the probability of drawing the $j$th data point is $p_j$ and the expected number of times that the $j$th data point appears in the subsample is $mp_j$. This then leads to a new simple, as opposed to CV-based, unbiased gradient\index{gradient} estimator
\begin{align}
\label{eq:ps-grad-estimator}
 \nabla^{(m)} \log\pi_{\mathrm{ps}}(\st) = \frac{1}{m}\sum_{j \in \mathcal{S}_m} \frac{\nabla \log\pi_j(\st)}{p_j},
\end{align}
where $p_j>0$ for all $j=1,\ldots,N$ and $\sum_{j=1}^N p_j=1$.
 The simple gradient estimator \eqref{eq:U-hat-simple} is given as a special case when $p_j = 1/N$ for all $j$. 

If we define the error in the stochastic gradient as $\bxi=\nabla^{(m)} \log\pi_{\mathrm{ps}}(\st) - \nabla \log\pi(\st)$, then taking expectations over the random subsample, whose distribution depends on the weights $\mathbf{p}=(p_1,\ldots,p_N)$, leads to the pseudo-variance of $\nabla^{(m)} \log \pi_{\mathrm{ps}}(\st),$
$$\Expect{\|\bxi\|^2}=\mathsf{tr}\left(\mbox{Var}[\nabla^{(m)} \log\pi_{\mathrm{ps}}(\st)]\right).$$

\begin{lemma}
    The optimal weights $\mathbf{p}^*$ which minimise the pseudo-variance
    $$
    \min_{\mathbf{p}:p_j \in [0,1], \sum_j p_j=1 } \mathsf{tr}\left(\mathrm{Var}[\nabla^{(m)} \log\pi_{\mathrm{ps}}(\st)]\right)
    $$
 are equivalently found by minimising 
$$
 \min_{\mathbf{p}}  \sum_{j=1}^N \frac{1}{p_{j}} \big\|\nabla \log\pi_j (\st) \big\|^2,
$$
 resulting in optimal weights of the form 
 \begin{equation}
\label{eq:sgld-opt-weights}
p_{j}^* = \frac{\| \nabla \log\pi_j (\st)\|}{\sum_{i=1}^N \| \nabla \log\pi_i (\st)  \| }\, \text{ for } j=1, \ldots, N. 
\end{equation}
\end{lemma}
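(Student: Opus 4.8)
The plan is to first obtain a closed form for the pseudo-variance as a function of $\mathbf{p}$, which reduces the problem to an elementary constrained minimisation, and then to solve that minimisation by Cauchy--Schwarz.

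First I would use the with-replacement structure of $\mathcal{S}_m$: writing the sampled indices as $J_1,\dots,J_m$, i.i.d.\ with $\Prob{J_k=j}=p_j$, the estimator \eqref{eq:ps-grad-estimator} is the average $\frac{1}{m}\sum_{k=1}^m Y_k$ of the i.i.d.\ terms $Y_k:=\nabla\log\pi_{J_k}(\st)/p_{J_k}$. Since $\Expect{Y_k}=\sum_{j=1}^N p_j\,\nabla\log\pi_j(\st)/p_j=\nabla\log\pi(\st)$, the estimator is unbiased, so $\bxi$ has mean zero and $\mathsf{tr}(\Var{\nabla^{(m)}\log\pi_{\mathrm{ps}}(\st)})=\Expect{\|\bxi\|^2}$. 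By independence the covariance terms vanish, giving
\[
\mathsf{tr}\!\left(\Var{\nabla^{(m)}\log\pi_{\mathrm{ps}}(\st)}\right)
=\frac{1}{m}\left(\Expect{\|Y_1\|^2}-\|\nabla\log\pi(\st)\|^2\right)
=\frac{1}{m}\left(\sum_{j=1}^N\frac{\|\nabla\log\pi_j(\st)\|^2}{p_j}-\|\nabla\log\pi(\st)\|^2\right).
\]
As neither the factor $1/m$ nor the term $\|\nabla\log\pi(\st)\|^2$ depends on $\mathbf{p}$, minimising the pseudo-variance over $\mathbf{p}$ is equivalent to minimising $\sum_{j=1}^N\|\nabla\log\pi_j(\st)\|^2/p_j$, which is the first claim.

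Next I would minimise $\sum_{j=1}^N a_j^2/p_j$ over $\{\mathbf{p}:p_j>0,\ \sum_j p_j=1\}$, writing $a_j:=\|\nabla\log\pi_j(\st)\|$. The Cauchy--Schwarz inequality gives
\[
\Big(\sum_{j=1}^N a_j\Big)^2=\Big(\sum_{j=1}^N\frac{a_j}{\sqrt{p_j}}\sqrt{p_j}\Big)^2\le\Big(\sum_{j=1}^N\frac{a_j^2}{p_j}\Big)\Big(\sum_{j=1}^N p_j\Big)=\sum_{j=1}^N\frac{a_j^2}{p_j},
\]
with equality if and only if $a_j/\sqrt{p_j}\propto\sqrt{p_j}$, i.e.\ $p_j\propto a_j$; normalising yields $p_j^\ast=a_j/\sum_{i=1}^N a_i$, which is \eqref{eq:sgld-opt-weights}. (Equivalently, one could introduce a Lagrange multiplier for the constraint $\sum_j p_j=1$ and set the partial derivatives $-a_j^2/p_j^2$ equal across $j$, again obtaining $p_j\propto a_j$.) Under the standing assumption $p_j>0$ the $a_j$ are positive, so $\mathbf{p}^\ast$ is feasible; any coordinates with $a_j=0$ contribute nothing and may be assigned arbitrarily small mass. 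Since the lower bound $(\sum_j a_j)^2$ is attained at $\mathbf{p}^\ast$, this is the minimiser.

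I expect the only genuine obstacle to be the variance bookkeeping for sampling with replacement --- making sure the estimator really is an average of $m$ i.i.d.\ summands, so that the variance is exactly $1/m$ times a single-term variance with no surviving covariance terms; everything after that is the standard argument for minimising a weighted sum of reciprocals subject to a simplex constraint.
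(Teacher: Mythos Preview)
Your proposal is correct and matches the paper's approach: the paper's proof sketch likewise expands the trace of the variance to obtain $\frac{1}{m}\sum_{j=1}^N \|\nabla\log\pi_j(\st)\|^2/p_j + C$ with $C$ independent of $\mathbf{p}$, and then optimises over the simplex. The only cosmetic difference is that the paper cites Lagrange multipliers for the constrained minimisation whereas you lead with Cauchy--Schwarz (and mention Lagrange multipliers parenthetically); both arguments are standard and yield the same $p_j^\ast\propto\|\nabla\log\pi_j(\st)\|$.
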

\begin{proof}
    A full proof of this result is given in \cite{putcha2023preferential}. In brief, this result follows from two steps. Firstly, a straightforward expansion of $\mathsf{tr}\left(\mbox{Var}[\nabla^{(m)} \log\pi_{\mathrm{ps}}(\st)]\right)$ with the gradient estimator given in \eqref{eq:ps-grad-estimator} leads to  $\frac{1}{m}  \sum_{j=1}^N \frac{1}{p_{j}} \big\|\nabla \log\pi_j (\st) \big\|^2 + C,$ where $C>0$ is a constant that is independent of $\mathbf{p}$. Minimising this term with respect to the constraint $\{\mathbf{p}:p_j \in [0,1], \sum_j p_j=1\}$ follows by using Lagrange multipliers.  

\end{proof}

In practice, the optimal weights $p_j^*$, which would minimise the gradient variance, are dependent on the current iterate $\st_k$ of the SGLD\index{stochastic gradient Langevin dynamics} algorithm. This means that for each iteration $k$ in Algorithm \ref{alg:SGLD}, every $p_j$ for all $j=1,\ldots,N$ would need to be updated, which is an $O(N)$ calculation and would defeat the purpose of using subsamples of size $m \ll N$ in the SGLD\index{stochastic gradient Langevin dynamics} algorithm. We could instead replace the optimal weights $p_j^*$ \eqref{eq:sgld-opt-weights} with approximate weights $\widehat{p}_j,$ where in \eqref{eq:sgld-opt-weights} we replace $\st$ with an estimate of the posterior mode $\widehat{\st}$. As discussed in the case of control variate\index{control variate} gradient estimators, finding $\widehat{\st}$ requires a one-off pre-processing cost.  

Weighted sampling can be combined with the control variate\index{control variate} estimator \eqref{eq:cv-grad-est} with a natural choice of weights that are increasing with the size of the derivative of $\nabla \log\pi_j(\st)$ at $\widehat{\st}$. We can also use stratified sampling to try to ensure each subsample\index{subsample} is representative of the full data. However, regardless of the choice of gradient estimator, an important question is: how large should the subsample\index{subsample} size $m$ be? Taking one iteration of SGLD\index{stochastic gradient Langevin dynamics}, the variance of the noise from the gradient term is dominated by the variance of the injected noise. As the former is proportional to $\delta^2,$ and the latter to $\delta$\index{step size}, $m$ will be $O(1)$ as $N$ increases if we choose $\delta=O(1/N)$ (the square of the target size). The choice of step size is discussed further in Section \ref{ch3:implementation-guidance}.

Subsample\index{subsample} size could also be dynamically adjusted whilst running the SGLD\index{stochastic gradient Langevin dynamics} algorithm. The idea, which is particularly relevant when using control variates\index{control variate}, is that the accuracy of the estimator of the gradient can vary considerably with $\btheta$. To counteract this, it may be more efficient to have a larger subsample\index{subsample} size when the variance would be larger. One simple approach is to specify an upper bound, say $V$, on the variance that we would like to achieve, and consider how to vary subsample\index{subsample} size to achieve this.

An extension of the result in Lemma \ref{cv-lemma} can be derived for weighted gradients if the control variate\index{control variate} gradient estimator \eqref{eq:cv-grad-est} in Lemma \ref{cv-lemma} is replaced with the preferential sampling gradient estimator \eqref{eq:ps-grad-estimator}. This results in a new upper bound for the variance of the gradient estimator,

\begin{equation}
    \label{eq:var-bound-stoch-grad}
     \mathsf{tr}\left(\Var { \nabla^{(m)} \log\pi_{\mathrm{cv}}(\st) }\right) \leq \frac{1}{m} \|\st - \widehat{\st} \|^2 
     \sum_{i=1}^N \frac{L_j^2}{p_j}
     ,
\end{equation}
where $p_j$ are subsample\index{subsample} weights \eqref{eq:sgld-opt-weights} and $L_j$ are Lipschitz constants on the gradient components \eqref{eq:ass-lipschitz-j}. A similar result can be derived for the simple gradient estimator \eqref{eq:U-hat-simple}. As with the control variate bound in Lemma \ref{cv-lemma}, the bound in \eqref{eq:var-bound-stoch-grad} is also $O(N)$. The $O(1/N)$ term $\|\btheta-\widehat{\btheta}\|^2$ cancels with the summation over $N$ terms. However, each of the $N$ terms is $O(N)$ since each $p_j$ is $O(1/N)$. Choosing appropriate weights $p_j$ for each $j=1,\ldots,N$ reduces the multiplier of $N$.

Given our specified upper bound $V>0$, we can plug this into \eqref{eq:var-bound-stoch-grad}. By rearranging the inequality and using the optimal weights $p_j^*$ \eqref{eq:sgld-opt-weights}, we can show that the subsample\index{subsample} size should be at least
$$
m > \frac{1}{V}\|\st - \widehat{\st} \|^2 \bigg(\sum_{i=1}^N \frac{L_j^2}{p_j^*}\bigg).
$$
For a fixed bound $V$, fixed $N$ and fixed weights $p_j^*$, the optimal subsample\index{subsample} size is $m \propto \|\st - \widehat{\st} \|^2,$  which suggests that for SGLD with control variates\index{control variate}, the subsample\index{subsample} size should increase at a rate which is quadratic in the distance between the current iterate of the SGLD\index{stochastic gradient Langevin dynamics} chain $\st_k$ and the mode of the posterior distribution\index{posterior distribution} $\widehat{\st}$. Whilst the constant of proportionality may be hard to calculate, a user can choose a constant based on a reasonable average subsample\index{subsample} size they want to achieve -- and this would still enforce that we have similar accuracy for the estimate of the gradient for all iterations of SGLD\index{stochastic gradient Langevin dynamics}.

\subsection{Example: The Value of Control Variates}

Recall the tractable Gaussian posterior example $\pi(\st) = \mathsf{N}(\bmu_N,\bSigma_N)$ in \eqref{eq:running-gaussian}, where $\bSigma_N = (N\bV^{-1} + \bI_2)^{-1}$ and $\bmu_N = \bSigma_N(\bV^{-1}\sum_{j=1}^N \by_j)$. From this posterior distribution\index{posterior distribution}, we can calculate the full-data gradient\index{gradient}, as well as the simple stochastic \eqref{eq:U-hat-simple} and control variate-based\index{control variate} gradients \eqref{eq:cv-estimator}. The gradient\index{gradient} for the $j$th component is $\nabla\log\pi_j(\st)=  \nabla\log f(\by_j|\st)+(1/N)\nabla\log \pi_0(\st)$ and for this example we can easily derive the posterior mode $\widehat{\st}= \bSigma_N(\bV^{-1}\sum_{j=1}^N \by_j)$ and use this within the control variate\index{control variate} gradient estimator. For the full-data, simple stochastic, and control variate\index{control variate} gradient estimators we have the following,  

\begin{align*}
\nabla \log \pi(\st) &= \sum_{j=1}^N \nabla\log\pi_j(\st) = -\sum_{j=1}^N \bV^{-1} \by_j - (N\bV^{-1} +\mathbf{I}_2)\st, \\
\nabla^{(m)} \log \pi(\st) &= \frac{N}{m}\sum_{j \in \mathcal{S}_m} \nabla\log\pi_j(\st)=  -\frac{N}{m}\sum_{j \in \mathcal{S}_m} \bV^{-1}\by_j - \left(N\bV^{-1} +\mathbf{I}_2\right)\st,
\end{align*}
\begin{align*}
\nabla^{(m)} \log\pi_{\mathrm{cv}}(\st) &= \sum_{j=1}^N \nabla \log\pi_j(\widehat{\st}) + \frac{N}{m} \sum_{j \in \mathcal{S}_m} \left( \nabla \log\pi_j(\st)-\nabla \log\pi_j(\widehat{\st}) \right) \\
&= -\sum_{j=1}^N \bV^{-1} \by_j - (N\bV^{-1} +\mathbf{I}_2)\widehat{\st}
+ \left(N\bV^{-1}+\mathbf{I}_2\right)\left(\widehat{\st}-\st\right)  \\
&= -\sum_{j=1}^N \bV^{-1}\by_j - (N\bV^{-1} +\mathbf{I}_2)\st.
\end{align*}
 For this Gaussian example, with the control variate\index{control variate} estimator set to the posterior mode, the control variate-based\index{control variate} gradient results in the same gradient estimator as the full-data gradient\index{gradient}. The simple gradient estimator \eqref{eq:U-hat-simple} gives an unbiased estimate of the full-data gradient\index{gradient}, however, for small subsample\index{subsample} sizes, this estimator can lead to poor posterior approximations. For this particular model, it is possible to separate the data $\by_j$ from the parameters $\st$ in such a way that the gradient estimator can be updated at each Monte Carlo iteration without re-evaluating the data, i.e. the data component of the gradient could be pre-computed and stored. Therefore, for this particular model, the simple SGLD gradient estimator would not be recommended as the model structure easily leads to more efficient gradient estimators. However, deriving better gradient estimators, such as for this Gaussian posterior model, is usually only possible for simple models and therefore the simple unbiased gradient estimator \eqref{eq:U-hat-simple} is still a popular choice within the SGLD scheme (Algorithm \ref{alg:SGLD}) for general models. 
 
 Figure \ref{fig:stoc_grads} shows the posterior approximation for the Gaussian model using the three gradient estimators, where stochastic gradients are calculated using $1\%$ of the full dataset. The ULA\index{unadjusted Langevin algorithm} sampler (top-right) without a Metropolis--Hastings\index{Metropolis--Hastings} correction produces an accurate, albeit slightly biased, approximation similar to the SGLD with control variates\index{control variate} (SGLD-CV) algorithm (bottom-right), which has the same gradient\index{gradient} estimator for this model. The SGLD\index{stochastic gradient Langevin dynamics} algorithm (bottom-left) has the correct posterior mean but produces an over-dispersed approximation to the posterior variance. Reducing the stochasticity in the gradient estimator, for example, by increasing the subsample\index{subsample} size, will lead to improved empirical performance. This is a well-known feature of the SGLD\index{stochastic gradient Langevin dynamics} algorithm and its theoretical properties are discussed in the next section.

\begin{figure}
\centering
  \includegraphics[width=\textwidth]{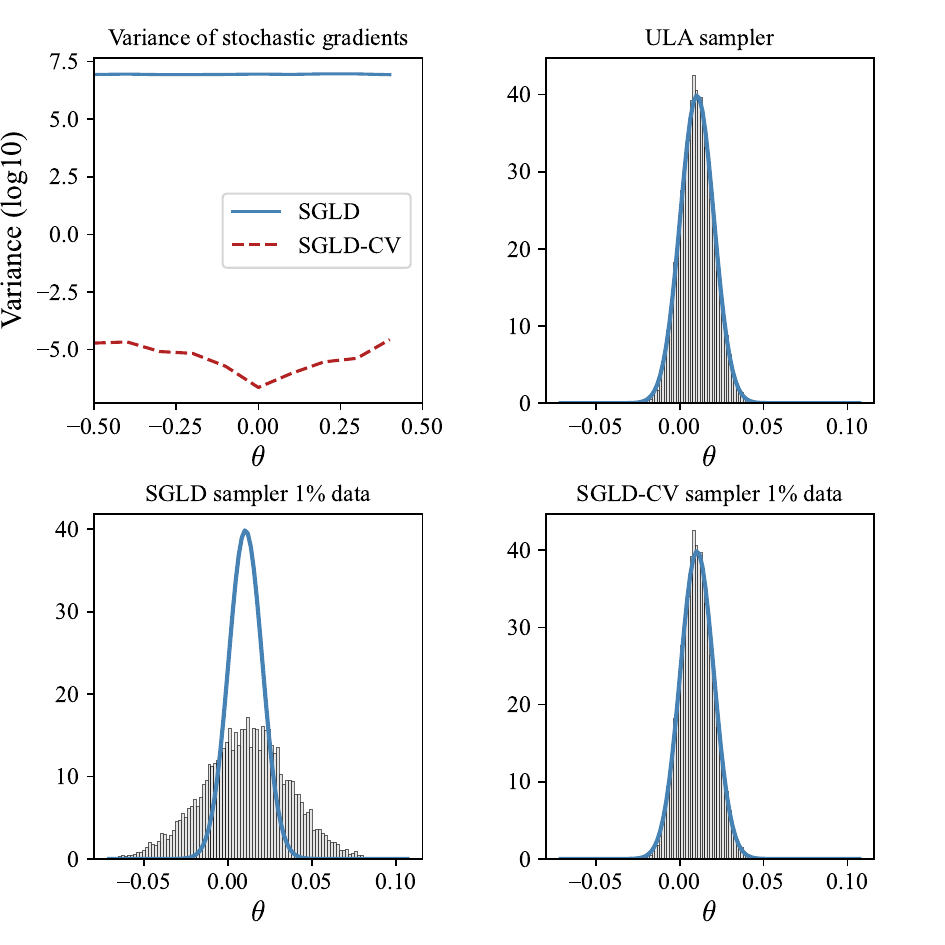}
 \caption{Top left: Variance of the estimated gradients taken over a range of $\theta^{(1)}$ for the first component of $\st$.  The variance for the SGLD estimator is stable across $\theta^{(1)},$ whereas the variance in the SGLD-CV gradient estimator increases as $|\theta^{(1)}-\widehat{\theta}^{(1)}|$  increases. 
 The remaining plots give the posterior approximation to the first marginal component $\theta^{(1)}$ for ULA, SGLD and SGLD-CV, with the solid black line representing the true density.} \label{fig:stoc_grads}
\end{figure}

\subsection{Convergence Results for Stochastic Gradient Langevin Dynamics} \label{sec:ch3-sgld-theory}

The SGLD\index{stochastic gradient Langevin dynamics} algorithm provides a simple and efficient approach for sampling from a posterior distribution\index{posterior distribution} $\pi$. 
However, a key question is whether errors can accumulate over many SGLD\index{stochastic gradient Langevin dynamics} iterations, leading to poor approximate samples.
Fortunately, under suitable regularity conditions on $\pi$, theoretical results indicate that SGLD\index{stochastic gradient Langevin dynamics} can avoid persistent error accumulation. A key assumption is that the drift in the underlying Langevin diffusion pushes the state $\st$ toward regions of high probability under $\pi$. This ensures that the diffusion is geometrically ergodic - i.e. it forgets its initial position at an exponential rate. As a result, the SGLD-generated samples tend to remain in the regions of high probability under $\pi$.

There are two main theoretical approaches for analyzing the accuracy of SGLD\index{stochastic gradient Langevin dynamics}. (i) We can consider the accuracy of estimating expectations of the form $\Expects{\pi}{h(\st)},$ for some test function  $h(\st)$ under the posterior distribution\index{posterior distribution} $\pi$. This involves taking the average of $h(\st_k)$ over $n$ SGLD\index{stochastic gradient Langevin dynamics} iterates $\{\st_k\}_{k=1}^n$. The mean squared error (MSE) between this Monte Carlo average and the true expectation $\Expects{\pi}{h(\st)}$ provides one measure of SGLD\index{stochastic gradient Langevin dynamics} accuracy. Teh et al. (2016) studied this error metric in the setting where the SGLD\index{stochastic gradient Langevin dynamics} step size\index{step size} $\delta_k$ decays over iterations. (ii) Alternatively, we can bound the error between the distribution of the SGLD\index{stochastic gradient Langevin dynamics} iterates and the posterior $\pi$ after $n$ steps. This involves analysing the total variation, or Wasserstein distance\index{Wasserstein metric}, between the marginal distribution of the SGLD\index{stochastic gradient Langevin dynamics} chain at iteration $n$ (i.e. $\tilde{\pi}_n$) and $\pi$. Since SGLD\index{stochastic gradient Langevin dynamics} is based on the Langevin diffusion, its ergodicity properties can be leveraged to prove the marginal distributions converge to $\pi$. 

Consider approach (i). The MSE of the SGLD\index{stochastic gradient Langevin dynamics} estimator can be upper bounded by $C_N(\delta^2 +1/n\delta),$ where $C_N$ is a constant dependent on the dataset size $N$. The $\delta^2$ term reflects the squared bias and $1/n\delta$ is the variance term. For a fixed computational budget $n$, the bias increases if the step size\index{step size} $\delta$ increases, while the variance decreases with increasing $\delta$. \cite{Teh:2016} showed that in the setting of a decreasing step size\index{step size} $\delta_k$, in order to minimise the asymptotic MSE, the optimal $\delta$ decays at rate of $n^{-1/3}$. This yields an MSE rate of $n^{-2/3}$, which is slower than the $n^{-1}$ rate for standard Monte Carlo methods. The slower convergence arises from controlling both bias and variance, which is similar to other asymptotically biased Monte Carlo methods. On the other hand, for larger computational budgets (i.e. larger $n$), exact MCMC\index{Markov chain Monte Carlo} will outperform SGLD\index{stochastic gradient Langevin dynamics}, because unlike for exact MCMC\index{Markov chain Monte Carlo} methods, the bias from SGLD\index{stochastic gradient Langevin dynamics} is non-vanishing. 

For approach (ii), one quantifies the accuracy of SGLD\index{stochastic gradient Langevin dynamics} via the accuracy of the marginal distribution of the SGLD\index{stochastic gradient Langevin dynamics} samples, $\st_k$, at iteration $k$. We denote this marginal distribution by $\tilde{\pi}_k$. Accuracy is commonly measured by the Wasserstein distance\index{Wasserstein metric} \eqref{eq:w2-distance} between $\tilde{\pi}_k$ and the posterior distribution\index{posterior distribution} $\pi$, as this makes the analysis more tractable. 
However, care is needed when interpreting the Wasserstein distance as it is not scale-invariant, i.e. changing the units scales the distance. Additionally, for a fixed level of accuracy in each marginal, the distance grows as $d^{1/2}$ with dimension $d$. 

Much of the theory for ULA\index{unadjusted Langevin algorithm} and SGLD\index{stochastic gradient Langevin dynamics} assumes the log posterior density, $\log\pi(\st),$ is smooth and strongly concave. The key assumptions for analysis of these algorithms are that $\log\pi(\st)$ is $l$-convex \eqref{eq:ass-convex} and $\log\pi(\st)$ is continuously differentiable with $L$-Lipschitz gradients \eqref{eq:ass-lipschitz}. This means there exist constants $0 < l \leq L$ such that for all $\st$ and $\st'$:
\begin{align}
  \label{eq:ass-lipschitz}
 \|\nabla \log\pi(\st')-\nabla \log\pi(\st)\| \leq L \|\st-\st'\|, ~~\text{and}~~ \\
 \label{eq:ass-convex}
 - \langle \nabla \log\pi(\st) - \nabla \log\pi(\st'),\st-\st'\rangle\geq \frac{l}{2}\|\st-\st'\|^2,
\end{align}
where we define $\kappa = L/l$ as the condition number. 
The first condition \eqref{eq:ass-lipschitz} bounds how fast the Langevin drift can change, and thus controls the step size (which should be $\delta<1/L$). The second assumption \eqref{eq:ass-convex} ensures that the drift term pushes $\st$ towards high-density regions, making the Langevin diffusion geometrically ergodic. 
Together, these assumptions imply an upper and lower bound on the directional derivatives of the log posterior density. The bounds enable stable discretisation and prevent persistent error accumulation in the SGLD\index{stochastic gradient Langevin dynamics} algorithm. They also imply $\log\pi(\st)$ is uni-modal. 
By leveraging strong concavity, the resulting theory provides step size\index{step size} conditions and rates of convergence for ULA/SGLD\index{stochastic gradient Langevin dynamics}. When $\log\pi(\st)$ is non-strongly concave, alternative assumptions are required \citep{raginsky2017non,majka2018non}, for instance that $\log\pi(\st)$ is dissipative\index{distantly dissipative}, i.e. $\langle \st,\nabla\log\pi(\st) \rangle \geq a\|\st\|^2-b,$ for some $a>0$ and $b \geq 0$.

Under the above assumptions \eqref{eq:ass-lipschitz}-\eqref{eq:ass-convex}, \cite{Dalalyan:2017} showed that running the SGLD\index{stochastic gradient Langevin dynamics} algorithm with a step size\index{step size} parameter $\delta \leq 2/(l+L)$, the Wasserstein-2 distance\index{Wasserstein metric} $\mathsf{d}_{W_2}(\tilde{\pi}_n,\pi)$ between the SGLD\index{stochastic gradient Langevin dynamics} marginal $\tilde\pi_n$ at iteration $n$ and the posterior $\pi$ can be bounded as:
\begin{equation} \label{eq:Wasserstein-bd1}
 \mathsf{d}_{W_2}(\tilde\pi_n,\pi) \leq (1-l \delta)^n \mathsf{d}_{W_2}(\tilde\pi_0,\pi) + C_1(\delta d)^{1/2} + C_2 \sigma (\delta d)^{1/2},
\end{equation}
where $l, C_1, C_2$ are constants, $d$ is the dimension, and $\sigma^2$ is an upper bound on the variance of any individual component of the stochastic gradient\index{stochastic gradient} \eqref{eq:var-grad-bounds}. In the case of ULA, $\sigma^2=0$. The first term in the upper bound decays exponentially, controlling bias from the initial distribution of the algorithm $\tilde\pi_0$ where the Markov chain is initialised from. The second term represents the error that results from the Euler--Maruyama\index{Euler-Maruyama} discretisation of the Langevin diffusion. The final term relates to the variance in the stochastic gradient\index{stochastic gradient}. In the case of the simple gradient estimator \eqref{eq:U-hat-simple} the variance is $O(N^2/m)$ and the final term in the bound is $O(N\sqrt{(\delta d/m)})$, which is then the dominating term in the upper bound.

Given that the main motivation of SGLD\index{stochastic gradient Langevin dynamics} is to perform Bayesian inference over large-scale data, a natural question is to ask how does SGLD\index{stochastic gradient Langevin dynamics} scale as data size $N$ increases? One way of addressing this is to ask, as $N$ increases, what is the computational cost of running SGLD\index{stochastic gradient Langevin dynamics} so that we have a fixed level of approximation? We need to define our measure of approximation appropriately to account for the fact that $\pi$ will change as $N$ increases: under certain assumptions, such as the Bernstein--von Mises \citep{lecam1986} assumption, the variance will decay as $1/N$. This has been investigated by \cite{Nagapetyan:2017} and \cite{Baker:2017} who consider using control variates\index{control variate} as a pre-processing step, which has a computational cost that is linear in $N$. Ignoring the cost of this pre-processing step for SGLD\index{stochastic gradient Langevin dynamics}, using control variates\index{control variate} asymptotically has a computational cost per effective sample that is constant. By comparison, the computational cost per effective sample of SGLD\index{stochastic gradient Langevin dynamics} with the simple estimator of the gradient \eqref{eq:U-hat-simple} is linear in $N$.

\subsubsection{Example: Theoretical Properties on a Gaussian Target Distribution} \label{sec:ch3-theory-example}

We can gain insight into the properties of the SGLD\index{stochastic gradient Langevin dynamics} algorithm by returning to our running Gaussian example \eqref{eq:running-gaussian}. Recall that the posterior under this model is a bivariate Gaussian distribution $\st |\by \sim \mathsf{N}(\bmu_N,\bSigma_N)$. In \eqref{eq:running-gaussian}, we assumed that the covariance matrix $\bSigma_N$ was diagonal, however, we will now instead consider a general symmetric positive semi-definite matrix. We can express the variance matrix in terms of some rotation matrix $\bP$ and a diagonal matrix $\bD$, whose entries satisfy the condition $\sigma_1^2\geq \sigma_2^2,$ i.e. $\bSigma_N=\bP^{\top}\bD\bP$. Under this Gaussian posterior model, the drift term of the Langevin diffusion is
\[
\nabla \log\pi(\st)=- \bSigma_N^{-1}(\st-\bmu_N) = -\bP^{\top}\bD^{-1}\bP(\st-\bmu_N).
\]
The $k+1$th iteration of the SGLD\index{stochastic gradient Langevin dynamics} algorithm is
\begin{equation}
  \label{eq:gaussian-example-dynamics}
 \st_{k+1}=  \st_{k}-\frac{\delta}{2}\bP^{\top}\bD^{-1}\bP(\st_{k}-\bmu_N)+\delta \bnu_k + \sqrt{\delta}\bZ_k,
\end{equation}
where $\bZ_k$ is a vector of two independent standard Gaussian random variables and $\bnu_k$ is the error induced by our stochastic estimate of $\nabla \log\pi(\st_{k})$. The entries of $\bD^{-1}$ correspond to the constants that appear in assumptions \eqref{eq:ass-lipschitz}-\eqref{eq:ass-convex}, with $l=1/\sigma_1^2$ and $L=1/\sigma_2^2,$ which are discussed further in Section \ref{ch3:implementation-guidance} in terms of their impact on the step size parameter $\delta$.

We can simplify the SGLD\index{stochastic gradient Langevin dynamics} algorithm by considering updates on the transformed state $\tilde{\st}=\bP(\st-\bmu_N)$, which gives SGLD\index{stochastic gradient Langevin dynamics} updates,
\begin{align*}
 \tilde{\st}_{k+1} =& \tilde{\st}_{k}-\frac{\delta}{2}\bD^{-1}\tilde{\st}_{k}+\delta\bP\bnu + \sqrt{\delta}\bP\bZ \\
 =&\left(\begin{array}{cc}1-\delta/(2\sigma_1^2) & 0 \\ 0 & 1-\delta/(2\sigma_2^2)  \end{array} \right)\tilde{\st}_{k}+\delta\bP\bnu_k + \sqrt{\delta}\bP\bZ_k,
\end{align*}
where the variance of $\bP\bZ_k$ is still the identity as 
$\bP$ is a rotation matrix.

The SGLD\index{stochastic gradient Langevin dynamics} update is a vector auto-regressive process. This process will have a stationary distribution\index{stationary distribution} if $\delta<4\sigma_2^2=4/L$, otherwise the process will produce trajectories which will tend to infinity in at least one of the two components. A similar assumption on the step size\index{step size} is required to establish the upper bound on $\mathsf{d}_{W_2}(\tilde\pi_n,\pi)$ in  \eqref{eq:Wasserstein-bd1} for log-concave posterior distributions. 

For simplicity in the manner of convergence, we choose $\delta<2\sigma_2^2$  and then define $\lambda_j=\delta/(2\sigma_j^2)<1$. This then leads to the following SGLD\index{stochastic gradient Langevin dynamics} dynamics for each component, $j=1,2$
\begin{equation}
  \label{eq:1d-example}
  \tilde{\theta}_{k+1}^{(j)}=(1-\lambda_j)^k \tilde{\theta}_0^{(j)}+\sum_{i=1}^k (1-\lambda_j )^{k-i}\left(\delta\bP\bnu_i + \sqrt{\delta}\bP \bZ_i \right)^{(j)},  
\end{equation}
where $\tilde{\theta}_k^{(j)}$ is the $j$th component of $\tilde{\st}_k$. From this, we immediately see from the first term on the right-hand side of \eqref{eq:1d-example}
that the SGLD\index{stochastic gradient Langevin dynamics} algorithm forgets its initial condition exponentially quickly. However, the rate of exponential decay is slower for the component with the larger marginal variance, i.e. $\sigma_1^2$. Furthermore, as the step size\index{step size} $\delta$ is constrained by the smaller marginal variance $\sigma_2^2$, this rate will have to be slow if $\sigma_2^2 \ll \sigma_1^2$; this suggests that there are benefits from re-scaling the posterior so that the marginal variances of different components are approximately equal.

Taking the expectation of \eqref{eq:1d-example},  with respect to $\bnu$ and $\bZ$, and letting $k \rightarrow \infty$, results in SGLD\index{stochastic gradient Langevin dynamics} dynamics that have the correct limiting mean, but with an inflated variance. This is most easily seen if we assume that the variance of $\bP\bnu$ is independent of position. In this case, the stationary distribution\index{stationary distribution} of SGLD\index{stochastic gradient Langevin dynamics} will have a variance of
\[
\Vars{\tilde{\pi}}{\tilde{\st}}= \left(\begin{array}{cc} (1-(1-\lambda_1)^2 )^{-1} & 0 \\ 0 &  (1-(1-\lambda_2)^2)^{-1}\end{array} \right) (\delta^2\bSigma_N+\delta\bI_2),
\]
where $\bI_2$ is a 2-dimensional identity matrix. The marginal variance for component $j$ is thus 
\[
\sigma_j^2\frac{1+\delta\Sigma_{jj}}{1-\delta/(4\sigma_j^2)} = \sigma_j^2\left(1+\delta\Sigma_{jj}\right)+\frac{\delta}{4}+O(\delta^2).
\]
The inflation in variance comes both from the noise in the estimate of $\nabla \log\pi(\st)$, which is the $\delta\Sigma_{jj}$ factor, and the Euler approximation, which supplies the additive constant, $\delta/4$. 
For more general posterior distributions, the mean of the stationary distribution\index{stationary distribution} of the SGLD\index{stochastic gradient Langevin dynamics} algorithm will not necessarily be correct, but we would expect the mean to be more accurate than the variance, with the variance of SGLD\index{stochastic gradient Langevin dynamics} being greater than that of the true posterior.
This analysis further suggests that, for posterior distributions which are close to Gaussian, it may be possible to perform a better correction to compensate for the inflation of the variance. For example, we could replace $\bZ_k$ with Gaussian random variables where the covariance matrix is chosen such that the covariance matrix of  $\delta\bnu_k+\sqrt{\delta}\bZ$ becomes the identity matrix.

\section{A General Framework for stochastic gradient MCMC}\index{SGMCMC|see{stochastic gradient Markov chain Monte Carlo}}
\label{ch3:sgmcmc-general}

Stochastic gradient MCMC\index{stochastic gradient Markov chain Monte Carlo} is not limited to approximating the Langevin diffusion. We can construct other diffusion processes that also have $\pi$ as their stationary distribution\index{stationary distribution}. By approximating the dynamics of these alternative diffusions, we can develop stochastic gradient\index{stochastic gradient} versions of many popular MCMC\index{Markov chain Monte Carlo} algorithms beyond the Langevin method. 

\cite{ma2015complete} proposed a general framework for stochastic gradient MCMC\index{stochastic gradient Markov chain Monte Carlo} that extends the approach beyond the Langevin diffusion. This framework allows us to develop stochastic gradient\index{stochastic gradient} analogues of algorithms like Hamiltonian Monte Carlo\index{Hamiltonian Monte Carlo} (HMC)\index{HMC|see{Hamiltonian Monte Carlo}} (as introduced in Section \ref{sec:ch2-HMC}), which leverages Hamiltonian dynamics. stochastic gradient HMC\index{stochastic gradient Hamiltonian Monte Carlo} (SGHMC)\index{SGHMC|see{stochastic gradient Hamiltonian Monte Carlo}} has been shown to display improved mixing\index{mixing} properties compared to stochastic gradient Langevin dynamics\index{stochastic gradient Langevin dynamics}. 

Beyond both SGLD\index{stochastic gradient Langevin dynamics} and SGHMC\index{stochastic gradient Hamiltonian Monte Carlo}, stochastic gradient MCMC\index{stochastic gradient Markov chain Monte Carlo} approaches provide a general and flexible framework to adapt many MCMC\index{Markov chain Monte Carlo} algorithms to large datasets where full-data MCMC\index{Markov chain Monte Carlo} is infeasible. By approximating the dynamics of various diffusions with the correct stationary distribution\index{stationary distribution}, it is possible to develop fast, scalable MCMC\index{Markov chain Monte Carlo} algorithms tailored to different posterior geometries and datasets.

We introduce a general class of diffusion models, that may also include auxiliary variables, and we denote the full state by $\bvartheta \in \mathbb{R}^{d_{\bvartheta}}$. This state contains our variable of interest $\st$, as well as an auxiliary variable $\bp$. For example, in the case of the overdamped Langevin diffusion\index{Langevin diffusion!overdamped}, $\bvartheta=\st$, and extending the state space to include a velocity variable  $\bp$ allows for the underdamped Langevin diffusion\index{Langevin diffusion!underdamped} of Section \ref{sec.IntroLangDiffusions}, which relates to  Hamiltonian MCMC, giving $\bvartheta=(\st,\bp)$. 

We define the general stochastic differential equation\index{stochastic differential equation} for $\bvartheta$ as

\begin{equation}
\label{eq:sde}
\mbox{d}\bvartheta = \frac{1}{2}\mathbf{b}(\bvartheta)\mbox{d}t + \sqrt{\bD(\bvartheta)}\mbox{d}W_t,  
\end{equation}
where $\mathbf{b}(\bvartheta)$ is the \textit{drift term}, $\bD(\bvartheta)$ is a positive semidefinite \textit{diffusion matrix} with matrix square root $\sqrt{\bD(\bvartheta)}$ and $W_t$ denotes $d_{\bvartheta}$-dimensional Brownian motion. \cite{ma2015complete} provide a general framework for how to choose $\mathbf{b}(\bvartheta)$ and $\bD(\bvartheta)$ to achieve a desired stationary distribution\index{stationary distribution}. Given a general function $H(\bvartheta)$, where $\exp\{-H(\bvartheta)\}$ is integrable, simulating from an SDE\index{stochastic differential equation} with drift term,
\begin{align}
\label{eq:sde-f}
\mathbf{b}(\bvartheta) =& -\left[\bD(\bvartheta)+\bQ(\bvartheta)\right]\nabla H(\bvartheta) + \Gamma(\bvartheta),  \ \mbox{where}\\
 \Gamma_i(\bvartheta) =& \sum_{j=1}^d \frac{\partial}{\partial \bvartheta_j}(\bD_{ij}(\bvartheta)+\bQ_{ij}(\bvartheta)), \nonumber
\end{align}
 ensures that the stationary distribution\index{stationary distribution} of (\ref{eq:sde}) is proportional to $\exp\{-H(\bvartheta)\}$. The matrix $\bQ(\bvartheta)$ is a skew-symmetric curl matrix which controls the deterministic traversing effects of the SDE\index{stochastic differential equation}  sampler, whereas $\bD(\bvartheta)$ controls the diffuse dynamics of the process. The general SDE\index{stochastic differential equation}  \eqref{eq:sde} can be decomposed into (i) \textit{Riemannian-Langevin dynamics} which are reversible and controlled by $\bD(\bvartheta)$ and (ii) \textit{deterministic Hamiltonian dynamics} which are irreversible and controlled by $\bQ(\bvartheta)$. For the Langevin-type algorithms, where $\bQ(\bvartheta)=0,$ it can be shown that they are able to quickly converge towards a mode of the distribution and diffusively explore the region around the mode. For the deterministic Hamiltonian Monte Carlo\index{Hamiltonian Monte Carlo} algorithm, where $\bD(\bvartheta)=0,$ the algorithm excels at deterministically traversing along level sets of the Hamiltonian. 
 
To approximately sample from the SDE\index{stochastic differential equation} we discretise the dynamics using the same Euler--Maruyama\index{Euler-Maruyama} scheme used for the Langevin diffusion (\ref{sec:ch3-langevin-diffusion}), i.e.,
\begin{equation}
\label{eq:sgmcmc-update}
\bvartheta_{k+\delta} = \bvartheta_k - \frac{\delta}{2} \left[(\bD(\bvartheta_k)+\bQ(\bvartheta_k))\nabla H(\bvartheta_k)+\Gamma(\bvartheta_k)\right] + \sqrt{\delta} \bZ, \quad k \geq 0,  
\end{equation}
where $\bZ \sim \mathsf{N}(0,\bD(\bvartheta_k))$. If $\bvartheta=\st$, the posterior distribution\index{posterior distribution} $\pi$ is the stationary distribution\index{stationary distribution} for the choice $H(\bvartheta)=-\log\pi(\st)$. If we include an auxiliary variable, i.e. $\bvartheta = (\st,\bp)$, and we choose $H(\bvartheta)=-\log\pi(\st)+K(\bp)$ for some user-chosen function $K(\cdot)$, then $\pi$ is the $\btheta$-marginal of the stationary distribution\index{stationary distribution}. 

From the general SDE\index{stochastic differential equation}  framework \eqref{eq:sde}, we can obtain \emph{stochastic gradient MCMC}\index{stochastic gradient Markov chain Monte Carlo} (SGMCMC) algorithms by replacing $\nabla H(\bvartheta_k)$ in (\ref{eq:sgmcmc-update}) with an unbiased estimate $\widehat{\nabla} H(\bvartheta_k)$ that is evaluated on a subsample\index{subsample} of the dataset. As shown in Section \ref{sec:ch3-sgld-theory}, and Figure \ref{fig:stoc_grads}, the statistical efficiency of stochastic gradient\index{stochastic gradient} algorithms is strongly tied to the variance of the gradient estimate. We can view that the variability of the gradient estimate inflates the noise input into \eqref{eq:sgmcmc-update} -- which will lead to a stationary distribution\index{stationary distribution} whose variance is also inflated. Therefore, where feasible, we should correct for this. 
If we define $\bV(\bvartheta_k) = \Var{\widehat{\nabla} H(\bvartheta_k)}$ as the variance of the stochastic gradient\index{stochastic gradient}, then the conditional variance of $\bvartheta_{k+\delta}$ given $\bvartheta_{k}$ will be inflated by $\delta^2\mathbf{B}(\bvartheta_k)$, where
\begin{equation}
\label{eq:sde-var-inflate}
\mathbf{B}(\bvartheta_k) = \frac{1}{4}(\bD(\bvartheta_k)+\bQ(\bvartheta_k)) \bV(\bvartheta_k) (\bD(\bvartheta_k)+\bQ(\bvartheta_k))^\top.
\end{equation}
Correcting for this inflation in the SGMCMC\index{stochastic gradient Markov chain Monte Carlo} setting leads to a modified discrete-time algorithm in \eqref{eq:sgmcmc-update}, where $\nabla H(\bvartheta_k)$ is replaced by the stochastic approximation $\widehat{\nabla} H(\bvartheta_k)$ but we also reduce the variance of the noise term, so $\bZ \sim \mathsf{N}(0,\bD(\bvartheta_k)-\delta\mathbf{B}(\bvartheta_k))$. The challenge with this idea in practice is how do we estimate $\mathbf{B}(\bvartheta_k)$?

Through different choices of $H(\bvartheta)$, $\bD(\bvartheta)$ and $\bQ(\bvartheta),$ we can derive several SGMCMC\index{stochastic gradient Markov chain Monte Carlo} algorithms as special cases of the general discretised SDE\index{stochastic differential equation}  \eqref{eq:sgmcmc-update}. Some of these special cases will be introduced in the following sections.  

\subsubsection{Stochastic Gradient Langevin Dynamics}\index{stochastic gradient Langevin dynamics}

The SGLD\index{stochastic gradient Langevin dynamics} algorithm (introduced in Section \ref{sec:ch3-sgld-alg-sec}) follows from the dynamics of the general SDE\index{stochastic differential equation}  \eqref{eq:sgmcmc-update} by setting
\begin{equation*}
    \bvartheta = \st, \quad H(\bvartheta) = -\log\pi(\st), \quad \bD(\bvartheta)= \mathbf{I}, \quad \bQ(\bvartheta) = \mathbf{0}
\end{equation*}
to give dynamics
\begin{equation*}
     \st_{k+\delta} = \st_k + \frac{\delta}{2} \left[\nabla \log\pi(\st_k)\right] + \sqrt{\delta} \bZ, \quad k \geq 0,  
\end{equation*}
which is the same Euler--Maruyama\index{Euler-Maruyama} approximation of the Langevin diffusion introduced in \eqref{eq:Euler}, but where in practice $\nabla \log\pi(\st_k)$ would be replaced with a stochastic gradient\index{stochastic gradient} estimator, using, for example, \eqref{eq:U-hat-simple}, \eqref{eq:cv-estimator} or \eqref{eq:ps-grad-estimator}.

\subsubsection{Stochastic Gradient Hamiltonian Monte Carlo}\index{stochastic gradient Hamiltonian Monte Carlo}

The popular HMC\index{Hamiltonian Monte Carlo} algorithm introduced in Section \ref{sec:ch2-HMC} can also be derived as a special case of the general SDE\index{stochastic differential equation}  dynamics \eqref{eq:sgmcmc-update}. As discussed in Section \ref{sec:ch2-HMC}, the HMC\index{Hamiltonian Monte Carlo} algorithm introduces a velocity component $\bp$ to improve the mixing\index{mixing} of the Markov chain and a mass matrix\index{mass matrix} $\mathbf{M}$, where the Hamiltonian dynamics are used to update the position $\st$ and  velocity components $\bp$. In practice, the HMC\index{Hamiltonian Monte Carlo} algorithm uses the leapfrog numerical integration scheme to minimise numerical errors, however, for the purpose of illustration, we shall consider the simpler Euler integration scheme for creating a stochastic gradient\index{stochastic gradient Hamiltonian Monte Carlo} HMC\index{Hamiltonian Monte Carlo} algorithm. 

The Euler-discretised Hamiltonian dynamics for the state $\bvartheta=(\st,\bp)$ are

\begin{equation}
\label{eq:hmc-naive}
\begin{pmatrix}
\st_{k+\delta} \\
\bp_{k+\delta}
\end{pmatrix} = \begin{pmatrix}
    \st_k \\
    \bp_k
\end{pmatrix} + \frac{\delta}{2}\begin{bmatrix}
    \mathbf{M}^{-1} \bp_k \\
    \nabla \log\pi(\st_k)
\end{bmatrix}, \quad k \geq 0,  
\end{equation}
which fits into the general SDE\index{stochastic differential equation}  framework of \eqref{eq:sde} and \eqref{eq:sde-f} by setting 
\begin{align*}
H(\bvartheta)=-\log\pi(\st) + \frac{1}{2}\bp^{\top}\mathbf{M}^{-1}\bp, \ \ \mathbf{D}(\bvartheta)=\mathbf{0} \ \ \mbox{and} \ \ \bQ(\bvartheta) = \left( \begin{array}{cc} 0 & -\mathbf{I} \\ \mathbf{I} & 0 \end{array}\right).    
\end{align*}

 If the gradient $\nabla\log\pi(\st)$ in \eqref{eq:hmc-naive} is replaced by a stochastic gradient\index{stochastic gradient} $\hat{\nabla}\log\pi(\st) = \nabla\log\pi(\st) + \mathsf{N}(0,\mathbf{V}(\st)),$ where $\mathbf{V}(\st)$ is the variance of the stochastic gradient\index{stochastic gradient}, then under this stochastic setting the dynamics in \eqref{eq:hmc-naive} will become 
$$
\bp_{k+\delta} = \bp_k + \frac{\delta}{2} \nabla\log\pi(\st) + \mathrm{N}(0,\delta\mathbf{V}(\st_k)),
$$
which is known as the \textit{naive} stochastic gradient HMC (SGHMC) algorithm \citep{chen2014stochastic,ma2015complete}. It was proved in \cite{chen2014stochastic} that the naive SGHMC\index{stochastic gradient Hamiltonian Monte Carlo} algorithm does not work well as the error from estimating the gradient will accumulate over iterations and cannot be controlled. To overcome this, the authors suggest adding a \textit{friction term} for the velocity, this is equivalent to using a stochastic gradient\index{stochastic gradient} version of the \textit{underdamped Langevin dynamics}\index{Langevin diffusion!underdamped}, of Section \ref{ch1:sec:underdamped}. The intuition is that the introduction of friction means that errors from previous iterations will decay geometrically so that the overall error from using a stochastic gradient\index{stochastic gradient} can be controlled. We can further improve accuracy by using the idea of correcting the variance of the injected noise that is introduced at each iteration.

Returning to the general SDE\index{stochastic differential equation}  framework \eqref{eq:sde}, the corrected stochastic gradient\index{stochastic gradient Hamiltonian Monte Carlo} HMC\index{Hamiltonian Monte Carlo} algorithm follows by setting
\begin{align*}
     \bvartheta &= (\st,\bp), \quad H(\bvartheta) = -\log\pi(\st) + \frac{1}{2}\bp^\top \mathbf{M}^{-1}\bp, \\ \bD(\bvartheta) &= \left( \begin{array}{cc} 0 & 0 \\ 0 & \mathbf{C} \end{array}\right), \quad \bQ(\bvartheta) = \left( \begin{array}{cc} 0 & -\mathbf{I} \\ \mathbf{I} & 0 \end{array}\right),
\end{align*}
where $\mathbf{C}$ is as a generic matrix, sometimes known as the \textit{friction} term, and is chosen such that $\mathbf{C} \succeq \delta\bV(\st)$ is a positive semi-definite matrix. Discretising this general form SDE\index{stochastic differential equation}  with this particular $\mathbf{D}(\bvartheta)$ and $\mathbf{Q}(\bvartheta)$ leads to the dynamics,

\begin{equation}
\label{eq:sghmc}
    \begin{pmatrix}
    \st_{k+\delta} \\
    \bp_{k+\delta}
\end{pmatrix} = \begin{pmatrix}
\st_{k} \\
\bp_{k}  
\end{pmatrix}  + \frac{\delta}{2}\begin{bmatrix}
    \mathbf{M}^{-1}\bp_k\\ 
    \hat{\nabla}\log\pi(\st_k) - \mathbf{C}\mathbf{M}^{-1}\bp_k
\end{bmatrix}   + \begin{bmatrix}
    0 \\
    \sqrt{\delta} \bZ
\end{bmatrix}, \quad k \geq 0.  
\end{equation}

The gradient $\nabla\log\pi(\st_k)$ is replaced by a  stochastic estimator $\hat{\nabla}\log\pi(\st_k)$  and $\bZ \sim \mathsf{N}(0,\mathbf{C}-\delta\hat{\mathbf{B}})$, where $\hat{\mathbf{B}}$ is an estimate of $\mathbf{V}(\st_k).$ \\

The efficiency with which an MCMC\index{Markov chain Monte Carlo} algorithm can explore a posterior distribution\index{posterior distribution} is heavily tied to the geometry of the posterior. If the $\st$ components of the posterior distribution\index{posterior distribution} are strongly correlated, then the step size\index{step size} will need to be optimised for the component with the smallest variability in order to ensure that the algorithm does not diverge (as illustrated in Section \ref{sec:ch3-theory-example} for the case of a Gaussian target distribution). This will significantly reduce the mixing\index{mixing} time of the other components unless the posterior distribution\index{posterior distribution} is reparameterised so that the components of $\st$ are uncorrelated and have similar marginal distributions. Within the context of SGMCMC dynamics, it is possible to develop algorithms that incorporate reparameterisation by preconditioning\index{preconditioned} the gradients with a positive-definite matrix $\mathbf{G}(\st)$. If $\mathbf{G}(\st)$ is the expected Fisher information of the posterior distribution\index{posterior distribution}, then the SGMCMC\index{stochastic gradient Markov chain Monte Carlo} dynamics will be locally adapted to the posterior curvature by exploiting the Riemannian geometry of the posterior distribution\index{posterior distribution}. 

\subsubsection{Stochastic Gradient Riemannian Langevin Dynamics}\index{stochastic gradient Riemannian Langevin dynamics}

Riemannian versions of SGLD (\emph{stochastic gradient Riemannian Langevin dynamics}; SGRLD)\index{SGRLD|see{stochastic gradient Riemannian Langevin dynamics}} and SGHMC\index{stochastic gradient Hamiltonian Monte Carlo} (\emph{stochastic gradient Riemannian Hamiltonian Monte Carlo}; SGRHMC) also follow as special cases of the general-form SDE\index{stochastic differential equation}  \eqref{eq:sde-f}. We can derive SGRLD by setting 
\begin{equation*}
    \bvartheta = \st, \quad H(\bvartheta) = -\log\pi(\st), \quad \bD(\bvartheta)= \mathbf{G}(\st)^{-1}, \quad \bQ(\bvartheta) = \mathbf{0}
\end{equation*}
which leads to the discrete-time dynamics
\begin{equation*}
     \st_{k+\delta} = \st_k + \frac{\delta}{2} \left[\mathbf{G}(\st_k)^{-1} \hat{\nabla} \log\pi(\st_k) + \Gamma(\st_k) \right] + \sqrt{h} \bZ, \quad k \geq 0,  
\end{equation*}
with $\bZ \sim \mathsf{N}(0,\mathbf{G}(\st_k)^{-1})$ and $\hat{\nabla}\log\pi(\st_k)$ is a stochastic estimator for $\nabla\log\pi(\st_k)$.  The term $\mathbf{G}(\st_k)^{-1} \nabla \log\pi(\st_k)$ is the \emph{natural gradient} of the posterior distribution\index{posterior distribution} which gives the direction of steepest ascent by taking into account the geometry implied by $\mathbf{G}(\st_k)$. If $\mathbf{G}(\st)=\mathbf{I},$ then the direction of steepest ascent would be given in the Euclidean space. The term $\Gamma_i(\st)=\sum_j \frac{\partial (\mathbf{G}(\st)^{-1})_{ij}}{\partial \st_j}$ accounts for the curvature of the manifold defined by $\mathbf{G}(\st)$. A stochastic gradient\index{stochastic gradient Hamiltonian Monte Carlo} HMC\index{Hamiltonian Monte Carlo} implementation that utilises the Riemannian geometry of the posterior distribution\index{posterior distribution} can also be derived within the general SDE\index{stochastic differential equation}  framework. Taking the curl matrix $\mathbf{Q}(\st)$ from SGHMC\index{stochastic gradient Hamiltonian Monte Carlo} and replacing the identity matrices with $\mathbf{G}(\st)^{-1/2}$ leads to an SGHMC\index{stochastic gradient Hamiltonian Monte Carlo} algorithm that is adaptive to the local posterior geometry. \newline

\subsubsection{Theoretical comparison of SGMCMC algorithms}
When comparing the variety of SGMCMC\index{stochastic gradient Markov chain Monte Carlo} algorithms it is natural to consider which algorithm is the most accurate and computationally efficient.  It is possible to compare the theoretical convergence rates for some of these algorithms. In the case of SGHMC\index{stochastic gradient Hamiltonian Monte Carlo} and SGLD\index{stochastic gradient Langevin dynamics}, and in the context of smooth and strongly log-concave posteriors, it is possible to derive bounds on the Wasserstein-2 distance\index{Wasserstein metric} between the posterior and the SGMCMC\index{stochastic gradient Markov chain Monte Carlo} sample distribution. 
These results are non-asymptotic and bound the Wasserstein-2 error for some $k$ iterations of the SGMCMC\index{stochastic gradient Markov chain Monte Carlo} algorithm using optimally tuned step sizes. These results show that if the stochastic gradients have low variance, then SGLD\index{stochastic gradient Langevin dynamics} requires $O(d^2/\epsilon^2)$ iterations for a given accuracy $\epsilon$, while SGHMC\index{stochastic gradient Hamiltonian Monte Carlo} needs only $O(d/\epsilon)$. So SGHMC\index{stochastic gradient Hamiltonian Monte Carlo} is generally preferred, with increasing benefits in higher dimensions (see Figure \ref{fig:logistic-regression-error} for a numerical illustration). However, there is a phase transition in SGHMC\index{stochastic gradient Hamiltonian Monte Carlo} as the gradient noise variance grows, above a threshold SGHMC\index{stochastic gradient Hamiltonian Monte Carlo} behaves like SGLD\index{stochastic gradient Langevin dynamics}, requiring $O(d^2/\epsilon^2)$ iterations.

\section{Guidance for Efficient Scalable Bayesian Learning}
\label{ch3:implementation-guidance}

Each of the SGMCMC\index{stochastic gradient Markov chain Monte Carlo} algorithms introduced has tuning parameters that need to be chosen by the user when running these algorithms. Some SGMCMC\index{stochastic gradient Markov chain Monte Carlo} algorithms, such as SGLD\index{stochastic gradient Langevin dynamics} with control variates\index{control variate}, require additional tuning parameters, e.g. choosing a control variate\index{control variate}, but common to all SGMCMC\index{stochastic gradient Markov chain Monte Carlo} algorithms is the step size\index{step size} parameter $\delta$, the subsample\index{subsample} size $m$, and the number of Monte Carlo iterations $n$. 

For MCMC\index{Markov chain Monte Carlo} algorithms with Metropolis--Hastings\index{Metropolis--Hastings} corrections, such as MALA\index{Metropolis--adjusted Langevin algorithm} (Section \ref{sec:ch2-MALA}) and HMC\index{Hamiltonian Monte Carlo} (Section \ref{sec:ch2-HMC}), and which do not utilise data subsampling, the main tuning parameter is the step size\index{step size} $\delta$. Existing theoretical results have established that the optimal $\delta$ should be chosen such that the Metropolis--Hastings\index{Metropolis--Hastings} acceptance rate is $57.4\%$ (in the case of MALA) or at least $65\%$ (in the case of HMC). Under certain assumptions on the posterior distribution\index{posterior distribution}, this choice minimises the integrated auto-correlation time\index{integrated auto-correlation time} of the Markov chain produced by these algorithms (see Section \ref{sec.MCcvgESS} for further details). 

In the case of SGMCMC\index{stochastic gradient Markov chain Monte Carlo} algorithms, which do not include a Metropolis--Hastings\index{Metropolis--Hastings} acceptance step, minimising the auto-correlation\index{auto-correlation} of the Markov chain is not an appropriate objective for optimising $\delta$. Intuitively, this is because the auto-correlation\index{auto-correlation} can be reduced by simply letting $\delta \rightarrow \infty$, but in the case of the ULA\index{unadjusted Langevin algorithm} and SGMCMC\index{stochastic gradient Markov chain Monte Carlo} algorithms, this will increase the bias in the discretised diffusion process and lead to poor posterior approximations. This is not an issue for MALA\index{Metropolis--adjusted Langevin algorithm} and HMC\index{Hamiltonian Monte Carlo} as the Metropolis--Hastings\index{Metropolis--Hastings} acceptance rate will tend to zero as $\delta \rightarrow \infty$, and so in this setting, minimising the auto-correlation\index{auto-correlation} time will balance between making large jumps in the posterior space (i.e large $\delta$) against having a reasonable acceptance rate. Alternative metrics are required to minimise the variance in the Markov chain and account for the asymptotic bias present in SGMCMC\index{stochastic gradient Markov chain Monte Carlo} algorithms. A popular choice is the kernel Stein discrepancy\index{kernel Stein discrepancy} (KSD) metric, a kernelised version of the Stein discrepancy discussed in detail in Chapter \ref{chap:stein}. The KSD provides a measure of discrepancy between the true posterior distribution\index{posterior distribution} and the Monte Carlo approximation generated by an SGMCMC\index{stochastic gradient Markov chain Monte Carlo} or other MCMC\index{Markov chain Monte Carlo} algorithm. Using the KSD metric, it is possible to optimise $\delta$ (and other SGMCMC\index{stochastic gradient Markov chain Monte Carlo} tuning parameters) by assessing various tuning parameters over a grid of possible values and selecting the tuning parameters which minimise the KSD \citep{coullon2023efficient}. Related ideas, based on Stein's method\index{Stein's method} and explored in Chapter \ref{chap:stein}, can be used to optimally thin the Markov chain to maximise the information about the posterior contained by a smaller set of Monte Carlo samples.

The step size\index{step size} parameter $\delta$ in SGLD\index{stochastic gradient Langevin dynamics} can be chosen using the convergence results in Section \ref{sec:ch3-sgld-theory}. The upper bound on the Wasserstein-2 distance\index{Wasserstein metric}  from \cite{Dalalyan:2017}, under assumptions \ref{eq:ass-lipschitz} and \ref{eq:ass-convex}, requires $\delta \leq 2/(l+L)$ in order to establish convergence for the SGLD\index{stochastic gradient Langevin dynamics} algorithm. Setting $\delta$ to be too small leads to slow convergence of the Markov chain, but a step size\index{step size} that is too large can cause the SGLD\index{stochastic gradient Langevin dynamics} algorithm to diverge. If $l$ and $L$ are known, then this information can be used to set $\delta$. For example, returning to the running Gaussian example \eqref{eq:running-gaussian} where the posterior distribution\index{posterior distribution} is $\st|\by \sim \mathsf{N}(\bmu_N,\bSigma_N)$, we know that $\nabla \log\pi(\st)=-\bSigma_N^{-1}(\st-\bmu_N)$ and that the Lipschitz constant $L$ measures the largest change in the gradient. 
Taking the Hessian\index{Hessian} \emph{i.e.}, $\nabla\nabla\log\pi(\st)=-\bSigma_N^{-1}$, the Lipschitz constant is equal to the spectral norm of the inverse covariance matrix $L=\|\nabla\log\pi(\st)\|\leq\|\nabla\nabla\log\pi(\st)\|\leq\|\bSigma_N^{-1}\|=1/\lambda_\mathrm{min}(\bSigma_N)$, which is equal to the reciprocal of the smallest eigenvalue of the covariance matrix. As for the smoothness parameter $l$, this is the largest eigenvalue of the Hessian\index{Hessian} $l=\|\nabla\nabla\log\pi(\st)\| \geq \lambda_{\mathrm{max}}(\bSigma_N)$. Therefore, if $\delta \leq 2/(l+L)$, then $\delta \approx \lambda_{\mathrm{min}}(\bSigma_N)/\lambda_{\mathrm{max}}(\bSigma_N)$, which is equal to the condition number $\kappa=L/l$.

 We can assess the relationship between the step size\index{step size} parameter and the properties of the Gaussian model by considering two covariance functions $\Sigma$ from the Gaussian model \eqref{eq:running-gaussian}, where $\bSigma^{(i)} = \begin{pmatrix}
1 & 0 \\
0 & 10
\end{pmatrix}$ and $\bSigma^{(ii)} = \begin{pmatrix}
1 & 3 \\
3 & 10
\end{pmatrix}$. Under covariance $\bSigma^{(i)}$, the variables $\st$ are uncorrelated whereas for $\bSigma^{(ii)}$ there is imposed correlation between the components of $\st$. This leads to condition numbers $\kappa^{(i)}\approx 10^{-3}$ and $\kappa^{(ii)} \approx 10^{-4}$ for $\bSigma^{(i)}$ and $\bSigma^{(ii)}$, respectively. In Figure \ref{fig:w2_dist_stepsizes}, we plot the Wasserstein-2 distance\index{Wasserstein metric} between the true Gaussian posterior distribution\index{posterior distribution} with $N=1000$ data points with the approximation generated from $n=10000$ iterations of the SGLD\index{stochastic gradient Langevin dynamics} algorithm, where the step size\index{step size} parameter $\delta$ is varied over a grid of values  $\delta \in \{10^{-5},\ldots,10^{-1}\}$. The left panel of Figure \ref{fig:w2_dist_stepsizes} is for the model with uncorrelated covariance matrix $\bSigma^{(i)}$ and the right panel uses $\bSigma^{(ii)}$. For both experiments, there is a value for $\delta$ which minimises the Wasserstein-2 distance\index{Wasserstein metric} and in the case of uncorrelated variables, i.e. $\bSigma^{(i)}$, the optimal $\delta$ is larger than in the correlated case $\bSigma^{(ii)}.$ The dot indicates the step size\index{step size} recommended by the theoretical results, i.e. $\delta_\mathrm{opt}=L/l$, which closely aligns with the optimal grid search result. 

In general settings, however, it is not possible to calculate the optimal step size\index{step size} parameter as this depends on properties of the unknown posterior distribution\index{posterior distribution}. Under certain conditions on the posterior distribution\index{posterior distribution}, and assuming that $N$ is sufficiently large, then by the Bernstein--von Mises theorem, the variance of the posterior distribution\index{posterior distribution} will be of order $O(d/N)$. Therefore, setting the step size\index{step size} parameter to be proportional to $1/N$, i.e. $\delta \propto1/N,$ gives a simple heuristic step size\index{step size} which is often used by practitioners. Note that, for the Gaussian posterior example given above, this heuristic would lead to a step size\index{step size} $\delta=1/N=10^{-3},$ which matches the optimal step size\index{step size} parameter for the setting with uncorrelated $\st$ components, i.e. $\bSigma^{(i)}$ (see the left panel in Figure \ref{fig:w2_dist_stepsizes}). However, this step size would be too large for the correlated setting with covariance $\Sigma^{(ii)}$ (see right panel in Figure \ref{fig:w2_dist_stepsizes}). An alternative perspective discussed in Section \ref{sec:ch3-sgld-grad-var} is that for the SGLD-type algorithms, the variance of the gradient component of the Langevin dynamics is $O(\delta^2)$ and therefore dominated by the $O(\delta)$ variance from the injected noise of the process. For the SGLD algorithm with a simple unbiased gradient estimator, the variance of the gradient is $O(N^2)$, and so a natural choice for $\delta$ to control the variance of the stochastic gradient is $\delta=1/N$.

\begin{figure}
    \centering
       \includegraphics[width=\textwidth]{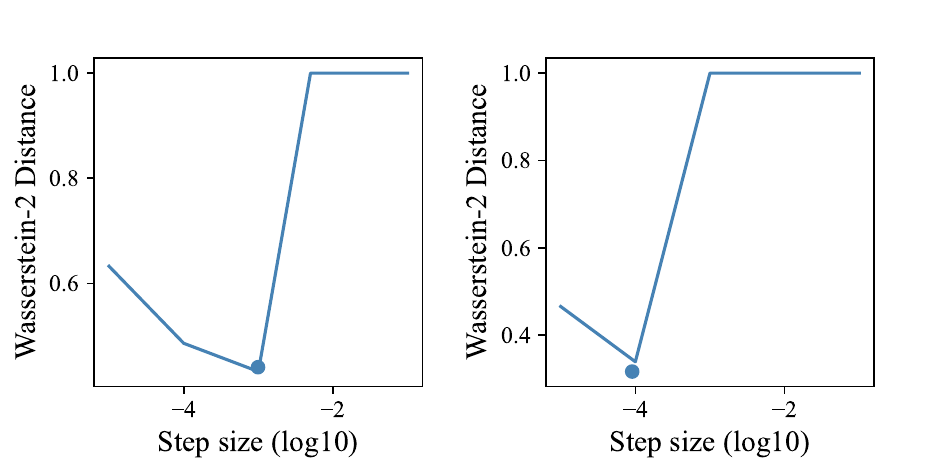}
    \caption{The Wasserstein-2 distance between the true and approximate posterior distributions when varying the step size parameter $\delta$. Left panel is for the model with covariance $\Sigma^{(i)}$. Right panel is for the model with covariance $\Sigma^{(ii)}$.}
    \label{fig:w2_dist_stepsizes}
\end{figure}

\subsection{Experiments on a Logistic Regression Model}
\label{sec:ch3-logistic-regression}

The logistic regression\index{logistic regression} model, first introduced in Section \ref{sec:ch1-logistic}, is used to predict the probability of binary outcomes $y_j \in \{0,1\}$ given covariates $\bx_j \in \mathbb{R}^{d_x}$. Assuming a Gaussian prior for the regression coefficients $\st \sim \mathsf{N}(\mathbf{0},\Sigma_{\btheta})$, the posterior distribution\index{posterior distribution}, conditional on the data $\mathcal{D} = \{y_j,\bx_j\}_{j=1}^N,$ is given by the unnormalised density 
\[
\pi(\st) := \pi(\btheta|\mathcal{D}) \propto \exp\left\{-\frac{1}{2}\btheta^T\Sigma_{\btheta}^{-1}\btheta \right\} \prod_{j=1}^N \frac{\exp\{y_j\bx_j^{\top}\btheta\}}{1+\exp\{\bx_j^T\btheta\}}.
\]

Metropolis--Hastings-based MCMC\index{Markov chain Monte Carlo} algorithms, such as MALA\index{Metropolis--adjusted Langevin algorithm}(Section \ref{sec:ch2-MALA}) and HMC\index{Hamiltonian Monte Carlo} (Section \ref{sec:ch2-HMC}), can be used to sample from the posterior distribution\index{posterior distribution} of the logistic regression\index{logistic regression} model. However, in the large data setting, these algorithms will converge slowly due to the higher computational cost of evaluating the posterior density, and its gradient, on the full dataset. Whereas SGMCMC\index{stochastic gradient Markov chain Monte Carlo} algorithms are faster, per Monte Carlo iteration, but introduce an asymptotic bias into the posterior approximation. 

To assess the statistical accuracy of SGMCMC\index{stochastic gradient Markov chain Monte Carlo} against exact MCMC\index{Markov chain Monte Carlo} approaches, consider a logistic regression\index{logistic regression} model with $N=10000$ observations, which is small enough to allow MALA\index{Metropolis--adjusted Langevin algorithm} and HMC\index{Hamiltonian Monte Carlo} approaches to be computationally feasible. The dataset is split into a training and test dataset with a $80/20$ split. Data are simulated from the logistic regression\index{logistic regression} model where the dimension of the parameter vector $\st \in \mathbb{R}^d$ is varied, $d \in \{100,200,300,400,500\}$. 
The statistical accuracy of the posterior approximation of the SGLD\index{stochastic gradient Langevin dynamics} algorithm (Alg. \ref{alg:SGLD}) and SGHMC\index{stochastic gradient Hamiltonian Monte Carlo} algorithm \eqref{eq:sghmc}, with and without control variates\index{control variate} \eqref{eq:cv-grad-est}, is compared against a long Monte Carlo run of the NUTS algorithm \citep{hoffman2014no} using the Python package BlackJax \citep{cabezas2024blackjax}, which provides a ground-truth Monte Carlo approximation to the posterior distribution\index{posterior distribution}. As noted in Section \ref{ch3:implementation-guidance}, standard MCMC\index{Markov chain Monte Carlo} diagnostics\index{diagnostic} are not applicable for assessing the convergence of SGMCMC\index{stochastic gradient Markov chain Monte Carlo} algorithms, 
and so as a proxy for posterior accuracy, the mean squared error (MSE) in the estimate of the posterior mean and variance given by the SGMCMC\index{stochastic gradient Markov chain Monte Carlo} algorithms is compared against the posterior mean and variance taken from the NUTS samples. The NUTS and SGMCMC\index{stochastic gradient Markov chain Monte Carlo} algorithms are each run for $n=10000$ iterations and for the SGMCMC\index{stochastic gradient Markov chain Monte Carlo} algorithms, a subsample\index{subsample} size of $10\%$ is used. Note that the step size\index{step size} parameter is fixed using the heuristic $\delta=1/N$ for all experiments. Note that improved numerical results could be achieved by optimising the step size\index{step size} parameter for each dimension $d$.

Figure \ref{fig:logistic-regression-error} shows the MSE in the estimate of $\Expects{\pi}{\st}$ and $\Vars{\pi}{\st}$, where the true expectation and variance are given by the NUTS sampler. The plotted results are presented as the MSE averaged over the parameter dimension. The results show that the SGHMC\index{stochastic gradient Hamiltonian Monte Carlo} algorithms are more robust to higher dimensions than the SGLD-type algorithms. This coincides with the established theory that compared to SGHMC\index{stochastic gradient Hamiltonian Monte Carlo}, SGLD\index{stochastic gradient Langevin dynamics} requires more iterations to achieve a similar level of accuracy. Note that for higher-dimensional problems, it may be necessary to run the SGD optimiser for longer to find the mode of the distribution that is used in control variate-based SGMCMC algorithms. 

\begin{figure}
    \centering
    \includegraphics[width=\textwidth]{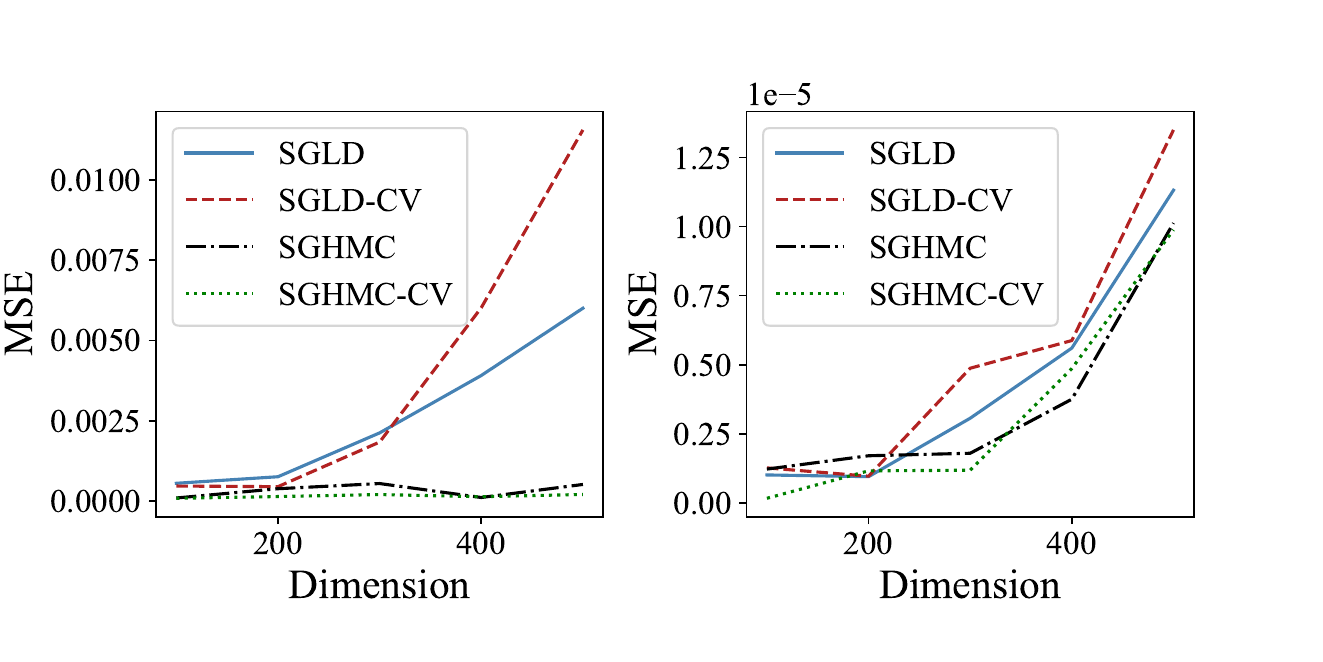}
    \caption{The mean squared error (MSE) of $\Expects{\pi}{\st}$ (left panel) and $\Vars{\pi}{\st}$ (right panel) compared to the same moments calculated from the NUTS posterior samples, which are treated as the ground-truth. 
    The results plotted are for the average MSE taken over all dimensions of the mean and marginal variance of $\pi(\st)$.}
    \label{fig:logistic-regression-error}
\end{figure}

The posterior accuracy results in Figure \ref{fig:logistic-regression-error} suggest that without increasing the number of Monte Carlo iterations, the SGMCMC\index{stochastic gradient Markov chain Monte Carlo} algorithms will produce poorer posterior approximations with increasing parameter dimension. The results show that SGMCMC\index{stochastic gradient Markov chain Monte Carlo} algorithms can produce highly accurate approximations for the first and second moment of the posterior distribution\index{posterior distribution}, and in the case of SGHMC\index{stochastic gradient Hamiltonian Monte Carlo}, the first moment is very similar to the first moment given by the NUTS sampler. As illustrated previously in Figure \ref{fig:stoc_grads}, SGLD\index{stochastic gradient Langevin dynamics} can produce good approximations to the first posterior moment, but for small data subsamples it tends to produce overestimates of the second posterior moment. The reason for the poorer posterior approximation of the SGLD-based samplers compared to the SGHMC-based samplers can be seen in the Monte Carlo trace plots in Figure \ref{fig:logistic-regression-trace-plots}. For the higher-dimensional setting ($d=500$), we can see that for posterior components $\theta_1$ and $\theta_2$, the mixing\index{mixing} is worse for SGLD and SGLD-CV compared to SGHMC and SGHMC-CV. This is then reflected in the Monte Carlo approximation for the posterior mean and variance (Figure \ref{fig:logistic-regression-error}), where SGLD and SGLD-CV are not as accurate as SGHMC and SGHMC-CV when $d=500$, but display similar levels of accuracy for $d=100$ and $d=200$. The mixing of the SGLD-based samplers could be improved by hand-tuning the step-size parameter $\delta$, or preconditioning the gradients to account for the correlation structure of the posterior distribution. 

\begin{figure}
    \centering
    \includegraphics[width=0.8\textwidth]{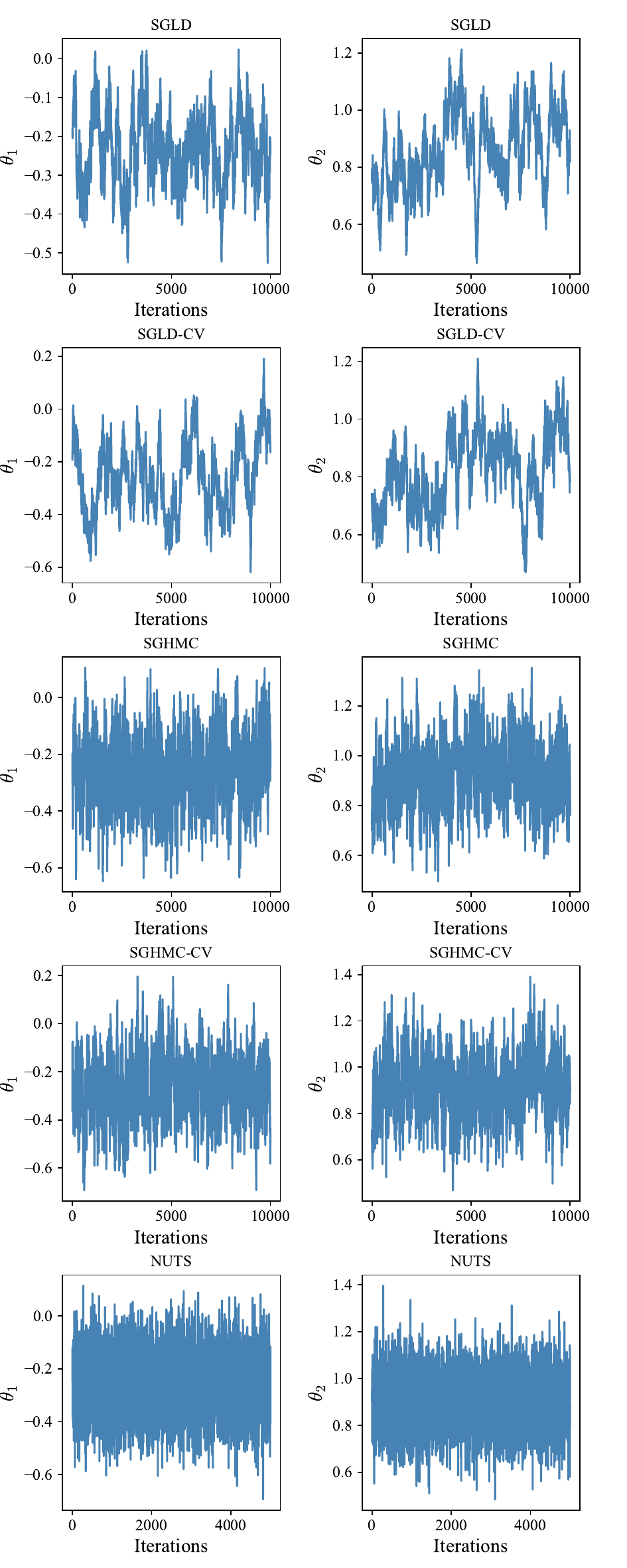}
    \caption{Trace plots for the first two components of $\st$ with $d=500$.}
    \label{fig:logistic-regression-trace-plots}
\end{figure}

Beyond posterior accuracy, we can also assess the predictive accuracy of SGMCMC\index{stochastic gradient Markov chain Monte Carlo} algorithms against exact full-data MCMC\index{Markov chain Monte Carlo} algorithms, in this case using the NUTS sampler. 
Figure \ref{fig:logistic-log-pred} illustrates that SGMCMC\index{stochastic gradient Markov chain Monte Carlo} algorithms are competitive against slower Metropolis--Hastings-based algorithms when assessed against predictive accuracy. Figure \ref{fig:logistic-log-pred} plots the percentage improvement in log-posterior predictive accuracy for each SGMCMC\index{stochastic gradient Markov chain Monte Carlo} algorithm over the log-posterior predictive accuracy of the NUTS sampler, which is treated as the gold standard approach. The results are given for the logistic regression\index{logistic regression} model on a test dataset using posterior samples over a range of parameter dimensions $d$. The results highlight that SGMCMC\index{stochastic gradient Markov chain Monte Carlo} algorithms are competitive and potentially superior to slower, full-data, MCMC\index{Markov chain Monte Carlo} algorithms in terms of predictive accuracy, displaying only a small decrease in efficiency but with a significant computational advantage. 

\begin{figure}
    \centering
    \includegraphics[width=0.7\textwidth]{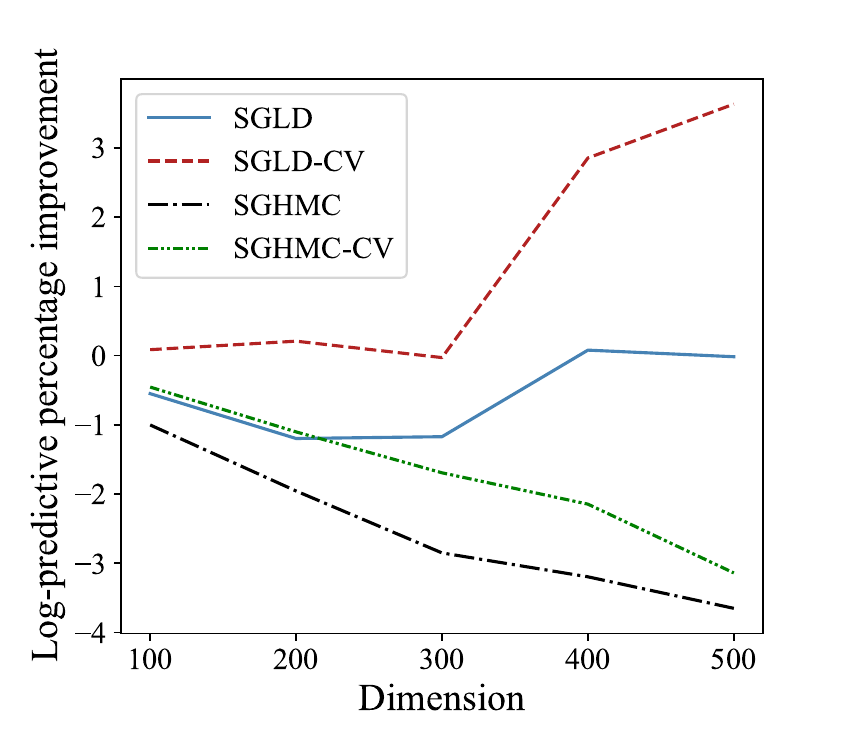}
    \caption{Percentage improvement in the log-predictive density of each SGMCMC\index{stochastic gradient Markov chain Monte Carlo} algorithm relative to the log predictive density of the NUTS sampler on the logistic regression model calculated on a test data set.
    The dimension of the parameter of interest is $d \in \{100,200,300,400,500\}$. 
    }
    \label{fig:logistic-log-pred}
\end{figure}

\subsection{Experiments on a Bayesian Neural Network Model}
\label{sec:ch3-bnn}

A Bayesian neural network\index{Bayesian neural network} model for multi-class classification was introduced in Section \ref{sec:ch1-BNN}, where the dataset $\mathcal{D}=\{y_j,\bx_j\}_{j=1}^N$is comprised of a collection of $G$ classes $y_j \in \{1,\ldots,G\}$ and covariates $\bx_j \in \mathbb{R}^{d_x}$. The unnormalised posterior density is 

\begin{equation}
    \label{eq:ch3-bnn-post-dens}
    \pi(\btheta):=\pi(\btheta|\mathcal{D}) \propto \pi_0(\btheta)\prod_{j=1}^N \exp(\mathbf{A}^\top_{y_j+1} \sigma(\mathbf{B}^\top \bx_j + \mathbf{b}) + a_{y_j+1}),
\end{equation}
where $\sigma(\cdot)$ is a softmax activation function (see Section Section \ref{sec:ch1-BNN} for details). The model parameters $\btheta=\mbox{vec}(\mathbf{A},\mathbf{B},\mathbf{a},\mathbf{b})$ are the weights $\mathbf{A},\mathbf{B}$ and biases $\mathbf{a},\mathbf{b}$ of the network model. We shall assume independent standard Gaussian priors for each parameter, i.e. $\btheta \sim \mathsf{N}(0,\mathbf{I}).$

Neural networks are commonly used for image classification tasks. One of the most fundamental and widely used datasets in image classification is the MNIST handwritten digit dataset. The MNIST dataset consists of images of handwritten digits, ranging from zero to nine, i.e. $y_j \in \{0,1,2,3,4,5,6,7,8,9\}$ (see Figure \ref{fig:mnist-digits} for a subsample of the dataset). Each image is represented by a small square of 28 pixels by 28 pixels which are treated as covariates. Each $\bx_j \in \mathbb{R}^{784}$ is a vectorisation of a matrix made up of 28 rows and 28 columns, with each pixel containing grayscale information representing the darkness of that specific point in the image. A brighter pixel would have a higher value, while a darker one would have a lower value.

\begin{figure}
    \centering
    \includegraphics[width=0.8\textwidth]{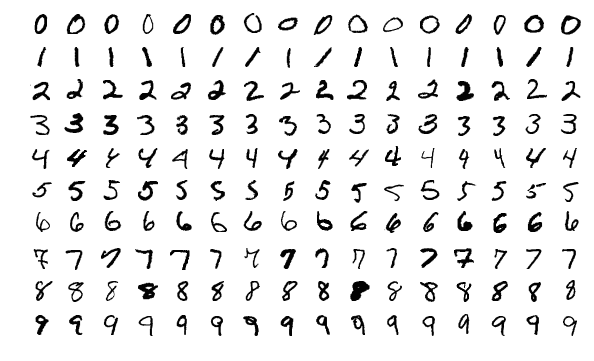}
    \caption{A selection of digits from the MNIST dataset. Image source - Wikipedia. }
    \label{fig:mnist-digits}
\end{figure}

The Bayesian neural network\index{Bayesian neural network} for this example \eqref{eq:ch3-bnn-post-dens} has two layers: an input layer that receives the information from the $28 \times 28$ image, and a hidden layer containing 100 hidden variables that act as intermediate processing units. The parameters of the neural network $\btheta$ are of the form $\mathbf{A}\in \mathbb{R}^{10 \times 100},$ $\mathbf{B}\in \mathbb{R}^{784 \times 100},$ $\mathbf{a}\in \mathbb{R}^{1 \times 10},$ and $\mathbf{b}\in \mathbb{R}^{1 \times 100}$. 

 The MNIST dataset contains a large collection of $60000$ images in the training set. Each image has a corresponding label, indicating which digit $(0-9)$ it represents. Using SGMCMC\index{stochastic gradient Markov chain Monte Carlo} algorithms, we can approximate the posterior distribution\index{posterior distribution} of the Bayesian neural network\index{Bayesian neural network} using subsamples of the labelled images and pixel values to train the Bayesian neural network\index{Bayesian neural network} to recognise patterns and relationships between the pixels and the corresponding digits. 

 We use the SGLD\index{stochastic gradient Langevin dynamics} and SGHMC\index{stochastic gradient Hamiltonian Monte Carlo} algorithms from the Python package SGMCMCJax \citep{coullon2022sgmcmcjax}, and their control-variate counterparts, to draw samples from the Bayesian neural network\index{Bayesian neural network} posterior \eqref{eq:ch3-bnn-post-dens}. We run each algorithm for $2000$ iterations and retain every $10$th iteration of the Markov chain. A subsample size of $1\%$ of the full dataset is used for all SGMCMC\index{stochastic gradient Markov chain Monte Carlo} samplers. For the control-variate-based algorithms, a stochastic gradient descent algorithm is used to find the posterior mode and the Markov chain is initialised at the posterior mode.

\begin{figure}
    \centering
    \includegraphics[width=0.8\textwidth]{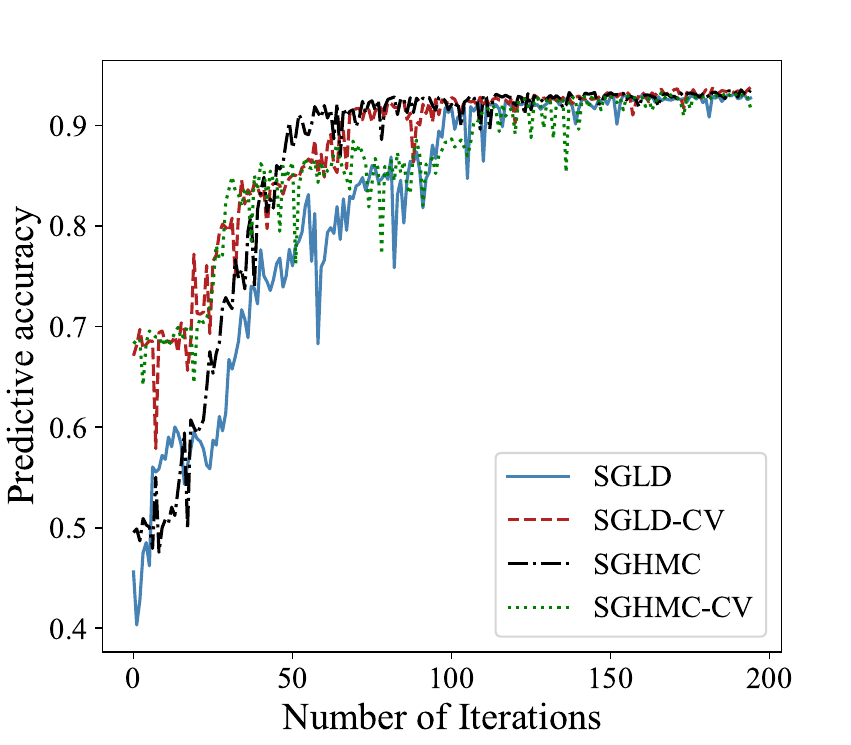}
    \caption{Average posterior predictive accuracy over all classes for each of the SGMCMC samplers.}
    \label{fig:ch3-bnn-predictive}
\end{figure}

 There is a separate set of $10000$ unseen images, also with corresponding labels but hidden from the training process. This test set allows us to evaluate how well our neural network performs on new data. By feeding these new images into the network, we can see if the posterior network is able to accurately classify the images into one of the ten-digit categories (0-9). Figure \ref{fig:ch3-bnn-predictive} shows the posterior predictive accuracy for each SGMCMC\index{stochastic gradient Markov chain Monte Carlo} sampler over the number of Monte Carlo iterations (storing every $10$th posterior sample). The results show that all of the samplers converge to approximately $93\%$ accuracy in classifying the MNIST test set digits. The SGLD\index{stochastic gradient Langevin dynamics} and SGHMC\index{stochastic gradient Hamiltonian Monte Carlo} samplers converge at a similar rate (in terms of predictive accuracy). The control variate-based SGLD\index{stochastic gradient Langevin dynamics} and SGHMC\index{stochastic gradient Hamiltonian Monte Carlo} samplers also converge at a similar rate to each other but achieve higher predictive accuracy with fewer Monte Carlo iterations as these samplers are initialised at the posterior mode and thus remove the burn-in\index{burn-in} phase of the Monte Carlo sampler.

\section{Generalisations and Extensions}
\label{sec:ch3-extensions}

The SGMCMC\index{stochastic gradient Markov chain Monte Carlo} framework outlined in Section \ref{ch3:sgmcmc-general} can be extended beyond the SGLD\index{stochastic gradient Langevin dynamics} algorithm to improve Markov chain mixing\index{mixing}. However, two key assumptions limit the applicability of current SGMCMC\index{stochastic gradient Markov chain Monte Carlo} algorithms: (i) the parameters $\st \in \mathbb{R}^d$ exist in an unconstrained space, and (ii) the log-posterior density $\log\pi(\st)$ is a sum over conditionally independent terms.

Assumption (i) precludes estimating $\st$ on a constrained space (e.g. $\st \in [0,1]^d$). Assumption (ii) requires data $\by_1,\ldots,\by_N$ to be independent or have limited dependence, restricting the applicability of SGMCMC\index{stochastic gradient Markov chain Monte Carlo} for time series or spatial models. 

Ongoing research aims to relax these assumptions and expand SGMCMC\index{stochastic gradient Markov chain Monte Carlo} to broader model classes. Some promising directions include:

\begin{itemize}
    \item Transformation techniques to enable sampling on constrained parameter spaces \citep{brosse2017sampling, bubeck2018sampling, hsieh2018mirrored}. 
    \item Exploiting short-range dependencies and other model structures to allow subsampling for time series and network data \citep{li2016scalable, ma2017stochastic, aicher2023stochastic}.
    \item  Leveraging alternative stochastic processes with desired invariant distributions as samplers for models with complex data and parameter structures \citep{baker2018large}.
\end{itemize}

By developing specialised subsampling schemes and transformations, it is possible to make SGMCMC\index{stochastic gradient Markov chain Monte Carlo} algorithms more applicable to a wider range of Bayesian models while retaining computational efficiency.

\subsection{Scalable Inference for Models in Constrained Spaces}

Many statistical models contain parameters with inherent constraints, such as the variance parameter $\tau^2$ in a Gaussian distribution ($\tau \in \mathbb{R}^+$) or the success probability $p$ in a Bernoulli model ($p \in [0,1]$). Simulating these constrained parameters using standard Langevin dynamics\index{Langevin dynamics} \eqref{eq:Euler} will often produce samples violating the constraints. For instance, if at iteration $k$ of the SGLD\index{stochastic gradient Langevin dynamics} algorithm $\st_k = \tau_k^2$ is close to zero, then with high probability, the next iterate $\st_{k+1}$ is likely to be negative, breaking the positivity constraint. 
One solution is to shrink the step size\index{step size} $\delta \rightarrow 0$ as $\tau^2 \rightarrow 0$, but this leads to poor mixing\index{mixing} near the boundary. 

A natural approach is to transform the Langevin dynamics\index{Langevin dynamics} so sampling occurs in an unconstrained space, but the choice of transformation greatly impacts mixing\index{mixing} near the boundary. Alternatively, we can project the dynamics into the constrained space, however, this yields poorer convergence compared to the unconstrained setting. The \emph{mirrored Langevin algorithm}\index{mirrored Langevin algorithm} \citep{hsieh2018mirrored} was proposed to address this issue. It builds on the mirrored descent algorithm \citep{beck2003mirror} from the optimisation literature to transform constrained sampling into an unconstrained problem using a \textit{mirror map}. Compared to a generic mapping function, mirror maps have additional properties, such as strict convexity, differentiability and diverging gradients at the boundary of the domain, which makes mirror map algorithms well-suited to constrained sampling and optimisation problems.

If we assume that $\pi(\st)$ is the density of a constrained distribution, namely that $\log\pi(\st)$ has a bounded convex domain, then assuming that there exists a mirror map $\psi(\cdot)$ which is closed and proper, we can map the variable $\st \sim \pi$ from the constrained space (primal space) to the unconstrained space (dual space), where $\bvartheta:=\nabla \psi(\st)$ and $\bvartheta \sim \nu$.
Under the assumption that $\psi$ is a convex function that is closed, proper, and twice continuously differentiable, with Fenchel dual noted as $\psi^*$, then Theorem 1 of \cite{hsieh2018mirrored} shows that  $\nabla\psi^*(\boldsymbol{\vartheta})\sim\pi$. This result implies that it is possible to transform the problem of sampling from a constrained distribution $\pi$ to the simpler problem of sampling from an unconstrained distribution $\nu$. 
Using the algorithms highlighted in this chapter, we can simulate a Markov chain $\boldsymbol{\vartheta} \sim \nu$ and apply the mapping $\nabla\psi^*(\boldsymbol{\vartheta})$ to produce samples from the desired posterior distribution\index{posterior distribution} $\pi$. In the case of the Langevin sampler, this is achieved by modifying the original Langevin diffusion \eqref{eq:LangevinSDE} to a \textit{mirrored Langevin diffusion},
\begin{align}
    \label{eq:mirrored-langevinSDE}
    \mathrm{d}\boldsymbol{\vartheta} &= \frac{1}{2}(\nabla\log\nu \circ \nabla\psi)(\st)\mathrm{d}t + \mathrm{d}W_t \\
    \st &= \nabla\psi^*(\boldsymbol{\vartheta}).
\end{align}

In practice, as noted earlier in this chapter, it is not possible to simulate exactly from the mirror Langevin diffusion. Using the same Euler--Maruyama\index{Euler-Maruyama} discretisation scheme it is possible to create a practical discrete-time algorithm. Stochastic gradient implementations of the mirror Langevin dynamics are easily derived in the dual space and follow directly from the SGLD\index{stochastic gradient Langevin dynamics} algorithm. 

One popular model that requires sampling from a constrained domain is the latent Dirichlet allocation (LDA) model \citep{blei2003latent} which is used for topic modelling. Here, the model parameters are constrained to the probability simplex, i.e. $\theta_{i} \geq 0, i=1,\ldots,d$ and $\sum_{i=1}^d \theta_{i} = 1$. Mirrored Langevin dynamics\index{mirrored Langevin algorithm} \eqref{eq:mirrored-langevinSDE} can be used to simulate from the simplex distribution by mapping the parameters to $\mathbb{R}^d$ and running the Langevin dynamics\index{Langevin dynamics} algorithm on the unconstrained space. The \emph{entropic} mirror map \citep{beck2003mirror} satisfies the required assumptions for a valid map function under the mirrored Langevin dynamics\index{mirrored Langevin algorithm},
\begin{equation}
\label{eq:entropic-map}
    \psi(\st)= \sum_{i=1}^d \theta_{i}\log\theta_{i}+ \left(1-\sum_{i=1}^d\theta_{i}\right)\log\left(1-\sum_{i=1}^d\theta_{i}\right),
\end{equation}
where $0\log0:=0$. The transformed log-posterior density is then given by 
\begin{equation}
\label{eq:dual-topic-model}
    \log\nu(\boldsymbol{\vartheta}) = \log\pi(\st) \circ \nabla\psi^*(\boldsymbol{\vartheta}) - \sum_{i=1}^d \vartheta_{i} +(d+1)\psi^*(\boldsymbol{\vartheta}),
\end{equation}
where $\psi^*(\boldsymbol{\vartheta})=\log(1+\sum_{i=1}^d \exp(\vartheta_{i}))$ is the Fenchal dual of $\psi(\cdot)$ and is strictly convex with Lipschitz gradients. 
Aside from transforming a constrained sampling problem into an unconstrained sampling problem, mirror maps can also lead to simpler posterior distributions in the dual space. For example, the Dirichlet posterior distribution\index{posterior distribution} introduced above, leads to a posterior distribution\index{posterior distribution} on the dual space \eqref{eq:dual-topic-model} which is strictly log-concave.

Instead of using mirror maps to sample from the posterior distribution\index{posterior distribution} of the LDA model, we could instead use the stochastic gradient Riemannian Langevin dynamics\index{stochastic gradient Riemannian Langevin dynamics} algorithm (see Section \ref{ch3:sgmcmc-general}). Under the SGRLD\index{stochastic gradient Riemannian Langevin dynamics} algorithm, the constrained parameters $\st$ can be transformed to $\mathbb{R}^d$ via several alternative reparameterisations (see \cite{patterson2013stochastic} for a list). However, this can induce asymptotic biases dominating the boundary regions. An alternative approach is to recognise that the LDA posterior can be expressed as a transformation of independent gamma random variables. Therefore, rather than simulating from the simplex distribution via the Langevin diffusion, one could instead utilise the Cox--Ingersoll--Ross (CIR) process \citep{cox1985theory}, which is invariant with respect to the gamma distribution. This CIR-based approach \citep{baker2018large} avoids boundary biases. More broadly, leveraging alternative stochastic processes with desired invariant distributions can enable specialised samplers for models with complex structures.

\subsection{Scalable Inference with Time Series Data} 

A key requirement for developing stochastic gradient algorithms is the ability to generate unbiased estimates of $\nabla \log\pi(\st)$ using data subsampling, as in \eqref{eq:U-hat-simple}. This is straightforward when the log-posterior density, and its gradient, are expressed as a sum of $\log \pi_i(\btheta)$ and $\nabla \log \pi_i(\btheta)$ terms, respectively, i.e.  $\log\pi(\st)=\sum_{i=1}^N \log \pi_i(\btheta).$
Randomly subsampling these terms provides an unbiased log-density and gradient estimate.

However, for models where data are not conditionally independent, for example, network data, time series, or spatial data, the log-posterior density cannot be expressed as a simple sum. Naively subsampling will yield biased estimates of $\log\pi(\st)$ and $\nabla \log\pi(\st)$. Capturing both short- and long-range dependencies in spatial data with subsampling remains an open challenge for SGMCMC\index{stochastic gradient Markov chain Monte Carlo}. For network data, \citet{li2016scalable} developed an SGMCMC\index{stochastic gradient Markov chain Monte Carlo} algorithm for the mixed-membership stochastic block model using block structure and stratified subsampling to obtain unbiased gradient estimates. 

Recent work in SGMCMC\index{stochastic gradient Markov chain Monte Carlo} for temporally correlated data has focused on hidden Markov models\index{Hidden Markov model} \citep{ma2017stochastic} with finite states, linear dynamical systems \citep{aicher2019stochastic} and general nonlinear hidden Markov models\index{Hidden Markov model} \citep{aicher2023stochastic}. Under this modelling framework, the hidden Markov model\index{Hidden Markov model} consists of two stochastic processes: i) a latent state process $\{\bX_t\}_{t=0}^T$, which is a Markov chain that evolves over time $t=1,\ldots,T$, with $\bX_t$ depending only on $\bX_{t-1}$ and $\btheta$, with transition density given by $p(\bx_t | \bx_{t-1}, \st);$ and ii) an observed process $\{\bY_t\}_{t=0}^T$ that is conditionally independent given the latent states and $\btheta$, which are observed with probability density  $p(\by_t | \bx_t, \st)$. Assuming model parameters $\st$, the full generative model (Figure \ref{fig:hmm-model}) is 
\begin{align}
    \label{eq:state-space}
    \bX_t | (\bX_{t-1} = \bx_{t-1}, \st) &\sim p(\bx_t | \bx_{t-1}, \st) \\
    \bY_t | (\bX_t = \bx_t, \st) &\sim p(\by_t | \bx_t, \st).
\end{align}
The latent Markov chain $\bX_t$ captures the dynamics and temporal dependence, while the observations $\bY_t$ depend only on the current state of the latent process. Hidden Markov models\index{Hidden Markov model} are useful for modelling complex time series data by augmenting the observables with latent states. It is often common to distinguish between hidden Markov models\index{Hidden Markov model} and state-space models, where in the case of the former the latent process $\bX_t$ is discrete and is continuous in the case of the latter. However, for the sake of convenience, we shall use the term \textit{hidden Markov model}\index{Hidden Markov model} to cover both model types. 

Using SGMCMC\index{stochastic gradient Markov chain Monte Carlo} methods, it is possible to  estimate the model parameters $\st$ for general hidden Markov models\index{Hidden Markov model} which exhibit temporal dependency in the observations. As before, the goal is to sample from the posterior distribution\index{posterior distribution} $\pi(\st):=p(\st|\by)$ where $\by = \{\by_1, \ldots, \by_T\}$ is the observed data sequence, which is proportional to the product of the likelihood $p(\by|\st)$ and the prior $\pi_0(\st)$. The likelihood $p(\by|\st)$ typically cannot be evaluated exactly, as it requires summing (discrete setting) or integrating (continuous setting) over the latent states $\bX$. Focusing on the continuous variable setting, the latent states can be integrated out numerically using particle filtering techniques \citep{doucet2009tutorial}. Using a particle approximation of Fisher's identity \citep{nemeth2016particle},
\begin{align}
    \label{eq:fisher-identity}
    \nabla_{\st} \log\pi(\st) = \nabla_{\st} \log p(\by_{1:T}|\st) &= \Expects{\bX|\bY,\st}{\nabla_{\st}\log p(\bX_{1:T},\by_{1:T}|\st)} \\
    &= \sum_{t=1}^T \Expects{\bX|\bY,\st}{\nabla_{\st}\log p(\bX_t,\by_t|\bx_{t-1},\st)} \nonumber
\end{align}
it is possible to unbiasedly approximate  $\nabla_{\st}\log\pi(\st)$ by replacing the posterior distribution\index{posterior distribution} of the latent states $p(\bx_{1:T}|\by_{1:T},\st)$ with a numerical approximation represented by a set of $P$ particles $\{\bX_{1:T}^{(p)}\}_{p=1}^P$.

\begin{figure}
\centering
\begin{tikzpicture}[auto=false, node distance=1.7cm]
\tikzstyle{latent} = [circle, draw, fill=white, text width=0.6cm, text centered];
\tikzstyle{observed} = [rectangle, draw, fill=white, text width=0.6cm, text centered];
\tikzstyle{arrow} = [->, >=stealth];

\node[latent] (X1) {$\bX_0$};
\node[latent, right of=X1] (X2) {$\bX_{t-2}$};
\node[latent, right of=X2] (X3) {$\bX_{t-1}$};
\node[latent, right of=X3] (X4) {$\bX_t$};
\node[latent, right of=X4] (X5) {$\bX_{t+1}$};
\node[latent, right of=X5] (X6) {$\bX_{t+2}$};
\node[latent, right of=X6] (XT) {$\bX_T$};

\node[observed, below of=X1] (O1) {$\by_0$};
\node[observed, below of=X2] (O2) {$\by_{t-2}$};
\node[observed, below of=X3] (O3) {$\by_{t-1}$};
\node[observed, below of=X4] (O4) {$\by_t$};
\node[observed, below of=X5, label=south east:$\mathcal{S}$] (O5) {$\by_{t+1}$};
\node[observed, below of=X6, label=south east:$\mathcal{S}^*$] (O6) {$\by_{t+2}$};
\node[observed, below of=XT] (OT) {$\by_T$};

\draw[arrow,dotted] (X1) -- (X2);
\draw[arrow] (X2) -- (X3);
\draw[arrow] (X3) -- (X4);
\draw[arrow] (X4) -- (X5);
\draw[arrow] (X5) -- (X6);
\draw[arrow,dotted] (X6) -- (XT);

\draw[arrow] (X1) -- (O1);
\draw[arrow] (X2) -- (O2);
\draw[arrow] (X3) -- (O3);
\draw[arrow] (X4) -- (O4);
\draw[arrow] (X5) -- (O5);
\draw[arrow] (X6) -- (O6);
\draw[arrow] (XT) -- (OT);

\draw[thick] ($(X3.north west)+(-0.4,0.4)$)  rectangle ($(O5.south east)+(0.4,-0.4)$);
\draw[thick,dotted] ($(X2.north west)+(-0.5,0.5)$)  rectangle ($(O6.south east)+(0.5,-0.5)$);

\end{tikzpicture}
\caption{Graphical representation of the hidden Markov model with latent variables $\bX_i$ and observations $\by_i$. The subsequence is captured in the solid box $\mathcal{S}$ and the buffer region is highlighted by the dotted box $\mathcal{S}^*$.}
\label{fig:hmm-model}
\end{figure}
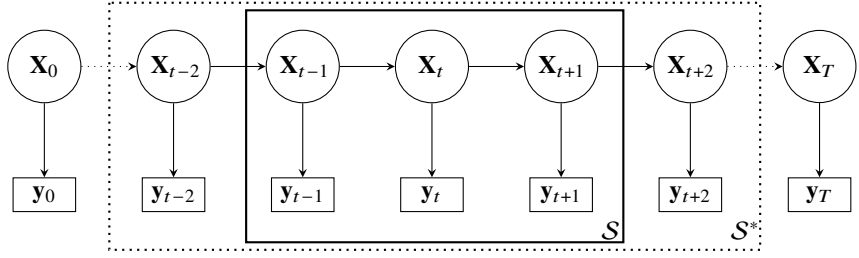

Calculating Fisher's identity can be computationally expensive for large $T$ and so an SGMCMC\index{stochastic gradient Markov chain Monte Carlo} approximation to Fisher's identity can be used to replace the full data gradient \eqref{eq:fisher-identity} with a stochastic approximation estimated from a random subset of the data. This allows each gradient evaluation to be cheaper, enabling more MCMC\index{Markov chain Monte Carlo} iterations and better convergence for large $T$. However, naively subsampling the data randomly \eqref{eq:U-hat-simple} induces bias since it breaks temporal dependencies. To address this issue, \cite{aicher2019stochastic,aicher2023stochastic} propose to subsample\index{subsample} contiguous subsequences. Figure \ref{fig:hmm-model} provides a graphical representation of the hidden Markov model\index{Hidden Markov model} which also highlights the subsampled data as a contiguous subsequence $\mathcal{S}$ of size $m$. The stochastic gradient\index{stochastic gradient} following Fisher's identity \eqref{eq:fisher-identity} is then 
\begin{equation}
    \label{eq:fisher-stoch}
     \nabla^{(m)}_{\st} \log p(\by_{1:T}|\st) = \sum_{t \in \mathcal{S}} \mathrm{Pr}(t\in\mathcal{S})^{-1} \cdot \Expects{\bX|\by_{\mathcal{S}^*},\st}{\nabla_{\st}\log p(\bX_t,\by_t|\bx_{t-1},\st)}.
\end{equation}
However, the stochastic gradient\index{stochastic gradient} estimator 
\eqref{eq:fisher-stoch} is biased because the expectation is taken over the latent state which is dependent only on $\by_{\mathcal{S}}$ and not $\by_{1:T}$. This bias can be reduced by extending the subsequence $\mathcal{S}$ to include a \textit{buffered} region $\mathcal{S}^*$. The expectation in \eqref{eq:fisher-identity} is then taken over the wider buffered range $p(\bx_t|\by_{\mathcal{S}^*},\st)$ rather than $p(\bx_t|\by_\mathcal{S},\st)$. Results from \cite{aicher2019stochastic} show that under Lipschitz assumptions on the model \eqref{eq:state-space}, and its gradients, the error in the stochastic gradients decays geometrically with the buffer size $|\mathcal{S}^*|$.

\section{Chapter Notes}
\label{sec:ch3-chapter-notes}

 Markov chain Monte Carlo algorithms have been the cornerstone of Bayesian inference since the 1990s. However, as the size of datasets has grown, the requirement that each MCMC\index{Markov chain Monte Carlo} update uses all of the data to approximate the posterior distribution\index{posterior distribution} has created a computational challenge. There has been significant recent interest in the Statistics and Machine Learning communities to create new Monte Carlo-based algorithms for scalable Bayesian inference in the presence of large datasets \citep{bardenet2014towards}. Broadly speaking, scalable MCMC\index{Markov chain Monte Carlo} algorithms tend to fall into two categories which either (i) subsample the data \citep{Welling:2011,chen2014stochastic,nemeth2021stochastic}, as covered in this chapter, or (ii) use parallel computing to distribute the computational cost across multiple CPUs \citep{scott2016bayes,nemeth2018merging,vyner2023swiss}. 

In the context of data subsampling, the per iteration cost of the Monte Carlo algorithm is reduced. However, this reduction in computational cost is only beneficial if the subsampling scheme leads to posterior approximations with high statistical accuracy. \cite{Chatterji:2018} gives results on the number of iterations, and resulting computational cost, required for different stochastic gradient algorithms to produce samples from a distribution which is within a specified ``distance" of $\pi$. Other works \citep{Bierkens:2019,Huggins:2016,Pollock:2016} have studied the computational and statistical trade-offs that result from approximate and scalable MCMC\index{Markov chain Monte Carlo} schemes, which are often referred to as \textit{exact-approximate algorithms}. It is often the case with scalable MCMC\index{Markov chain Monte Carlo} algorithms that \textit{there is no free lunch} and simple naive subsampling alone does not lead to statistically efficient algorithms, see for example \cite{johndrow2020no}.  In the case of control-variates for subsampling, a number of theoretical results \cite[e.g.][]{Nagapetyan:2017,Baker:2017,Brosse:2018} show that if we ignore the pre-processing cost of finding $\widehat{\st}$, the computational cost per-effective sample of SGLD with control variates is $O(1)$, rather than the $O(N)$ cost for SGLD\index{stochastic gradient Langevin dynamics} with the simple gradient estimator \eqref{eq:U-hat-simple}.


\chapter{Non-Reversible MCMC}
\label{chap:non-reversible}

A reversible Markov chain is any Markov chain that satisfies detailed balance\index{detailed balance}; see Section \ref{sec:ch1-MC}. Remember, this condition states that, at stationarity, the probability of the chain starting in a set $\cB$ and moving to set $\cC$ is equal to the probability of it starting in the set $\cC$ and moving to $\cB$. This means that the dynamics of the process are the same backwards in time as forwards in time. One consequence of reversibility\index{reversibility} is that the Markov chain can exhibit random-walk behaviour, where it can return to states or regions of the state space where it has recently been. 

A non-reversible Markov chain is any Markov chain that does not satisfy detailed balance. As we will see, the potential benefit of non-reversibility is that the Markov chain can more quickly explore the state space as it can suppress the random walk behaviour. However, designing non-reversible Markov chains with the required stationary distribution, or even determining the stationary distribution of a non-reversible Markov chain, is much more challenging than for a reversible chain. This chapter will describe one approach to designing non-reversible MCMC samplers based on the idea of {\em lifting} -- which involves taking a reversible MCMC scheme and then lifting it to a higher-dimensional state-space to enable the use of non-reversible moves. These ideas naturally motivate the non-reversible continuous-time MCMC samplers of Chapter \ref{chap:continuous-time}.

\section{The Benefits of Non-Reversibility}
\label{sec:ch5:benefits}

To see the benefits of non-reversible Markov chains\index{non-reversible MCMC|(}\index{Markov chain!discrete-time}, we will consider the following simple example.

\begin{example}
\label{example.rw.on.ring}
Let $X_0=0$ and 
\[
X_k= (X_{k-1}+J_k) (\mbox{mod } S),
\]
so that $X_k$ follows a random walk on $\{0,\ldots,S-1\}$. Here
$J_k$ is the jump size, which can have any distribution on $\{-h,\dots,h\}$ that is symmetric about $0$, and $(\mbox{mod } S)$ means we take the remainder after dividing by $S$. We include $(\mbox{mod } S)$, so a random walk that moves to negative values, or values equal to or above $S$, gets mapped back to $\{0,\ldots,S-1\}$, with $S$ mapping to 0 and $-1$ to $S-1$ and so on. For simplicity, we consider $J_k$ to have a uniform distribution on $\{-h,-h+1,\ldots,h\}$. It is straightforward to show that the resulting Markov chain is reversible and has the uniform distribution on $\{0,\ldots,S-1\}$ as its stationary distribution\index{stationary distribution}.
\end{example}

First, consider $h=1$ and look at the behaviour of the chain as we increase $S$. In the top row of Figure \ref{fig:ch5_nonrev_trace}, we show trace plots of the chain for $S=100$, $S=200$ and $S=400$. In each case, we show the path of the chain over $40S$ iterations. What we observe is that as $S$ increases the chain becomes much slower at exploring the state-space. We can see this more clearly if we look at the empirical marginal distribution of the chain after $n$ time steps. Let $b$ be a chosen burn-in\index{burn-in} period, then for $n>b$ the empirical marginal distribution, which is our natural estimator of the stationary distribution of the chain, is
\begin{equation} \label{eq:ch5-empiricalmarginaldistribution}
\widehat{\pi}_n(i) = \frac{1}{n-b} \sum_{k=b+1}^n \Indicator{X_k=i}.
\end{equation}
This is just the proportion of time, after burn-in, that the chain was in state $i$. We can then compare this estimate with the true stationary distribution, by calculating the total variation distance between the two. This is just $\sum_{i=0}^{S-1}|\widehat{\pi}_n(i)-1/S|$, and is shown in the top-row of Figure \ref{fig:ch5_TVD_nonrev}, where we show the total variation distance against $n/S$ and against $n/S^2$ for $S=100$, $S=200$ and $S=400$. We see evidence that, as we increase $S$, the time we need to run our Markov chain must increase proportionally with $S^2$ to have the same degree of accuracy.

It is interesting to compare this with the performance of the following non-reversible Markov chain\index{Markov chain!discrete-time}, which we construct by making the random walk biased; \ie,  choosing a jump distribution $J_k$ whose expectation is non-zero. 
\begin{example}
\label{example.brw.on.ring}
In Example \ref{example.rw.on.ring}, keep $J_k$ taking values in $\{-1,0,1\}$, but set the jump probabilities to be $\{2/9,1/3,4/9\}$ respectively so that a positive jump is twice as likely as a negative one.
\end{example}

We show the resulting trace plots of the Markov chain in the bottom row of Figure \ref{fig:ch5_nonrev_trace}. Qualitatively the trace plots look very different to those in the top row, as the paths tend to move upwards at each iteration. This is linked to the non-reversibility of the chain, as a realisation of the chain forward in time will now look very different from a realisation backwards in time. Furthermore, the realisations for different $S$ look similar. That is, once we scale the number of iterations by $S$, chains with different $S$ mix similarly. This is shown quantitatively in the bottom left plot of Figure \ref{fig:ch5_TVD_nonrev}, where we plot the total variation distance of the empirical marginal distribution of our chain (\ref{eq:ch5-empiricalmarginaldistribution}) from the uniform distribution, against $n/S$: this is almost identical for the three values of $S$.

\begin{figure}
    \centering
    \includegraphics[width=\textwidth]{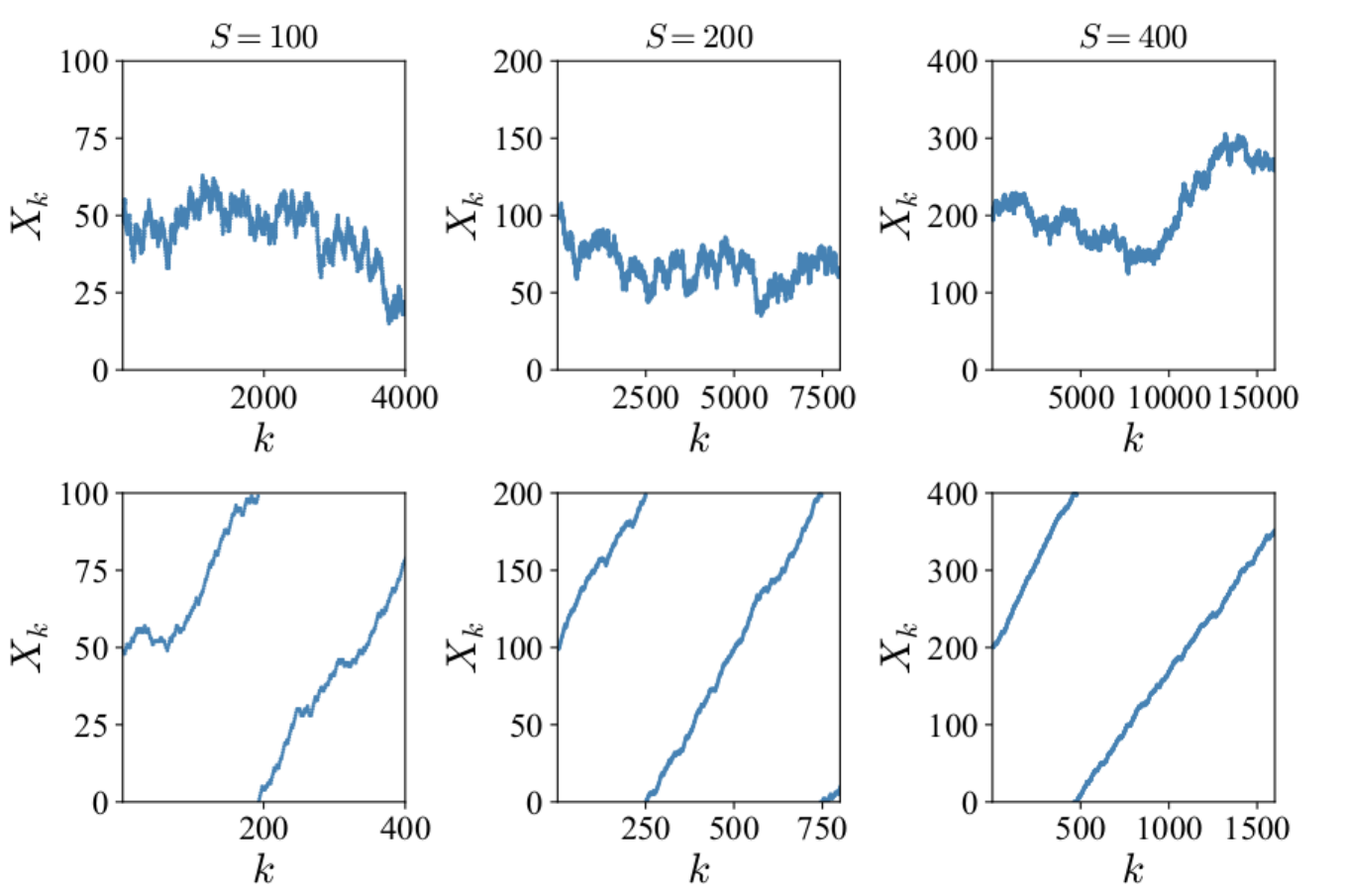}
    \caption{Trace plots of two MCMC algorithms for sampling from a uniform distribution on $\{0,\ldots,S-1\}$ for different values of $S$. A random walk of step size uniform on $\{-1,0,1\}$ (Example \ref{example.rw.on.ring}, top), and a non-reversible (biased) random walk which is twice as likely to have a step of 1 than a step of -1 (Example \ref{example.brw.on.ring}, bottom). Columns are for $S=100$ (left), $S=200$ (middle) and $S=400$ (right). We show the state of the MCMC algorithm for $40S$ iterations (top row) or $4S$ iterations (bottom row). For this scaling of iterations, we see the reversible MCMC algorithm mixes more slowly as $S$ increases, whereas qualitatively the mixing of the non-reversible algorithm remains similar.}
    \label{fig:ch5_nonrev_trace}
\end{figure}

Why does the non-reversible chain have better mixing\index{mixing} properties? Intuitively,  the poor performance of the reversible chain is because it has random-walk behaviour: it will often move up on one iteration and then move down on the next. The non-reversible chain suppresses this random-walk behaviour as its bias means it will tend to move in the same direction. The qualitative difference between these reversible and non-reversible chains that we have demonstrated empirically, can be shown theoretically \citep{diaconis2000analysis}. 

 For a simple heuristic of this behaviour, consider $X_0=\lfloor S/2\rfloor$ and $n\le \lfloor S/2 \rfloor$. For the symmetric random walk, $\Expect{X_n-X_0}=0$ and, since the total movement by iteration $n$ is a sum of $n$ independent moves, $\Var{X_n-X_0}\propto n$, so the typical amount of movement in the first $n$ iterations is proportional to $\sqrt{n}$. However, for the biased random walk, $\Expect{X_n-X_0}\propto n$ and so the amount of movement is roughly proportional to $n$.

Importantly, when we measured the performance of the non-reversible Markov chain we looked at the accuracy of \eqref{eq:ch5-empiricalmarginaldistribution}, which is the proportion of time it spends in each state averaged over time. If, instead, we look at the convergence of $\Prob{X_n=i}$ to $1/S$, we would obtain a very different result to that shown in Figure \ref{fig:ch5_TVD_nonrev}, as the bias of the random walk in Example \ref{example.brw.on.ring} means that the centre of the distribution of $X_n$ changes with $n$: the benefit of the non-reversible chain is only realised as we take the ergodic average over different time-points. This is most easily seen for the extreme case where $J_k=1$ with probability 1. In that case, we find that the distribution of $X_n$ is a point mass at a single value for each $n$, but by averaging over time we still have that \eqref{eq:ch5-empiricalmarginaldistribution} converges to the uniform distribution at a rate of $1/n$.

Finally, as an aside, we observe that the poor performance of the reversible chain is also linked to the fact that the size of the moves at each iteration is small -- this can be quantified in terms of the variance of $X_k-X_{k-1}$ relative to the variance of the stationary distribution. And it is the fact that this ratio increased as we increased $S$ that meant that the reversible chain performed relatively poorly for larger $S$. To see this we can implement the reversible random walk, but set the maximum step size\index{step size} $h=S/100$ so it is proportional to $S$. The total variation distance between (\ref{eq:ch5-empiricalmarginaldistribution}) and the uniform distribution for such a chain is shown in the bottom right plot of Figure \ref{fig:ch5_TVD_nonrev}, and demonstrates better scaling\index{scaling} with $S$: in fact, like the non-reversible chain, as $S$ increases we now obtain the same accuracy providing we scale $n$ to be proportional to $S$.

\begin{figure}
    \centering
    \includegraphics[width=\textwidth]{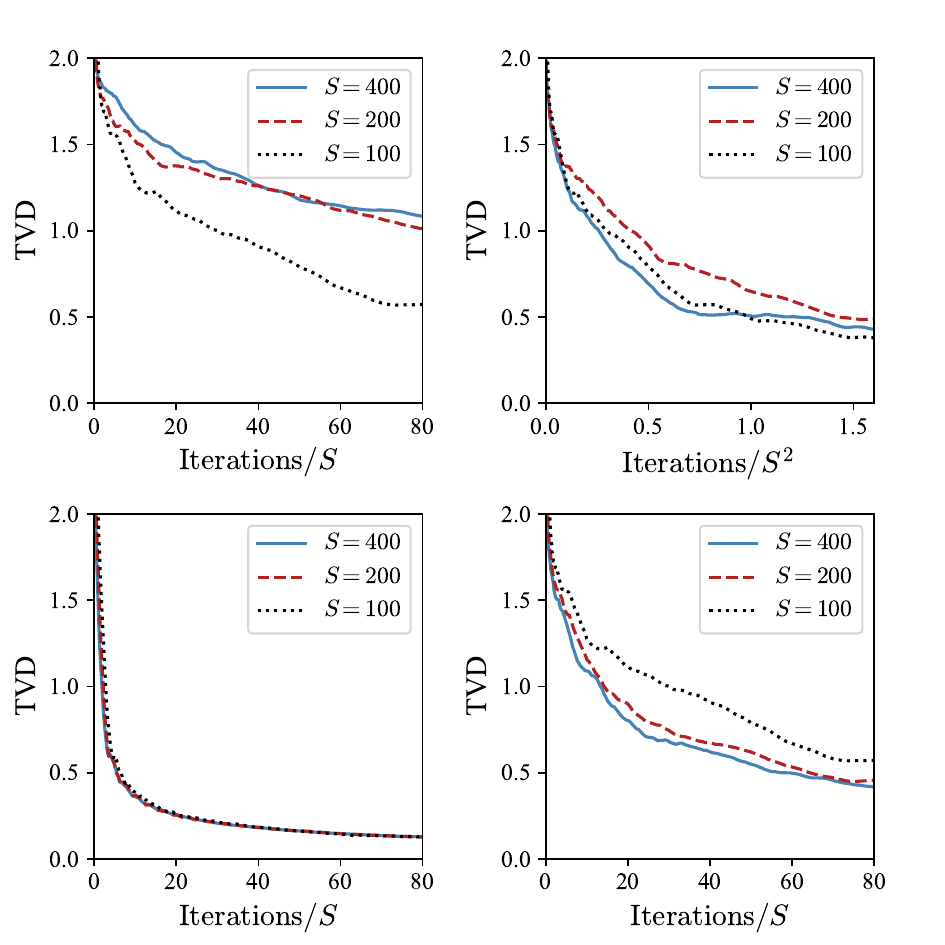}
    \caption{Empirical total variation distance (TVD) between  (\ref{eq:ch5-empiricalmarginaldistribution}) and the uniform distribution against the number of iterations and different values of $S$: $S=100$ (blue), $S=200$ (red) and $S=400$ (black). TVD for the random walk of Example \ref{example.rw.on.ring} with step size $1$ (top row) with $x$-axis scaled by $S$ (top left) or $S^2$ (top right). TVD for the non-reversible Example \ref{example.brw.on.ring} (bottom left), and for the Example \ref{example.rw.on.ring} with step size scaled by $S$ (bottom right); in both case the $x$-axis is scaled by $S$. All results are averaged over 10 realisations of the chains.  
    }
    \label{fig:ch5_TVD_nonrev}
\end{figure}

\index{non-reversible MCMC|)}
\section{Hamiltonian Monte Carlo Revisited}
\index{Hamiltonian Monte Carlo:(}
The Hamiltonian Monte Carlo, or HMC, algorithm of Section \ref{sec:ch2-HMC} is often viewed as a non-reversible MCMC algorithm. However, strictly it is a reversible algorithm. 

Remember that the HMC algorithm for sampling from a density $\pi(\btheta)$, involves augmenting the state of our MCMC\index{MCMC|see{Markov chain Monte Carlo}} algorithm with a momentum variable of the same dimension as $\btheta$. Denote the momentum variable by $\bp$, and the state of our Markov chain by $(\btheta,\bp)$. In the following, for simplicity, we will assume that the mass matrix\index{mass matrix} is the identity. We introduce a target density for $(\btheta,\bp)$,
\[
\tilde{\pi}(\btheta,\bp) \propto \pi(\btheta)\exp\left\{-\frac{1}{2}\bp^\top\bp \right\},
\]
which has independent components, with $\btheta$ from $\pi$ and $\bp$ having a standard Gaussian distribution. An HMC algorithm which has $\tilde{\pi}(\btheta,\bp)$ as its stationary density interleaves the following three moves at each iteration. Given current state $(\btheta_k,\bp_k)$ \index{Markov chain Monte Carlo}
\begin{itemize}
    \item[(1)] Simulate a new momentum $\bp$ from a standard Gaussian distribution.
    \item[(2)] \begin{itemize}
        \item[(i)] Obtain the proposed state $(\btheta',\bp')$ by running the leapfrog\index{leapfrog dynamics} dynamics, for some number, $L$, of steps with some step length, $\epsilon$, starting from $(\btheta_k,\bp)$, and flip the final momentum. 
        \item[(ii)] Accept the proposed state, $(\btheta_{k+1},\bp'')=(\btheta',\bp')$ with probability
        \[
        \min\left(1, \frac{\pi(\btheta',\bp')}{\pi(\btheta_k,\bp)} \right),
        \]
        otherwise $(\btheta_{k+1},\bp'')=(\btheta_k,\bp)$.
         \end{itemize}
    \item[(3)] Flip the momentum, $(\btheta_{k+1},\bp_{k+1})=(\btheta_k,-\bp'')$.
\end{itemize}
In the notation of Section \ref{sec:ch2-HMC}, the proposal in step (2i) is denoted by
$\Leap^L_-(\btheta_k,\bp)$. Move (2)  involves simulating non-reversible Hamiltonian dynamics to produce a potentially large proposed move, but the move itself is reversible. As, trivially, are moves (1) and (3). The move (3) has no net effect on the algorithm since the momentum is discarded at the next iteration, and it is not included in the description in Section \ref{sec:ch2-HMC}. It does, however, ensure that, if the proposal is accepted, the final momentum is the same as that which was obtained through the leapfrog approximation to the Hamiltonian dynamics rather than its opposite. This will be helpful in the next section.

Interleaving three different reversible moves does not necessarily result in a reversible Markov chain (see the next section, for example). But in this case, it is straightforward to show that the marginal process for $\btheta$ is a reversible Markov chain.  The benefit of HMC is in the use of the non-reversible deterministic leap-frog dynamics to produce large proposed moves that have a high chance of being accepted. As we saw in the previous section, the issues with reversible MCMC algorithms occur when the step size is small -- and HMC gets around this by being able to propose large moves rather than being non-reversible. This fact is seen in the scaling results for HMC as the dimension of the state-space increases (see Section \ref{sec:ch2-HMC} and the literature in Section \ref{sec.Ch2.ChapNotes}). 
\index{Hamiltonian Monte Carlo|)}

\section{Lifting Schemes for MCMC} \label{sec:ch5_lifting}
\index{lifting schemes|(}
Whilst HMC is a reversible algorithm when viewed as a Markov chain on $\btheta$, some of the ideas behind the Hamiltonian dynamics are common to non-reversible MCMC algorithms. Furthermore, it is possible to adapt the HMC algorithm so that it is non-reversible.

Most non-reversible MCMC algorithms involve the idea of ``lifting'', that is, defining the Markov chain on a higher-dimensional state space than you wish to sample from. Moreover, they tend to do this in a way similar to HMC, in that if you wish to sample from some target distribution, $\pi(\btheta)$, you work with a Markov chain with state  $(\btheta,\bp)$, where $\bp$ is of the same dimension, $d$, as $\btheta$ and can be interpreted as the momentum or velocity that is describing the direction and speed with which the Markov chain is currently moving through $\btheta$ space. The non-reversibility of the algorithm is based on trying to encourage Markov chain moves that continue in roughly the same direction over successive iterations. 

\subsection{Non-Reversible HMC}
\index{Hamiltonian Monte Carlo|(}
\index{non-reversible MCMC} 
One of the first truly non-reversible algorithms is an extension of HMC due to \cite{horowitz1991generalized}. The idea is to adapt how the momentum is refreshed at each iteration so that the momentum at the current iteration will be similar to that at the previous iteration. This can be achieved by replacing step (1) of the HMC algorithm with the update
\[
\bp'=\gamma \bp + (1-\gamma^2)^{1/2} \bzeta,
\]
where $\bzeta$ is a realisation of a $d$-dimensional standard Gaussian random variable, and $0<\gamma<1$. If $\bp$ has a standard Gaussian distribution, then so does $\bp'$: it is Gaussian as it is a linear combination of two independent Gaussian random variables, the expectation of the right-hand side is 0, and the variance is $\gamma^2\bI_d + (1-\gamma^2)\bI_d=\bI_d$. If $\gamma$ is close to 1 then $\bp'$ will be close to $\bp$. 
The overall algorithm is given in Algorithm \ref{alg:horowitz}.
\begin{algorithm}
    \caption{Non-reversible HMC of \cite{horowitz1991generalized}}
    \KwIn{Density $\pi(\btheta)$, initial value $\btheta_0$ and momentum $\bp_0$ sampled from $\Normal(\bzero,\bI_d)$, and refresh rate $\gamma\in[0,1)$.}
    \For{$k \in 0, \dots, n-1$} {
        Simulate $\bzeta\sim\Normal(\bzero,\bI_d)$ and set $\bp'=\gamma \bp_k + (1-\gamma^2)^{1/2} \bzeta$.\\
        $(\btheta',\bp'')\gets\Leap^L_-(\btheta_k,\bp')$. \\
        Calculate the acceptance probability\index{acceptance probability}:
    \[\alpha(\btheta_k,\bp';\btheta',\bp'')
    :=\min\left(1,\frac{\pitilde(\btheta',\bp'')}{\pitilde(\btheta_k,\bp')}\right).\]\\
    With a probability of $\alpha(\btheta_k,\bp';\btheta',\bp'')$ accept the proposal\index{proposal}, $(\btheta_{k+1},\bp''')\gets(\btheta',\bp'')$; otherwise reject it, $(\btheta_{k+1},\bp''')\gets (\btheta_k,\bp_{k})$. \\
    Flip the velocity: $(\btheta_{k+1},\bp_{k+1})\gets (\btheta_{k+1},-\bp''')$.
        }
    \label{alg:horowitz}
\end{algorithm}

This algorithm has the required stationary distribution\index{stationary distribution}, as each step satisfies detailed balance\index{detailed balance} with respect to the extended posterior\index{extended posterior distribution}, $\pitilde$. However, whilst each step of the algorithm is a reversible Markov chain move, by interleaving the moves we obtain a non-reversible algorithm. 

Whilst this algorithm generalises HMC, in practice it is rarely noticeably more efficient \cite[see for example][]{Neal:2011}. The reason is the momentum flip that occurs if we reject our proposal, as, if $\gamma$ is large this sends the chain back in the direction from whence it came. Thus we need to either choose the leapfrog step size\index{step size}, $\epsilon$, to be small enough that the probability of acceptance after $L$ steps is usually very close to $1$, or we need to choose $\gamma$ to be small so that the new momentum is relatively unrelated to the old momentum. The first comes at a high computational cost while the second leads to an algorithm that is very similar to standard HMC (which corresponds to $\gamma=0$). Later, in Section \ref{sec.DBPS}, we will present some alternative ideas that can alleviate the problems of momentum flips in algorithms such as this.
\index{Hamiltonian Monte Carlo|)}

\subsection{Gustafson's Algorithm and Multidimensional Generalisations}
\index{Gustafson's algorithm|see{guided random walk}}
\index{non-reversible MCMC}

It is possible to use similar ideas to obtain non-reversible versions of a random walk Metropolis--Hastings\index{Metropolis--Hastings}\index{random walk Metropolis} algorithm, known as \emph{guided random walks}\index{guided random walk|(}. This was first proposed by \cite{gustafson1998guided} for component-wise updates\index{component-wise updates}, though it has a natural extension to multivariate updates which we will also describe. First, consider the univariate case and a Gaussian random walk proposal\index{proposal} with a variance of $\delta^2$ and $\zeta\sim \Normal(0,\delta^2)$. Given a current value, $\theta$, we can write this proposal as
\[
\theta'= \theta + p |\zeta|,
\]
where $p$ is uniformly sampled from $\{-1,1\}$. We can think of $p$ as the direction of the move and $|\zeta|$ as the size of the move. The idea of \cite{gustafson1998guided} is to augment the state of the chain with $p$ and to simulate a chain such that the direction of the move is likely to be the same over successive time steps. This can be achieved by iterating the following two steps
\begin{itemize}
    \item[(1)] Simulate $\zeta$, a realisation of a Gaussian random variable, and set $(\theta',p')=(\theta_k+p_k|\zeta|,-p_k)$ with probability
    \[
    \min\left\{1, \frac{\pi(\theta')}{\pi(\theta_k)} \right\}
    \]
    otherwise set $(\theta',p')=(\theta_k,p_k)$.
    \item[(2)] Flip the direction, so $(\theta_{k+1},p_{k+1})=(\theta',-p')$.
\end{itemize}
The stationary distribution\index{stationary distribution} of this algorithm has independent distributions for $\theta$ and $p$, with $\theta$ from the distribution whose density is proportional to $\pi(\theta)$, and $p$ having a uniform distribution on $\{-1,1\}$. This follows as both steps (1) and (2) are reversible moves that keep such a distribution invariant.

\begin{figure}
    \centering
    \includegraphics[width=\textwidth]{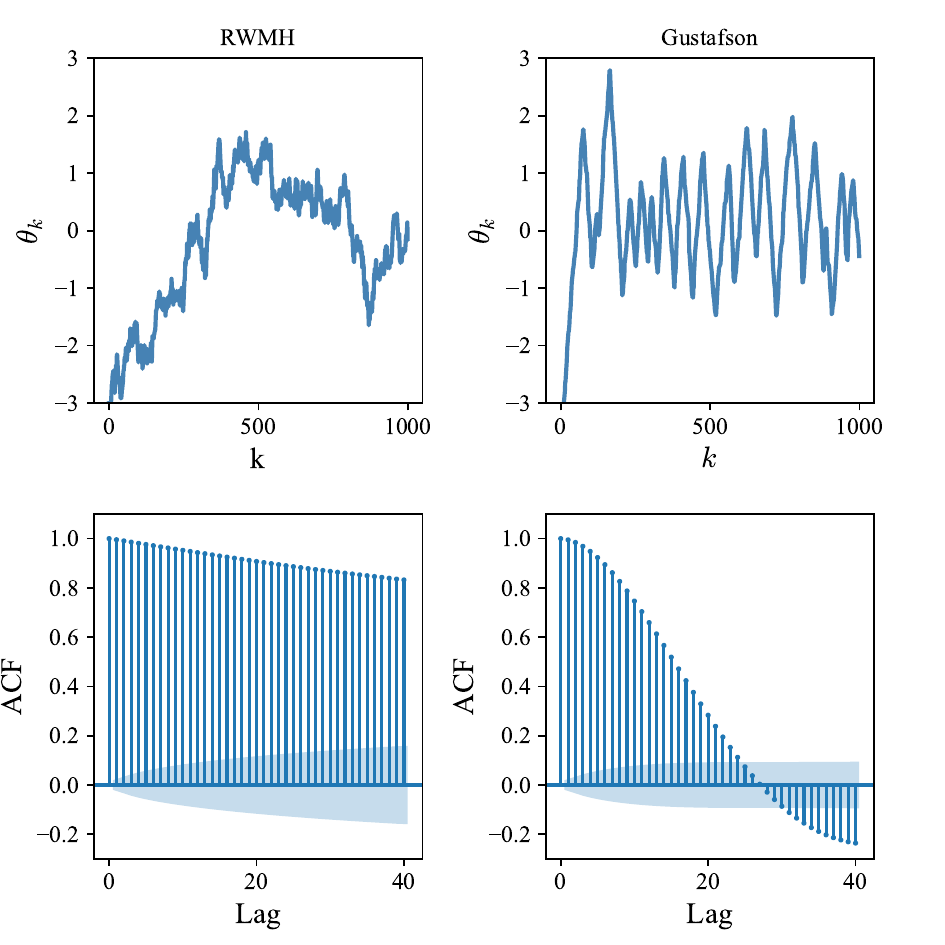}
    \caption{Comparison of random walk Metropolis--Hastings (left column) and the guided random walk (right column) when sampling from a 1-d Gaussian. The proposal is Gaussian with a standard deviation of 0.1 in both cases. Top row shows trace plots, and bottom row shows the estimated auto-correlation of the chains.
    }
    \label{fig:ch5_1dcomparison_small}
\end{figure}
\begin{figure}
    \centering
    \includegraphics[width=\textwidth]{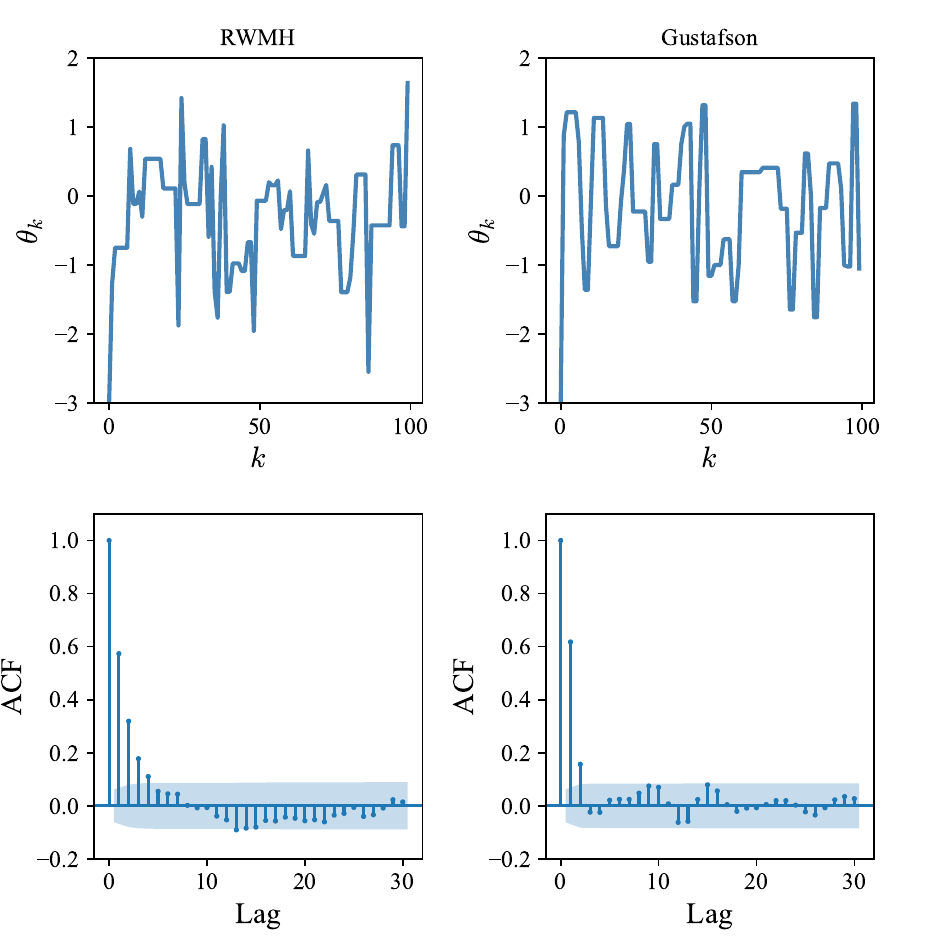}
    \caption{Comparison of random walk Metropolis--Hastings (left column) and the guided random walk algorithm (right column) when sampling from a 1-d Gaussian. The proposal is Gaussian with a standard deviation of 2.38 in both cases. Top row shows trace plots, and bottom row shows the estimated auto-correlation of the chains.}
    \label{fig:ch5_1dcomparison_large}
\end{figure}

To see the behaviour of this algorithm, and compare it to a standard random walk Metropolis--Hastings \index{random walk Metropolis}\index{Metropolis--Hastings} algorithm, we show results of both algorithms when sampling from a Gaussian target distribution in Figures \ref{fig:ch5_1dcomparison_small} and \ref{fig:ch5_1dcomparison_large}. For Figure \ref{fig:ch5_1dcomparison_small}, we implement both algorithms using a small proposal variance, $\delta^2$. Here we see the poor mixing\index{mixing} of the random walk Metropolis \index{random walk Metropolis} due to its reversibility and the random walk behaviour of its output. This means that it takes nearly 1000 steps to reach the mode of the target distribution. By comparison, Gustafson's algorithm suppresses this random walk behaviour, with large periods of time spent moving in the same direction. As we are sampling from a uni-modal target, the qualitative dynamics of the algorithm are as follows: when it is moving towards the mode the acceptance probability\index{acceptance probability} is 1 and the algorithm keeps moving in that direction. It is only when it is moving away from the mode that it has any chance of rejecting a proposal and switching the direction of its move. 

The behaviour of the two algorithms when we use a larger step size, as shown in Figure \ref{fig:ch5_1dcomparison_large}, is very different. In this case, both trace plots look qualitatively similar, and both samplers have similarly good performance as shown by the auto-correlation \index{autocorrelation} plots. The reason is that for a large step size, the chances of rejecting the proposal and switching the direction are now much higher, closer to 0.5 on average. This also ties in with the observation from Section \ref{sec:ch5:benefits} that reversibility is only an issue when the step sizes are small.

If we wish to sample from a multivariate target $\pi(\btheta)$, we can do so by applying this update component-wise\index{component-wise updates}. That is, we augment the state to $(\btheta,\bp)$ where $\bp$ is $d$-dimensional and each entry of $\bp$ is either $1$ or $-1$ and specifies the direction of the next update of the corresponding component of $\btheta$. Then we have $d$ parts to each iteration of the algorithm where each part uses the above algorithm to update a different component of $\btheta$.

One can see the similarity with the algorithm of \cite{horowitz1991generalized}. We have augmented the state to include a component, of the same dimension as $\btheta$, that governs the direction of the update of the Markov chain. Our Markov chain update is based on interleaving two reversible moves, but with the resulting Markov chain being non-reversible. Finally, each reversible move involves a flip of direction; providing the acceptance probability\index{acceptance rate|see{acceptance probability}}\index{acceptance probability} in step (1) is high, then these cancel out and the chain is likely to move in the same direction over multiple time-steps.

Finally, one can implement the idea of \cite{gustafson1998guided} in a way that jointly updates the $d$-dimensional state. This can be achieved by letting $\bp$ be a $d$-dimensional unit vector. Define a target distribution on $(\btheta,\bp)$ that is the product of $\pi(\btheta)$ and the uniform density for $\bp$ on the $d$-dimensional sphere; we denote the density by $\mathsf{U}_d(\bp)$ and the surface of the sphere as $\cS_d$. This target will be kept invariant by the following algorithm, where to aid the presentation of the algorithm in Section \ref{sec.DBPS} we replace the random $|\zeta|$ with a fixed, user-prescribed $\delta>0$. Here and for the remainder of this chapter the proposal is a deterministic, 1-1 map, rather than a density, and we use the symbol $\qmap$ rather than $q$.

\index{non-reversible MCMC}\index{Markov chain Monte Carlo}

\begin{algorithm}
    \caption{Multi-dimensional guided random walk,  with fixed direction. 
    }
    \KwIn{Density $\pi(\btheta)$, initial value $\btheta_0$ and speed $\delta>0$, unit vector $\bp_0$ sampled from $\mathsf{U}_d$.}
    \For{$k \in 0, \dots, n-1$} {
        Propose $(\btheta',\bp')=\qmap_1(\btheta_k,\bp_k)=(\btheta_k+\delta \bp_k,-\bp_k)$.\\
        With probability
        \[\alpha_1(\btheta_k,\bp_k;\btheta',\bp'):=\min\left\{1,\frac{\pi(\btheta')}{\pi(\btheta_k)}\right\}\]\\
         accept the proposal\index{proposal}, $(\btheta_{k+1},\bp'')\gets(\btheta',\bp')$; otherwise reject it, $(\btheta_{k+1},\bp'')\gets (\btheta_k,\bp_k)$.\\
        Flip the velocity: $(\btheta_{k+1},\bp_{k+1})=(\btheta_{k+1},-\bp'')$.}
    \label{alg:MDGustafsenFD}
\end{algorithm}

The velocity flip does not update $\btheta$, and $\bp\in \cS_d \Leftrightarrow -\bp \in \cS_d$, so the flip is reversible  with respect to  $\pi(\btheta)\mathsf{U}_d(\bp)$, which must, therefore, be the stationary density. 
It is helpful to explore exactly why the first step is also reversible with respect to this density. Suppose that $(\btheta,\bp)$ has a density of $\pi(\btheta)\mathsf{U}_d(\bp)$ and let $\cB$ and $\cC$ be disjoint sets in $\mathbb{R}^d\times \cS_d$. Then
$    \Prob{(\btheta,\bp)\in\cB, (\btheta',\bp')\in \cC}$ is
\begin{align*}
    & \int \pi(\btheta)\mathsf{U}_d(\bp)\min\left(1,\frac{\pi(\btheta')}{\pi(\btheta)}\right) 1_{\cB}(\btheta,\bp)1_{\cC}(\btheta',\bp')~
    \md \btheta \md \bp 
    \\
    &=
      \int 
\pi(\btheta')\mathsf{U}_d(\bp')\min\left(1,\frac{\pi(\btheta)}{\pi(\btheta')}\right) 1_{\cB}(\btheta,\bp)1_{\cC}(\btheta',\bp')~
    \md \btheta \md \bp \\
    &=
         \int    \pi(\btheta')\mathsf{U}_d(\bp')\min\left(1,\frac{\pi(\btheta)}{\pi(\btheta')}\right)~ 1_{\cB}(\btheta,\bp)1_{\cC}(\btheta',\bp')~
    \md \btheta' \md \bp', 
\end{align*}
as required.
Here, the second line follows by rearrangement and because the algorithm forces $\bp \in \cS_d \Leftrightarrow \bp' \in \cS_d$. The third line follows from the unit Jacobian\index{Jacobian} of the transformation $\qmap_1$. 

While this algorithm keeps the target on $(\btheta,\bp)$ invariant, the algorithm is reducible\index{reducible}; $\bp$ can only take two values: $\bp_0$ and $-\bp_0$, whilst $\btheta$ is confined to a discrete grid on the vector $\btheta_0+\delta \bp_0$. It is straightforward to make the algorithm irreducible\index{irreducibility} in dimension $2$ and above by adding occasional moves that update $\bp$, for example, that sample a new value of $\bp$ from $\mathsf{U}_d$.

\subsubsection{Comparison on a Ring-shaped Target}

\begin{figure}
\centering
\includegraphics[width=0.7\textwidth]{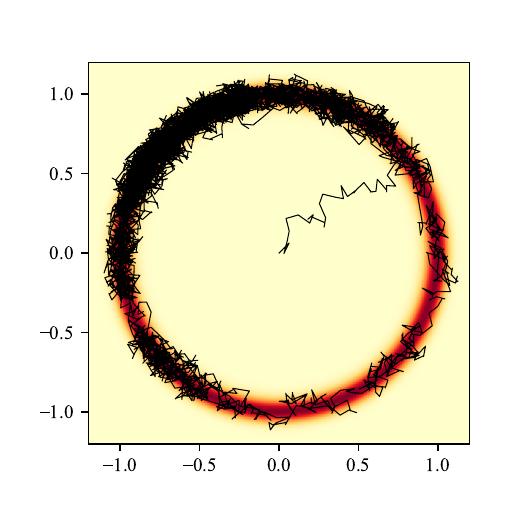}
    \caption{Trace plot for 50000 iterations of the algorithm of \cite{gustafson1998guided} sampling from a 2-$d$ density concentrated on the unit circle. The heat map shows the target density (red is the region of high density), and the black line shows the trace of the algorithm. 
    }
    \label{fig:circle_gustafson}
\end{figure}
\begin{figure}
    \centering
    \includegraphics[width=0.7\textwidth]{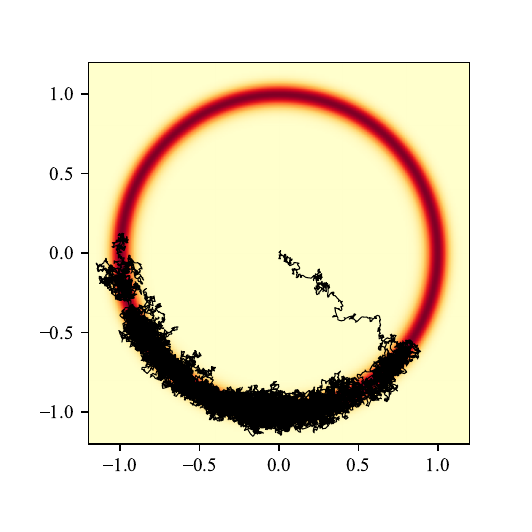}
    \caption{Trace plot for 50000 iterations of the random-walk Metropolis algorithm sampling from a 2-$d$ density concentrated on the unit circle. The heat map shows the target density (red is the region of high density), and the black line is the trace of the algorithm. 
    }
    \label{fig:circle_RWMH}
\end{figure}

To see the benefits of these non-reversible algorithms, we compared the guided random walk algorithm with that of a standard random walk Metropolis algorithm for sampling from a density that concentrates on the perimeter of a circle in 2-$d$. This mimics a common challenge of sampling from a density that concentrates near a lower-dimensional manifold within a higher-dimensional space. We implemented Algorithm \ref{alg:MDGustafsenFD} with a refresh of the direction every 10 iterations. 

Trace plots for the non-reversible and reversible algorithms are shown in Figures \ref{fig:circle_gustafson} and \ref{fig:circle_RWMH}, respectively. Each algorithm uses the same average step size\index{step size}. For good mixing\index{mixing} in the manifold scenario, the step size needs to be of the order of the width of the density orthogonal\index{orthogonal} to the circle -- and thus is small relative to the size of the circle itself. As we have seen previously, using such a small step size leads to random-walk behaviour for the reversible algorithm. By comparison, the non-reversible algorithm is able to suppress this. The overall effect is much better mixing\index{mixing} for the non-reversible algorithm, which takes 50000 iterations to explore the full extent of the ring. By comparison, over the same number of iterations, the reversible algorithm has only explored the bottom half of the ring; in fact, it takes six times this number of iterations to explore the entire ring.\index{guided random walk|)}
\index{lifting schemes|)}
\section{Improving Non-reversibility: Delayed Rejection}
\label{sec.DBPS}

\index{delayed rejection|(}
 The major source of inefficiency in Algorithm \ref{alg:MDGustafsenFD} is the net reversal of momentum whenever the proposed new position and momentum $(\btheta',\bp')$ are rejected. The subsequent momentum flip is designed to keep the process moving in the same direction as it was at the start of the iteration; however, it has the opposite effect when there is a rejection, causing the chain to retrace its steps. The \emph{delayed-rejection} method of \cite{green2001delayed} can be applied to any reversible step and provides a natural mechanism for reducing the probability of rejection as follows: whenever the original step would have rejected the proposal\index{proposal}, a new value is proposed; the acceptance probability\index{acceptance probability} for this new proposal is designed exactly so that the combination of the rejected existing step and the potential new step satisfies detailed balance\index{detailed balance} with respect to the intended posterior\index{posterior distribution}.

The validity of the propose-accept/reject step of Algorithm \ref{alg:MDGustafsenFD} relies on the Jacobian of $\qmap_1$ being $1$ and the fact that  $\qmap_1(\btheta',\bp')\equiv \qmap_1(\btheta_k+\delta \bp_k, -\bp_k)=(\btheta_k,\bp_k)$. Following analogous constraints, we incorporate a delayed-rejection move as follows:
If $(\btheta',\bp')$ is rejected, we can make a \emph{new} proposal\index{proposal} based not only on the current state, $(\btheta_k,\bp_k)$ but also on the initial, rejected proposal, $(\btheta',\bp')$. In this case, we now consider the current state to be the original current state and the initial, rejected proposal: $(\btheta_k,\bp_k,\btheta',\bp')$; we call this the \emph{full state}. We then make a proposal for this full state; \ie, we propose a new original state and a new initial, rejected proposal: $(\btheta'',\bp'', \btheta''',\bp''')=\qmap_2(\btheta_k,\bp_k,\btheta',\bp')$, where $(\btheta''',\bp''')=(\btheta''+\delta \bp'', -\bp'')$. To simplify the notation in the following, we define 
\[\bz_k=(\btheta_k,\bp_k), ~~\bz'=(\btheta',\bp'),~ ~\bz''=(\btheta'',\bp'')~~\mbox{and}~~ \bz'''=(\btheta''',\bp'''). 
\]
We will either accept or reject the proposal $(\bz'',\bz''')=\qmap_2(\bz_k,\bz')$ for the full state and we choose the probability of acceptance exactly so that the move satisfies detailed balance with respect to the density of the current state (which implies the initial proposal), including the fact that it was rejected: 
\[
\pitiltil(\bz_k,\bz'):=
\pi(\btheta_k)\mathsf{U}_d(\bp_k)\mathbb{I}\left[\bz'=\qmap_1(\bz_k)\right]\left\{1-\alpha_1(\bz_k;\bz')\right\}.
\]
By analogy with the Metropolis--Hastings algorithm we might expect this acceptance probability\index{acceptance probability} to be 
\[
\alpha_2(\bz_k,\bz';\bz'', \bz''')
:=
\min\left[1,\frac{\{1-\alpha_1(\bz''; \bz''')\}\pi(\btheta'')}{\{1-\alpha_1(\bz_k;\bz')\}\pi(\btheta_k)}\right],
\]
where for simplicity of presentation we have not included the indicator functions $\mathbb{I}\left[\bz'''=\qmap_1(\bz'')\right]$ and $\mathbb{I}\left[\bz'=\qmap_1(\bz_k)\right]$ in the numerator and denominator of the fraction, respectively, since by design these are both $1$.
Indeed, subject to conditions on $\qmap_2$, $\alpha_2$ is correct. For the proposal and accept/reject step to be valid, $\qmap_2$ must have a Jacobian of $1$, and must satisfy $\qmap_2(\bz'', \bz''')=(\bz_k,\bz')$. 
The probability of being at $\bz_k$ with a deterministic proposal of $\bz'$ that has been rejected, and then moving to this new full state is, therefore,
\[
\pitiltil(\bz_k,\bz') \alpha_2(\bz_k\bz';\bz'', \bz'''),
\]
which, by design, is equal to 
\[
\pitiltil(\bz'',\bz''') \alpha_2(\bz'',\bz''';\bz_k,\bz').
\]
Using the two constraints on $\qmap_2$ and an analogous argument to that used for Algorithm \ref{alg:MDGustafsenFD}, detailed balance\index{detailed balance} is, therefore, satisfied.
\index{delayed rejection|)}
\subsection{The Discrete Bouncy Particle Sampler}
\index{non-reversible MCMC}
\index{discrete BPS|see{discrete bouncy particle sampler}}
\index{discrete bouncy particle sampler|(}

The current state is $\bz_k$, given this, $\bz'$ is fixed via $\qmap_1$. The 1-1 map $\qmap_1$ similarly fixes the relationship between $\bz''$ and $\bz'''$ so the additional flexibility this delayed-rejection formulation allows is the freedom to choose $\bz''$ or, equivalently, $\bz'''$.

To make $\alpha_2$ close to $1$, a sensible aim is to choose $(\bz'', \bz''')$ such that $\pi(\btheta'')\approx \pi(\btheta_k)$ and $\alpha_1(\bz'';\bz''')\approx \alpha_1(\bz_k;\bz')$. If we set $\btheta'''=\btheta'$, both of these conditions will be satisfied if $\pi(\btheta')/\pi(\btheta)\approx \pi(\btheta')/\pi(\btheta'')$; \emph{i.e.} 
\[
\log \pi(\btheta')-\log \pi(\btheta'')
\approx 
\log \pi(\btheta')-\log \pi(\btheta_k).
\]
Taylor expanding about $\btheta'$, we require $\delta \bg\cdot \bp
\approx
\delta \bg \cdot \bp''$, 
where $\bg=\nabla \log \pi(\btheta')$; \emph{i.e.} both $\bp''$ and $\bp$ should have approximately the same component in the gradient\index{gradient} direction.
Perhaps the most natural way to achieve this is by setting
\[
\bp'' = \Psi_{\bg}(\bp):= -\bp_k + 2(\bp_k \cdot \bghat)~\bghat,
\]
where $\bghat=\bg/\|\bg\|$, is the direction of the gradient of $\log\pi$ at $\btheta'$. In this case,
\[
\Psi_{\bg}(\bp'')=
\bp_k-2(\bp_k\cdot \bghat)~\bghat
+2[\left\{-\bp_k+2(\bp_k\cdot \bghat)~\bghat\right\}\cdot \bghat]~\bghat
=
\bp_k.
\]
Further, since $\Psi_{\bg}(\cdot)$ is self-inverse, the absolute value of its Jacobian must be $1$. 
The full proposal\index{proposal} is, therefore,
\[
(\bz'',\bz''')
=
\qmap_2(\bz_k,\bz')
=
\left(\btheta'-\Psi_{\bg}(\bp_k),\Psi_{\bg}(\bp_k),\btheta',-\Psi_{\bg}(\bp_k)\right).
\]
However, since $\bz'''$ plays no part in any subsequent movement and since momenta do not figure in the acceptance probabilities, it can simplify notation to think of the proposal as 
\[(\btheta'',\bp'')=\qmap_2^*(\btheta_k,\bp_k,\btheta')
:=
\left(\btheta'-\Psi_{\bg}(\bp_k),\Psi_{\bg}(\bp_k)\right),
\]
the initial acceptance probability\index{acceptance probability} as
\[
\alpha_1^*(\btheta_k,\btheta')
:=
\min\left\{1,\frac{\pi(\btheta')}{\pi(\btheta_k)}\right\},
\]
and the second acceptance probability\index{acceptance probability} as
\[
\alpha_2^*(\btheta,\btheta',\btheta'')
=
\min\left[1,\frac{\{1-\alpha_1^*(\btheta'', \btheta')\}\pi(\btheta'')}{\{1-\alpha_1^*(\btheta_k,\btheta')\}\pi(\btheta_k)}\right].
\]
The full algorithm is given in Algorithm \ref{alg:DBPS} and illustrated in Figure \ref{fig:DBPSbounce}.
\index{non-reversible MCMC}
\begin{algorithm}
    \caption{Discrete Bouncy Particle Sampler}
    \KwIn{Density $\pi(\btheta)$, initial value $\btheta_0$ and speed $\delta$, unit vector $\bp_0$ sampled from $\mathsf{U}_d$.}
    \For{$t \in 0, \dots, T-1$} {
        Propose $(\btheta',\bp')=\qmap_1(\btheta_k,\bp_k)$.\\
        With a probability of $\alpha_1^*(\btheta_k,\btheta')$
                accept the proposal\index{proposal}: $(\btheta_{k+1},\bp_{k+1})\gets(\btheta',\bp')$.\\
         If the proposal is not accepted then propose $(\btheta'',\bp'')=\qmap_2^*(\btheta_k,\bp_k,\btheta')$.\\
         With a probability of $\alpha_2^*(\btheta_k,\btheta',\btheta'')$ accept this proposal\index{proposal}: $(\btheta_{k+1},\bp_{k+1})\gets (\btheta'',\bp'')$, otherwise $(\btheta_{k+1},\bp_{k+1})=(\btheta_k,\bp_k)$.\\
        Flip the velocity: $(\btheta_{k+1},\bp_{k+1})=(\btheta_{k+1},-\bp_{k+1})$.}
    \label{alg:DBPS}
\end{algorithm}

\begin{figure}
\centering
    \includegraphics[width=0.7\textwidth]{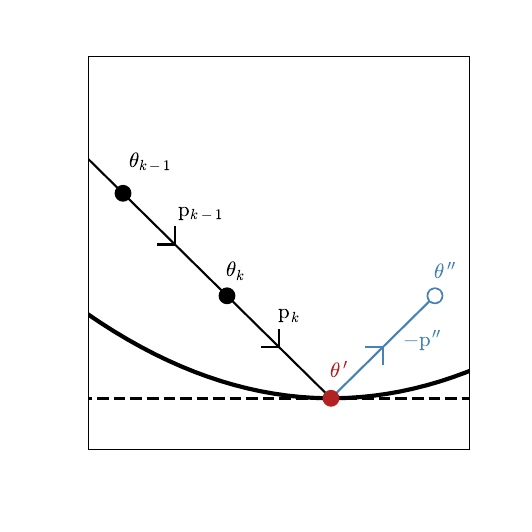}
    \vspace{-1cm}
    \caption{Heuristic of consecutive steps from the discrete bouncy particle sampler. From $(\btheta_{k-1},\bp_{k-1})$ the initial proposal is accepted and the momentum is flipped, leading to $(\btheta_k,\bp_k)$. The initial proposal from this point ($\btheta'$ is shown but $\bp'=-\bp_k$ is not) is rejected. The thick solid line shows a contour of $\pi$ at $\btheta'$ with the tangent line at $\btheta'$ shown dashed. 
 The full proposal\index{proposal} includes $(\btheta'',\bp'')$; the figure shows $-\bp''$ to emphasise how, if the proposal is accepted, with the subsequent momentum flipped the movement is analogous to a bounce off the tangent hyperplane.     \label{fig:DBPSbounce}}
\end{figure}

As with Algorithm \ref{alg:MDGustafsenFD}, we can ensure that Algorithm \ref{alg:DBPS} is irreducible\index{irreducibility} by refreshing the unit vector $\bp$. This could involve sampling $\bp_k\sim \mathsf{U}_d$ every $m$ iterations for some integer $m$, or with probability $p$ on any iteration; however, this still allows for rejections causing the sampler to exactly retrace the recent past. More usually, therefore, after every velocity flip step, the following update is made:
\[
\bp_{k+1}=\frac{\gamma \bp_{k+1}+\sqrt{1-\gamma^2}\bxi}{\|\gamma \bp_{k+1}+\sqrt{1-\gamma^2}\bxi\|},
\]
where $\bxi\sim \Normal(\bzero,\frac{1}{d}\bI_d)$ and $0\le \gamma < 1$. 

Considering the combined effect of the initial proposal, the delayed-rejection step and the momentum flip on a hypothetical particle at $\btheta_k$ with a velocity of $\bp_k$ moving along a level, frictionless surface provides a useful insight into the behaviour of Algorithm \ref{alg:DBPS}. In the following we define $\cR_{\bg}(\bp):=-\Psi_{\bg}(\bp)=\bp-2(\bp\cdot \bghat)~\bghat$, which is a reflection\index{reflection} of $\bp$ in the hyperplane perpendicular to $\bg$. 
\begin{itemize}
\item If the initial proposal\index{proposal} is accepted then the net effect is $(\btheta_{k+1},\bp_{k+1})=(\btheta_k+\delta \bp_k,\bp_k)$; the particle moves exactly as it should over a time $\delta$.
\item If the initial proposal is rejected, but the delayed-rejection proposal is accepted, then the net effect is
$(\btheta_{k+1},\bp_{k+1})=\left(\btheta_k+\delta \bp_k+\delta \cR_{\bg}(\bp_k),\cR_{\bg}(\bp_k)\right)$; the particle moves forward for a time $\delta$ to $\btheta'$ then reflects off the hyperplane perpendicular to $\bg$ and moves forward for another time $\delta$.
\item If both the initial step and the delayed-rejection step are rejected then $(\btheta_{k+1},\bp_{k+1})=(\btheta_k,-\bp_k)$; the particle reverses direction.
\end{itemize}

If it were not for the occasional full rejection, the path of the points outputted from the algorithm would resemble a time discretisation of the smooth path of a particle moving along a frictionless surface and occasionally bouncing off a hard barrier in the hyperplane perpendicular to the local gradient\index{gradient}. For this reason, the algorithm is known as the \emph{Discrete Bouncy Particle Sampler}. In the limit, as $\delta\downarrow 0$ and the number of steps is increased in proportion to $1/\delta$, this becomes a continuous-time algorithm known as the \emph{Bouncy Particle Sampler}, which we shall meet in Section \ref{sec:ch6-BPS}.

\subsubsection{Performance on the Ring-shaped Target}

We exemplify the improved trajectories of the discrete bouncy particle sampler by implementing it on the same two-dimensional ring-shaped target for which the guided random walk \index{guided random walk}with a directional update every 10 iterations took 50000 iterations to explore (Figure \ref{fig:circle_gustafson}) and the random walk Metropolis\index{random walk Metropolis} with the same step size\index{step size} took 300000 iterations to explore (Figure \ref{fig:circle_RWMH} shows the first 50000 iterations). 

\begin{figure}
\centering
 \includegraphics[width=\textwidth]{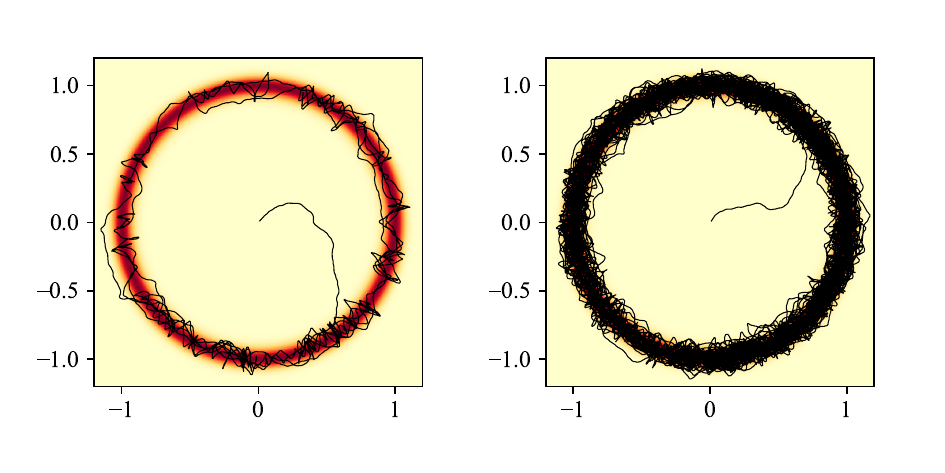}
    \caption{Trajectories after 5000 iterations (left) and 50000 iterations (right) of the discrete bouncy particle sampler on the two-dimensional ring target of Figures \ref{fig:circle_gustafson} and \ref{fig:circle_RWMH}, and using the same step size.
     \label{fig:DBPScirc}}
\end{figure}

Our discrete bouncy particle sampler uses the same step size as the earlier algorithms and sets $\gamma$ so that the direction of travel is only partially forgotten from just after one bounce to just before the next \cite[see][]{sherlock2022discrete}.
Figure \ref{fig:DBPScirc} shows the path of the discrete bouncy particle sampler after 5000 (left) and 50000 (right) iterations, on the two-dimensional ring target. Exploration of the full circle takes only a tenth of the number of iterations that the guided random walk\index{guided random walk} requires and a sixtieth of the number of iterations required by the RWM. The coverage after the full 50000 iterations is also more complete than when using the guided random walk.
\index{discrete bouncy particle sampler|)}

\section{Chapter Notes}

Perhaps the earliest theoretical results showing that non-reversible chains  \index{non-reversible MCMC} have better mixing\index{mixing} properties were given in \cite{diaconis2000analysis}. The results from a pre-print of this work were extended in \cite{chen1999lifting}. See also \cite{neal2004improving} and \cite{sun2010improving} which show that non-reversible samplers reduce asymptotic variance. More recent results are given in \cite{bierkens2016non}.

The discrete bouncy particle sampler, described in Section \ref{sec.DBPS}, is given in \cite{sherlock2022discrete}. Similar algorithms appear in \cite{neal2003slice} and \cite{vanetti2017piecewise}. The key difference is the use of "external bounces" rather than "internal bounces": reflection\index{reflection} happens in the current point rather than the proposed point, so that: $(\btheta'',\bp'',\btheta''',\bp''')=(\btheta_k,\Psi_{\bg}(\bp_k), \btheta_k+\Psi_{\bg}(\bp_k),-\Psi_{\bg}(\bp_k))$, where $\bg=\nabla \log \pi(\btheta_k)$. Use of the bounce move that is key to the success of the discrete bouncy particle sampler will be seen within the continuous-time bouncy particle sampler described in the next chapter. It has also been used in other discrete-time MCMC\index{Markov chain Monte Carlo}  algorithms. For example, the Hug move in \cite{LudShe2022} uses repeated bounces in the same way that HMC\index{HMC|see{Hamiltonian Monte Carlo}}\index{Hamiltonian Monte Carlo} uses repeated leapfrog\index{leapfrog dynamics} steps, with the result that the path to the proposal stays close to the contour of $\pi$ on which it started, and just as with HMC, for a given integration time, the acceptance probability can be made as close to $1$ as desired by reducing the step size. 

An alternative to the lifting schemes\index{lifting schemes} described in this chapter are methods that adapt a reversible Markov chain to a non-reversible one without adding additional states. The general approach is to find a loop of states, and then adapt the probability flow around this loop such that the net flow of probability at each state is preserved. For example, if we have states $i$, $j$ and $k$ then we can reduce each of the three probabilities of moving from $i$ to $i$, $j$ to $j$ and $k$ to $k$ by the minimum of the three and increase each of the three probability transitions from $i$ to $j$, $j$ to $k$ and $k$ to $i$ by the same amount, so that the invariant distribution of the chain is unchanged. Such changes have been described as adding {\em{vortices}}. These ideas are described in \cite{sun2010improving} and \cite{turitsyn2011irreversible}. See \cite{suwa2010markov} for related ideas. These constructions are hard to adapt to general Markov chains, particularly chains without discrete states, though see \cite{bierkens2016non} for an approach to adapt the Metropolis--Hastings\index{Metropolis--Hastings}\index{MH|see{Metropolis--Hastings}} acceptance probability to introduce non-reversibility.


\chapter{Continuous-Time MCMC}
\label{chap:continuous-time}

\index{PDMP|see{piecewise deterministic Markov process}}

The previous chapter introduced the idea of non-reversible MCMC, and demonstrated that non-reversibility may help improve the Markov chain's mixing by suppressing random-walk behaviour. In this chapter, we now present a class of non-reversible MCMC algorithms that can target a general distribution, $\pi(\btheta)$. These algorithms are different from standard MCMC algorithms in that they are based on simulating a continuous-time Markov chain. Furthermore, specific examples of these continuous-time MCMC algorithms can be derived as continuous-time limits of the non-reversible algorithms we introduced in the previous chapter. 

As a way of motivating these algorithms, we will first look at this continuous-time limit. The resulting continuous-time process is from a class of processes called \emph{piecewise deterministic Markov processes}. We will introduce some background on this class of processes, including some details around how we can simulate their continuous-time trajectories, before introducing example MCMC algorithms and various extensions. These algorithms use gradient information, and unless stated otherwise, we will assume that the target distribution $\pi(\btheta)$ is differentiable everywhere (in practice being continuous and differentiable almost everywhere is sufficient).

Within this chapter, we will need to refer to both components of a vector and possibly the state of the vector at different times. To distinguish between these, we will use the convention that when both time and component are needed, we subscript by time and superscript by the component. So, for example, $\theta^{(i)}_t$, will be the $i$th component of the state, $\btheta$, at time $t$.

\section{Continuous-Time MCMC as the Limit of Non-Reversible MCMC} \label{sec:ch6-limitexample}
\index{non-reversible MCMC|(}
To help build intuition for continuous-time MCMC,\index{Markov chain Monte Carlo} and to see how it links to discrete-time MCMC algorithms, we will show we can derive the continuous-time algorithm as the limit of a discrete-time algorithm as we let the step size in the discrete-time algorithm to tend to 0. Here, we will consider this limiting argument at an informal level, before presenting a more formal justification for continuous-time MCMC methods.

We will consider sampling from a 1-dimensional target distribution using a simplified version of the non-reversible guided random walk\index{guided random walk}\index{lifting schemes} algorithm that was introduced in Section \ref{sec:ch5_lifting}. Our simplification is to assume that the step size at each iteration is fixed. Our state will still be $(\theta,p)$, with $p$ either $1$ or $-1$ and specifying the direction of the next proposed move, and we will let the size of the move be equal to some fixed value $\delta$. Such an MCMC algorithm would not be irreducible\index{irreducibility}, as it could only explore values of the state that are integer multiples of $\delta$ away from its initial value. 
However, the algorithm will still have the correct invariant distribution; i.e. if we simulate the initial state from the target distribution for $\theta$ and a uniform distribution for $p$, then this will also be the distribution of the state at any future iteration. The issue of lack of irreducibility vanishes in the limit as $\delta\rightarrow 0$.

As a reminder, if the target distribution of interest is $\pi(\theta)$ then the MCMC algorithm will iterate the following moves
\begin{itemize}
\item[(1a)] Propose a move from $(\theta,p)$ to $(\theta+\delta p,-p)$. Accept this with the standard Metropolis--Hastings acceptance probability, which simplifies to
\[
\min\left\{ 1, \frac{\pi(\theta+\delta p)}{\pi(\theta)}
\right\}
\]
\item[(1b)] Move from $(\theta',p)$ to $(\theta',-p)$.
\end{itemize}
In step (1a), we propose a move of size $\delta$ in direction $p$. If we accept this move, then we will flip the direction $p$ in both steps (1a) and (1b) -- so the direction will be the same at the next iteration. If we reject the move, then we will only flip $p$ in step (1b) and thus the direction of the move will be flipped for the next iteration.

Under this framework, we can then consider letting $\delta\rightarrow 0$ while increasing the number of iterations $n$. That is we fix a value $s$ and for any number of iterations, $n$ will set $\delta=s/n$. We will scale time so that the $i$th MCMC transition will occur at time $i\delta$, and define $(\theta_t,p_t)$ to be the value of the state after the $i$th MCMC transition for $i\delta\leq t < (i+1)\delta$. 

Now, for each move in step (1a) the rejection probability for small $\delta$ is
\begin{eqnarray*}
\lefteqn{\max\left\{ 
0, 1-\exp[\log\pi(\theta+\delta p)-\log\pi(\theta)]
\right\} } \\ 
&=& \max\left\{ 
0, 1-\exp\left[\frac{\mbox{d}\log \pi(\theta)}{\mbox{d}\theta} \delta+o(\delta)\right]
\right\} \\ &=&
\max\left\{ 
0, -p\frac{\mbox{d}\log \pi(\theta)}{\mbox{d}\theta}
\right\}\delta+o(\delta),
\end{eqnarray*}
assuming that, for example, the derivative of $\pi(\theta)$ is continuous. 

Thus in our limit as $\delta\rightarrow0$, rejections in step (1a) will occur as events in a Poisson process of rate 
\[
\lambda(\theta_t,p_t)=\max\left\{0,-p_t\frac{\mbox{d}\log \pi(\theta_t)}{\mbox{d}\theta}\right\}.
\]
The dynamics between these events will be deterministic, with $p_t$ being constant and $\theta_t$ changing as per a constant velocity model with velocity $p_t$. At each event, the velocity will just flip. While
the process is moving to areas of higher probability density, as defined by $\pi(\theta)$, the rate of the Poisson process will be 0. Thus events will only occur if the process is moving to areas of lower probability mass.

How does this limiting, continuous-time algorithm compare to the algorithm of \cite{gustafson1998guided}\index{guided random walk}? We will compare with the standard, irreducible version of \cite{gustafson1998guided} where the step size\index{step size} at each iteration is random. We show trace plots for both algorithms for sampling from a mixture of two Gaussians in Figure \ref{fig:ch6_1dcomparison}. The target distribution is an equal mix of a Gaussian with mean 2 and variance 1 and with mean 0 and variance $0.1^2$. This was chosen so that we have two modes where different step sizes would be optimal, whilst still allowing for sufficient overlap of the modes that the chains would mix between them.

We can see that qualitatively the two trace-plots are similar. Both methods produce zig-zag-like traces, as they will continue to move in the same direction when moving to areas of higher probability density. However, the continuous-time process has a number of potential advantages:
\begin{itemize}
    \item  If we could simulate the continuous-time trajectory directly, then it has the benefit of having fewer events where the velocity changes (and where the state needs to be updated) than iterations of Gustafson's algorithm -- in our example, there are around 200 events in the continuous-time algorithm as compared to 1000 iterations of Gustafson's algorithm. 
    \item It also has the benefit of less tuning, as no step size needs to be specified. 
    \item Finally it has the advantage that the full continuous-time sample path\index{sample path} can be used to calculate Monte Carlo averages. 
\end{itemize}
However, directly simulating the continuous-time trajectory is not normally possible -- and it is both the difficulty with simulating the continuous-time process and the additional computational overhead of doing so that are the main disadvantages. Furthermore, accounting for the very different computational costs, per iteration versus per unit of continuous time, makes the algorithms hard to compare theoretically. We will return to issues around simulating continuous-time Markov processes\index{Markov chain!continuous-time} like this below.
\begin{figure}
    \centering
    \includegraphics[width=\textwidth]{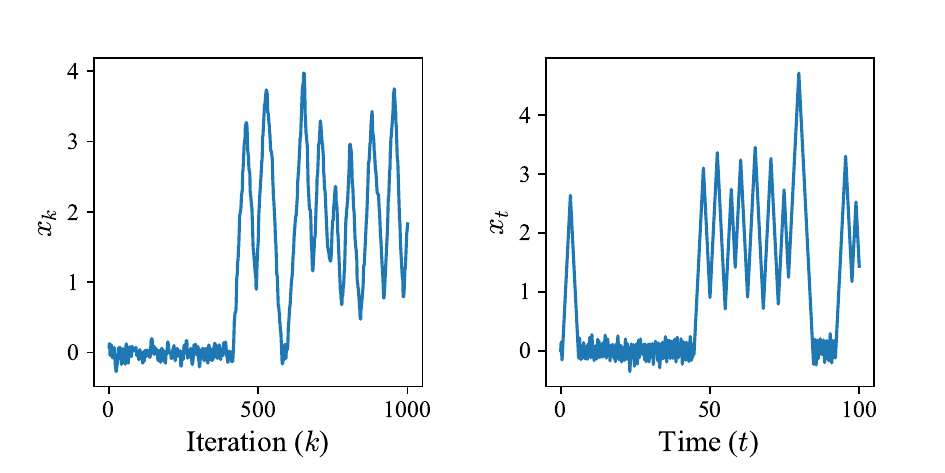}
    \caption{Trace plots of Gustafson's algorithm (left) and the continuous-time limit (right) for sampling from a mixture of Gaussians. }
    \label{fig:ch6_1dcomparison}
\end{figure}
\index{non-reversible MCMC|)}

\section{Piecewise Deterministic Markov Processes} \index{Markov chain!continuous-time}\index{piecewise deterministic Markov processes|(}

The continuous-time limiting process we derived in the previous section is an example of a \emph{piecewise deterministic Markov process}, or PDMP. These are Markov processes that evolve deterministically between a set of random events. We will now introduce these processes before considering how they can be designed and used to sample from a target distribution of interest.

\subsection{What is a PDMP?}

We will denote the state of a PDMP at time $t$ by $\bz_t$. A PDMP is defined by the following properties:
\begin{itemize}
\item[(i)] {\em The deterministic dynamics.} The PDMP evolves deterministically between a set of event times. We consider PDMPs where the deterministic dynamics are specified through an ordinary differential equation
\begin{equation} \label{eq:PDMPODE}
\frac{\mbox{d}\bz_t}{\mbox{d}t}=\bphi(\bz_t),
\end{equation}
for some gradient field $\bphi(\cdot)$. We will also define $\bPhi$ to be the transition function, or flow map, for these dynamics, so for $s>0$ the solution to the differential equation satisfies $\bz_{s+t}=\bPhi(\bz_t,s)$. 
\item[(ii)] {\em The event rate.} Random events occur at a rate $\lambda(\bz_t)$ that depends on the current state.
\item[(iii)] {\em The transition kernel at events.} At each event, at time $\tau$, the state changes according to some Markov transition kernel\index{kernel!Markov transition} $\Ker{\bZ_\tau \in \cdot |\bz_{\tau-}}$, where 
\[
\bz_{\tau-} = \lim_{\epsilon \downarrow 0} \bz_{\tau-\epsilon}
\] 
is the value of the state immediately before the event. That is $\Ker{\bZ_\tau \in \cdot |\bz_{\tau-}}$ defines the conditional probability of moving to set $\cdot$, given the state immediately before the event. 

\end{itemize}
To simplify exposition, and because this is consistent with the PDMPs we will use for MCMC,\index{Markov chain Monte Carlo} we will primarily focus on discrete transition kernels at events, and let $q(\cdot|\bz)$ denote the probability mass function associated with the transition kernel. Though the ideas below extend naturally to continuous transition kernels.

As the deterministic dynamics are Markov, and the event rate and transition density depend just on the current state, then the resulting process will be Markov.
\index{piecewise deterministic Markov processes|)}
\subsection{Simulating PDMPs} \label{sec:ch6-simulatingPDMPs}
\index{piecewise deterministic Markov processes!simulation|(}
To be able to use a PDMP as the basis of a sampling algorithm, we will need to be able to simulate and store realisations of the PDMP. First, we can store a realisation of the continuous-time path of a PDMP by storing just the initial state of the PDMP, and the time and event immediately after each event. We will call these points the {\em skeleton} of the realisation. If we simulate a PDMP up to time $T$ then its skeleton will be of the form $\{t_k,\bz_{t_k}\}_{k=0}^n$, where $t_0=0$, and $t_1,\ldots,t_n$ are the event times of the PDMP prior to $T$. Given this skeleton, we can fill in the continuous-time path using the transition function for the deterministic dynamics:
\[
\bz_t = \bPhi(\bz_{t_k},t-t_k), \mbox{where $t_k$ is the largest skeleton time less than $t$}.
\]
Thus simulating a PDMP will just require the ability to simulate the skeleton points. If we assume that simulating from the transition density at events is straightforward, then the only challenge will be simulating the event times themselves. We can do this by using the following argument that shows that the time until the next event can be recast as the time until the first event in a time inhomogeneous Poisson process. 

To simplify notation we will consider simulating the time of the first event, and denote the initial state as $\bZ_0=\bz$. By the Markov property, the same approach can then be applied to simulating subsequent events: as if the current state is $\bZ_t=\bz$ then the subsequent time {\em until} the next event is the same as the time until the first event if we start the process at state $\bZ_0=\bz$. 

If there has not been an event by time $t$, then due to the deterministic dynamics between events we know that the state at time $t$ will be $\bz_t=\bPhi(\bz,t)$. The instantaneous rate of an event at time $t$, if there has been no event before $t$, is thus $\lambda(\bz_t)=\lambda(\bPhi(\bz,t))$. Thus, the rate of the first event of the PDMP is equal to the rate of the first event in an inhomogeneous Poisson process with rate
\[
\tilde{\lambda}_{\bz}(t)=\lambda(\bPhi(\bz,t)).
\]
Here, we use the tilde symbol to distinguish this rate from the rate function, $\lambda$, that depends on the state. We also subscript the rate by $\bz$, the initial state. As described above, by the Markov property, if we have simulated the PDMP until time $s$ and the current state is $\bz_s$, then the rate of the next event as a function of the further time, $t$, until the next event will be $\tilde{\lambda}_{\bz_s}(t)$.

As a result, we have transformed the problem of simulating a PDMP to that of simulating the first event of a time inhomogeneous Poisson process. There are several methods for simulating such an event time \cite[see e.g.][]{lewis1979simulation,bouchard2018bouncy} and we will outline three general strategies for doing so. Whether these can be implemented in practice depends on the form of $\tilde{\lambda}_{\bz}$, and we will return to this with some examples later.

\subsubsection{Direct Simulation by Inversion}

In theory, one can simulate directly the time to the next event. Let $\tau$ be the random variable that is the time until the first (or next) event. Standard properties of a Poisson process give that the probability of no event by time $t$ is 
\[
\Prob{\tau>t}=\exp\left\{ - \int_0^t \tilde{\lambda}_\bz(s)\mbox{d}s \right\}.
\]
For a continuous random variable, $X$, with distribution function $F_X(\cdot)$, we have that the transformed random variable $F_X(X)$ has a standard uniform distribution. From this, we can simulate $X$ by simulating $u$, a realisation of a standard uniform distribution and setting $x=F_X^{-1}(u)$. The distribution function for $\tau$ is $1-\Prob{\tau>t}$, thus we can simulate $\tau$ from $u$ by solving
\[
u=1-\exp\left\{ - \int_0^\tau \tilde{\lambda}_\bz(s)\mbox{d}s \right\}.
\]
This can be rearranged to 
\[
- \int_0^\tau \tilde{\lambda}_\bz(s)\mbox{d}s = \log(1-u).
\]
In practice, it is common to further simplify this expression using that if $U$ is a standard uniform random variable then $-\log(1-U)$ has a standard exponential distribution. Thus if $w$ is a realisation of a standard exponential random variable then $\tau$ can be simulated as the solution of
\begin{equation} \label{eq:inversion}
    \int_0^\tau \tilde{\lambda}_\bz(s)\mbox{d}s = w.
\end{equation}
A summary of this approach is given in Algorithm \ref{alg:direct}.
   
\begin{algorithm}[h]
    \caption{Direct simulation of event time}
    \KwIn{Rate function $\tilde{\lambda}_{\bz}.$}
    Simulate $w$, from a standard exponential distribution.\\
    Find $\tau\geq0$ the (smallest) solution to
    \[
    \int_0^\tau \tilde{\lambda}_\bz(s)\mbox{d}s = w.
    \] \\
    \KwOut{$\tau$.}
    \label{alg:direct}
\end{algorithm}

Whether we can implement direct simulation depends on whether we can solve (\ref{eq:inversion}). This is possible if $\tilde{\lambda}_{\bz}(s)$ is constant, linear, or piecewise linear in $s$. It is also possible if it is some other simple function, such as proportional to the exponential of a linear function of $s$.  

\subsubsection{Poisson Thinning}
\index{Poisson thinning|(}

What if we cannot simulate the event directly by inversion? In this case, the most common approach to simulation is based on {\em Poisson thinning}. This approach is based on the following property of Poisson processes: If we have a Poisson process with rate $\lambda^+(s)$, and we simulate points from this Poisson process and then accept each point with probability $\alpha(s)$, then the accepted points have the same distribution as points from a Poisson process with rate $\lambda^+(s)\alpha(s)$. As we are only keeping, i.e. accepting, some of the simulated points, this is called a \emph{thinned} point process.

Poisson thinning inverts this property. For any rate function $\lambda^+(s)$ that upper bounds $\tilde{\lambda}_{\bz}(s)$, i.e. where $\lambda^+(s)\geq \tilde{\lambda}_{\bz}(s)$ for all $s\geq0$, we can simulate the time until the next event as the first event of a thinned Poisson process. This leads to Algorithm \ref{alg:thinning}

\begin{algorithm}
    \caption{Simulation of event time via Poisson thinning}
    \KwIn{Rate functions $\lambda^+$ and $\tilde{\lambda}_{\bz}$, with $\lambda^+(s)\geq \tilde{\lambda}_{\bz}(s)$ for all $s\geq0$.}
    Set $s=0$ and $\tau=0$ \\
    \While{$\tau=0$}{
    Simulate $t$, the time of the first event after $s$ in a Poisson process with rate $\lambda^+$.\\
    Set $s=t$.\\
    With probability $\tilde{\lambda}_{\bz}(t)/\lambda^+(t)$, set $\tau=t$.\\
    }
    \KwOut{$\tau$, the first event in a Poisson process with rate $\tilde{\lambda}_{\bz}$.}
    \label{alg:thinning}
\end{algorithm}

For this to work we need to be able to upper bound $\tilde{\lambda}_{\bz}$ by a simple rate function, for which we can directly simulate events. In practice, this would often be a linear or piecewise linear rate function. The efficiency of Poisson thinning depends on how close the upper bound rate is to $\tilde{\lambda}_{\bz}$. In practice, we can improve on the simple Poisson thinning algorithm with adaptive thinning. That is, if we simulate an event and then reject it, we can use the information from evaluating $\tilde{\lambda}_{\bz}$ to improve our upper bound. We will see some examples of such adaptive thinning below.
\index{Poisson thinning|)}

\subsubsection{Superposition}
\index{superpositon|(}
The final property of Poisson processes that can help with simulating their events is that of \emph{superposition}. This says that if we have two Poisson processes, one with rate $\lambda^{(1)}(s)$ and one with rate $\lambda^{(2)}(s)$, and we independently simulate events from each process and take the union of events, then these have the same distribution as events in a Poisson process with rate $\lambda^{(1)}(s)+\lambda^{(2)}(s)$. 

In terms of simulating the first event in a Poisson process, this can be re-expressed as if $\tau^{(1)}$ is the first event time for a Poisson process with rate $\lambda^{(1)}(s)$, and $\tau^{(2)}$ is the first event time for a Poisson process with rate $\lambda^{(2)}(s)$, then $\min\{\tau^{(1)},\tau^{(2)}\}$ is distributed as the first event time in a Poisson process with rate $\lambda^{(1)}(s)+\lambda^{(2)}(s)$. 

By induction, superposition trivially applies if we consider more than two independent Poisson processes. That is, the time of the first event in any of the processes is distributed as the time of the first event of a Poisson process whose rate is the sum of the rates. Superposition can be useful as it allows us to split a complex rate function into a sum of simpler rate functions. If we can simulate events from processes with each of the simpler rates, then it allows us to simulate events from the process with the complex rate function.
\index{piecewise deterministic Markov processes|)}\index{superpositon|)}

\subsection{The Generator and Invariant Distribution of a PDMP}
\label{sec:ch6-generator}
\index{generator!of PDMP|(}
\index{piecewise deterministic Markov processes|(}
In order to use a PDMP to sample from a target distribution, we first need to be able to determine what the stationary distribution\index{stationary distribution} of a given PDMP is. Here, we give an informal derivation of how to calculate the invariant distribution of a PDMP. (Assuming the PDMP satisfies some regularity conditions, and in particular is irreducible, then this invariant distribution will be its stationary distribution.) In the next section, we will invert this characterisation of the invariant distribution to construct simple recipes for the dynamics of a PDMP to have a given target distribution as its stationary distribution\index{stationary distribution}. In terms of understanding the development of PDMP samplers in the rest of this chapter, the key result is (\ref{eq:PDMPinvariant}) below -- and those not interested in the intuition behind this result could skip the intervening material in this subsection.

First, we need to consider the generator of our PDMP. This is rigorously derived in \cite{davis1984piecewise}. Remember from Section \ref{sec:generator} that the generator of a continuous-time Markov process\index{Markov chain!continuous-time} is an operator that gives the time derivative of the expected value of a function of the state, as a function of its current value. If $\mathcal{L}$ is the generator, and $h$ a suitable test function from the domain of the generator, then
\[
(\mathcal{L}h)(\bz)=\lim_{t \downarrow 0} \frac{\Expect{h(\bZ_{ t})|\bZ_0=\bz}-h(\bz)}{t}.
\]

Informally we can calculate the generator by considering the two types of dynamics of the PDMP. First, if we consider solely the deterministic dynamics (\ref{eq:PDMPODE}), then the change in $h(\bz_t)$ is deterministic and the contribution to the generator is just the time-derivative of $h(\bz_t)$ at $t=0$, which by the product rule is
\[
\left.
\frac{\mbox{d} h(\bz_t)}{\mbox{d} t}\right|_{t=0} = \bphi(\bz) \cdot \nabla h(\bz).
\]
Second, we have the contribution from the random events. The probability of an event in $[0, t]$ is $\lambda(\bz) t + o( t)$. If an event occurs, the change in $h(\bz)$ is $\Expects{\Ker{\cdot|\bz}}{h(\bZ')}-h(\bz)+o(t)$, where the expectation is with respect to $\bZ' \sim \mathbb{Q}(\cdot | \bz)$, the transition kernel\index{kernel!Markov transition} at an event. 
This gives a contribution to the generator that is 
\[
\lambda(\bz) \left[ \Expects{\Ker{\cdot|\bz}}{h(\bZ')}-h(\bz)\right].
\]
Thus the generator is
\[
(\mathcal{L}h)(\bz) = \bphi(\bz) \cdot \nabla h(\bz) + \lambda(\bz) \left[ \Expects{\Ker{\cdot|\bz}}{h(\bZ')}-h(\bz)\right].
\]
The domain of the generator is given in \cite{davis1984piecewise}. One extension of PDMPs that will be relevant later is that we can introduce boundaries, with additional specified, possibly random, behaviour if the PDMP hits a boundary. In such a case, the behaviour at the boundaries affects the generator solely through its domain. Essentially, the domain is reduced to include only those functions whose expectation is unaffected by the dynamics on the boundary.

If we start the PDMP with an initial distribution $\pi(\bz)$ for $\bZ_0$, then the derivative of $\Expect{h(\bZ_t)}$ at $t=0$ is equal to the average of the generator applied to $h(\bz)$ with respect to $\pi(\bz)$. This is equal to
\[
\int (\mathcal{L}h)(\bz)\pi(\bz)\mbox{d}\bz.
\]
For any $h$ in the domain of the generator, if $\pi(\bz)$ is sufficiently well-behaved, then this integral will be equal to
\begin{equation} \label{eq:changeinmean}
\int h(\bz)(\mathcal{L}^*\pi)(\bz)\mbox{d}\bz,
\end{equation}
where $\mathcal{L}^*$ is a different operator, called the adjoint of the generator. 

We can attempt to define the invariant distribution\index{invariant distribution} of the PDMP by the property that, if we draw $\bZ_0$ from this invariant distribution, then the expectation of any function of the state will be constant over time -- as if started from the invariant distribution, the marginal distribution of the PDMP will not change. Thus  (\ref{eq:changeinmean}) must be equal to 0 if $\pi$ is the invariant distribution. As this must happen for any function $h$ of the state, for which the expectation exists, this suggests that $\pi$ must satisfy $(\mathcal{L}^*\pi)(\bz)=0$. 

It is possible to derive the adjoint $\mathcal{L}^*$ using integration by parts. From this, we have that $(\mathcal{L}^*\pi)(\bz)=0$ implies that the invariant distribution must satisfy
\begin{equation} \label{eq:PDMPinvariant}
-\sum_{i=1}^d \frac{\partial \phi_{i}(\bz)\pi(\bz)}{\partial z_{i}}
+ \sum_{\bz'} \pi(\bz')\lambda(\bz')q(\bz|\bz')
-\pi(\bz)\lambda(\bz)=0,
\end{equation}
where $q(\bz|\bz')$ is the probability mass function associated with the transition kernel $\Ker{\cdot|\bz'}$. 
This equation has a natural interpretation. The first term on the left-hand side quantifies the change in probability mass due to the deterministic dynamics, the second term is the change due to events moving into state $\bz$ and the last is the change due to events that move the state out of $\bz$. If $\pi$ is an invariant distribution, then the net change in probability mass is zero.

In the following, we will use (\ref{eq:PDMPinvariant}) to define the invariant distribution of our PDMP. Though this requires inverting the informal argument we have made -- see \cite{chevallier2021pdmp} for results that give relatively weak conditions under which you can invert this argument and where (\ref{eq:PDMPinvariant}) implies that $\pi(\bz)$ is the invariant distribution\index{invariant distribution} of our PDMP.
\index{generator!of PDMP|)}
\index{piecewise deterministic Markov processes|)}
\subsection{The Limiting Process of Section \ref{sec:ch6-limitexample} as a PDMP} \label{sec:ch6-limitasPDMP}

We can now recognise the limiting process derived in Section \ref{sec:ch6-limitexample} as a PDMP. Remember that we want to sample from a distribution $\pi(\theta)$ for some scalar $\theta$. To do this we introduced a velocity or momentum, $p$, and defined a state $\bz=(\theta,p)$ -- strictly this is  $\bz=(\theta,p)^{\top}$, but we will use the shorthand $(\theta,p)$ in the following. Henceforth, we will use the notation $\bz$, or $\bz_t$, and $(\theta,p)$, or $(\theta_t,p_t)$ interchangeably -- as viewed most helpful for the given context. For reasons that will become apparent, we will call $\theta$ the position component of the state, and $p$ the velocity component. 

The limiting process of Section \ref{sec:ch6-limitexample} was a PDMP\index{piecewise deterministic Markov processes} with state $\bz=(\theta,p)$, with $\theta\in \mathbb{R}$ and $p\in\{-1,1\}$, defined by the following properties:
\begin{itemize}
\item[(U1)] {\em Deterministic dynamics.} The deterministic dyanamics are a constant velocity model:
\[
\frac{\mbox{d}\theta_t}{\mbox{d}t} = p_t, ~~ \frac{\mbox{d}p_t}{\mbox{d}t} = 0.
\]
\item[(U2)] {\em Event rate.} The rate of events is 
\[
\lambda(\theta,p)= \max\left\{0, -p \frac{\mbox{d}\log \pi(\theta)}{\mbox{d}\theta}  \right\}.
\]
\item[(U3)] {\em Transition kernel at events.}\index{kernel!Markov transition} At an event the velocity component of the state flips, i.e. $p_\tau=-p_{\tau-}$.
\end{itemize}
We will call this PDMP the \emph{univariate} PDMP.

It is possible to show, using (\ref{eq:PDMPinvariant}), that the invariant distribution of this PDMP is $\tilde{\pi}(\bz)=\pi(\theta)\pi_p(p)$, where $\pi_p$ is the probability mass function for a uniform distribution on $\{-1,1\}$. 
That is the $\theta$-marginal is $\pi(\theta)$, the distribution we wish to sample from. Furthermore, under the invariant distribution, $p$ is independent of $\theta$ and has a uniform distribution. For this simple example, the invariant distribution will also be the stationary distribution\index{stationary distribution}, unless we have reducibility caused by a region where $\pi(\theta)=0$ that separates two regions with positive probability under $\pi$.

As we will be considering more general PDMP samplers, it is helpful to consider a slightly different question. For PDMPs with the given deterministic dynamics and transitions at events, how would we calculate event rates that would give an invariant distribution whose $\theta$ marginal is $\pi(\theta)$? In answering this question, we will cover the steps for showing that the event rate given above leads to the stated invariant distribution.

To do this we will use (\ref{eq:PDMPinvariant}). If we substitute in a distribution $\tilde{\pi}(\theta,p)$, and the deterministic dynamics and transition kernel at events, this illustrates that for $\tilde{\pi}(\theta,p)$ to be an invariant distribution, it must satisfy
\begin{equation} \label{eq:ch6-simple_ex_invariant}
-p\frac{\mbox{d}\tilde{\pi}(\theta,p)}{\mbox{d}\theta}
+\lambda(\theta,-p)\tilde{\pi}(\theta,-p) 
-\lambda(\theta,p)\tilde{\pi}(\theta,p)=0.
\end{equation}
Here, the first term comes from the constant velocity deterministic dynamics, and the second comes from there only being one possible transition to state $(\theta,p)$ at an event, and this is from a state $(\theta,-p)$.

So what event rate would lead to an invariant distribution with the correction $\theta$-marginal? 
In answering this question, we first see that there may be a range of different event rates that would lead to a valid invariant distribution. Not least because many possible invariant distributions would have a $\theta$-marginal as $\pi(\theta)$. So, our first step is to attempt to find a rate such that the invariant distribution has $\theta$ independent of $p$. Denote such a distribution by $\tilde{\pi}(\bz)=\pi(\theta)\pi_p(p)$, where $\pi_p$ can be any distribution on $\{-1,1\}$. Then substituting this into (\ref{eq:ch6-simple_ex_invariant}), and using 
\[
\frac{\mbox{d}  \pi(\theta)}{\mbox{d}\theta} =
\pi(\theta)\frac{\mbox{d} \log \pi(\theta)}{\mbox{d}\theta}
\]
gives
\[
-p\frac{\mbox{d}\log{\pi}(\theta)}{\mbox{d}\theta} \pi(\theta)\pi_p(p) 
+ \lambda(\theta,-p) \pi(\theta)\pi_p(-p)
-\lambda(\theta,p)\pi(\theta)\pi_p(p)=0.
\]
If we consider $\theta$ for which $\pi(\theta)>0$ then this states
\begin{equation} \label{eq:ch6-1b}
\lambda(\theta,-p) \pi_p(-p) - \lambda(\theta,p)\pi_p(p)=p\frac{\mbox{d}\log{\pi}(\theta)}{\mbox{d}\theta}\pi_p(p) .
\end{equation}
If we consider the same argument, but for the state $(\theta,-p)$, we get
\begin{equation} \label{eq:ch6-1}
\lambda(\theta,p) \pi_p(p) - \lambda(\theta,-p)\pi_p(-p)=-p\frac{\mbox{d}\log{\pi}(\theta)}{\mbox{d}\theta}\pi_p(-p).
\end{equation}
Adding (\ref{eq:ch6-1b}) to (\ref{eq:ch6-1}) gives
\[
0=\frac{\mbox{d}\log{\pi}(\theta)}{\mbox{d}\theta} \left(p\pi_p(p)-p\pi_p(-p) \right).
\]
From this, we can conclude that the distribution $\pi_p$ must satisfy $\pi_p(p)=\pi_p(-p)$, i.e. be the uniform distribution on $\{-1,1\}$. This makes sense intuitively. The transition at the events only changes the velocity. Thus invariance for the $\theta$-component comes from averaging out the dynamics for different $p$, and this requires that the invariant distribution for $p$ has a mean of zero. As $p$ can only take two values, this means it must be the uniform distribution. For the PDMPs that we consider later, that only change the velocity at events and have constant velocity dynamics\index{constant velocity dynamics}, a similar argument holds that the invariant distribution for the velocity component must have zero as the mean.

If we now return to our question of what rates will lead to an invariant\index{invariant distribution} distribution with $\theta$-marginal equal to $\pi(\theta)$, and substitute in (\ref{eq:ch6-1}) that $\pi_p(p)=\pi_p(-p)=1/2$ for $p\in\{-1,1\}$ then, by removing this common factor, we get
\[
\lambda(\theta,p)  - \lambda(\theta,-p)=-p\frac{\mbox{d}\log{\pi}(\theta)}{\mbox{d}\theta}.
\]
A solution to this equation is the rate we specified above, 
\begin{equation} \label{eq:ch6-canonical_rate}
\lambda(\theta,p)= \max\left\{0, -p \frac{\mbox{d}\log \pi(\theta)}{\mbox{d}\theta}  \right\}.
\end{equation}
However, this is not the only solution. In fact, for any positive function $\gamma(\theta)\geq0$, the rate
\[
\max\left\{0, -p \frac{\mbox{d}\log \pi(\theta)}{\mbox{d}\theta}  \right\} +\gamma(\theta),
\]
will also lead to the same invariant distribution. \index{invariant distribution}

The rate in (\ref{eq:ch6-canonical_rate}) is the smallest rate that will lead to the required invariant distribution. This is often called the {\em canonical rate}. Intuitively, there are two advantages of using the canonical rate, as opposed to a larger rate. The first is that a larger rate will lead to more events, and thus is likely to have a larger computational cost for simulating the resulting PDMP. The second is that a larger rate will lead to more changes in velocity and will re-introduce the random walk behaviour that we were trying to avoid with non-reversible MCMC\index{Markov chain Monte Carlo}\index{non-reversible MCMC}. Thus, we would expect that using the canonical rate will lead to better mixing\index{mixing}.

A final comment on the rates we are required to use, such as the canonical rate, is that they depend on the target distribution $\pi(\theta)$ only through the derivative of $\nabla \log \pi(\theta)$. 
This is important as it means that $\pi(\theta)$ only needs to be known up to a constant of proportionality, as is commonly the case for sampling from the posterior distribution\index{posterior distribution} in Bayesian statistics, see Section \ref{sec.MCBayes}.

\section{Continuous-time MCMC via PDMPs} \index{Markov chain Monte Carlo|(}
\index{piecewise deterministic Markov processes|(}
In practice, we will want to use MCMC to sample a target density in $\mathbb{R}^d$. Various PDMPs generalise the process introduced in the previous section to $d>1$. We will describe three such families of PDMPs, all of which reduce to the univariate PDMP of Section \ref{sec:ch6-limitasPDMP}  if $d=1$ but differ for $d>1$. They each share some common features, which we will describe first.

Assume we wish to sample from $\pi(\btheta)$ where $\btheta$ is $d$-dimensional. The state of our PDMP will be $\bz_t=(\btheta_t^{\top},\bp_t^{\top})^{\top}$, where $\bp_t$ is also $d$-dimensional. As we use the convention that vectors are column vectors, when defining $\bz_t$ we have had to transpose these vectors to concatenate $\btheta_t$ and $\bp_t$. In the following, to simplify notation, we will abuse this and just write  $\bz_t=(\btheta_t,\bp_t)$. As before, we can think of $\btheta$ as the position component of the state, and $\bp$ as the velocity. 

The deterministic dynamics of the three families of PDMPs will be the same:
\begin{itemize}
\item[(CV)] {\em Deterministic dynamics.} 
The process evolves according to a {\emph{Constant Velocity (CV)}} model.
\begin{equation} \label{eq:ch6-constant velocity}
\frac{\mbox{d}\btheta_t}{\mbox{d}t} = \bp_t,~~~
\frac{\mbox{d}\bp_t}{\mbox{d}t} = \bf{0},
\end{equation}
or in the notation we used to define PDMPs, $\bphi=(\bp,\bf{0})$. The solution of the deterministic dynamics are
\begin{equation} \label{eq:ch6-Phi constant velocity}
\bPhi(\bz,t)=\bPhi( (\btheta,\bp),t ) = (\btheta+t \bp, \bp).
\end{equation}
\end{itemize}
Furthermore, they also only allow the velocity component to change at events.

The three families of PDMPs will differ in terms of the possible values for the velocity component, the event rate, and the transition kernel for the velocity at events. In each case, these are chosen so that the invariant distribution of the PDMP will have a $\btheta$-marginal that is $\pi(\btheta)$, and for which the velocity component, $\bp$, is independent of the position $\btheta$. Throughout, we will denote the invariant distribution\index{invariant distribution} as $\tilde{\pi}(\btheta,\bp)=\pi(\btheta)\pi_{\bp}(\bp)$, though as mentioned, the form of $\pi_{\bp}$ will differ between different families of PDMPs.

Before we describe the three families in detail, it is helpful to introduce some notation. We will use $\nabla_{\btheta}$ to denote the gradient\index{gradient} vector with respect to $\btheta$ only. This is the $d$-dimensional column vector whose entries are the partial derivatives with respect to the components of $\btheta$. The first term in the equation for the invariant distribution of the PDMP (\ref{eq:PDMPinvariant}) will be the same for all three families as they share the same deterministic dynamics and form of the invariant distribution\index{invariant distribution}. Ignoring the minus sign, this term is
\begin{eqnarray}
\sum_{i=1}^{2d} \frac{\partial \phi_{i}(\bz)\tilde{\pi}(\bz)}{\partial z_{i}}
&=& \sum_{i=1}^d \frac{\partial p_{i}\tilde{\pi}(\btheta,\bp)}{\partial \theta_{i}}
= \pi_{\bp}(\bp) \sum_{i=1}^d p_{i} \frac{\partial \pi(\btheta)} {\partial \theta_{i}}  \nonumber \\
&=& \pi_{\bp}(\bp) \sum_{i=1}^d \pi(\btheta) p_{i}  \frac{\partial \log \pi(\btheta)} {\partial \theta_{i}} \nonumber \\
&=&  \tilde{\pi}(\btheta,\bp) ~\left( \bp \cdot \nabla_{\btheta}\log \pi(\btheta)\right).
\end{eqnarray}
Here, we have first used that only the $\theta_{i}$ components are changing, and then used $\tilde{\pi}(\btheta,\bp)=\pi(\btheta)\pi_{\bp}(\bp)$. The third step comes from the definition of the derivative of $\log \pi(\btheta)$ in terms of the derivative of $\pi(\btheta)$, and the final step from using $\pi(\btheta)\pi_{\bp}(\bp)=\tilde{\pi}(\btheta,\bp)$.
\index{Markov chain Monte Carlo|)}
\subsection{Different Samplers}

\subsubsection{The Coordinate Sampler}
\index{coordinate sampler|(}
Possibly the simplest extension of our univariate PDMP to one that samples from a multi-dimensional distribution is the \emph{Coordinate Sampler} of \cite{wu2020coordinate}. For this sampler, the set of possible velocities is $\mathcal{V}_{\text{cs}}=\{\pm \be_i\}_{i=1}^d$ where $\be_i$ is the $i$th unit vector.
That is, $\be_i$ is the unit vector whose $i$th component is 1, and all other components are 0. Thus, the possible velocities correspond to moving in either a positive or negative direction along one of the coordinate axes in $\mathbb{R}^d$. It can be viewed as a sampler which applies the univariate PDMP dynamics along each coordinate in turn -- though the order in which different coordinate directions are chosen is random. Introducing a constant \emph{refresh rate} $\refr\geq0$\index{refresh event/rate}, the dynamics of the coordinate sampler involve the \emph{constant velocity} (CV) deterministic dynamics together with
\begin{itemize}
\item[(CS1)] {\em Event rate.} Events occur with the rate 
\[
\lambda_{\text{cs}}(\btheta,\bp) = \max\left\{0, - \bp\cdot  \nabla_{\btheta} \log \pi(\btheta) \right\} +\refr.
\]
\item[(CS2)] {\em Transition kernel at events.}\index{kernel!Markov transition} At an event, the probability of switching to a new velocity $\bp'\in\mathcal{V}_{\text{cs}}$ is
\[
q_{\text{cs}}( (\btheta,\bp') | (\btheta,\bp) ) = \frac{1}{C(\btheta)} \left( \max\left\{0,  \bp' \cdot \nabla_{\btheta} \log \pi(\btheta) \right\} +\refr\right), 
\]
where the normalising constant is
\[
C(\btheta) =\sum_{\bp'\in \mathcal{V}_{\text{cs}}} \left( \max\left\{0,  \bp' \cdot \nabla_{\btheta} \log \pi(\btheta) \right\} +\refr\right) =
2d\refr + \sum_{i=1}^d \left| \frac{\partial \log \pi(\btheta)}{\partial \theta_{i}}\right|. 
\]
\end{itemize}
The refresh rate\index{refresh event/rate} $\refr$ introduces additional random velocity switches. As discussed above, intuitively a larger refresh rate will lead to more random walk behaviour and thus worse mixing\index{mixing}. However, choosing $\refr>0$ allows for stronger theoretical results about the sampler, including that the sampler will be irreducible\index{irreducibility} unless e.g. $\pi(\btheta)$ has disconnected regions where there is positive probability.

The invariant distribution\index{invariant distribution} of the Coordinate Sampler is given by the following result.
\begin{theorem} For any $\refr\geq0$, the Coordinate Sampler, whose dynamics are defined by (CV), (CS1) and (CS2), has an invariant distribution\index{invariant distribution} $\tilde{\pi}(\btheta,\bp)=\pi(\btheta)\pi_{\bp}(\bp)$   where $\pi_{\bp}$ is the uniform distribution over $\mathcal{V}_{\text{cs}}$.  \label{thm:CS}
\end{theorem}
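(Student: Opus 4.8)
The plan is to verify that $\tilde{\pi}(\btheta,\bp)=\pi(\btheta)\pi_{\bp}(\bp)$ satisfies the stationarity equation \eqref{eq:PDMPinvariant}, specialised to the Coordinate Sampler's dynamics. As noted in the text immediately preceding the statement, the deterministic-dynamics term (the first term of \eqref{eq:PDMPinvariant}) has already been computed for this form of $\tilde{\pi}$ and the constant-velocity model: it equals $-\tilde{\pi}(\btheta,\bp)\,(\bp\cdot\nabla_{\btheta}\log\pi(\btheta))$. So the work is to show that the two event-driven terms — the inflow $\sum_{\bp'}\pi(\btheta)\pi_{\bp}(\bp')\lambda_{\text{cs}}(\btheta,\bp')q_{\text{cs}}((\btheta,\bp)\mid(\btheta,\bp'))$ (here only velocity changes at events, so $\btheta$ is fixed) and the outflow $-\pi(\btheta)\pi_{\bp}(\bp)\lambda_{\text{cs}}(\btheta,\bp)$ — sum to $\tilde{\pi}(\btheta,\bp)(\bp\cdot\nabla_{\btheta}\log\pi(\btheta))$.

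First I would observe a convenient cancellation: the product $\lambda_{\text{cs}}(\btheta,\bp')q_{\text{cs}}((\btheta,\bp)\mid(\btheta,\bp'))$ telescopes, because $\lambda_{\text{cs}}(\btheta,\bp')=C(\btheta)$ by the very definition of the normalising constant $C(\btheta)=\sum_{\bp''}(\max\{0,\bp''\cdot\nabla_{\btheta}\log\pi\}+\refr)$; note $\max\{0,-\bp'\cdot\nabla_{\btheta}\log\pi\}+\refr$ is one summand of $C(\btheta)$ (replacing $\bp''$ by $-\bp'$ in the sum), hence $\lambda_{\text{cs}}(\btheta,\bp')\le C(\btheta)$, but in fact we need the stronger identity that summing $\lambda_{\text{cs}}(\btheta,\bp')$ over all $\bp'\in\mathcal{V}_{\text{cs}}$ gives $C(\btheta)$ — which holds since as $\bp'$ ranges over $\mathcal{V}_{\text{cs}}=\{\pm\be_i\}$, so does $-\bp'$. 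Consequently $\lambda_{\text{cs}}(\btheta,\bp')q_{\text{cs}}((\btheta,\bp)\mid(\btheta,\bp'))=\max\{0,\bp\cdot\nabla_{\btheta}\log\pi(\btheta)\}+\refr$, which is independent of $\bp'$. Since $\pi_{\bp}(\bp')=1/(2d)$ is uniform and there are $2d$ velocities, the inflow term becomes $\pi(\btheta)\cdot\frac{1}{2d}\cdot 2d\cdot(\max\{0,\bp\cdot\nabla_{\btheta}\log\pi(\btheta)\}+\refr)=\pi(\btheta)\pi_{\bp}(\bp)\cdot 2d\cdot(\max\{0,\bp\cdot\nabla_{\btheta}\log\pi(\btheta)\}+\refr)/(2d\pi_{\bp}(\bp))$; more cleanly, writing $\pi_{\bp}(\bp)=1/(2d)$ throughout, the inflow equals $\pi(\btheta)(\max\{0,\bp\cdot\nabla_{\btheta}\log\pi(\btheta)\}+\refr)$.

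Then I would combine inflow minus outflow. The outflow is $\pi(\btheta)\pi_{\bp}(\bp)\lambda_{\text{cs}}(\btheta,\bp)=\pi(\btheta)\cdot\frac{1}{2d}\cdot C(\btheta)$... here I must be careful: $\lambda_{\text{cs}}(\btheta,\bp)=\max\{0,-\bp\cdot\nabla_{\btheta}\log\pi(\btheta)\}+\refr$ is \emph{not} $C(\btheta)$; only the inflow-producing sum telescoped to $C(\btheta)$. So inflow minus outflow $=\pi(\btheta)\big[(\max\{0,\bp\cdot\nabla_{\btheta}\log\pi\}+\refr)-\tfrac{1}{2d}(\max\{0,-\bp\cdot\nabla_{\btheta}\log\pi\}+\refr)\big]$ — this does not obviously simplify, which signals that I have mishandled the uniform weights. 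The correct bookkeeping: inflow $=\sum_{\bp'}\pi(\btheta)\pi_{\bp}(\bp')\,[\max\{0,\bp\cdot\nabla_{\btheta}\log\pi\}+\refr]=\pi(\btheta)[\max\{0,\bp\cdot\nabla_{\btheta}\log\pi\}+\refr]\sum_{\bp'}\pi_{\bp}(\bp')=\pi(\btheta)[\max\{0,\bp\cdot\nabla_{\btheta}\log\pi\}+\refr]$, and outflow $=\pi(\btheta)\pi_{\bp}(\bp)[\max\{0,-\bp\cdot\nabla_{\btheta}\log\pi\}+\refr]$. These are on different footings because $\pi_{\bp}(\bp')$ sums to $1$ over $\bp'$ but $\pi_{\bp}(\bp)=1/(2d)$ is a single value. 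The resolution is that \eqref{eq:PDMPinvariant} as I've transcribed it already carries $\tilde{\pi}(\bz')=\pi(\btheta)\pi_{\bp}(\bp')$ inside the inflow sum and $\tilde{\pi}(\bz)=\pi(\btheta)\pi_{\bp}(\bp)$ in the outflow, so dividing the whole identity through by $\pi_{\bp}(\bp)\pi(\btheta)$ is illegitimate — instead I divide by $\pi(\btheta)$ only and keep the $1/(2d)$ factors explicit. Carrying this through, the event terms contribute $\tfrac{1}{2d}\big[(\max\{0,\bp\cdot\nabla_{\btheta}\log\pi\}+\refr)\cdot 2d/(2d)\cdot\ldots\big]$ — at this point the cleanest route, and the one I will actually execute, is: after the telescoping, inflow $=\pi(\btheta)\sum_{\bp'}\pi_{\bp}(\bp')(\max\{0,\bp\cdot\nabla_{\btheta}\log\pi\}+\refr)$, and since $\sum_{\bp'}\pi_{\bp}(\bp')=1$, this is $\pi(\btheta)(\max\{0,\bp\cdot\nabla_{\btheta}\log\pi\}+\refr)$. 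But the outflow carries only $\pi_{\bp}(\bp)=1/(2d)$, so there is a dimensional mismatch — meaning the telescoping identity $\lambda_{\text{cs}}(\btheta,\bp')q_{\text{cs}}=\max\{0,\bp\cdot\nabla\log\pi\}+\refr$ must itself already include the factor $1/C(\btheta)$ from $q_{\text{cs}}$, so that $\pi_{\bp}(\bp')\lambda_{\text{cs}}(\btheta,\bp')q_{\text{cs}}((\btheta,\bp)\mid(\btheta,\bp'))=\tfrac{1}{2d}(\max\{0,\bp\cdot\nabla_{\btheta}\log\pi\}+\refr)$ and the inflow sum over the $2d$ velocities $\bp'$ gives $2d\cdot\tfrac{1}{2d}(\max\{0,\bp\cdot\nabla_{\btheta}\log\pi\}+\refr)=\max\{0,\bp\cdot\nabla_{\btheta}\log\pi\}+\refr$, times $\pi(\btheta)$. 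The outflow, with its single $\pi_{\bp}(\bp)=1/(2d)$, does \emph{not} match — so the resolution must be that $\lambda_{\text{cs}}(\btheta,\bp)$ is designed precisely so that when multiplied by $\pi_{\bp}(\bp)$ and the constant-velocity term, everything balances; I will write out the three terms of \eqref{eq:PDMPinvariant} with all $1/(2d)$ factors intact and check that inflow $-$ outflow $-$ (deterministic term) $=0$ identifies $\max\{0,\bp\cdot\nabla\log\pi\}-\max\{0,-\bp\cdot\nabla\log\pi\}=\bp\cdot\nabla\log\pi$, the elementary identity that closes the argument.

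\textbf{The main obstacle} is therefore purely bookkeeping: getting the factors of $1/C(\btheta)$, $\pi_{\bp}(\bp)=1/(2d)$, and the number of velocities $2d$ to cancel correctly in \eqref{eq:PDMPinvariant}. The conceptual content is the single scalar identity $\max\{0,x\}-\max\{0,-x\}=x$ applied to $x=\bp\cdot\nabla_{\btheta}\log\pi(\btheta)$, together with the symmetry of $\mathcal{V}_{\text{cs}}$ under $\bp'\mapsto-\bp'$ which makes the inflow sum telescope. Once the dynamics are substituted into \eqref{eq:PDMPinvariant} and this identity invoked, the equation holds for every $(\btheta,\bp)$ with $\pi(\btheta)>0$, and by the discussion in Section~\ref{sec:ch6-generator} (citing \cite{chevallier2021pdmp}) this certifies $\tilde{\pi}$ as the invariant distribution; the $\refr$ terms cancel identically in inflow minus outflow for each coordinate direction, so the result holds for all $\refr\ge 0$ as claimed.
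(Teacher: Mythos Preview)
Your approach is the same as the paper's: substitute the dynamics into \eqref{eq:PDMPinvariant}, use that $\tilde{\pi}(\btheta,\bp')=\tilde{\pi}(\btheta,\bp)$ to factor out the common density, collapse the inflow sum via the $C(\btheta)$ cancellation, and finish with $\max\{0,x\}-\max\{0,-x\}=x$. All the right ingredients are present.

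However, there is a concrete error that generates the ``dimensional mismatch'' you spend most of the proposal wrestling with. You assert that
\[
\lambda_{\text{cs}}(\btheta,\bp')\,q_{\text{cs}}\big((\btheta,\bp)\mid(\btheta,\bp')\big)=\max\{0,\bp\cdot\nabla_{\btheta}\log\pi(\btheta)\}+\refr
\]
for each individual $\bp'$, and that this product is independent of $\bp'$. That is false: the product equals
\[
\frac{1}{C(\btheta)}\big(\{-\bp'\cdot\nabla_{\btheta}\log\pi\}_+ +\refr\big)\big(\{\bp\cdot\nabla_{\btheta}\log\pi\}_+ +\refr\big),
\]
which manifestly depends on $\bp'$. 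What \emph{is} true is that the \emph{sum} over $\bp'\in\mathcal{V}_{\text{cs}}$ of this product equals $\{\bp\cdot\nabla_{\btheta}\log\pi\}_+ +\refr$, because the $\bp'$-dependent factor sums to $C(\btheta)$ (using the $\bp'\mapsto-\bp'$ symmetry you correctly note). Your error overcounts the inflow by a factor of $2d$, which is exactly the phantom mismatch you then chase. With the correction, the inflow term is
\[
\pi(\btheta)\,\tfrac{1}{2d}\sum_{\bp'}\lambda_{\text{cs}}(\btheta,\bp')\,q_{\text{cs}}\big((\btheta,\bp)\mid(\btheta,\bp')\big)
=\pi(\btheta)\,\tfrac{1}{2d}\big(\{\bp\cdot\nabla_{\btheta}\log\pi\}_+ +\refr\big),
\]
the outflow is $\pi(\btheta)\,\tfrac{1}{2d}\big(\{-\bp\cdot\nabla_{\btheta}\log\pi\}_+ +\refr\big)$, their difference is $\tilde{\pi}(\btheta,\bp)\,\bp\cdot\nabla_{\btheta}\log\pi(\btheta)$, and this cancels the deterministic term. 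The $\refr$ terms cancel termwise, as you say. That is the whole proof, and it is what the paper does.
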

\begin{proof} We show this result by showing that (\ref{eq:PDMPinvariant}) holds. To simplify expressions slightly, we will use the notation that for any scalar $a$ we have $\{a\}_+ = \max\{0,a\}$. 

Substituting in the event rate and transition kernel and distribution $\tilde{\pi}$, for $\bp \in \mathcal{V}_{\text{cs}}$, the left-hand side of (\ref{eq:PDMPinvariant}) is
\begin{eqnarray*}
\lefteqn{\tilde{\pi}(\btheta,\bp) \left( -\bp \cdot \nabla_{\btheta}\log \pi(\btheta)\right) - \tilde{\pi}(\btheta,\bp) (\left\{ - \bp \cdot \nabla_{\btheta} \log \pi(\btheta) \right\}_+ +\refr )} \\ &+& \sum_{\bp'\in\mathcal{V}_{\text{cs}}} (\left\{- \bp' \cdot \nabla_{\btheta} \log \pi(\btheta) \right\}_+ +\refr)\tilde{\pi}(\btheta,\bp')q_{\text{cs}}((\btheta,\bp) | (\btheta,\bp')), 
\end{eqnarray*}
where we have used (\ref{eq:ch6-constant velocity}) to simplify the first term.
By the definition of $\tilde{\pi}$, we have $\tilde{\pi}(\btheta,\bp)=\tilde{\pi}(\btheta,\bp')$ for all $\bp,\bp' \in \mathcal{V}_{\text{cs}}$. Thus this simplifies to
\begin{eqnarray*}
\lefteqn{ - \bp \cdot \nabla_{\btheta}\log \pi(\btheta) -  (\left\{- \bp \cdot \nabla_{\btheta} \log \pi(\btheta) \right\}_+ +\refr )} \\ &+& \sum_{\bp'\in\mathcal{V}_{\text{cs}}} (\left\{ - \bp' \cdot \nabla_{\btheta} \log \pi(\btheta) \right\}_+ +\refr)q_{\text{cs}}((\btheta,\bp) | (\btheta,\bp')). 
\end{eqnarray*}
Now using the definition of $q_{\text{cs}}$, we get that the final term in this expression is
\begin{eqnarray*}
\lefteqn{\sum_{\bp'\in\mathcal{V}_{\text{cs}}} (\left\{- \bp' \cdot \nabla_{\btheta} \log \pi(\btheta) \right\}_+ +\refr)q_{\text{cs}}((\btheta,\bp) | (\btheta,\bp'))}\\
&= & \sum_{\bp'\in\mathcal{V}_{\text{cs}}} (\left\{ - \bp' \cdot \nabla_{\btheta} \log \pi(\btheta) \right\}_+ +\refr) \frac{1}{C(\btheta)} \left(\left\{  \bp \cdot \nabla_{\btheta} \log \pi(\btheta) \right\}_+ +\refr\right) \\
&=& \left( \left\{ \bp \cdot \nabla_{\btheta} \log \pi(\btheta) \right\}_+ +\refr\right) \frac{1}{C(\btheta)}\sum_{\bp'\in\mathcal{V}_{\text{cs}}} (\left\{ - \bp' \cdot \nabla_{\btheta} \log \pi(\btheta) \right\}_+ +\refr) \\
&=& \left( \left\{ \bp \cdot \nabla_{\btheta} \log \pi(\btheta) \right\}_+ +\refr\right) ,
\end{eqnarray*}
where for the last line we use
\[
C(\btheta)= \sum_{\bp'\in\mathcal{V}_{\text{cs}}} (\left\{ - \bp' \cdot \nabla_{\btheta} \log \pi(\btheta) \right\}_+ +\refr).
\]
Substituting into our expression for the left-hand side of (\ref{eq:PDMPinvariant}) gives
\begin{eqnarray*}
\lefteqn{ -\bp \cdot \nabla_{\btheta}\log \pi(\btheta) -  (\left\{ - \bp \cdot \nabla_{\btheta} \log \pi(\btheta) \right\}_+ +\refr ) + 
\left( \left\{  \bp \cdot \nabla_{\btheta} \log \pi(\btheta) \right\}_+ +\refr\right) }\\ &=&  
- \bp \cdot \nabla_{\btheta}\log \pi(\btheta)  -  (\left\{ - \bp \cdot \nabla_{\btheta} \log \pi(\btheta) \right\}_+) + \left( \left\{  \bp \cdot \nabla_{\btheta} \log \pi(\btheta) \right\}_+ \right).
\end{eqnarray*}
By considering separately the cases where $\bp \cdot \nabla_{\btheta} \log \pi(\btheta) \geq0$ and $\bp \cdot \nabla_{\btheta} \log \pi(\btheta) <0$, it is simple to see that this expression for the left-hand side of (\ref{eq:PDMPinvariant})  is 0, as required.
\end{proof}
\index{coordinate sampler|)}

\subsubsection{The Zig--Zag Sampler} \label{sec:ch6-ZZ}
\index{Zig-Zag sampler|(}
We now present the \emph{Zig--Zag} algorithm of \cite{Bierkens:2019}. This algorithm has velocities in $\mathcal{V}_{\text{zz}}=\{\pm 1\}^d$, and the state moves simultaneously along each coordinate axis, and the velocity determines which direction it moves for each axis. There are $2^d$ possible velocities, and if for example $d=2$, these will be $(1,1)$, $(1,-1)$, $(-1,1)$ and $(-1,-1)$. At an event, one component of the velocity will change signs. The sampler gets its name from the resulting dynamics consisting of zig-zagging lines.

To define the dynamics of the Zig--Zag Sampler it is helpful to introduce coordinate-specific rates
\[
\lambda_{i}(\btheta,\bp)= \max\left\{0, -p_{i} \frac{\partial \log \pi(\btheta)}{\partial \theta_{i}} \right\},
\]
which is of the same form as the canonical rate of the univariate PDMP if we just vary that $i$th component of $\btheta$. We also introduce the functions $F_i$, for $i=1,\ldots,d$, which flips the sign of the $i$th component of a vector. So if $\bp'=F_i(\bp)$ then $p'_{i}=-p_{i}$ and, for $j\neq i $, $p'_{j}=p_{j}$.

The PDMP process is defined by CV dynamics together with
\begin{itemize}
\item[(ZZ1)] {\em Event rate.} Events occur with the rate
\[
\lambda_{\text{zz}}(\btheta,\bp) = \sum_{i=1}^{d} \lambda_{i}(\btheta,\bp).
\]
\item[(ZZ2)] {\em Transition kernel at events.}\index{kernel!Markov transition} At an event the probability mass function of the transition is
\[
q_{\text{zz}}(\btheta,\bp'|\btheta,\bp) = \frac{\lambda_i(\btheta,\bp)}{\lambda_{\text{zz}}{(\btheta,\bp)}}, ~~ \mbox{ for } \bp'=F_i(\bp).
\]
Thus the position is unchanged, and we flip component $i$ of the velocity with probability proportional to $\lambda_i(\btheta,\bp)$.
\end{itemize}

Here we have presented the dynamics in terms of the rate of an event and a transition probability for that event. However, by the superposition\index{superpositon}
 property of Poisson processes that was discussed above, one can equivalently represent the dynamics in terms of $d$ possible event types. Event type $i$ corresponds to flipping the $i$th component of the velocity, and this event occurs, independently of other events, with rate $\lambda_i$. This view of the dynamics of the Zig--Zag algorithm is often used in algorithmic implementations to sample realisations of the process.

We can relate the Zig--Zag Sampler to a limiting version of the guided random walk algorithm\index{guided random walk} of \cite{gustafson1998guided} in a similar way to the argument presented in Section \ref{sec:ch6-limitexample}. The Zig--Zag Sampler is the limit of an MCMC algorithm that repeatedly applies one iteration of the guided random walk algorithm to each component of $\btheta$ in turn. 

The following result gives the invariant distribution\index{invariant distribution} of the PDMP process.
\begin{theorem}
The Zig--Zag Sampler, defined by (CV), (ZZ1) and (ZZ2) has invariant distribution $\tilde{\pi}(\btheta,\bp)=\pi(\btheta)\pi_{\bp}(\bp)$   where $\pi_{\bp}$ is the uniform distribution over $\mathcal{V}_{\text{zz}}$.   \label{thm:ZZ}
\end{theorem}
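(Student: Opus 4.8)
The plan is to verify that the candidate invariant density $\tilde\pi(\btheta,\bp)=\pi(\btheta)\pi_{\bp}(\bp)$, with $\pi_{\bp}$ uniform on $\mathcal{V}_{\text{zz}}=\{\pm1\}^d$, satisfies the stationarity equation \eqref{eq:PDMPinvariant} for the Zig--Zag dynamics. This mirrors the proof of Theorem \ref{thm:CS}, but the transition kernel is structurally simpler here because each event only flips a single coordinate. First I would substitute the CV dynamics into the divergence term of \eqref{eq:PDMPinvariant}; using the common computation already carried out in the excerpt, this term equals $-\tilde\pi(\btheta,\bp)\,\bigl(\bp\cdot\nabla_{\btheta}\log\pi(\btheta)\bigr)=-\tilde\pi(\btheta,\bp)\sum_{i=1}^d p_i\,\partial_{\theta_i}\log\pi(\btheta)$.

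Next I would handle the ``incoming'' and ``outgoing'' event terms. The outgoing term is simply $\tilde\pi(\btheta,\bp)\,\lambda_{\text{zz}}(\btheta,\bp)=\tilde\pi(\btheta,\bp)\sum_{i=1}^d\lambda_i(\btheta,\bp)$. For the incoming term, the only states $\bp'$ that can jump to $\bp$ in one event are $\bp'=F_i(\bp)$ for $i=1,\dots,d$ (since $F_i$ is an involution), and the associated probability mass is $q_{\text{zz}}(\btheta,\bp\,|\,\btheta,F_i(\bp))=\lambda_i(\btheta,F_i(\bp))/\lambda_{\text{zz}}(\btheta,F_i(\bp))$. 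Crucially, by the definition of $\tilde\pi$ the velocity marginal is uniform, so $\tilde\pi(\btheta,F_i(\bp))=\tilde\pi(\btheta,\bp)$, which cancels the $\lambda_{\text{zz}}$ in the denominator of $q_{\text{zz}}$. Hence the incoming term collapses to $\tilde\pi(\btheta,\bp)\sum_{i=1}^d\lambda_i(\btheta,F_i(\bp))$. Now observe that flipping the $i$th coordinate negates $p_i$, so $\lambda_i(\btheta,F_i(\bp))=\max\{0,\;+p_i\,\partial_{\theta_i}\log\pi(\btheta)\}$, whereas $\lambda_i(\btheta,\bp)=\max\{0,\;-p_i\,\partial_{\theta_i}\log\pi(\btheta)\}$.

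Assembling the three pieces and dividing through by the common factor $\tilde\pi(\btheta,\bp)>0$ (valid wherever $\pi(\btheta)>0$, which is the only regime \eqref{eq:PDMPinvariant} needs to be checked on), the left-hand side of \eqref{eq:PDMPinvariant} becomes
\[
-\sum_{i=1}^d p_i\,\partial_{\theta_i}\log\pi(\btheta)
+\sum_{i=1}^d\Bigl(\{p_i\,\partial_{\theta_i}\log\pi(\btheta)\}_+-\{-p_i\,\partial_{\theta_i}\log\pi(\btheta)\}_+\Bigr),
\]
where $\{a\}_+=\max\{0,a\}$. Since $\{a\}_+-\{-a\}_+=a$ for every real $a$, the bracketed sum equals $\sum_{i=1}^d p_i\,\partial_{\theta_i}\log\pi(\btheta)$, which exactly cancels the first sum, giving $0$ as required. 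The argument decouples completely across coordinates, so no extra work is needed for $d>1$ beyond summing the one-dimensional identity term by term.

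The main obstacle is really just bookkeeping: one must be careful that $F_i$ is its own inverse (so the set of predecessors of $\bp$ under a single event is precisely $\{F_i(\bp):i=1,\dots,d\}$, each reached with the correct rate), and that the uniformity of $\pi_{\bp}$ is what makes the $\lambda_{\text{zz}}$ normaliser in $q_{\text{zz}}$ cancel cleanly rather than leaving a residual coupling between coordinates. A secondary, minor point worth a sentence is that \eqref{eq:PDMPinvariant} only needs to hold where $\pi(\btheta)>0$; on the zero set both sides vanish trivially, and — exactly as noted for the univariate PDMP in Section \ref{sec:ch6-limitasPDMP} — a zero-probability region separating two positive-probability regions could break irreducibility but not invariance, which is all Theorem \ref{thm:ZZ} asserts.
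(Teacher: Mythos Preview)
Your proposal is correct and follows essentially the same route as the paper: substitute the CV drift, the Zig--Zag rates and kernel into \eqref{eq:PDMPinvariant}, use uniformity of $\pi_{\bp}$ to factor out $\tilde\pi(\btheta,\bp)$, and reduce to the coordinate-wise identity $\{a\}_+-\{-a\}_+=a$. One small wording quibble: the cancellation of the $\lambda_{\text{zz}}$ normaliser in the incoming term comes from the event-rate factor $\lambda(\bz')=\lambda_{\text{zz}}(\btheta,F_i(\bp))$ in \eqref{eq:PDMPinvariant}, not from uniformity of $\pi_{\bp}$; uniformity is what lets you pull $\tilde\pi$ outside the sum over $i$.
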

\begin{proof}
Again, we show this result by showing that (\ref{eq:PDMPinvariant}) holds. By the same argument as in the first step of the proof of Theorem \ref{thm:CS}, if we substitute the form of $\tilde{\pi}$ and the definition of the dynamics of the Zig--Zag Sampler into the left-hand side of (\ref{eq:PDMPinvariant}) we get that this is proportional to
\begin{equation} \label{eq:ch6-ZZ1}
 - \bp \cdot \nabla_{\btheta}\log \pi(\btheta) - \sum_{i=1}^d \lambda_i(\btheta,\bp) + \sum_{i=1}^d \lambda_{i}(\btheta,F_i(\bp)).    
\end{equation}
The first term relates to the change in probability mass due to the deterministic dynamics and is the same term as appeared in the calculations for the Coordinate Sampler. The second term is the rate of leaving the state $(\btheta,\bp)$, and the third is the rate of moving to the state $(\btheta,\bp)$ which has to be from a state of the form $(\btheta,F_i(\bp))$. As in the argument for the Coordinate Sampler, we have removed the $\tilde{\pi}$ terms as these are the same for $(\btheta,\bp)$ and $(\btheta, F_i(\bp))$ for all $i$.

To simplify this expression we use 
\[
-\lambda_i(\btheta,\bp)+\lambda(\btheta,F_i(\bp))=p_{i}\frac{\partial \log \pi(\btheta)}{\partial \theta_{i}},
\]
and
\[
\bp \cdot \nabla_{\btheta}\log \pi(\btheta) = \sum_{i=1}^d p_{i}\frac{\partial \log \pi(\btheta)}{\partial \theta_{i}}.
\]
Thus (\ref{eq:ch6-ZZ1}) becomes
\[
\sum_{i=1}^d \left(-p_{i}\frac{\partial \log \pi(\btheta)}{\partial \theta_{i}} + p_{i}\frac{\partial \log \pi(\btheta)}{\partial \theta_{i}} \right)=0,
\]
as required.
\end{proof}
\index{Zig-Zag sampler|)}

\subsubsection{Bouncy Particle Sampler} \label{sec:ch6-BPS}
\index{bouncy particle sampler|(}\index{BPS|see{bouncy particle sampler}}
The third sampler that we introduce is the \emph{Bouncy Particle Sampler}, which was first introduced as a way of simulating particle systems in statistical mechanics \cite[]{peters2012rejection}, but was then proposed as a general sampling algorithm by \cite{bouchard2018bouncy}. It can be derived as a continuous-time limit of the Discrete Bouncy Particle Sampler\index{discrete bouncy particle sampler} that was introduced in Section \ref{sec.DBPS}. Like that algorithm, the transitions at events are reflections\index{reflection} of the velocity in the contours of $\log \pi(\btheta)$. 

For a $d$-dimensional vector $\bg$,  let $\bghat= \bg/(\bg\cdot \bg)^{1/2}$ be the unit vector in the direction of $\bg$. As in Section \ref{sec.DBPS}, define the function $\cR_{\bg}(\bp)=\bp-2(\bp\cdot \bghat)~\bghat$, to be the reflection of $\bp$ in the hyperplane perpendicular to $\bg$.  An important property of a reflection, that we will use below, is that it preserves the size of the vector. That is
\begin{eqnarray*}
\cR_{\bg}(\bp)\cdot \cR_{\bg}(\bp) &=& \left(\bp-2(\bp\cdot \bghat)~\bghat \right)\cdot\left(\bp-2(\bp\cdot \bghat)~\bghat \right) \\
&=& \bp\cdot\bp - 4(\bp\cdot \bghat)~\bp\cdot\bghat + 4(\bp\cdot \bghat)^2 \bghat\cdot\bghat \\
&=& \bp\cdot\bp -4(\bp\cdot \bghat)^2 + 4(\bp\cdot \bghat)^2  = \bp\cdot\bp,
\end{eqnarray*}
where for the penultimate equality we have used that $\bghat$ is a unit vector. Also, reflection is an involution, that is $\cR_{\bg}(\cR_{\bg}(\bp))=\bp$. To see this
\begin{eqnarray*}
\cR_{\bg}(\cR_{\bg}(\bp)) &=& \cR_{\bg}\left(\bp-2(\bp\cdot \bghat)~\bghat \right) \\
&=& \bp-2(\bp\cdot \bghat)~\bghat  - 2 \{(\bp-2(\bp\cdot \bghat)~\bghat)\cdot \bghat\}~\bghat \\
&=& \bp-2(\bp\cdot \bghat)~\bghat - 2 \{\bp\cdot\bghat-2(\bp\cdot \bghat)\}~\bghat \\
&=& \bp-2(\bp\cdot \bghat)~\bghat  + 2 (\bp\cdot \bghat)~\bghat =\bp. 
\end{eqnarray*}

There are two versions of the Bouncy Particle Sampler, that differ only in the set of possible velocities. We will mainly work with the version where the velocities take values in $\mathbb{R}^d$, and have an invariant distribution\index{invariant distribution} that is standard normal. For any refresh rate\index{refresh event/rate} $\refr\geq0$, the Bouncy Particle Sampler in this case is a PDMP with constant velocity dynamics (CV) \index{constant velocity dynamics} and
\begin{itemize}
\item[(BPS1)] {\em Event rate.} Events occur at a rate
\[
\lambda_{\text{BPS}}(\btheta,\bp)=\max\{0, -\bp\cdot\nabla_{\btheta}\log \pi(\btheta)\}+\refr.
\]
\item[(BPS2)] {\em Transition at events.} At an event with probability $1-\refr/\lambda_{\text{BPS}}(\btheta,\bp)$, reflect the velocity in the hyperplane perpendicular to $\nabla_{\btheta}\log \pi(\btheta)$, that is the new velocity is
\[
\bp'=\cR_{\bg}(\bp),\mbox{ with }\bg=\nabla_{\btheta}\log \pi(\btheta);
\]
otherwise sample a new velocity, $\bp'$ from a standard normal distribution. 
The position is unchanged at an event.
\end{itemize}

As with the Zig--Zag Sampler, we can interpret the dynamics in terms of events of different types. In this case, we have reflection\index{reflection} events that occur with rate $\max\{0, -\bp\cdot\nabla_{\btheta}\log \pi(\btheta)\}$, and refresh events\index{refresh event/rate} that occur with rate $\refr$. If $\refr>0$ then \cite{bouchard2018bouncy} prove that the resulting process is irreducible\index{irreducibility}, assuming weak conditions on $\pi(\btheta)$. Furthermore, they give an example where if $\refr=0$, the sampler will be reducible\index{reducible}, and this occurs if we were to use the Bouncy Particle Sampler to sample from a Gaussian distribution. As many target distributions can be close to Gaussian, we may have a reducible sampler, or one which mixes slowly if $\refr=0$. 
In practice, tuning $\refr$ is important, as not only can the sampler mix poorly for $\refr\approx0$, but if we choose $\refr$ too large it will introduce random walk behaviour which will also lead to poor mixing\index{mixing}. We will return to this issue later in this section and in Section \ref{sec:ch6-comparison}

The alternative version of the algorithm has velocities that lie on the unit $d$-dimensional hypersphere. The only difference in terms of the dynamics is that at a refresh event, we sample a new velocity from the uniform distribution on the sphere rather than from a standard normal distribution. Furthermore, there are extensions of the Bouncy Particle Sampler that only partially refresh the velocity. That is at a refresh event we sample a new velocity from a Markov kernel which has a standard normal distribution (or for the alternative version a uniform distribution on the sphere) as its stationary distribution\index{stationary distribution}.

The following result gives the invariant distribution\index{invariant distribution} of the Bouncy Particle Sampler.
\begin{theorem} \label{thm:BPS}
For any $\refr\geq0$, the Bouncy Particle Sampler, whose dynamics are defined by (CV), (BPS1) and (BPS2), has an invariant distribution\index{invariant distribution} $\tilde{\pi}(\btheta,\bp)=\pi(\btheta)\pi_{\bp}(\bp)$   where $\pi_{\bp}$ is the density of a $d$-dimensional standard normal distribution.
\end{theorem}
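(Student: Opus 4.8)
The plan is to verify the stationary equation \eqref{eq:PDMPinvariant} for the proposed invariant density $\tilde\pi(\btheta,\bp)=\pi(\btheta)\pi_{\bp}(\bp)$, following the same template as the proofs of Theorems \ref{thm:CS} and \ref{thm:ZZ}, but now with $\pi_{\bp}(\bp)=\mathsf{N}(\bp;\bzero,\bI_d)$ and with a transition kernel that has both a discrete (reflection) part and a continuous (refresh) part — so the sum $\sum_{\bz'}$ in \eqref{eq:PDMPinvariant} becomes an integral over the velocity component. First I would write out the left-hand side of \eqref{eq:PDMPinvariant} term by term. The deterministic-dynamics term, using the CV dynamics and the factorised form of $\tilde\pi$, is exactly the quantity computed in the common preamble, namely $\tilde\pi(\btheta,\bp)\,\bigl(\bp\cdot\nabla_{\btheta}\log\pi(\btheta)\bigr)$. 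The outflow term is $\tilde\pi(\btheta,\bp)\lambda_{\text{BPS}}(\btheta,\bp)$. The inflow term splits, by superposition, into a contribution from reflection events and a contribution from refresh events.

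For the refresh contribution, the key observation is that the arrival state $\bp$ is sampled independently from $\pi_{\bp}$, so the inflow from refreshes into $(\btheta,\bp)$ is $\pi_{\bp}(\bp)\int \refr\,\tilde\pi(\btheta,\bp')\,\md\bp' = \refr\,\tilde\pi(\btheta,\bp)$ (using $\int\pi_{\bp}(\bp')\md\bp'=1$), which exactly cancels the $\refr$ piece of the outflow term. For the reflection contribution, the crucial facts are that the reflection map $\bp'\mapsto\cR_{\bg}(\bp)$ with $\bg=\nabla_{\btheta}\log\pi(\btheta)$ is an involution (established in the excerpt), is its own inverse, has unit Jacobian, and preserves Euclidean norm — hence preserves the standard normal density, so $\pi_{\bp}(\cR_{\bg}(\bp'))=\pi_{\bp}(\bp')$ and in particular $\tilde\pi(\btheta,\cR_{\bg}(\bp))=\tilde\pi(\btheta,\bp)$. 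Because each state $(\btheta,\bp)$ has a single reflection-preimage, namely $(\btheta,\cR_{\bg}(\bp))$, the reflection inflow into $(\btheta,\bp)$ is $\max\{0,-\cR_{\bg}(\bp)\cdot\bg\}\,\tilde\pi(\btheta,\cR_{\bg}(\bp))$. Now I would use $\cR_{\bg}(\bp)\cdot\bg = \bp\cdot\bg - 2(\bp\cdot\bghat)(\bghat\cdot\bg) = \bp\cdot\bg - 2(\bp\cdot\bg) = -\bp\cdot\bg$, so this inflow equals $\max\{0,\bp\cdot\bg\}\,\tilde\pi(\btheta,\bp)$.

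Collecting everything, the left-hand side of \eqref{eq:PDMPinvariant} becomes $\tilde\pi(\btheta,\bp)$ times
\[
\bigl(\bp\cdot\bg\bigr) \;-\; \max\{0,-\bp\cdot\bg\} \;+\; \max\{0,\bp\cdot\bg\},
\]
where the $\refr$ terms have already cancelled. Splitting into the cases $\bp\cdot\bg\ge 0$ and $\bp\cdot\bg<0$ shows this bracket is identically zero, which establishes that $\tilde\pi$ satisfies \eqref{eq:PDMPinvariant} and hence is invariant. I would then remark, as in the surrounding text, that invariance is the stationary distribution given irreducibility (with the usual caveats about $\refr=0$ and Gaussian-type targets mentioned after the statement), citing \cite{chevallier2021pdmp} for the conditions under which \eqref{eq:PDMPinvariant} genuinely implies invariance for a PDMP with continuous jump kernels.

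The main obstacle — really the only delicate point — is handling the refresh part of the kernel correctly within the framework of \eqref{eq:PDMPinvariant}, which as stated in the excerpt is written for discrete transition kernels via the probability mass function $q(\bz|\bz')$. I would need to state clearly that the equation generalises to continuous velocity kernels by replacing $\sum_{\bz'}\pi(\bz')\lambda(\bz')q(\bz|\bz')$ with the corresponding integral, and then carefully bookkeep the mixture: writing the BPS jump kernel as $\mathbb{Q}(\md\bp\,|\,\btheta,\bp') = \bigl(1-\refr/\lambda_{\text{BPS}}(\btheta,\bp')\bigr)\delta_{\cR_{\bg}(\bp')}(\md\bp) + \bigl(\refr/\lambda_{\text{BPS}}(\btheta,\bp')\bigr)\pi_{\bp}(\bp)\md\bp$, multiplying by $\lambda_{\text{BPS}}(\btheta,\bp')\tilde\pi(\btheta,\bp')$ so the $\lambda_{\text{BPS}}$ factors cancel cleanly inside each piece, and integrating over $\bp'$. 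Everything else is the norm-preservation and involution property of reflections, which the excerpt has already done the work for.
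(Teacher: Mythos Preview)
Your approach is essentially the paper's: verify \eqref{eq:PDMPinvariant} for $\tilde\pi$ using norm-preservation and the involution property of $\cR_{\bg}$, together with $\cR_{\bg}(\bp)\cdot\bg=-\bp\cdot\bg$. The paper sidesteps the refresh part by proving the case $\refr=0$ and remarking that refresh trivially preserves $\tilde\pi$; your explicit computation of the refresh inflow as $\refr\,\tilde\pi(\btheta,\bp)$ is a fine alternative and is exactly what ``trivially preserves'' amounts to.

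However, there is a sign slip in your final display. The preamble computes $\sum_i \partial(\phi_i\tilde\pi)/\partial z_i = \tilde\pi(\btheta,\bp)\,(\bp\cdot\bg)$, but in \eqref{eq:PDMPinvariant} this appears with a \emph{minus} sign. So the bracket should read
\[
-\,(\bp\cdot\bg)\;-\;\max\{0,-\bp\cdot\bg\}\;+\;\max\{0,\bp\cdot\bg\},
\]
which is indeed zero (it equals $-\bp\cdot\bg+\bp\cdot\bg$). As you wrote it, with $+(\bp\cdot\bg)$, the expression equals $2\,\bp\cdot\bg$ in both cases and does not vanish. Fix the sign and the argument is complete.
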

\begin{proof}
As our presentation of PDMPs has focussed on discrete transitions at events, we will prove the result for $\refr=0$ only. The extension to $\refr>0$ is straightforward, as the additional refresh rates\index{refresh event/rate} trivially keep $\tilde{\pi}$ invariant.

As shown above, a reflection\index{reflection} does not change the length of a vector. As for the proposed invariant distribution\index{invariant distribution}, $\pi_{\bp}(\bp)$ depends on $\bp$ only through its length, we have $\pi_{\bp}(\bp)=\pi_{\bp}(\cR_{\bg}(\bp) )$, for any direction of reflection $\bg$. Thus a reflection event does not change the value of $\tilde{\pi}$. This means we can use the same argument as at the start of the proofs of Theorem \ref{thm:CS} and Theorem \ref{thm:ZZ} to get that the left-hand side of (\ref{eq:PDMPinvariant}) is proportional to
\begin{equation} \label{eq:BPSthm1}
- \bp \cdot \nabla_{\btheta}\log \pi(\btheta) + \lambda_{\text{BPS}}(\btheta,\cR_{\bg}(\bp)) - \lambda_{\text{BPS}}(\btheta,\bp),
\end{equation}
where to simplify notation we have used $\bg=\nabla_{\btheta}\log \pi(\btheta)$. 
The middle term is the rate of transitioning to state $(\btheta,\bp)$ and uses, as shown above, that reflections are involutions, so it is the state $(\btheta,\cR_{\bg}(\bp))$ that will transition to $(\btheta,\bp)$ at a reflection event. Substituting in the form of $\lambda_{\text{BPS}}$, and using the definition of $\bg$, we have
\[
\lambda_{\text{BPS}}(\btheta,\cR_{\bg}(\bp)) - \lambda_{\text{BPS}}(\btheta,\bp) =  \max\{0, \cR_{\bg}(\bp) \cdot\bg\}- \max\{0, \bp\cdot\bg\}.
\]
Now
\[
\cR_{\bg}(\bp) \cdot \bg = (\bp-2(\bp\cdot \bghat)~\bghat )\cdot\bg=\bp\cdot\bg-2(\bp\cdot \bghat)(\bghat\cdot\bg)=\bp\cdot\bg-2(\bp\cdot \bg)(\bghat\cdot\bghat), 
\]
where the last equality follows as $\bg$ is proportional to $\bghat$. As $\bghat$ is a unit vector, we have $\cR_{\bg}(\bp) \cdot \bg= - \bp\cdot\bg$. Substituting gives
\[
\lambda_{\text{BPS}}(\btheta,\cR_{\bg}(\bp)) - \lambda_{\text{BPS}}(\btheta,\bp) =  \max\{0, -\bp \cdot\bg\}- \max\{0, \bp\cdot\bg\} = -\bp\cdot\bg. 
\]
By definition of $\bg$ this is just $\bp\cdot\nabla_{\btheta} \log \pi(\btheta)$. Substituting this into (\ref{eq:BPSthm1}) we see that the left-hand side of (\ref{eq:PDMPinvariant}) is 0 as required.

\end{proof}
\index{bouncy particle sampler|)}
\subsubsection{Example: Sampling from a Gaussian Target}
\index{piecewise deterministic Markov processes!simulation|(}
To gain an initial understanding of these algorithms in practice, how they differ from each other and how they compare to HMC, we will consider their implementation for sampling from a Gaussian distribution. This is an example where all methods can be implemented exactly and enables a simple comparison of the dynamics of the different samplers.

To simplify notation we will assume our target Gaussian distribution has a mean zero. This can be assumed without loss of generality in terms of the behaviour of the samplers, as we can re-centre any Gaussian distribution with non-zero mean and this would not change the samplers' dynamics. A mean-zero Gaussian distribution is commonly parameterised by its covariance matrix, $\bSigma$ say, but in terms of its density function, it is easier to use the precision matrix, which is the inverse of the covariance matrix. We will denote the precision matrix by $\bQ=\bSigma^{-1}$.  We assume that $\bSigma$, and hence $\bQ$, is positive-definite.
Then up to an additive constant, we have
\[
\log \pi(\btheta) = -\frac{1}{2} \btheta^{\top} \bQ \btheta.
\]
The rates of the PDMP samplers depend on $\pi$ through $-\nabla_{\btheta} \log \pi(\btheta)$, which for the Gaussian target is $\bQ \btheta$. 

Before showing the output from the different PDMP samplers for this model, we will describe an approach to simulating the PDMPs. Each of the three PDMPs introduced in the previous section can be viewed as having multiple types of event, with each event having a deterministic transition. For the Zig--Zag Sampler\index{Zig-Zag sampler},
 we have one event associated with each component of $\btheta$, and that flips the associated component of the velocity. For the Bouncy Particle Sampler\index{bouncy particle sampler}, and the Coordinate Sampler\index{coordinate sampler},
 there are two events, one of which is a refresh\index{refresh event/rate} of the velocity. Our approach to simulating these PDMPs is to use the idea of superposition\index{superpositon}, that is we will simulate the time for each of the possible events, find which occurs first, and then this type of event with its associated time is the next event for our PDMP.

As described in Section \ref{sec:ch6-simulatingPDMPs}, to simulate times of events we need, for each type of event, to calculate the rate of the time until the next event, given the current state. We will describe how to calculate this for the bounce event of the Bouncy Particle Sampler, and for a flip event in the Zig--Zag Sampler\index{Zig-Zag sampler}. Simulating the refresh events is trivial, and simulating the events of the Coordinate Sampler\index{coordinate sampler} follows by similar arguments.

Let the current state of our PDMP be $\bz=(\btheta,\bp)$, and let $\tilde{\lambda}_{\bz}(t)$ be the rate at which an event occurs in terms of the future time $t$. We will first consider the bounce event of the Bouncy Particle Sampler. Let the current time be $s$, so $\bz_s=\bz$, then
\begin{eqnarray*}
\tilde{\lambda}_{\bz}(t) & =& \max\{0, \bp_{s+t} \cdot (-\nabla_{\btheta} \log \pi(\btheta_{t+s}) \} = \max\{0, \bp_{s+t}^{\top} \bQ \btheta_{s+t} \} \\
&=&    \max\{0, \bp^{\top} \bQ (\btheta+t\bp) \} = \max\{0, \bp^{\top}\bQ\btheta + t \bp^{\top}\bQ\bp\}.
\end{eqnarray*}
Here we have used the definition of $\tilde{\lambda}_{\bz}(t)$, substituted in $\log \pi(\btheta_{t+s})$ for our target and then used the fact that up until the next event, $\bp_{s+t}=\bp_s=\bp$ and $\btheta_{s+t}=\btheta_{s}+t\bp_s=\btheta+t\bp$.

We are viewing $\tilde{\lambda}_{\bz}(t)$ as a function of the further time until the event, $t$. We can see that $\tilde{\lambda}$ is the maximum of zero and a linear function of $t$, and defining $a=\bp^{\top}\bQ\btheta $ and $b=\bp^{\top}\bQ\bp$, the linear function is equal to $a+bt$. Furthermore, as $\bQ$ is positive-definite we have $b>0$. For this rate, we can simulate event times directly using Algorithm \ref{alg:direct}. To do this, we need to solve $\int_0^t \tilde{\lambda}_{\bz}(u)\mbox{d}u=w$. There are two cases for the integral. First, if $a>0$, then $\tilde{\lambda}_{\bz}(u)=a+bu$ for all $u>0$, so 
\[
\int_0^t \tilde{\lambda}_{\bz}(u)\mbox{d}u = \int_0^t (a+bu) \mbox{d}u=at+ \frac{bt^2}{2},
\]
and this is equal to $w>0$ if $t=-a/b+\sqrt{a^2+2wb}/b$. If $a<0$ then $a+bt$ is only positive for $t>|a|/b$, thus for such $t$
\[
\int_0^t \tilde{\lambda}_{\bz}(u)\mbox{d}u = \int_{|a|/b}^t (a+bu) \mbox{d}u = \int_{0}^{t-|a|/b} bu'^2\mbox{d}u'= \frac{b(t-|a|/b)^2}{2}.
\]
This is equal to $w>0$ when $t=|a|/b+\sqrt{2w/b}$.

The resulting algorithm for one iteration of the Bouncy Particle Sampler\index{bouncy particle sampler} is given in Algorithm \ref{alg:BPS_Gaussian}. The algorithm simulates the time of a refresh\index{refresh event/rate} event and a bounce event. It sets the time of the next event to be the smaller of these two times, and updates the position of the state. Then, depending on which type of event occurred first, it updates the velocity. For a refresh event, this involves simulating from the sampler's \index{invariant distribution}invariant distribution for the velocity, which depending on the type of Bouncy Particle Sampler can be either a standard Gaussian distribution or the uniform distribution on the unit hyper-sphere. For a bounce event, the velocity is reflected in the hyperplane perpendicular to $-\nabla \log \pi(\btheta)$ at the current position $\btheta'$, which for our model is $\bQ \btheta'$.

\begin{algorithm}[h]
    \caption{Bouncy Particle Sampler: Gaussian Target}
    \KwIn{Precision Matrix, $\bQ$, current state $(\btheta,\bp)$, refresh rate $\refr>0$}
    Calculate $a=\bp^{\top}\bQ\btheta $ and $b=\bp^{\top}\bQ\bp$.\\
    Simulate $w_1$ and $w_2$, independent realisations of a standard exponential random variable.\\
    Calculate time until a refresh event $\tau_1=w_1/\refr$.\\
    Calculate time until a bounce event: if $a<0$ $\tau_2=\sqrt{2w_2/b}+|a|/b$, otherwise $\tau_2=-a/b+\sqrt{a^2+2w_2b}/b$.\\
    Calculate event time $t=\min\{\tau_1,\tau_2\}$.\\
    Update position $\btheta'=\btheta=t\bp$.\\
    Decided on event type and update velocity:\\
    \eIf{$\tau_1<\tau_2$}{
    Refresh event. Simulate $\bp'$ from its invariant distribution.\index{refresh event/rate}
    }{Bounce event. Set
    \[
    \bp'=\bp-2(\bp^{\top}\bQ\btheta') \frac{\bQ\btheta'}{\sqrt{\btheta'^{\top}\bQ^{\top}\bQ\btheta'}}.
    \]}
    \KwOut{Time to next event $t$, and new state $(\btheta',\bp')$}
    \label{alg:BPS_Gaussian}
\end{algorithm}

A similar derivation is possible for the Zig--Zag Sampler\index{Zig-Zag sampler}. The only difference is that the flip rate of the $i$th component of $p_{i}$ is
\[
\max\{0, -p_{i} (\nabla_{\btheta} \log \pi(\btheta))_i\}=\max\{0,p_{i} (\bQ\btheta)_{i} \},
\]
where we write, for example, $(\bQ\btheta)_{i}$ to denote the $i$th component of $\bQ\btheta$.
Thus, the associated rate as a function of time until this event, if the current state of $\bz=(\btheta,\bp)$, is
\[
\tilde{\lambda}_{\bz}^{(i)}(t) = \max\{0, p_{i}[\bQ(\btheta+t\bp)]_{i}\} = \max\{0, p_{i}(\bQ\btheta)_{i}+tp_{i}(\bQ\bp)_{i} \}.
\]
This again is the maximum of 0 and a linear function of $t$ and can be simulated as described above. As above, we can set $\tilde{\lambda}_{\bz}^{(i)}(t)=\max\{0,a+bt\}$ but now with $a=p_{i}(\bQ\btheta)_{i}$
and $b=p_{i}(\bQ\bp)_{i} $. The only difference is that in this case, $b$ can be negative. If $b<0$ and $a<0$, then this event can never happen. If $a>0$, then an event can happen for $t<a/|b|$. In this case
\[
w=\int_0^t (a+bu)\mbox{d}u \Rightarrow w=at+ \frac{bt^2}{2}. 
\]
This has a solution for $t>0$ only if $w\leq a^2/(2|b|)$, in which case $t=-a/b+\sqrt{a^2+2wb}/b$. If $w>a^2/(2|b|)$ then this event does not happen. 
If an event cannot or does not happen, then algorithmically we set the associated event time to infinity.

An algorithm for one iteration of the Zig--Zag Sampler\index{Zig-Zag sampler} is given in Algorithm \ref{alg:ZZ_Gaussian}. It has a similar form as the Bouncy Particle Sampler\index{bouncy particle sampler}. We calculate the event time for each type of event -- though due to the four possible cases we have not given the formulae for calculating the event times within the algorithm. We then set the event time to the smallest of these times and update the position. Finally, the event type is calculated and we apply the appropriate transition to the velocity, remembering that $F_i(\bp)$ flips the $i$th component of the vector $\bp$.
\begin{algorithm}[h]
    \caption{Zig--Zag Sampler: Gaussian Target}
    \KwIn{Precision Matrix, $\bQ$, current state $(\btheta,\bp)$, refresh rate $\refr>0$}\index{refresh event/rate}
    \For{$i=1,\ldots,d$}{
        Calculate $a_i=p_{i}(\bQ\btheta)_{i}$ and $b_i=p_{i}(\bQ\bp)_{i}$.\\
        Simulate $w_i$, a realisation of a standard exponential random variable.\\
        Calculate $\tau_i$, the event time for a flip of the $i$th component of $\bp$.\\
    }
    Calculate event time $t=\min_{i=1,\ldots,d}\{\tau_i\}$.\\
    Update position $\btheta'=\btheta=t\bp$.\\
    Decide on event type, $i^*=\arg\min \{\tau_i\}$.\\
    Update velocity $\bp'=F_{i^*}(\bp)$.\\
    \KwOut{Time to next event $t$, and new state $(\btheta',\bp')$}
    \label{alg:ZZ_Gaussian}
\end{algorithm}

To gain some intuition of the properties of these PDMP algorithms, we will first look qualitatively at the output of running the algorithms for a bivariate Gaussian -- as we can plot the realisations from the paths generated by the position component of the PDMPs. First, we look at the importance of the refresh rate with the Bouncy Particle Sampler\index{bouncy particle sampler}. To most clearly see the potential issues for this sampler, it is helpful to observe it sampling from a target with uncorrelated, equal variance components. See Figure \ref{fig:ch6_BPS_refresh} for output from the Bouncy Particle Sampler for different refresh rates. 

\begin{figure}[h]
    \centering
    \includegraphics[width=\textwidth]{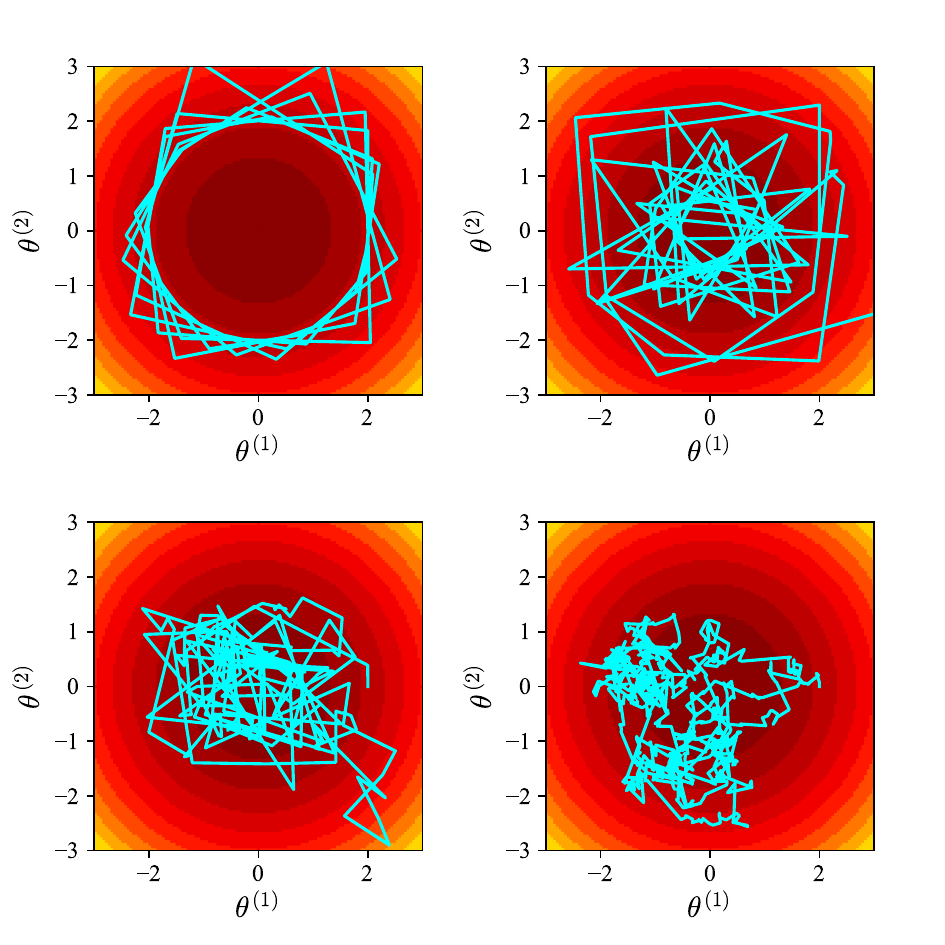}
    \caption{Plots of realisations of the trajectory or path of the Bouncy Particle Sampler when sampling from a standard (i.e. uncorrelated, equal variance) bivariate Gaussian distribution. Trajectories shown for no refresh events (top left), $\refr=0.1$ (top right), $\refr=1$ (bottom left) and $\refr=10$ (top right). The heat map shows the log posterior density of the target in each case.
    }
    \label{fig:ch6_BPS_refresh}
\end{figure}

The top-left plot shows the trajectory if we do not use any refresh events\index{refresh event/rate}. We can clearly see evidence of the sampler being reducible\index{reducible} -- as the sampler does not enter a large region around the mode. \cite{bouchard2018bouncy} prove that the Bouncy Particle Sampler\index{bouncy particle sampler} is in fact reducible\index{reducible} for this example. Furthermore, they show that this is avoided if we use any non-zero refresh rate, and if we do so the sampler will converge to the target distribution. However, we get very different behaviour for different values of the refresh rate, as shown in the remaining plots of Figure \ref{fig:ch6_BPS_refresh}. A small rate produces a sampler, that whilst irreducible, has poor mixing \index{mixing}properties (see top-right plot). The sampler has long periods between refresh events, and for each of these periods, there are regions of the state space that the sampler cannot reach. Too large a refresh rate means that the sampler shows random-walk behaviour, which can also adversely affect its mixing (see bottom-right plot). Tuning of the refresh rate to a good intermediate value can result in a sampler that mixes well and avoids this random-walk behaviour (see bottom-left plot). We will return to how to tune the refresh rate later. Finally, whilst the Coordinate Sampler\index{coordinate sampler} also has refresh events, it does not suffer from the same problems as the Bouncy Particle Sampler, and will be irreducible\index{irreducibility} even if $\refr=0$.

\begin{figure}[h]
    \centering
    \includegraphics[width=\textwidth]{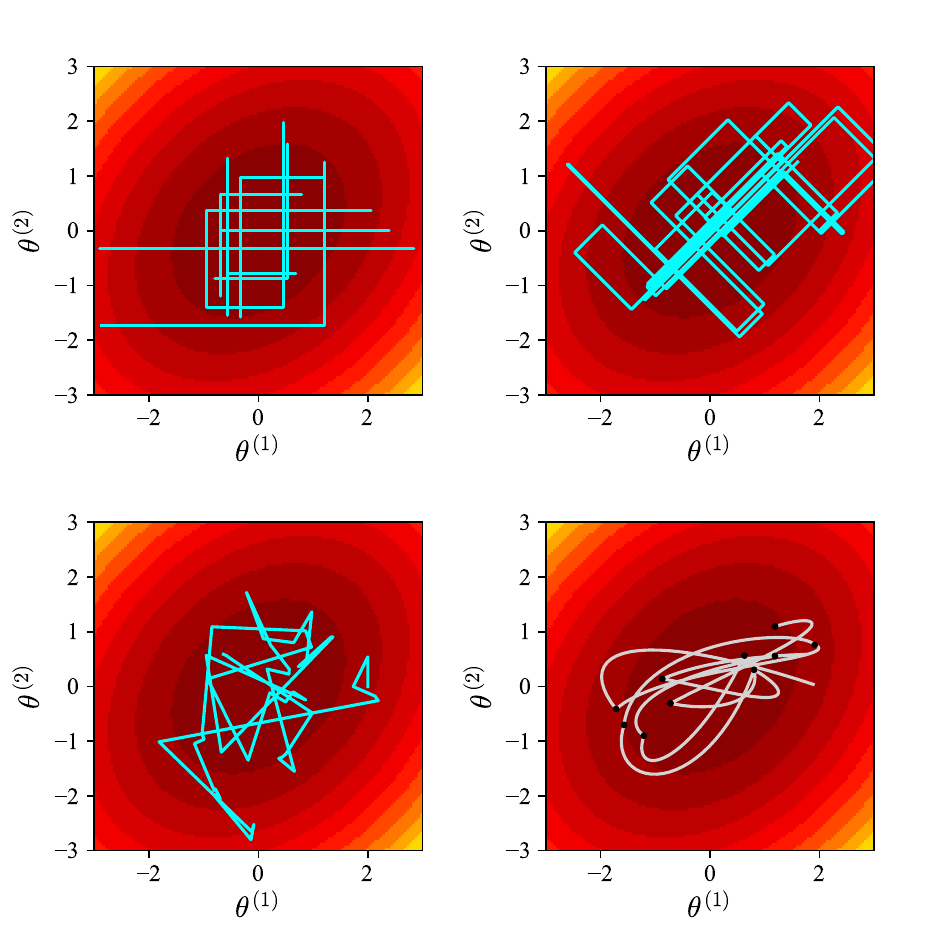}
    \caption{Comparison of three PDMP algorithms and HMC for sampling from a bivariate Gaussian with unit marginal variances and correlation of  0.5.
    Realisations of the trajectories (blue lines) of the Coordinate Sampler (top left), the Zig--Zag Sampler (top right) and the Bouncy Particle Sampler with $\refr=1$ (bottom left). Trajectories of Hamiltonian dynamics (line) and the sampled points (dots) from HMC (bottom right).
    The heat map shows the log posterior density of the target in each case.
    }
    \label{fig:ch6_PDMPs}
\end{figure}

We now compare the outputs of the three different PDMP samplers, with the Bouncy Particle Sampler\index{bouncy particle sampler} using an appropriately tuned refresh rate\index{refresh event/rate}, $\refr=1$. These are shown in Figure \ref{fig:ch6_PDMPs} together with the output from HMC.  At this stage, there are two main points we wish to make. First, one can see the qualitative similarities and differences between the different PDMP samplers. Each sampler explores $\btheta$ space with straight-line trajectories, and they all have the property that they continue in the same direction whilst moving to areas of higher probability density -- though for the Zig--Zag Sampler\index{Zig-Zag sampler}, this has to be interpreted separately for each axis component. However, the trajectories themselves are very different due to the different possible velocities and transitions. The Coordinate Sampler\index{coordinate sampler} explores the posterior by exploring a single component of $\btheta$ at a time. This has some similarities with a Gibbs sampler\index{Gibbs moves} (see Section \ref{ch2:sec:Gibbs}), and intuition from results on mixing of Gibbs samplers suggest that this sampler will perform best when there is no strong correlation between the components of $\btheta$. 
The Zig--Zag Sampler has trajectories consisting of diagonal lines, and the transitions that flip a single component of the velocity lead to trajectories that look like zig-zags, which gives the sampler its name. Finally, the Bouncy Particle Sampler can have trajectories that explore the space in any direction.

Second, it is interesting to compare PDMP samplers with HMC (see bottom-right plot). For this model, we can solve the Hamiltonian dynamics\index{Hamiltonian Monte Carlo} for each proposal\index{proposal} of the HMC algorithm exactly, and thus we always will accept a proposal. The trajectories of the Hamiltonian dynamics are elliptical, as compared to the straight-line segments of our PDMP samplers. However, the main difference is that the output of a PDMP sampler is a continuous path, whilst for HMC, we obtain a set of points. For non-Gaussian target distributions, the HMC sampler will not always accept the proposal which can lead to worse mixing. In such cases, whilst we cannot directly sample the trajectories of the PDMP samplers, this only impacts the computational cost of simulation, rather than the output or the mixing properties of the sampler. 

\index{piecewise deterministic Markov processes|)}
\index{piecewise deterministic Markov processes!simulation|)}

\subsection{Use of PDMP Output}
\index{piecewise deterministic Markov processes!output|(}
Simulating a PDMP, as described in Section \ref{sec:ch6-simulatingPDMPs}, produces a skeleton of the sample path\index{sample path} of the process. Assume that we have simulated the process up to some time $T$.  We will denote this skeleton by the set $\{\tau_i,(\btheta_{\tau_i},\bp_{\tau_i})\}_{k=0}^{n+1}$ that gives the initial state of the process, with $\tau_0=0$, the time and state after each event, for $k=1,\ldots,n$, and the final state of the process at time $\tau_{n+1}=T$. How do we use this output to approximate the target distribution?

For any Monte Carlo method, an approximation to the target distribution, $\pi(\btheta)$ comes from the ability to estimate the expectation for arbitrary functions of $\btheta$ with respect to $\pi$. Thus consider estimating $\Expects{\pi}{h(\btheta)}$ for some function $h$ for which this expectation exists. First, we describe how we do not estimate this expectation! We cannot just use the sample average of $h(\cdot)$ at the skeleton points for $\btheta$, even after allowing some burn-in\index{burn-in}. In general, the skeleton points will not be sampled from $\pi$ at stationarity, as they will be biased towards values where there is a large average rate of an event occurring.

Instead, we need to estimate the expectation with respect to the continuous-time sample path\index{sample path}, after allowing for a suitable burn-in. There are two ways of doing this. Assume we choose the burn-in to be some time $S$. 
Then one approach is to calculate the average value of the integral of $h(\btheta_t)$ for our sample path for $S<t\leq T$. This can be calculated from the skeleton as follows. First, we work out the value of $\btheta_S$ for our sample path\index{sample path}. This is possible by finding $l$ such that $\tau_l\leq S < \tau_{l+1}$, and using linear interpolation
\[
\btheta_S= \frac{\tau_{l+1}-S}{\tau_{l+1}-\tau_l} \btheta_l +  \frac{S-\tau_l}{\tau_{l+1}-\tau_l} \btheta_l.
\]
Then our estimator can be calculated as
\begin{eqnarray*}
\hExpects{\pi}{h(\btheta)} &=& \frac{1}{T-S}\left( 
\int_{S}^{\tau_{l+1}} h\left( \btheta_S + (t-S)\frac{\btheta_{\tau_{l+1}}-\btheta_S}{\tau_{l+1}-S}\right)\mbox{d}t
 \right. \\ & &
\left. +\sum_{k=l+1}^K \int_{\tau_k}^{\tau_{k+1}} h\left( \btheta_{\tau_k} + (t-\tau_k)\frac{\btheta_{\tau_{k+1}}-\btheta_{\tau_k}}{\tau_{k+1}-\tau_k}\right)\mbox{d}t
\right)
\end{eqnarray*}
This estimator is only practical if we can analytically calculate the integrals along the linear segments of the path. A more general, and arguably simpler approach is to evaluate  $\btheta_t$ at $N$ evenly spaced points between $S$ and $T$ and then use standard Monte Carlo averages with respect to this set of values. That is, let $\delta=(T-S)/N$ and using linear interpolation as above evaluate $\btheta_{S+j\delta}$ for $j=1,\ldots,N$.  
Then our estimator of $\Expects{\pi}{h(\btheta)}$ would now be
\[
\hExpects{\pi}{h(\btheta)}  = \frac{1}{N} \sum_{j=1}^N 
h(\btheta_{S+j\delta}).
\]
One advantage of this approach is the final output is similar to that for standard MCMC, which eases comparison and enables us to use methods for assessing the accuracy of standard MCMC estimators such as the integrated auto-correlation time and effective sample size\index{effective sample size} (see Section \ref{sec:ch1-IACFESS}).
\index{piecewise deterministic Markov processes!output|)}
\subsection{Comparison of Samplers} \label{sec:ch6-comparison}
\index{piecewise deterministic Markov processes!comparison of samplers|(}
\cite{bierkens2022high} examines the mixing\index{mixing|(} properties of the Bouncy Particle Sampler\index{bouncy particle sampler|(} and the Zig--Zag Sampler\index{Zig-Zag sampler} in the limit as the dimension of the space, $d\to \infty$, for the special case where the posterior of interest is a $d$-dimensional standard normal distribution. In related work, \cite{bierkens2023scaling} investigates finite-dimensional normal targets where some principal components have a much smaller length scale than others. We summarise the findings of these two papers and provide some intuition for them.

To compare the samplers, we need to consider both their computational cost for simulating a trajectory of fixed duration, and how the mixing of the process depends on time. First, consider the computational cost, and the intensity of bounce events on a $d$-dimensional standard normal target, so that at stationarity $\nabla \log \pi(\btheta)=-\btheta$. The momentum for the Zig--Zag Sampler\index{Zig-Zag sampler} is $\bp_{\text{zz}}\in \{-1,+1\}^d$. For the Bouncy Particle Sampler, we describe the case where $\bp$ is sampled from $\mathsf{U}_d$, the uniform distribution on the unit hypersphere. If, instead, $\bp_{\text{BPS}} \sim \Normal(\bzero,\frac{1}{d} \bI_d)$, for large $d$, $\|\bp_{\text{BPS}}\|^2\approx 1$ and the analysis is the same as for the setting where  $\bp_{\text{BPS}}\sim \mathsf{U}_d$. Speeding up time by a factor of $\sqrt{d}$ leads to the version of the BPS that samples $\bp \sim \mathsf{N}(\bzero, \bI_d)$, but makes no difference to the overall efficiency in terms of mixing per unit of computational effort.

 For either sampler, let $p_{i}$ be the $i$th component of its momentum. For the Bouncy Particle Sampler, the intensity is $\lambda_{\text{BPS}}(\btheta,\bp)=\max\{0,\sum_{i=1}^d p_{i} \theta_{i}\}=O(1)$, since each term in the sum has the same expectation of $0$ and a variance of $O(1/d)$. Thus, there are $O(1)$ events per unit of time. In contrast, for the Zig--Zag Sampler, the intensity is $\lambda_{\text{zz}}(\btheta,\bp)=\sum_{i=1}^d \max\{0, p_{i}\theta_{i}\}=O(d)$, since each term in the sum has the same positive expectation. Hence, there are $O(d)$ events per unit time. In general, for each sampler, the computational cost of performing a bounce is $O(d)$: for the Bouncy Particle Sampler, this is the order of the cost of calculating the gradient required for both the event rate and for calculating the new velocity at a bounce event; for Zig-Zag, after a bounce event we will need to update the rates for each of the $d$ possible events (though see Section \ref{sec.ZZsparse} for situations where this can be reduced).
Thus the total cost per unit time is $O(d)$ for the Bouncy Particle Sampler and $O\left(d^2\right)$ for the Zig--Zag Sampler.

The above costing generalises to any reasonably well-behaved target. 
However, because it only updates a component at a time, in cases where components have a sparse conditional dependence graph the cost of performing a Zig--Zag bounce can be reduced to $O(1)$ (see Section \ref{sec.ZZsparse}).

Now we turn to the mixing properties. In this special case of an isotropic target, the state of the Markov process can be encapsulated by the radial component, $\|\btheta\|$, and the angle between the radius and the momentum, which is proportional to $\btheta \cdot \bp$.

For Bouncy Particle Samplers, the radial component mixes in $O(d)$ time, whereas the angular component mixes in $O(1)$ time. To see why, for simplicity, we ignore any refresh events. Consider a single straight-line path between bounces, which we will call a segment: because the contours on a $\mathsf{N}(\bzero,\bI_d)$ target are spherical, $\log \pi$ increases monotonically to a maximum along the segment and then decreases monotonically until the next bounce. Let $E$ be the size of the drop in $\log \pi$ from its maximum until the next bounce. Start a clock at a time when $\log \pi$ is at a maximum and let $t$ be the time since the clock started. Since there are no refresh events, if there has  been no bounce since the clock started, the size of the total drop in $\log \pi$ by time $t$ is
\[
D(t)=-
\int_0^t \bp\cdot \nabla \log \pi(\btheta_s) \md s.
\]
 Now, for $t>0$,   $\pi(\btheta_t)$ 
 decreases until a bounce occurs,  so
\[
E>D(t) \Leftrightarrow \mbox{"No bounce by time $t$."}
\]
However, while $\log \pi$ is decreasing, bounce events follow an inhomogeneous Poisson process with a rate at time $s$ of $\lambda(s)=-\bp \cdot \nabla \log \pi(\btheta_s)$, so the probability that there has been no event by time $t$ is
\[
\exp\left[-\int_0^t \lambda(s) \md s\right]
=
\exp[-D(t)].
\]
Thus $\Prob{E>D(t)}=\exp[-D(t)]$, 
so $E$ has an exponential distribution with rate parameter 1. 

Over the $O(1)$ time between bounces, the angle $\btheta \cdot \bp$  moves monotonically from its most negative extent (just after a bounce) to its most positive extent (just before the next bounce). Thus, $\btheta \cdot \bp$ mixes in $O(1)$ time.
However,
\[
\log \pi(\btheta)
=
\mbox{constant}-\frac{1}{2}\|\btheta\|^2
=
\mbox{constant}-\frac{1}{2}\chi^2_d
\]
at stationarity, and a $\chi^2_d$ random variable has a standard deviation of $\sqrt{2d}$. So, to mix, the $\log \pi$ process needs to move by $O(d^{1/2})$, yet it only moves by $O(1)$ in $O(1)$ time. In an analogous manner to the limit for the random walk Metropolis in Section \ref{sec.rwm}, speeding up time by a factor of $d$ leads to a limiting diffusion for $\|\btheta\|$; thus $\|\btheta\|$ mixes in $O(d)$ time.

By maximising the speed of the limiting diffusion for $\|\btheta\|$, the same analysis advises on tuning the refresh rate\index{refresh event/rate} for the Bouncy Particle Sampler\index{bouncy particle sampler}, $\refr$. This suggests choosing $\refr$ so that the ratio of refresh to bounce events is $\approx 0.78$. 

In the case of an isotropic normal target, the Zig--Zag Sampler\index{Zig-Zag sampler} simplifies to $d$ independent one-dimensional instances of Gustafson's algorithm (Section \ref{sec:ch5_lifting}). Thus, each individual component mixes in $O(1)$ time, so both $\|\btheta\|$ and $\btheta \cdot \bp$ mix in $O(1)$ time.

Multiplying the mixing times by the computational costs, we can define mixing costs. For Bouncy Particle Samplers, these are $O(d)$ for the angular component and $O(d^2)$ for the radial component, whereas the costs for the Zig--Zag Sampler are $O(d^2)$ for both components. If one follows the adage that a sampler is only as good as its worst-mixing component, this suggests that, at least for well-behaved targets, both algorithms have a similar efficiency.

What about the mixing of the Bouncy Particle Sampler for other functions of the state? \cite{bierkens2022high} show that it has the same $O(d^2)$ rate for marginal components of the state -- i.e. if we are interested in $\theta_{i}$ for some $i$. This compares to results in \cite{deligiannidis2021randomized} which suggest that mixing costs for marginal components are $O(d)$. The difference in results comes from different choices of how the refresh rate\index{refresh event/rate} depends on $d$. \cite{bierkens2022high} have a constant rate, whereas \cite{deligiannidis2021randomized} assume that the rate decays like $O(d^{-1/2})$. The latter choice improves mixing for marginal components but worsens the mixing of the radial component, so that as $d\rightarrow \infty$, the limiting process for the radial component is degenerate. 

\begin{figure}
    \centering
    \includegraphics[width=\textwidth]{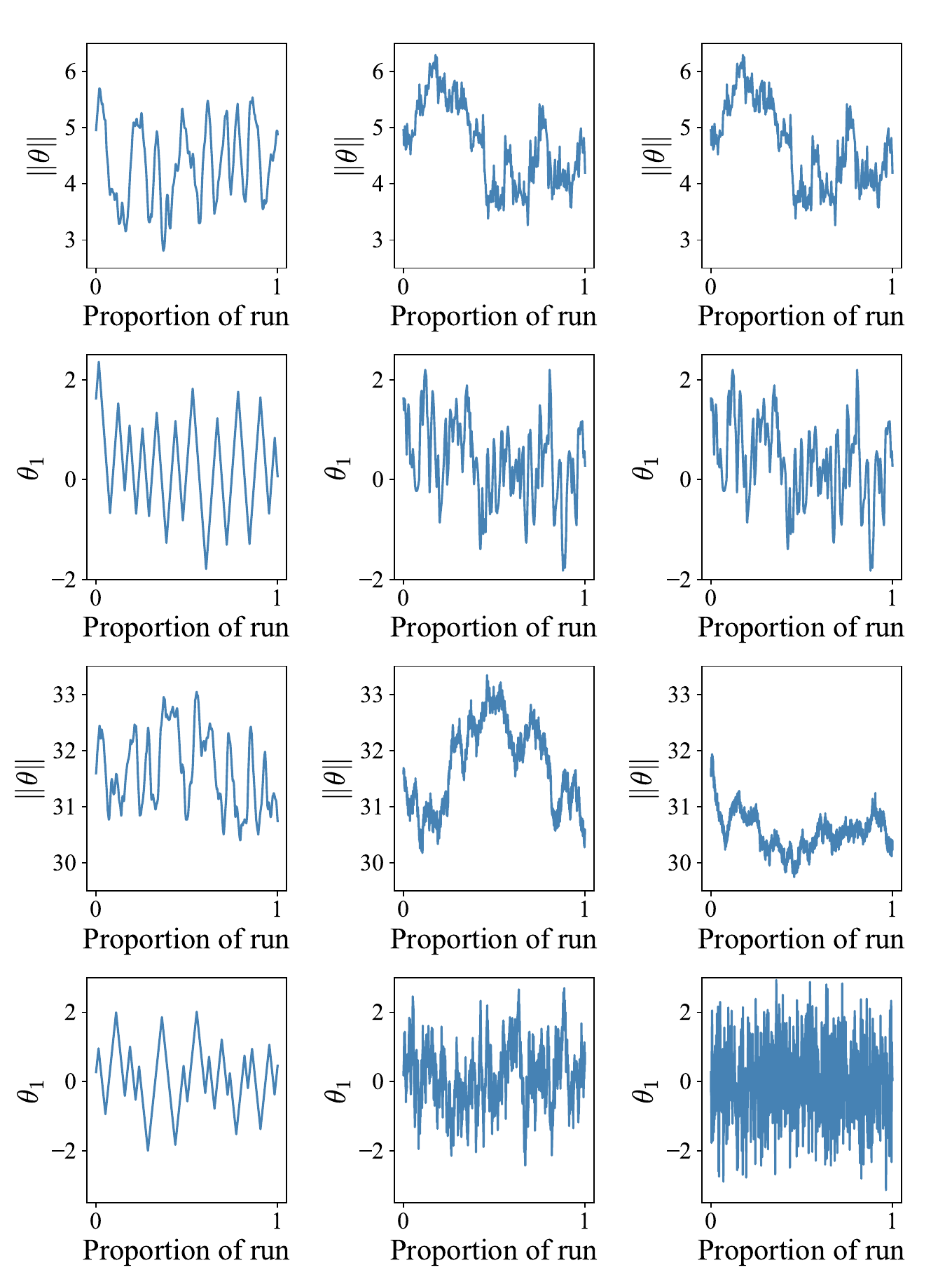}
    \caption{Trace plots for Zig--Zag (left-hand column), Bouncy Particle Sampler with  $\refr=1.5\sqrt{d/20}$ (middle column) and Bouncy Particle Sampler with  $\refr=1.5$ (right-hand column) for a Gaussian target with $d=20$ (top two rows) and $d=1000$ (bottom two rows). In each case we show trace plots for the radial component of the state, $\|\btheta\|$, and the first component of the state, $\theta_1$. We ran all samplers for $20d$ bounce events, and scaled the time axis by the proportion of the resulting simulation time. }
    \label{fig:ch6-scale}
\end{figure}

To see this in practice, we compared the performance of Zig--Zag\index{Zig-Zag sampler} and the Bouncy Particle Sampler at sampling from a Gaussian with $d=20$ and $d=1000$. Results are shown in Figure \ref{fig:ch6-scale}, where we scale the number of events simulated to be proportional to the dimension, $d$. The theory states that for this scaling\index{scaling} we would expect similar mixing for the Zig--Zag sampler over the length of the simulation. We observe this qualitatively for both $\|\btheta\|$ and $\theta_1$. The theory also suggests similar qualitative behaviour for the Bouncy Particle Sampler with a refresh rate that scales with $\sqrt{d}$, so that the proportion of refresh events is roughly similar for different $d$; we observe this in the middle column of the plot. By comparison, if we use a fixed refresh rate\index{refresh event/rate}, as in \cite{deligiannidis2021randomized}, then we observe better mixing for $\theta_1$ but worse mixing for $\|\btheta\|$ as we increase $d$ -- see the right-hand column of Figure \ref{fig:ch6-scale}.

One important consequence of the theoretical results, and that is seen in the results in Figure \ref{fig:ch6-scale}, is that care is needed when we assess mixing and convergence of, in particular, the Bouncy Particle Sampler -- as we can get substantially different measures of mixing, such as auto-correlation time\index{auto-correlation time|see{integrated auto-correlation time}}, \index{integrated auto-correlation time} depending on which function of the state we consider. Observing a fast mixing chain for one component may mask that other functions of the state are mixing very slowly (see Section \ref{subsec: conv control kernel} for more discussion on summarising convergence in multivariate settings and measures that can give differing importance to different coordinates). 

\cite{bierkens2023scaling} investigate fixed, finite-dimensional normal targets where between $1$ and $d-1$ principal components have a length scale of $\epsilon$, while the remainder have a length scale of $1$. Both algorithms have $O(\epsilon^{-1})$ events per unit time because the momentum is $O(1)$  but some length scales are $O(\epsilon)$. The Bouncy Particle Sampler mixes in $O(1)$ time; however, the alignment of the Zig--Zag Sampler is crucial to its mixing time: if the principal axes of the target are aligned with the $d$ Zig--Zag momentum components then it also mixes in $O(1)$ time, but in almost all other scenarios its mixing time is $O(\epsilon^{-1})$. Preconditioning is usually advised for any MCMC\index{Markov chain Monte Carlo} algorithm; this analysis highlights that the need for preconditioning the Zig--Zag Sampler\index{Zig-Zag sampler} is even more marked than it is for Bouncy Particle Samplers.
\index{bouncy particle sampler|)}
\index{mixing|)}
\section{Efficient Simulation of PDMP Samplers}

We now consider various approaches for simulating the PDMP samplers. 
As a running example that we will use to help explain some of the ideas, we will consider the logistic regression model with a Gaussian prior, which was introduced in Section \ref{sec:ch1-logistic}.
\index{piecewise deterministic Markov processes!output|)}

\subsection{Simulating PDMPs} \label{sec:ch6-simPDMPsamplers}
\index{piecewise deterministic Markov processes!simulation|(}

When sampling from the Gaussian target, the rates that determine the time until the next event were linear and thus we could simulate the event times of the PDMP exactly. 
What happens for more complicated targets where this is not possible? Here we describe three possible methods for simulating these events. 

The most common approaches for simulating events of a PDMP are based on the idea of Poisson thinning\index{Poisson thinning}
 (see Section \ref{sec:ch6-simulatingPDMPs}). Remember this involves upper-bounding the event rate, simulating events with this bounding rate, and then accepting the simulated events with the ratio of the true rate to the bounding rate. The challenge with Poisson thinning\index{Poisson thinning} is finding good upper bounds that are simple enough that we can simulate events analytically, and, ideally, close to the true rate, as the computational efficiency of Poisson thinning depends on how close the bounding rate is to the true rate. The first two methods we describe are based on this idea but differ in the assumptions they make on the target and how the bounding rates are constructed.

The first approach comes from \cite{Bierkens:2019} and assumes that the Hessian\index{Hessian} of the minus log-target is bounded. 
To simplify the notation, define the vector $\bU(\btheta)=-\nabla \log \pi(\btheta)$,
so $\bU=(U_1(\btheta),\ldots,U_d(\btheta))$ with 
\[
U_j(\btheta)=-\frac{\partial \log \pi(\btheta)}{\partial \theta_j}.
\]
Now the Hessian\index{Hessian} of $-\log \pi(\btheta)$ is a $d\times d$ matrix $\bH(\btheta)$ defined as
\[
\bH(\btheta)=\left(\begin{array}{ccc} 
\nabla U_1(\btheta) &
\cdots &
\nabla U_d(\btheta)
\end{array}\right).
\]
Then we assume that there is some matrix $\bJ$ such that for any vector, $\bw$, and any $\btheta$,
\[
\bw^\top \bH(\btheta) \bw \leq \bw^\top\bJ \bw.
\]
This holds for Gaussian target distributions, as $\bH(\btheta)=\bH$ is a constant. More importantly, it holds for targets which are heavier-tailed than Gaussian, including the posterior distribution\index{posterior distribution} for logistic regression or many versions of robust regression if we have e.g. Gaussian priors on the parameters of the model.

Now, as introduced before, consider a rate for the next event, or the next specific type of event, in a PDMP, $\tilde{\lambda}_{\bz}(t)$, in terms of the further time until the event $t$. Assume that 
\[
\tilde{\lambda}_{\bz}(t) = \max\{0, \bw \cdot \bU(\btheta+\bp t)\},
\]
where $\bU=-\nabla \log \pi$ as defined above, the current state is $(\btheta,\bp)$, and $\bw$ is some vector that depends on the sampler. For the Bouncy Particle Sampler\index{bouncy particle sampler} or the Coordinate Sampler\index{coordinate sampler}, if we are considering time until the next non-refresh event, then $\bw=\bp$, whereas for the Zig--Zag Sampler\index{Zig-Zag sampler} if we are considering the next flip of component $j$ then $\bw$ will be either the unit vector with $1$ or $-1$ in the $j$the component and zero elsewhere. 

Now consider the term $\bw \cdot \bU(\btheta+\bp t)$.
This can be rewritten as
\begin{eqnarray*}
\bw \cdot \bU(\btheta+\bp t)  &=&\bw \cdot \bU(\btheta)+ \sum_{i=1}^d w_i \int_0^t \frac{\mbox{d} U_i(\btheta+\bp s)}{\mbox{d}s} \mbox{d}s\\
&=& \bw \cdot \bU(\btheta)+\sum_{i=1}^d w_i \int_0^t \bp \cdot \nabla U_i(\btheta+\bp s) \mbox{d}s\\ 
&=&\bw \cdot \bU(\btheta)+\int_0^t \bw^\top \bH(\btheta+\bp s)\bp \mbox{d}s.
\end{eqnarray*}
The first equality comes from writing the value of a function at time $t$ as its value at time $0$ plus the integral of its derivative from time $0$ to time $t$. We then use the chain rule to get the derivative of $U_i(\btheta+\bp s)$ with respect to $s$, and finally the definition of the Hessian matrix.

Now, by Cauchy--Schwarz\index{Cauchy-Schwarz} for vectors, 
$\bw^\top\bH\bp\leq \|\bw\|\|\bH\bp\|$, where $\|\bw\|=(\sum_{i=1}^d w_i)^{1/2}$ is the $L_2$ norm of the vector $\bw$. 
This, together with our assumption on the bound of the Hessian, gives 
\[\bw^\top\bH(\btheta+\bp s)\bp \leq \|\bw\|\|\bH(\btheta+\bp s)\bp\|
\leq \|\bw\|\|\bJ\bp\|
\] 
which is a constant. 
Thus we get the linear bound
\[
\bw \cdot \bU(\btheta+\bp t) \leq \bw \cdot \bU(\btheta) + \|\bw\|\|\bJ\bp\| t,
\]
or equivalently the piecewise linear bound on the rate
\[
\tilde{\lambda}_{\bz}(t) \leq \max \{0, \bw \cdot \bU(\btheta) + \|\bw\|\|\bJ\bp\| t\}.
\]
We can simulate events from this upper-bounding rate analytically, in an equivalent way to which we simulate the events for the Gaussian target. If $\bw\neq\bp$, we can use instead $\bw^\top\bH\bp\leq \|\bp\|\|\bH\bw\|$ to get the alternative bound
\[
\tilde{\lambda}_{\bz}(t) \leq \max \{0, \bw \cdot \bU(\btheta) + \|\bp\|\|\bJ\bw\| t\}.
\]

Our second approach \cite[]{sutton2023concave} is based on a different assumption for the target, namely that we can decompose $-\nabla \log \pi(\btheta+\bp t)$, as a function of $t$ for any $\btheta$ and $\bp$ into the sum of convex and concave functions\index{concave-convex decomposition}, and assume the concave function is differentiable everywhere. A function $f_{\cup}(t)$ for $t\geq 0$ is a convex function if for any $0\leq r <s <t$, we have
\[
f_{\cup}(s) \leq \frac{t-s}{t-r}f_{\cup}(r)+  \frac{s-r}{t-r}f_{\cup}(t),
\]
while a function $f_{\cap}(t)$ for $t\geq 0$ is a concave function if $-f_{\cap}(t)$ is convex. 
That is if we pick any two points on the function and join them by a straight line, then the function lies below the line if it is convex, and above the line if it is concave. See Figure \ref{fig:ch6_CC} for an example.

We will assume that we can decompose the rate until the next event as
\[
\tilde{\lambda}_{\bz}(t) = \max\{0,f_{\cup}(t)+f_{\cap}(t)\},
\]
i.e. in terms of a single convex and single concave function -- though the ideas below apply trivially if our decomposition involves multiple concave and convex functions. 
(In fact, the sum of convex functions is convex, and the sum of concave functions is concave, so we can immediately simplify the decomposition to the case we are considering.)  Our starting point is the bound
\begin{equation} \label{eq:ch6-CC}
\tilde{\lambda}_{\bz}(t) = \max\{0,f_{\cup}(t)+f_{\cap}(t)\} \leq \max\{0,f_{\cup}(t)\} + \max\{0,f_{\cap}(t)\}.
\end{equation}
\cite{sutton2023concave} then use the fact that we can bound $f_{\cup}(t)$ and $f_{\cap}(t)$ by piecewise linear functions just by evaluating the functions at a set of grid points. Once we have these piecewise linear bounds, they immediately give us a piecewise linear bound on $\tilde{\lambda}_{\bz}(t)$ by substituting them into (\ref{eq:ch6-CC}).

How do we get piecewise linear bounds on $f_{\cup}(t)$ and $f_{\cap}(t)$? It is simplest to see this through a picture -- see Figure \ref{fig:ch6_CC}. For the convex function, the bound comes immediately from the definition: if we evaluate $f_{\cup}(t)$ at $t=0$ and $t=t_1$, then the straight line that joins these points gives an upper bound on $[0,t_1]$. For a concave function, we need to evaluate $f_{\cap}(t)$ and its derivative at $t=0$ and $t=t_1$. We then construct the tangents to $f_{\cap}(t)$ at $t=0$ and $t=t_1$, and the function lies below both tangents. 

\begin{figure}
    \centering
    \includegraphics[width=\textwidth]{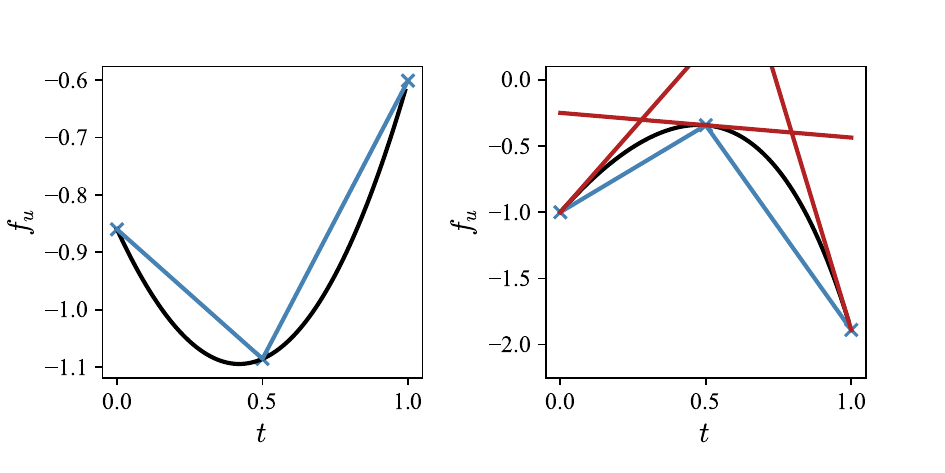}
    \caption{Example of a convex function (left) and a concave function (right). For a convex function, the straight line (shown in grey) that joins any two points will upper-bound the function between those two points. For a concave function, the straight line (shown in grey dashed) that joins any two points will lower-bound the function between those two points. For a concave function, we can upper-bound the function by any tangent to the function (shown in grey). Example bounds for $[0,0.5]$ and $[0.5.1]$ are shown by the grey line (left-hand plot) and minimum of the grey lines (right-hand plot).}
    \label{fig:ch6_CC}
\end{figure}

The above approach gives upper bounds on some interval $[0,t_1]$, and we can proceed by using these upper bounds to simulate events, if any, on $[0,t_1]$. If there are no events, we then choose some $t_2>t_1$ and calculate an upper bound on $[t_1,t_2]$ and repeat the process. In practice, \cite{sutton2023concave} suggest choosing $t_1,t_2,\ldots$ to be equally spaced, and suggest ways of choosing the spacing in an adaptive way that balances the number of times we do not simulate an event from the bounding process on an interval, against the number of times we simulate events from the bounding process that are not accepted. The idea is that if the intervals are too low, we waste time by having too small an interval and thus having to calculate the upper bound, whereas if the interval is too large then the upper bound can become loose and we waste time by simulating lots of events that are rejected. It is also possible to recycle calculations used to decide whether to accept an event to improve the bounds. 

The final set of methods we will overview is based on simulating the events using numerical methods\index{numerical integration}. There have been two distinct approaches that have been suggested. The first is based on the equation for directly simulating the event times. Remember that we can simulate the next event time for a process with rate $\tilde{\lambda}_{\bz}(t)$ by simulating $u$, a realisation of a standard uniform random variable, and then finding $\tau$ the solution to
\[
- \int_0^\tau \tilde{\lambda}_\bz(s)\mbox{d}s = \log(1-u).
\]
The smallest solution, $\tau$, of this equation, is the time until the next event. \cite{pagani2020nuzz} suggest solving this equation numerically using Brent's method \cite[]{press2007numerical}. The other approach is to use numerical methods to find an upper bound. \cite{corbella2022automatic} suggest such a method, where they use Brent's algorithm to find the maximum of $\tilde{\lambda}_{\bz}(t)$ on some interval $[0,t_1]$, then propose points from a constant rate set to this maximum and use Poisson thinning\index{Poisson thinning}. If no event is simulated over the interval $[0,t_1]$ they repeat the process on the next interval.

The advantage of using such numerical methods is that they are fully general, that is they can be applied to any model in an automatic manner. The disadvantages are two-fold. First, computationally they can be slow, depending on the numerical methods used. The second is that numerical errors may lead to errors in the simulation of the dynamics of the PDMP, so that the resulting PDMP may no longer target the correct distribution. The hope is that any numerical error is small so that the PDMP will have a stationary distribution\index{stationary distribution} that is still close to the target distribution. \cite{pagani2020nuzz} give results on how numerical error will impact the distribution that the PDMP is sampling from.

\paragraph{{\em Example: Bounded Hessian for Logistic Regression}}\index{logistic regression}

Logistic regression is one example where we have a bounded Hessian\index{Hessian}. To see this, let $\pi(\btheta)$ be the posterior for the logistic regression model of Section \ref{sec:ch1-logistic} with a Gaussian prior. Then differentiating (\ref{eq:ch1-logisticgradient}) gives
\[
-\frac{\partial^2 \log \pi(\btheta)}{\partial \theta_i\partial \theta_l} = \left[\bSigma_{\btheta}^{-1}\right]_{i,l} +\sum_{j=1}^N x_j^{(i)}x_j^{(l)}\left\{\frac{\exp\{\bx_j^\top\btheta\}}{1+\exp\{\bx_j^\top\btheta\}}\right\}  \left\{ \frac{1}{1+\exp\{\bx_j^\top\btheta\}}\right\},
\]
where the subscripts $i,l$ denote the $(i,l)$th element of the corresponding matrix. 
Now, for any probability $q$ we have $q(1-q)\leq 1/4$, so 
\[
\frac{\partial^2 \log \pi(\btheta)}{\partial \theta_i\partial \theta_l} \leq \left[\bSigma_{\btheta}^{-1}\right]_{i,l} +\frac{1}{4} \sum_{j=1}^N x_j^{(i)}x_j^{(l)}.
\]
If we introduce a $N\times d$ matrix $\bX$ whose $(j,l)$th entry is $x_j^{(l)}$, then this gives a bound on the Hessian of $-\log \pi(\btheta)$ that is $ \bJ=\bSigma^{-1}+(1/4)\bX^\top\bX$.  \index{Hessian}

For the Bouncy Particle Sampler\index{bouncy particle sampler}, the rate of the next bounce event, if the current state is $(\btheta,\bp)$, is 
\[
\max\{0,\bp\cdot\nabla(-\log\pi(\btheta+t\bp)\} \leq \max\{0,-\bp\cdot \nabla(\log\pi(\btheta) + \|\bp\|\|\bJ\bp\|t\},
\]
by the above argument. For the Zig--Zag Sampler, the rate of the next flip of component $i$ of the velocity is bounded above by, for example,
\[
\max\left\{0,
-p_{i} \frac{\partial \log \pi(\btheta)}{\partial \theta_{i}} +\sqrt{d}\|\bJ\be_i\|t
\right\}.
\]

\paragraph{{\em Example: Concave--convex\index{concave-convex decomposition} Sampling for Bayesian Matrix Factorisation}}

Consider the Bayesian matrix factorisation\index{Bayesian matrix factorisation} model of Section \ref{sec:ch1-BMF}. To simplify notation we will assume an improper uniform prior for the parameters $\btheta=\{\bU,\bV\}$ and set $\sigma^2=1$. The resulting log-posterior is
\[
\log\pi(\bU,\bV|\bY)=
-\frac{1}{2} \left\{ \sum_{i=1}^n\sum_{j=1}^m \left(Y_{ij}- \sum_{k=1}^d U_{ik}V_{kj}\right)^2
\right\}.
\]
As we will see, the event rates for this model for the Zig--Zag\index{Zig-Zag sampler} Sampler or the Bouncy Particle Sampler\index{bouncy particle sampler} are polynomials of the time to an event, and such events can be simulated by concave--convex sampling. We will show this for the Zig--Zag Sampler, but the extension to the Bouncy Particle Sampler is simple.

Consider the rate for update $U_{i,l}$. This depends on the derivative of minus $\log \pi$, which is
\[
-\frac{\partial \log\pi(\bU,\bV|\bY)}{\partial U_{i,l}} = \sum_{j=1}^m \left(Y_{i,j}- \sum_{k=1}^d U_{i,k}V_{k,j}\right) V_{l,j}.
\]
If the current state is given by a position $(\bU,\bV)$ and a velocity $(\dot{\bU},\dot{\bV})$ then the rate of an event as a function of the time to the next event $t$ is
\begin{eqnarray*}
\lefteqn{-\dot{U}_{i,l} \frac{\partial \log\pi(\bU+t\dot{\bU},\bV+t\dot{\bV}|\bY)}{\partial U_{i,l}} }\\
&=& 
\dot{U}_{i,l}  \sum_{j=1}^m \left(Y_{i,j}- \sum_{k=1}^d (U_{i,k}+t\dot{U}_{i,k})(V_{k,j}+t\dot{V}_{k,j})\right) (V_{l,j}+t\dot{V}_{l,j}).
\end{eqnarray*}
This is cubic in $t$, and it is easy to obtain a concave--convex decomposition\index{concave-convex decomposition} of this rate. The convex function will be the sum of terms in the cubic expression that have positive coefficients, and the concave function will be the sum of terms with negative coefficients. 
\index{piecewise deterministic Markov processes!simulation|)}

\subsection{Exploiting Model Sparsity}
\label{sec.ZZsparse}
One advantage of PDMP samplers is that they can take advantage of a certain type of sparsity in the target distribution to speed up computation. This is most easily described for the Zig--Zag Sampler\index{Zig-Zag sampler}, though similar ideas can be used for an adapted version of the Bouncy Particle Sampler \cite[see the section on the Local Bouncy Particle Sampler in][]{bouchard2018bouncy}.

We have described how we can simulate the Zig--Zag Sampler by simulating $d$ event times, one for each possible transition of the velocity. We then implement the event that occurs first, and then resimulate the further times for each of the $d$ possible events. We repeat this process multiple times in order to simulate the skeleton of a realisation of the process. However, we can improve on this implementation if the event that occurs does not affect the rates of many of the other types of events. This can happen if the model has a form of sparsity in terms of
\begin{equation} \label{eq:ch6-ZZrate}
\frac{\partial \log \pi(\btheta)}{\partial \theta_{i}} 
\end{equation}
only depending on a small number of the components of $\btheta$. 

To describe the idea formally, it is helpful to introduce some notation. For $i=1,\ldots,d$ let $\mathcal{S}_i \subset \{1,\ldots,d\}$ denote the components of $\btheta$ that (\ref{eq:ch6-ZZrate}) depends on. So if $j$ is not in $\mathcal{S}_i$, then (\ref{eq:ch6-ZZrate}) will not change as we vary $\theta_j$ if we keep all other components of $\btheta$ fixed. This means that if we have an event that changes the $i$th component of $\btheta$, then this will only affect the future rate of events that change the $j$th component of $\btheta$ for $j\in\mathcal{S}_i$. By the Markov property of the PDMP, if the rates are unchanged, we can re-use the simulated event times for $j \notin \mathcal{S}_i$ if $j\neq i$. We obviously need to re-simulate the event that flips the $i$th component of the velocity even if $i\notin \mathcal{S}_i$. The resulting algorithm is shown in Algorithm \ref{alg:ZZ_sparse}, with the key part being that after each event we only re-simulate some of the event times, for other events we just update and re-use the previously simulated times. 

\begin{algorithm}[h]
    \caption{Zig--Zag Sampler: Exploiting Sparsity}\index{Zig--Zag sampler}
    \KwIn{Event rates for each type of event $\lambda_{i}$, initial state $(\btheta,\bp)$, simulation time $T$.
  
    }
    Set $s=0$ and $k=0$.\\
    \For{$i=1,\ldots,d$}{
       Simulate $t_i$ the time until the next event that flips the $i$th component of the velocity. 
    }
    \While{s<T}{
    Calculate further time to next event $t=\min_{i=1,\ldots,d}\{t_i\}$.\\
    Update position $\btheta_{s+t}=\btheta_s+t\bp$.\\
    Decide on event type, $i^*=\arg\min \{t_i\}$.\\
    Update velocity $\bp_{s+t}=F_{i^*}(\bp)$.\\
    Update time $s=s+t$. \\
    Store skeleton points: set $k=k+1$, $\tau_k=s$ and $\btheta_{\tau_k}=\btheta_s$.\\
    Update further time to events:\\
     \For{$i=1,\ldots,d$}{
        \If{$i\in \mathcal{S}_{i^*} \cup \{i^*\}$}{
       Simulate $t_i$ the time until the next event that flips the $i$th component of the velocity.}
       \Else{
       Set $t_i=t_i-t$.
       }
    }
    }
    \KwOut{Skeleton of events $\{(\tau_k,\btheta_{\tau_k})\}_{k=0}^n$}
    \label{alg:ZZ_sparse}
\end{algorithm}

As one example of the potential advantage of this algorithm, consider simulating a Gaussian target where the precision matrix is tri-diagonal. That is, the  $(i,j)$ entry of the precision matrix is zero if $|i-j|>1$. Such a model occurs if there is some form of Markov or AR(1) structure to the components of $\btheta$. In this case, $\mathcal{S}_i=\{i-1,i,i+1\}$ for $i=2,\ldots,d-1$, with $\mathcal{S}_1=\{1,2\}$ and $\mathcal{S}_d=\{d-1,d\}$. To understand the idea of Algorithm \ref{alg:ZZ_sparse}, imagine $d=5$ say, and that we have simulated that the further time to the five possible events is $0.3$, $0.7$, $1.3$, $\infty$ and $0.5$. The event that occurs first corresponds to flipping the first component of the velocity. This change only affects the rate at which future events that affect the first and second components of the velocity occur. So we need to re-simulate the further time to the next event of these types. For the other three types of events, we just update the further time to take account of the fact that a time of length $0.3$ has passed. Thus the events at which the third to fifth components of the velocity change will now occur after a further time period of $1.0$, $\infty$ and $0.2$, respectively.

In terms of the computational advantage of this scheme, in this example, if $d$ is large then the computational cost per iteration involves resampling at most 3 event times rather than $d$ event times. It is also possible to further improve on Algorithm \ref{alg:ZZ_sparse} by using the fact that the order of occurrence of events that we do not re-simulate will not change \cite[see][]{bouchard2018bouncy} -- and this can reduce the cost per iteration to be of the order of the number of event times that we need to re-simulate.

\paragraph{{\em Example: Logistic Regression}}\index{logistic regression}

When would this idea be useful for sampling from the posterior of our logistic regression model? The rate of an event that flips component $i$ of the velocity depends on the $\theta_i$ derivative of $\log \pi(\btheta)$, which from (\ref{eq:ch1-logisticgradient}), is 
\[
-\left[\btheta^\top\bSigma_{\btheta}^{-1}\right]_{i} + \sum_{j=1}^N x_j^{(i)}\left\{y_j - \frac{\exp\{\bx_j^\top\btheta\}}{1+\exp\{\bx_j^\top\btheta\}}\right\}.
\]
We need this to depend on only a small set of components of $\btheta$. This would require two things. First that $\bSigma_{\btheta}^{-1}$ is sparse so only a small number of entries of the $i$th row or column of $\bSigma_{\btheta}^{-1}$ are non-zero. Second, we would require that the observations for which $\theta_j^{(i)}\neq 0$ would, combined, only have a small number of components of the covariates that are non-zero. Formally, we can define the set of rates that we would need to update after a flip of component $i$ of $\btheta$ as
\[
\mathcal{S}_{i}=\left\{ 
k : (\bSigma^{-1})_{i,k}\neq 0,\mbox{ or } \exists ~ j \mbox{ such that } \bx_j^{(i)}\bx_j^{(k)}\neq0 
\right\}.
\]

This can happen for models with random effects which are included within $\btheta$. If we set the random effects to be $\theta_{1},\ldots,\theta_{N}$, then for $j\in \{1,\ldots,N\}$, $\bx_j^{(j)}=1$ and $\bx^{(i)}_j=0$ for $i\neq j$. If further, we have that the random effects are independent of each other and the other parameters, $\mathcal{S}_i$ will only include the parameters of the fixed effects.

\subsection{Data Subsampling Ideas} \label{ch5:sec-SS}

One potential advantage of PDMP samplers in Bayesian statistics is that they can use subsampling ideas to reduce the computational cost per iteration. This was first suggested for the Zig--Zag Sampler\index{Zig-Zag sampler} by \cite{Bierkens:2019}, though the ideas apply more widely. 

The starting point is a more general observation that we can potentially simulate from a target distribution if we have an unbiased estimator of $\nabla \log \pi$. This is most easily seen for the Zig--Zag Sampler, and we will focus just on this case for simplicity. See \cite{Fearnhead:2018} and related ideas for the local Bouncy Particle Sampler in \cite{bouchard2018bouncy} for how this is generalised to other PDMPs.

The Zig--Zag Sampler\index{Zig-Zag sampler} has $d$ possible types of event. Consider the $i$th such event.
If the current position is $\btheta$ and the $i$th component of the velocity is $p_{i}$, then this component flips with a rate
\[
\max\left\{0, -p_{i}\frac{\partial \log \pi(\btheta)}{\partial \theta_{i}}
\right\}.
\]
The key property of this rate that means that the sampler targets $\pi(\btheta)$ is that the difference in rate between a flip from $p_{i}$ to $-p_{i}$ and the rate of the reverse event is
\[
\max\left\{0, -p_{i}\frac{\partial \log \pi(\btheta)}{\partial \theta_{i}}
\right\} - \max\left\{0, p_{i}\frac{\partial \log \pi(\btheta)}{\partial \theta_{i}} \right\}= -p_{i}\frac{\partial \log \pi(\btheta)}{\partial \theta_{i}}.
\]
In practice, for Zig--Zag, one of the two rates is equal to 0, and this is the most efficient choice and corresponds to the Zig--Zag Sampler using the canonical rates (see Section \ref{sec:ch6-ZZ}). 

Now imagine we have a family of vector-valued random variables $\bG(\btheta)$, such that $\Expect{\bG(\btheta)}=-\nabla \log \pi(\btheta)$ for all $\btheta$. Then if we implement the same dynamics as the Zig--Zag\index{Zig-Zag sampler} Sampler, but with the rate of flipping the $i$th component of the velocity equal to
\[
\Expect{\max\{0,p_{i}G_{i}(\btheta)\}},
\]
then this will also produce a PDMP sampler that targets $\pi$. To see this, as above, consider the difference in the rate of flipping $p_{i}$ to $-p_{i}$ and the rate of the reverse event. This is
\begin{eqnarray*}
\lefteqn{
\Expect{\max\{0,p_{i}G_{i}(\btheta)\}} - \Expect{\max\{0,-p_{i}G_{i}(\btheta)\} } } \\ &=&
\Expect{\max\{0,p_{i}G_{i}(\btheta)\}-\max\{0,-p_{i}G_{i}(\btheta)\} } \\
&=& \Expect{p_{i}G_{i}(\btheta)} = p_{i}\Expect{G_{i}(\btheta)} = -p_{i}\frac{\partial \log \pi(\btheta)}{\partial \theta_{i}},
\end{eqnarray*}
where we have used the standard result $\max\{0,x\}-\max\{0,-x\}=x$, linearity of expectation, and the definition of the expectation of $\bG$. This is precisely the condition we need on the rates for a PDMP Sampler with the Zig--Zag dynamics to target $\pi$, the only difference is the rates being used are no longer the canonical rates.

In order to use the Zig--Zag Sampler with these rates we need a way of simulating the events. This is more challenging than for standard Zig--Zag\index{Zig-Zag sampler} as we need to deal with the rates being defined implicitly by an expectation. To do this, the standard approach is to find a bounding $b_i(\btheta)$ such that for any realisation of $\bG(\btheta)$ we have $p_{i}G_{i}(\btheta)\leq b_i(\btheta)$. 
If we can find such a bound then we can still use Poisson thinning\index{Poisson thinning} to simulate the events. Let the time until the next event which flips $p_{i}$ be
\[
\tilde{\lambda}^{(i)}_{\bz}(t)=\Expect{\max\{0,p_{i}G_{i}(\btheta+t\bp)\}},
\] 
and the corresponding bounding rate, which is used to simulate potential events, be  $b_i(\btheta+t \bp)$. 
Then Poisson thinning\index{Poisson thinning} for events of rate $\tilde{\lambda}^{(i)}_{\bz}(t)$ is possible by using the following steps:
\begin{itemize}
\item[(T0)] Set current time to $s=0$
\item[(T1)] Simulate the time $\tau>s$ of the next event a process with rate $\bar{\lambda}(t)=b_i(\btheta+t \bp)$.
\item[(T2)] Simulate $g_{i}$, a realisation of $G_{i}(\btheta+\tau \bp)$.
\item[(T3)] Accept the event time with probability $\max\{0,p_{i}g_{i}\}/b_i(\btheta+t\bp)$. Otherwise set $s=\tau$ and return to (T1).
\end{itemize}
To see that this is a valid Poisson thinning\index{Poisson thinning} algorithm to simulate events with rate $\tilde{\lambda}^{(i)}_{\bz}(t)$, we just need to calculate the probability of accepting an event in step (T3). By averaging over the possible realisation of $g_i$ in step (T2) and using the fact that by definition for any $g_i$, the probability in step (T3) is less than or equal to 1, this is
\[
\Expect{\max\{0,p_{i}G_{i}\}/b_i(\btheta+t\bp)}=
\frac{\Expect{\max\{0,p_{i}G_{i}\}}}{b_i(\btheta+t\bp)} =
\frac{\tilde{\lambda}^{(i)}_{\bz}(t)}{b_i(\btheta+t\bp)},
\]
as required. 

How does this idea relate to the use of subsampling\index{subsampling}? Consider $\pi$ being a posterior distribution\index{posterior distribution}, and suppose that $\log \pi$ can be written as a sum
\[
\log \pi (\btheta) =\sum_{j=1}^N \log \pi_j(\btheta),
\]
where $\log \pi_j$ for $j=1,\ldots,N$ is $1/N$ times the log-prior plus the log-likelihood contributions from the $j$th data point. Then this gives a simple way of constructing an unbiased estimator\index{gradient!subsampling estimator} of $-\nabla \log \pi(\btheta)$, by simulating $I$ uniformly on $\{1,\ldots,N\}$ and setting $\bG(\btheta)$ to $-N\nabla \log \pi_I(\btheta)$. 

The advantage of using such an unbiased estimator within the Zig--Zag\index{Zig-Zag sampler} Sampler is that at each iteration of the Poisson thinning\index{Poisson thinning} algorithm used to simulate an event, i.e. step (T2) and (T3) above, we need to process only one data point. This gives a per-iteration saving of a factor of $N$ over the standard Zig--Zag Sampler which requires calculating derivatives of $\log \pi$. However, there are additional costs to using this subsampling idea. First, often the bounds that we use for  Poisson thinning\index{Poisson thinning} will be larger if we use subsampling -- as they have to bound the rate for all possible realisations of $g_{i}$. This will lead to more iterations of the Poisson thinning\index{Poisson thinning} algorithm to simulate the PDMP for the same amount of (stochastic process) time. Second, as we are no longer using the canonical rates we will introduce more events, and this will lead to more random-walk-like behaviour and slower mixing\index{mixing}. Empirical results in \cite{Bierkens:2019} suggest that the overall effect of these is to counteract the factor of $N$ improvement in per-iteration cost.

So can subsampling ideas within PDMPs be beneficial? It turns out they can, but we must use a better, i.e. lower variance, estimator for $\bG$. This can be done using control variate\index{control variate}\index{gradient!control variate estimator} ideas that are common in SGLD\index{stochastic gradient Langevin dynamics} (see Section \ref{sec:ch3-sgld-grad-var}). 
We first run an optimisation algorithm, such as SGD, to find the mode or a value close to the mode of $\log \pi$. Denote this value by $\widehat{\btheta}$. Then we can write
\[
\log \pi (\btheta) = \log\pi(\widehat{\btheta})+ \sum_{j=1}^N \{\log \pi_j(\btheta) -\log \pi_j(\widehat{\btheta})\}.
\]
So an unbiased estimator can be obtained by first sampling $I$ uniformly on $\{1,\ldots,N\}$ and then setting $\bG(\btheta)$ to 
\[
-\log\pi(\widehat{\btheta})-N\{\nabla \log \pi_I(\btheta) - \nabla \log \pi_I(\widehat{\btheta})\}.
\]
Importantly the term $-\log\pi(\widehat{\btheta})$ is a constant, so requires a single up-front $O(N)$ cost to calculate it. Then evaluating a realisation of this estimator has $O(1)$ cost. If the \index{Hessian}Hessian of $-\log \pi$ is bounded, then \cite{Bierkens:2019} show that we can obtain the bounds needed to simulate the Zig--Zag Sampler using this unbiased estimator using the linear bounds described for bounded Hessian targets in Section \ref{sec:ch6-simPDMPsamplers}. In this case, for a fixed accuracy of the final Monte Carlo sample, we can obtain a speed-up by a factor of $N$, after finding $\widehat{\btheta}$ and calculating $-\log\pi(\widehat{\btheta})$, relative to the standard Zig--Zag Sampler\index{Zig-Zag sampler}.

\paragraph{\em{Example: Subsampling for Logistic Regression}} \index{logistic regression}

To further explain how to implement the Zig--Zag Sampler with subsampling, we will consider the logistic regression model. To simplify exposition we will assume that we have an improper flat prior, so in the notation above $\pi_0(\btheta)\propto 1$. Thus we can drop the contribution of the prior to $\pi$ and we have $\pi(\btheta)\propto \prod_{j=1}^N \pi_j(\btheta)$ with
\[
\pi_j(\btheta) = \left(  \frac{\exp\{y_j\bx_j^\top\btheta\}}{1+\exp\{\bx_j^\top\btheta\}} \right),
\]
where $y_j$ is the binary response and $\bx_j$ is the vector of covariates for the $j$th observation.

Taking the first derivatives of $-\log \pi(\btheta)$ gives
\[
-\frac{\partial \log \pi_j(\btheta)}{\partial \theta_i} =
x_j^{(i)}\left\{ \frac{\exp\{\bx_j^\top\btheta\}}{1+\exp\{\bx_j^\top\btheta\}} - y_j \right\}. 
\]
Using $0\leq\exp(a)/(1+\exp(a))\leq1$, we have that the modulus of this derivative is bounded by $x_j^{(i)}$. Thus if we use the unbiased estimator
\[
G_{i}(\btheta)= -
N\frac{\partial \log \pi_I(\btheta)}{\partial \theta_i}, ~~ \mbox{$I$ uniformly distributed on $\{1,\ldots,N\}$}, 
\]
then we can bound the rate of an event by 
\[
b_i(\btheta)=N\max_{j=1,\ldots,N} |x_j^{(i)}|.
\]
In the following, we will call the resulting sampler Zig--Zag with subsampling.\index{Zig-Zag sampler}

How about if we use control variates\index{control variate}? Fix $\widehat{\btheta}$, and consider the estimator of the gradient\index{gradient} \index{gradient!control variate estimator}
\[
G^{(CV)}_{i}(\btheta)= -\frac{\partial \log \pi(\widehat{\btheta})}{\partial \theta_i} - 
N\left(\frac{\partial \log \pi_I(\btheta)}{\partial \theta_i}-
\frac{\partial \log \pi_I(\widehat{\btheta})}{\partial \theta_i}
\right),
\]
with again, $I$ is uniformly distributed on $\{1,\ldots,N\}$. To obtain appropriate bounds for $\bp^{(j)}\bG^{(j)}_{CV}(\btheta)$, consider the second derivatives
\[
-\frac{\partial^2 \log \pi_j(\btheta)}{\partial \theta_i\partial \theta_l} =
x_j^{(i)}\bx_j^{(l)}\left\{ \frac{\exp\{\bx_j^\top\btheta\}}{(1+\exp\{\bx_j^\top\btheta)\})^2} \right\}. 
\]
As before using that $0\leq\exp(a)/(1+\exp(a))^2\leq 1/4$, we can bound the modulus of this second derivative by $(1/4)|\bx_j^{(i)}\bx_j^{(l)}|$. This gives the following bound 
\begin{eqnarray}
\lefteqn{\left|\frac{\partial \log \pi_j(\btheta+t\bp)}{\partial \theta_i} - \frac{\partial \log \pi_j(\widehat{\btheta})}{\partial \theta_i} \right|} \nonumber \\
&\leq& 
\left|\frac{\partial \log \pi_j(\btheta+t\bp)}{\partial \theta_i} -\frac{\partial \log \pi_j(\btheta)}{\partial \theta_i}
\right| + 
\left|\frac{\partial \log \pi_j(\btheta)}{\partial \theta_i} -\frac{\partial \log \pi_j(\widehat{\btheta})}{\partial \theta_i}
\right|  \nonumber \\
& \leq &
\frac{1}{4}|x_j^{(i)}|\lVert\bx_i\rVert
\left(\lVert \btheta-\widehat{\btheta}\rVert + t\lVert \bp \rVert \right), \label{eq:ch6-subsamplingbound}
\end{eqnarray}
where $\lVert\cdot\rVert$ denotes the Euclidean norm. The latter inequality comes from (i) bounding the change in a function by the size of the change in the argument times a bound on the gradient\index{gradient} in the direction of the change; and (ii) as the gradient\index{gradient}  $\partial \log \pi_j/\partial \theta_i$ is bounded by $(1/4)|\bx_j^{(i)}\bx_j^{(l)}|$ in the $l$th direction, we can bound the dot-product of the gradient with a vector $\bv$ by
\begin{eqnarray*}
\sum_{l=1}^d \frac{1}{4}|x_j^{(i)}x_j^{(l)}v_l| &=&
\frac{1}{4} |x_j^{(i)}| \sum_{l=1}^d  |x_j^{(l)}v_{l}| 
\leq \frac{1}{4} |x_j^{(i)}| \lVert\bx_i\rVert \lVert\bv\rVert,
\end{eqnarray*}
with the last step using Cauchy--Schwarz\index{Cauchy-Schwarz}.

Using (\ref{eq:ch6-subsamplingbound}), we get the following linear bound on the rate of events when we use control variates\index{control variate}\index{gradient!control variate estimator}. 
\begin{eqnarray*}
\lefteqn{\max\{0,p_{i}G_{i}^{(CV)}(\btheta+t\bp)\} }
\\ &\leq&  
\max\left\{0,-p_{i} \frac{\partial \log \pi(\widehat{\btheta})}{\partial \theta_i} + \frac{1}{4} N M_i 
\left(\lVert \btheta-\widehat{\btheta}\rVert + t\lVert \bp \rVert \right)\right\},
\end{eqnarray*}
where $M_i=\max_{j=1,\ldots,N} (|x_j^{(i)}| \lVert\bx_j\rVert )$.  In the following, we will call the resulting sampler Zig--Zag with control variates.\index{Zig-Zag sampler}

\begin{figure}
    \centering
    \includegraphics[width=\textwidth]{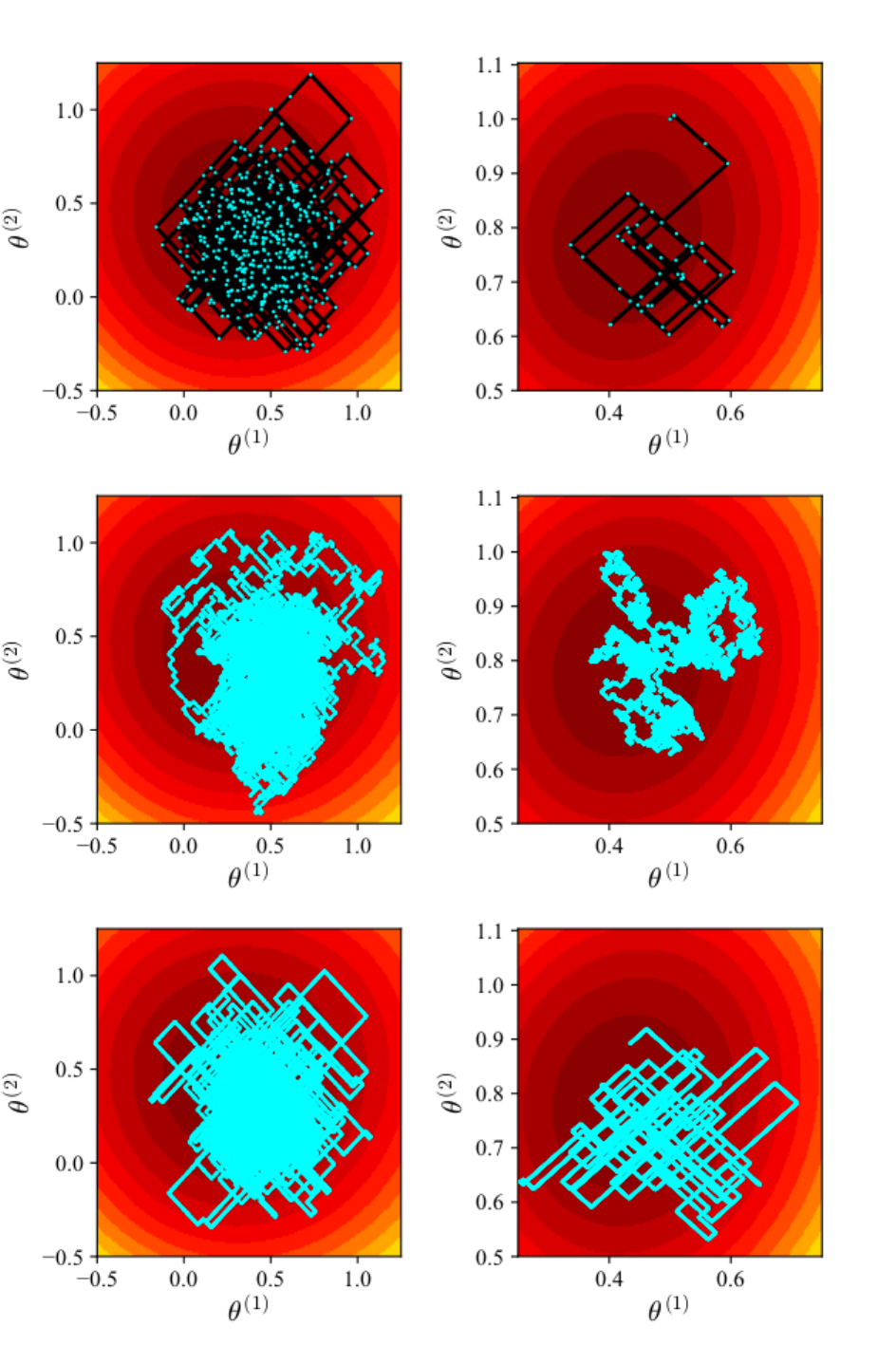}
    \caption{Example output of Zig--Zag Samplers for logistic regression: standard Zig--Zag (top row), Zig--Zag with subsampling (middle row) and Zig--Zag with control variates (bottom row). For each plot, the heat map shows the contours of $\log\pi$, the black line shows the trajectory of Zig--Zag and the blue dots show the points where we propose a possible event. In all cases, we ran Zig--Zag for the same number of data-point gradient calculations. As the top row does not involve subsampling, this means we propose $N$-times as many events for the middle and bottom rows as for the top row. Results for $N=100$ (left-hand column) and $N=900$ (right-hand column).}
    \label{fig:ch6_ZZSS}
\end{figure}

To demonstrate how subsampling works in practice and the scaling\index{scaling} of the methods with sample size we will compare three PDMP samplers for logistic regression\index{logistic regression} with sample sizes of $N=100$ and $N=900$. These are the standard Zig--Zag Sampler\index{Zig-Zag sampler}, Zig--Zag with subsampling and Zig--Zag with control variates\index{control variate}. In particular, we want to give intuition about the different impacts of the computational cost for estimating gradients\index{gradient}, the efficiency of the Poisson thinning\index{Poisson thinning} bounds on simulating events, and the mixing\index{mixing} of the PDMP on the three algorithms.

Results are shown in Figure \ref{fig:ch6_ZZSS}. There are a number of points to draw out. First, we see the potential advantage of subsampling in reducing the cost per iteration of the samplers. If this is dominated by accessing and calculating gradients\index{gradient} for each data point, then the subsampling versions of Zig--Zag are able to propose many more event times (as seen by the larger number of blue dots for the middle and bottom rows). However, this in itself does not lead to a more accurate MCMC\index{Markov chain Monte Carlo} method for the same computational cost -- as the drawback of subsampling is that the bounds used for Poisson thinning\index{Poisson thinning} are worse. About two proposed events are needed for one actual event with standard Zig--Zag, but this reduces to over 10 proposed events for the two samplers which use subsampling.

One impact of this is, if we consider $N=100$ then the standard Zig--Zag process is simulated for stochastic process time of 80 time units, and this is only increased to 130 time units for Zig--Zag with subsampling and 160 time units for Zig--Zag with control variates. Moreover, the Zig--Zag processes which use subsampling have a higher overall rate of events, particularly when control variates are not used. This leads to more random-walk behaviour for Zig--Zag with subsampling (see middle row) which worsens the mixing\index{mixing} of the sampling. This is less of an issue when we use control variates.

Next, consider the scaling\index{scaling} of the algorithms by comparing $N=100$ with $N=900$. First, the posterior for $N=900$ is more concentrated, but if we re-scale axes (as has been done in the plots) then the posterior contours and the dynamics of standard Zig--Zag are similar for the two cases. However, if we have fixed the computational resource, then the impact on standard Zig--Zag is that we can only simulate the process for a shorter period -- in this case simulating 9 times fewer actual events. For Zig--Zag with subsampling, we are able to propose the same number of events, but the higher rate of events and the looser bound for Poisson thinning\index{Poisson thinning} means that we are only able to simulate a trajectory that is only 4 times as long as for the standard Zig--Zag, despite proposing 900 times as many events. Moreover, we see that the trajectory is increasingly diffusive and thus the overall exploration of the state-space of this sampler qualitatively looks no better than for standard Zig--Zag. This property is shown more rigorously by \cite{Bierkens:2019}, who show that the scaling with $N$
of the standard Zig--Zag and Zig--Zag with subsampling is similar. Moreover, their empirical results suggest that for the same computational cost, standard Zig--Zag is more accurate.

By comparison, we see better mixing\index{mixing} behaviour for Zig--Zag with control variates. The sampler is able to simulate a trajectory that is approximately ten times as long as the standard Zig--Zag. Moreover, the mixing looks qualitatively similar to that of the algorithm for $N=100$ and to that of the standard Zig--Zag for $N=900$. This is again shown more rigorously in \cite{Bierkens:2019}, where results suggest that the accuracy as measured, say, by effective sample size\index{effective sample size} per CPU cost scales by a factor of $N$ better for Zig--Zag with control variates than for the other two samplers. However, Zig--Zag\index{Zig-Zag sampler} with control variates does need an additional pre-processing step to find the mode, or a value near the mode, of the posterior and to calculate the gradient\index{gradient} of the log posterior at this estimate of the mode. This improved scaling\index{scaling} has been termed \textit{super-efficiency}, and the fact that we can only achieve super-efficiency after a pre-processing step is shown theoretically by \cite{johndrow2020no}.

\section{Extensions}

\index{piecewise deterministic Markov processes!simulation|(}

The PDMP samplers we have considered so far are appropriate for sampling from continuous densities that are differentiable almost everywhere and are based on specific constant velocity dynamics. We now describe recent work at generalising the samplers: to allow sampling from discontinuous targets; introduce reversible-jump moves to allow sampling from targets defined across spaces of different dimension; and generalise the velocity space and the constant velocity dynamics.

\subsection{Discontinuous Target Distribution} \index{discontinuous target}

The PDMP samplers we have described can sample from target densities that are differentiable everywhere. It is also easy to see that they are suitable for densities that are non-differentiable, providing the set of points where the density is not differentiable is a null set. However, as described, they cannot be used for densities that are not continuous everywhere.

It is possible to extend PDMP samplers so they are suitable for many discontinuous target\index{discontinuous target} densities. The idea is to use standard PDMP dynamics in regions where the target is continuous, and then add additional dynamics whenever the PDMP sampler reaches a point of discontinuity. This approach has been suggested by \cite{bierkens2018piecewise} for continuous densities, but defined only on a compact region, and by \cite{chevallier2021pdmp} in more generality. We will outline the basic idea and give some examples of appropriate dynamics at points of discontinuity for different samplers.

Assume our target density $\pi(\btheta)$ can be defined in terms of a set of continuous densities, $\pi^{(i)}$, each constrained to a set of open regions $E_i$ of $\mathbb{R}^d$, for $i=1,\ldots, K$ for some $K$. Let $\Gamma$ be the set of $\btheta$ points that lie on the boundary of one or more regions, and assume this is a null-set with respect to Lebesgue measure of $\mathbb{R}^d$. We assume that $E_i$ and $\Gamma$ partition $\mathbb{R}^d$, so that any $\btheta\in \mathbb{R}^d$ lies in precisely one of $E_1,\ldots,E_K,\Gamma$. For $\btheta\notin\Gamma$, our target is
\[
\pi(\btheta)= \pi^{(i)}(\btheta) ~~ \mbox{ for } \btheta\in E_i.
\]
The simplest example of such a density will be for $\btheta$ constrained to some compact region, $E_1$. In this case we have $\Gamma$ as the boundary of $E_1$ and $E_2$ is the complement of $E_1 \cup \Gamma$, with $\pi^{(2)}(\btheta)=0$ for $\btheta\in E_2$. 

The idea is that we can define a PDMP sampler that is appropriate for each $\pi^{(i)}$, but need to now define what happens when the $\btheta$ component of the state tries to leave $E_i$. To explain this, whilst keeping the notation simple, we will consider the case of $K=2$ regions, and describe conditions for the PDMP sampler on the boundary between $E_1$ and $E_2$ that are sufficient for it to target $\pi(\btheta)$ as defined above. Extending this to $K>2$ is trivial as we just apply the same conditions to each boundary between two regions. (We do not need to consider behaviour at points that lie on the boundary between three or more regions as the sampler will not hit such points with probability 1.)

For $\btheta\in\Gamma$, let $\bn(\btheta)$ be the normal to the boundary and assume that the normal is defined to point out of region $E_1$. Assume we have a PDMP sampler with velocity set $\mathcal{V}$ and with stationary distribution\index{stationary distribution} for the velocity component that is independent of $\btheta$ and is denoted by $\pi_{\bp}$. Once we hit the boundary, the velocity will determine whether the state is moving out of $E_1$ and into $E_2$ or vice-versa. To distinguish these two possibilities, define for each $\btheta\in\Gamma$
\[
\mathcal{V}_{\btheta}^+=\left\{ \bp\in\mathcal{V}: \bn(\btheta)^\top\bp>0 \right\}, \mbox{ and }~~
\mathcal{V}_{\btheta}^-=\left\{ \bp\in\mathcal{V}: \bn(\btheta)^\top\bp<0 \right\}. 
\]
So, for example, if the state is $(\btheta,\bp)$ for $\btheta\in\Gamma$ and $\bp \in \mathcal{V}^+_{\btheta}$ then the sampler was in region $E_1$ and is moving into $E_2$.

Now define a family of probability density, or probability mass, functions for $\bp\in\mathcal{V}$, as
\[
\ell_{\btheta}(\bp)=\left\{
\begin{array}{cl}
\left|\bn(\btheta)^\top\bp \right| \pi_{\bp}(\bp) \pi^{(2)}(\btheta) & \mbox{ if } \bp\in\mathcal{V}^+_{\btheta} \\
\left|\bn(\btheta)^\top\bp \right| \pi_{\bp}(\bp) \pi^{(1)}(\btheta) & \mbox{ otherwise}
\end{array}
\right.
\]
This is just proportional to the density $\pi_{\bp}(\bp)$ weighted by the size of the velocity in the direction of the normal $\bn(\btheta)$ and weighted by the density at $\btheta$ in the region that the sampler is moving to.

Finally, define a family of transition kernels\index{kernel!Markov transition} for the velocity component, $\mathbb{Q}_{\btheta}^b(\bp'\in \cdot |\bp)$ for each $\btheta\in\Gamma$. The following theorem, taken from \cite{chevallier2021pdmp}, gives appropriate dynamics for our PDMP sampler on the boundary.
\begin{theorem} \label{thm:ch6-boundary}
Assume $\pi_{\bp}$ is symmetric, so $\pi_{\bp}(\bp)=\pi_{\bp}(-\bp)$, and that for each $\btheta\in\Gamma$ the transition kernel $\mathbb{Q}_{\btheta}^b$ has $\ell_{\btheta}$ as its invariant distribution\index{invariant distribution}. Then a PDMP sampler with:
\begin{itemize}
    \item[(i)] dynamics for $\btheta\in E_i$, for $i=1,2$, that have invariant distribution\index{invariant distribution} $\pi^{(i)}(\btheta)\pi_{\bp}(\bp)$; and
    \item[(ii)] for $\btheta\in\Gamma$ has a transition that keeps $\btheta$ unchanged but that:
    \begin{itemize}
    \item[(B1)] flips the velocity $\bp'=\bp$; 
    \item[(B2)] updates the velocity according to  $\mathbb{Q}_{\btheta}^b$, i.e. $\bp''\sim \mathbb{Q}_{\btheta}^b(\cdot,\bp')$; and
    \item[(B3)] updates the state of the PDMP to $(\btheta,\bp'')$;
    \end{itemize}
\end{itemize}
will have invariant distribution\index{invariant distribution} $\pi(\btheta)\pi_{\bp}(\bp)$. 
\end{theorem}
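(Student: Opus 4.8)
The plan is to verify the characterisation \eqref{eq:PDMPinvariant} of an invariant distribution, but now for a PDMP that has a boundary set $\Gamma$. As noted in Section \ref{sec:ch6-generator}, when a PDMP has boundaries the generator is unchanged away from $\Gamma$; the boundary dynamics affect the invariant distribution only by restricting the domain of the generator to those test functions whose expectation is unaffected by what happens on $\Gamma$. So the structure of the argument is: (a) on each open region $E_i$, assumption (i) guarantees that the ``bulk'' contribution to $\int (\mathcal{L}h)(\bz)\tilde\pi(\bz)\,\md\bz$ vanishes, exactly as in the proofs of Theorems \ref{thm:CS}--\ref{thm:BPS}; (b) the only new term is a surface integral over $\Gamma$ that arises when one integrates the transport term $-\sum_i \partial(\phi_i \tilde\pi)/\partial z_i$ by parts over $E_1$ and $E_2$ separately (the divergence theorem produces a flux through $\Gamma$ with opposite signs from the two sides); (c) one shows that the boundary transition (B1)--(B3) is precisely designed so that this net surface flux is zero. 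Those not wanting the full PDMP machinery can read the claim as: the bulk dynamics preserve $\pi^{(i)}(\btheta)\pi_{\bp}(\bp)$ on each piece, and the boundary kernel preserves the ``flux-weighted'' density $\ell_{\btheta}$, so no probability mass leaks across $\Gamma$.

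Concretely, first I would fix a test function $h(\btheta,\bp)$ in the domain and write $\int (\mathcal{L}h)\tilde\pi = \int_{E_1} + \int_{E_2}$, applying the divergence theorem to the constant-velocity transport term on each region. With the CV dynamics $\bphi=(\bp,\mathbf 0)$, the flux through $\Gamma$ from the $E_1$ side is $\int_\Gamma (\bn(\btheta)\cdot\bp)\,\pi^{(1)}(\btheta)\pi_{\bp}(\bp)\,h(\btheta,\bp)\,\md\sigma(\btheta)\,\md\bp$, with $\bn$ the outward normal of $E_1$; the $E_2$ side contributes the same integrand with $\pi^{(1)}$ replaced by $\pi^{(2)}$ and the opposite sign on the normal. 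Splitting the velocity integral over $\mathcal{V}_{\btheta}^+$ and $\mathcal{V}_{\btheta}^-$ and using $|\bn\cdot\bp|$, one sees that the \emph{total} boundary term equals $\int_\Gamma \md\sigma(\btheta)\int_{\mathcal V}\ell_{\btheta}(\bp)\big[h(\btheta,\Psi(\bp))-h(\btheta,\bp)\big]\,\md\bp$ up to sign bookkeeping, where I use that (B1) flips $\bp$ into the set corresponding to the region being entered and the symmetry $\pi_{\bp}(\bp)=\pi_{\bp}(-\bp)$ to match the weights. The requirement for $\tilde\pi$ to be invariant is that this integral vanishes for all admissible $h$, i.e.\ that the flux in equals the flux out at $\Gamma$; but by assumption $\mathbb{Q}^b_{\btheta}$ has $\ell_{\btheta}$ as its invariant distribution, which says exactly that $\int \ell_{\btheta}(\bp)\,\mathbb{Q}^b_{\btheta}(\md\bp'|\bp)=\ell_{\btheta}(\bp')\,\md\bp'$, and hence $\int \ell_{\btheta}(\bp)\big[\,\overline h(\bp)-h(\btheta,\bp)\,\big]\,\md\bp=0$ where $\overline h$ denotes the post-transition average. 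Combining (a)--(c), $\int(\mathcal Lh)\tilde\pi=0$ for every $h$ in the domain, which is the defining property of the invariant distribution.

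The main obstacle, and the step I would spend the most care on, is the bookkeeping in (b): getting the signs and the domains $\mathcal{V}_{\btheta}^\pm$ right so that the two one-sided fluxes combine into the single $\ell_{\btheta}$-weighted expression, and checking that the momentum flip (B1) correctly routes a velocity that \emph{left} $E_1$ into the ``entering $E_1$'' branch of $\ell_{\btheta}$ after the flip — this is where the design of the algorithm is doing real work and where a naive treatment would drop a factor or a sign. A secondary technical point is the implicit appeal to the domain characterisation: strictly, one must argue (as in \cite{chevallier2021pdmp}) that the functions for which the boundary term vanishes are a rich enough class — equivalently, that conditions (B1)--(B3) are not only sufficient for flux balance but also consistent with the generator's domain — and I would cite \cite{chevallier2021pdmp} for the rigorous version of inverting this informal argument, just as the surrounding text does for \eqref{eq:PDMPinvariant} in the boundary-free case.
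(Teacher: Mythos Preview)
The paper does not actually prove this theorem; it is stated as a result ``taken from \cite{chevallier2021pdmp}'' and no proof is given in the text. So there is nothing to compare your proposal against within the paper itself.

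That said, your sketch is the correct and standard route, and it is essentially how the cited reference proceeds: split $\int(\mathcal{L}h)\tilde\pi$ over $E_1$ and $E_2$, apply the divergence theorem on each piece so that the constant-velocity transport term throws off surface fluxes on $\Gamma$, and then verify that the boundary rule (B1)--(B3) makes the net flux vanish precisely because $\mathbb{Q}^b_{\btheta}$ leaves the flux-weighted density $\ell_{\btheta}$ invariant. Your diagnosis that the delicate step is the sign and $\mathcal{V}_{\btheta}^{\pm}$ bookkeeping --- in particular that the flip (B1) together with the symmetry $\pi_{\bp}(\bp)=\pi_{\bp}(-\bp)$ is what converts an outgoing-from-$E_1$ velocity into one weighted by the $\pi^{(1)}$ branch of $\ell_{\btheta}$ --- is exactly right, and your plan to defer the domain-of-generator subtleties to \cite{chevallier2021pdmp} matches how the surrounding text handles \eqref{eq:PDMPinvariant}. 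One small note: the paper's statement of (B1) contains a typo (``$\bp'=\bp$'' should read ``$\bp'=-\bp$''); you have correctly read it as a velocity flip.
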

Steps (B1) -- (B3) of the theorem give appropriate dynamics for the velocity when we hit the boundary, with (B2) stated in terms of a transition kernel\index{kernel!Markov transition} that has $\ell_{\btheta}$ as its invariant distribution\index{invariant distribution}. 

There are various possible choices of dynamics due to different choices for the transition kernel. A trivial choice for $\mathbb{Q}_{\btheta}^b$ is the identity map. In this case, the transition at the boundary is to reverse the sign of the velocity, which means that the sampler will never leave the region that it starts with. Thus this choice is only suitable for the case where we start the sampler in $E_1$ and where $\pi^{(2)}(\btheta)=0$, i.e. there is no probability mass in $E_2$. Even in this case, this choice may not be a good one, as it will force the sampler to retrace its steps once it hits the boundary, which will slow down mixing\index{mixing}.

An alternative choice is to define $\mathbb{Q}_{\btheta}^b$ to be independent of the current velocity, and just to involve sampling from $\ell_{\btheta}$. The problem with this is that sampling from $\ell_{\btheta}$ may be difficult. For both the Coordinate Sampler\index{coordinate sampler} and the Zig--Zag Sampler\index{Zig-Zag sampler}, $\ell_{\btheta}$ is a discrete distribution, and can be calculated exactly. For the Coordinate Sampler\index{coordinate sampler}, $\bp$ can take $2d$ possible values, and this approach can be reasonable. However, for the Zig--Zag Sampler, $\bp$ can take $2^d$ possible values, and thus calculating and simulating from $\ell_{\btheta}$  is prohibitive unless $d$ is small. In cases where $\ell_{\btheta}$ is difficult to sample from one can instead use Metropolis--Hastings to define a transition kernel\index{kernel!Markov transition} that has the required invariant distribution\index{invariant distribution}. This involves proposing a new velocity from an arbitrary proposal\index{proposal} distribution and then accepting or rejecting that proposal. The problem with this is that if the acceptance probability\index{acceptance probability} is not high then we are likely to reject the proposal, and our new velocity will just be minus the velocity with which we hit the boundary, from step (B1), which will inhibit mixing\index{mixing}. A partial solution is to define $\mathbb{Q}_{\btheta}^b$ to involve $L>1$ Metropolis--Hastings steps, though this comes with an increased computational cost of sampling from $\mathbb{Q}_{\btheta}^b$.

For the Bouncy Particle Sampler\index{bouncy particle sampler}, there is a simple and very natural choice of dynamics at the boundary which satisfies the condition of Theorem \ref{thm:ch6-boundary}. If $\bp\in\mathcal{V}^+_{\btheta}$, so we are currently moving out of $E_1$ then the dynamics are: 
\begin{itemize}
\item[(R1)] With probability $\min\{1,\pi^{(2)}(\btheta)/\pi^{(1)}(\btheta)\}$ the velocity is unchanged, i.e. $\bp''=\bp$;
\item[(R2)] Otherwise, we reflect the velocity in the tangent to the boundary at $\btheta$, i.e. using the notation introduced for reflection\index{reflection}s $\bp''=\cR_{\bn(\btheta)}(\bp)$.
\end{itemize}
If $\bp\in\mathcal{V}^-_{\btheta}$, then the dynamics are as above but with $\pi^{(1)}$ and $\pi^{(2)}$ interchanged in (R1). Under these dynamics, if the sampler is moving to a region of higher probability density, it continues. If not, then with some probability it continues, otherwise it reflects back. A special case where we have $\pi(\btheta)$ defined only on the compact region $E_1$, so that $\pi^{(2)}(\btheta)=0$, in which case the sampler will always reflect if it hits the boundary.

We cannot apply similar dynamics for the Zig--Zag Sampler unless the boundary aligns appropriately with the velocity axes, as the reflected velocity in step (R2), $\cR_{\bn(\btheta)}(\bp)$, may no longer be a valid velocity. However, \cite{chevallier2021pdmp} show that the above dynamics for the Bouncy Particle Sampler can be viewed as the behaviour of the Bouncy Particle Sampler if we approximate the discontinuous target\index{discontinuous target} by a continuous one, by smoothing out the discontinuity, but then consider the limit where we allow the transition between $\pi^{(1)}$ and $\pi^{(2)}$ to occur more quickly. By the same strategy, we can construct an appropriate transition kernel\index{kernel!Markov transition} for Zig--Zag\index{Zig-Zag sampler}, see \cite{chevallier2021pdmp} for more details.

\paragraph{\em{Example: Logistic Regression with Constraints}}\index{logistic regression}

As an example application for this algorithm, consider the logistic regression model but with constraints on the parameters. There are two natural constraints, one is that we may know that the effect of a covariate is such that larger values will increase, say, the probability of observing a response of 1. If the $i$th component of the $\bx_j$s is such a covariate, then $\theta_i\geq 0$. Similarly, if an increase in the $i$th component of the covariate vector is known to lead to a decrease in the probability, then $\theta_i\leq 0$. Alternatively, we may have some constraints on the size of the effect, e.g. $\theta_i\geq \theta_l$.

For the Bouncy Particle Sampler\index{bouncy particle sampler}, the simplest way of including such a constraint is to calculate the time when the current deterministic dynamics will first violate the constraint. If an event does not happen by this time, then we just reflect the velocity off the boundary of $\btheta$ space implied by the constraint. For the constraint $\theta_i\geq 0$ or $\theta_i\leq 0$ this would involve flipping the $i$th component of the velocity. For the constraint $\theta_i\geq \theta_l$, this involves the reflection\index{reflection} $\cR_\bg$ with $\bg$ defined so $g_{i}=1$, $g_{l}=-1$ and all other entries set to zero. 

In this example, we can use the same adaption to the Zig--Zag Sampler\index{Zig-Zag sampler}, as for each constraint the reflection\index{reflection} at the boundary will produce a new velocity that is valid for the sampler. However, this would not be the case if, for example, we wanted to enforce a constraint such as $\theta_{1}>2\theta_{2}$.

\subsection{Reversible Jump PDMP Samplers}
\index{piecewise deterministic Markov processes!reversible jump|(}

Another class of distributions that the standard PDMP samplers are not suitable for are distributions defined on a set of spaces of different dimensions. In the following, we think of this in terms of a distribution over a discrete set of models, with a continuous density for each model. Whilst PDMPs can be used to explore the distribution defined for each model, we would need a way of moving between these models -- which would lead to a type of reversible jump version of PDMP samplers.

One approach to do this is to add additional events that introduce discrete jumps from one space to another. Such moves have been proposed for other continuous-time samplers, see \cite{grenander1994representations}, \cite{phillips1996bayesian}  and \cite{stephens2000bayesian}. However, they can be hard to implement due to challenges with simulating these moves with the correct rate.

A computationally more efficient procedure exists if the different models are defined in terms of some common $d$-dimensional parameter $\btheta$, but each model fixes one or more components of $\btheta$ to a specific value. The most common example of such a distribution is the posterior distribution\index{posterior distribution} for the coefficients of a linear or generalised linear regression model, where different models correspond to including different sets of covariates in the model. Thus we can define $\btheta$ to be the coefficients for the full set of covariates, and a given model will fix the coefficients of covariates not in the model to zero.

To motivate the form of the PDMP samplers we will introduce, consider first the case where we have a single covariate, so $d=1$. Also, ignore having any intercept in the linear or generalised linear model. We now have two models, depending on whether we include the covariate or not. If $\ell(\theta)$ denotes the likelihood as a function of $\theta$ and if we have a prior that is a mixture of point mass at $0$, which we will denote by $\delta_0(\theta)$, and a prior defined on $\mathbb{R}$, $\nu(\theta)$, then the posterior distribution\index{posterior distribution} is
\[
\pi(\theta) \propto w \delta_0(\theta)\ell(\theta)+ (1-w)\nu(\theta)\ell(\theta),
\]
where $w$ is the prior probability of excluding the covariate. Whilst we cannot use a standard PDMP to sample from this target, we can approximate the prior by replacing the point mass at zero with a distribution concentrated around zero. If we use a Gaussian density with variance $\sigma^2$ for $\sigma\approx 0$, then we have the target
\begin{equation} \label{eq:ch6-spikeslab}
\pi_{\sigma}(\theta) \propto w \mathsf{N}(\theta;0,\sigma^2)\ell(\theta)+ (1-w)\nu(\theta)\ell(\theta),
\end{equation}
where $\mathsf{N}(\theta;0,\sigma^2)$ denotes the density of a Gaussian with mean 0 and variance $\sigma^2$. We can now use a PDMP to sample from this posterior distribution.  Also by letting $\sigma\rightarrow0$ we have that this posterior will tend, in some sense, to the posterior with the point mass at 0. So a natural approach is to consider the dynamics of the PDMP targeting $\pi_{\sigma}$ and whether we get well-defined dynamics in the limit of $\sigma\rightarrow0$.

\begin{figure}
    \centering
    \includegraphics[width=\textwidth]{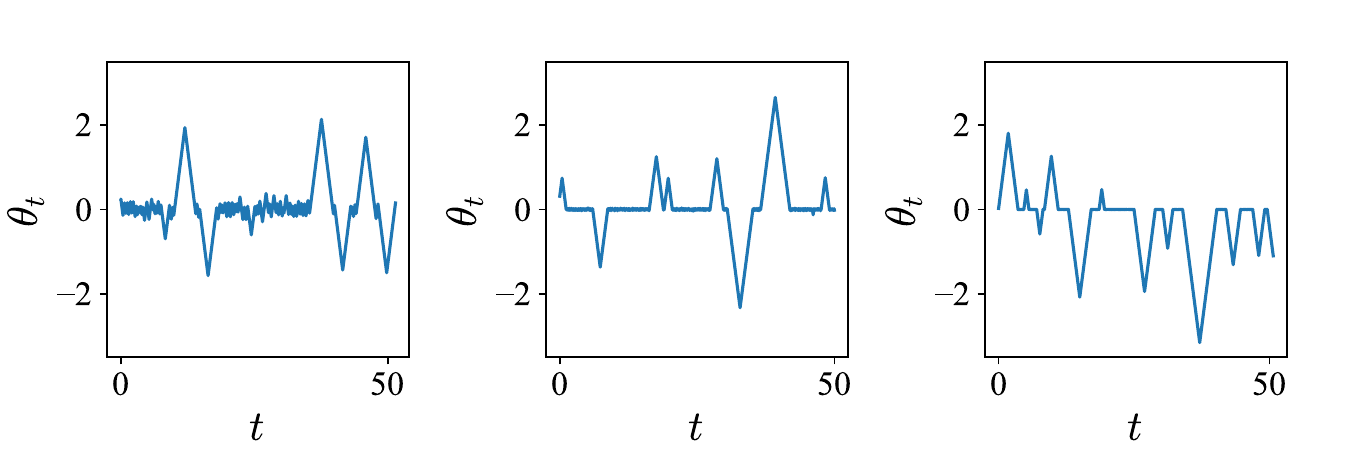}
    \caption{Example output from PDMP sampler for a target that is an equal mixture of a $\mathsf{N}(0,1)$ distribution and a $N(0,\sigma^2)$ distribution for $\sigma=0.1$ (left), $\sigma=0.01$ (middle) and $\sigma=0.001$ (right). As $\sigma\rightarrow0$ the trajectories qualitatively converge to periods at, or close to 0, and periods where the trajectory is governed by the  $\mathsf{N}(0,1)$ distribution. This limit is the form for the reversible jump PDMP sampler. The computational advantage of using the limiting dynamics is that it avoids simulating the larger number of events close to 0 -- which was of the order of 10000 for the right-hand plot.}
    \label{fig:ch6-rj1d}
\end{figure}

Figure \ref{fig:ch6-rj1d} shows the dynamics for three different values of $\sigma$. We see that in each case the sampler will spend periods of time in the neighbourhood of $0$. As we reduce $\sigma$, these neighbourhoods concentrate closer to 0 but the time the sampler spends in them seems to be similarly distributed. Away from the neighbourhood, the dynamics are those of sampling from a density proportional to $\nu(\theta)\ell(\theta)$. This suggests the limiting behaviour would be that of a PDMP targeting $\nu(\theta)\ell(\theta)$ but that if $\theta_t$ hits zero then the sampler stays at zero for a random amount of time. 

\cite{chevallier2022reversible} propose such a PDMP sampler, and show that if we specify the dynamics as given below it will target the correct distribution across models. To describe the sampler for a $d$-dimensional parameter $\btheta$, assume that 
\[
\pi(\btheta) = \sum_{k=1}^{2^d} \pi_{k}(\btheta),
\]
where we assume that each model $k$ corresponds to a different set of components of $\btheta$ being set to zero. Here, $\pi_k(\btheta)$ is, up to proportionality, the density of $\btheta$ associated with the $k$th model. These densities must be defined up to a common constant of proportionality across all models.

For model $k$, let $\mathcal{S}_k$ be the active set of model $k$, that is the set of indices of components of $\btheta$ that are non-zero for model $k$. The idea of the PDMP is that if we are currently exploring model $k$ we will simulate a PDMP that targets $\pi_k(\btheta)$, and that has non-zero velocity only for components of $\btheta$ that are in $\mathcal{S}_k$. But in addition, we allow two further events. The first is that if a component of $\btheta$ hits zero, and we denote this component by $i$, then we move to the model with active set $\mathcal{S}_k \backslash \{i\}$. The second is that for each $j\notin \mathcal{S}_k$ we have a rate of moving to the model with active set $\mathcal{S}_k \cup \{j\}$. If we move to such a model we do not change $\btheta$, but simulate a new velocity $\bp$, which will have a non-zero velocity for component $j$.

To define such a sampler we just need to specify the rate of adding a component to the model, and the distribution of the velocity after the transition. We will describe these for the Zig--Zag Sampler\index{Zig-Zag sampler} and the Bouncy Particle Sampler. For general results, and generalisations of this sampler which does not always move between models when a component of $\btheta$ hits zero, see \cite{chevallier2022reversible}.

For the Zig--Zag Sampler, assuming we are in model $k$ with current state $(\btheta,\bp)$, we have the following process for adding a variable to the model.
\begin{itemize}
\item[(Add--ZZ)] For each $j\notin \mathcal{S}_k$, move to model $k'$ with active set $\mathcal{S}_k\cup\{j\}$ at rate $\pi_{k'}(\btheta)/\pi_k(\btheta)$.
Set the new velocity, $\bp'$, such that $p'_{i}=p_{i}$ for $i\neq j$ and $p'_{j}$ is drawn uniformly at random from $\{-1,1\}$.
\end{itemize}
For the Bouncy Particle Sampler\index{bouncy particle sampler}, with standard Gaussian distribution for the velocity, again assuming we are in model $k$, with current state $(\btheta,\bp)$, we have the following 
\begin{itemize}
\item[(Add--BPS)] For each $j\notin \mathcal{S}_k$, move to model $k'$ with active set $\mathcal{S}_k\cup\{j\}$ at rate 
\[
\frac{2}{\sqrt{2\pi}}\frac{\pi_{k'}(\btheta)}{\pi_k(\btheta)}.
\]
Set the new velocity, $\bp'$, such that $p'_{i}=p_{i}$ for $i\neq j$ and $p'_{j}=x$ is simulated from a distribution with density function proportional to
\[
|x|\exp\left\{-\frac{1}{2}  x^2\right\},
\]
this is a standard normal density function scaled by $|x|$.
\end{itemize}
 The intuition for the density of the new velocity component for the Bouncy Particle Sampler is that it is skewed, relative to its invariant distribution\index{invariant distribution}, to larger absolute values of the velocity as these correspond to velocities that would hit zero more quickly. 

Importantly for both samplers, the rates at which we add components will often be simple. If our target distribution is defined as a posterior distribution\index{posterior distribution}, with common likelihood for each model, then the likelihood components of the posteriors will cancel and the rates will just depend on the ratio of priors. For many priors, the distribution of each component of $\btheta$ will be independent, in which case these rates become constant.

For the Zig--Zag Sampler\index{Zig-Zag sampler}, one can improve on this sampler by remembering the velocity of each inactive component prior to it becoming inactive. Then, when that component is re-introduced to the model we re-use the same velocity. This is the Sticky Zig--Zag Sampler of \cite{bierkens2023sticky}. It can mix better as it ensures that the dynamics of each component of $\btheta$ reflects less often.

\paragraph{\em{Example: Logistic Regression with Model Choice}}\index{logistic regression}

An extension to the logistic regression model of Section \ref{sec:ch1-logistic} is to include a choice as to which covariates to include in the model. We will consider two example priors, and calculate the rate of adding a covariate to the model for the Zig--Zag Sampler in each case.

A common prior would be to assume independence across covariates, so $\theta_i=0$ with probability $w_i$ and is drawn from a normal distribution with mean 0 and variance $\sigma^2_i$ with probability $1-w_i$. In this case, because of the independence, if covariate $i$ is not in the current model, the rate at which we add it will not depend on the value of $\theta_l$ for $l\neq i$. Thus, this rate will be constant and equal to 
\[
\frac{(1-w_i)}{w_i} \frac{1}{\sqrt{2\pi\sigma^2_i}},
\]
the ratio of the prior probability of a model which includes covariate $i$ to the prior probability of the same model without covariate $i$, times the prior probability density of $\theta_i=0$ under the former model.

What about when we have a prior under which components of $\btheta$ are dependent? Assume we are currently in model $k$ which does not include the $i$th covariate, and that adding this covariate will produce model $k'$. Let $q_k$ and $q_{k'}$ be the prior probability of the two models and assume that they both have Gaussian priors on $\btheta$ with mean 0 and covariance matrices on the active components of $\btheta$ denoted by $\bSigma_k$ and $\bSigma_{k'}$ respectively. Let $a_k$ be the number of active components of model $k$, with $a_{k'}=a_k+1$. We can introduce matrices $\bA_k$ and $\bA_{k'}$ so that the prior for the active components of $\btheta$ under our prior for model $k$ is
\[
\left(\frac{1}{2\pi}\right)^{a_k/2} \mbox{det}(\bSigma_k)^{-1/2} \exp\left\{ -\frac{1}{2} \btheta^{\top}\bA_k\btheta\right\}.
\]
This is possible by padding $\bA$ with zeroes, so if covariate $l$ is not in the model then $A_{lj}=A_{jl}=0$, for $j=1,\ldots,d$. 

With this definition of the prior, the rate of moving from model $k$ to $k'$ as a function of $\btheta$ becomes
\[
\frac{q_{k'}}{q_k} \left(\frac{1}{2\pi}\right)^{1/2} \left(\frac{\mbox{det}(\bSigma_{k})}{\mbox{det}(\bSigma_{k'})}\right)^{1/2}
\exp\left\{ -\frac{1}{2}  
\btheta^\top(\bA_{k'}-\bA_k)\btheta
\right\}.
\]
Define $C$ to be the constant before the exponential term. If our current state is $(\btheta,\bp)$ then the rate until we move to model $k'$ is
\begin{eqnarray*}
\lefteqn{C\exp\left\{ -\frac{1}{2}  
(\btheta^\top+t\bp)(\bA_{k'}-\bA_k)(\btheta+t\bp)
\right\} = } \\ & & C\exp\left\{ -\frac{1}{2}  
\btheta^\top(\bA_{k'}-\bA_k)\btheta
\right\}\exp\left\{ -  \btheta^\top(\bA_{k'}-\bA_k)\bp t -\frac{1}{2}  \bp^\top(\bA_{k'}-\bA_k)\bp t^2
\right\}.
\end{eqnarray*}
This is of the form $a\exp\{bt+ct^2\}$ for some constants $a$, $b$ and $c$. We can simulate the time of the next event with this rate if $c\leq0$ as the integral of the rate is analytic for $c=0$, and can be expressed in terms of probabilities of a normal distribution for $c<0$. For $c>0$, we can use Poisson thinning \index{Poisson thinning}with e.g. bounds of the form $A\exp\{Bt\}$ over suitable intervals for $t$. 

\index{piecewise deterministic Markov processes!reversible jump|)}

\subsection{More General Velocity Models}

Another possible way of extending PDMP samplers is to consider more general models for the dynamics. There are two simple ways of doing this, the first is to alter the distribution of the velocities for the Bouncy Particle Sampler or the Zig--Zag Sampler\index{Zig-Zag sampler} so that they are not spherically symmetric. The other is to consider non-constant velocity models. We will briefly describe each of these in turn.

First, we will focus on the Zig--Zag Sampler. If we let $\be_i$ be the $i$th unit vector, i.e. the vector whose $i$th component is 1 and all other components are 0, then the set of velocities of the Zig--Zig are the velocities of the form $\sum_{i=1}^d a_i \be_i$, where $a_i\in\{-1,1\}$ for $i=1,\ldots,d$. That is, they are the set of velocities that one obtains by adding plus or minus each unit vector. The rate of flipping $a_i$ is equal to the maximum of 0 and minus the dot product of $a_i\be_i$ with $\nabla \log \pi(\btheta)$.

To generalise this we just need to replace the unit vectors with another set of vectors that span $\mathbb{R}^d$. Denote this set by $\bp_1,\ldots,\bp_d$.  Importantly for Zig--Zag\index{Zig-Zag sampler}, the change to the process is trivial -- as the event rates are of a similar form but with $\be_1,\ldots,\be_d$ replaced with $\bp_1,\ldots,\bp_d$. There are two natural approaches to choosing $\bp_1,\ldots,\bp_d$. One is to just change the speed in each direction, so $\bp_i=c_i\be_i$ for some set of positive scalars $c_1,\ldots,c_d$. This can be helpful if different components of $\btheta$ under $\pi$ are on different scales. A natural choice is to set $c_i$ to be an estimate of the marginal standard deviation of $\btheta_i$ under $\pi$. The other approach is to also change the directions of the velocities as well. If we have an estimate of the variance-matrix of $\btheta$ under $\pi$, say $\bSigma$, then one choice is to choose the $\bp_i$s to be the eigenvectors of $\bSigma$. 

To see why this is a natural choice, consider $\pi(\btheta)$ being a Gaussian distribution with variance $\bSigma$. Centre this distribution so it has a mean of 0. If we use $\bp_1,\ldots,\bp_d$ as our basis for the velocities, and $\bp=\sum_{i=1}^d a_i\bp_i$, then the rate at which we flip $a_i$ is equal to
\[
\max\{0, -a_i\bp_i^\top\nabla \log \pi(\btheta+\bp t)\} =
\max\left\{0, -a_i\bp_i^\top\left(-\bSigma^{-1}(\btheta+\bp t)\right) \right\}.
\]
But using that $\bp_i$ is an eigenvector of $\bSigma$, and hence also of $\bSigma^{-1}$, and if we assume the corresponding eigenvalue of $\bSigma$ is $\gamma_i$, we have that
\[
-a_i\bp_i^\top\left(-\bSigma^{-1}(\btheta+\bp t)\right) =
a_i \frac{1}{\gamma_i} \bp_i^\top\left(\btheta+t\sum_{j=1}^d a_j \bp_j\right) t =
a_i \frac{1}{\gamma_i} \left(\bp_i^\top\btheta + a_i t\right).
\]
In this case, the event rate does not depend on the velocity in other components and essentially Zig--Zag\index{Zig-Zag sampler} will reduce to independent processes along each component, $\bp_i^\top\btheta$, of $\btheta$. 

A similar idea can be used to generalise the Bouncy Particle Sampler\index{bouncy particle sampler}. It is simplest to describe this for the case where the invariant distribution for $\bp$ is Gaussian, as the generalisation is to allow a non-identity covariance matrix for this invariant distribution. In the following, we will assume the invariant distribution\index{invariant distribution} is Gaussian with mean 0 and variance $\bSigma$.

As we change $\bSigma$, we have to change the reflection\index{reflection} events of the sampler. To see why, note that a key property of the reflection\index{reflection} event of the standard Bouncy Particle Sampler, wherein step (BPS2)
\[
\bp'=\cR_{\bg}(\bp),\mbox{ with }\bg=\nabla_{\btheta}\log \pi(\btheta),
\]
was that $||\bp'||_2^2=||\bp||_2^2$, so this transition does not change the density of the state under $\pi_{\bp}$. This is no longer the case if the variance of $\bp$ under $\pi_{\bp}$ is not a multiple of the identity.

So we need to generalise the reflection so that it depends on $\bSigma$. It turns out that the appropriate reflection is
\[
\cR^{\bSigma}_{\bg}(\bp) = \bp - \frac{2\bg^{\top}\bp}{(\bg^{\top}\bSigma\bg)} \bSigma\bg.
\]
Importantly, if $\bp'=\cR^{\bSigma}_{\bg}(\bp)$ for any $\bg$, then
\[
\bp'^{\top}\bSigma^{-1}\bp'=\bp^{\top}\bSigma^{-1}\bp,
\]
so it does not change the density under a Gaussian with variance $\bSigma$. Furthermore, we still have that if $\bg=\nabla \log \pi(\btheta)$ then
\[
\bp' \cdot \nabla \log \pi(\btheta) = -\bp \cdot \nabla \log \pi(\btheta),
\]
which is the other key requirement of the transition needed for the validity of the sampler. Using these two properties it is straightforward to show that the Bouncy Particle Sampler\index{bouncy particle sampler} with (BPS2) replaced by (BPS2') below will have $\pi(\btheta)\pi_{\bp}(\bp)$ as its invariant distribution\index{invariant distribution}, where $\pi_{\bp}(\bp)$ is the density of a Gaussian distribution with mean 0 and variance $\bSigma$.
\begin{itemize}
 \item[(BPS2')] {\em Transition at events.} At an event with probability $1-\refr/\lambda_{\text{BPS}}(\btheta,\bp)$, reflect the velocity 
\[
\bp'=\cR^{\bSigma}_{\bg}(\bp),\mbox{ with }\bg=\nabla_{\btheta}\log \pi(\btheta);
\]
otherwise sample a new velocity, $\bp'$ from a  normal distribution with mean 0 and variance $\bSigma$. The position is unchanged at an event.   
\end{itemize}

As above, a natural choice of $\bSigma$ to use in the distribution for the velocity is to choose it to be an estimate of the variance of $\btheta$ under $\pi$. Furthermore, for both the Bouncy Particle Sampler and Zig--Zag one can relate the choice of distribution on the velocity to running the canonical version of the PDMP but after applying a linear reparameterisation to the random variable of the distribution we wish to sample from. We will describe the link for the Bouncy Particle Sampler, though a similar argument applies to other PDMP samplers.

Consider the Bouncy Particle Sampler\index{bouncy particle sampler} for $(\btheta,\bp)$ with target distribution $\pi(\btheta)$ and a standard Gaussian distribution for $\bp$. For some invertible matrix $\bL$ define $\bpsi=\bL\btheta$, and consider the dynamics of the PDMP but viewed in terms of $\bpsi$. If $\btheta$ is drawn from $\pi(\btheta)$, and $\bpsi=\bL\btheta$, then the density of $\bpsi$ is $\pi_{\bpsi}(\bpsi)\propto \pi(\bL^{-1}\bpsi)$, as the Jacobian\index{Jacobian} of the transformation is constant. If we consider derivatives then
\[
\frac{\partial \log \pi_{\bpsi}(\bpsi)}{\partial \psi_{i}}
=
\frac{\partial \log \pi(\bL^{-1}\bpsi)}{\partial \psi_{i}}
=
\sum_{j=1}^d \frac{\partial \log \pi(\bL^{-1}\bpsi)}{\partial \theta_{j}} \left(\bL^{-1}\right)_{ji}.
\]
This is just the $i$th entry of $\bL^{-\top}\nabla_{\btheta} \log \pi(\bL^{-1}\bpsi) $, which gives that 
\begin{equation} \label{eq:ch6-nablapsi}
\nabla_{\bpsi}\pi_{\bpsi}(\bpsi) =  \bL^{-\top}\nabla_{\btheta} \log \pi(\bL^{-1}\bpsi).
\end{equation}

Now let us consider the dynamics of the Bouncy Particle Sampler\index{bouncy particle sampler} in $\bpsi$ space. We will consider each aspect of the dynamics in turn:\\
{\it{Deterministic Dynamics:}} If we transform the constant velocity dynamics\index{constant velocity dynamics} into $\bpsi$ space we have
\[
\frac{\mbox{d} \bpsi}{\mbox{d}t}= \bL\frac{\mbox{d} \btheta}{\mbox{d}t}=\bL\bp,
\]
so these are still constant velocity dynamics but with velocity $\bw=\bL\bp$. Furthermore, if $\bp$ has a Gaussian distribution with an identity covariance matrix, then $\bw$ is Gaussian with covariance $\bL\bL^{\top}$.

{\it{Rate of Bounce Events:}} If the current state is $(\btheta,\bp)$ then the rate of a bounce event is $\max\{0,\bp\cdot \nabla\log\pi(\btheta)\}$. Now 
\[
\bp\cdot \nabla\log\pi(\btheta) =  \bp^{\top} \nabla\log\pi(\btheta) = \bw^{\top}(\bL^{-1})^{\top}\nabla\log\pi(\bL^{-1}\bpsi) 
=\bw^{\top}\nabla_{\bpsi}\pi_{\bpsi}(\bpsi),
\]
where we have transformed $(\btheta,\bp)$ to $(\bpsi,\bw)$ and used (\ref{eq:ch6-nablapsi}). This is the rate for the Bouncy Particle Sampler targeting $\pi_{\bpsi}(\bpsi) $.

{\it{Reflection at Bounce Events:}}\index{reflection} If the current state is $(\btheta,\bp)$ then at a bounce event the new velocity is
\[
\bp'=\bp - 2(\bp\cdot\nabla_{\btheta} \log\pi(\btheta)) \frac{\nabla_{\btheta} \log\pi(\btheta)}{(\nabla_{\btheta} \log\pi(\btheta)^{\top}\nabla_{\btheta} \log\pi(\btheta))^{1/2}}.
\]
So if we consider the velocity for the $\bpsi$ process, $\bw'=\bL\bp'$, and use $\bSigma=\bL\bL^{\top}$, this is
\begin{eqnarray*}
\bw'&=&\bL\bp - 2(\bp\cdot\nabla_{\btheta} \log\pi(\btheta)) \frac{\bL \nabla_{\btheta} \log\pi(\btheta)}{(\nabla_{\btheta} \log\pi(\btheta)^{\top}\nabla_{\btheta} \log\pi(\btheta))^{1/2}} \\
&=& \bw - 2 ( \bw^{\top}\bL^{-T}\nabla_{\btheta} \log \pi(\bL^{-1}\bpsi )) \\ & & \times\frac{\bSigma\bL^{-T}\nabla_{\btheta} \log\pi(\bL^{-1}\bpsi )}{(\nabla_{\btheta} \log\pi(\bL^{-1}\bpsi )^{\top}\bL^{-1}\bSigma\bL^{-T}\nabla_{\btheta} \log\pi(\bL^{-1}\bpsi ))^{1/2}} \\
&=& \bw -2(\bw^{\top}\nabla_{\bpsi} \log  \pi_{\bpsi}(\bpsi)) \frac{\bSigma\nabla_{\bpsi} \log  \pi_{\bpsi}(\bpsi)}{(\nabla_{\bpsi} \log  \pi_{\bpsi}(\bpsi)^{\top}\bSigma\nabla_{\bpsi} \log  \pi_{\bpsi}(\bpsi))^{1/2}}.
\end{eqnarray*}
This is just $\cR^{\bSigma}_{\bg}(\bw)$ with $\bg=\nabla \log_{\bpsi}(\bpsi)$, the reflection \index{reflection}of the Bouncy Particle Sampler with covariance matrix $\bSigma$.

{\it{Refresh Events:}}\index{refresh event/rate} These events occur at a constant rate, which is unaffected by the transformation to $\bpsi$. At a refresh event, we simulate $\bp$ from a standard Gaussian, which corresponds to simulating $\bw=\bL\bp$ from a Gaussian with covariance $\bSigma=\bL\bL^{\top}$.

Thus the process in $\bpsi$ space is a Bouncy Particle Sampler\index{bouncy particle sampler} with covariance matrix $\bSigma=\bL\bL^{\top}$ for the velocity.

A second generalisation is to alter the constant velocity dynamics. This has been suggested in particular as a way of generalising the Bouncy Particle Sampler with covariance matrix $\bSigma$ for the velocity, with the resulting algorithm called the Boomerang Sampler\index{boomerang sampler} \cite[]{bierkens2020boomerang}, though similar ideas also appear under the name of  Hamiltonian-BPS in \cite{vanetti2017piecewise}. 

Consider a velocity model with marginal distribution such that $\log \pi_{\bp}(\bp)=-(1/2)\bp^{\top} \bSigma^{-1}\bp$. Write $\log \pi(\btheta)= U(\btheta)-(1/2)(\btheta-\btheta^*)^{\top}\bSigma^{-1}(\btheta-\btheta^*)$  for some function $U(\btheta)$ and constant vector $\btheta^*$. The idea is to have deterministic dynamics that move along contours of $-(1/2)(\btheta-\btheta^*)^{\top}\bSigma^{-1}(\btheta-\btheta^*)-(1/2)\bp^{\top} \bSigma^{-1}\bp$ in $(\btheta,\bp)$ space. Such dynamics are given by Hamiltonian dynamics, which are tractable in this case, and are \index{Hamiltonian Monte Carlo}
\[
\frac{\mbox{d} \btheta}{\mbox{d}t} = \bp, ~~~\frac{\mbox{d} \bp}{\mbox{d}t} = \btheta-\btheta^*.
\]
The solution of these dynamics are $\btheta_t=\btheta^*+(\btheta_0-\btheta^*)\cos(t)+\bp_0\sin(t)$ and $\bp_t=\bp_0\cos(t)-(\btheta_0-\btheta^*)\sin(t)$. The rate of bounce events for the Boomerang Sampler\index{boomerang sampler} is just $\max\{0,\bp_t\cdot\nabla U(\btheta_t)\}$, with bounces as per the Bouncy Particle Sampler when the velocity has covariance matrix $\bSigma$. As before, we can also introduce refresh events.\index{refresh event/rate}

If $U(\btheta)=0$, so we are targeting a Gaussian distribution for $\btheta$ with mean $\btheta^*$ and covariance $\bSigma$, then this sampler just undergoes Hamiltonian dynamics. In this case, a strictly positive refresh rate is needed to avoid the sampler being reducible\index{reducible}, and the resulting process is a form of randomised  Hamiltonian dynamics, that is HMC but with a random refresh\index{refresh event/rate} time for the velocity. For non-Gaussian targets, this sampler will have additional bounce events, but the hope is that if the target is close to Gaussian with mean $\btheta^*$ and covariance $\bSigma$, then the rate of bounce events will be much lower than for the standard Bouncy Particle Sampler\index{bouncy particle sampler}.

Care is needed with one aspect of simulating the Boomerang Sampler\index{boomerang sampler}, as the different dynamics require slightly different approaches to simulate the event times. If the current state is $(\btheta_0,\bp_0)$ then the rate until the next event is now
\begin{eqnarray*}
\lefteqn{\tilde{\lambda}_{(\btheta_0,\bp_0)}(t)=\max\{0,\bp_t \cdot \nabla U(\btheta_t)\} = \max\{0,}\\
& &
(\bp_0\cos(t)+(\btheta_0-\btheta^*)\sin(t)) \cdot \nabla U(\btheta^*+(\btheta_0-\btheta^*)\cos(t)+\bp_0\sin(t))\},
\end{eqnarray*}
where we have substituted in the definitions of $\bp_t$ and $\btheta_t$. \cite{bierkens2020boomerang} give some general approaches to bounding this rate, which uses the property that the deterministic dynamics of the Boomerang Sampler\index{boomerang sampler} are such that $|\btheta_t-\btheta^*|^2+|\bp_t|^2$ is a constant. To keep the notation simple, we will show this for $\btheta^*=\bzero$, but the same argument applies more generally. In this case 
\begin{eqnarray*}
|\btheta_t|^2+|\bp_t|^2 &=&  (\btheta_0\cos(t)+\bp_0\sin(t))\cdot (\btheta_0\cos(t)+\bp_0\sin(t)) \\
& & + 
(\bp_0\cos(t)-\btheta_0\sin(t))\cdot (\bp_0\cos(t)-\btheta_0\sin(t)) \\
&=& |\btheta_0|^2\cos^2(t)+|\bp_0|^2\sin^2(t)+2\btheta_0\cdot\bp_0\sin(t)\cos(t) \\
& & + |\bp_0|\cos^2(t) + |\btheta_0|^2\sin^(t)-2\btheta_0\cdot\bp_0\sin(t)\cos(t) \\
&=& |\btheta_0|^2(\sin^2(t)+\cos^2(t)+ |\bp_0|^2(\sin^2(t)+\cos^2(t)) = |\btheta_0|^2+|\bp_0|^2.
\end{eqnarray*}

How is this useful? This property means that we can bound the distance from $\btheta^*$ that the current deterministic trajectory can take. Thus if we can bound the Hessian\index{Hessian} of $U$, which is the derivative of $\nabla U$, then this enables us to bound $\nabla U$ for the current trajectory based on the value of $\nabla U$ at $\btheta^*$ plus a term that depends on the bound on the Hessian and the distance the trajectory can be from $\btheta^*$. One such bound, that we will use below is that if the spectral norm of the Hessian of $U$ is bounded by $M$, so that $\|\nabla^2U(\btheta)\bx\|\leq M\|\bx\|_2$ for any vector $\bx$ with $\|\cdot\|$ denoting Euclidean distance,
then for the current deterministic trajectory, we have a constant bound:
\begin{equation} \label{eq:ch6-boomerangbound}
\lambda(\btheta_t,\bp_t)\leq |\nabla U(\btheta^*)|(|\btheta_0-\btheta^*|^2+|\bp_0|^2)^{1/2}   + \frac{1}{2} M (|\btheta_0-\btheta^*|^2+|\bp_0|^2).
\end{equation}

\paragraph{\em{Example: Boomerang for Logistic Regression}}\index{logistic regression}

As an example, consider again the logistic regression model. There are two natural choices for the centring value and covariance of the Boomerang Sampler\index{boomerang sampler}. The first is to set them to the prior mean and covariance, $\btheta^*=\bzero$ and $\bSigma=\bSigma_{\btheta}$. The second is to estimate the mode of $\log \pi$, $\widehat{\btheta}_{MAP}$ say, and the inverse of the Hessian\index{Hessian} of $-\log \pi$ at $\widehat{\btheta}_{MAP}$. We will compare these two options.

If we set them to the prior values then $U(\btheta)$ is minus the log-likelihood. As described in Section \ref{sec:ch6-simPDMPsamplers} we can bound the Hessian of minus the log-likelihood by $(1/4)\bX^{\top}\bX$, where $\bX$ is the $N\times d$ matrix of covariates. 

What about if we set $\btheta^*$ and $\bSigma$ based on the estimate of the mode of $\log \pi$ and the inverse of the Hessian of $-\log \pi$ at the mode? Denoting the log-likelihood of the logistic model by $\ell(\btheta; \mathcal{D})$, and the Hessian\index{Hessian} of minus the log-likelihood by $\bH(\btheta)$. This choice gives
\[
U(\btheta)= -\ell(\btheta; \mathcal{D}) - \frac{1}{2}(\btheta-\widehat{\btheta}_{MAP})^{\top} \bH(\widehat{\btheta}_{MAP}) (\btheta-\widehat{\btheta}_{MAP}),
\]
where we have used the fact that the contribution from the prior will cancel. Taking second derivatives, the Hessian of this at $\btheta$ will be the difference between two matrices, $\bH(\btheta)-\bH(\widehat{\btheta}_{MAP})$. These matrices are both positive semi-definite, with spectral norm bounded by $(1/4)\bX^{\top}\bX$, thus the spectral norm of the difference is also bounced by $(1/4)\bX^{\top}\bX$. This is because the eigenvalues of $\bH(\btheta)$ are bounded between $0$ and $M$ for some constant $M$, and the eigenvalues of $-\bH(\widehat{\btheta}_{MAP})$ are bounded between $-M$ and $0$, so the eigenvalues of $\bH(\btheta)-\bH(\widehat{\btheta}_{MAP})$ are between $-M$ and $M$.

\begin{figure}
    \centering
    \includegraphics[width=\textwidth]{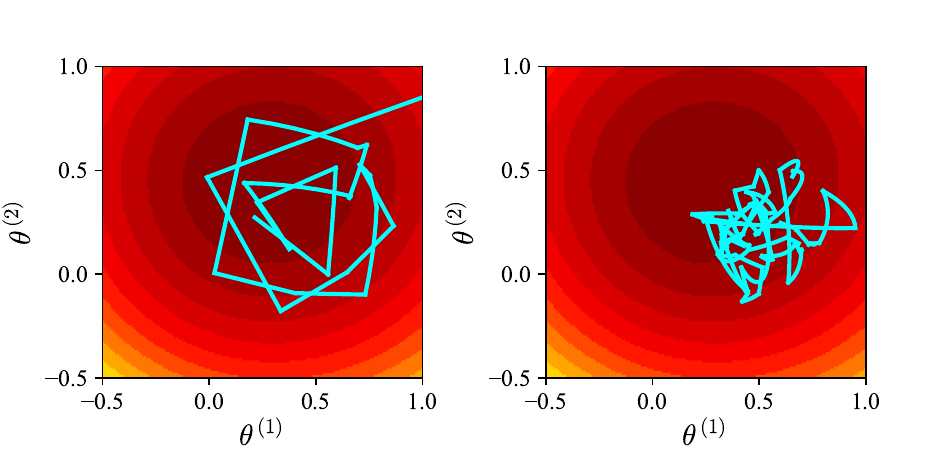}
    \caption{Trace plots of the Boomerang Sampler for the posterior of a logistic regression model. The heat plot shows the contours of the posterior. Example realisation when $\bSigma=\bSigma_{\btheta}$ and $\btheta^*$ is the prior mean (left) and when $\btheta^*$ is an estimate of the posterior mode and $\bSigma$ is based on the Hessian of $\log \pi$ at $\btheta^*$ (right).}
    \label{fig:ch6_Boomerang}
\end{figure}
Thus we can implement the Boomerang Sampler\index{boomerang sampler} for both choices of $\btheta^*$ and $\bSigma$ with the constant bound given by (\ref{eq:ch6-boomerangbound}) using the same value for $M$. The bounds will differ though due to the different values for $\btheta^*$ and $U$ and hence for $|\nabla U(\btheta^*)|$. In particular, this will be 0  for $\btheta^*=\widehat{\btheta}_{MAP}$, or at least close to 0 if we have a reasonable approximation for the mode of $\log\pi$. Example realisations for the two samplers are shown in Figure \ref{fig:ch6_Boomerang}, for data with $N=100$ and $d=2$ and with a prior covariance of 2 for each component of $\btheta$.

The main difference between the Boomerang Sampler\index{boomerang sampler} and the previous PDMP samplers is most obviously seen in the right-hand plot of Figure \ref{fig:ch6_Boomerang}, as we have elliptical trajectories between events. This is reminiscent of the trajectories for HMC. For the left-hand plot, where $\Sigma$ is large compared to the curvature of the posterior, the contours are elliptical but with larger radii and thus the output looks more similar to our previous PDMP samplers. In this example, one of the main advantages of basing $\btheta^*$ and $\Sigma$ on the mode and Hessian at the mode is that the computational cost of simulating the Boomerang Sampler\index{boomerang sampler} is lower. Both have been simulated with roughly the same length of trajectory, but using the prior values has required proposing four times as many events. This is because of the looser bound on the event rate that we have in this case.

\index{piecewise deterministic Markov processes|)}

\section{Chapter Notes}

The initial idea of using PDMP processes for sampling comes from the physical sciences, see for example \cite{turitsyn2011irreversible}, \cite{peters2012rejection} and \cite{michel2014generalized}. These were introduced into statistics by \cite{bouchard2018bouncy} and \cite{bierkens2017piecewise}, and one of the early papers to describe the link to PDMPs was \cite{Fearnhead:2018}. The latter paper also gives an example where avoiding refresh events\index{refresh event/rate} in the Bouncy Particle Sampler\index{bouncy particle sampler} can lead to slow mixing\index{mixing}. How the subsampling ideas of Section \ref{ch5:sec-SS} extend to samplers other than the Zig--Zag Sampler is also discussed in \cite{Fearnhead:2018}, with related ideas for the local Bouncy Particle Sampler in \cite{bouchard2018bouncy}.

As well as the PDMP samplers mentioned in the chapter, there have been various papers suggesting extensions to PDMP methods. For example \cite{vanetti2017piecewise}, \cite{wu2017generalized} and \cite{michel2020forward}. The continuous-time methods can be related to discrete-time MCMC\index{Markov chain Monte Carlo} methods such as reflective slice sampling \cite[]{neal2003slice} and the Discrete Bouncy Particle Sampler\index{discrete bouncy particle sampler} \cite[]{sherlock2022discrete}.

The theoretical analysis of PDMP samplers has been active, including showing ergodicity \cite[]{deligiannidis2019exponential,bierkens2019ergodicity} and exploring limiting behaviour of the Bouncy Particle Sampler and the Zig Zag sampler \cite[]{deligiannidis2021randomized,bierkens2022high,andrieu2021hypocoercivity}. Particularly strong results are available for the one-dimensional case \cite[]{bierkens2017limit,bierkens2022spectral}. 

Complementary results on scaling\index{scaling} of the Discrete Bouncy Particle Sampler\index{discrete bouncy particle sampler} to those shown in Section \ref{sec:ch6-comparison} are given in \cite{sherlock2022discrete}, which shows similar scaling to the Bouncy Particle Sampler and also provides supporting theory to help choose the refresh rate.\index{refresh event/rate}


\chapter{Assessing and Improving MCMC}
\label{chap:stein}

The development of more sophisticated and, especially, approximate sampling algorithms, aimed at improving scalability in one or more of the senses already discussed in this book, raises important considerations about how a suitable algorithm should be selected for a given task, how its tuning parameters should be determined, and how its convergence should be assessed. This chapter presents recent solutions to the above problems, whose starting point is to derive explicit upper bounds on an appropriate distance between the posterior and the approximation produced by MCMC. 
Further, we explain how these same tools can be adapted to provide powerful post-processing methods that can be used retrospectively to improve approximations produced using scalable MCMC.

\section{Diagnostics for MCMC}
\label{sec: diagnostics}

The approximations provided by MCMC are only useful if we can be confident that the samples collectively form a reasonable approximation to the intended target.
This, however, appears to be a circular requirement -- how can we verify that MCMC has worked without access to the limiting target to check?
Several \emph{diagnostics} have emerged as pragmatic solutions, enabling a practitioner to detect certain failure modes of MCMC.
In particular, we highlight \emph{convergence diagnostics}\index{convergence diagnostic}, which aim to determine whether the Markov chain has converged to \emph{some} stationary distribution, and \emph{bias diagnostics}, which aim to detect whether the stationary distribution of the Markov chain is indeed the target distribution of interest.
For context, both classes of diagnostic will be briefly discussed.
Throughout this Chapter, we shall restrict attention to distributions defined on $\mathbb{R}^d$ for simplicity of presentation.

\subsection{Convergence Diagnostics}

To limit the scope, here we describe the convergence diagnostics \index{convergence diagnostic} that are most widely used. 
The \emph{Gelman--Rubin} diagnostic \index{Gelman--Rubin diagnostic} is based on realisations of $L$ independent Markov chains, each of length $n$, where practical considerations typically restrict $L$ to be a small number, such as 3, 4 or 5.
For a univariate target distribution, the Gelman--Rubin diagnostic is defined as the square root of the ratio of two estimators of the variance $\sigma^2$ of the target distribution
\begin{align}
	\widehat{R} := \sqrt{\frac{\widehat{\sigma}^2}{\widehat{s}^2}}, \label{eq: Rhat}
\end{align}
where $\widehat{s}^2$ is the (arithmetic) mean of the sample variances $s^2_l$ along the $L$ sample paths, 
\begin{align*}
	\widehat{s}^2 := \frac{1}{L} \sum_{l=1}^L s_l^2 ,
\end{align*}
which typically provides an underestimate of $\sigma^2$, since it is possible that one or more of the $L$ chains has not explored the posterior well, while $\widehat{\sigma}^2$ is constructed as 
\begin{align}
	\widehat{\sigma}^2 := \frac{n-1}{n} \widehat{s}^2 + \frac{1}{L-1} \sum_{l=1}^L \left( m_{l} - \frac{1}{L} \sum_{l'=1}^L m_{l'} \right)^2 ,    \label{eq: sigmahar R}
\end{align}
where $m_l$ is the sample mean from the $l$th sample path, which typically provides an overestimate of the target variance.
Indeed, the second term in \eqref{eq: sigmahar R} is an estimate of the asymptotic variance of the sample mean of the Markov chain, which is typically larger than the variance $\frac{\sigma^2}{n}$ we would obtain if our samples were truly independent; recall the discussion of effective samples sizes and \eqref{eqn.VarIhat.IACT} from Chapter \ref{chap:background}.
For an ergodic Markov chain, $\widehat{R}$ converges to 1 as $n \rightarrow\infty$.
In practice, it is common to discard a burn-in \index{burn-in} period of length $\frac{n}{2}$, where $n$ is the smallest integer for which $\widehat{R} < 1 + \delta$, where $\delta$ is a suitable threshold. 
The somewhat arbitrary choices of $\delta = 0.1$ and $\delta = 0.01$ are often used.

Convergence diagnostics \index{convergence diagnostic} are widely and successfully used.
However, it remains the case that the performance of $\widehat{R}$ and related convergence diagnostics depends strongly on how the independent realisations of MCMC are initialised.
Indeed, consider the task of generating  approximate samples from the mixture distribution
\begin{align}
    \pi(x) = \frac{1}{2} \mathsf{N}(x;-2,0.5^2) + \frac{1}{2} \mathsf{N}(x;2,0.5^2) . \label{eq: mixture example}
\end{align}
To avoid the situation where all chains are confined to the same local high-probability region due to chance, standard practice is to initialise the Markov chains by sampling their initial state from a distribution that is over-dispersed with respect to the target.
Thus, we may initialise Markov chains by sampling from, say, $\mathsf{N}(0,5^2)$.
Running $L = 3$ chains of length $n = 1000$ leads to the two sets of sample paths shown in Figure \ref{fig: convergence diagnostics}.
In both sets of sample paths\index{sample path}, the length $n$ of the sample paths was insufficient to enable the Markov chains to explore both components of $\pi$, and each of the chains remained in the component in which it was initialised.
On the left side of the figure, one of the sample paths is clearly qualitatively distinct from the other two, since the Markov chains explored different components of $\pi$, and the Gelman--Rubin diagnostic \index{Gelman--Rubin diagnostic} correctly detects that the Markov chains have not converged.
Unfortunately, on the right side of the figure, it so happened that each of the chains was initialised in the high probability region of the same component.
As a result, the Gelman--Rubin diagnostic \index{Gelman--Rubin diagnostic} appears to be converging to values below the commonly used thresholds $\delta = 0.1$ and $\delta = 0.01$, and fails to detect that the Markov chains have explored only one of the components of $\pi$.

What went wrong with the convergence diagnostic \index{convergence diagnostic} \eqref{eq: Rhat} in this example?
Well, the Markov processes exhibited a form of \emph{quasi-stationarity}; transitions from one component of the mixture to the other is a rare event, and conditional on such a transition not occurring the behaviour of the Markov chains is arguably excellent.
The rarity of transitions between components makes it fundamentally difficult to distinguish between quasi-stationarity and convergence of a Markov chain when the sample paths are confined to the same component; some knowledge of the invariant distribution $\pi$ is required.
This motivates the discussion of an alternative diagnostic which does indeed leverage information about $\pi$, a \emph{bias diagnostic}, which we describe next.

\begin{figure}
    \centering
    \includegraphics[width = \textwidth]{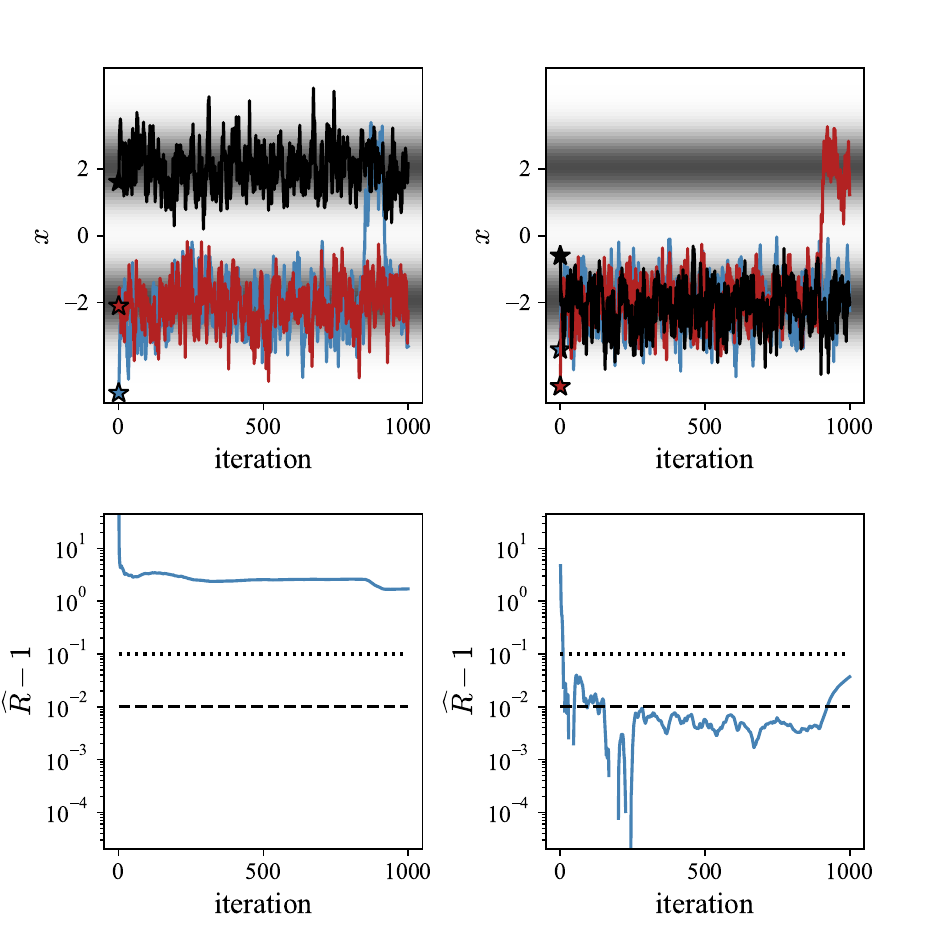}
    \caption{Convergence diagnostics for MCMC. 
 Three independent Markov chains were simulated to generate samples from the Gaussian mixture target $\pi$ in \eqref{eq: mixture example}. 
 In the first scenario (left panels) the chains explore different components of $\pi$, and the Gelman--Rubin diagnostic \index{Gelman--Rubin diagnostic} $\widehat{R}$ correctly detects that the Markov chains have not converged.
 In the second scenario (right panels) the chains explore the same component of $\pi$, and the Gelman--Rubin diagnostic does not detect that the Markov chains have not converged.
 [Stars indicate the initial state of each Markov chain.  
 The density $\pi$ is shaded.
 Dotted lines indicate the two commonly used thresholds $\delta = 0.1$ and $0.01$ for $\widehat{R}-1$.]
 }
    \label{fig: convergence diagnostics}
\end{figure}

\subsection{Bias Diagnostics} \label{subsec: bias diagn}

Even in favourable situations, such as the case of a uni-modal target, convergence diagnostics do not provide a guarantee that Markov chain samples constitute a faithful approximation of the target.
Indeed, convergence diagnostics are not capable of detecting \emph{bias} in sampler output, for example as introduced when using stochastic gradient \index{stochastic gradient} methods (Chapter \ref{chap:sgld}), or introduced when a coding error has occurred.
Instead, \emph{bias diagnostics} \index{bias diagnostic} can be used to identify such situations, a simple example of which is
\begin{align}
\widehat{B} := \left\| \frac{1}{n} \sum_{k=1}^n (\nabla \log \pi)(X_k) \right\| . \label{eq: bias diagnostic}
\end{align}
Provided that $\nabla \log \pi \in \mathcal{L}^1(\pi)$, which we recall means that $\int \|\nabla \log \pi\| \; \mathrm{d}\pi < \infty$ from Section \ref{subsec: what is MC}, from the strong law of large numbers \index{strong law of large numbers} for Markov chains 
the series in \eqref{eq: bias diagnostic} almost surely converges to the limit
\begin{align*}
\int \nabla \log \pi \; \mathrm{d}\pi = \int \frac{(\nabla \pi)(x)}{\pi(x)} \pi(x) \; \mathrm{d}x = \int (\nabla \pi)(x) \; \mathrm{d}x = 0
\end{align*}
whenever the Markov chain is ergodic and $\pi$-invariant.
The final equality is integration by parts; a special case of Lemma \ref{lem: Stein op int by parts} in the sequel.
On the other hand, just as the passing of a convergence diagnostic \index{convergence diagnostic} test does not guarantee that the MCMC has converged, the convergence of \eqref{eq: bias diagnostic} to 0 does not guarantee that the Markov chain preserves the correct target distribution.
Surprisingly, bias diagnostics are not widely used, at least compared to convergence diagnostics, which may be due to (in our example) the requirement to compute a gradient, or may simply be because they have been historically overlooked.

Consider again the mixture distribution $\pi$ in \eqref{eq: mixture example} and suppose that, due to a coding error, we have implemented a Markov chain whose stationary distribution is $\mathsf{N}(\mu,0.5^2)$.
Running $L=3$ chains of length $n = 1000$ leads to the two sets of sample paths \index{sample path} shown in Figure \ref{fig: bias diagnostics} for $\mu = 2$ (left) and $\mu = 0$ (right).
In both sets of sample paths, the Gelman--Rubin convergence diagnostic \index{Gelman--Rubin diagnostic} test is passed, despite the Markov chains failing to be $\pi$-invariant.
On the left side of the figure, the bias diagnostic \eqref{eq: bias diagnostic} clearly does not converge to zero, and thus the bias in the Markov chain output is detected.
Unfortunately, on the right side of the figure, the bias diagnostic appears to be decreasing for all chains as the number $n$ of samples is increased, and we do not diagnose the failure of MCMC.

What went wrong with the bias diagnostic \eqref{eq: bias diagnostic} in this example?
Well, information about a \emph{finite-dimensional} generalised moment \index{generalised moment} $\int \nabla \log \pi \; \mathrm{d}\nu \in \mathbb{R}^d$ is insufficient to characterise a probability distribution; there are an infinitude of distributions $\nu$ for which all $d$ components of this generalised moment are $0$.
This suggests a potential solution; 
find an \emph{infinite-dimensional} generalised moment that fully determines whether or not $\pi$ and $\nu$ are equal.
Surprisingly, this can be achieved without needing to explicitly deal with an infinite-dimensional generalised moment, due to the \emph{kernel trick} \index{kernel trick} from machine learning, which was introduced in Section \ref{sec.fdkerneltrick} for finite-dimensional inner-product spaces, and will now be explored in the infinite-dimensional setting.

\begin{figure}
    \centering
    \includegraphics[width=\textwidth]{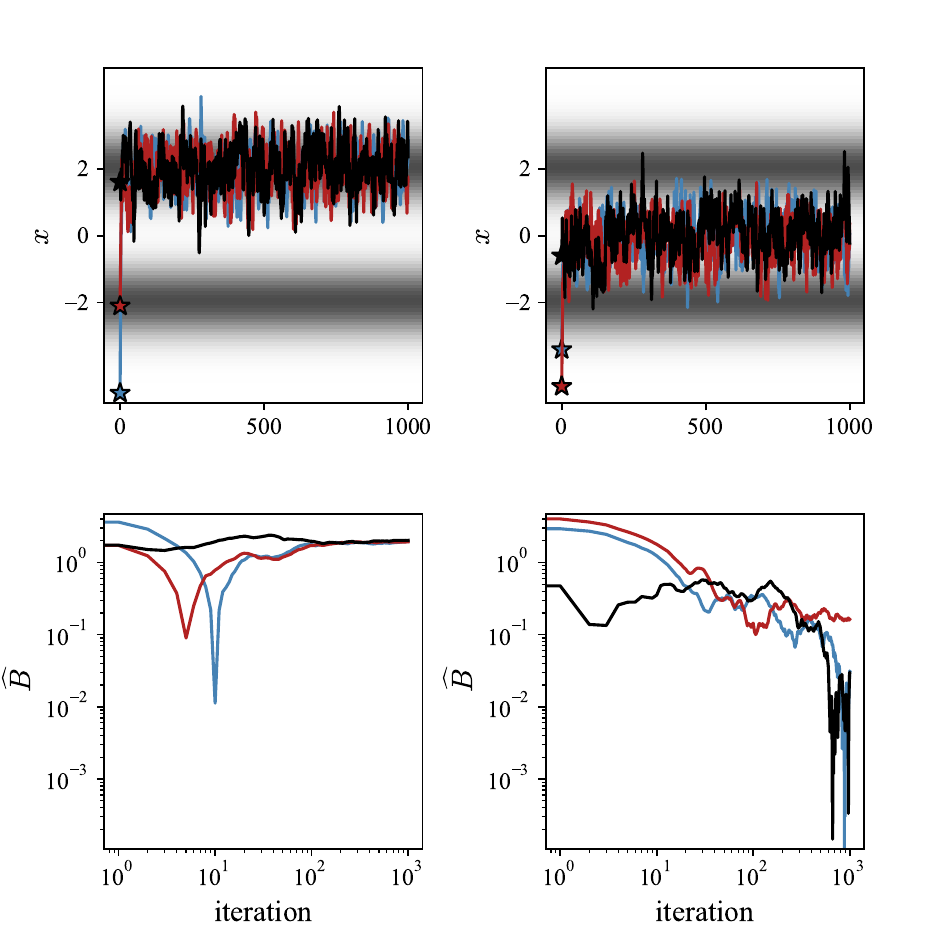}
    \caption{Bias diagnostics for MCMC.
    Three independent biased Markov chains were simulated, so that the invariant distribution differs from the Gaussian mixture target $\pi$ in \eqref{eq: mixture example}. 
    In the first scenario (left panels) the chains explore a Gaussian centred at $x=2$, and the bias diagnostic $\widehat{B}$ correctly detects that the Markov chains have not converged.
    In the second scenario (right panels) the chains explore a Gaussian target centred at $x=0$, and the bias diagnostic does not detect that the Markov chains have not converged.
    [Stars indicate the initial state of each Markov chain.
    The density $\pi$ is shaded.] 
    }
    \label{fig: bias diagnostics}
\end{figure}

\subsection{Improved Bias Diagnostics via the Kernel Trick}
\label{subsec: kernel trick}

Though finite-dimensional bias diagnostics can be misled, the same may not be true of a bias diagnostic that is infinite-dimensional.
The aim of this section is to indicate, at a high level, how such an infinite-dimensional bias diagnostic can be constructed.
A more rigorous mathematical treatment is then provided in Section \ref{sec: convergence bound}.

Suppose that we are able to write down a countable set $\{\phi_1,\phi_2,\dots\}$ of functions $\phi_j : \mathbb{R}^d \rightarrow \mathbb{R}$ such that each of the moments $\int \phi_j(\mathbf{x}) \pi(\mathbf{x}) \; \mathrm{d}\mathbf{x}$ of $\pi$ can be analytically evaluated; without loss of generality we may suppose that each generalised moment \index{generalised moment} of $\pi$ is equal to 0 (since if not, we may simply redefine $\phi_j$ as $\phi_j - \int \phi_j(\mathbf{x}) \pi(\mathbf{x}) \; \mathrm{d}\mathbf{x}$).
We have already seen examples of functions $\phi_j$ that can be used; we could take 
$$
\phi_j(\mathbf{x}) = \frac{\partial \log \pi(\mathbf{x})}{\partial x_j}
$$ 
for $j = 1,\dots,d$.
More generally, we can use the generator of any $\pi$-invariant Markov process to construct such functions; the details are deferred to Section \ref{sec: convergence bound}.
Assuming that the $\phi_j$ are linearly independent and appropriately normalised, we can construct a Hilbert space 
\begin{align*}
    \mathcal{H} = \left\{ h = \sum_{j=1}^\infty c_j \phi_j \; : \; \sum_{j=1}^\infty c_j^2 < \infty \right\}
\end{align*}
whose elements are functions $h : \mathbb{R}^d \rightarrow \mathbb{R}$, equipped with an inner product $\langle h , h' \rangle_{\mathcal{H}} = \sum_{j=1}^\infty c_j c_j'$, where here $h = \sum_{j=1}^\infty c_j \phi_j$ and $h' = \sum_{j=1}^\infty c_j' \phi_j$.
The induced norm is $\|h\|_{\mathcal{H}} := \langle h , h \rangle_{\mathcal{H}}^{1/2}$.
By picking elements from this Hilbert space \index{Hilbert space} we can construct an infinitude of bias diagnostics, and the question is then which diagnostic to pick?

One solution is to adopt an \emph{adversarial} perspective, where we select an element $h \in \mathcal{H}$ that maximally discriminates between $\pi$ and the empirical approximation to $\pi$ produced from MCMC.
The bias diagnostic obtained in this way can be written as
\begin{align*}
    \widetilde{B} := \sup_{\|h\|_{\mathcal{H}} \leq 1} \left| \frac{1}{n} \sum_{k=1}^n h(\mathbf{X}_k) \right| ,
\end{align*}
where the supremum is taken over the unit ball of $\mathcal{H}$, to ensure that the supremum is computed over a bounded set.
Further re-writing in terms of the basis functions, we have
\begin{align}
    \widetilde{B} := \sup \left\{ \left| \frac{1}{n} \sum_{k=1}^n \sum_{j=1}^\infty c_j \phi_j(\mathbf{X}_k) \right| \; : \; \sum_{j=1}^\infty c_j^2 \leq 1 \right\} , \label{eq: Btilde}
\end{align}
whence we see the maximisation is equivalent to finding the point $\mathbf{c}$ on the surface of the (infinite-dimensional) unit hypersphere that maximises the dot product with the (infinite-dimensional) vector $\mathbf{c}'$ with $c_j' = \frac{1}{n} \sum_{k=1}^n \phi_j(\mathbf{X}_k)$.
The solution of this maximisation problem is to align $\mathbf{c}$ to $\mathbf{c}'$, and upon properly normalising we obtain
\begin{align*}
    c_j = \frac{\frac{1}{n}\sum_{k=1}^n \phi_j(\mathbf{X}_k)}{ \sqrt{\sum_{j'=1}^\infty \left( \frac{1}{n} \sum_{k'=1}^n \phi_{j'}(\mathbf{X}_{k'}) \right)^2 } } .
\end{align*}
Inserting this expression back into \eqref{eq: Btilde} and rearranging, we obtain the explicit bias diagnostic
\begin{align*}
\widetilde{B} = \sqrt{ \frac{1}{n^2} \sum_{k=1}^n \sum_{k'=1}^n \left( \sum_{j=1}^\infty \phi_j(\mathbf{X}_k) \phi_j(\mathbf{X}_{k'}) \right) } .
\end{align*}
At this point one can raise a reasonable objection that evaluating $\widetilde{B}$ appears to require an infinite computational budget, due to the summation over the functions $\phi_j$.
Remarkably, there are situations where such infinite series admit closed-form analytic expressions 
\begin{align*}
    \mathsf{k}_\pi(\mathbf{x},\mathbf{x}') := \sum_{j=1}^\infty \phi_j(\mathbf{x}) \phi_j(\mathbf{x}') ,
\end{align*}
a situation known in machine learning as the \emph{kernel trick}\index{kernel trick}.
See Section \ref{sec.RKHS} for a primer on the kernel trick.
Provided that we have access to a kernel trick, which of course depends on the precise choice we make for the functions $\phi_j$ to ensure that $\int \phi_j(\mathbf{x}) \pi(\mathbf{x}) \; \mathrm{d}\mathbf{x} = 0$, we can hope to obtain a closed-form expression for the bias diagnostic \eqref{eq: Btilde}, namely
\begin{align}
    \widetilde{B} = \sqrt{ \frac{1}{n^2} \sum_{k=1}^n \sum_{k'=1}^n \mathsf{k}_\pi(\mathbf{X}_k,\mathbf{X}_{k'}) } . \label{eq: rkhs diagnostic}
\end{align}
One would hope that \eqref{eq: rkhs diagnostic} converges to 0 as $n \rightarrow \infty$ if and only if the Markov chain is $\pi$-invariant.
It turns out that such an idea can be made to work, as we will explain in Section \ref{sec: convergence bound}.

\smallskip

To summarise, we have seen that both convergence diagnostics \index{convergence diagnostic} and conventional finite-dimensional bias diagnostics can provide a useful practical tool to detect the failure of MCMC, but even taken together they are insufficient to guarantee that output from the sampler provides an accurate approximation of the intended target distribution.
In the next section, we turn our attention to the construction of infinite-dimensional bias diagnostics, of the form \eqref{eq: rkhs diagnostic}, which attempt to solve the problem of assessing MCMC output by establishing explicit upper bounds on an appropriate distance between the posterior and the approximation produced by MCMC in terms of diagnostics of the form \eqref{eq: rkhs diagnostic}.

\section{Convergence Bounds for MCMC}
\label{sec: convergence bound}

In contrast to convergence diagnostics and conventional finite-dimensional bias diagnostics, which may fail to detect instances where MCMC has failed, the aim of this section is to seek explicit and computable \emph{upper bounds} on the error between the MCMC output and the target distribution. 
This topic has received considerable recent interest following the pioneering work of \citet{gorham2015measuring}.
To set the scene, we first explain how the use of a suitable diffusion process enables explicit bounds on integral probability metrics, focusing on the Wasserstein-1 metric for clarity in Sections \ref{subsec: formal arg} and \ref{subsec: overdamped}.
However, the Wasserstein-1 metric is not favourable for computation in this context, and we instead consider integral probability metrics \index{integral probability metric} based on reproducing kernel Hilbert spaces in Section \ref{subsec: test functions}, noting that the associated \emph{kernel Stein discrepancies} can also provide valid convergence bounds in Section \ref{subsec: conv control kernel}.
Lastly, in Section \ref{sec: ssd} we connect kernel Stein discrepancies to the stochastic gradient methods from Chapter \ref{chap:sgld}.

\subsection{Bounds on Integral Probability Metrics}
\label{subsec: formal arg}

Our aim here is to arrive at an explicit and computable upper bound on an appropriate metric between the target distribution $\pi$ and the empirical distribution produced by MCMC.
Let $\mathcal{P}(\mathbb{R}^d)$ denote the set of probability distributions on $\mathbb{R}^d$ and consider a metric $\mathsf{d} : \mathcal{P}(\mathbb{R}^d) \times \mathcal{P}(\mathbb{R}^d) \rightarrow [0,\infty]$.
As a useful convention, we have extended the range of a metric to include the value $\infty$, to avoid the need to specify the subset of distributions on which the metric is defined.
Let $\pi \in \mathcal{P}(\mathbb{R}^d)$ be the distributional target of MCMC.
For our theoretical development, we will now introduce an auxiliary discrete time ergodic Markov chain, which need not be related to the Markov process(es) underpinning the MCMC method(s) being assessed.
The role of this auxiliary chain is limited to being a theoretical device in what follows, and we denote its transition kernel \index{kernel (Markov transition)} as $T_\pi$, meaning that $T_\pi \nu$ is the distribution after one step of the auxiliary Markov chain initialised from $\mathbf{X}_0 \sim \nu$.
This auxiliary chain is required to satisfy the \emph{contraction} property
\begin{align}
\mathsf{d}(T_\pi \pi,T_\pi \nu) \leq \rho \; \mathsf{d}(\pi,\nu)  \label{eq: contraction}
\end{align}
for some $\rho \in [0,1)$ and all $\nu \in \mathcal{P}(\mathbb{R}^d)$.
From the triangle inequality, $\mathsf{d}(\pi,\nu) \leq \mathsf{d}(\pi, T_\pi \nu) + \mathsf{d}(T_\pi \nu, \nu)$, combined with the contraction \index{contraction} $\mathsf{d}(\pi, T_\pi \nu) = \mathsf{d}(T_\pi \pi, T_\pi \nu) \leq \rho \; \mathsf{d}(\pi, \nu)$, it follows that $(1-\rho) \mathsf{d}(\pi,\nu) \leq \mathsf{d}(T_\pi \nu, \nu)$.
The \emph{discrepancy} \index{discrepancy}
$$
D_\pi(\nu) := \frac{1}{1-\rho} \; \mathsf{d}(T_\pi \nu, \nu)
$$ 
therefore constitutes an upper bound on the metric $\mathsf{d}(\pi,\nu)$, which could in principle be used to quantify how well a given distribution $\nu$ approximates $\pi$ in situations where we do not have direct access to $\pi$, but where the auxiliary Markov chain can be simulated.
Further, since $\mathsf{d}$ is a metric and the Markov chain has a unique invariant distribution, $D_\pi(\nu) = 0$ if and only if $\nu = \pi$.
An ideal scenario would be a fast mixing auxiliary Markov chain, so that $\rho \ll 1$, $T_\pi \nu \approx \pi$, and $D_\pi(\nu) \approx \mathsf{d}(\pi,\nu)$.
On the other hand, if the auxiliary Markov chain mixes slowly then the values taken by the discrepancy could fail to provide a meaningful indication of whether or not $\nu$ is an accurate approximation to $\pi$.
The utility of this upper bound therefore depends on the mixing \index{mixing} properties of the auxiliary Markov chain on which it is based.

To move towards a computable bound, let us suppose that $\mathsf{d}$ is an \emph{integral probability metric}\index{integral probability metric}, meaning that for any $\pi, \nu \in \mathcal{P}(\mathbb{R}^d)$
\begin{align}
\mathsf{d}(\pi,\nu) & = \sup_{g \in \mathcal{G}} \int g(\mathbf{x}) \; \pi(\mathrm{d}\mathbf{x}) - \int g(\mathbf{x}) \; \mathrm{d}\nu(\mathbf{x})    \label{eq: ipm}
\end{align}
for a suitable symmetric set\index{symmetric set}\footnote{The set $\mathcal{G}$ is symmetric if $-g \in \mathcal{G}$ whenever $g \in \mathcal{G}$; this allows us to avoid taking absolute values in the definition of the integral probability metric.} of test functions $\mathcal{G}$.
Introducing the linear operator
$$
(L_\pi g)(\cdot) = \int g(\mathbf{x}') T_\pi(\cdot, \mathrm{d}\mathbf{x}') - g(\cdot) , 
$$
and observing that
\begin{align*}
\int (L_\pi g)(\mathbf{x}) \; \mathrm{d} \nu(\mathbf{x}) & = \int g(\mathbf{x}') T_\pi (\mathbf{x}, \mathrm{d}\mathbf{x}') \; \mathrm{d}\nu(\mathbf{x}) - \int g(\mathbf{x}) \; \mathrm{d}\nu(\mathbf{x}) \\
& = \int g(\mathbf{x}) \; \mathrm{d}T_\pi \nu(\mathbf{x})  - \int g(\mathbf{x}) \; \mathrm{d}\nu(\mathbf{x}) ,
\end{align*}
the discrepancy \index{discrepancy} can be expressed as
\begin{align}
D_\pi(\nu) & = \frac{1}{1-\rho} \sup_{g \in \mathcal{G}} \int (L_\pi g)(\mathbf{x}) \; \mathrm{d}\nu(\mathbf{x}) .
\label{eq: general discrepancy}
\end{align}
However, to actually evaluate this discrepancy we are required to compute expressions involving $L_\pi$, which in effect requires simulating all possible realisations of one step of the auxiliary Markov chain, and is therefore intractable in general.
To circumvent this issue, we will move from a discrete-time auxiliary Markov chain to a continuous-time auxiliary Markov process, with time $t$ transition kernel $T_\pi^t$ \index{kernel (Markov transition)} and associated linear operator $L_\pi^t$ and discrepancy $D_\pi^t$.
The contraction \index{contraction} property \eqref{eq: contraction} in this case reads
\begin{align}
    \mathsf{d}(T_\pi^t \pi , T_\pi^t \nu) \leq \rho_t \; \mathsf{d}(\pi,\nu). \label{eq: contraction property}
\end{align}
Considering the $t \downarrow 0$ limit we may, if the auxiliary Markov process mixes rapidly enough, obtain an expression for the discrepancy in terms of the generator $\mathcal{L}_\pi$ of the auxiliary Markov process, which we recall from Section \ref{sec:ch6-generator} is defined through its action on suitably regular test functions $g: \mathbb{R}^d \rightarrow \mathbb{R}$ as
$$
(\mathcal{L}_\pi g)(\cdot) := \lim_{t \downarrow 0} \frac{1}{t} L_\pi^t g(\cdot) .
$$
Indeed, assume that $\rho_t = e^{-ct}$ for some $c > 0$.
Then, in a purely formal manipulation, 
\begin{align}
    D_\pi(\nu) & := \lim_{t \downarrow 0} D_\pi^t(\nu) \label{eq: discrepancy} \\
    & = \sup_{g \in \mathcal{G}} \int \lim_{t \downarrow 0} \frac{1}{1-\rho_t} (L_\pi^t g)(\mathbf{x}) \; \mathrm{d}\nu(\mathbf{x})  
    = \frac{1}{c} \sup_{g \in \mathcal{G}} \int (\mathcal{L}_\pi g)(\mathbf{x}) \; \mathrm{d}\nu(\mathbf{x}) , \nonumber
\end{align}
where the final step uses the definition of the generator $\mathcal{L}_\pi$ and the fact that $e^{-ct} = 1 - ct + o(t)$ when $t$ is small.
Intriguingly, this form of discrepancy \index{discrepancy} may be computable, up to the rate constant $c$, which will be unknown in general.
The remaining challenges appear to be the selection of a suitable auxiliary Markov process, for which the contraction \index{contraction} property \eqref{eq: contraction property} is satisfied, and the solution of the optimisation problem over $\mathcal{G}$.
These issues are addressed, respectively, in Sections \ref{subsec: overdamped} and \ref{subsec: test functions}.

\begin{remark}[Stein discrepancies]
The discrepancy that we have introduced in \eqref{eq: discrepancy} is an instance of \emph{Stein discrepancy}\index{Stein discrepancy}, as defined in the pioneering work of \citet{gorham2015measuring}.
A \emph{Stein discrepancy} refers to any discrepancy of the form
\begin{align}
    \sup_{g \in \mathcal{G}'} \int (\mathcal{A}_\pi g)(\mathbf{x}) \; \mathrm{d}\nu \label{eq: stein discrepancy}
\end{align}
where the the \emph{Stein operator} \index{Stein operator} $\mathcal{A}_\pi$ and the \emph{Stein class} \index{Stein class} $\mathcal{G}'$ are selected in such a way that \eqref{eq: stein discrepancy} is zero if and only if $\pi$ and $\nu$ are equal.
The concept of a Stein discrepancy is more general than the discrepancy $D_\pi(\nu)$ we have constructed, since the Stein operator $\mathcal{A}_\pi$ need not arise from consideration of a continuous-time Markov process; see for example the review of \citet{anastasiou2021stein}.
\end{remark}

\subsection{Choice of Auxiliary Markov Process} \label{subsec: overdamped}

To make this argument useful we require a metric $\mathsf{d}$ and an auxiliary continuous-time Markov process for which the contraction \index{contraction} property \eqref{eq: contraction property} is satisfied.
For the auxiliary Markov process, we will consider the overdamped Langevin diffusion from Section \ref{sec.SDEs}:
\begin{align}
\mathrm{d}\mathbf{X}_t = \nabla \log \pi(\mathbf{X}_t) \; \mathrm{d}t + \sqrt{2} \; \mathrm{d}\mathbf{W}_t   \label{eq: overdamped lang diff}
\end{align}
whose infinitesimal generator \index{generator} is the second order differential operator $(\mathcal{L}_\pi g)(\mathbf{x}) = (\Delta g)(\mathbf{x}) + \langle (\nabla \log \pi)(\mathbf{x}) , (\nabla g)(\mathbf{x}) \rangle$, where $\Delta$ denotes the Laplacian differential operator for $\mathbb{R}^d$. 
To simplify the presentation, we initially make the assumption that $\pi$ is \emph{strongly log-concave}, meaning that
\eqref{eq:ass-convex} holds for some $l > 0$, and we recall that a sufficient condition for strong log-concavity \index{log-concave} is that $- \nabla \nabla \log \pi(\mathbf{x}) \succ \epsilon \mathbf{I}_d$ for some $\epsilon > 0$ and all $\mathbf{x} \in \mathbb{R}^d$, where $\nabla \nabla$ denotes the Hessian differential operator and the notation $\mathbf{A} \succ \mathbf{B}$ is used to mean that $\mathbf{A}-\mathbf{B}$ is a symmetric positive definite matrix.
This assumption will be relaxed in Section \ref{subsec: conv control kernel}.
Let $C^s(\mathbb{R}^d,\mathbb{R}^p)$ denote the set of functions $f : \mathbb{R}^d \rightarrow \mathbb{R}^p$ for which continuous derivatives exist of orders up to $s \in \{0,1,\dots\} \cup \{\infty\}$.
For $g \in C^0(\mathbb{R}^d,\mathbb{R}^p)$, let 
\begin{align}
M_1(g) := \sup_{\substack{\mathbf{x}, \mathbf{x}' \in \mathbb{R}^d \\ \mathbf{x} \neq \mathbf{x}' }} \frac{\|g(\mathbf{x})-g(\mathbf{x}')\|}{\|\mathbf{x}-\mathbf{x}'\|} \label{eq: def m1}
\end{align}
denote its (possibly infinite) Lipschitz constant.
Recall that a function $g$ is called \emph{Lipschitz} whenever $M_1(g) < \infty$.
The following is a well-known contraction \index{contraction} result for the overdamped Langevin diffusion, whose proof can be found in e.g. \citet{von2005transport}, or see Remark 1 in \citet{eberle2016reflection}:

\begin{theorem}[Contraction of the overdamped Langevin diffusion] \label{thm: strong conc}
Let $\pi$ be strongly log-concave and let $\nabla \log \pi$ be Lipschitz.
Then the overdamped Langevin diffusion \index{Langevin diffusion} \eqref{eq: overdamped lang diff} satisfies the contraction \index{contraction} property \eqref{eq: contraction property} in the \emph{Wasserstein}-1 metric \index{Wasserstein metric}
\begin{align}
\mathsf{d}_{W_1}(\pi,\nu) := \sup_{\substack{ g \in C^0(\mathbb{R}^d,\mathbb{R}) \\ M_1(g) \leq 1 }} \int g(\mathbf{x}) \; \mathrm{d}\pi(\mathbf{x}) - \int g(\mathbf{x}) \; \mathrm{d}\nu(\mathbf{x}) \label{def: Wasserstein}
\end{align}
with $\rho_t = e^{- c t}$ for some $c > 0$ and all $t \in [0,\infty)$.
\end{theorem}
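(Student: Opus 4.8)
The plan is to establish the contraction property \eqref{eq: contraction property} for the overdamped Langevin diffusion \eqref{eq: overdamped lang diff} via a synchronous (reflection-free) coupling argument. Take two copies $\mathbf{X}_t$ and $\mathbf{Y}_t$ of the diffusion driven by the \emph{same} Brownian motion $\mathbf{W}_t$, started from $\mathbf{X}_0 \sim \pi$ and $\mathbf{Y}_0 \sim \nu$. Then the difference process $\mathbf{Z}_t := \mathbf{X}_t - \mathbf{Y}_t$ satisfies the deterministic (the noise cancels under synchronous coupling) ODE $\frac{\mathrm{d}}{\mathrm{d}t}\mathbf{Z}_t = \nabla\log\pi(\mathbf{X}_t) - \nabla\log\pi(\mathbf{Y}_t)$. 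The first step is to differentiate $\|\mathbf{Z}_t\|^2$ in time: $\frac{\mathrm{d}}{\mathrm{d}t}\|\mathbf{Z}_t\|^2 = 2\langle \mathbf{Z}_t, \nabla\log\pi(\mathbf{X}_t) - \nabla\log\pi(\mathbf{Y}_t)\rangle$. By strong log-concavity \eqref{eq:ass-convex} applied with the pair $(\mathbf{X}_t,\mathbf{Y}_t)$, this inner product is bounded above by $-\frac{l}{2}\|\mathbf{Z}_t\|^2$ (noting the sign convention in \eqref{eq:ass-convex}: $-\langle\nabla\log\pi(\mathbf{x})-\nabla\log\pi(\mathbf{x}'),\mathbf{x}-\mathbf{x}'\rangle \geq \frac{l}{2}\|\mathbf{x}-\mathbf{x}'\|^2$). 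Hence $\frac{\mathrm{d}}{\mathrm{d}t}\|\mathbf{Z}_t\|^2 \leq -l\|\mathbf{Z}_t\|^2$, and Grönwall's inequality gives $\|\mathbf{Z}_t\| \leq e^{-lt/2}\|\mathbf{Z}_0\|$ pathwise.

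The second step is to pass from this pathwise contraction to contraction in the Wasserstein-1 metric. I would choose the initial coupling of $(\mathbf{X}_0,\mathbf{Y}_0)$ to be an optimal coupling for $\mathsf{d}_{W_1}(\pi,\nu)$, so that $\mathbb{E}\|\mathbf{Z}_0\| = \mathsf{d}_{W_1}(\pi,\nu)$ (optimal couplings attain the infimum, and for the $W_1$ cost $\|\cdot\|$ this is the Kantorovich--Rubinstein duality in \eqref{def: Wasserstein}; one should note an optimal coupling exists since $\pi,\nu$ have finite first moments, which follows from strong log-concavity of $\pi$ and can be assumed of $\nu$ or the bound is trivially $\infty$). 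Since $(\mathbf{X}_t,\mathbf{Y}_t)$ is then \emph{some} coupling of $T_\pi^t\pi$ and $T_\pi^t\nu$, the definition of $\mathsf{d}_{W_1}$ as an infimum over couplings gives
\begin{align*}
\mathsf{d}_{W_1}(T_\pi^t\pi, T_\pi^t\nu) \leq \mathbb{E}\|\mathbf{X}_t - \mathbf{Y}_t\| = \mathbb{E}\|\mathbf{Z}_t\| \leq e^{-lt/2}\,\mathbb{E}\|\mathbf{Z}_0\| = e^{-lt/2}\,\mathsf{d}_{W_1}(\pi,\nu),
\end{align*}
which is exactly \eqref{eq: contraction property} with $\rho_t = e^{-ct}$ and $c = l/2 > 0$. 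One then invokes the duality form \eqref{def: Wasserstein} to confirm this is the same metric as stated in the theorem.

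The main technical obstacle is justifying the synchronous coupling argument rigorously: one must confirm that strong solutions to \eqref{eq: overdamped lang diff} exist and are unique (guaranteed here since $\nabla\log\pi$ is Lipschitz, giving global Lipschitz coefficients), that the same-noise construction is legitimate, and that the pathwise ODE manipulation for $\|\mathbf{Z}_t\|^2$ is valid (it is, since $\mathbf{Z}_t$ is absolutely continuous in $t$ with the stated derivative). A secondary subtlety is the existence and use of the optimal $W_1$ coupling and the finiteness of first moments; if one prefers to avoid optimal transport machinery, an alternative is to bound $\mathsf{d}_{W_1}(T_\pi^t\pi,T_\pi^t\nu)$ directly through the dual formulation by noting that for Lipschitz $g$ with $M_1(g)\le 1$, $|\mathbb{E} g(\mathbf{X}_t) - \mathbb{E} g(\mathbf{Y}_t)| \leq \mathbb{E}\|\mathbf{Z}_t\|$ under any coupling, then optimising the initial coupling. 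Since the excerpt explicitly permits citing \citet{von2005transport} or Remark 1 of \citet{eberle2016reflection} for this contraction fact, I would present the coupling sketch above as the core argument and defer the measure-theoretic details to those references.
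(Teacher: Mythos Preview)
Your synchronous coupling argument is correct and is precisely the standard proof in the strongly log-concave case. The paper does not actually supply a proof of this theorem; it states the result and defers to \citet{von2005transport} and Remark 1 of \citet{eberle2016reflection}, both of which contain exactly the argument you sketch (Eberle's Remark 1 notes that synchronous coupling suffices under strong log-concavity, with reflection coupling reserved for the more general distantly dissipative setting). Your handling of the sign convention in \eqref{eq:ass-convex}, the Gr\"onwall step giving $c = l/2$, and the passage to $\mathsf{d}_{W_1}$ via an optimal initial coupling are all in order.
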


The infinitesimal generator \index{generator} $\mathcal{L}_\pi$ of the overdamped Langevin diffusion \index{Langevin diffusion} requires $\nabla g$ and $\Delta g$ to exist, but the Wasserstein-1 \index{Wasserstein metric} integral probability metric \index{integral probability metric} contains non-differentiable functions in the test function set $\mathcal{G}$.
This appears to prevent us from running the formal argument in \eqref{eq: discrepancy}.
However, it turns out that we may, without loss of generality, impose additional smoothness on the Wasserstein-1 test function set:

\begin{lemma}[Smoother test functions for Wasserstein-1] \label{lem: smooth test functions}
For $\pi,\nu \in \mathcal{P}(\mathbb{R}^d)$,
\begin{align}
\mathsf{d}_{W_1}(\pi,\nu) = \sup_{\substack{g \in C^\infty(\mathbb{R}^d,\mathbb{R}) \\ M_1(g) \leq 1}} \int g(\mathbf{x}) \; \mathrm{d}\pi(\mathbf{x}) - \int g(\mathbf{x}) \; \mathrm{d}\nu(\mathbf{x}) . \label{eq: smooth Wass}
\end{align}
\end{lemma}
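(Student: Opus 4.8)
The plan is to show that smoothing any Lipschitz test function by convolution with a Gaussian mollifier produces a $C^\infty$ function that is still $1$-Lipschitz and whose integrals against $\pi$ and $\nu$ are close to the original integrals. Since the right-hand side of \eqref{eq: smooth Wass} is an infimum over a \emph{smaller} class of test functions than \eqref{def: Wasserstein}, the inequality $\mathsf{d}_{W_1}(\pi,\nu) \geq \sup_{g \in C^\infty, M_1(g) \leq 1} \int g \,\mathrm{d}\pi - \int g\,\mathrm{d}\nu$ is immediate, and all the work is in the reverse inequality.

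First I would fix $\epsilon > 0$ and choose, from the definition \eqref{def: Wasserstein} of the Wasserstein-1 metric, a function $g \in C^0(\mathbb{R}^d,\mathbb{R})$ with $M_1(g) \leq 1$ such that $\int g\,\mathrm{d}\pi - \int g\,\mathrm{d}\nu > \mathsf{d}_{W_1}(\pi,\nu) - \epsilon$ (or is arbitrarily large, if the metric is infinite — this degenerate case can be handled separately, or ruled out by assuming $\pi,\nu$ have finite first moments so that $\mathsf{d}_{W_1}$ is finite). Next I would define, for a bandwidth $\sigma > 0$, the mollified function
\[
g_\sigma(\mathbf{x}) := \int g(\mathbf{x} - \sigma \mathbf{z}) \,\mathsf{N}(\mathbf{z}; \mathbf{0}, \mathbf{I}_d) \,\mathrm{d}\mathbf{z}.
\]
Differentiation under the integral sign shows $g_\sigma \in C^\infty(\mathbb{R}^d,\mathbb{R})$ (the Gaussian density is smooth with all derivatives integrable, and the Lipschitz bound on $g$ gives the domination needed to justify interchanging derivatives and integral). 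The Lipschitz constant is preserved: for any $\mathbf{x}, \mathbf{x}'$,
\[
|g_\sigma(\mathbf{x}) - g_\sigma(\mathbf{x}')| \leq \int |g(\mathbf{x}-\sigma\mathbf{z}) - g(\mathbf{x}'-\sigma\mathbf{z})| \,\mathsf{N}(\mathbf{z};\mathbf{0},\mathbf{I}_d)\,\mathrm{d}\mathbf{z} \leq \|\mathbf{x}-\mathbf{x}'\|,
\]
so $M_1(g_\sigma) \leq 1$ and $g_\sigma$ is an admissible test function in \eqref{eq: smooth Wass}.

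Then I would control the approximation error. Since $M_1(g) \leq 1$ we have $|g(\mathbf{x}-\sigma\mathbf{z}) - g(\mathbf{x})| \leq \sigma \|\mathbf{z}\|$ pointwise, hence $|g_\sigma(\mathbf{x}) - g(\mathbf{x})| \leq \sigma \int \|\mathbf{z}\| \,\mathsf{N}(\mathbf{z};\mathbf{0},\mathbf{I}_d)\,\mathrm{d}\mathbf{z} = \sigma \, m_d$ where $m_d := \mathbb{E}\|\mathbf{Z}\|$ for $\mathbf{Z} \sim \mathsf{N}(\mathbf{0},\mathbf{I}_d)$ is a finite dimensional constant. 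Therefore
\[
\left| \int g_\sigma \,\mathrm{d}\pi - \int g \,\mathrm{d}\pi \right| \leq \sigma \, m_d, \qquad \left| \int g_\sigma \,\mathrm{d}\nu - \int g \,\mathrm{d}\nu \right| \leq \sigma \, m_d,
\]
so $\int g_\sigma \,\mathrm{d}\pi - \int g_\sigma \,\mathrm{d}\nu \geq \int g\,\mathrm{d}\pi - \int g\,\mathrm{d}\nu - 2\sigma m_d > \mathsf{d}_{W_1}(\pi,\nu) - \epsilon - 2\sigma m_d$. Taking $\sigma \downarrow 0$ and then $\epsilon \downarrow 0$ shows the supremum over the smooth class is at least $\mathsf{d}_{W_1}(\pi,\nu)$, completing the argument.

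The main obstacle is not any single calculation but rather the integrability bookkeeping: one must ensure all the integrals $\int g\,\mathrm{d}\pi$, $\int g_\sigma\,\mathrm{d}\pi$ etc.\ are well-defined and finite, which follows from the Lipschitz bound on $g$ together with finiteness of the first moments of $\pi$ and $\nu$ — and if those moments are infinite, $\mathsf{d}_{W_1}(\pi,\nu) = \infty$ and the identity holds trivially in $[0,\infty]$ since the smooth class already contains functions like $g(\mathbf{x}) = \langle \mathbf{a}, \mathbf{x}\rangle / \|\mathbf{a}\|$ witnessing unboundedness. A secondary technical point is justifying differentiation under the integral sign to all orders; this is standard for Gaussian mollifiers and I would simply cite it rather than verify it in detail.
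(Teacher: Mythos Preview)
Your proof is correct and essentially identical to the paper's: both mollify a near-optimal Lipschitz test function by a Gaussian (the paper writes $g_{\epsilon,\delta}(\mathbf{x}) = \mathbb{E}[g_\epsilon(\mathbf{x}+\delta\mathbf{Z})]$, you write the equivalent convolution $g_\sigma(\mathbf{x}) = \int g(\mathbf{x}-\sigma\mathbf{z})\,\mathsf{N}(\mathbf{z};\mathbf{0},\mathbf{I}_d)\,\mathrm{d}\mathbf{z}$), verify the Lipschitz constant is preserved, and use the uniform bound $|g_\sigma - g| \leq \sigma\,\mathbb{E}\|\mathbf{Z}\|$ to conclude. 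Your additional remarks on the $\mathsf{d}_{W_1}=\infty$ case and the justification of differentiation under the integral are sensible; the paper omits both.
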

\begin{proof}
Since the supremum is being computed over a subset of the test functions in \eqref{def: Wasserstein}, it is immediate that the right-hand side of \eqref{eq: smooth Wass} is upper-bounded by $\mathsf{d}_{W_1}(\pi,\nu)$.
To prove the corresponding lower bound, let $\epsilon \in (0,1)$.
From the definition of $\mathsf{d}_{W_1}$ in \eqref{def: Wasserstein}, there exists $g_\epsilon$ with $M_1(g_\epsilon) \leq 1$ such that $\int g_\epsilon(\mathbf{x}) \mathrm{d}\pi(\mathbf{x}) - \int g_\epsilon(\mathbf{x}) \mathrm{d}\nu(\mathbf{x}) > \mathsf{d}_{W_1}(\pi,\nu) - \epsilon$.
Let $\delta > 0$ and $\mathbf{Z} \sim \mathsf{N}(\mathbf{0},\mathbf{I}_d)$.
Set $g_{\epsilon,\delta}(\mathbf{x}) = \mathbb{E}[g_\epsilon(\mathbf{x} + \delta \mathbf{Z})]$.
Then $g_{\epsilon,\delta} \in C^\infty(\mathbb{R}^d,\mathbb{R})$ and the Lipschitz constant of $g_{\epsilon,\delta}$ is not greater than that of $g_\epsilon$, since for all $\mathbf{x},\mathbf{x}' \in \mathbb{R}^d$,
\begin{align*}
| g_{\epsilon,\delta}(\mathbf{x}) - g_{\epsilon,\delta}(\mathbf{x}') | & 
= | \mathbb{E}[ g_\epsilon(\mathbf{x} + \delta \mathbf{Z}) - g_\epsilon(\mathbf{x}' + \delta \mathbf{Z}) ] | \\
& \leq M_1(g_\epsilon) \|\mathbf{x} - \mathbf{x}'\| .
\end{align*}
Thus $g_{\epsilon,\delta}$ is an element of the test set over which the supremum is computed on the right hand side of \eqref{eq: smooth Wass}.
From
\begin{align}
    |g_{\epsilon,\delta}(\mathbf{x}) - g_\epsilon(\mathbf{x})| 
    = | \mathbb{E}[ g_\epsilon(\mathbf{x} + \delta \mathbf{Z}) - g_\epsilon(\mathbf{x})] |
    & \leq \delta \mathbb{E}[\|\mathbf{Z}\|] M_1(g_\epsilon) ,
\end{align}
it follows that $g_{\epsilon,\delta}$ distinguishes between $\pi$ and $\nu$ almost as well as $g_\epsilon$, in the sense that
\begin{align*}
& \hspace{-10pt} \int g_{\epsilon,\delta}(\mathbf{x}) \; \mathrm{d}\pi(\mathbf{x}) - \int g_{\epsilon,\delta}(\mathbf{x}) \; \mathrm{d}\nu(\mathbf{x}) \\
& \geq \int g_\epsilon(\mathbf{x}) \; \mathrm{d}\pi(\mathbf{x}) - \int g_\epsilon(\mathbf{x}) \; \mathrm{d}\nu(\mathbf{x}) - 2 \delta \mathbb{E}[\|\mathbf{Z}\|] M_1(g_\epsilon) \\
& > \{ \mathsf{d}_{W_1}(\pi,\nu) - \epsilon - 2 \delta \mathbb{E}[\|\mathbf{Z}\|] \} ,
\end{align*}
which can be made arbitrarily close to $\mathsf{d}_{W_1}(\pi,\nu)$ by taking $\epsilon, \delta \rightarrow 0$.
Thus the supremum in \eqref{eq: smooth Wass} coincides with $\mathsf{d}_{W_1}(\pi,\nu)$, as claimed.
\end{proof}

\smallskip

To summarise, our formal argument has led to a bound 
\begin{align}
D_\pi(\nu) = \frac{1}{c} \sup_{\substack{g \in C^\infty(\mathbb{R}^d,\mathbb{R}) \\ M_1(g) \leq 1}} \int \Delta g(\mathbf{x}) + \langle \nabla \log \pi(\mathbf{x}) , \nabla g(\mathbf{x}) \rangle \; \mathrm{d}\nu(\mathbf{x})  \label{eq: smooth Wass ksd}
\end{align}
on the Wasserstein-1 \index{Wasserstein metric} distance between $\pi$ and $\nu$ that holds in the strongly log-concave \index{log-concave} setting of Theorem \ref{thm: strong conc}.
The route to obtaining this bound is instructive, and the lessons that we learned will be exploited in the subsequent sections, but unfortunately, the evaluation of this discrepancy requires a challenging optimisation problem to be solved.
In the case where $\nu$ has finite support, the objective function depends on $g$ only through its derivatives at the nodes in the support.
This observation enabled \citet{gorham2015measuring} to cast a closely related optimisation problem as a collection of linear programmes, which then can be numerically solved.
The interested reader is referred to \citet{gorham2015measuring} for further detail.
However, the reliance on numerical methods to evaluate \eqref{eq: smooth Wass ksd} limits the utility of \eqref{eq: smooth Wass ksd}.
Instead, we will proceed in Section \ref{subsec: test functions} to consider alternative sets of test functions for which the corresponding optimisation problem can be \emph{analytically} solved.

\subsection{Kernel Stein Discrepancy}
\label{subsec: test functions}

The aim of this section is to consider alternatives to the Wasserstein-1 distance, corresponding to alternative sets $\mathcal{G}$ of test functions defining the integral probability metric \eqref{eq: ipm}, for which the optimisation problem in \eqref{eq: general discrepancy} can be explicitly solved using the kernel trick advertised in Section \ref{subsec: kernel trick}.
However, the use of alternative metrics leads us to depart from the argument of Section \ref{subsec: formal arg}, which was based on the Wasserstein-1 contraction \index{contraction} result of Theorem \ref{thm: strong conc}, raising the question of whether the resulting discrepancy is still a meaningful convergence bound.
This question will be answered positively in Section \ref{subsec: conv control kernel}. 

To simplify the discussion, we start by considering vector fields as test functions, allowing us to reduce the order of the differential operators involved.
Thus, in the general notation of \eqref{eq: stein discrepancy}, we consider 
\begin{align}
(\mathcal{A}_\pi \mathbf{g})(\mathbf{x}) = (\nabla \cdot \mathbf{g})(\mathbf{x}) + \langle (\nabla \log \pi)(\mathbf{x}) , \mathbf{g}(\mathbf{x}) \rangle, \label{eq: curly Ap}
\end{align}
which is a \emph{first} order differential operator and the elements $\mathbf{g}$ are now vector fields $\mathbf{g} : \mathbb{R}^d \rightarrow \mathbb{R}^d$.
The discussion in Section \ref{subsec: overdamped} corresponds to $\mathbf{g}(\mathbf{x}) = (\nabla g)(\mathbf{x})$ for twice-differentiable $g : \mathbb{R}^d \rightarrow \mathbb{R}$.
Here, and in the sequel, for ease of presentation, we have subsumed the constant factor $1/c$ into the definition of the vector fields $\mathbf{g}$.
Now, if we are to consider alternative test functions $\mathbf{g}$, the minimum requirement on $\mathbf{g}$ is that $\mathcal{A}_\pi \mathbf{g}$ integrates to 0 with respect to $\pi$, to ensure that the discrepancy we construct vanishes when $\pi$ and $\nu$ are equal.
To this end, we have the following result:

\begin{lemma} \label{lem: Stein op int by parts}
Let $\mathbf{g} : \mathbb{R}^d \rightarrow \mathbb{R}^d$ satisfy $\mathbf{g} \in \mathcal{L}^1(\pi)$ and $\mathcal{A}_\pi \mathbf{g} \in \mathcal{L}^1(\pi)$, where $\mathcal{A}_\pi$ is defined in \eqref{eq: curly Ap}.
Then $\int (\mathcal{A}_\pi \mathbf{g})(\mathbf{x}) \; \mathrm{d}\pi(\mathbf{x}) = 0$. 
\end{lemma}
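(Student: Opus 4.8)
The plan is to prove this by integration by parts (the divergence theorem on $\mathbb{R}^d$), exploiting the special structure of the Stein operator $\mathcal{A}_\pi$: the term $\langle \nabla \log \pi, \mathbf{g}\rangle$ is exactly what appears when one differentiates the product $\pi \mathbf{g}$. Concretely, observe that for each component, $\partial_i(\pi(\mathbf{x}) g_i(\mathbf{x})) = (\partial_i \pi)(\mathbf{x}) g_i(\mathbf{x}) + \pi(\mathbf{x}) (\partial_i g_i)(\mathbf{x}) = \pi(\mathbf{x})\left[ (\partial_i \log \pi)(\mathbf{x}) g_i(\mathbf{x}) + (\partial_i g_i)(\mathbf{x})\right]$, using $\partial_i \pi = \pi \, \partial_i \log \pi$. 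Summing over $i$ gives the pointwise identity
\begin{align*}
\nabla \cdot (\pi \mathbf{g})(\mathbf{x}) = \pi(\mathbf{x}) \left[ (\nabla \cdot \mathbf{g})(\mathbf{x}) + \langle (\nabla \log \pi)(\mathbf{x}), \mathbf{g}(\mathbf{x})\rangle \right] = \pi(\mathbf{x}) (\mathcal{A}_\pi \mathbf{g})(\mathbf{x}).
\end{align*}
Therefore $\int (\mathcal{A}_\pi \mathbf{g})(\mathbf{x}) \, \mathrm{d}\pi(\mathbf{x}) = \int \nabla \cdot (\pi \mathbf{g})(\mathbf{x}) \, \mathrm{d}\mathbf{x}$, and the claim reduces to showing that this integral of a total divergence vanishes.

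Next I would justify that $\int_{\mathbb{R}^d} \nabla \cdot (\pi \mathbf{g})(\mathbf{x}) \, \mathrm{d}\mathbf{x} = 0$. The clean way is via the divergence theorem on balls $B_R = \{\|\mathbf{x}\| \le R\}$: for each $R$, $\int_{B_R} \nabla \cdot (\pi \mathbf{g}) \, \mathrm{d}\mathbf{x} = \int_{\partial B_R} \pi(\mathbf{x}) \langle \mathbf{g}(\mathbf{x}), \mathbf{n}(\mathbf{x})\rangle \, \mathrm{d}S(\mathbf{x})$, where $\mathbf{n}$ is the outward unit normal. One then argues that the boundary term tends to $0$ along a suitable sequence $R_k \to \infty$. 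The integrability hypotheses are what make this work: since $\mathbf{g} \in \mathcal{L}^1(\pi)$ means $\int \pi(\mathbf{x}) \|\mathbf{g}(\mathbf{x})\| \, \mathrm{d}\mathbf{x} < \infty$, the function $\mathbf{x} \mapsto \pi(\mathbf{x}) \|\mathbf{g}(\mathbf{x})\|$ is integrable on $\mathbb{R}^d$, hence (writing its integral over shells in polar coordinates) there must exist a sequence $R_k \to \infty$ with $\int_{\partial B_{R_k}} \pi \|\mathbf{g}\| \, \mathrm{d}S \to 0$; otherwise the shell integrals would be bounded below by $c/R$ for large $R$ and the total integral would diverge. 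Along this sequence the boundary term is bounded in absolute value by $\int_{\partial B_{R_k}} \pi \|\mathbf{g}\| \, \mathrm{d}S \to 0$. Combining with $\mathcal{A}_\pi \mathbf{g} \in \mathcal{L}^1(\pi)$, which guarantees the left-hand integral $\int_{B_R}$ converges to $\int_{\mathbb{R}^d}$ as $R \to \infty$ by dominated convergence, we conclude $\int (\mathcal{A}_\pi \mathbf{g}) \, \mathrm{d}\pi = 0$.

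The main obstacle, and the place where care is needed, is the decay-at-infinity / boundary-term argument — this is the only genuinely non-trivial step, and it is where the two integrability assumptions $\mathbf{g} \in \mathcal{L}^1(\pi)$ and $\mathcal{A}_\pi \mathbf{g} \in \mathcal{L}^1(\pi)$ are actually used. The book's stated convention is to suppress measure-theoretic technicalities (it says sets and functions are assumed measurable, densities are w.r.t. Lebesgue measure, etc.), so I would expect to present the divergence-theorem step somewhat informally: state the pointwise identity $\nabla \cdot (\pi \mathbf{g}) = \pi \, \mathcal{A}_\pi \mathbf{g}$, invoke the divergence theorem on $B_R$, and note that the surface integral vanishes along a subsequence because $\pi \|\mathbf{g}\|$ is globally integrable. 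One should also implicitly assume $\mathbf{g}$ is $C^1$ (or at least that $\nabla \cdot \mathbf{g}$ is defined, as the hypothesis that $\mathcal{A}_\pi \mathbf{g}$ makes sense already presupposes), and that $\pi \in C^1$ — consistent with the standing assumptions of the chapter that $\log \pi$ is differentiable. I would keep the write-up to a short paragraph of algebra plus a short paragraph on the limiting argument, flagging that full rigour would require a mild tail condition but that this is exactly captured by the $\mathcal{L}^1(\pi)$ hypotheses.
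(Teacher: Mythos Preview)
Your approach is correct and shares the same key identity $\nabla \cdot (\pi \mathbf{g}) = \pi \, \mathcal{A}_\pi \mathbf{g}$ with the paper, but the limiting argument differs. You apply the divergence theorem directly on balls $B_R$ and extract a subsequence $R_k \to \infty$ along which the surface integrals $\int_{\partial B_{R_k}} \pi \|\mathbf{g}\| \, \mathrm{d}S$ vanish, using the integrability of $\pi\|\mathbf{g}\|$ via the coarea formula. The paper instead introduces a sequence of smooth compactly supported cutoff functions $\varphi_m$ with $\varphi_m \equiv 1$ on $\{\|\mathbf{x}\| \leq m\}$ and $\sup_{\mathbf{x}}\|\nabla \varphi_m(\mathbf{x})\| < m^{-1}$, applies the divergence theorem to the compactly supported field $\varphi_m \pi \mathbf{g}$ (so no boundary term at all), and then passes to the limit: the product-rule cross term $\int \langle \nabla \varphi_m, \pi \mathbf{g}\rangle \, \mathrm{d}\mathbf{x}$ is bounded by $m^{-1} \int \pi\|\mathbf{g}\|\,\mathrm{d}\mathbf{x} \to 0$, while $\int \varphi_m \nabla\cdot(\pi\mathbf{g})\,\mathrm{d}\mathbf{x} \to \int \nabla\cdot(\pi\mathbf{g})\,\mathrm{d}\mathbf{x}$ by dominated convergence.

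The cutoff-function route is marginally cleaner: it avoids the subsequence extraction entirely (every $m$ works, not just a carefully chosen sequence of radii), and it sidesteps any implicit regularity needed to make sense of surface integrals on $\partial B_R$. Your route is more geometric and perhaps more intuitive, but the subsequence step, while correct, is a small extra wrinkle. Both arguments use the two $\mathcal{L}^1(\pi)$ hypotheses in exactly the same places: $\mathbf{g} \in \mathcal{L}^1(\pi)$ kills the ``boundary'' contribution (surface term for you, gradient-of-cutoff term for the paper), and $\mathcal{A}_\pi \mathbf{g} \in \mathcal{L}^1(\pi)$ justifies the dominated convergence step.
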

\begin{proof}
First notice that
\begin{align*}
\int (\mathcal{A}_\pi \mathbf{g})(\mathbf{x}) \; \mathrm{d}\pi(\mathbf{x}) 
 = \int \frac{1}{\pi(\mathbf{x})} (\nabla \cdot (\pi \mathbf{g}))(\mathbf{x}) \; \mathrm{d}\pi(\mathbf{x})
 = \int (\nabla \cdot (\pi \mathbf{g}))(\mathbf{x}) \; \mathrm{d}\mathbf{x} ,
\end{align*}
which suggests using the divergence theorem to calculate this integral.
To avoid the explicit calculation of surface integrals, which would otherwise be required when using the divergence theorem, we will first approximate the vector field $\pi \mathbf{g}$ using another vector field with compact support.
Let $\varphi_m : \mathbb{R}^d \rightarrow \mathbb{R}$ denote the $m$th term in a sequence of compactly supported functions with $\varphi_m(\mathbf{x}) = 1$ for $\|\mathbf{x}\| \leq m$, $\sup_{\mathbf{x}} \|\nabla \varphi_m(\mathbf{x})\| < m^{-1}$ for each $m \in \mathbb{N}$, and $\varphi_m(\mathbf{x}) \uparrow 1$ for each $\mathbf{x} \in \mathbb{R}^d$.
From the divergence theorem \index{divergence theorem} applied to a vector field with compact support,
\begin{align*}
0 & = \int (\nabla \cdot (\varphi_m \pi \mathbf{g}))(\mathbf{x}) \; \mathrm{d}\mathbf{x} \\
& = \int \langle \nabla \varphi_m(\mathbf{x}) , (\pi \mathbf{g})(\mathbf{x}) \rangle \; \mathrm{d}\mathbf{x} + \int \varphi_m(\mathbf{x}) (\nabla \cdot (\pi \mathbf{g}))(\mathbf{x}) \; \mathrm{d}\mathbf{x} .
\end{align*}
Since $\varphi_m \uparrow 1$ pointwise and $\nabla \cdot (\pi \mathbf{g}) \in \mathcal{L}^1(\mathbb{R}^d)$, from the dominated convergence theorem \index{dominated convergence theorem} 
\begin{align*}
\int \varphi_m(\mathbf{x}) (\nabla \cdot (\pi \mathbf{g}))(\mathbf{x}) \; \mathrm{d}\mathbf{x} \rightarrow \int (\nabla \cdot (\pi \mathbf{g}))(\mathbf{x}) \; \mathrm{d}\mathbf{x} .
\end{align*}
On the other hand, using Cauchy--Schwarz \index{Cauchy--Schwarz} and the assumption that $\pi \mathbf{g} \in \mathcal{L}^1(\mathbb{R}^d)$,
\begin{align*}
\left| \int \langle \nabla \varphi_m(\mathbf{x}) , (\pi \mathbf{g})(\mathbf{x}) \rangle \; \mathrm{d}\mathbf{x} \right| \leq \left( \sup_{\mathbf{x}} \|\nabla \varphi_m (\mathbf{x}) \| \right) \int \|(\pi \mathbf{g})(\mathbf{x})\| \; \mathrm{d}\mathbf{x} \rightarrow 0 .
\end{align*}
Thus we have shown that
\begin{align*}
\int (\mathcal{A}_\pi \mathbf{g})(\mathbf{x}) \; \mathrm{d}\pi(\mathbf{x})
= \int (\nabla \cdot (\pi \mathbf{g}))(\mathbf{x}) \; \mathrm{d}\mathbf{x} = 0,
\end{align*}
as claimed. 
\end{proof}

Our attention now turns to selecting a set of vector fields $\mathbf{g}$ for which Lemma \ref{lem: Stein op int by parts} holds and for which the optimisation problem in \eqref{eq: smooth Wass ksd} can be explicitly solved.
One approach to this task is to use a \emph{matrix-valued reproducing kernel}\index{kernel (matrix-valued)}, meaning a function $\mathsf{K} : \mathbb{R}^d \times \mathbb{R}^d \rightarrow \mathbb{R}^{d \times d}$ that is
\begin{enumerate}
    \item \emph{transpose-symmetric}; $\mathsf{K}(\mathbf{x},\mathbf{x}') = \mathsf{K}(\mathbf{x}',\mathbf{x})^\top$ for all $\mathbf{x},\mathbf{x}' \in \mathbb{R}^d$
    \item \emph{positive semi-definite}; 
    $$
    \sum_{k=1}^n \sum_{k'=1}^n \langle \mathbf{c}_k , \mathsf{K}(\mathbf{x}_k,\mathbf{x}_{k'}) \mathbf{c}_{k'} \rangle \geq 0
    $$
    for all $\mathbf{x}_1,\dots,\mathbf{x}_n \in \mathbb{R}^d$, all $\mathbf{c}_1,\dots,\mathbf{c}_n \in \mathbb{R}^d$, and all $n \in \mathbb{N}$. 
\end{enumerate} 
For clarity, we emphasise that $\langle \mathbf{c},\mathbf{c}' \rangle = \mathbf{c}^\top \mathbf{c}'$ is the usual Euclidean inner product \index{inner product} on $\mathbb{R}^n$; in the sequel we will use subscripts to distinguish other inner products as they are introduced.
Let $\mathsf{K}_{\mathbf{x}} = \mathsf{K}(\cdot,\mathbf{x})$, so that $\mathsf{K}_{\mathbf{x}} : \mathbb{R}^d \rightarrow \mathbb{R}^{d \times d}$ is matrix-valued.
For vector-valued functions $\mathbf{g} = \sum_{k=1}^n \mathsf{K}_{\mathbf{x}_k} \mathbf{c}_k$ and $\mathbf{g}' = \sum_{l=1}^m \mathsf{K}_{\mathbf{x}_l'} \mathbf{c}_l'$, define an inner product
\begin{align}
    \langle \mathbf{g} , \mathbf{g}' \rangle_{\mathcal{H}(\mathsf{K})} = \sum_{k=1}^n \sum_{l=1}^m \langle \mathbf{c}_k , \mathsf{K}(\mathbf{x}_k , \mathbf{x}_l') \mathbf{c}_l' \rangle . \label{eq: HK inn prod}
\end{align}
There is a unique Hilbert space \index{reproducing kernel Hilbert space (vector-valued)} reproduced by $\mathsf{K}$, denoted $\mathcal{H}(\mathsf{K})$; see Proposition 2.1 of \citet{carmeli2006vector}.
This space is characterised as
\begin{align*}
    \mathcal{H}(\mathsf{K}) = \overline{\mathrm{span}}\{ \mathsf{K}_{\mathbf{x}} \mathbf{c} : \mathbf{x}, \mathbf{c} \in \mathbb{R}^d \}
\end{align*}
where here the closure is taken with respect to the inner product in \eqref{eq: HK inn prod}.
The resulting Hilbert space satisfies the \emph{reproducing property} \index{reproducing property}
\begin{align*}
    \langle \mathbf{g} , \mathsf{K}_{\mathbf{x}} \mathbf{c} \rangle_{\mathcal{H}(\mathsf{K})} = \langle \mathbf{g}(\mathbf{x}) , \mathbf{c} \rangle
\end{align*}
for all $\mathbf{g} \in \mathcal{H}(\mathsf{K})$ and $\mathbf{x}, \mathbf{c} \in \mathbb{R}^d$, which is a particular instance of the kernel trick \index{kernel trick} discussed in Section \ref{subsec: kernel trick}.
In what follows, it is convenient to overload notation, such that the reproducing property becomes $\langle \mathbf{g}, \mathsf{K}_{\mathbf{x}} \rangle_{\mathcal{H}(\mathsf{K})} = \mathbf{g}(\mathbf{x})$ in an informal shorthand.

Assuming sufficient regularity that 
\begin{eqnarray*}
F_\nu : \mathcal{H}(\mathsf{K}) & \rightarrow & \mathbb{R} \\
\mathbf{g} & \mapsto & \int (\mathcal{A}_\pi \mathbf{g})(\mathbf{x}) \; \mathrm{d}\nu(\mathbf{x})    
\end{eqnarray*}
is a bounded linear functional, the Riesz representation theorem \index{Riesz representation theorem} tells us that there is a unique element $\mu_\nu$ such that $F_\nu(\cdot) = \langle \mu_\nu , \cdot \rangle_{\mathcal{H}(\mathsf{K})}$.
Using our reproducing property shorthand,
\begin{align*}
    \mu_\nu(\mathbf{x}') & = \langle \mu_\nu , \mathsf{K}_{\mathbf{x}'} \rangle_{\mathcal{H}(\mathsf{K})} 
    = F_\nu(\mathsf{K}_{\mathbf{x}'})
    = \int \mathcal{A}_\pi^{\mathbf{x}} \mathsf{K}(\mathbf{x},\mathbf{x}') \mathrm{d}\nu(\mathbf{x}) ,
\end{align*}
where the superscript in $\mathcal{A}_\pi^{\mathbf{x}}$ indicates the action of $\mathcal{A}_\pi$ on the $\mathbf{x}$ argument, collapsing the matrix-valued function $\mathsf{K}_{\mathbf{x}}$ into the vector-valued function $\mathcal{A}_\pi^{\mathbf{x}} \mathsf{K}_{\mathbf{x}}$.
It follows that, if we consider the collection of vector fields $\mathbf{g}$ within the unit ball of $\mathcal{H}(\mathsf{K})$, our optimisation problem can be explicitly solved:
\begin{align}
\sup_{ \|\mathbf{g}\|_{\mathcal{H}(\mathsf{K})} \leq 1 } \int (\mathcal{A}_\pi \mathbf{g})(\mathbf{x}) \; \mathrm{d}\nu(\mathbf{x}) = \sup_{\|\mathbf{g}\|_{\mathcal{H}(\mathsf{K})} \leq 1} \langle \mathbf{g} , \mu_\nu \rangle_{\mathcal{H}(\mathsf{K})} = \|\mu_\nu\|_{\mathcal{H}(\mathsf{K})} , \label{eq: solve rkhs optimise}
\end{align}
where, again from the reproducing property and the assumption that $F_\nu$ is a bounded linear functional,
\begin{align}
\|\mu_\nu\|_{\mathcal{H}(\mathsf{K})}^2 & = \left\langle \int \mathcal{A}_\pi^{\mathbf{x}} \mathsf{K}_{\mathbf{x}} \; \mathrm{d}\nu(\mathbf{x}) , \int \mathcal{A}_\pi^{\mathbf{x}'} \mathsf{K}_{\mathbf{x}'} \; \mathrm{d}\nu(\mathbf{x}') \right\rangle_{\mathcal{H}(\mathsf{K})} \nonumber  \\
& = \iint \mathcal{A}_\pi^{\mathbf{x}} \mathcal{A}_\pi^{\mathbf{x}'} \langle  \mathsf{K}_{\mathbf{x}} , \mathsf{K}_{\mathbf{x}'} \rangle_{\mathcal{H}(\mathsf{K})} \; \mathrm{d}\nu(\mathbf{x}) \mathrm{d}\nu(\mathbf{x}') \nonumber \\
& = \iint \mathcal{A}_\pi^{\mathbf{x}} \mathcal{A}_\pi^{\mathbf{x}'} \mathsf{K}(\mathbf{x},\mathbf{x}') \; \mathrm{d}\nu(\mathbf{x}) \mathrm{d}\nu(\mathbf{x}') . \nonumber
\end{align}
It is convenient to introduce the shorthand $k_\pi(\mathbf{x},\mathbf{x}') := \mathcal{A}_\pi^{\mathbf{x}} \mathcal{A}_\pi^{\mathbf{x}'} \mathsf{K}(\mathbf{x},\mathbf{x}')$, whence we obtain the discrepancy
\begin{align}
\mathcal{D}_{k_\pi}(\nu) := \sqrt{ \iint k_\pi(\mathbf{x},\mathbf{x}') \; \mathrm{d}\nu(\mathbf{x}) \mathrm{d}\nu(\mathbf{x}') } ,
\label{eq: sd double integral}
\end{align}
which is exactly of the form we sought in \eqref{eq: rkhs diagnostic}.
This was termed a \emph{kernel Stein discrepancy} \index{kernel Stein discrepancy} in \citet{chwialkowski2016kernel,liu2016kernelized}, due to its dependence on a reproducing kernel and its characterisation as a Stein discrepancy \eqref{thm: strong conc}.
A second consequence of \eqref{eq: solve rkhs optimise} is that we can view the kernel Stein discrepancy as a generalised moment \index{generalised moment}
\begin{align*}
    \mathcal{D}_{k_\pi}(\nu_n) = \left\| \frac{1}{n} \sum_{k=1}^n \left. \mathcal{A}_\pi^{\mathbf{x}} \mathsf{K}_{\mathbf{x}} \right|_{\mathbf{x} = \mathbf{x}_k} \; \mathrm{d}\nu(\mathbf{x}) \right\|_{\mathcal{H}(\mathsf{K})} ,
\end{align*}
which takes a similar form to \eqref{eq: bias diagnostic} from Section \ref{subsec: bias diagn}, albeit the generalised moment can now be infinite-dimensional by virtue of taking values in $\mathcal{H}(\mathsf{K})$.
The function $\mathcal{A}_\pi^{\mathbf{x}} \mathsf{K}_{\mathbf{x}}$ is indeed a member of $\mathcal{H}(\mathsf{K})$ due to the \emph{differential reproducing property} \citep[see][Appendix C6]{barp2022targeted}.
The kernel Stein discrepancy has the potential to be a useful bias diagnostic, but first we need to establish that it has our basic desired functionality, such as being equal to 0 if and only if $\pi$ and $\nu$ are identical.
Clearly, then the choice of kernel $\mathsf{K}$ will be important, so we address this point next.

One of the simplest forms of matrix-valued reproducing kernel \index{kernel (matrix-valued)} is $\mathsf{K}(\mathbf{x},\mathbf{x}') = \mathsf{k}(\mathbf{x},\mathbf{x}') \mathbf{I}_d$, where $\mathsf{k}$ is a scalar-valued reproducing kernel\index{kernel (scalar-valued)}.
This choice leads to the explicit formula, due to \citet{oates2017control}:
\begin{align}
\mathsf{k}_\pi(\mathbf{x},\mathbf{x}') & = \nabla_{\mathbf{x}} \cdot \nabla_{\mathbf{x}'} \mathsf{k}(\mathbf{x},\mathbf{x}') + \langle \nabla_{\mathbf{x}} \log \pi(\mathbf{x}) , \nabla_{\mathbf{x}'} \mathsf{k}(\mathbf{x},\mathbf{x}') \rangle \nonumber \\
& \qquad + \langle \nabla_{\mathbf{x}'} \log \pi(\mathbf{x}') , \nabla_{\mathbf{x}} \mathsf{k}(\mathbf{x},\mathbf{x}') \rangle \nonumber \\
& \qquad + \langle \nabla_{\mathbf{x}} \log \pi(\mathbf{x}) , \nabla_{\mathbf{x}'} \log \pi(\mathbf{x}') \rangle \mathsf{k}(\mathbf{x},\mathbf{x}') \label{eq: AAk}
\end{align}
The function $\mathsf{k}_\pi$ is automatically a scalar-valued reproducing kernel \citep[see][Theorem 2.6]{barp2022targeted}, and $\mathsf{k}_\pi(\mathbf{x},\mathbf{x}') = \mathcal{A}_\pi^{\mathbf{x}} \mathbf{g}_{\mathbf{x}'}(\mathbf{x})$ where $\mathbf{g}_{\mathbf{x}'}(\mathbf{x}) := \mathcal{A}_\pi^{\mathbf{x}'} \mathsf{K}(\mathbf{x},\mathbf{x}') \in \mathcal{H}(\mathsf{K})$.
Thus, \emph{if} the matrix-valued reproducing kernel $\mathsf{K}$ is selected such that the conditions $\mathbf{g} \in \mathcal{L}^1(\pi)$ and $\mathcal{A}_\pi \mathbf{g} \in \mathcal{L}^1(\pi)$ of Lemma \ref{lem: Stein op int by parts} are satisfied for each $\mathbf{g} \in \mathcal{H}(\mathsf{K})$, it follows that $\int \mathsf{k}_\pi(\mathbf{x},\mathbf{x}') \mathrm{d}\pi(\mathbf{x}) = \int \mathcal{A}_\pi^{\mathbf{x}} \mathbf{g}_{\mathbf{x}'}(\mathbf{x}) \; \mathrm{d}\pi(\mathbf{x}) = 0$ for all $\mathbf{x}' \in \mathbb{R}^d$.
Sufficient conditions for satisfying the preconditions of Lemma \ref{lem: Stein op int by parts} will shortly be discussed.

In the case where $\nu = \sum_{k=1}^n w_k \delta_{\mathbf{x}_k}$ is a discrete distribution, \eqref{eq: sd double integral} reduces to the double sum
\begin{align}
\mathcal{D}_{\mathsf{k}_\pi}(\nu) = \sqrt{ \sum_{k=1}^n \sum_{k'=1}^n w_k w_{k'}  \mathsf{k}_\pi(\mathbf{x}_k,\mathbf{x}_{k'})  } .
\label{eq: sd double sum}
\end{align}
For the degenerate reproducing kernel with $\mathsf{k}(\mathbf{x},\mathbf{x}') = 1$ for all $\mathbf{x},\mathbf{x}' \in \mathbb{R}^d$ and uniform weights $w_k = n^{-1}$, the kernel Stein discrepancy \index{kernel Stein discrepancy} in \eqref{eq: sd double sum} reduces to the simple form
\begin{align*}
    \mathcal{D}_{\mathsf{k}_\pi}(\nu) = \left\| \frac{1}{n} \sum_{k=1}^n \nabla_{\mathbf{x}} \log \pi(\mathbf{x}_k) \right\|  ,
\end{align*}
which is precisely the bias diagnostic from \eqref{eq: bias diagnostic}.
In this case, $\mathcal{H}(\mathsf{K})$ is the Hilbert space of constant vector fields on $\mathbb{R}^d$ with norm $\|\mathbf{g}\|_{\mathcal{H}(\mathsf{K})} = \|\boldsymbol{\beta}\|$ where $\mathbf{g}(\mathbf{x}) = \boldsymbol{\beta}$ for all $\mathbf{x} \in \mathbb{R}^d$, which is insufficiently rich to determine whether or not $\pi$ and $\nu$ are close or equal.
However, with a suitable choice of reproducing kernel the kernel Stein discrepancy can distinguish between different distributions and indeed provide a form of convergence control\index{convergence control}, as we explain in Section \ref{subsec: conv control kernel}. 

First, however, we must ensure the conditions of Lemma \ref{lem: Stein op int by parts} are satisfied, so that $\mathcal{D}_{\mathsf{k}_\pi}(\nu) = 0$ when $\pi$ and $\nu$ are equal.
This can be achieved using the following result:

\begin{lemma}
\label{lem: LP conditions on K}
If $\mathsf{K}(\mathbf{x},\mathbf{x}') = \mathsf{k}(\mathbf{x},\mathbf{x}') \mathbf{I}_d$ with $\mathsf{k}(\mathbf{x},\mathbf{x}') = \phi(\mathbf{x}-\mathbf{x}')$ for some $\phi \in C^2(\mathbb{R}^d,\mathbb{R})$, then $\nabla \log \pi \in \mathcal{L}^1(\pi)$ implies $\mathbf{g} \in \mathcal{L}^1(\pi)$ and $\mathcal{A}_\pi \mathbf{g} \in \mathcal{L}^1(\pi)$ for all $\mathbf{g} \in \mathcal{H}(\mathsf{K})$.
\end{lemma}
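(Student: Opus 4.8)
The plan is to show that every $\mathbf{g} \in \mathcal{H}(\mathsf{K})$ has at most linear growth (controlled by its RKHS norm), which combined with the standing assumption $\nabla \log \pi \in \mathcal{L}^1(\pi)$ will give $\mathbf{g} \in \mathcal{L}^1(\pi)$ and $\mathcal{A}_\pi \mathbf{g} \in \mathcal{L}^1(\pi)$. First I would record the structural consequences of the translation-invariant form $\mathsf{k}(\mathbf{x},\mathbf{x}') = \phi(\mathbf{x}-\mathbf{x}')$ with $\phi \in C^2$: the function $\phi$, being $C^2$ and (as a reproducing kernel) bounded with $\phi(\mathbf{0}) = \sup \phi$, has bounded value $\|\phi\|_\infty < \infty$, and its first and second derivatives at the origin exist; in particular $\nabla\phi(\mathbf{0}) = \mathbf{0}$ and the Hessian $\nabla^2 \phi(\mathbf{0})$ is well-defined. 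One then uses the differential reproducing property (cited in the excerpt) to express, for $\mathbf{g} \in \mathcal{H}(\mathsf{K})$, both $\mathbf{g}(\mathbf{x})$ and its Jacobian $\nabla \mathbf{g}(\mathbf{x})$ as inner products against appropriate elements of $\mathcal{H}(\mathsf{K})$, namely $g_i(\mathbf{x}) = \langle \mathbf{g}, \mathsf{K}_{\mathbf{x}} \mathbf{e}_i \rangle_{\mathcal{H}(\mathsf{K})}$ and $\partial_j g_i(\mathbf{x}) = \langle \mathbf{g}, \partial_{x_j}(\mathsf{K}_{\mathbf{x}} \mathbf{e}_i) \rangle_{\mathcal{H}(\mathsf{K})}$.

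\textbf{Key steps.} Applying Cauchy--Schwarz in $\mathcal{H}(\mathsf{K})$ to these identities gives $\|\mathbf{g}(\mathbf{x})\| \le \|\mathbf{g}\|_{\mathcal{H}(\mathsf{K})} \sup_i \|\mathsf{K}_{\mathbf{x}} \mathbf{e}_i\|_{\mathcal{H}(\mathsf{K})}$ and similarly for $\nabla \mathbf{g}$. Now $\|\mathsf{K}_{\mathbf{x}} \mathbf{e}_i\|_{\mathcal{H}(\mathsf{K})}^2 = \langle \mathbf{e}_i, \mathsf{K}(\mathbf{x},\mathbf{x}) \mathbf{e}_i \rangle = \mathsf{k}(\mathbf{x},\mathbf{x}) = \phi(\mathbf{0}) = \|\phi\|_\infty$, so $\mathbf{g}$ is in fact \emph{bounded}: $\|\mathbf{g}(\mathbf{x})\| \le \sqrt{d}\,\|\phi\|_\infty^{1/2}\|\mathbf{g}\|_{\mathcal{H}(\mathsf{K})}$. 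Likewise $\|\partial_{x_j}(\mathsf{K}_{\mathbf{x}}\mathbf{e}_i)\|_{\mathcal{H}(\mathsf{K})}^2$ reduces, via the differential reproducing property, to a mixed partial derivative of $\mathsf{k}$ evaluated on the diagonal, which equals $-[\nabla^2\phi(\mathbf{0})]_{jj}$ (up to sign) and is therefore a finite constant independent of $\mathbf{x}$; hence $\nabla \cdot \mathbf{g}$ is bounded too. Since $\mathbf{g}$ is bounded and $\pi$ is a probability measure, $\mathbf{g} \in \mathcal{L}^1(\pi)$ is immediate. For the second claim, write $(\mathcal{A}_\pi \mathbf{g})(\mathbf{x}) = (\nabla \cdot \mathbf{g})(\mathbf{x}) + \langle \nabla \log \pi(\mathbf{x}), \mathbf{g}(\mathbf{x})\rangle$; the first summand is bounded hence in $\mathcal{L}^1(\pi)$, and for the second, $|\langle \nabla \log \pi(\mathbf{x}), \mathbf{g}(\mathbf{x})\rangle| \le \|\nabla \log\pi(\mathbf{x})\| \cdot \|\mathbf{g}(\mathbf{x})\| \le C \|\nabla\log\pi(\mathbf{x})\|$ with $C = \sqrt{d}\|\phi\|_\infty^{1/2}\|\mathbf{g}\|_{\mathcal{H}(\mathsf{K})}$, which is $\pi$-integrable precisely by the hypothesis $\nabla \log\pi \in \mathcal{L}^1(\pi)$. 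Adding the two bounds completes the argument.

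\textbf{Main obstacle.} The routine parts are the two Cauchy--Schwarz estimates; the delicate point is justifying rigorously that the relevant derivatives of $\mathsf{K}_{\mathbf{x}}$ lie in $\mathcal{H}(\mathsf{K})$ and that the differential reproducing property holds in the form needed, which requires $\phi \in C^2$ exactly so that the second-order partials of $\mathsf{k}$ exist and are continuous on the diagonal --- this is where the $C^2$ hypothesis is used and cannot be relaxed. I would invoke the differential reproducing property result cited in the excerpt (Appendix C6 of \citet{barp2022targeted}) rather than re-deriving it, and simply verify its applicability here: translation invariance plus $\phi \in C^2$ ensures the kernel is twice continuously differentiable jointly, which is the standard sufficient condition. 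A minor subtlety worth a sentence is that one should check $\nabla\phi(\mathbf{0}) = \mathbf{0}$ (so there is no stray first-derivative diagonal term causing trouble), which follows because $\mathbf{0}$ is a maximum of the even function $\phi$; but this does not actually affect the boundedness conclusion and can be mentioned in passing.
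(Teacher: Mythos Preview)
Your proposal is correct and follows essentially the same approach as the paper: both arguments apply Cauchy--Schwarz in $\mathcal{H}(\mathsf{K})$ together with the (differential) reproducing property, and then exploit translation invariance of $\mathsf{k}$ to see that the relevant RKHS norms reduce to constants involving $\phi(\mathbf{0})$ and $\nabla^2\phi(\mathbf{0})$. The only cosmetic difference is that you bound $\nabla\cdot\mathbf{g}$ and $\langle\nabla\log\pi,\mathbf{g}\rangle$ separately, whereas the paper represents $(\mathcal{A}_\pi\mathbf{g})(\mathbf{x}) = \langle \mathbf{g}, \mathcal{A}_\pi^{\mathbf{x}}\mathsf{K}_{\mathbf{x}}\rangle_{\mathcal{H}(\mathsf{K})}$ in one shot and computes the Stein-kernel diagonal $\mathcal{A}_\pi^{\mathbf{x}}\mathcal{A}_\pi^{\mathbf{x}'}\mathsf{K}(\mathbf{x},\mathbf{x}')\big|_{\mathbf{x}'=\mathbf{x}} = -(\Delta\phi)(\mathbf{0}) + \phi(\mathbf{0})\|\nabla\log\pi(\mathbf{x})\|^2$ directly; the resulting bounds and the role of the $C^2$ hypothesis are identical.
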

\begin{proof}
The reproducing property, followed by Cauchy--Schwarz\index{Cauchy--Schwarz}, gives
\begin{align*}
    \int \|\mathbf{g}(\mathbf{x})\| \; \mathrm{d}\pi(\mathbf{x}) & = \int \| \langle \mathbf{g} , \mathsf{K}_{\mathbf{x}} \rangle_{\mathcal{H}(\mathsf{K})} \| \; \mathrm{d}\pi(\mathbf{x}) \\
    & \leq \|\mathbf{g}\|_{\mathcal{H}(\mathsf{K})} \int \sqrt{ \mathrm{tr}( \mathsf{K}(\mathbf{x},\mathbf{x}) ) } \; \mathrm{d}\pi(\mathbf{x}) = \|\mathbf{g}\|_{\mathcal{H}(\mathsf{K})} \sqrt{d \phi(\mathbf{0})} ,
\end{align*}
which is finite for all $\mathbf{g} \in \mathcal{H}(\mathsf{K})$.
Similarly,
\begin{align*}
    \int |(\mathcal{A}_\pi \mathbf{g})(\mathbf{x})| \; \mathrm{d}\pi(\mathbf{x}) & = \int | \langle \mathbf{g} , \mathcal{A}_\pi^{\mathbf{x}} \mathsf{K}_{\mathbf{x}} \rangle_{\mathcal{H}(\mathsf{K})} | \; \mathrm{d}\pi(\mathbf{x}) \\
    & \leq \|\mathbf{g}\|_{\mathcal{H}(\mathsf{K})} \int \sqrt{  \mathcal{A}_\pi^{\mathbf{x}} \mathcal{A}_\pi^{\mathbf{x}'} \mathsf{K}(\mathbf{x},\mathbf{x}') |_{\mathbf{x}'=\mathbf{x}}  } \; \mathrm{d}\pi(\mathbf{x}) .
\end{align*}
Here the assumption $\phi \in C^2(\mathbb{R}^d,\mathbb{R})$ ensures the application of $\mathcal{A}_\pi^{\mathbf{x}} \mathcal{A}_\pi^{\mathbf{x}'}$ to $\mathsf{K}(\mathbf{x},\mathbf{x}')$ is well-defined.
Specialising to the translation-invariant reproducing kernel in the statement, we have 
\begin{align}
    \mathcal{A}_\pi^{\mathbf{x}} \mathcal{A}_\pi^{\mathbf{x}'} \mathsf{K}(\mathbf{x},\mathbf{x}') |_{\mathbf{x}'=\mathbf{x}} = - (\Delta \phi)(\mathbf{0}) + \phi(\mathbf{0}) \|(\nabla \log \pi)(\mathbf{x}) \|^2 \label{eq: form of stein kernel}
\end{align}
which shows that $\mathcal{A}_\pi \mathbf{g} \in \mathcal{L}^1(\pi)$ whenever $\nabla \log \pi \in \mathcal{L}^1(\pi)$, and completes the argument.
\end{proof}

The kernel Stein discrepancies we have just constructed are well-defined and computable, but we have not yet addressed the question of if and how the values of the discrepancy $\mathcal{D}_{\mathsf{k}_\pi}(\nu)$ are related to the closeness of $\pi$ and $\nu$.
Indeed, since we have used alternative test functions compared to \eqref{eq: smooth Wass ksd}, we cannot expect $\mathcal{D}_{\mathsf{k}_\pi}(\nu)$ to provide an upper bound on the Wasserstein-1 distance \index{Wasserstein metric} between $\pi$ and $\nu$.
The next section explains to what extent kernel Stein discrepancies relate to the closeness of $\pi$ and $\nu$.

\subsection{Convergence Control}
\label{subsec: conv control kernel}

The aim of this section is to establish whether kernel Stein discrepancies can provide control \index{convergence control} over integral probability metrics\index{integral probability metric}.
At the same time, we will weaken the strong log-concavity assumption from Section \ref{subsec: overdamped} to an assumption that $\pi$ is \emph{distantly dissipative}\index{distantly dissipative}, meaning that
$$
\liminf_{r \rightarrow \infty} \inf_{\substack{ \mathbf{x},\mathbf{x}' \in \mathbb{R}^d \\ \|\mathbf{x}-\mathbf{x}'\| = r }} \left\{ - \frac{\langle (\nabla \log \pi)(\mathbf{x}) - (\nabla \log \pi)(\mathbf{x}') , \mathbf{x} - \mathbf{x}' \rangle}{\|\mathbf{x}-\mathbf{x}'\|^2}  \right\} > 0 ,
$$
for which Wasserstein-1 contraction \index{contraction} of the overdamped Langevin diffusion \index{Langevin diffusion} \eqref{eq: overdamped lang diff} can still be established \citep[see][]{lindvall1986coupling,eberle2016reflection}.
The next Lemma demonstrates that distant dissipativity is indeed a generalisation of strong log-concavity\index{log-concave}:

\begin{lemma}
\label{lem: compact dd proof}
If $\nabla \log \pi$ is bounded on a compact set $S \subset \mathbb{R}^d$ and $\pi$ is strongly log-concave on the boundary of and outside of the set $S$, then $\pi$ is distantly dissipative\index{distantly dissipative}.
\end{lemma}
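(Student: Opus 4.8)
The plan is to reduce the distant dissipativity condition to a worst-case analysis over pairs $(\mathbf{x},\mathbf{x}')$ as $r = \|\mathbf{x}-\mathbf{x}'\| \to \infty$, and to show that the liminf is strictly positive by separately controlling the cases where both points are far from the compact set $S$ and where one point may be inside or near $S$. Write $\kappa(\mathbf{x},\mathbf{x}') := - \langle (\nabla \log \pi)(\mathbf{x}) - (\nabla \log \pi)(\mathbf{x}') , \mathbf{x} - \mathbf{x}' \rangle / \|\mathbf{x}-\mathbf{x}'\|^2$, so the goal is to show $\liminf_{r\to\infty} \inf_{\|\mathbf{x}-\mathbf{x}'\|=r} \kappa(\mathbf{x},\mathbf{x}') > 0$. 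By hypothesis, there is a constant $l > 0$ such that the strong log-concavity inequality \eqref{eq:ass-convex} holds whenever $\mathbf{x},\mathbf{x}'$ both lie on the boundary of or outside $S$; that is, $\langle \nabla \log \pi(\mathbf{x}) - \nabla \log \pi(\mathbf{x}'), \mathbf{x}-\mathbf{x}'\rangle \le -\frac{l}{2}\|\mathbf{x}-\mathbf{x}'\|^2$ for such pairs, giving $\kappa(\mathbf{x},\mathbf{x}') \ge l/2 > 0$ directly.

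First I would handle the easy case: fix $r_0$ large enough that any pair with $\|\mathbf{x}-\mathbf{x}'\| = r \ge r_0$ has at least one point outside $S$ (possible since $S$ is bounded, so $\mathrm{diam}(S) < \infty$; take $r_0 > \mathrm{diam}(S)$). If \emph{both} points are outside (or on the boundary of) $S$, strong log-concavity gives $\kappa \ge l/2$ immediately. So the only remaining sub-case is when exactly one point, say $\mathbf{x}' \in S$, while $\mathbf{x} \notin S$ with $\|\mathbf{x}-\mathbf{x}'\| = r$ large. Here I would write $\mathbf{x}'' $ for the point where the segment from $\mathbf{x}'$ to $\mathbf{x}$ exits $S$ (or simply use any fixed reference point $\mathbf{x}_0$ on $\partial S$), decompose the inner product $\langle \nabla \log \pi(\mathbf{x}) - \nabla \log \pi(\mathbf{x}'), \mathbf{x} - \mathbf{x}'\rangle$ by adding and subtracting $\nabla \log \pi(\mathbf{x}_0)$, and bound the three resulting pieces: the piece involving $\mathbf{x}$ and $\mathbf{x}_0$ is controlled by strong log-concavity (both are outside/on $\partial S$), while the pieces involving $\mathbf{x}'\in S$ and $\mathbf{x}_0$ are controlled by the boundedness of $\nabla \log \pi$ on the compact set $S\cup\{\mathbf{x}_0\}$ — say $\|\nabla \log \pi\| \le M$ on $S$ — together with Cauchy--Schwarz and $\|\mathbf{x}'-\mathbf{x}_0\| \le \mathrm{diam}(S)$. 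The net effect is a bound of the form $\kappa(\mathbf{x},\mathbf{x}') \ge \frac{l}{2} - \frac{C}{r}$ for a constant $C$ depending only on $l$, $M$, and $\mathrm{diam}(S)$; letting $r\to\infty$ kills the error term and the liminf is $\ge l/2 > 0$.

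The main obstacle — and the part requiring care — is the bookkeeping in that last decomposition: one must verify that strong log-concavity is genuinely applicable to the pair $(\mathbf{x}, \mathbf{x}_0)$ (both on $\partial S$ or outside, so fine) and that the cross terms really are $O(1)$ rather than $O(r)$. The key observation making this work is that in the mixed case one of the two gradient differences is "short range" (between $\mathbf{x}'$ and $\mathbf{x}_0$, a distance at most $\mathrm{diam}(S)$) so its contribution, even after pairing with the length-$r$ vector $\mathbf{x}-\mathbf{x}'$, is only $O(r)$ — but then dividing by $r^2$ gives $O(1/r) \to 0$. I would double-check that no pathology arises from $\nabla \log \pi$ being merely assumed bounded (not continuous) on $S$: boundedness alone suffices for all the Cauchy--Schwarz estimates above, so this is not an issue. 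Finally I would remark that this shows distant dissipativity, hence by the cited contraction results for the overdamped Langevin diffusion under distant dissipativity, the framework of the preceding sections applies — but that remark is outside the Lemma itself.
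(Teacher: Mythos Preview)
Your proposal is correct and follows essentially the same route as the paper: split into the case where both points lie outside $S$ (handled directly by strong log-concavity) and the mixed case, introduce the exit point $\mathbf{x}''$ of the segment from $S$, decompose $\nabla\log\pi(\mathbf{x})-\nabla\log\pi(\mathbf{x}')$ through $\nabla\log\pi(\mathbf{x}'')$, and control the two pieces respectively by the gradient bound on $S$ and by strong log-concavity applied to the pair $(\mathbf{x}'',\mathbf{x}')$ outside $\mathrm{int}(S)$, yielding $\kappa\ge c - O(1/r)$.

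One caution: your parenthetical alternative ``or simply use any fixed reference point $\mathbf{x}_0$ on $\partial S$'' does \emph{not} work without an additional growth assumption on $\nabla\log\pi$. With a non-collinear $\mathbf{x}_0$ you pick up a cross term $\langle \nabla\log\pi(\mathbf{x})-\nabla\log\pi(\mathbf{x}_0),\, \mathbf{x}_0-\mathbf{x}'\rangle$ in which $\nabla\log\pi(\mathbf{x})$ (at the far point) is unbounded, and the lemma does not assume $\nabla\log\pi$ is Lipschitz. The collinearity of $\mathbf{x}''$ is what ensures the unbounded gradient only ever appears inside the strong-log-concavity inner product, never alone in a Cauchy--Schwarz estimate; this is precisely why the paper (and your primary suggestion) uses the exit point on the segment.
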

\begin{proof}
Let $\mathbf{b}(\mathbf{x}) := (\nabla \log \pi)(\mathbf{x})$, let $S$ be the compact set in the statement, and let $\mathrm{int}(S)$ denote the interior of $S$.
From the strong log-concavity assumption, there exists $c > 0$ such that for all $\mathbf{x},\mathbf{x}' \notin \mathrm{int}(S)$ we have
\begin{align}
- \frac{ \langle \mathbf{b}(\mathbf{x}) - \mathbf{b}(\mathbf{x}') , \mathbf{x} - \mathbf{x}' \rangle }{ \|\mathbf{x}-\mathbf{x}'\|^2 } > c . \label{eq: strong log conc out S}
\end{align}
Since $\mathbf{b}$ is bounded on $S$, we may pick $B > \sup_{\mathbf{x} \in S} \|\mathbf{b}(\mathbf{x})\|$.
Since $S$ is compact, $S$ is contained in $\{\mathbf{x} : \|\mathbf{x}\| \leq r/2 \}$ for some sufficiently large $r > 0$, and we may suppose that $r > 2 B/c$, so $c' := c - (2B/r) > 0$.

Consider arbitrary $\mathbf{x},\mathbf{x}'$ such that $\|\mathbf{x}-\mathbf{x}'\| > r$.
If $\mathbf{x},\mathbf{x}' \notin S$, condition \eqref{eq: strong log conc out S} is satisfied.
Thus consider the other case, where without loss of generality $\mathbf{x} \in S$ (and thus $\mathbf{x}' \notin S$).
Let $\mathbf{x}''$ be the closest point to $\mathbf{x}$ that belongs to $S$ and is colinear with $\mathbf{x}$ and $\mathbf{x}'$.
Then
\begin{align*}
    - \langle \mathbf{b}(\mathbf{x}) - \mathbf{b}(\mathbf{x}') , \mathbf{x} - \mathbf{x}' \rangle & = - \langle \mathbf{b}(\mathbf{x}) - \mathbf{b}(\mathbf{x}'') , \mathbf{x} - \mathbf{x}' \rangle - \langle \mathbf{b}(\mathbf{x}'') - \mathbf{b}(\mathbf{x}') , \mathbf{x} - \mathbf{x}' \rangle \\
    & = - \|\mathbf{x}-\mathbf{x}'\|  \left\langle \mathbf{b}(\mathbf{x}) - \mathbf{b}(\mathbf{x}'') , \frac{\mathbf{x} - \mathbf{x}''}{\|\mathbf{x}-\mathbf{x}''\|} \right\rangle \\
    & \qquad - \frac{\|\mathbf{x}-\mathbf{x}'\|}{\|\mathbf{x}''-\mathbf{x}'\|} \langle \mathbf{b}(\mathbf{x}'') - \mathbf{b}(\mathbf{x}') , \mathbf{x}'' - \mathbf{x}' \rangle \\
    & >  - \|\mathbf{x}-\mathbf{x}'\| \cdot 2 B  \; + \; 1 \cdot c \|\mathbf{x}''-\mathbf{x}'\|^2  
\end{align*}
where in the final line the Cauchy--Schwarz \index{Cauchy--Schwarz} and triangle inequalities were used to bound the first term, and for the second term \eqref{eq: strong log conc out S} was applied to $\mathbf{x}',\mathbf{x}'' \notin \mathrm{int}(S)$.
Thus 
\begin{align*}
    - \frac{\langle \mathbf{b}(\mathbf{x}) - \mathbf{b}(\mathbf{x}') , \mathbf{x} - \mathbf{x}' \rangle}{\|\mathbf{x}-\mathbf{x}'\|^2} > - \frac{2B}{\|\mathbf{x}-\mathbf{x}'\|} + c 
    > - \frac{2B}{r} + c = c' .
\end{align*}
Combining these two results, we have shown that for all $\mathbf{x},\mathbf{x}'$ with $\|\mathbf{x}-\mathbf{x}'\| > r$, 
$$
- \frac{\langle \mathbf{b}(\mathbf{x}) - \mathbf{b}(\mathbf{x}') , \mathbf{x} - \mathbf{x}' \rangle}{\|\mathbf{x}-\mathbf{x}'\|^2} > \min(c,c') > 0
$$
and thus the distant dissipativity of $\pi$ is established.
\end{proof}

The most commonly used kernel Stein discrepancies \index{kernel Stein discrepancy} do not offer control of the Wasserstein-1 distance\index{Wasserstein metric}, though it is possible, through a careful choice of kernel, to obtain Wasserstein-1 control; we return to this point at the end of the present section.
Rather, the most common kernel Stein discrepancies offer control of the (weaker) \emph{Dudley metric} \index{Dudley metric}
\begin{align*}
    \mathsf{d}_D(\pi,\nu) := \sup_{\substack{ g \in C^0(\mathbb{R}^d,\mathbb{R}) \\ M_0(g) + M_1(g) \leq 1 }} \int g(\mathbf{x}) \; \mathrm{d}\pi(\mathbf{x}) - \int g(\mathbf{x}) \; \mathrm{d}\nu(\mathbf{x}) 
\end{align*}
on $\mathcal{P}(\mathbb{R}^d)$, at least in certain scenarios where $\pi$ is distantly dissipative \index{distantly dissipative} and the reproducing kernel $k$ is carefully selected.
For a bounded function $g : \mathbb{R}^d \rightarrow \mathbb{R}^p$, let $M_0(g) = \sup_{\mathbf{x} \in \mathbb{R}^d} \|g(\mathbf{x})\|$ and recall that $M_1(g)$ denotes the Lipschitz constant, from \eqref{eq: def m1}. 

\begin{theorem}[Weak convergence control; Theorem 8 of \citealp{gorham2017measuring}]
\label{thm: weak conv control} \index{convergence control}
Let $\pi$ be distantly dissipative \index{distantly dissipative} and $\nabla \log \pi$ be Lipschitz.
Let $\mathsf{K}(\mathbf{x},\mathbf{x}') = \phi(\mathbf{x}-\mathbf{x}') \mathbf{I}_d$ where $\phi(\mathbf{z}) = (1 + \|\mathbf{z}/\sigma\|^2)^{-\beta}$ for some $\sigma > 0$, $\beta \in (0,1)$.
Then $\mathcal{D}_{\mathsf{k}_\pi}(\nu_n) \rightarrow 0$ implies that $\mathsf{d}_D(\pi,\nu_n) \rightarrow 0$.
\end{theorem}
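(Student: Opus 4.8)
The plan is to reduce the statement to a known convergence control result by exhibiting, for each Lipschitz and bounded test function $g$, a vector field $\mathbf{u}_g \in \mathcal{H}(\mathsf{K})$ with $\mathcal{A}_\pi \mathbf{u}_g = g - \int g \; \mathrm{d}\pi$ and a uniform bound on $\|\mathbf{u}_g\|_{\mathcal{H}(\mathsf{K})}$. Granting this, for any $g$ in the Dudley unit ball we would have
\begin{align*}
\left| \int g \; \mathrm{d}\nu_n - \int g \; \mathrm{d}\pi \right| = \left| \int (\mathcal{A}_\pi \mathbf{u}_g)(\mathbf{x}) \; \mathrm{d}\nu_n(\mathbf{x}) \right| \leq \|\mathbf{u}_g\|_{\mathcal{H}(\mathsf{K})} \; \mathcal{D}_{\mathsf{k}_\pi}(\nu_n) \leq C \, \mathcal{D}_{\mathsf{k}_\pi}(\nu_n),
\end{align*}
using \eqref{eq: solve rkhs optimise} and the optimality characterisation of the kernel Stein discrepancy; taking the supremum over $g$ gives $\mathsf{d}_D(\pi,\nu_n) \leq C \, \mathcal{D}_{\mathsf{k}_\pi}(\nu_n) \rightarrow 0$. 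So the real content is the construction and the norm bound for $\mathbf{u}_g$.

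The construction of $\mathbf{u}_g$ proceeds via the Stein equation for the overdamped Langevin diffusion. Given a bounded Lipschitz $g$, the Poisson equation $\mathcal{A}_\pi u_g = g - \int g \; \mathrm{d}\pi$ (with $\mathcal{A}_\pi$ now the generator acting on scalar functions) has a solution expressible as $u_g(\mathbf{x}) = -\int_0^\infty \mathbb{E}[ g(\mathbf{X}_t^{\mathbf{x}}) - \int g \; \mathrm{d}\pi ] \; \mathrm{d}t$, where $\mathbf{X}_t^{\mathbf{x}}$ is the diffusion \eqref{eq: overdamped lang diff} started at $\mathbf{x}$; convergence of this integral and smoothness of $u_g$ rely on the Wasserstein-1 contraction of the diffusion, which holds because $\pi$ is distantly dissipative (invoking \citet{eberle2016reflection}, and via Lemma \ref{lem: compact dd proof} the assumptions cover the strongly log-concave case and more). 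Setting $\mathbf{u}_g = \nabla u_g$ converts the second-order operator into the first-order operator $\mathcal{A}_\pi$ of \eqref{eq: curly Ap}. I would then quote the regularity estimates from \citet{gorham2017measuring} (their analysis of the Stein factors for this diffusion) which give $M_0(\nabla u_g)$ and a bound on the relevant derivative of $\nabla u_g$ in terms of $M_0(g) + M_1(g)$ and the distant-dissipativity and Lipschitz constants of $\nabla \log \pi$. This yields $\mathbf{u}_g$ in a certain classical function space with norm controlled by $M_0(g) + M_1(g) \leq 1$.

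The main obstacle — and the step where the precise choice $\phi(\mathbf{z}) = (1+\|\mathbf{z}/\sigma\|^2)^{-\beta}$ with $\beta \in (0,1)$ enters crucially — is showing that the classically-regular vector field $\mathbf{u}_g$ actually lies in $\mathcal{H}(\mathsf{K})$ with a uniformly bounded RKHS norm. This is where one needs a characterisation of $\mathcal{H}(\mathsf{K})$ for the inverse multiquadric kernel: via its Fourier transform (the Bessel-type spectral density of $\phi$ decays only polynomially, of order tuned by $\beta < 1$), one shows $\mathcal{H}(\mathsf{K})$ contains all sufficiently smooth vector fields with appropriate polynomial decay, and in particular contains $\mathbf{u}_g$ once the Stein-factor bounds are in hand; the condition $\beta \in (0,1)$ guarantees the kernel is not too smooth (so its RKHS is large enough to contain $\mathbf{u}_g$) while still $\phi \in C^2$ and bounded (so Lemma \ref{lem: LP conditions on K} applies, $\nabla \log \pi \in \mathcal{L}^1(\pi)$ being implied by the Lipschitz and distant-dissipativity hypotheses, and $\mathsf{k}_\pi$ is a well-defined reproducing kernel). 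I would therefore structure the proof as: (i) record the Stein equation and its solution, citing the diffusion contraction under distant dissipativity; (ii) state the Stein-factor regularity bounds; (iii) invoke the spectral characterisation of the inverse-multiquadric RKHS to place $\mathbf{u}_g$ inside $\mathcal{H}(\mathsf{K})$ with $\|\mathbf{u}_g\|_{\mathcal{H}(\mathsf{K})} \leq C$ uniformly over the Dudley ball; (iv) conclude via the displayed inequality above. Steps (ii) and (iii) are the technical heart and would be deferred to the cited literature rather than reproved.
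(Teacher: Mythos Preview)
Your argument has a genuine gap at step (iii), and in fact the paper explicitly rules out the conclusion your strategy would deliver. Immediately after stating the theorem, the text notes: ``The kernel Stein discrepancy itself does not provide an upper bound on $\mathsf{d}_D$ in this context, but an explicit nonlinear transformation of the kernel Stein discrepancy \emph{does} still constitute an explicit upper bound.'' Your displayed inequality, if it went through, would give the \emph{linear} bound $\mathsf{d}_D(\pi,\nu_n) \leq C\,\mathcal{D}_{\mathsf{k}_\pi}(\nu_n)$, which is precisely what is being denied.

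The failure point is the claim that $\mathbf{u}_g = \nabla u_g$ lies in $\mathcal{H}(\mathsf{K})$ with a uniform RKHS-norm bound over the Dudley ball. The Stein-factor estimates from \citet{gorham2019measuring} do give that $\nabla u_g$ is bounded and Lipschitz (and a bit more), but the inverse-multiquadric RKHS --- while larger than those of lighter-tailed kernels --- still does not contain all bounded Lipschitz vector fields, let alone with a uniform norm bound. Via the Fourier characterisation, $\mathcal{H}(\mathsf{K})$ is a Sobolev-type space whose membership imposes both smoothness and decay-at-infinity requirements that the Poisson solution need not satisfy uniformly over the Dudley ball. So step (iii) cannot be completed as written.

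The paper does not itself prove this result (it simply cites \citet{gorham2017measuring}), but the argument there takes a genuinely different route. The role of $\beta \in (0,1)$ is not to make the RKHS large enough to contain every $\mathbf{u}_g$; rather, the slowly decaying tails of the inverse-multiquadric kernel ensure that certain \emph{specific} coercive test functions --- ones that detect escape of mass to infinity --- do lie in $\mathcal{H}(\mathsf{K})$. From this one deduces that $\mathcal{D}_{\mathsf{k}_\pi}(\nu_n)\to 0$ forces the sequence $(\nu_n)$ to be uniformly tight. Tightness, Prokhorov's theorem, and the separation property of the kernel Stein discrepancy then combine to give weak convergence to $\pi$ along every subsequence, hence $\mathsf{d}_D(\pi,\nu_n)\to 0$. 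This indirect route explains both why the resulting bound is nonlinear and why the tail condition on the kernel is the crux, rather than the RKHS swallowing all Stein solutions.
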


Of course, if $\nabla \log \pi$ is Lipschitz, then $\nabla \log \pi$ is automatically bounded on any compact set.
The set of test functions that define the Dudley metric \index{Dudley metric} is smaller than that for the Wasserstein-1 metric\index{Wasserstein metric}, and as a result the Dudley metric is weaker than the Wasserstein-1 metric.
Indeed, the Dudley metric actually metrises the so-called \emph{weak convergence} of distributions, meaning that $\nu_n$ converges weakly (or \emph{in distribution}) to $\pi$ if and only if $\mathsf{d}_D(\pi,\nu_n) \rightarrow 0$.
The kernel Stein discrepancy itself does not provide an upper bound on $\mathsf{d}_D$ in this context, but an explicit nonlinear transformation of the kernel Stein discrepancy \emph{does} still constitute an explicit upper bound.
See \citet{gorham2017measuring} for full details.

The reproducing kernel \index{kernel (scalar-valued)} appearing in Theorem \ref{thm: weak conv control} is called the \emph{inverse multi-quadric} \index{inverse multi-quadric} reproducing kernel, and its use was not accidental.
The careful analysis in Theorem 6 of \citet{gorham2017measuring} demonstrates that reproducing kernels with lighter tails can fail to control weak convergence, at least in dimension $d \geq 3$.
Moreover, the inverse multi-quadric reproducing kernel is computationally straightforward, and in fact the property of weak convergence \index{weak convergence} control extends to the parametric family of inverse multi-quadric reproducing kernels of the form
\begin{align}
\mathsf{K}_{\textnormal{IMQ}}(\mathbf{x},\mathbf{x}') = \big( 1 +\|\boldsymbol{\Sigma}^{-1/2} (\mathbf{x}-\mathbf{x}')\|^2 \big)^{-\beta} \mathbf{I}_d, \qquad \boldsymbol{\Sigma} \succ 0, \; \beta \in (0,1) ,  \label{eq: gen inv mult quad}
\end{align}
where $\boldsymbol{\Sigma}$ is a symmetric positive definite matrix, with the former case recovered when $\boldsymbol{\Sigma} = \mathbf{I}_d$; see Theorem 4 in \citet{chen2019stein}.
For the extended family of inverse multi-quadric reproducing kernels in \eqref{eq: gen inv mult quad}, the explicit form in \eqref{eq: AAk} becomes 
\begin{align*}
    \mathsf{k}_\pi(\mathbf{x},\mathbf{x}') & = - \frac{4 \beta (\beta + 1) \|\boldsymbol{\Sigma}^{-1} (\mathbf{x}-\mathbf{x}')\|^2 }{ (1 + \|\boldsymbol{\Sigma}^{-1/2}(\mathbf{x}-\mathbf{x}')\|^2)^{\beta + 2} } \\
    & \qquad + 2 \beta \left[ \frac{ \mathrm{tr}(\boldsymbol{\Sigma}^{-1}) + \langle (\nabla \log \pi)(\mathbf{x}) - (\nabla \log \pi)(\mathbf{x}') , \boldsymbol{\Sigma}^{-1} (\mathbf{x}-\mathbf{x}') \rangle }{ (1 + \|\boldsymbol{\Sigma}^{-1/2}(\mathbf{x}-\mathbf{x}')\|^2)^{\beta + 1} } \right] \\
    & \qquad + \frac{ \langle (\nabla \log \pi)(\mathbf{x}) , (\nabla \log \pi)(\mathbf{x}') \rangle }{ (1 + \|\boldsymbol{\Sigma}^{-1/2} (\mathbf{x}-\mathbf{x}')\|^2)^\beta } ,
\end{align*}
which can be readily computed provided that the gradient \index{gradient} $\nabla \log \pi$ can be pointwise evaluated.

To illustrate the performance of kernel Stein discrepancies\index{kernel Stein discrepancy}, consider again the Gaussian mixture distribution $\pi$ from \eqref{eq: mixture example}.
This distribution is distantly dissipative\index{distantly dissipative}\footnote{The class of distantly dissipative distributions includes all finite Gaussian mixtures with common covariance matrix; see \citet{gorham2019measuring}.} and has a log-density that is Lipschitz, meaning we are in the setting of Theorem \ref{thm: weak conv control}.
Proceeding with the inverse multi-quadric \index{inverse multi-quadric} reproducing kernel \eqref{eq: gen inv mult quad} with parameters $\Sigma = 1$, $\beta = 0.5$, we therefore have a guarantee that convergence of the kernel Stein discrepancy $\mathcal{D}_{\mathsf{k}_\pi}(\nu_n)$ to 0 implies the weak convergence \index{weak convergence} of $\nu_n$ to $\pi$.
In what follows we let $\nu_n = \frac{1}{n} \sum_{k=1}^n \delta_{X_k}$ be the empirical distribution associated to a Markov chain sample path $(X_k)_{1 \leq k \leq n}$ and use kernel Stein discrepancy to determine whether or not $\nu_n$ converges to $\pi$.
The left panel of Figure \ref{fig: performance KSD} displays typical realisations of the Markov chain sample path (top), while underneath the associated kernel Stein discrepancy (as a function of $n$) is displayed.
In addition, the figure includes corresponding results for a Markov chain that leaves only the first component of $\pi$ invariant (right), and thus does not provide a consistent approximation of $\pi$.
Asymptotically, it can be seen that the kernel Stein discrepancy correctly distinguishes between the two scenarios, in which the Markov chain does and does not leave $\pi$ invariant.
However, focusing on the biased Markov chain, at small sample sizes the discrepancy does not detect that the chain has only explored one mixture component, and the discrepancy appears to decrease smoothly as more samples are collected.
It is only once a sufficient number of samples have been collected that the failure of the Markov chain to explore the second mixture component is detected, and the discrepancy ceases decreasing to reflect that.

\begin{figure}
    \centering
    \includegraphics[width=\textwidth]{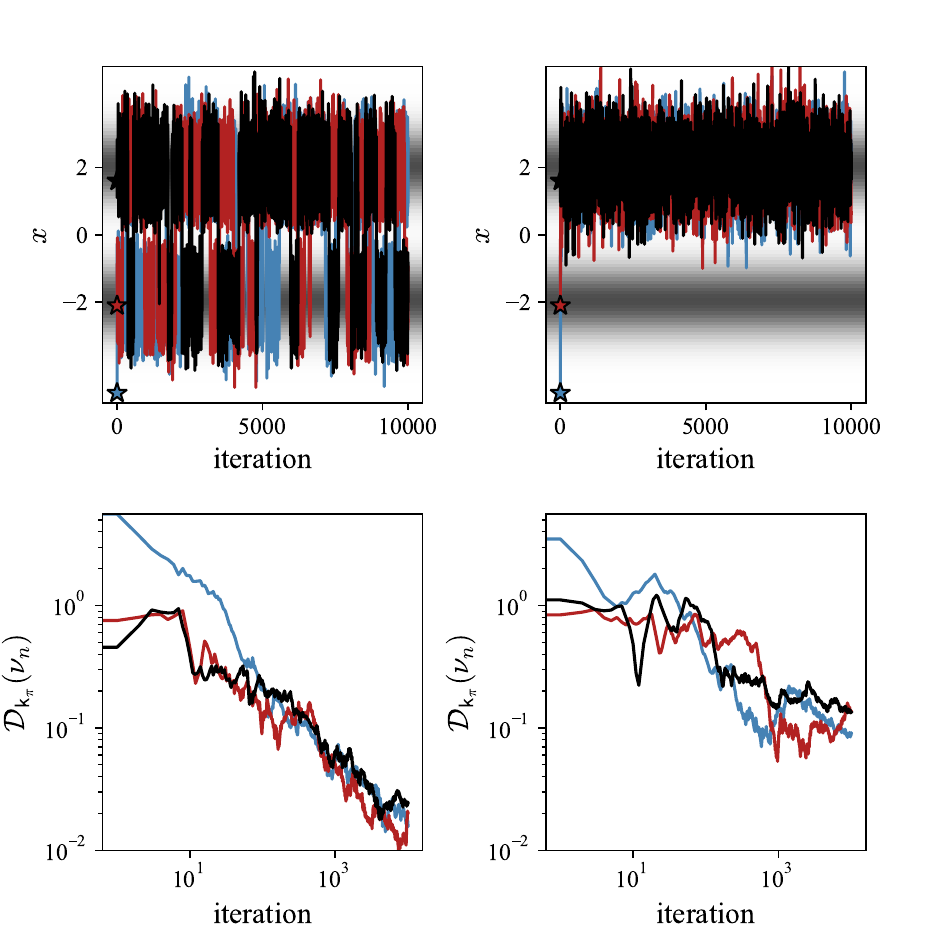}
    \caption{Performance of kernel Stein discrepancy.
    Three unbiased (left) and biased (right) Markov chains were simulated. 
    In the unbiased case, the kernel Stein discrepancy $\mathcal{D}_{\mathsf{k}_\pi}(\nu_n)$ correctly detects that the Markov chains are converging to the target.
    In the biased scenario, the kernel Stein discrepancy detects that the Markov chains have not converged to the correct target, but this becomes clear only after a sufficient number of iterations have been performed.
    [Stars indicate the initial state of each Markov chain.
    The density $\pi$ is shaded.]
    }
    \label{fig: performance KSD}
\end{figure}

The small $n$ behaviour of the kernel Stein discrepancy \index{kernel Stein discrepancy} observed in Figure \ref{fig: performance KSD} has been termed \emph{blindness to mixing proportions} in \cite{wenliang2020blindness}, and provides an important note of caution that, when using kernel Stein discrepancies to assess MCMC output, the failure of the Markov chain to explore distant high-probability regions may only be detected if the length $n$ of the Markov chain output is large enough.
Our formal argument in Section \ref{subsec: formal arg} provides insight into this phenomenon; the Wasserstein-1 \index{Wasserstein metric} contraction \index{contraction} rate constant of the overdamped Langevin diffusion\index{Langevin diffusion}, denoted $c$ in \eqref{eq: discrepancy}, can be extremely small for distributions such as $\pi$ for which the blindness phenomenon is encountered, since for this diffusion a move between the effective support of the distinct mixture components is a rare event.
As a consequence, although kernel Stein discrepancy does provide convergence control\index{convergence control}, in unfavourable settings it may provide only a loose form of control.

In multi-dimensional settings, an appropriate choice of the matrix $\boldsymbol{\Sigma}$ appearing in the inverse multi-quadric \index{inverse multi-quadric} reproducing kernel \index{kernel (scalar-valued)} \eqref{eq: gen inv mult quad} can be important.
In situations where $\nu_n$ can be interpreted as an approximation of $\pi$, the present authors continue to recommended the use of $\boldsymbol{\Sigma} = \mathbf{I}_d$, following the application of a data-dependent transformation
\begin{align}
(\mathbf{x}_i, \nabla \log \pi(\mathbf{x}_i)) & \mapsto ( \boldsymbol{\Gamma}_n^{-1} \mathbf{x}_i , \boldsymbol{\Gamma}_n \nabla \log \pi(\mathbf{x}_i) ), \label{eq: KSD transform}
\end{align}
where $\boldsymbol{\Gamma}_n$ is the diagonal matrix whose diagonal entries are the mean absolute deviation of the corresponding coordinates of $(\mathbf{x}_k)_{1 \leq k \leq n}$, the states on which $\nu_n$ is supported.
This transformation amounts to performing the change of variables $\mathbf{x} \mapsto \widetilde{\mathbf{x}} := \boldsymbol{\Gamma}_n^{-1} \mathbf{x}$ prior to computing the kernel Stein discrepancy with $\boldsymbol{\Sigma} = \mathbf{I}_d$.
Indeed, denoting by $\widetilde{\pi}$ the transformed probability density function, the change-of-variables formula gives that
\begin{align*}
\nabla_{\widetilde{\mathbf{x}}} \log \widetilde{\pi}(\widetilde{\mathbf{x}}) & = \nabla_{\widetilde{\mathbf{x}}} \log \left[ \mathrm{det}(\boldsymbol{\Gamma}_n) \pi(\mathbf{x}) \right] \\
& = \nabla_{\widetilde{\mathbf{x}}} \log \pi(\mathbf{x}) \\
& = \nabla_{\widetilde{\mathbf{x}}} \log \pi(\boldsymbol{\Gamma}_n \widetilde{\mathbf{x}})  
 = \boldsymbol{\Gamma}_n \nabla \log \pi(\boldsymbol{\Gamma}_n \widetilde{\mathbf{x}}) 
 = \boldsymbol{\Gamma}_n \nabla \log \pi(\mathbf{x}) .
\end{align*}
In this recommendation, the mean absolute deviation is used to provide a robust estimate for the unknown scale of the standard deviation of each coordinate in $\pi$.
One may be tempted to consider extending this recommendation to the more general class of invertible linear transformations, but the authors of \citet{riabiz2022optimal} cautioned that if considerable additional sample-based variability is introduced in estimating a general invertible linear transform, this can act as an undesirable confounding factor when the resulting discrepancies are to be interpreted for assessment of MCMC.
In a similar spirit, so-called \emph{sliced} kernel Stein discrepancies \index{kernel Stein discrepancy (sliced)} have recently been developed for high-dimensional applications \citep{gong2020sliced}; however, at the time of writing the convergence control \index{convergence control} of these discrepancies has yet to be established.

Aside from the specific limitations just discussed, there are a myriad of statistical applications where kernel Stein discrepancies \index{kernel Stein discrepancy} can and have been successfully applied.
Two distinct uses will be discussed in this chapter; optimal weighting of Markov chain output (Section \ref{sec: optimal weights}), and optimal thinning of Markov chain output (Section \ref{sec: optimal thinning}).
To close this section, we highlight that stronger modes of convergence can also be controlled by kernel Stein discrepancies.
The following result, which is a special case of \citet{kanagawa2023controlling}, indicates how a suitable \emph{tilting} of the reproducing kernel enforces moment convergence control\index{convergence control}:

\begin{theorem}[Moment convergence control; Corollary 3.4 in \citet{kanagawa2023controlling}]
\label{thm: kanagawa result}
Let $\pi$ be distantly dissipative \index{distantly dissipative} and $\nabla \log \pi$ be Lipschitz.
Let $q \in \mathbb{N}$, $\mathbf{x}_0 \in \mathbb{R}^d$, and adopt the shorthand $w_r(\mathbf{x}) := (1 + \|\mathbf{x} - \mathbf{x}_0\|^2)^{(r-1)/2}$.
Let 
$$
\mathsf{K}(\mathbf{x},\mathbf{x}') = w_q(\mathbf{x}) w_q(\mathbf{x}') \mathsf{K}_{\textnormal{IMQ}}(\mathbf{x},\mathbf{x}') + w_{q-1}(\mathbf{x}) w_{q-1}(\mathbf{x}') (1 + \langle \mathbf{x} - \mathbf{x}_0 , \mathbf{x}' - \mathbf{x}_0 \rangle ) \mathbf{I}_d
$$
where $\mathsf{K}_{\textnormal{IMQ}}$ is the inverse multi-quadric \index{inverse multi-quadric} reproducing kernel from \eqref{eq: gen inv mult quad}.
Let $(\nu_n)_{n \in \mathbb{N}}$ be a sequence of distributions whose moments up to order $q$ exist.
Then $\mathcal{D}_{\mathsf{k}_\pi}(\nu_n) \rightarrow 0$ implies that both $\mathsf{d}_D(\pi,\nu_n) \rightarrow 0$ and the moments of $\nu_n$ up to order $q$ converge to those of $\pi$.
\end{theorem}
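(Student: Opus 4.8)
The plan is to reduce Theorem~\ref{thm: kanagawa result} to the weak convergence control already available via Theorem~\ref{thm: weak conv control}, plus a separate argument establishing convergence of the first $q$ moments. The key structural observation is that the reproducing kernel $\mathsf{K}$ in the statement is a \emph{sum} of two positive semi-definite matrix-valued kernels, so that $\mathcal{H}(\mathsf{K})$ contains the RKHS reproduced by each summand, and hence the kernel Stein discrepancy $\mathcal{D}_{\mathsf{k}_\pi}$ dominates the kernel Stein discrepancy built from either summand alone. First I would spell this out: if $\mathsf{K} = \mathsf{K}^{(1)} + \mathsf{K}^{(2)}$ with both summands matrix-valued reproducing kernels, then for the induced Stein kernels $\mathsf{k}_\pi = \mathsf{k}_\pi^{(1)} + \mathsf{k}_\pi^{(2)}$ by linearity of $\mathcal{A}_\pi^{\mathbf{x}}\mathcal{A}_\pi^{\mathbf{x}'}$, and since each $\mathsf{k}_\pi^{(i)}$ is itself a reproducing kernel (by Theorem~2.6 of \citealp{barp2022targeted}), we have $\mathcal{D}_{\mathsf{k}_\pi}(\nu_n)^2 = \mathcal{D}_{\mathsf{k}_\pi^{(1)}}(\nu_n)^2 + \mathcal{D}_{\mathsf{k}_\pi^{(2)}}(\nu_n)^2 \geq \mathcal{D}_{\mathsf{k}_\pi^{(i)}}(\nu_n)^2$. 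Therefore $\mathcal{D}_{\mathsf{k}_\pi}(\nu_n) \to 0$ forces \emph{both} component discrepancies to vanish.

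Next I would handle the weak convergence conclusion. The first summand is $w_q(\mathbf{x}) w_q(\mathbf{x}') \mathsf{K}_{\textnormal{IMQ}}(\mathbf{x},\mathbf{x}')$, a \emph{tilted} inverse multi-quadric kernel. Since $w_q(\mathbf{x}) = (1+\|\mathbf{x}-\mathbf{x}_0\|^2)^{(q-1)/2} \geq 1$, this tilted kernel reproduces an RKHS at least as rich as that of $\mathsf{K}_{\textnormal{IMQ}}$ restricted to suitably decaying vector fields, and in particular the corresponding kernel Stein discrepancy dominates (up to the behaviour near infinity) the plain inverse multi-quadric kernel Stein discrepancy. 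More cleanly: vanishing of $\mathcal{D}_{\mathsf{k}_\pi^{(1)}}(\nu_n)$ together with the existence of moments up to order $q \geq 1$ of $\nu_n$ implies, by the same coupling-based argument underlying Theorem~\ref{thm: weak conv control} (now applied with the weighted test functions permitted by the tilting, cf.\ \citealp{gorham2017measuring,kanagawa2023controlling}), that $\mathsf{d}_D(\pi,\nu_n) \to 0$. I would cite the distant dissipativity of $\pi$ and the Lipschitz property of $\nabla\log\pi$ here, exactly as in the hypotheses, to invoke the overdamped Langevin Wasserstein-1 contraction which is the engine behind these convergence control results.

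The moment convergence is where the second summand does its work and where I expect the main obstacle to lie. The kernel $w_{q-1}(\mathbf{x}) w_{q-1}(\mathbf{x}')(1 + \langle \mathbf{x}-\mathbf{x}_0, \mathbf{x}'-\mathbf{x}_0\rangle)\mathbf{I}_d$ is a \emph{polynomial-type} (degenerate) kernel: its RKHS consists of vector fields of the form $\mathbf{g}(\mathbf{x}) = w_{q-1}(\mathbf{x})(\boldsymbol{\alpha} + (\mathbf{x}-\mathbf{x}_0)\boldsymbol{\beta}^\top)$ for constant $\boldsymbol{\alpha},\boldsymbol{\beta}$, or the vector-field analogue thereof. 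Applying $\mathcal{A}_\pi$ to such $\mathbf{g}$ produces, after cancellation via Lemma~\ref{lem: Stein op int by parts}, generalised moments of $\nu_n$ of the form $\int (\text{polynomial of degree} \leq q)(\mathbf{x}) \,\mathrm{d}(\nu_n - \pi)(\mathbf{x})$ appropriately weighted; controlling the $\mathcal{H}(\mathsf{K})$-norm of these test functions and showing that $\mathcal{D}_{\mathsf{k}_\pi^{(2)}}(\nu_n)\to 0$ therefore pins down convergence of all monomials up to total degree $q$. The delicate points are: (i) verifying the integrability preconditions of Lemma~\ref{lem: Stein op int by parts} for these polynomially-growing $\mathbf{g}$, which requires the assumed existence of moments of $\nu_n$ up to order $q$ together with tail bounds on $\pi$ coming from distant dissipativity; (ii) showing the weighting $w_{q-1}$ interacts correctly with $\mathcal{A}_\pi$ so that the resulting Stein kernel $\mathsf{k}_\pi^{(2)}$ actually controls \emph{raw} moments rather than some weighted surrogate; and (iii) combining the monomial convergences into convergence of the moments as stated (a finite linear-algebra step once the monomials are controlled). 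I would structure the moment argument as a lemma: $\mathcal{D}_{\mathsf{k}_\pi^{(2)}}(\nu_n)\to 0$ and uniform-in-$n$ boundedness of the order-$q$ moments of $\nu_n$ (which one may either assume or derive) imply $\int \mathbf{x}^{\boldsymbol{\gamma}}\,\mathrm{d}\nu_n \to \int \mathbf{x}^{\boldsymbol{\gamma}}\,\mathrm{d}\pi$ for every multi-index $\boldsymbol{\gamma}$ with $|\boldsymbol{\gamma}|\leq q$, and then the theorem follows by assembling the two halves. For full rigour on (i)--(iii) I would defer to the proof of Corollary~3.4 in \citet{kanagawa2023controlling}, reproducing only the kernel-decomposition reduction and the identification of the polynomial test functions in detail.
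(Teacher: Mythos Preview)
The paper does not supply its own proof of this theorem; it is stated as Corollary~3.4 of \citet{kanagawa2023controlling} and cited without argument, with only a brief remark afterward on its interpretation and on choosing $\mathbf{x}_0$. There is therefore nothing in the paper to compare your proposal against.

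Your sketch is a plausible reconstruction of the argument one would expect in the cited source: the additive decomposition $\mathsf{k}_\pi = \mathsf{k}_\pi^{(1)} + \mathsf{k}_\pi^{(2)}$ giving $\mathcal{D}_{\mathsf{k}_\pi}^2 = \mathcal{D}_{\mathsf{k}_\pi^{(1)}}^2 + \mathcal{D}_{\mathsf{k}_\pi^{(2)}}^2$ is correct and is the natural way to separate the two conclusions. Your identification of the second summand as a finite-rank polynomial kernel whose Stein image yields generalised moment conditions is also the right idea. The points you flag as delicate --- the integrability checks for polynomially growing $\mathbf{g}$, and showing that the tilting $w_q$ does not spoil the weak-convergence-control argument of Theorem~\ref{thm: weak conv control} --- are genuine, and you are right that resolving them fully requires going to \citet{kanagawa2023controlling} rather than anything in the present paper. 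One small caution: your claim that the tilted IMQ kernel ``dominates'' the plain IMQ discrepancy needs care, since multiplying by $w_q \geq 1$ enlarges the kernel pointwise but the relationship between the corresponding RKHS unit balls (and hence the Stein discrepancies) is not simply an inclusion; the actual argument in the cited work handles this more carefully.
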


\noindent In other words, the kernel Stein discrepancies \index{kernel Stein discrepancy} constructed in this manner have control over the Wasserstein-$q$ distance, which is equivalent to weak convergence \index{weak convergence} plus the convergence of moments up to $q$th order.
The principal requirement for making use of the kernel Stein discrepancies in Theorem \ref{thm: kanagawa result} is to pick a location $\mathbf{x}_0 \in \mathbb{R}^d$.
Theoretical guidance tells us that this kernel Stein discrepancy provides tightest control over moments when $\mathbf{x}_0$ is in a region of high probability for $\pi$, since otherwise the weightings \index{weight} $w_q$ and $w_{q-1}$ become approximately constant and we recover the standard kernel.
The difficulty of finding such a location $\mathbf{x}_0$ will be context-dependent.

\subsection{Stochastic Gradient Stein Discrepancy}
\label{sec: ssd}

The aim of this section is to discuss how kernel Stein discrepancies may be extended to the so-called \emph{tall data} \index{tall data} setting, where algorithms such as stochastic gradient \index{stochastic gradient} MCMC from Chapter \ref{chap:sgld} are used.
The principal challenge in this setting is that computation of the gradient \index{gradient} $\nabla \log \pi$ is associated with a high computational cost $O(N)$ due to the form of the likelihood
$$
L(\mathbf{x}; \mathcal{D}) = \prod_{j=1}^N L(\mathbf{x} ; \mathbf{y}_j)
$$
as a product of a large number $N$ of terms that each need to be differentiated.
Performing Bayesian inference with a prior $\pi_0(\mathbf{x})$, the posterior \index{posterior distribution} distribution takes the form
\begin{align*}
\pi(\mathbf{x}) \propto \prod_{j=1}^N \pi_j(\mathbf{x}), \qquad \pi_j(\mathbf{x}) \propto \pi_0(\mathbf{x})^{1/N} L(\mathbf{x} ; \mathbf{y}_j )  ,
\end{align*}
where we assume each $\pi_j$ can be properly normalised.
Let $\nu_n = \frac{1}{n} \sum_{k=1}^n \delta_{\mathbf{x}_k}$ define a sequence $(\nu_n)_{n \in \mathbb{N}} \subset \mathcal{P}(\mathbb{R}^d)$ of discrete distributions in terms of a sequence $(\mathbf{x}_k)_{k \in \mathbb{N}} \subset \mathbb{R}^d$.
Fix a \emph{batch size} \index{batch} $m \ll N$ and, for each $k$, independently select a uniformly random subset $\mathcal{S}_m^{(k)}$ of size $m$ from $\{1,\dots,N\}$.
Then 
\begin{align*}
\widehat{\mathbf{b}}_k := \frac{N}{m} \sum_{j \in \mathcal{S}_m^{(k)} } \nabla \log \pi_j(\mathbf{x}_k)
\end{align*}
is a stochastic approximation to the gradient $\mathbf{b}(\mathbf{x}_k) = \nabla \log \pi(\mathbf{x}_k)$ that can be computed at a relatively lower $O(m)$ cost.
It is then tempting to replace the exact gradients \index{gradient} $\mathbf{b}(\mathbf{x}_i)$ with their stochastic counterparts $\widehat{\mathbf{b}}_i$ within the construction of kernel Stein discrepancy.
For reproducing kernels of the form $\mathsf{K}(\mathbf{x},\mathbf{x}') = \mathsf{k}(\mathbf{x},\mathbf{x}') \mathbf{I}_d$, this construction leads to the following stochastic approximation of \eqref{eq: AAk}:
\begin{align*}
    \widehat{\mathsf{k}}_\pi(\mathbf{x}_i,\mathbf{x}_j) & := \left. \nabla_{\mathbf{x}} \cdot \nabla_{\mathbf{x}'} \mathsf{k}(\mathbf{x},\mathbf{x}') \right|_{\mathbf{x} = \mathbf{x}_i, \mathbf{x}' = \mathbf{x}_j} + \left\langle \widehat{\mathbf{b}}_i , \left. \nabla_{\mathbf{x}'} \mathsf{k}(\mathbf{x}_i,\mathbf{x}') \right|_{\mathbf{x}' = \mathbf{x}_j} \right\rangle \\
    & \qquad + \left\langle \widehat{\mathbf{b}}_j , \left. \nabla_{\mathbf{x}} \mathsf{k}(\mathbf{x},\mathbf{x}_j) \right|_{\mathbf{x} = \mathbf{x}_i} \right\rangle + \left\langle \widehat{\mathbf{b}}_i , \widehat{\mathbf{b}}_j \right\rangle \mathsf{k}(\mathbf{x}_i,\mathbf{x}_j)
\end{align*}
\citet{gorham2020stochastic} defined the \emph{stochastic kernel Stein discrepancy} \index{kernel Stein discrepancy (stochastic)} in this context as
\begin{align*}
\mathcal{D}_{\hat{\mathsf{k}}_\pi}(\nu_n) = \sqrt{ \frac{1}{n^2} \sum_{k=1}^n \sum_{k'=1}^n \hat{\mathsf{k}}_\pi(\mathbf{x}_k,\mathbf{x}_{k'})  } .
\end{align*}
An immediate question is whether or not the introduction of stochasticity into the gradients jeopardised the weak convergence \index{weak convergence} control \index{convergence control} property established in the case of exact gradients in Theorem \ref{thm: weak conv control}.
It turns out that, provided each $\pi_1,\dots,\pi_N$ satisfies the conditions of Theorem \ref{thm: weak conv control}, a form of weak convergence control continues to hold.
Specifically, if each $\pi_i$ is distantly dissipative, and each $\nabla \log \pi_i$ is Lipschitz, then with $\mathsf{k}(\mathbf{x},\mathbf{x}') = (1 + \|(\mathbf{x}-\mathbf{x}')/\sigma\|^2)^{-\beta}$, $\sigma > 0$, $\beta \in (0,1)$, \citet[][Theorem 4]{gorham2020stochastic} shows that
$$
\mathcal{D}_{\hat{\mathsf{k}}_\pi}(\nu_n) \rightarrow 0 \qquad \implies \qquad \mathsf{d}_{\mathsf{D}}(\pi,\nu_n) \rightarrow 0
$$
almost surely.
This result justifies the use of stochastic kernel Stein discrepancies in their own right, not merely as approximations to kernel Stein discrepancy that becomes exact when $m = N$.
Indeed, in principle, only a batch \index{batch} size of $m=1$ is required.

Figure \ref{fig: SSD} displays stochastic kernel Stein discrepancies computed for the sequence of empirical distributions $\nu_n$ produced using stochastic gradient \index{stochastic gradient} Langevin dynamics applied to the logistic regression example from Section \ref{sec:ch3-logistic-regression}.
It can be seen that, even for a batch size $m=100$, which is much less than the size $N = 10^4$ of the dataset, the stochastic kernel Stein discrepancy is capable of providing similar information on the performance of the sampler compared to when larger batch sizes, such as $m = 10^3$ are used.
The predictable decrease of the discrepancy indicates that the intrinsic bias of stochastic gradient Langevin dynamics is negligible relative to the error incurred by using only $n$ of these samples to construct $\nu_n$.
Nevertheless, at the time of writing, there remains scope to improve these stochastic discrepancies, not least through the use of reduced-variance stochastic approximations to the gradient.

\begin{figure}[t]
    \centering
    \includegraphics[width = 0.7\textwidth]{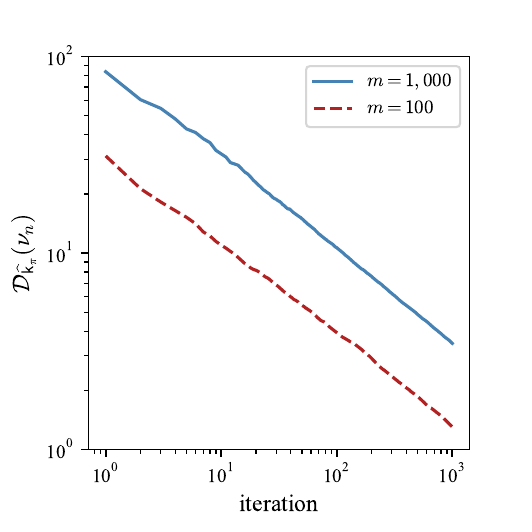}
    \caption{Stochastic gradient Stein discrepancies.  Stochastic gradient Langevin dynamics was used to generate approximate samples from the posterior distribution $\pi$ in the logistic regression example from Section \ref{sec:ch3-logistic-regression}, and stochastic gradient Stein discrepancies were used to measure the discrepancy between the empirical distribution $\nu_n$ of the approximate samples and $\pi$. Here $m$ indicates the size of the data subsets that were used to approximate the gradient.
    }
    \label{fig: SSD}
\end{figure}

\section{Optimal Weights for MCMC}
\label{sec: optimal weights}

At this point we have seen how computable discrepancies may be constructed and used to \emph{passively} assess the performance of MCMC.
Now we turn to how such discrepancies might be used to \emph{actively} improve output from MCMC.
Specifically, in this section we explore how, given a realisation $(\mathbf{x}_k)_{1 \leq k \leq n}$ of a Markov chain and a target distribution $\pi$, we may exploit the kernel Stein discrepancy \index{kernel Stein discrepancy} to assign a weight \index{weight} $w_k$ to each $\mathbf{x}_k$ in such a manner that the discrepancy between the weighted empirical distribution and $\pi$ is minimised.
This is loosely analogous to importance sampling \index{importance sampling} (c.f. Section \ref{sec.MCBayes}), except here the analogue of the importance distribution is the distribution of the MCMC sample path which, like the posterior itself, is implicitly defined.
As such, the methods we will discuss were termed \emph{Black Box Importance Sampling} \index{black-box importance sampling} in \citet{liu2017black}.
Surprisingly, we will see that such retrospective re-weighting can be used to remove the bias of approximate sampling algorithms, such as stochastic gradient MCMC from Chapter \ref{chap:sgld}.

Let $\mathsf{k}_\pi : \mathbb{R}^d \times \mathbb{R}^d \rightarrow \mathbb{R}$ be a scalar-valued reproducing kernel \index{kernel (scalar-valued)} for which $\int \mathsf{k}_\pi(\mathbf{x},\mathbf{y}) \; \mathrm{d}\pi(\mathbf{x}) = 0$ for all $\mathbf{y} \in \mathbb{R}^d$; an example being \eqref{eq: AAk}.
The weights \index{weight} that we consider are the solution of the following optimisation problem:
\begin{align}
\mathbf{w}^\star := \left( \begin{array}{c} w_1^\star \\ \vdots \\w_n^\star \end{array} \right) \in \argmin_{\substack{w_1 , \dots , w_n \geq 0 \\  w_1 + \dots + w_n = 1}} \mathcal{D}_{\mathsf{k}_\pi}\left(\sum_{k=1}^n w_k \delta_{\mathbf{x}_k} \right)  \label{eq: non neg weights}
\end{align}
Let $\nu_n$ be the general weighted empirical distribution appearing on the right-hand side of \eqref{eq: non neg weights}.
From \eqref{eq: sd double integral} we have  $\mathcal{D}_{\mathsf{k}_\pi}(\nu_n)^2 = \langle \mathbf{w} , \mathsf{K}_\pi \mathbf{w} \rangle$, where $\mathsf{K}_\pi$ is the $n \times n$ matrix with entries $[\mathsf{K}_\pi]_{i,j} = \mathsf{k}_\pi(\mathbf{x}_i,\mathbf{x}_j)$.
If the matrix $\mathsf{K}_\pi$ is positive definite
then $\mathbf{w}^\star$ is unique and, although not available in closed form, $\mathbf{w}^\star$ can be computed by solving a linearly-constrained quadratic optimisation problem over the positive orthant of $\mathbb{R}^n$.
The optimally weighted distribution will be denoted $\nu_n^\star$ in the sequel.

A natural first question is whether the weighted approximation to $\pi$, obtained by retrospectively assigning weights to output from MCMC, is consistent.
This turns out to be true, under appropriate assumptions, and moreover, optimal weights can provide bias correction in settings where the Markov chain is not $\pi$-invariant.
Indeed, the recent work of \citet{riabiz2022optimal} established that, in the case of a $\mu$-invariant, time-homogeneous, ergodic Markov chain $(\mathbf{X}_k)_{k \in \mathbb{N}} \subset \mathbb{R}^d$, then the existence of certain moments of the ratio $\pi / \mu$ can be used to deduce that
$$
    \mathcal{D}_{\mathsf{k}_\pi} \left( \sum_{k=1}^n w_k^\star \delta_{\mathbf{X}_k} \right) \rightarrow 0
$$
almost surely as $n \rightarrow \infty$.
Importantly, the biased target $\mu$ of the Markov chain $(\mathbf{X}_k)_{k \in \mathbb{N}}$ does not need to be known to perform Black Box Importance Sampling in \eqref{eq: non neg weights}.

It can sometimes be convenient to relax the non-negativity constraint, to consider 
\begin{align}
\widetilde{\mathbf{w}}^\star := \left( \begin{array}{c} \widetilde{w}_1^\star \\ \vdots \\ \widetilde{w}_n^\star \end{array} \right) \in \argmin_{\substack{\mathbf{w} \in \mathbb{R}^n \\ w_1 + \dots + w_n = 1}} \mathcal{D}_{\mathsf{k}_\pi}\left(\sum_{k=1}^n w_k \delta_{\mathbf{x}_k} \right) . \label{eq: weighted}
\end{align}
The weights $\widetilde{\mathbf{w}}^\star$ may be negative, and thus the associated $\tilde{\nu}_n^\star$ is a \emph{signed measure} in general.
Signed measures may not pose a problem if the goal of computation is to approximate posterior expectations of interest, but if the goal is to approximate $\pi$ itself then a proper probability distribution may be preferred.
The main advantage of the formulation in \eqref{eq: weighted} is that, provided $\mathsf{K}_\pi \succ 0$, the relaxed optimisation problem has a unique and explicit solution
\begin{align}
\widetilde{\mathbf{w}}^\star = \frac{ \mathsf{K}_\pi^{-1} \mathbf{1} }{ \mathbf{1}^\top \mathsf{K}_\pi^{-1} \mathbf{1}  }. \label{eq: optimal signed weights}
\end{align}
This formulation is closely related to  \emph{kernel cubature} \index{kernel cubature} (also known as \emph{Bayesian quadrature}), and specifically coincides with the \emph{normalised} kernel cubature of \citet{karvonen2018bayes}.
From \eqref{eq: optimal signed weights}, we deduce that the computational complexity of obtaining optimal weights is in general $O(n^3)$.
This can preclude the use of optimal weights \index{weight} on desktop computational hardware when $n$ is larger than a few thousand.
However, we will see in Section \ref{sec: optimal thinning} how accurate sparse approximations to the optimally weighted distribution can be efficiently constructed.

To demonstrate the effect of re-weighting\index{weight}, consider the following \emph{Rosenbrock} \index{Rosenbrock} target
\begin{align*}
\pi(x,y) \propto \exp(- (x-a)^2 - b(y-x^2)^2 )
\end{align*}
where here we take $a = 0$, $b = 3$.
The distribution $\pi$ might be referred to as a \emph{horseshoe} distribution, due to the curved shape of its level sets.
To represent output from a biased sampler, we generate sequence $(\mathbf{X}_k)_{k \in \mathbb{N}}$ of independent samples from $\nu = \mathsf{N}(\mathbf{0},\mathbf{I}_2)$ and assign a weight $w_i$ to each state $\mathbf{X}_i$ according to either \eqref{eq: non neg weights} or \eqref{eq: weighted}.
For these experiments we took $\mathbf{\Sigma} = \mathbf{I}_2$, $\beta = 0.5$, and the transformation in \eqref{eq: KSD transform} was applied.
Figure \ref{fig: opt weights} displays the qualitative properties of the weights $\mathbf{w}^\star$ defined by \eqref{eq: non neg weights} (left) and the weights $\widetilde{\mathbf{w}}^\star$ defined by \eqref{eq: weighted} (right).
In both cases, it can be seen that states $\mathbf{X}_k$ for which the probability under $\pi$ is low are typically assigned a small weight.
On the other hand, optimal weights are not independent, and the over-representation of states in a local region due to Monte Carlo sampling variability is partially mitigated.
Comparison of the kernel Stein discrepancies $\mathcal{D}_{\mathsf{k}_\pi}(\nu_n^\star)$ and $D_{\mathsf{k}_\pi}(\widetilde{\nu}_n^\star)$ \index{kernel Stein discrepancy} indicates that the non-negative weights $\mathbf{w}^\star$ perform nearly as well as the signed weights $\widetilde{\mathbf{w}}^\star$, while both sets of weights lead to a substantial decrease in kernel Stein discrepancy compared to the use of (inconsistent) uniform weights in this experiment.

\begin{figure}
    \centering
    \includegraphics[width=\textwidth]{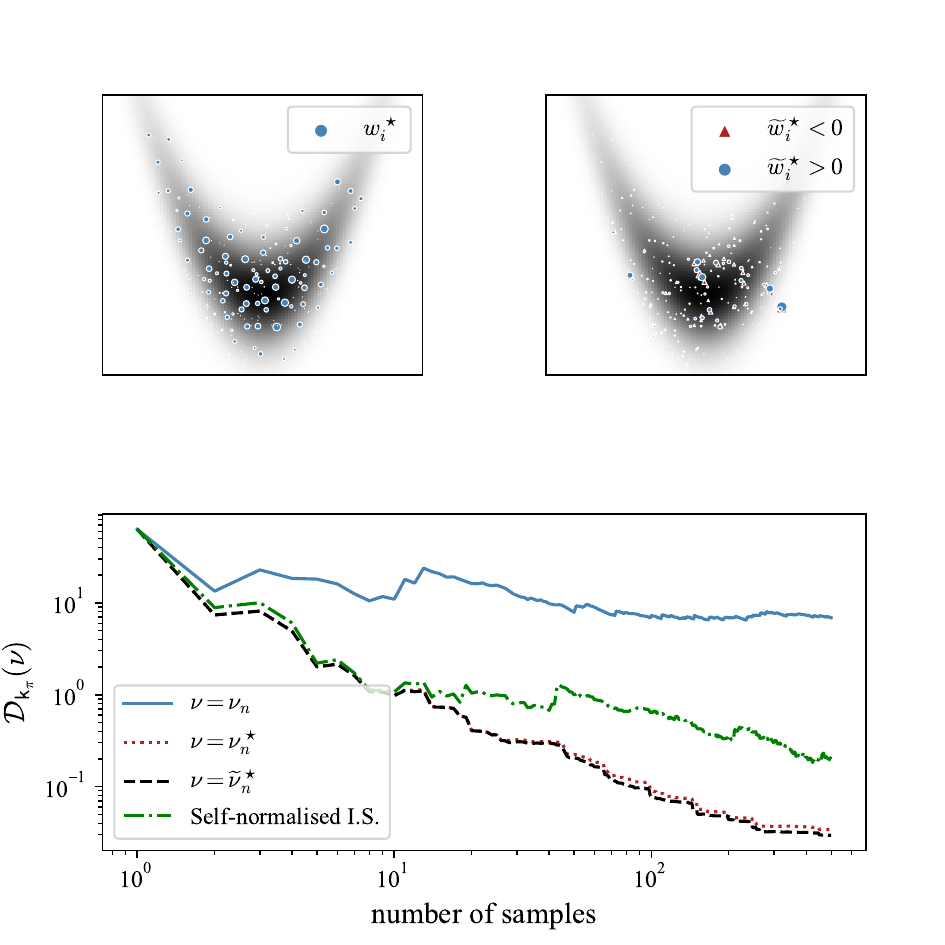}
    \caption{Optimal weights for MCMC.
    Samples from a standard Gaussian distribution were assigned either the optimal sign-constrained weights $\mathbf{w}^\star$ (left) or the optimal unconstrained weights $\widetilde{\mathbf{w}}^\star$ (right), to obtain consistent approximations $\nu_n^\star$ and $\widetilde{\nu}_n^\star$ respectively of the horseshoe distribution $\pi$ (shaded).
    These approximations each demonstrate convergence in the sense of kernel Stein discrepancy as the number of samples is increased, while the distribution $\nu$ of the original samples does not provide a consistent approximation of $\pi$.
    }
    \label{fig: opt weights}
\end{figure}

Of course, in this toy example, one has access to the sampling density of the $\mathbf{X}_k$ and self-normalised importance sampling \index{importance sampling} could trivially be used.
That is, to each sample $\mathbf{X}_k$ we assign weights proportional to $\pi(\mathbf{X}_k)/\nu(\mathbf{X}_k)$, and we normalise these weights to sum to 1.
The performance of self-normalised importance sampling is displayed in the lower panel of Figure \ref{fig: opt weights}, where it is seen to be inferior to the discrepancy-based methods which we have discussed.
Where has this performance gap come from?
Well, in addition to bias correction, the discrepancy-based methods additionally perform variance reduction, in the sense that the random vectors $\mathbf{w}^\star$ and $\widetilde{\mathbf{w}}^\star$ each contain components that are strongly inter-dependent.
This means that if a region is over-represented with samples, then the overall weight of these samples can be collectively reduced to better approximate the target $\pi$.
In contrast, self-normalised importance sampling has to rely on the long-run frequency of independent sampling to ensure that different regions of the domain are assigned an equal amount of probability mass, and this negatively affects its finite sample performance.

The satisfactory performance of optimal weights \index{weight} is observed in settings such as Figure \ref{fig: opt weights}, where pathological behaviours of kernel Stein discrepancy (such as blindness to mixing proportions; see Section \ref{subsec: conv control kernel}) are not encountered.
However, outside this setting the use of optimal weights can fail.
Further, if the kernel Stein discrepancy has weak convergence \index{weak convergence} control \index{convergence control} but not moment control, then there is no guarantee that moments computed using the weighted empirical approximations $\nu_n^\star$ or $\widetilde{\nu}_n^\star$ will be convergent in the $n \rightarrow \infty$ limit.
These remarks emphasise that a certain degree of caution is needed when optimal weights are employed.

\section{Optimal Thinning for MCMC}
\label{sec: optimal thinning}

The output from a sampling algorithm is often used for subsequent computation, for example, to approximate the posterior expectation of a quantity of interest.
In scenarios where this subsequent computation incurs a non-trivial computational cost, it is usually desirable to work with as small a number $n$ of samples as possible, provided that these continue to provide an accurate approximation to the posterior target.
Standard practice for MCMC is to retain the subset of states visited along the sample path whose indices are $(\sigma(i))_{1 \leq i \leq m}$, where $\sigma(i) = b + c i$, $b$ is the duration of a burn-in \index{burn-in} period and $c$ is the \emph{thinning} \index{thinning} period.
However, this does not directly attempt to arrive at a compressed representation of the posterior target.
The aim of this section is to discuss how one might select indices $(\sigma(i))_{1 \leq i \leq m}$ to optimally approximate the target. 
In doing so, we will also arrive at a convenient sparse approximation to the optimally weighted distributions studied in Section \ref{sec: optimal weights}.

Given output $(\mathbf{X}_k)_{1 \leq k \leq n}$ from a $\pi$-invariant MCMC algorithm, we aim to construct an approximation $\nu_{n,m} = \frac{1}{m}  \sum_{k=1}^m \delta_{\mathbf{X}_{\sigma(k)}}$ to $\pi$, which we require is \emph{sparse}, meaning that $m \ll n$.
For concreteness, we consider the setting where the index sequence $\sigma$ is greedily determined according to
\begin{align*}
\sigma(j) \in \argmin_{k = 1,\dots,n} \; \mathcal{D}_{\mathsf{k}_\pi}\left( \frac{1}{j} \delta_{\mathbf{X}_k} + \frac{1}{j} \sum_{j'=1}^{j-1} \delta_{\mathbf{X}_{\sigma(j')}} \right) 
\end{align*}
for each $j \in \mathbb{N}$.
Using \eqref{eq: sd double integral}, and ignoring terms that do not depend on $\mathbf{X}_k$, the greedy algorithm \index{greedy algorithm} is equivalent to
\begin{align*}
\sigma(j) \in \argmin_{k = 1,\dots,n} \; \frac{\mathsf{k}_\pi(\mathbf{X}_k,\mathbf{X}_k)}{2} + \sum_{j'=1}^{j-1} \mathsf{k}_\pi( \mathbf{X}_k , \mathbf{X}_{\pi(j')} )
\end{align*}
where, in the event of a tie, it does not matter how a state is selected.
The approximation $\nu_{n,m}$, under appropriate regularity conditions, converges to $\nu_n^\star$ in the $m \rightarrow \infty$ limit; see Theorem 1 of \citet{riabiz2022optimal}.
Thus, a finite run of this greedy algorithm could in principle be used as a sparse alternative to optimal weighting \index{weight} of states from Section \ref{sec: optimal weights}.
Further, the computational complexity of this greedy algorithm is $O(nm^2)$, which would improve on the $O(n^3)$ of the optimal weights from Section \ref{sec: optimal weights} when $m \ll n$.
But how large should $m$ be for $\nu_{n,m}$ to be a sufficiently accurate approximation of $\nu_n^\star$ to be useful?
This question was answered with a theoretical argument in \citet{riabiz2022optimal}, who established that
$$
\mathcal{D}_{\mathsf{k}_\pi}(\nu_{n,m})   -  \mathcal{D}_{\mathsf{k}_\pi}(\nu_n^\star)  \rightarrow 0
$$ 
almost surely as $m,n \rightarrow \infty$, under conditions that include requiring $m$ to increase at least as fast as $(\log n)^{2/\beta}$ for some $\beta \in (0,1)$.
This is a relatively mild constraint on $m$, and in this sense the relative size of $m$ compared to $n$ can be small.

To perform an empirical comparison of optimal weighting and the greedy algorithm \index{greedy algorithm} just described, we return to the Rosenbrock \index{Rosenbrock} example from Section \ref{sec: optimal weights}.
Using the same MCMC output, we contrast the approximations produced using the weights $\mathbf{w}^\star$ with the approximation produced using the greedy algorithm just described.
Figure \ref{fig: opt thinning} demonstrates the convergence of the sparse approximation to the optimally weighted approximation as $m$ is increased.
This convergence occurs reasonably quickly, suggesting that a faster, sparse approximation may often be preferred compared to the weighted approximations from Section \ref{sec: optimal weights}.
For these experiments, we took $\mathbf{\Sigma} = \mathbf{I}_2$, $\beta = 0.5$, and the transformation in \eqref{eq: KSD transform} was applied.
Although in this toy example, sampling was not a computational bottleneck, in more challenging examples the use of these greedy algorithms can provide an automatic method to both identify and remove an initial burn-in \index{burn-in} period, and to compress the sampler output.

\begin{figure}
    \centering
    \includegraphics[width=\textwidth]{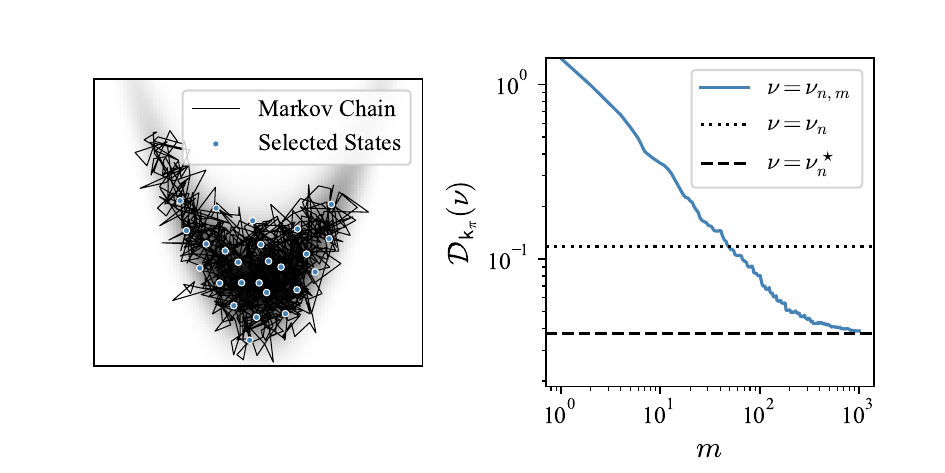}
    \caption{Optimal thinning for MCMC.
    A subset of size $m$ was selected from the sample path $(\mathbf{X}_k)_{1 \leq k \leq n}$ of a $\pi$-invariant Markov chain in such a way that the kernel Stein discrepancy between the associated empirical distribution (circles) and $\pi$ (shaded) was greedily minimised.
    The selected states are shown on the left panel, while on the right panel the kernel Stein discrepancy of the resulting empirical distribution $\nu_{n,m}$ is seen to converge to that of the optimally weighted empirical distribution $\nu_n^\star$ that uses the full Markov chain output.
    Here $n = 10^3$.
    }
    \label{fig: opt thinning}
\end{figure}

\section{Chapter Notes}

The development of sophisticated sampling algorithms, including those described in this book, should be guided by a qualitative assessment of their empirical performance over a variety of realistic distributional targets.
The purpose of this Chapter was to demonstrate how one can construct explicit upper bounds on the ``closeness'' of the sampler output to the target.
In particular, kernel Stein discrepancies emerged as a computationally convenient performance measure, which can be computed provided that the gradient of the target log-density can be evaluated pointwise.
Except for scenarios where the posterior contains distant high-probability regions, the kernel Stein discrepancy can provide an accurate indication of sampler performance.
Further, we described two different scenarios in which output from MCMC can be actively improved using these techniques; optimal weighting and optimal thinning of MCMC output.

The literature on diagnostic checks for MCMC is almost as old as the literature on MCMC.
Our brief discussion in Section \ref{sec: diagnostics} barely scratched the surface of this topic, and we refer the interested reader to more detailed treatments such as \citet{cowles1996markov}.
The convergence diagnostics \index{convergence diagnostic} we presented are due to \citet{gelman1992, brooks1998general, gelman2014}.
The somewhat arbitrary choices of $\delta = 0.1$ and $\delta = 0.01$ were used, respectively, in \citet{gelman2014,vats2018revisiting} and \citet{vehtari2021rank}. 
Generalisations of these convergence diagnostics to the case of a multivariate target, and other improvements, can be found in e.g. \citet{brooks1998general,vats2018revisiting,vehtari2021rank}.

The construction of computable convergence bounds has received limited historical attention, from authors that include \citet{meyn1994computable, rosenthal1995minorization, roberts1999bounds, jones2001honest}.
The convergence bound we presented in Section \ref{sec: convergence bound} was somewhat novel, in the sense that earlier work has tended to motivate and derive such bounds as a consequence of \emph{Stein's method}\index{Stein's method}.
Introduced in \citet{stein1972bound}, this technique from applied probability has been extensively used to study various instances of approximation among random variables.
However, the last decade has seen an explosion of research investigating the \emph{computational} uses of Stein's method\index{Stein's method}, sparked by the formalisation of the Stein discrepancy in \citet{gorham2015measuring}.
A myriad of computational applications of Stein discrepancies have now been explored, and a recent overview is provided in \citet{anastasiou2021stein}.

The use of reproducing kernels led us to a discrepancy that could be explicitly computed.
However, there are some technical challenges associated with the use of the resulting kernel Stein discrepancies.
First, the computational complexity of evaluating the kernel Stein discrepancy between $\pi$ and a distribution $\nu_n$ supported on $n$ discrete states is $O(n^2)$; c.f. \eqref{eq: sd double sum}.
However, this complexity can in fact be reduced to near-linear using the \emph{random features} \index{random features} approach developed in \citet{huggins2018random}, whose discussion was beyond the scope of this book.
Second, one must ensure that the required properties of the discrepancy hold in the relevant applied context.
Our discussion focused on weak convergence \index{weak convergence} control\index{convergence control}, but other relevant properties include \emph{separation}\index{separation}, meaning that $D_\pi(\nu) = 0$ if and only if $\pi=\nu$, and \emph{convergence detection}\index{convergence detection}, meaning that $D_\pi(\nu_n) \rightarrow 0$ whenever $\nu_n$ converges to $\pi$ in an appropriate sense to be specified.
A discrepancy for which both convergence control and convergence detection are satisfied may be used to compare and select between competing sampling algorithms, as investigated in \citet{gorham2015measuring,gorham2017measuring}.
To this end, a rigorous technical presentation of kernel Stein discrepancies and their theoretical properties can be found in \citet{barp2022targeted}.

The use of isotropic reproducing kernels can lead to a \emph{curse of dimension}\index{curse of dimension}, meaning that differences between probability distributions become more difficult to detect as the dimension $d$ of the state space is increased.
A generalisation that replaces the overdamped Langevin \index{Langevin diffusion} in \eqref{eq: overdamped lang diff} with a more general class of $\pi$-invariant diffusion processes on $\mathbb{R}^d$ was studied in \citet{gorham2019measuring}, and in the case of kernel Stein discrepancy, this is equivalent to the use of certain non-isotropic reproducing kernels, however, the selection of a suitable diffusion to address the curse of dimension has not been explored.
In a constructive attempt to improve the performance of Stein discrepancy in the high-dimensional context, \citet{grathwohl2020learning} proposed to substitute the reproducing kernel Hilbert space in the integral probability metric \index{integral probability metric} \eqref{eq: ipm} with the set of test functions spanned by an appropriately differentiable parametric neural network.
Such an approach trades the potentially better detection properties of the discrepancy with both a lack of theoretical guarantees and the additional computational complexity involved in the adversarial training of a neural network.
Further research will be required to understand this trade-off in detail.

To limit the scope, we discussed only algorithms for optimal weighting and optimal thinning, in each case for probability distributions defined on $\mathbb{R}^d$.
Optimal weighting was introduced in \citet{liu2017black} and its consistency was first established in \citet{hodgkinson2020reproducing}.
Optimal thinning was introduced and analysed in \citet{riabiz2022optimal}, and mini-batching strategies were proposed and studied to further reduce the $O(nm^2)$ cost in \citet{teymur2021optimal}.
Both algorithms can be generalised to non-Euclidean domains $\mathcal{X}$ through the identification of a suitable Markov process on $\mathcal{X}$ with an explicit generator $\mathcal{L}_\pi$; some Markov processes suitable for discrete domains $\mathcal{X}$ are described in e.g. \citet{shi2022gradient}.
In related work, \citet{fisher2022gradient} demonstrated how consistent approximation using optimal weights and optimal thinning can even be achieved \emph{without} access to gradients of the target, provided that gradients of a suitable approximating distribution can be obtained.


  \backmatter
  

  \addtocontents{toc}{\vspace{\baselineskip}}

  \bibliographystyle{cambridgeauthordate}
  \bibliography{references}\label{refs}

  \cleardoublepage


 \printindex


\end{document}